\documentclass{article}

\PassOptionsToPackage{numbers, compress}{natbib}

\usepackage[final]{neurips_2025}




\usepackage[utf8]{inputenc} 
\usepackage[T1]{fontenc}    
\usepackage{hyperref}       
\usepackage{url}            
\usepackage{booktabs}       
\usepackage{amsfonts}       
\usepackage{nicefrac}       
\usepackage{microtype}      
\usepackage{xcolor}         

\usepackage{subfigure}
\usepackage{booktabs,threeparttable,multirow} 

\usepackage{hyperref}

\usepackage{amsmath}
\usepackage{amssymb,bm}
\usepackage{mathrsfs}
\usepackage{graphicx}
\usepackage{mathtools}
\usepackage{amsthm}
\usepackage{caption} 
\usepackage{subcaption,color} 
\usepackage{indentfirst}
\usepackage{amsfonts}
\usepackage{float,url}
\usepackage{bbm}
\usepackage{lipsum}
\usepackage{nicefrac}
\usepackage{pifont}
\usepackage[textsize=tiny]{todonotes}
\usepackage[table]{xcolor}
\usepackage{adjustbox}
\usepackage{fancybox} 
\usepackage{xcolor}
\usepackage{tcolorbox}
\tcbuselibrary{most}   
\usepackage{wrapfig} 
\usepackage{tikz}
\usetikzlibrary{calc}

\tcbset{
  myredbox/.style={
    enhanced,
    sharp corners,
    boxrule=0.6pt, 
    colback=red!10,
    colframe=red!70!black, 
    frame hidden, 
    borderline north={0.8pt}{0pt}{red!70!black}, 
    borderline south={0.8pt}{0pt}{red!70!black}, 
    left=0pt, right=0pt, top=0pt, bottom=0pt,
    boxsep=1pt,
  }
}

\tcbset{
  mybluebox/.style={
    enhanced,
    sharp corners,
    boxrule=0.6pt, 
    colback=blue!10, 
    colframe=blue!70!black, 
    frame hidden, 
    borderline north={0.8pt}{0pt}{blue!70!black}, 
    borderline south={0.8pt}{0pt}{blue!70!black}, 
    left=0pt, right=0pt, top=0pt, bottom=0pt,
    boxsep=1pt,
  }
}

\tcbset{on line, boxsep=1pt, colframe=black, colback=white, arc=0pt, boxrule=0.4pt}

\tcbset{
  myfigurebox/.style={
    colframe=black!50,
    colback=white,
    coltitle=black,
    arc=1mm,             
    boxrule=1pt,    
    boxsep=2pt,       
    left=1pt, right=1pt, top=1pt, bottom=1pt,
  }
}

\usepackage[capitalize]{cleveref}


\definecolor{regionblue}{RGB}{80, 120, 160}   
\definecolor{regionyellow}{RGB}{200, 180, 80} 
\definecolor{regionorange}{RGB}{200, 120, 60} 
\definecolor{regionbrown}{RGB}{140, 100, 60}  
\definecolor{regiongreen}{RGB}{120, 160, 80}  

%


\renewcommand{\P}{\mathbb{P}}
\newcommand{\E}{\mathbb{E}}


\newcommand{\R}{\mathbb{R}}


\def\id{{\mathbf I}}


\newcommand{\<}{\langle}
\renewcommand{\>}{\rangle}
\def\sT{{\mathsf T}}

\DeclareMathOperator*{\argmin}{arg\,min}



\DeclareSymbolFont{rsfs}{U}{rsfs}{m}{n}
\DeclareSymbolFontAlphabet{\mathscrsfs}{rsfs}


\def\bB{{\boldsymbol B}}

\def\bD{{\boldsymbol D}}

\def\bF{{\boldsymbol F}}
\def\bG{{\boldsymbol G}}

\def\bK{{\boldsymbol K}}

\def\bM{{\boldsymbol M}}

\def\bR{{\boldsymbol R}}

\def\bT{{\boldsymbol T}}

\def\bZ{{\boldsymbol Z}}

\def\ba{{\boldsymbol a}}

\def\boldf{{\boldsymbol f}}
\def\bg{{\boldsymbol g}}

\def\bt{{\boldsymbol t}}

\def\bw{{\boldsymbol w}}
\def\bx{{\boldsymbol x}}
\def\by{{\boldsymbol y}}
\def\bz{{\boldsymbol z}}

\def\bbeta{{\boldsymbol \beta}}

\def\bpsi{{\boldsymbol \psi}}
\def\bphi{{\boldsymbol \phi}}
\def\btheta{{\boldsymbol \theta}}

\def\bLambda{{\boldsymbol \Lambda}}

\def\bSigma{{\boldsymbol \Sigma}}



\def\Tr{{\rm Tr}}


\def\cF{{\mathcal F}}
\def\cE{{\mathcal E}}

\def\cA{{\mathcal A}}


\def\naturals{{\mathbb N}}

\def\naturals{{\mathbb N}}

\def\sV{{\sf V}}

\def\tmu{\widetilde  \mu}

\def\cE{{\mathcal E}}

\def\cF{{\mathcal F}}

\def\cE{{\mathcal E}}
\def\bt{{\boldsymbol t}}

\def\btheta{{\boldsymbol \theta}}

\def\bLambda{{\boldsymbol \Lambda}}

\def\diag{{\rm diag}}

\def\bD{{\boldsymbol D}}

\def\bR{{\boldsymbol R}}
\def\bpsi{{\boldsymbol \psi}}

\def\evn{{\mathsf m}}

\def\ind{\mathbbm{1}}

\def\boldf{\boldsymbol{f}}

\def\sR{\mathsf R}
\def\sV{\mathsf V}
\def\sB{\mathsf B}
\def\sN{\mathsf N}

\def\obM{\overline{\bM}}

\def\boldf{\boldsymbol{f}}

\def\sR{\mathsf R}
\def\sV{\mathsf V}
\def\sB{\mathsf B}

\def\hbSigma{\hat{\bSigma}}

\def\sfD{{\sf D}}

\def\hbSigma{\widehat{\bSigma}}
\def\hbLambda{\widehat{\bLambda}}
\def\tmu{\Tilde{\mu}}

\def\tPhi{\widetilde{\Phi}}

\def\trho{\widetilde{\rho}}

\makeatletter
\newtheorem*{rep@theorem}{\rep@title}
\newcommand{\newreptheorem}[2]{%
\newenvironment{rep#1}[1]{%
\smallskip\par
\noindent%
 \def\rep@title{\bfseries \textup{#2 \ref{##1}}}%
 \begin{rep@theorem}\itshape\ignorespaces}%
{\end{rep@theorem}}}
\makeatother

\newreptheorem{theorem}{Theorem}
\newreptheorem{lemma}{Lemma}
\newreptheorem{proposition}{Proposition}
\newreptheorem{corollary}{Corollary}

\theoremstyle{plain}
\newtheorem{theorem}{Theorem}[section]
\newtheorem{proposition}[theorem]{Proposition}
\newtheorem{lemma}[theorem]{Lemma}
\newtheorem{corollary}[theorem]{Corollary}

\theoremstyle{definition}
\newtheorem{definition}[theorem]{Definition}
\newtheorem{assumption}{Assumption}
\theoremstyle{remark}

\newcommand{\disableaddcontentsline}{%
  \let\savedaddcontentsline\addcontentsline 
  \renewcommand{\addcontentsline}[3]{}
}
\newcommand{\enableaddcontentsline}{%
  \let\addcontentsline\savedaddcontentsline
}

\definecolor{dred}{rgb}{0.7, 0, 0}
\definecolor{dblue}{rgb}{0, 0, 0.7}

\title{The $\varphi$ Curve: The Shape of Generalization through the Lens of Norm-based Capacity Control}

%

\author{%
Yichen Wang \thanks{Most of this work was done when Yichen was a visiting student at University of Warwick. Correspondence to Fanghui Liu (fanghui.liu@warwick.ac.uk).} \\
  Department of Computer Sciences\\
  University of Wisconsin-Madison, US\\
  \texttt{yichen.wang@wisc.edu} \\
  \And
  Yudong Chen \\
  Department of Computer Sciences\\
  University of Wisconsin-Madison, US\\
  \texttt{yudong.chen@wisc.edu} \\
  \And
  Lorenzo Rosasco \\
  Malga - DIBRIS, University of Genova, IT\\
  Istituto Italiano di Tecnologia, IT\\
  \texttt{lorenzo.rosasco@unige.it} \\
    \And
  Fanghui Liu \\
  Department of Computer Science, and DIMAP\\
  University of Warwick\\
  \texttt{fanghui.liu@warwick.ac.uk} \\
}

\begin{document}
\disableaddcontentsline

\maketitle

\begin{abstract}

Understanding how the test risk scales with model complexity is a central question in machine learning. Classical theory is challenged by the learning curves observed for large over-parametrized deep networks.
Capacity measures based on parameter count typically fail to account for these empirical observations. To tackle this challenge, we consider norm-based capacity measures and develop our study for random features based estimators, widely used as simplified theoretical models for more complex networks. 
In this context, we provide a precise characterization of how the estimator’s norm concentrates and how it governs the associated test error. Our results show that the predicted learning curve admits a phase transition from under- to over-parameterization, but no double descent behavior. 
This confirms that more classical U-shaped behavior is recovered considering appropriate capacity measures based on models norms rather than size. 
From a technical point of view, we leverage deterministic equivalence as the key tool and further develop new deterministic quantities which are of independent interest. 
\end{abstract}

\section{Introduction}

\if 0
\todo{Discussion}
\begin{itemize}
    \item The AC and one reviewer complain that, asy results on deterministic equivalence can be derived from previous work. \fh{cite their work and follow their writing style.} \fh{Solution: can we directly talk about RFMs? with few description on linear models, just for better understanding.}
    \item RFMs: relationship between R and N under special cases. Such function can be separated from under- or over-parameterized regime. this is ok. because this is common for min-norm solution.
    \item over the second layers' norm; the first layers' norm; and both (e.g., path norm). Discussion with previous metric, e.g., smoother.
    \item reshape bias-variance trade-offs, double descent, and scaling law.
    \item L-curve. relationship to useful guideline for practice. $\lambda$ controls the norm.
    \item experiments on FasionMNIST in the appendix.
    \item Conjecture: Under the ``right'' model capacity, U-shaped still work instead of double descent! and future work talk about SGD blabla.
    \item covariance shift.
\end{itemize}

\todo{Finish}
\fi

How the test risk scales with the data size and model size is always a central question in machine learning, both empirically and theoretically.
This is characterized as the shape of \emph{generalization}, i.e., learning curves, that can be formulated as classical U-shaped curves \cite{Vapnik2000The}, double descent \cite{belkin2019reconciling}, and scaling laws \cite{kaplan2020scaling,xiao2024rethinking}. 

In these learning curves, the model size, i.e., the number of parameters, provides a basic measure of the capacity of a machine learning (ML) model.
However it is well known that model size cannot describe the ``true'' model capacity \cite{bartlett1998sample,zhang2002effective}, especially for over-parameterized neural networks \cite{belkin2018understand,zhang2021understanding} and large language models (LLMs) \cite{brown2020language}.
The focus on the number of parameters results in an inaccurate characterization of the learning curve, and consequently, an improper data-parameter configuration in practice. For instance, even for the same architecture (model size), the learning curve can be totally different, e.g., double descent may disappear \cite{nakkiran2020optimal,nakkiran2021deep}. A natural question raises that: \emph{What is the shape of generalization under the lens of a suitable model capacity than model size?}

In a ML model, its parameters can be represented as vectors, matrices, or tensors, and hence the model size is characterized by their dimensions. 
However, to evaluate the ``size'' of parameters, a more suitable metric is their norm. This is termed as \emph{normed based capacity}, a perspective pioneered in the classical results indicated by \cite{bartlett1998sample}. Indeed, norm based capacity/complexity are widely considered to be more effective in characterizing generalization behavior; see e.g.\

\cite{neyshabur2015norm,savarese2019infinite,domingo2022tighter,liu2024learning} and references therein.
For instance, path-norm based model capacity empirically demonstrates a quite strong correlation to generalization while other metrics of model capacity may not \cite{jiang2019fantastic}.
Additionally, minimum norm-based solution received much attention as a possible way to understand the learning performance of over-parameterized neural networks in the interpolation regime; see e.g.\ 
\cite{liang2020just,wang2022tight,belkin2018understand,zhang2021understanding,nakkiran2021deep}.

\begin{figure*}[tp]
    \centering

    \subfigure[{\fontsize{8}{9}\selectfont Empirical observations from Figure 8.12 of \cite{ngcs229}}]{\label{fig:lecture_figure}
        \begin{tcolorbox}[colframe=black, boxrule=0.8pt, colback=white, left=2pt, right=2pt, top=2pt, bottom=2pt, boxsep=0pt, width=0.46\textwidth]
            \centering
            \includegraphics[height=2.3cm, keepaspectratio]{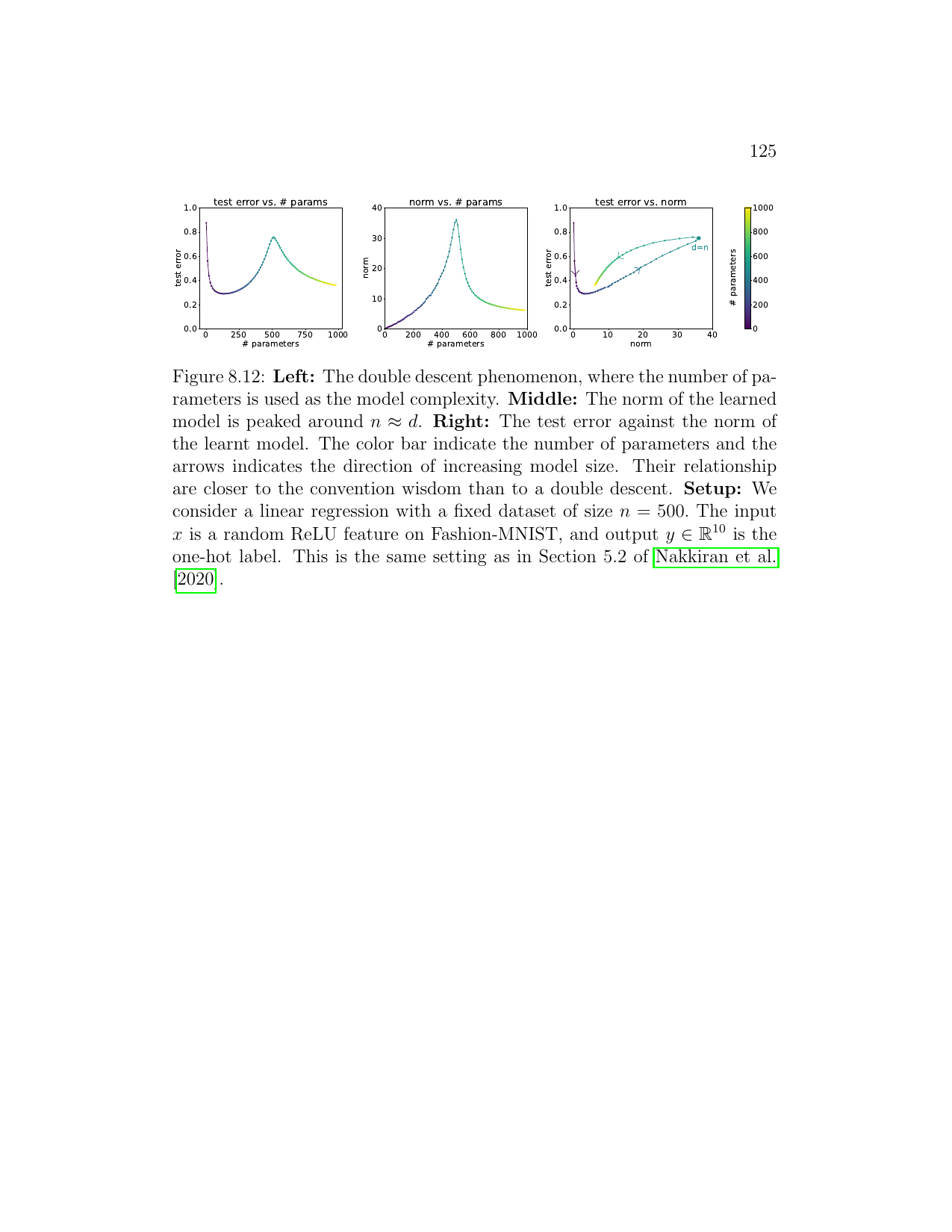}
            \includegraphics[height=2.3cm, keepaspectratio]{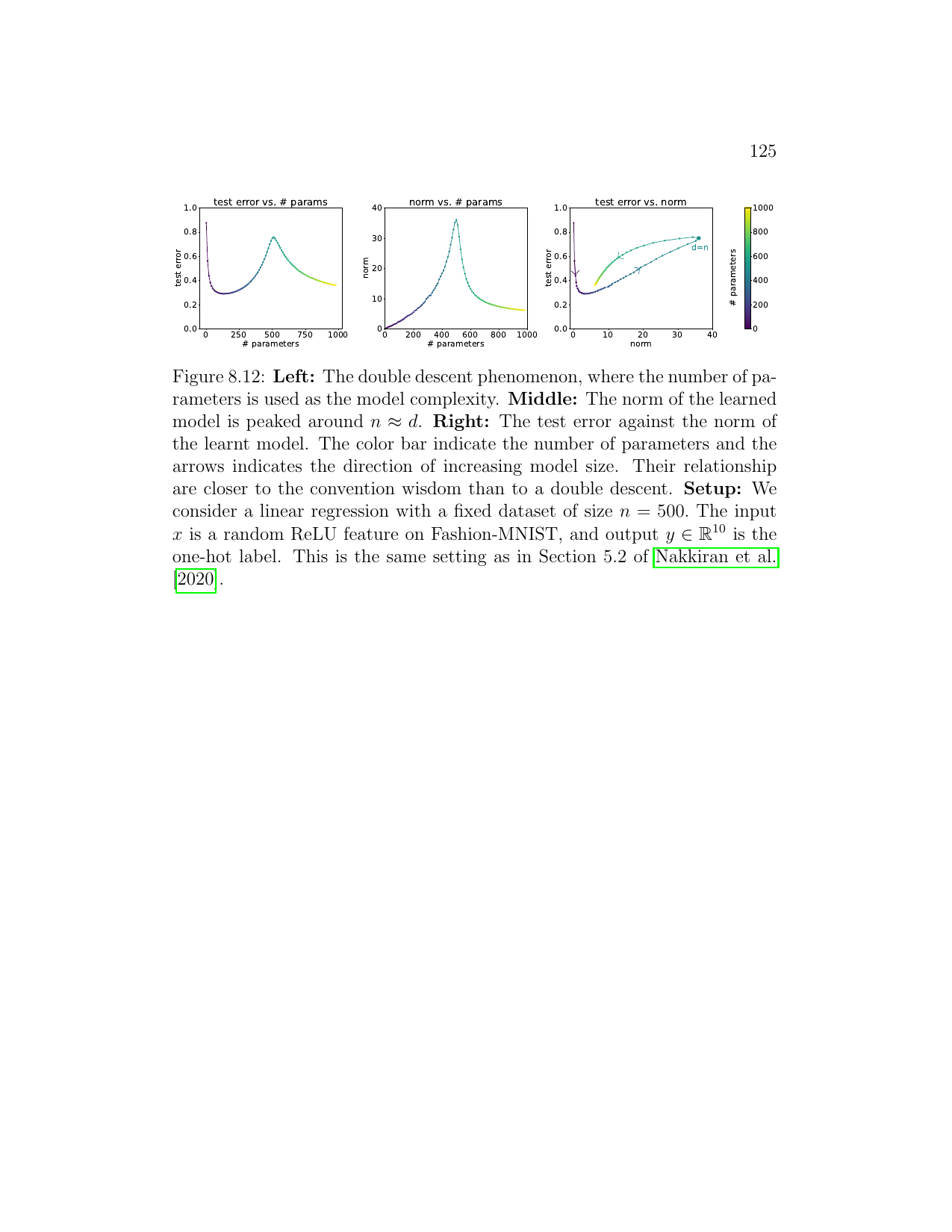}
        \end{tcolorbox}
    }
    \hspace{0.02\textwidth}
    \subfigure[{\fontsize{8}{9}\selectfont Our theory}]{\label{fig:RFM_result}
        \begin{tcolorbox}[colframe=black, boxrule=0.8pt, colback=white, left=2pt, right=2pt, top=2pt, bottom=2pt, boxsep=0pt, width=0.48\textwidth]
            \centering
            \includegraphics[height=2.3cm, keepaspectratio]{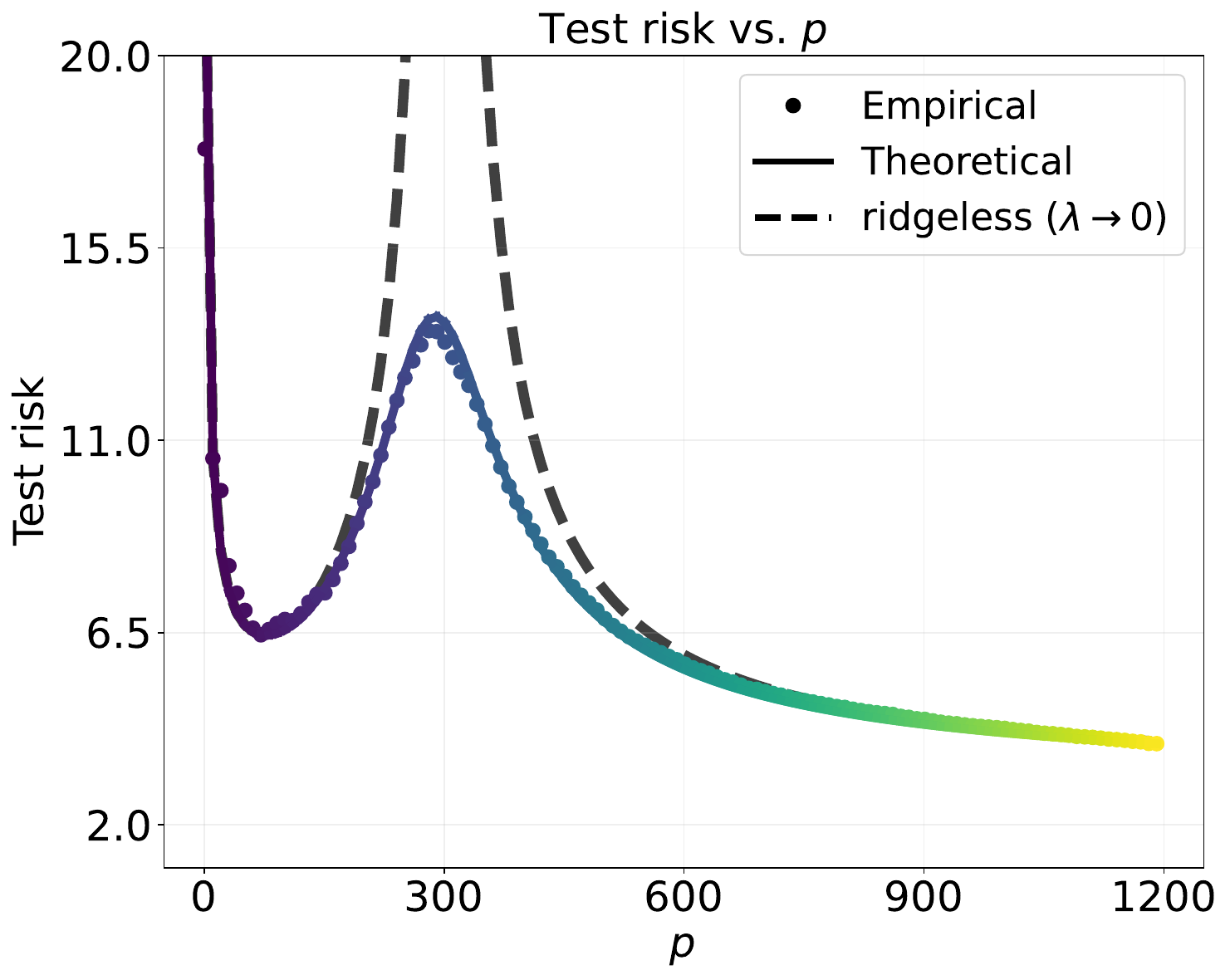}
            \includegraphics[height=2.3cm, keepaspectratio]{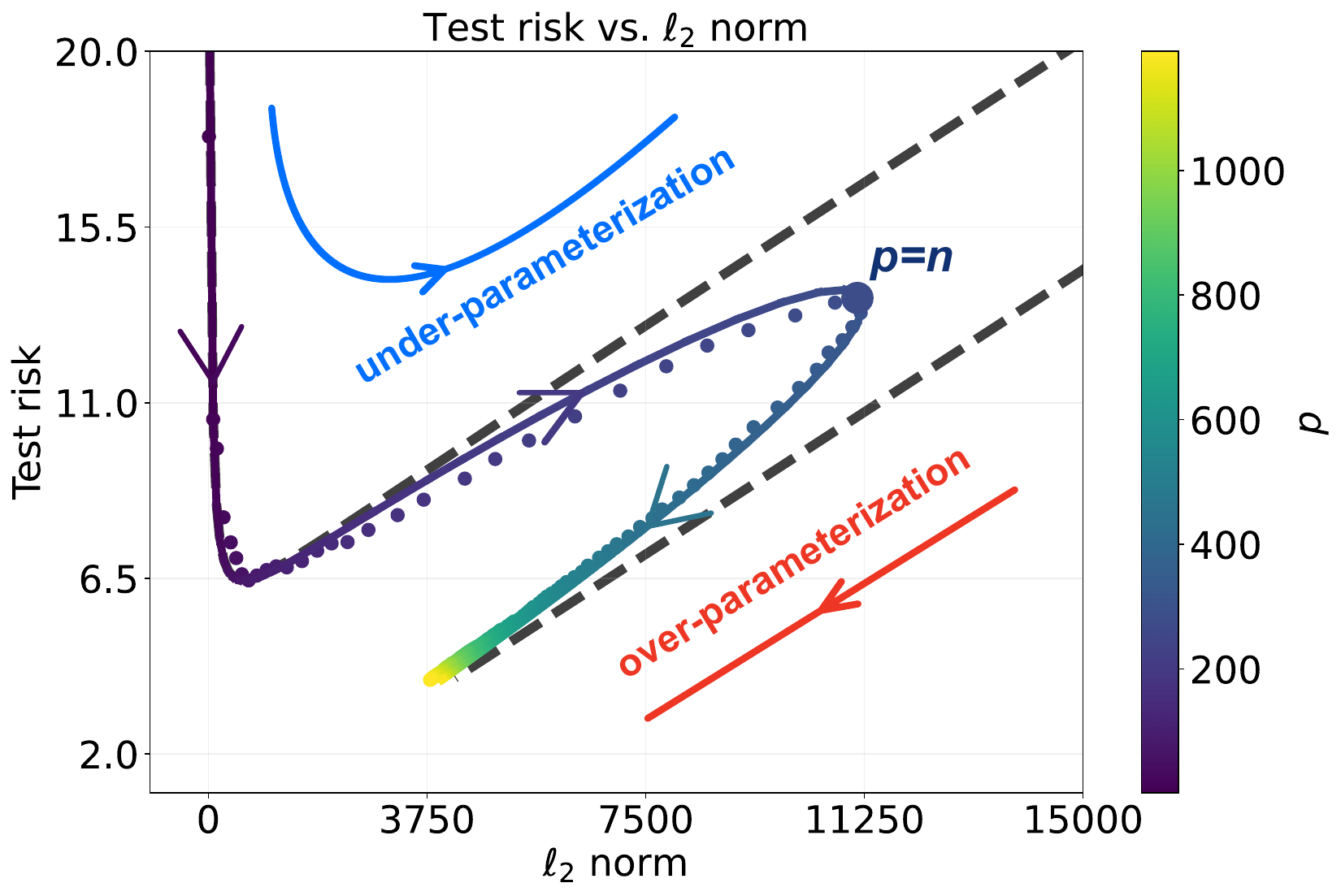}
        \end{tcolorbox}
    }
    \caption{\cref{fig:lecture_figure} presents previous empirical observations from \cite[Fig. 8.12]{ngcs229} in the random feature model. \cref{fig:RFM_result} precisely characterize the learning curve from our theory and perfectly matches our experiments (shown by points) with training data $\{({\bm x}_i, y_i)\}_{i=1}^n$, with $n = 300$, sub-sampled from the MNIST \citep{lecun1998gradient} with $d=748$. The feature map is defined as $\varphi({\bm x}, {\bm w}) = {\rm erf}(\langle {\bm x}, {\bm w}\rangle)$ with random initialization ${\bm w} \sim \mathcal{N}(0, {\bm I})$. Note that whether the curve is finally lower than before is different between \cref{fig:lecture_figure} and \cref{fig:RFM_result}, mainly because of data, see more discussion in \cref{app:discussion_3}.}
    \label{fig:intro_figure}
\end{figure*}

Empirical observations on the learning curve under norm-based capacity have been discussed in the lecture notes \citep[Fig. 8.12]{ngcs229}, as shown in \cref{fig:lecture_figure}: when changing the model capacity from model size to parameters' norm, the learning curve is changed from double descent to a ``$\varphi$''-shaped curve. 
However, a precise mathematical framework on obtaining/understanding this curve is still lacking. The goal of this paper is to investigate this curve by addressing the following fundamental question:

\begin{center}\vspace{-0.cm}
    \emph{What is the relationship between test risk and norm-based model capacity, and how can it be precisely characterized?} 
\end{center}\vspace{-0.cm}

In this work, we take the first step toward answering this question, as illustrated in \cref{fig:RFM_result}.
Compared to the classical double descent curve w.r.t.\ model size $p$, we quantitatively characterize the relationship: test risk \emph{vs.} norm-based capacity.
Our theoretical predictions (shown as curves) precisely predict the empirical results (shown as points), and the curve is more close to the ``$\varphi$''-shaped curve.
More broadly, our results address how the learning curve behaves under more suitable model capacities—specifically, whether classical phenomena such as the U-shaped curve, double descent, or scaling laws persist or are fundamentally altered.
We believe this opens the door to rethinking the role of model capacity and the nature of learning curves (e.g., scaling laws) in the era of LLMs.

\subsection{Contributions and findings}

We consider linear and random features models (RFMs) regression to precisely characterize the relationship between the test risk and the capacity measured by the estimator's norm.
The key technical tool we leverage is the \emph{deterministic equivalence} technique from random matrix theory \cite{cheng2022dimension,defilippis2024dimension},  where the test risk $\mathcal{R}$ (depending on data $\bm X$, target function $f^*$, and the regularization parameter $\lambda$) can be well approximated by a deterministic quantity ${\mathsf R}$ (with data size $n$ and model size $p$), i.e.,
\begin{equation*}
    \mathcal{R}(\bm X, f_*, \lambda) = (1+ \mathcal{O}(n^{-1/2}) + \mathcal{O}(p^{-1/2})) \cdot {\mathsf R}(\bm \Sigma, f_*, \lambda_*)\,, \quad \mbox{\emph{asymptotically} or \emph{non-asymptotically}}
\end{equation*}
where ${\mathsf R}(\bm \Sigma, f_*, \lambda_*)$ is the exact deterministic characterization only depends on $f^{\star}$, expected data covariance $\bm \Sigma$, ``re-scaled'' regularization parameter $\lambda$, or other deterministic quantities. In our work, we aim to build the deterministic equivalents ${\mathsf N}$ of the estimator's $\ell_2$ norm $\mathcal{N}$, both \emph{asymptotically} and \emph{non-asymptotically}, and derive a corresponding relationship between ${\mathsf R}$ and ${\mathsf N}$, allowing a precise characterization, i.e.

\begin{tcolorbox}[colback=cyan!5, colframe=cyan!20!black, rounded corners, arc=4pt, title=Our target]
\begin{equation*}
    \mathcal{N}(\bm X, f_*, \lambda) = (1+ \mathcal{O}(n^{-1/2}) + \mathcal{O}(p^{-1/2})) \cdot {\mathsf N}(\bm \Sigma, f_*, \lambda_*) \, {\color{red}\implies} ~~ {\mathsf R} = g({\mathsf N})~ \mbox{for some function $g$}.
\end{equation*}
\end{tcolorbox}

\begin{table}[tb]
    \centering
    \fontsize{8}{9}\selectfont
    \begin{threeparttable}
        \caption{Summary of our main results for RFMs on deterministic equivalents and their relationship.}
        \label{tab:rf}
        \begin{tabular}{ccccc}
            \toprule
            Type & Results & Regularization & Deterministic equivalents ${\mathsf N}$ & Relationship between ${\mathsf R}$ and ${\mathsf N}$ \\
            \midrule
            \multirow{4}{*}{\shortstack{Deterministic\\equivalence}}
            & \cref{prop:asy_equiv_norm_RFRR}    & \cellcolor{green!10} $\lambda > 0$    & \cellcolor{yellow!10} Asymptotic      & - \\
            \cmidrule(lr){2-5}
            & \cref{prop:asy_equiv_norm_RFRR_minnorm} & \cellcolor{blue!10} $\lambda \to 0$    & \cellcolor{yellow!10} Asymptotic      & - \\
            \cmidrule(lr){2-5}
            & \cref{prop:det_equiv_RFRR_V}    & \cellcolor{green!10} $\lambda > 0$    & \cellcolor{red!10} Non-asymptotic  & - \\
            \cmidrule(lr){1-5}
            \multirow{6}{*}{Relationship}
            & \cref{prop:relation_minnorm_overparam}       & \cellcolor{blue!10} $\lambda \to 0$    & -    & Over-parameterized regime \\
            \cmidrule(lr){2-5}
            & \cref{prop:relation_minnorm_id_rf}       & \cellcolor{blue!10} $\lambda \to 0$    & -    & Under ${\bm \Lambda}={\bm I}_m$ ($n<m<\infty$) \\
            \cmidrule(lr){2-5}
            & \cref{prop:relation_minnorm_powerlaw_rf} & \cellcolor{blue!10} $\lambda \to 0$    & -    & Under \cref{ass:powerlaw_rf} (power-law) \\
            \cmidrule(lr){2-5}
            & \cref{prop:scaling_law_norm_based_capacity} & \cellcolor{green!10} $\lambda > 0$    & -    & Under \cref{ass:powerlaw_rf} (power-law) \\
            \bottomrule
        \end{tabular}
    \end{threeparttable}
\end{table}

The main results are given by \cref{tab:rf} for RFMs, which covers random features ridge regression as well as min-norm estimator ($\lambda = 0$).
Results for linear regression are deferred to \cref{sec:linear} due to page limit.
Deriving results $\mathsf{N}$ on norm-based capacity is more chandelling than for test risk.
This is because, we need to explore {\bf \emph{new deterministic quantities}}, which are of independent interest and more broadly useful.
Specifically, we derive the deterministic equivalents w.r.t.\ \({\rm Tr}({\bm A} {\bm X}^{\!\top}{\bm X}({\bm X}^{\!\top}{\bm X} + \lambda)^{-1})\) for any positive semi-definite (PSD) matrix ${\bm A}$ while previous work only handled $\bm A := {\bm I}$  \cite{bach2024high,misiakiewicz2024non,defilippis2024dimension}. Moreover, non-asymptotic results, those valid for finite \( n, p = \Omega(1) \) rather than in the asymptotic regime \( n, p \rightarrow \infty \), on norm-based capacity require more technical conditions. In particular, they involve non-asymptotic bounds on \emph{deterministic equivalents of differences between random quantities}. Due to the complexity of the formulations, we present these results in the appendix.

After that, we establish the characterization of ${\mathsf R} = g({\mathsf N})$ under isotropic features and further illustrate the scaling law under classical power law scaling assumptions. The derivation requires non-trivial calculation and integral approximation by eliminating the model size $p$. We have the following findings from this characterization.
\begin{itemize}
\item {\bf Norm-based capacity suffices to characterize generalization, whereas effective dimension and smoother do not:} Our results on deterministic equivalence demonstrate that the estimator's norm includes the information of the test risk's bias and variance\footnote{Strictly speaking, it also requires knowing whether the model is under-parameterized or over-parameterized, as the self-consistent equations differ between these two regimes.}, respectively. In contrast, typical model capacity, e.g., effective dimension \cite{zhang2002effective} and smoother \cite{curth2024u} can only characterize the test risk's variance and thus are insufficient to characterize generalization.
    \item {\bf Phase transition exists but double descent does not exist:} There exists a phase transition from under- to over-parameterized regime, as shown in \cref{fig:RFM_result}. In the under-parameterized regime, we still observe the same U-shaped curve, whether we consider the norm ${\mathsf N}$ or model size $p$ as the model capacity. This curve can be precisely described as a hyperbola for the min-norm interpolator (linear regression) under isotropic features.
    
    But in the over-parameterized regime, when the norm ${\mathsf N}$ increases, the test risk ${\mathsf R}$ also increases (almost linearly if the regularization is small). This differs from double descent: when the model size $p$ increases, the test risk decreases. Our empirical results on \cref{fig:intro_figure} verify this theoretical prediction. More importantly, this curve aligns more with classical statistical intuition—a U-shaped curve—rather than the double descent phenomenon.
We conclude that {\bf \emph{with suitably chosen model capacity, the learning curve more closely follows a U-shape curve than a double descent}}, potentially observable in more complex models and real-world datasets, see \cref{app:exp_real_data,app:exp_two_layer_NNs}, respectively.
    \vspace{-0.1cm}
    
    \item {\bf Scaling law is not monotone in norm-based capacity:} We study the scaling law of RFMs under norm-based capacity in a multiplication style by taking model size $p:=n^q$ ($q\geq 0$), leading to ${\mathsf R} = C n^{-a} {\mathsf N}^{b}$ with $a \geq 0$, $b \in \mathbb{R}$, and $C>0$. Note that $b \in \mathbb{R}$ can be positive or negative, resulting in different behaviors of ${\mathsf R}$. This differs from the classical scaling law that is monotonically decreasing in the model size.
    \vspace{-0.1cm}

    \item {\bf Controlling norm-based capacity can be achieved by the tuned regularization parameter $\lambda$:} Norm-based capacity appears less intuitive used in practice when compared to model size. Our results demonstrate that the norm decreases monotonically with increasing $\lambda$, and in both under- and over-parameterized regimes. Accordingly, such one-to-one correspondence allows for controlling norm via $\lambda$, related to the known L-curve \cite{hansen1992analysis}.
   
\end{itemize} 
We remark that, our theory cannot fully recover the ``$\varphi$''-curve shown in \cref{fig:lecture_figure}, where the curve in some over-parameterized regimes is above that in the under-parameterized regime. This is because, some real-world datasets may not satisfy the well-behaved data assumption in \cref{ass:concentrated_RFRR}.
We also emphasize that we do \emph{not} claim that $\ell_2$ norm-based capacity (or other norm-based capacity) is the best metric of model capacity. Rather, this work aims to show how the test risk behaves when a more suitable model capacity than model size is used to measure capacity. For completeness, we discuss the ``$\varphi$''-curve under real-world dataset as well as other metrics of model capacity evaluated in \cref{app:experiment}.
All code and replication materials (including our reproduction of OpenAI’s deep double-descent results \cite{nakkiran2021deep}) are available at
\href{https://github.com/yichenblue/norm-capacity}{\texttt{github.com/yichenblue/norm-capacity}}.

{\bf Notations:} In this paper we generally adopt the following convention. Caligraphic letters (e.g., \(\mathcal{N}_{\lambda}, \mathcal{R}_{\lambda}, \mathcal{B}_{\mathcal{N},\lambda}, \mathcal{V}_{\mathcal{R},\lambda}, \)) denote random quantities, and upright letters (e.g., \({\mathsf N}_{\lambda}\), \({\mathsf R}_{\lambda}\), \({\mathsf B}_{{\mathsf N},\lambda}\), \({\mathsf V}_{{\mathsf R},\lambda}\)) denote their deterministic equivalents. The letters N, R, B, V above (in any font) signify  quantities related to the solution norm, test risk, bias, and variance, respectively. 
With $\lambda$ denoting the $\ell_2$-regularization parameter, setting $\lambda=0$ corresponds to the min-norm interpolator. The superscripts $\textsuperscript{\tt LS}$ and $\textsuperscript{\tt RFM}$ denote quantities defined for linear regression and random feature regression, respectively.

We denote by $\gamma$  the ratio between the parameter size and the data size, i.e., $\gamma:=d/n$ in ridge regression and $\gamma:=p/n$ in RFMs.
For asymptotic results, we adopt the notation \( u \sim v \), meaning that the ratio \( \nicefrac{u}{v} \) tends to one as the dimensions \( n \), \( d \) (\( p \) for RFMs) tend to infinity. A complete list notations can be found in \cref{app:notation}. 

\vspace{-0.15cm}
\subsection{Related work}
\vspace{-0.15cm}

The relationship between the test risk, the data size, and the model size is classically characterized by the U-shaped curve \cite{Vapnik2000The}: larger models tend to overfit. This can not explain the success of deep learning (with even more parameters than data), leading to a new concept: double descent
\cite{belkin2019reconciling}, where the test risk has a second descent when transitioning from under- to over-parameterized regimes. Moreover recent scaling law \cite{kaplan2020scaling} shows that the test risk is monotonically decreasing with model size, typically in the under-parameterized regime for LLMs. 

{\bf Model capacity metrics:} Beyond model size as a capacity measure, there is considerable effort to  define alternative capacity measures,  e.g, degrees of freedom from statistics \cite{efron1986biased, efron2004estimation,patil2024revisiting}, effective dimension/rank \cite{zhang2002effective,bartlett2020benign}, smoother \cite{curth2024u}, flatness \cite{petzka2021relative}, as well as norm-based capacity \cite{neyshabur2015norm,liu2024learning}. 
The norm's asymptotic characterization is given in specific settings \cite{hu2024asymptotics} but the risk-norm relationship is not directly studied. Besides, training strategies can be also explained as implicit regularization \cite{yao2007early,neyshabur2014search}, affecting the model capacity as well. We refer to the survey \citep{jiang2019fantastic} for details.

{\bf Deterministic equivalents:}
Random matrix theory (RMT) provides powerful mathematical tools to precisely characterize the relationship between the test risk $\mathcal{R}$ and $n,p,d$ via deterministic equivalence, in an asymptotic regime ($n,p,d \rightarrow \infty$, \cite{mei2022generalization,ghorbani2021linearized,wu2020optimal,xiao2022precise,bach2024high}), or non-asymptotic regime \cite{hastie2022surprises,cheng2022dimension,misiakiewicz2024non}. We refer the reader to \cite{couillet2022random} for further details.
Complementary to RMT approaches, techniques from statistical physics are also possible to derive the deterministic equivalence, e.g., replica methods \cite{bordelon2020spectrum,gerace2020generalisation, loureiro2021learning} and dynamical mean field theory \cite{kotliar2006electronic,mignacco2020dynamical,montanari2025dynamical}.

\vspace{-0.2cm}
\section{Preliminaries}\label{sec:preli}
\vspace{-0.2cm}

We overview RFMs via deterministic equivalents here; see more details in \cref{app:pre_result} with additional preliminaries on linear regression.

Random features models (RFMs) \citep{rahimi2007random, liu2021random} can be regarded as two-layer neural networks with $f(\bm x; \bm a) = \frac{1}{\sqrt{p}} \sum_{j=1}^p a_j \varphi(\bm x, \bm w_j)$, where $\varphi: \mathbb{R}^d \times \mathbb{R}^d \rightarrow \mathbb{R}$ is a nonlinear activation function. The first-layer parameters $\{ \bw_i \}_{i=1}^p$ are sampled i.i.d.\ from a probability measure $\mu_\bw$ and kept unchanged during training.
We only train $\bm a$ by solving the following random features ridge regression
\begin{equation}\label{eq:rffa}
    \begin{split}
        \hat{\ba} := \argmin_{\ba \in \R^p} 
    \left\{ \sum_{i=1}^n \left( y_i - {f}(\bx_i; \ba) \right)^2 
    + \lambda \|\ba\|_2^2 \right\} = \left( \bZ^{\!\top} \bZ + \lambda \id_p \right)^{-1} \bZ^{\!\top} \by\,, \quad \bZ \in \mathbb{R}^{n \times p}\,,
    \end{split}
\end{equation}
where the feature matrix is $[{\bm Z}]_{ij} = p^{-1/2} \varphi(\bm{x}_i; \bm{w}_j)$ and $\lambda \geq 0$ is the regularization parameter. We also consider min-$\ell_2$-norm solution ($\lambda=0$), i.e., $\hat{\ba}_{\min} = \argmin_{\ba} \| \ba \|_2, s.t. \bZ \ba = \by$. 

Following \cite{defilippis2024dimension}, under proper assumptions on $\varphi$ (e.g., bounded, squared-integrable), we can define a compact integral operator $\mathbb{T}:L^2(\mu_\bx) \to \mathcal{V}  \subseteq L^2(\mu_\bw)$ for any $ f \in L_2(\mu_\bx)$ such that
\[
(\mathbb{T}f)(\bw) := \int_{\mathbb{R}^d} \varphi(\bx; \bw) f(\bx) \mathrm{d}\mu_\bx \,,\quad\, \mathbb{T} = \sum_{k=1}^\infty \xi_k \psi_k \phi_k^*\,,
\]
where $(\xi_k)_{k\geq1} \subseteq \R$ are the eigenvalues and $(\psi_k)_{k\geq1}$ and $(\phi_k)_{k\geq1}$ are orthonormal bases of $L^2(\mu_\bx)$ and $\mathcal{V}$ for spectral decomposition respectively. 
We denote $\bLambda := \operatorname{diag}(\xi_1^2, \xi_2^2, \ldots) \in \mathbb{R}^{\infty \times \infty}$ and assume all eigenvalues are non-zero and arranged in non-increasing order.

 Accordingly, the covariate feature matrix can be represented as $\bG \!:=\! [\bg_1, \ldots, \bg_n]^\sT \!\in\! \mathbb{R}^{n \times \infty}$ with $\bg_i := (\psi_k(\bx_i))_{k \geq 1}$ and the weight feature matrix is $\bF \!:=\! [\boldf_1, \ldots, \boldf_p]^\sT \!\in\! \mathbb{R}^{p \times \infty}$ with $\boldf_j := (\xi_k \phi_k(\bw_j))_{k \geq 1}$. Then the feature matrix can be denoted by $\bZ = \frac{1}{\sqrt{p}} \bm{G} \bm{F}^\sT \in \mathbb{R}^{n \times p}$. Note that $\boldf$ has covariance matrix $\E[\boldf\boldf^\sT]=\bLambda$, and we further introduce $\hbLambda_\bF := \E_\bz[\bz\bz^\sT|\bF] = \frac{1}{p}\bF\bF^\sT \in \R^{p \times p}$.

Assuming that $f_* \in L^2(\mu_\bx)$ admits $f_*(\bx)=\sum_{k\geq1}\btheta_{*,k}\psi_k(\bx)$, we have a bias-variance decomposition of the excess risk
\[
\begin{aligned}
    \mathcal{R}^{\tt RFM} := \E_{\varepsilon} \left\|\btheta_* - \frac{1}{\sqrt{p}} \bF^\sT\hat{\ba} \right\|_2^2
    = {\color{dred}\left\|\btheta_* - \frac{1}{\sqrt{p}}\bF^\sT \mathbb{E}_{\varepsilon}[\hat{\ba}] \right\|_2^2} +  {\color{dblue}\Tr\left(\hbLambda_\bF \mathrm{Cov}_{\varepsilon}(\hat{\ba})\right)}\,,
\end{aligned}
\]
where the first RHS term is the {\color{dred}\emph{bias}}, denoted by $\mathcal{B}^{\tt RFM}_{\mathcal{R},\lambda}$, and the second term is the {\color{dblue}\emph{variance}}, denoted by $\mathcal{V}^{\tt RFM}_{\mathcal{R},\lambda}$. 
Similarly, under proper assumptions (to be detailed later), they admit the following deterministic equivalents, asymptotically \citep{simonmore} and non-asymptotically \citep{defilippis2024dimension}
\begin{align}
    \mathcal{B}^{\tt RFM}_{\mathcal{R},\lambda} \sim \sB^{\tt RFM}_{\sR,\lambda} :=& \frac{\nu_2^2}{1 - \Upsilon(\nu_1, \nu_2)} \big[ \< \btheta_*, (\bLambda + \nu_2\id)^{-2} \btheta_* \> \notag + \chi(\nu_2) \< \btheta_*, \bLambda (\bLambda + \nu_2\id)^{-2} \btheta_* \> \big]\,, \\
    \mathcal{V}^{\tt RFM}_{\mathcal{R},\lambda} \sim \sV^{\tt RFM}_{\sR,\lambda} :=& \frac{\sigma^2\Upsilon(\nu_1, \nu_2)}{1 - \Upsilon(\nu_1, \nu_2)}\,, \label{eq:de_risk_rf} 
\end{align}
where $(\nu_1,\nu_2)$ satisfy the self-consistent equations
\begin{equation}\label{eq:def_nu}
        n - \frac{\lambda}{\nu_1} \!=\! \Tr ( \bLambda ( \bLambda + \nu_2 \id )^{-1} )\,,\quad p - \frac{p\nu_1}{\nu_2} \!=\! \Tr ( \bLambda ( \bLambda + \nu_2 \id )^{-1} )\,,
\end{equation}
and $\Upsilon(\nu_1, \nu_2)$ and $\chi(\nu_2)$ are defined as
\[
\begin{aligned}
    \Upsilon(\nu_1, \nu_2) \!:= \frac{p}{n}\!\left[\! \left(\!1 \!-\! \frac{\nu_1}{\nu_2}\right)^2 \!\!\!+\!\! \left(\frac{\nu_1}{\nu_2}\right)^{2}\!\!\!\! \frac{\Tr\left(\bLambda^2 (\bLambda + \nu_2)^{-2}\right)}{p \!-\! \Tr\left(\bLambda^2 (\bLambda + \nu_2)^{-2}\right)} \right]\,, \quad \chi(\nu_2) \!:= \frac{\Tr\left(\bLambda (\bLambda + \nu_2)^{-2}\right)}{p - \Tr\left(\bLambda^2 (\bLambda + \nu_2)^{-2} \right)}\,.
\end{aligned}
\]

\section{Deterministic equivalents under norm-based capacity}
\label{sec:rff}

To mathematically characterize the phenomena in \cref{fig:intro_figure} under norm-based capacity, in this section, we firstly derive the bias-variance decomposition for the norm ${\mathbb E}_{\varepsilon}\|\hat{{\bm a}}\|_2^2 =: \mathcal{N}_{\lambda}^{\tt RFM} = \mathcal{B}_{\mathcal{N},\lambda}^{\tt RFM} + \mathcal{V}_{\mathcal{N},\lambda}^{\tt RFM}$ (with definition later), then relate $\mathcal{B}^{\tt RFM}_{\mathcal{N},\lambda}$ and $\mathcal{V}^{\tt RFM}_{\mathcal{N},\lambda}$ to their respective deterministic equivalents $\sB^{\tt RFM}_{\sN,\lambda}$ and $\sV^{\tt RFM}_{\sN,\lambda}$.
In the next section, we aim to precisely characterize the learning curves under norm-based capacities via deterministic equivalence. 

To derive the deterministic equivalence, we need the following assumption on well-behaved data and random features.

\begin{assumption}[Concentration of the eigenfunctions \cite{defilippis2024dimension}]\label{ass:concentrated_RFRR} Recall the random vectors $\bpsi := (\xi_k \psi_k({\bm x}))_{k \geq 1}$ and $\bphi := (\xi_k \phi_k({\bm w}))_{k \geq 1}$. There exists $C_* > 0$ such that for any PSD matrix ${\bm A} \in \mathbb{R}^{\infty \times \infty}$ with $\operatorname{Tr}({\bm \Lambda} {\bm A}) < \infty$ and any $t\ge0$, we have
\[
\begin{aligned}
 & \mathbb{P} \left( \left| \bpsi^{\!\top} {\bm A} \bpsi - {\rm Tr}({\bm \Lambda} {\bm A}) \right| \geq t \|{\bm \Lambda}^{1/2} {\bm A} {\bm \Lambda}^{1/2}\|_{\mathbf F} \right) 
\leq C_*  e^{-\frac{t}{C_*}}, \\
& \mathbb{P} \left( \left| \bphi^{\!\top} {\bm A} \bphi - {\rm Tr}({\bm \Lambda} {\bm A}) \right| \geq t \|{\bm \Lambda}^{1/2} {\bm A} {\bm \Lambda}^{1/2}\|_{\mathbf F} \right) 
\leq C_*  e^{-\frac{t}{C_*}}.  
\end{aligned}
\]
\end{assumption}
This assumptions holds for sub-Gaussian distributions and more generally, distributions that satisfy a log-Sobolev or convex Lipschitz concentration inequality \citep{cheng2022dimension}.
Next we present the deterministic equivalence results of $\mathcal{N}^{\tt RFM}_\lambda$, deferring the proof to \cref{app:asy_deter_equiv_rf}.
\begin{theorem}[Deterministic equivalence of $\mathcal{N}_{\lambda}^{\tt RFM}$]\label{prop:asy_equiv_norm_RFRR}
Given RFMs in \cref{sec:preli}, the bias-variance decomposition of its norm ${\mathbb E}_{\varepsilon}\|\hat{{\bm a}}\|_2^2$ is given by ${\mathbb E}_{\varepsilon}\|\hat{{\bm a}}\|_2^2 =: \mathcal{N}_{\lambda}^{\tt RFM} = \mathcal{B}_{\mathcal{N},\lambda}^{\tt RFM} + \mathcal{V}_{\mathcal{N},\lambda}^{\tt RFM}$, where $\mathcal{B}_{\mathcal{N},\lambda}^{\tt RFM}$ and $\mathcal{V}_{\mathcal{N},\lambda}^{\tt RFM}$ are defined as 
\[
\begin{aligned}
    \mathcal{B}_{\mathcal{N},\lambda}^{\tt RFM} := \langle{\bm \theta}_*, \bm{G}^{\!\top} \bm{Z} (\bm{Z}^{\!\top} \bm{Z} + \lambda{\bm I})^{-2} \bm{Z}^{\!\top} \bm{G}{\bm \theta}_* \rangle\,, \quad \mathcal{V}_{\mathcal{N},\lambda}^{\tt RFM} := \sigma^2{\rm Tr}\left(\bm{Z}^{\!\top} \bm{Z}(\bm{Z}^{\!\top} \bm{Z} + \lambda{\bm I})^{-2}\right)\,.
\end{aligned}
\]
Under \cref{ass:concentrated_RFRR}, we have the following asymptotic deterministic equivalents $\mathcal{B}^{\tt RFM}_{\mathcal{N},\lambda} \sim {\mathsf B}_{{\mathsf N},\lambda}^{\tt RFM}$, $\mathcal{V}^{\tt RFM}_{\mathcal{N},\lambda} \sim {\mathsf V}_{{\mathsf N},\lambda}^{\tt RFM}$ and thus $\mathcal{N}^{\tt RFM}_\lambda \sim {\mathsf N}^{\tt RFM}_\lambda := {\mathsf B}_{{\mathsf N},\lambda}^{\tt RFM} + {\mathsf V}_{{\mathsf N},\lambda}^{\tt RFM}$
    {\small
    \begin{align}
        {\mathsf B}_{{\mathsf N},\lambda}^{\tt RFM} :=& \frac{p\langle {\bm \theta}_*, {\bm \Lambda} ( {\bm \Lambda} + \nu_2{\bm I})^{-2} {\bm \theta}_* \rangle}{p - {\rm Tr}\left({\bm \Lambda}^2 ({\bm \Lambda} + \nu_2{\bm I})^{-2}\right)} + {\color{dred}\frac{p\chi(\nu_2)}{n}} \notag 
        \!\cdot\! 
        {\color{dred}
        \underbrace{
        \tcbhighmath[myredbox]{
        \frac{\nu_2^2\left[ \langle {\bm \theta}_*, ({\bm \Lambda} + \nu_2{\bm I})^{-2} {\bm \theta}_* \rangle + \chi(\nu_2) \langle {\bm \theta}_*, {\bm \Lambda} ({\bm \Lambda} + \nu_2{\bm I})^{-2} {\bm \theta}_* \rangle \right]}{1 - \Upsilon(\nu_1, \nu_2)}
        }
        }_{{\mathsf B}_{{\mathsf R},\lambda}^{\tt RFM}}
        }\,, \notag \\
        {\mathsf V}_{{\mathsf N},\lambda}^{\tt RFM} :=& {\color{dblue}\frac{p\chi(\nu_2)}{n\Upsilon(\nu_1, \nu_2)}} 
        \cdot
        {\color{dblue}
        \underbrace{
        \tcbhighmath[mybluebox]{
        \frac{\sigma^2 \Upsilon(\nu_1, \nu_2)}{1-\Upsilon(\nu_1, \nu_2)} 
        }
        }_{{\mathsf V}_{{\mathsf R},\lambda}^{\tt RFM}}\,.
        }
        \label{eq:equiv_random_feature}
    \end{align}
    }
\end{theorem}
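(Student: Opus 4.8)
The plan is to reduce $\mathcal{N}_\lambda^{\tt RFM}=\mathbb{E}_\varepsilon\|\hat{\ba}\|_2^2$ to a bias--variance decomposition, recognize its two pieces as members of a single family of ``sandwiched-resolvent'' functionals indexed by a PSD matrix $\bm A$ (the test risk being the member $\bm A=\hbLambda_\bF$ and the norm the member $\bm A=\bI_p$), build deterministic equivalents for this whole family by the same two-level argument that yields the risk equivalents \eqref{eq:de_risk_rf}, and finally specialize to $\bm A=\bI_p$ and simplify.

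First I would write $\by=\bG\btheta_*+\bnoise$ with $\mathbb{E}[\bnoise]=\bzero$, $\mathrm{Cov}(\bnoise)=\sigma^2\bI_n$, and substitute $\hat{\ba}=(\bZ^{\!\top}\bZ+\lambda\bI_p)^{-1}\bZ^{\!\top}\by$ into $\mathbb{E}_\varepsilon\|\hat{\ba}\|_2^2=\|\mathbb{E}_\varepsilon\hat{\ba}\|_2^2+\Tr(\mathrm{Cov}_\varepsilon(\hat{\ba}))$; this produces exactly the stated $\mathcal{B}_{\mathcal{N},\lambda}^{\tt RFM}$ and $\mathcal{V}_{\mathcal{N},\lambda}^{\tt RFM}$. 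Two structural remarks then drive everything. First, both terms have the form $\btheta_*^{\!\top}\bG^{\!\top}\bZ(\bZ^{\!\top}\bZ+\lambda\bI)^{-1}\bm A(\bZ^{\!\top}\bZ+\lambda\bI)^{-1}\bZ^{\!\top}\bG\btheta_*$ and $\sigma^2\Tr(\bm A(\bZ^{\!\top}\bZ+\lambda\bI)^{-1}\bZ^{\!\top}\bZ(\bZ^{\!\top}\bZ+\lambda\bI)^{-1})$ with $\bm A=\bI_p$, which are exactly the functionals that give the risk bias and variance in \eqref{eq:de_risk_rf} when $\bm A=\hbLambda_\bF$. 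Second, $(\bZ^{\!\top}\bZ+\lambda\bI)^{-2}=-\partial_\lambda(\bZ^{\!\top}\bZ+\lambda\bI)^{-1}$, so each is a $\lambda$-derivative of a first-order (single-resolvent) object; this yields an alternative route by differentiating a known equivalent.

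Next I would establish, for an arbitrary PSD $\bm A$ with $\Tr(\hbLambda_\bF\bm A)<\infty$, deterministic equivalents for $\Tr(\bm A\,\bZ^{\!\top}\bZ(\bZ^{\!\top}\bZ+\lambda\bI)^{-1})$ and for the companion $\btheta_*$-quadratic form, by the two-level reduction: conditioning on the weight matrix $\bF$, the design $\bZ=\tfrac1{\sqrt p}\bG\bF^{\!\top}$ is a linear model with population covariance $\hbLambda_\bF$, so \cref{ass:concentrated_RFRR} for $\bpsi$ together with the linear-ridge deterministic equivalents remove the data randomness and generate $\nu_1$ and an effective regularization; then averaging over $\bF$, whose rows have covariance $\bLambda$, treats $\tfrac1p\bF\bF^{\!\top}$ as a sample-covariance matrix and, crucially, requires the \emph{general}-$\bm A$ equivalents (not merely $\bm A=\bI$) together with \cref{ass:concentrated_RFRR} for $\bphi$; this step generates $\nu_2$ and the Marchenko--Pastur-type coupling, i.e.\ exactly the self-consistent pair \eqref{eq:def_nu}. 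The resulting equivalent is a linear functional of $\bm A$ through traces against matrices built from $\bLambda$ and $\nu_2$; evaluating it at $\bm A=\hbLambda_\bF$ reproduces \eqref{eq:de_risk_rf}, which serves as a consistency check. Specializing to $\bm A=\bI_p$ and simplifying, the variance collapses to $\tfrac{p\chi(\nu_2)}{n\Upsilon(\nu_1,\nu_2)}\,\sV^{\tt RFM}_{\sR,\lambda}$, while the bias separates into a ``direct'' term $\tfrac{p\langle\btheta_*,\bLambda(\bLambda+\nu_2\bI)^{-2}\btheta_*\rangle}{p-\Tr(\bLambda^2(\bLambda+\nu_2\bI)^{-2})}$, coming from the $\bm A=\bI_p$ weighting at the random-features level, plus $\tfrac{p\chi(\nu_2)}{n}\,\sB^{\tt RFM}_{\sR,\lambda}$; adding the two pieces gives $\mathcal{N}^{\tt RFM}_\lambda\sim{\mathsf N}^{\tt RFM}_\lambda={\mathsf B}^{\tt RFM}_{{\mathsf N},\lambda}+{\mathsf V}^{\tt RFM}_{{\mathsf N},\lambda}$.

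The main obstacle is the general-$\bm A$ extension and its propagation through \emph{both} levels of the RFM reduction while keeping the $o(1)$ relative errors uniform (they must survive the $\bF$-average, and in the differentiation viewpoint also the extra $\partial_\lambda$): existing results supply such equivalents only for $\bm A=\bI$, and the arbitrary-PSD case, followed by the somewhat delicate algebra needed to land precisely on the nested form above, is the heart of the argument. If instead one derives the norm equivalents by differentiating the risk equivalents in $\lambda$ — using \eqref{eq:def_nu} to track the $\lambda$-dependence of $(\nu_1,\nu_2)$ and the chain rule — the remaining gap is to justify exchanging the limit with $\partial_\lambda$, which follows from analyticity of the relevant resolvent functionals in $\lambda$ off the spectrum together with locally uniform bounds (a Vitali/Montel argument).
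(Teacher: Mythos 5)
Your proposal is correct and follows essentially the same route as the paper: the identical bias--variance decomposition of $\mathbb{E}_\varepsilon\|\hat{\ba}\|_2^2$, followed by a two-level deterministic equivalence (first over $\bG$ conditional on $\bF$ using the linear model with covariance $\bF^{\!\top}\bF$ and effective regularization $\nu_1$, then over $\bF$ introducing $\nu_2$), carried out with the general-PSD-$\bm A$ trace functionals of \cite{bach2024high} and simplified to the stated nested form. Your observation that the norm is the $\bm A=\bI_p$ member of the same functional family whose $\bm A=\hbLambda_\bF$ member is the test risk is exactly the mechanism behind the rescaling factors $\tfrac{p\chi(\nu_2)}{n}$ and $\tfrac{p\chi(\nu_2)}{n\Upsilon(\nu_1,\nu_2)}$ in the paper.
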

\textbf{Remark:} This theorem establishes asymptotic equivalence; a more complex non-asymptotic analysis is developed in \cref{app:nonasy_deter_equiv_rf}. Numerical validation is provided through experiments on synthetic and real-world datasets in \cref{app:exp_syn_data} and \cref{app:exp_real_data}, respectively.

By comparing \cref{eq:de_risk_rf} (test risk) and \cref{eq:equiv_random_feature} (norm) via deterministic equivalence, we conclude that
\begin{itemize}
    \item Bias: the test risk's bias in \cref{eq:de_risk_rf} has been included in the the second term of ${\mathsf B}_{{\mathsf N},\lambda}^{\tt RFM}$ ({\color{dred}see the red area in \cref{eq:equiv_random_feature}}) with a rescaled factor ${\color{dred}\frac{p\chi(\nu_2)}{n}}$.
    \item Variance: we find that the variance term of the norm ${\mathsf V}^{\tt RFM}_{{\mathsf N},\lambda}$ equals the variance term of the test risk ${\mathsf V}^{\tt RFM}_{{\mathsf R},\lambda}$ ({\color{dblue}see the blue area in \cref{eq:equiv_random_feature}}) in \cref{eq:de_risk_rf} multiplied by a factor ${\color{dblue}\frac{p\chi(\nu_2)}{n\Upsilon(\nu_1, \nu_2)}}$. 
\end{itemize} 

Hence norm-based capacity (on the second layer) suffices to characterize the test risk in RFMs.
Here we discuss whether {\bf other classical metrics} of model capacity can characterize the generalization.
\begin{itemize}
    \item Effective dimension \cite{zhang2002effective}: It is defined as ${\rm Tr}({\bm \Lambda}({\bm \Lambda}+\nu_{1(2)}{\bm I})^{-1})$ or similar formulation, e.g., ${\rm Tr}({\bm \Lambda^2}({\bm \Lambda}+\nu_{1(2)}{\bm I})^{-2})$.
    These effective dimensions increase monotonically with $p$, thus exhibit double descent. 
    \item Smoother \cite{curth2024u}: It is defined as $n{\rm Tr}(\widehat{\bm \Lambda}_{\bm F}{\bm Z}^{\!\top}{\bm Z}({\bm Z}^{\!\top}{\bm Z} + \lambda)^{-2})$, which corresponds to the variance of the test risk $\mathcal{V}^{\tt RFM}_\mathcal{R}$ scaled by the factor $\frac{n}{\sigma^2}$. Therefore, it first increases and then decreases with $p$, reaching a peak near at the interpolation threshold ($p=n$). 
\end{itemize}
The above two metrics offer a variance-based measure of model capacity: they capture the variance component of test risk but contain no information about the target function $\bm\theta^*$, and thus cannot fully characterize generalization. In summary, {\bf\emph{norm-based capacity suffices to characterize generalization, whereas effective dimension and smoother do not.}}

\if 0
\paragraph{Norm-based capacities over different layers in RFMs:} 
Random features model is equivalent to a two-layer fully-connected neural network with a frozen (randomly-initialized) first layer. We have established deterministic equivalence for the $\ell_2$-norm of the second-layer weights, $\| \bm a \|_2^2$. Next, we examine the corresponding results for the first-layer and the combined two-layer parameter norms.
\begin{itemize}
    \item Over the first-layer parameter $\| \bm W\|_{\mathrm{F}}$: Using random Gaussian initialization over $\bm W$, we have $\mathbb{E}[\|{\bm W}\|_{\mathrm{F}}] = \sqrt{2} \cdot \nicefrac{\Gamma\left( \frac{dp + 1}{2} \right)}{\Gamma\left( \frac{dp}{2} \right)} \approx \sqrt{dp - \frac{1}{2}}$, where $d$ is the dimension of the data. leading to a reshaped double descent in the test risk under the norm of the hidden layer's weights. 
    \item Over path norm: Following the empirical findings of \cite{jiang2019fantastic} demonstrating that path norm is the most suitable model capacity for neural networks, we examine its relationship with risk for RFMs. The path norm for RFMs is given by: $\mu_{\text{path-norm}} = \sum_{j=1}^p a_j^2 \|\bm w_j\|_2^2$.
\end{itemize}
\fi

{\bf Norm-based capacity over different layers:} In RFMs, if we use the norm of the first layer, i.e., $\| \bm W \|_{\mathrm{F}}$ as model capacity, we will obtain a reshaped double descent curve as \cref{fig:rff_first_layer_norm}. This is because, the first layer's parameters are with random Gaussian initialization and then untrained, we directly have $\mathbb{E}[\|{\bm W}\|_{\mathrm{F}}] = \sqrt{2} \cdot \nicefrac{\Gamma\left( \frac{dp + 1}{2} \right)}{\Gamma\left( \frac{dp}{2} \right)} \approx \sqrt{dp - \frac{1}{2}}$, increasing with $p$.
For two-layer neural networks with both trained layers, path norm is empirically verified as the most suitable (data-independent) model capacity for neural networks.
We find that the curve aligns more closely with the norm-based capacity in RFMs of the second-layer parameters, rather than that of the first layer, see more discussion in \cref{app:exp_two_layer_NNs}.

For better illustration, we consider a special case of \cref{prop:asy_equiv_norm_RFRR}, the min-norm estimator ($\lambda=0$), which will be used later, and derive its deterministic equivalence; see the proof in \cref{app:asy_deter_equiv_rf}. 
\begin{corollary}[Asymptotic deterministic equivalence of ${\mathsf N}_{0}^{\tt RFM}$]\label{prop:asy_equiv_norm_RFRR_minnorm}
    Under \cref{ass:concentrated_RFRR}, for the min-$\ell_2$-norm estimator $\hat{{\bm a}}_{\min}$, in the under-parameterized regime ($p<n$), we have
    \[
    \begin{aligned}
        \mathcal{B}^{\tt RFM}_{\mathcal{N},0} \sim \frac{p\langle{\bm \theta}_*, {\bm \Lambda} ({\bm \Lambda} +\lambda_p{\bm I})^{-2} {\bm \theta}_*\rangle}{n-{\rm Tr}({\bm \Lambda}^2({\bm \Lambda} +\lambda_p{\bm I})^{-2})} + \frac{p\langle{\bm \theta}_*, ({\bm \Lambda} +\lambda_p{\bm I})^{-1} {\bm \theta}_*\rangle}{n-p}\,, \quad 
        \mathcal{V}^{\tt RFM}_{\mathcal{N},0} \sim \frac{\sigma^2p}{\lambda_p(n-p)},
    \end{aligned}
    \]
    where $\lambda_p$ is from ${\rm Tr}({\bm \Lambda}({\bm \Lambda}+\lambda_p{\bm I})^{-1}) \sim p$. In the over-parameterized regime ($p>n$), we have
    \[
    \begin{aligned}
        \mathcal{B}^{\tt RFM}_{\mathcal{N},0} \sim \frac{p\langle{\bm \theta}_*, ( {\bm \Lambda} + \lambda_n{\bm I})^{-1} {\bm \theta}_*\rangle}{p-n}\,,
        \quad
        \mathcal{V}^{\tt RFM}_{\mathcal{N},0} \sim \frac{\sigma^2p}{\lambda_n(p-n)}\,,
    \end{aligned}
    \]
    where $\lambda_n$ is defined by ${\rm Tr}({\bm \Lambda}({\bm \Lambda}+\lambda_n{\bm I})^{-1}) \sim n$.
\end{corollary}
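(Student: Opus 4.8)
The statement is a corollary, so the plan is to derive it by sending $\lambda\to0^+$ in \cref{prop:asy_equiv_norm_RFRR}: everything reduces to (i) pinning down the limiting behaviour of the self‑consistent pair $(\nu_1,\nu_2)$ from \eqref{eq:def_nu} in each regime $p<n$ and $p>n$, and (ii) collapsing the deterministic expressions in \eqref{eq:equiv_random_feature} with a couple of elementary identities. Write $\mathcal D(\nu):=\Tr(\bLambda(\bLambda+\nu\id)^{-1})$; since all $\xi_k$ are nonzero and $\Tr(\bLambda)<\infty$, $\mathcal D$ is continuous and strictly decreasing from $+\infty$ to $0$ on $(0,\infty)$, so $\mathcal D=c$ has a unique root for each $c>0$; let $\lambda_p,\lambda_n$ be the roots of $\mathcal D\sim p$ and $\mathcal D\sim n$ appearing in the statement.

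Step 1 (limits of $\nu_1,\nu_2$). Subtracting the two equations in \eqref{eq:def_nu} gives $n-p=\lambda/\nu_1-p\nu_1/\nu_2$. If $p<n$, equation (2) forces $\mathcal D(\nu_2)=p(1-\nu_1/\nu_2)<p<n$, so (1) forces $\nu_1\to0$; feeding back, $\mathcal D(\nu_2)\to p$ and $\lambda/\nu_1\to n-p$, hence $\nu_2\to\lambda_p$, $\nu_1\sim\lambda/(n-p)$ and $\nu_1/\nu_2\to0$. If $p>n$, a solution with $\nu_1\to0$ would give $\mathcal D(\nu_2)\to n$ and then $(2)$ would yield $p-n=-p\nu_1/\nu_2\le0$, a contradiction; hence $\nu_1$ stays bounded away from $0$, $\lambda/\nu_1\to0$, so $\mathcal D(\nu_2)\to n$, i.e.\ $\nu_2\to\lambda_n$, and $(2)$ gives $\nu_1/\nu_2\to(p-n)/p$, i.e.\ $\nu_1\to\lambda_n(p-n)/p$. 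These limits are justified purely by the monotonicity/continuity of $\mathcal D$ and the uniqueness of the self‑consistent solution already used in \cref{prop:asy_equiv_norm_RFRR}.

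Step 2 (simplification). The two facts needed are the trace identity $\mathcal D(\nu)-\Tr(\bLambda^2(\bLambda+\nu\id)^{-2})=\nu\Tr(\bLambda(\bLambda+\nu\id)^{-2})$ (expand $\bLambda(\bLambda+\nu\id)^{-1}[\id-\bLambda(\bLambda+\nu\id)^{-1}]$) and the operator identity $\nu(\bLambda+\nu\id)^{-2}+\bLambda(\bLambda+\nu\id)^{-2}=(\bLambda+\nu\id)^{-1}$. In the under‑parameterized case, plugging $\nu_2\to\lambda_p$, $\nu_1/\nu_2\to0$ into the definitions gives $\Upsilon\to p/n$ and, using the trace identity together with $\mathcal D(\lambda_p)\sim p$, the clean value $\chi(\lambda_p)=1/\lambda_p$. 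Substituting into \eqref{eq:equiv_random_feature}, the red‑box factor collapses, and the operator identity merges the $\langle\btheta_*,(\bLambda+\lambda_p\id)^{-2}\btheta_*\rangle$ and $\langle\btheta_*,\bLambda(\bLambda+\lambda_p\id)^{-2}\btheta_*\rangle$ terms into $\langle\btheta_*,(\bLambda+\lambda_p\id)^{-1}\btheta_*\rangle$, giving the stated bias; the variance is $\sV^{\tt RFM}_{\sN,\lambda}=p\chi(\nu_2)\sigma^2/(n(1-\Upsilon))\to \sigma^2 p/(\lambda_p(n-p))$. The over‑parameterized case is the same computation with $\nu_2\to\lambda_n$, $\nu_1/\nu_2\to(p-n)/p$: one first simplifies $1-\Upsilon$ to $(p-n)\,(n-\Tr(\bLambda^2(\bLambda+\lambda_n\id)^{-2}))/(n\,(p-\Tr(\bLambda^2(\bLambda+\lambda_n\id)^{-2})))$, so that $\chi(\nu_2)/(1-\Upsilon)$ telescopes to $n/((p-n)\lambda_n)$; this gives $\sV^{\tt RFM}_{\sN,0}\to\sigma^2 p/(\lambda_n(p-n))$, and one further application of the operator identity collapses the bias to $p\langle\btheta_*,(\bLambda+\lambda_n\id)^{-1}\btheta_*\rangle/(p-n)$.

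The main obstacle is not the algebra but the interchange of limits: \cref{prop:asy_equiv_norm_RFRR} gives $\mathcal N^{\tt RFM}_\lambda\sim\sN^{\tt RFM}_\lambda$ for fixed $\lambda>0$ as $n,p\to\infty$, whereas we want $\mathcal N^{\tt RFM}_0\sim\lim_{\lambda\to0}\sN^{\tt RFM}_\lambda$. In the over‑parameterized regime this is clean because $\hat\ba_{\min}=\bZ^{\!\top}(\bZ\bZ^{\!\top})^{-1}\by$ depends continuously on $\lambda$ at $\lambda=0$ — $\sigma_{\min}(\bZ\bZ^{\!\top})$ is bounded below w.h.p.\ by the concentration in \cref{ass:concentrated_RFRR} — so one may either pass to the limit inside a bound uniform over small $\lambda$ (as in \cref{app:nonasy_deter_equiv_rf}) or rerun the deterministic‑equivalence argument directly at $\lambda=0$. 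In the under‑parameterized regime one uses $\hat\ba=(\bZ^{\!\top}\bZ)^{-1}\bZ^{\!\top}\by$ and the analogous lower bound on $\sigma_{\min}(\bZ^{\!\top}\bZ)$, plus a short argument (as in \cref{app:asy_deter_equiv_rf}) that the $\lambda\to0$ limit of the self‑consistent system is attained in a way that makes the Step‑2 substitutions legitimate; care is needed since $\nu_1\to0$ there, so one must track that $\lambda/\nu_1$ — not $\nu_1$ itself — has the finite limit $n-p$.
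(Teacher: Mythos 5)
Your proposal is correct in substance but follows a genuinely different route from the paper. You obtain the corollary by sending $\lambda\to0^+$ inside the ridge formula \eqref{eq:equiv_random_feature} of \cref{prop:asy_equiv_norm_RFRR}, after pinning down the limits of $(\nu_1,\nu_2)$ from \eqref{eq:def_nu}; the paper instead re-derives the ridgeless deterministic equivalents from scratch, applying \cref{prop:spectral,prop:spectralK} to the ridgeless resolvents in a regime-dependent order (first over ${\bm G}$ then over ${\bm F}$ when $p<n$, and the reverse when $p>n$), using intermediate facts such as ${\rm Tr}(({\bm F}{\bm F}^{\!\top})^{-1})\sim 1/\lambda_p$. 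Your limit analysis of $(\nu_1,\nu_2)$ matches the bullet points stated after the corollary, and I have checked that your algebra is right: $\Upsilon\to p/n$ and $\chi(\lambda_p)=1/\lambda_p$ when $p<n$; $1-\Upsilon=\tfrac{(p-n)(n-d_2)}{n(p-d_2)}$ with $d_2:=\Tr(\bLambda^2(\bLambda+\lambda_n\id)^{-2})$ and $\chi/(1-\Upsilon)=n/(\lambda_n(p-n))$ when $p>n$; and the operator identity $\nu(\bLambda+\nu\id)^{-2}+\bLambda(\bLambda+\nu\id)^{-2}=(\bLambda+\nu\id)^{-1}$ collapses the bias exactly as you say in the over-parameterized case. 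Your route is shorter, avoids the regime-dependent order of conditioning entirely, and automatically keeps the ridgeless formulas consistent with the ridge ones; the paper's route avoids the limit-interchange question you correctly identify (it works directly with the pseudo-inverse quantities) and produces intermediate conditional equivalents that are reused in the proof of \cref{prop:asy_equiv_error_RFRR_minnorm}. Neither the paper nor you fully justifies the interchange of $\lambda\to0$ with $n,p\to\infty$, so your explicit acknowledgment of that obstacle is appropriate rather than a defect relative to the paper.

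One concrete point you should not gloss over: your method does \emph{not} reproduce the printed statement verbatim in the under-parameterized regime. Taking $\nu_2\to\lambda_p$ in the first term of ${\mathsf B}^{\tt RFM}_{{\mathsf N},\lambda}$ yields denominator $p-\Tr(\bLambda^2(\bLambda+\lambda_p\id)^{-2})$, whereas the corollary prints $n-\Tr(\bLambda^2(\bLambda+\lambda_p\id)^{-2})$. These cannot both be correct, and the discrepancy traces to the paper's own proof applying \cref{eq:trA3K} to the $p$-row matrix $\bD={\bm F}\bLambda^{-1/2}$ while keeping $n$ in the denominator; the application with the correct row count gives $p-{\rm df}_2(\lambda_p)$, consistent with your limit of \cref{prop:asy_equiv_norm_RFRR}. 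You should flag this explicitly rather than assert that the substitution ``gives the stated bias'': either the corollary's $n$ is a typo for $p$ (which your derivation supports), or \cref{prop:asy_equiv_norm_RFRR} and the corollary are mutually inconsistent and the claim needs to be re-examined. The second bias term, and both variance terms in both regimes, do match the statement exactly under your computation.
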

\noindent{\bf Remark:} $\mathcal{V}^{\tt RFM}_{\mathcal{N},0}$ admits the similar formulation in under-/over-parameterized regimes but differs in $\lambda_n$ and $\lambda_p$. An interesting point to note is that, in the over-parameterized regime, $\lambda_n$ is a constant when $n$ constant. Therefore, $\mathcal{B}^{\tt RFM}_{\mathcal{N},0}$ and $\mathcal{V}^{\tt RFM}_{\mathcal{N},0}$ are proportional to each other.

We need to analyze RFMs separately in the under-/over-parameterized regimes when $\lambda \rightarrow 0$, leading to different self-consistent equations in these two settings.
\begin{itemize}
    \item In the under-parameterized regime, $\nu_1$ converges to $0$, and $\nu_2$ converges to a value $\lambda_p$ satisfying ${\rm Tr}({\bm \Lambda}({\bm \Lambda} + \lambda_p{\bm I})^{-1}) = p$.
    \item In the over-parameterized regime, $\nu_2$ converges to a constant $\lambda_n$ satisfying ${\rm Tr}({\bm \Lambda}({\bm \Lambda} + \lambda_n{\bm I})^{-1}) = n$, and $\nu_1$ converges to $\nu_2(1-\nicefrac{n}{p})$.
\end{itemize}

These differing asymptotic behaviors of $\nu_1$ and $\nu_2$ between the two regimes enable a more precise characterization of the risk–norm relationship, which will be described in the next section. 
\vspace{-0.2cm}
\section{Characterization of learning curves}\label{sec:relationship_rf}
\vspace{-0.2cm}

By giving the deterministic equivalents of the norm, we are ready to plot the learning curve under norm-based capacity, see \cref{fig:RFM_result} for illustration.
In some special cases, the mathematical formulation of learning curves can be given.
Accordingly, in this section, we firstly discuss the shape of learning curves from the lens of norm-based capacity in \cref{sec:descriptionshape}.
Then we take the example of min-$\ell_2$-norm interpolator, and precisely characterize the learning curve by reshaping scaling laws in \cref{sec:formulation}.

\vspace{-0.2cm}
\subsection{The shape description of learning curves}
\label{sec:descriptionshape}
\vspace{-0.2cm}

Here we conduct the bias-variance decomposition, and track how bias and variance behave w.r.t. model size, norm, and the regularization parameter $\lambda$, as shown in \cref{fig:discussion}, which will provide a more detailed description and understanding on learning curves.

\paragraph{Reshape bias-variance trade-offs and double descent:} We plot the bias and variance components of the test risk over model size $p$ and norm, see \cref{fig:bias_variance_risk} and \cref{fig:bias_variance_risk_norm}, respectively. Note that, our theory (shown in curve) can precisely predict experimental results (shown by points).
\cref{fig:bias_variance_risk} aligns closely with \citep[Figure 6]{mei2022generalization} on the double descent when increasing the model size $p$ from the under- to over-parameterized regimes. However, even in the classical under-parameterized setting, the conventional bias-variance trade-off no longer holds: the bias follows a U-shaped curve, whereas the variance grows monotonically. This was discussed recently by \cite{wilson2025deep,ben2025overfitting} on ``whether we should remove bias-variance trade-offs from ML textbooks''.

When examining bias-variance vs.\ norm (see  \cref{fig:bias_variance_risk_norm}), we observe that: \textit{i)} in the under-parameterized regime, bias exhibits a U-shaped dependence on norm, while variance increases monotonically. This result matches with that for model size in \cref{fig:bias_variance_risk}; \textit{ii)} in the over-parameterized regime, both bias and variance increase monotonically with norm. These findings reshape the traditional understanding of bias-variance trade-offs and double descent.

Since the self-consistent equation differs from under-parameterized to over-parameterized regimes, the learning curve plotted against the norm (see \cref{fig:RFM_result} and \cref{fig:risk_vs_norm_varying_lambda}) is not {\bf single-valued} because of such phase transition: a single norm value may correspond to two distinct error levels in the under- and over-parameterized regimes. 
However, when analyzed separately, each regime exhibits a one-to-one relationship between test risk and norm.
Notably, our analytical and empirical findings suggest that i) sufficient over-parameterization is always better than under-parameterization in terms of lower test risk, which also coincides with \cite{simonmore}. 
ii) More importantly, this curve aligns more with classical statistical intuition—a U-shaped curve—rather than the double descent phenomenon.
We conclude that {\bf \emph{with suitably chosen model capacity, the learning curve more closely follows a U-shape than a double descent}}.
We conjecture that this behavior is universal in more complex models and real-world datasets; see \cref{app:exp_real_data,app:exp_two_layer_NNs} for details.

\paragraph{Control the norm via regularization.} 
Norm-based capacity appears less intuitive used in practice when compared to model size. To control model norm, one can either fix the regularization parameter and vary the model size $p$ or fix $p$ and constrain the weight norm, The latter approach is mathematically equivalent to tuning the regularization parameter $\lambda$ in random feature ridge regression, as evidenced by the equivalence to the constrained optimization problem: $\min_{\bm \beta} \|{\bm y} - {\bm Z}{\bm a}\|^2 \quad \text{s.t.} \quad \|{\bm a}\|_2 = B$. This yields a ridge-type solution: $\hat{\bm a} = ({\bm Z}^{\!\top} {\bm Z} + \lambda \bm I)^{-1} {\bm Z}^{\!\top} {\bm y} \quad \text{subject to} \quad \|\hat{\bm a}\|_2 = B$, where $\lambda$ is uniquely determined by the norm constraint $B$ (with $\partial\|\hat{\bm a}\|_2^2/\partial\lambda < 0$ guaranteeing a one-to-one mapping).
We empirically verified this in the random feature model by fixing the training sample size $n$ and ratio $\gamma$, and varying $\lambda$ to control the estimator norm. As shown in \cref{fig:norm_vs_lambda_varying_lambda} (under-parameterized with $\gamma = 0.5$) and \cref{fig:risk_vs_norm_varying_lambda} (over-parameterized with $\gamma = 1.5$), the norm decreases monotonically with increasing $\lambda$, and in both under- and over-parameterized regimes, the test risk exhibits a U-shaped dependence on norm capacity, consistent with the known L-curve behavior \cite{hansen1992analysis}. Further discussion can be found in \cref{app:discussion_1}.

\begin{figure*}[tb]
    \centering
    \begin{minipage}{0.49\textwidth}
    \begin{tcolorbox}[myfigurebox]
        \centering
        \subfigure[{\fontsize{6}{8}\selectfont Risk vs. $p$}]{\label{fig:bias_variance_risk}
            \includegraphics[width=0.45\textwidth]{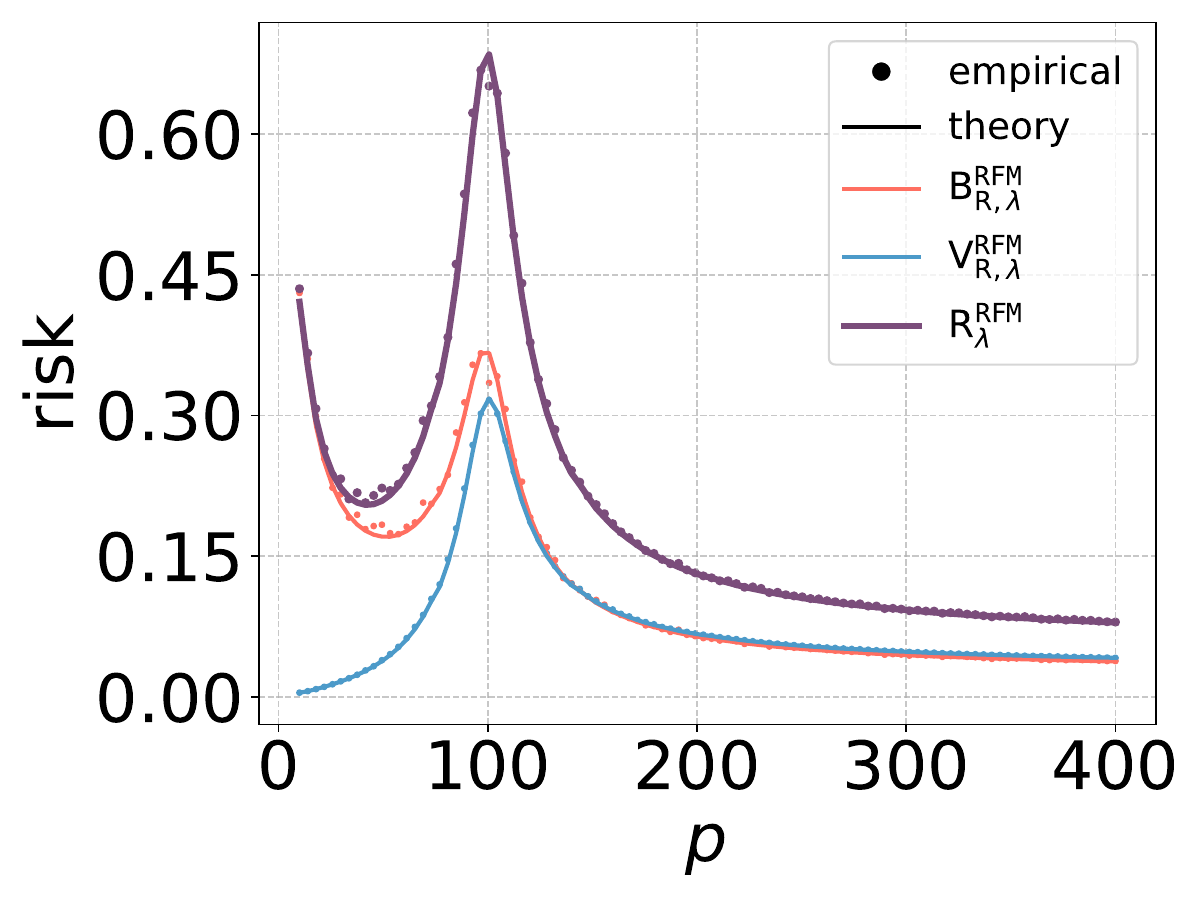}
        }
        \subfigure[{\fontsize{6}{8}\selectfont Risk vs. Norm}]{\label{fig:bias_variance_risk_norm}
            \includegraphics[width=0.45\textwidth]{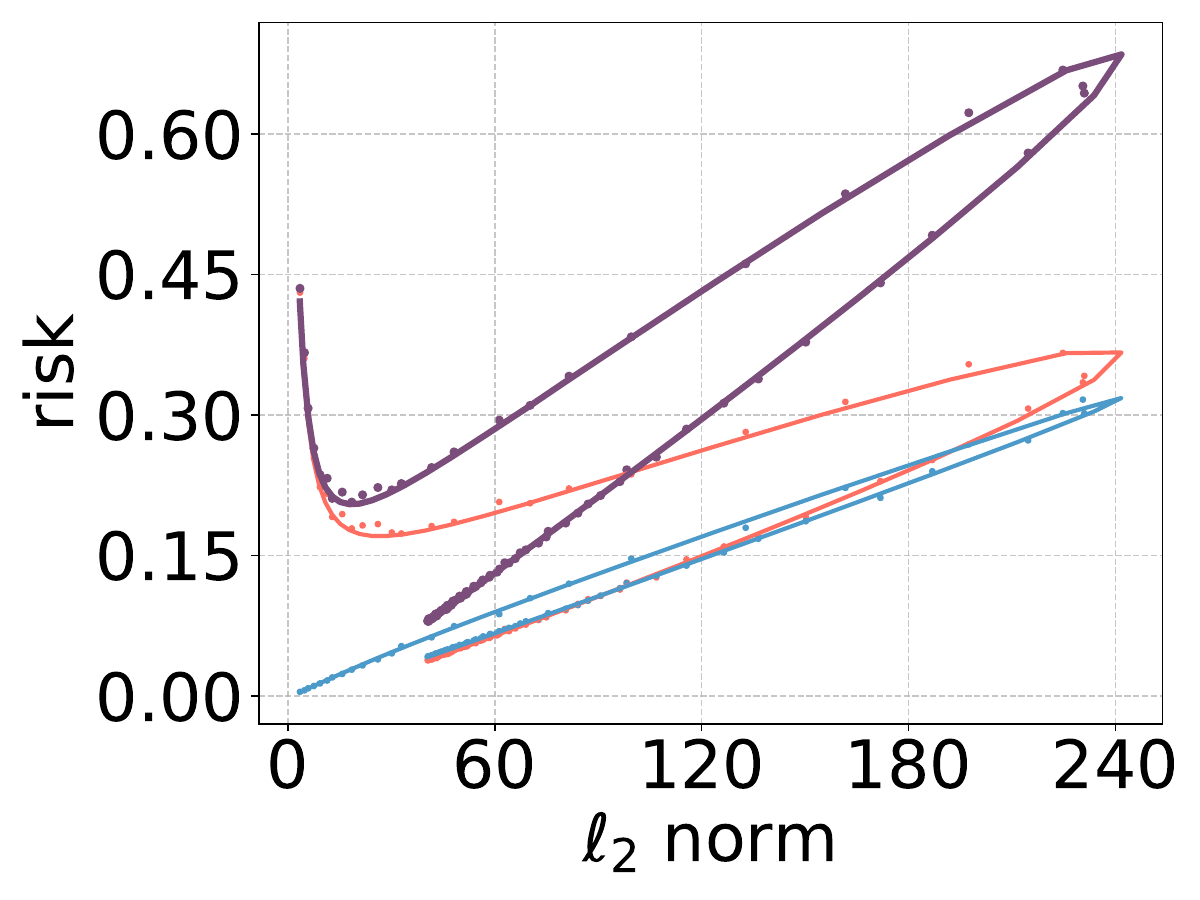}
        }
    \end{tcolorbox}
    \end{minipage}
    \hfill
    \begin{minipage}{0.49\textwidth}
    \begin{tcolorbox}[myfigurebox]
        \centering
        \subfigure[{\fontsize{6}{8}\selectfont Norm vs. $\lambda$ (varying $\lambda$)}]{\label{fig:norm_vs_lambda_varying_lambda}
            \includegraphics[width=0.45\textwidth]{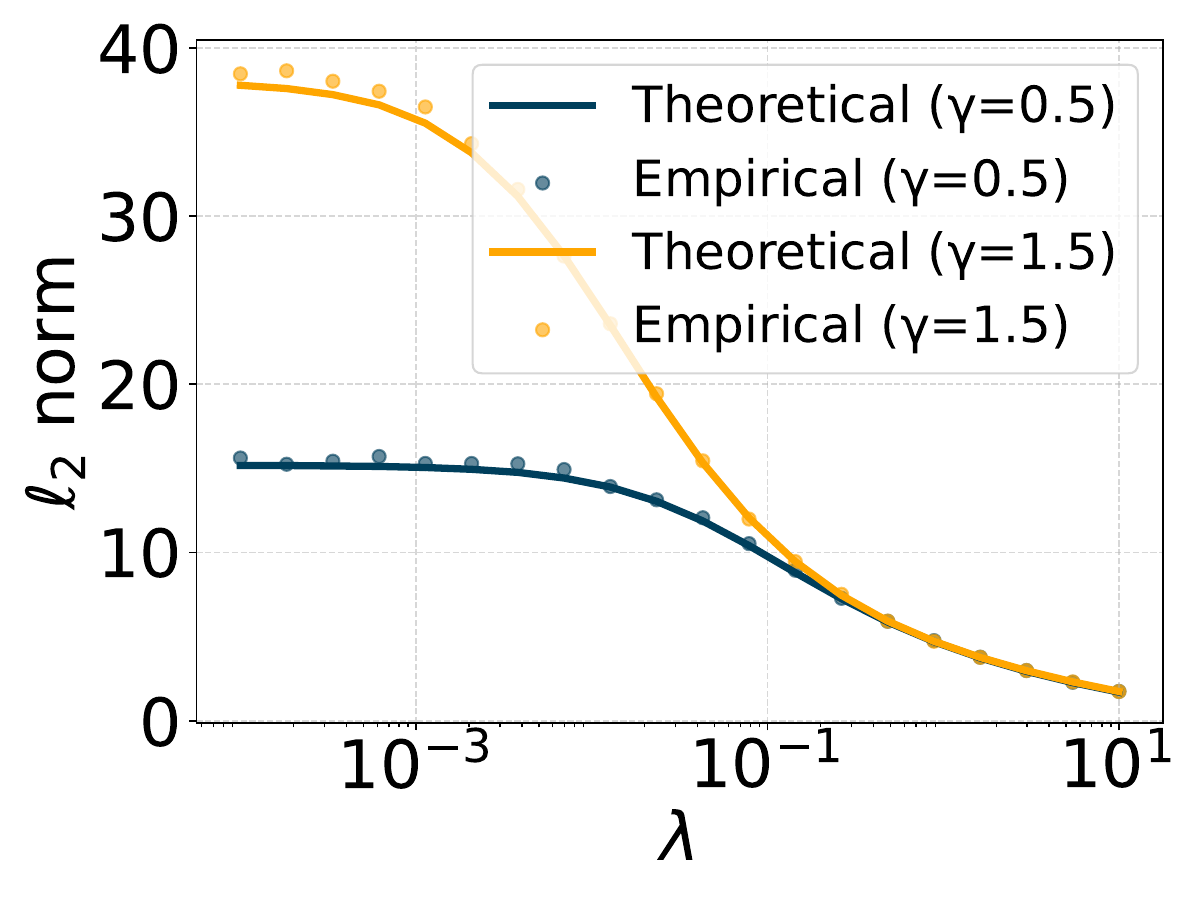}
        }
        \subfigure[{\fontsize{6}{8}\selectfont Risk vs. Norm (varying $\lambda$)}]{\label{fig:risk_vs_norm_varying_lambda}
            \includegraphics[width=0.45\textwidth]{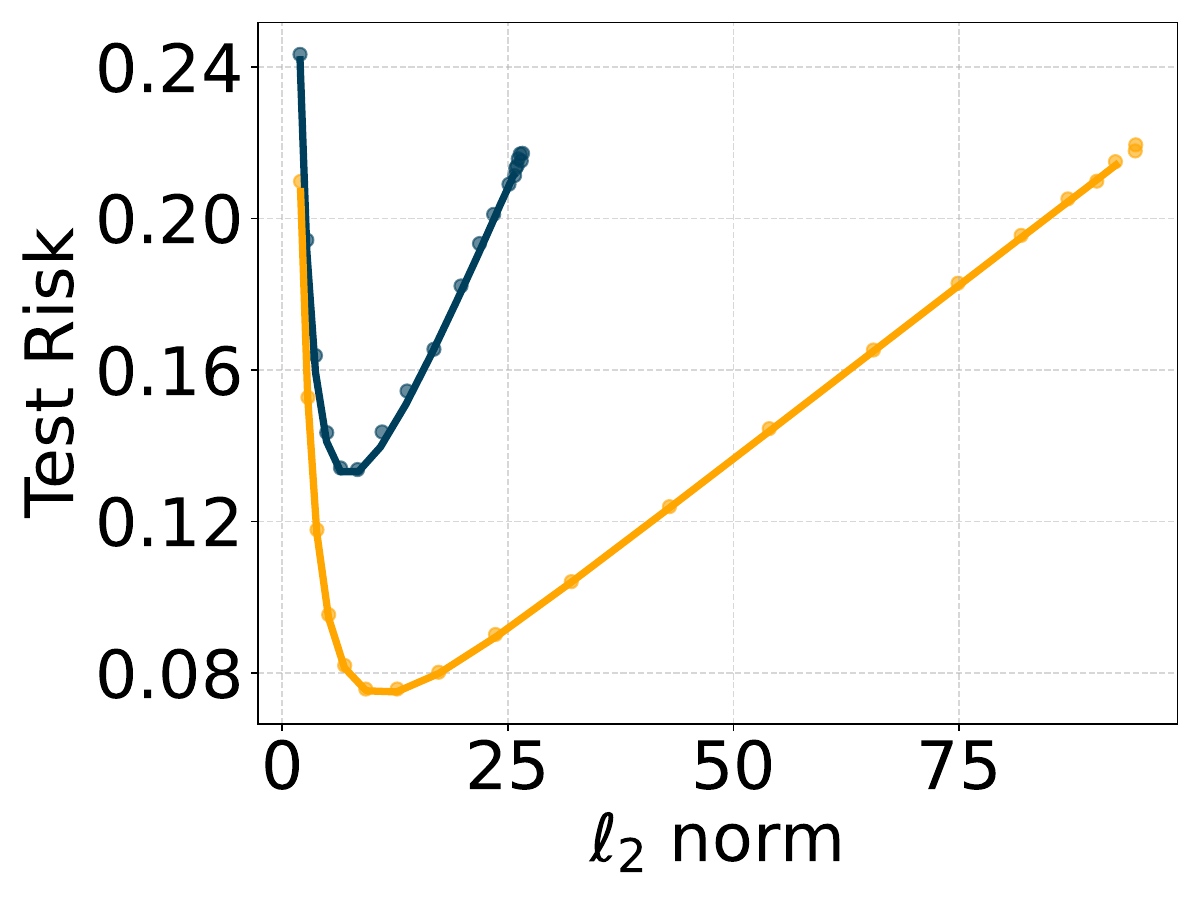}
        }
    \end{tcolorbox}
    \end{minipage}

    \caption{The curves of bias and variance in RFMs are over model size $p$ in \cref{fig:bias_variance_risk} and over norm ${\mathbb E}_{\varepsilon}\|\hat{{\bm a}}\|_2^2$ in \cref{fig:bias_variance_risk_norm}, respectively. \cref{fig:norm_vs_lambda_varying_lambda} establishes a one-to-one correspondence between the norm and $\lambda$ for a fixed $p$ across varying $\lambda$ values. \cref{fig:risk_vs_norm_varying_lambda} examines the relationship between risk and norm under the same conditions. Training data \(\{({\bm x}_i, y_i)\}_{i \in [n]}\), \(n = 100\), sampled from the model \(y_i = {\bm g}_i^{\!\top} {\bm \theta}_* + \varepsilon_i\), \(\sigma^2 = 0.04\), \({\bm g}_i \sim \mathcal{N}(0, {\bm I})\), \({\bm f}_i \sim \mathcal{N}(0, {\bm \Lambda})\), with \(\xi^2_k({\bm \Lambda})=k^{-\nicefrac{3}{2}}\) and \({\bm \theta}_{*,k}=k^{-1}\).}
    \label{fig:discussion}
    \vspace{-0.2cm}
\end{figure*}

\subsection{Mathematical formulation of learning curves}\label{sec:formulation}

Firstly, we show that the risk-norm relationship is \textbf{linear} in over-parameterized regime, see the proof in \cref{app:relationship_rf}.

\begin{proposition}[Linear learning curve]\label{prop:relation_minnorm_overparam}
The deterministic equivalents ${\mathsf R}^{\tt RFM}_{0}$ and ${\mathsf N}^{\tt RFM}_{0}$, in over-parameterized regimes ($p>n$) admit the linear relationship with the constant slope $\lambda_n$
{\small
\begin{equation}\label{eq:rfflam0}
    {\mathsf R}_{0}^{\tt RFM} 
    = 
    \lambda_n{\mathsf N}_{0}^{\tt RFM} 
    + 
    C_{{\bm \theta}_*, {\bm \Lambda}, n, \sigma} \,, 
\end{equation}
}
where $\lambda_n$ satisfying \( \Tr(\bm{\Lambda}(\bm{\Lambda} + \lambda_n\bm{I})^{-1}) \sim n \) and \( C_{\bm{\theta}_*, \bm{\Lambda}, n, \sigma} \) are two constants independent of \( p \) but dependent on \( \bm{\theta}_*, \bm{\Lambda}, n \), and \( \sigma \), as defined in \cref{app:relationship_rf}. 

\end{proposition}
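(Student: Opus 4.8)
The plan is to derive both sides of \eqref{eq:rfflam0} in closed form as functions of $p$ (holding $n$, $\bm\Lambda$, $\bm\theta_*$, $\sigma$ fixed) and to show that the difference $\mathsf{R}_0^{\tt RFM} - \lambda_n\mathsf{N}_0^{\tt RFM}$ is independent of $p$. I would start from the explicit over-parameterized deterministic equivalents of the norm given in \cref{prop:asy_equiv_norm_RFRR_minnorm}, namely $\mathsf{B}_{\mathsf{N},0}^{\tt RFM}=\tfrac{p}{p-n}\langle\bm\theta_*,(\bm\Lambda+\lambda_n\bm I)^{-1}\bm\theta_*\rangle$ and $\mathsf{V}_{\mathsf{N},0}^{\tt RFM}=\tfrac{\sigma^2 p}{\lambda_n(p-n)}$, together with the risk equivalents $\mathsf{B}_{\mathsf{R},\lambda}^{\tt RFM},\mathsf{V}_{\mathsf{R},\lambda}^{\tt RFM}$ from \eqref{eq:de_risk_rf}. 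The first task is to evaluate $\Upsilon(\nu_1,\nu_2)$ and $\chi(\nu_2)$ in the min-norm over-parameterized limit, where (as recorded after \cref{prop:asy_equiv_norm_RFRR_minnorm}, and which one should double-check directly from the self-consistent equations \eqref{eq:def_nu}) $\nu_2\to\lambda_n$ and $\nu_1\to\lambda_n(1-n/p)$, so $\nu_1/\nu_2 = 1-n/p$. Substituting this turns $\Upsilon$ and $\chi$ into rational expressions in $p$ and in the two traces $T_1:=\operatorname{Tr}(\bm\Lambda(\bm\Lambda+\lambda_n\bm I)^{-2})$, $T_2:=\operatorname{Tr}(\bm\Lambda^2(\bm\Lambda+\lambda_n\bm I)^{-2})$.

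The second ingredient is the algebraic identity that ties these traces to $n$: splitting $\operatorname{Tr}(\bm\Lambda(\bm\Lambda+\lambda_n\bm I)^{-1}) = n$ as $T_2+\lambda_n T_1 = n$, together with the analogous identity for the signal quadratic forms, $\langle\bm\theta_*,(\bm\Lambda+\lambda_n\bm I)^{-1}\bm\theta_*\rangle = \langle\bm\theta_*,\bm\Lambda(\bm\Lambda+\lambda_n\bm I)^{-2}\bm\theta_*\rangle + \lambda_n\langle\bm\theta_*,(\bm\Lambda+\lambda_n\bm I)^{-2}\bm\theta_*\rangle$. Using $T_2+\lambda_n T_1 = n$, the factor $1-\Upsilon$ collapses to $\tfrac{(p-n)\lambda_n T_1}{n(p-T_2)}$ and $\chi=\tfrac{T_1}{p-T_2}$; this is the step where the bulk of the $p$-dependence gets organized. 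Then I would compute the two differences separately. For the variance, $\mathsf{V}_{\mathsf{R},0}^{\tt RFM}=\sigma^2\Upsilon/(1-\Upsilon)$ while $\lambda_n\mathsf{V}_{\mathsf{N},0}^{\tt RFM}=\sigma^2 p/(p-n)$; subtracting and simplifying the numerator with $T_2+\lambda_n T_1=n$, the $p$-linear terms cancel, leaving a constant proportional to $\sigma^2(2T_2-n)/(n-T_2)$. For the bias, $\mathsf{B}_{\mathsf{R},0}^{\tt RFM}=\tfrac{\lambda_n^2}{1-\Upsilon}\big[\langle\bm\theta_*,(\bm\Lambda+\lambda_n\bm I)^{-2}\bm\theta_*\rangle + \chi\langle\bm\theta_*,\bm\Lambda(\bm\Lambda+\lambda_n\bm I)^{-2}\bm\theta_*\rangle\big]$; inserting the collapsed form of $1-\Upsilon$, using $\chi(p-T_2)=T_1$ to merge the two quadratic forms, and invoking $T_2+\lambda_n T_1=n$ once more, the $p$-dependent part matches exactly $\lambda_n\mathsf{B}_{\mathsf{N},0}^{\tt RFM}$, again leaving a constant. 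Adding the two constants gives $C_{\bm\theta_*,\bm\Lambda,n,\sigma}$, and since $\lambda_n$ depends only on $n$ and $\bm\Lambda$, \eqref{eq:rfflam0} follows; because $\mathsf{N}_0^{\tt RFM}$ genuinely varies with $p$ (it diverges as $p\to n^+$), this is a nondegenerate linear relation with slope exactly $\lambda_n$.

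The main obstacle is essentially bookkeeping: ensuring that every $p$-dependent term cancels, which hinges entirely on invoking the trace identity $T_2+\lambda_n T_1=n$ (and its signal-side analogue) at precisely the right places, and on keeping the over-parameterized versus under-parameterized forms of the self-consistent equations straight. A secondary technical point is justifying the $\lambda\to0$ limit of $\nu_1$ in the over-parameterized regime — one must verify $\nu_1\to\lambda_n(1-n/p)$ (rather than $\nu_1\to0$ as under-parameterization) from \eqref{eq:def_nu}, since this is exactly what makes $\Upsilon$, and hence the slope $\lambda_n$, well-defined and $p$-dependent in the right way.
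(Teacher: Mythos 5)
Your proposal is correct and follows essentially the same route as the paper: both reduce $\mathsf{R}_0^{\tt RFM}$ and $\mathsf{N}_0^{\tt RFM}$ to expressions of the form $(\text{const}) + (\text{const})\cdot\tfrac{p}{p-n}$ (equivalently $\tfrac{n}{p-n}$) and eliminate $p$, with the trace identity $T_2+\lambda_n T_1=n$ doing the cancellation. The only cosmetic difference is that you re-derive the min-norm risk equivalents by taking $\nu_2\to\lambda_n$, $\nu_1\to\lambda_n(1-n/p)$ in the general ridge formula \eqref{eq:de_risk_rf}, whereas the paper simply cites the pre-computed min-norm equivalents (\cref{prop:asy_equiv_error_RFRR_minnorm}); your intermediate constants ($\sigma^2(2T_2-n)/(n-T_2)$ for the variance, $n\lambda_n Q_1/T_1-\lambda_n\langle\bm\theta_*,(\bm\Lambda+\lambda_n\bm I)^{-1}\bm\theta_*\rangle$ for the bias) match the paper's $C_{\bm\theta_*,\bm\Lambda,n,\sigma}$ exactly.
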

\textbf{Remark:} Characterizing the relationship between risk and norm for ridge estimators ($\lambda >0$) becomes particularly challenging. As shown in \cref{eq:def_nu}, the parameters \(p\), \(\lambda\), \(\nu_1\), and \(\nu_2\) are intricately coupled, making it extremely difficult to solve for \(\nu_1\) and \(\nu_2\)—let alone derive an explicit (even approximate) relationship between risk and norm. In the case of linear regression, a complete description of the risk-norm relationship under ridge regularization can be established, as presented in \cref{sec:linear}.

The relationship in the under-parameterized regime is also complicated as well. We consider the special case of isotropic features in \cref{prop:relation_minnorm_id_rf} and give an approximation in \cref{prop:relation_minnorm_powerlaw_rf} under the power-law assumption, given as below. 

\begin{assumption}[Power-law, \cite{defilippis2024dimension}]
\label{ass:powerlaw_rf}
    We assume that $\{ \xi_k^2\}_{k=1}^{\infty}$ in ${\bm \Lambda}$ and ${\bm \theta}_*$ satisfy
    \[
    \xi_k^2 = k^{-\alpha}, \quad \theta_{\ast, k} = k^{-\frac{1 + 2\alpha\tau}{2}}\,, \mbox{with}~\alpha > 1,~ r>0\,.
    \]
\end{assumption}

\begin{figure}[t]
    \centering
    \subfigure[$\alpha = 2.5$, $r=0.2$]{\label{fig:rff_risk_vs_norm_approx_1}
        \includegraphics[width=0.3\textwidth]{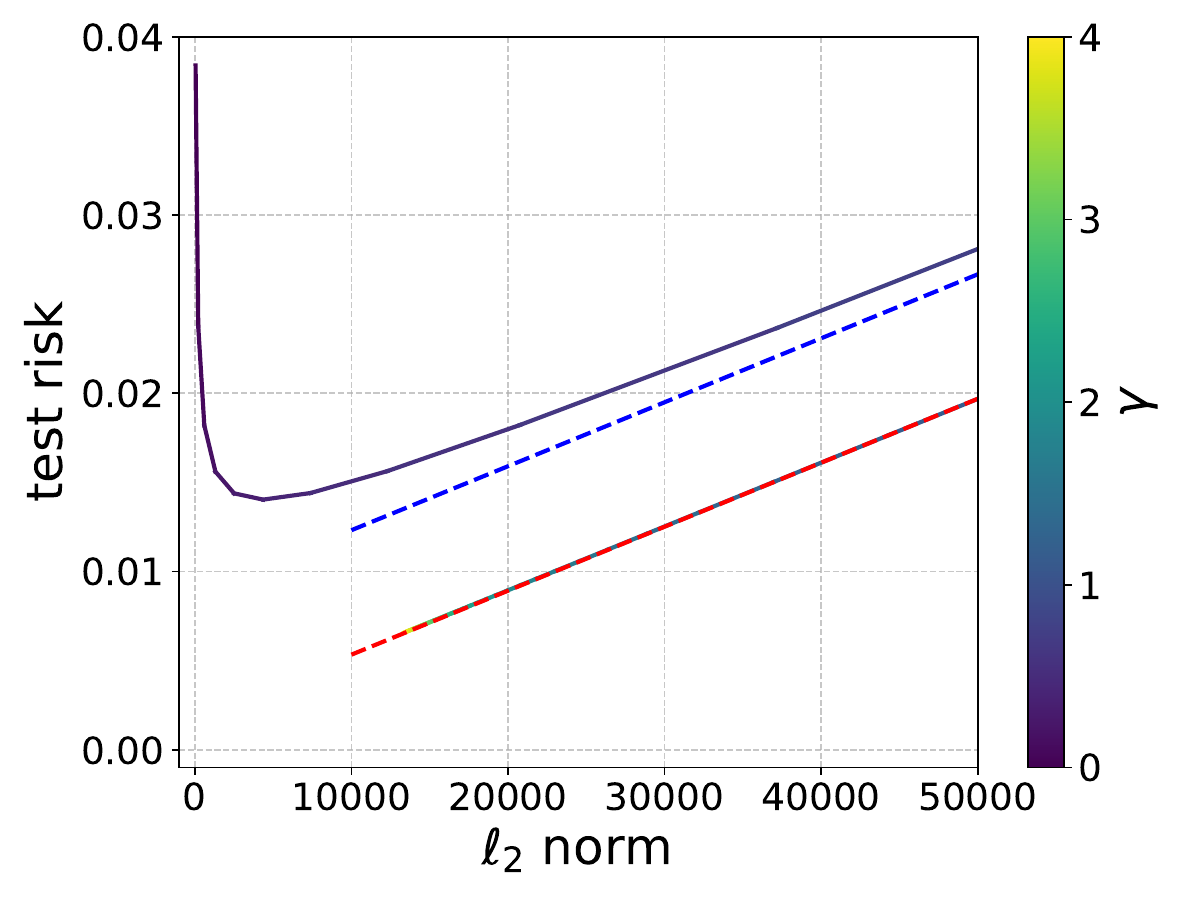}
    }
    \subfigure[$\alpha = 1.5$, $r=0.8$]{\label{fig:rff_risk_vs_norm_approx_2}
        \includegraphics[width=0.3\textwidth]{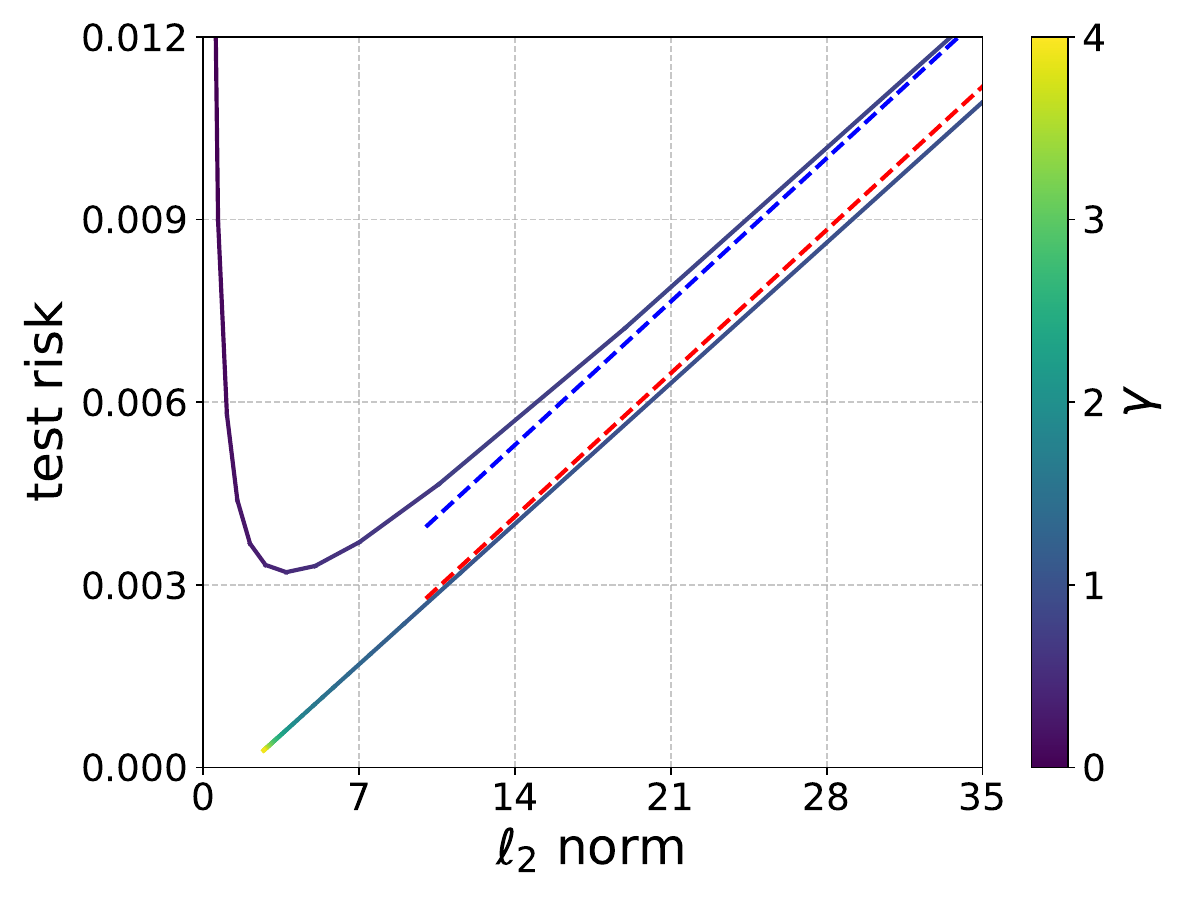}
    }
    \subfigure[Scaling law]{\label{fig:scaling_law_norm_based}
    \includegraphics[width=0.35\textwidth]{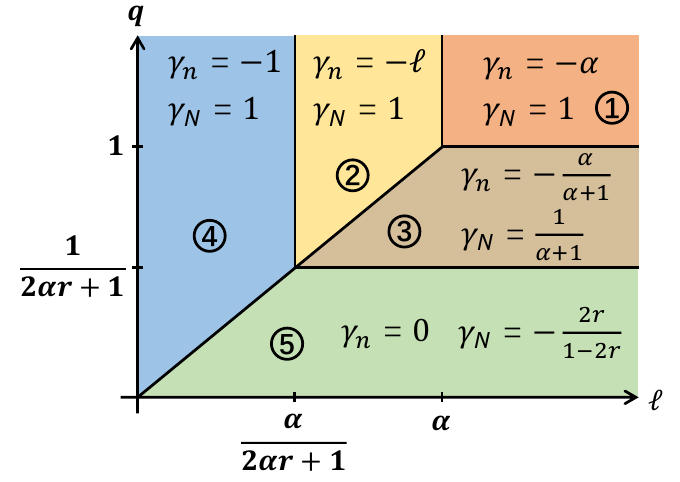} 
    }
    \caption{\textbf{\cref{fig:rff_risk_vs_norm_approx_1} and \cref{fig:rff_risk_vs_norm_approx_2}:} Validation of \cref{prop:relation_minnorm_powerlaw_rf}. The solid line represents the result of the deterministic equivalents, well approximated by the {\color{red}red dashed line} of \cref{eq:RORFM} in the over-parameterized regime, and the {\color{blue}blue dashed line} of \cref{eq:RORFM} when $p \to n$ in the under-parameterized regime. \textbf{\cref{fig:scaling_law_norm_based}:} The value of exponents $\gamma_n$ and $\gamma_{{\mathsf N}}$ in different regions (divided by $q$ and $\ell$) for $r \in (0, \frac{1}{2})$. Variance dominated region is colored by {\color{regionorange}orange}, {\color{regionyellow}yellow} and {\color{regionbrown}brown}, bias dominated region is colored by {\color{regionblue}blue} and {\color{regiongreen}green}.}
    \label{fig:random_feature_risk_vs_norm_approx}\vspace{-0.05cm}
\end{figure}

The assumption coincides with the source condition $\|{\bm \Lambda}^{-r} {\bm \theta}_*\|_2 < \infty$ ($r>0$) and capacity condition ${\rm Tr}({\bm \Lambda}^{1/\alpha}) < \infty$ ($\alpha > 1$) \citep{caponnetto2007optimal}.
Under power-law, we need to handle the self-consistent equations to approximate the infinite summation. We have the following approximation.
\begin{corollary}[Relationship for min-$\ell_2$ norm interpolator under power law]\label{prop:relation_minnorm_powerlaw_rf}
    Under \cref{ass:powerlaw_rf}, the deterministic equivalents ${\mathsf R}^{\tt RFM}_{0}$ and ${\mathsf N}^{\tt RFM}_{0}$ admit \footnote{The symbol $\approx$ here denotes using an integral to approximate an infinite sum when calculating ${\rm Tr}(\cdot)$.} the following relationship with ${\color{red}C_{n,\alpha,r,1}} <  {\color{blue}C_{n,\alpha,r,2}}$
\begin{equation}\label{eq:RORFM}
{\mathsf R}_0^{\tt RFM} \approx \left(\nicefrac{n}{C_\alpha}\right)^{-\alpha} + 
\begin{cases}
{\color{red}C_{n,\alpha,r,1}} & \text{if}~~ p>n \,, \\
{\color{blue}C_{n,\alpha,r,2}} & \text{if}~~ p \to n^-\,.
\end{cases}
\end{equation}

where \( C_{n,\alpha,r,1 (2)} \) are constants (see \cref{app:relationship_rf} for details) that only depend on $n$, $\alpha$ and $r$.
The notation $p \to n^-$ means that $p$ approaches to $n$ in the under-parameterized regime ($p<n$).
\end{corollary}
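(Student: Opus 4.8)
The plan is to specialize the closed-form deterministic equivalents already at hand --- \cref{prop:asy_equiv_norm_RFRR_minnorm} for ${\mathsf N}_0^{\tt RFM}$, \eqref{eq:de_risk_rf} for ${\mathsf R}_0^{\tt RFM}$, and the linear identity of \cref{prop:relation_minnorm_overparam} for the over-parameterized branch --- to the power-law spectrum of \cref{ass:powerlaw_rf}, and then to replace every remaining ${\bm \Lambda}$- and ${\bm \theta}_*$-dependent trace by an explicit integral approximation. First I would fix the branch and the limiting self-consistent variables: in the over-parameterized regime ($p>n$, $\lambda\to0$) one has $\nu_2\to\lambda_n$ with $\Tr({\bm \Lambda}({\bm \Lambda}+\lambda_n{\bm I})^{-1})=n$ and $\nu_1\to\lambda_n(1-n/p)$, so by \cref{prop:relation_minnorm_overparam} it suffices to evaluate $\lambda_n$, ${\mathsf N}_0^{\tt RFM}$ and the $p$-independent constant $C_{{\bm \theta}_*,{\bm \Lambda},n,\sigma}$; in the $p\to n^-$ regime one has instead $\nu_1\to0$, $\nu_2\to\lambda_p$ with $\Tr({\bm \Lambda}({\bm \Lambda}+\lambda_p{\bm I})^{-1})=p\to n$ (hence $\lambda_p\to\lambda_n$), which I substitute into the bias/variance expressions \eqref{eq:de_risk_rf} and the ${\mathsf N}_0^{\tt RFM}$ formula of \cref{prop:asy_equiv_norm_RFRR_minnorm}, being careful with the factors $p/(n-p)$ and $1/(1-\Upsilon)$ that are singular exactly at $p=n$.

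The second step is the power-law integral approximation, which is the engine of the whole argument. Everything reduces to the estimate $\Tr({\bm \Lambda}^{s}({\bm \Lambda}+\nu{\bm I})^{-t})=\sum_{k\ge1}k^{-\alpha s}(k^{-\alpha}+\nu)^{-t}\approx\nu^{\,s-t+1/\alpha}\int_0^\infty u^{-\alpha s}(u^{-\alpha}+1)^{-t}\,\mathrm{d}u$, obtained by the substitution $k=\nu^{-1/\alpha}u$, together with its ${\bm \theta}_*$-weighted analogue where $k^{-\alpha s}$ is replaced by $k^{-(1+2\alpha r)}$; each integral is a Beta-function value, finite thanks to $\alpha>1$ and the exponent constraints (this is where a bound such as $r<\tfrac12$ is used). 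In particular $\Tr({\bm \Lambda}({\bm \Lambda}+\lambda_n{\bm I})^{-1})\approx C_\alpha\lambda_n^{-1/\alpha}$ with $C_\alpha=\tfrac{\pi/\alpha}{\sin(\pi/\alpha)}$, so solving the self-consistent equation gives $\lambda_n\approx(n/C_\alpha)^{-\alpha}$ --- the term shared by both cases of \eqref{eq:RORFM}. Applying the same recipe to $\Tr({\bm \Lambda}^2({\bm \Lambda}+\nu{\bm I})^{-2})$, $\Tr({\bm \Lambda}({\bm \Lambda}+\nu{\bm I})^{-2})$ and the three inner products $\langle{\bm \theta}_*,({\bm \Lambda}+\nu{\bm I})^{-1}{\bm \theta}_*\rangle$, $\langle{\bm \theta}_*,{\bm \Lambda}({\bm \Lambda}+\nu{\bm I})^{-2}{\bm \theta}_*\rangle$, $\langle{\bm \theta}_*,({\bm \Lambda}+\nu{\bm I})^{-2}{\bm \theta}_*\rangle$ at $\nu=\lambda_n$ (resp.\ $\lambda_p$) turns $\Upsilon$, $\chi$, the bias and the variance into explicit functions of $n,\alpha,r$, and assembling them yields ${\mathsf R}_0^{\tt RFM}\approx(n/C_\alpha)^{-\alpha}+C_{n,\alpha,r,1}$ for $p>n$ and $\approx(n/C_\alpha)^{-\alpha}+C_{n,\alpha,r,2}$ for $p\to n^-$.

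For the strict inequality $C_{n,\alpha,r,1}<C_{n,\alpha,r,2}$ I would note that after the second step the two constants are the same combination of Beta-function values, differing only through the denominators $p-\Tr({\bm \Lambda}^2({\bm \Lambda}+\lambda_n{\bm I})^{-2})$ and $1-\Upsilon$, evaluated either in the $p\to\infty$ limit (branch~1) or at $p=n$ (branch~2); since these denominators are monotone in $p$, the $p\to n^-$ values are strictly larger, i.e.\ the near-threshold under-parameterized risk sits strictly above the over-parameterized plateau.

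The hard part will be the two intertwined sources of error. One must show that the sum-versus-integral remainder in each trace is genuinely of lower order than the terms retained, uniformly over the relevant window of $\nu$ (equivalently of $p$); and one must handle the $p\to n^-$ limit carefully, since there $1-\Upsilon\to0$ and $p/(n-p)\to\infty$ individually, so their products have to be combined before taking limits in order to stay finite and produce the stated constant. Pinning down the constants sharply enough to compare them, and identifying the precise admissible range of $r$ for which the integrals converge, is where essentially all of the work lies; the rest is the bookkeeping of Beta functions already sketched above.
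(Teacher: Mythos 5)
Your proposal follows essentially the same route as the paper: integral approximation of the power-law traces to get explicit constants (the paper's $C_1=\pi/(\alpha\sin(\pi/\alpha))$ is your $C_\alpha$, with $\lambda_n\approx(n/C_\alpha)^{-\alpha}$), substitution into the ridgeless deterministic equivalents, elimination of $p$ in the over-parameterized branch, and a careful combined limit of the mutually singular factors $1-\Upsilon\to0$ and $p/(n-p)\to\infty$ as $p\to n^-$ (the paper does this by subtracting the over-parameterized linear relation and applying L'H\^opital, which also yields the explicit positive gap $C_{n,\alpha,r,2}-C_{n,\alpha,r,1}$ rather than the monotonicity argument you sketch). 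The only bookkeeping you'd need to add is the case split on $r$ (the paper treats $r\in(0,\tfrac12)$, $[\tfrac12,1)$, $[1,\infty)$ separately because the $\btheta_*$-weighted integrals change scaling), which you already flag as part of the remaining work.
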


\noindent{\bf Remark:}
In the over-parameterized regime, the relationship between \({\mathsf R}_0^{\tt RFM}\) and \({\mathsf N}_0^{\tt RFM}\) is a monotonically increasing linear function, with a growth rate controlled by the factor decaying with $n$.
In the under-parameterized regime, as \(p \to n\) (which also leads to \({\mathsf R}_0^{\tt RFM}\) and \({\mathsf N}_0^{\tt RFM} \to \infty\)), \({\mathsf R}_0^{\tt RFM}\) still grows linearly w.r.t \({\mathsf N}_0^{\tt RFM}\), with the same growth rate factor decaying with $n$. Furthermore, since \(C_{n,\alpha,r,1} < C_{n,\alpha,r,2}\), the test risk curve shows that over-parameterization is better than under-parameterization.
This approximation is also empirically verified to be precise in \cref{fig:random_feature_risk_vs_norm_approx}.

To study scaling law, we follow the same setting of \cite{defilippis2024dimension} by choosing $p = n^q$ and $\lambda = n^{-(\ell-1)}$ with $q,l \geq 0$. We have the scaling law as below; see the proof in \cref{app:scaling_law}.
\begin{proposition}\label{prop:scaling_law_norm_based_capacity}
Under \cref{ass:powerlaw_rf}, for $r \in (0, \frac{1}{2})$, taking $p = n^q$ and $\lambda = n^{-(\ell-1)}$ with $q,l \geq 0$, we formulate the scaling law under norm-based capacity in different areas as 
\begin{equation*}
    {\mathsf R}_\lambda^{\tt RFM} = \Theta\left(n^{\gamma_n} \cdot \left({\mathsf N}_\lambda^{\tt RFM}\right)^{\gamma_{{\mathsf N}}}\right)\,, \quad \gamma_n \leq 0, ~~\gamma_{{\mathsf N}} \in \mathbb{R}\,,
\end{equation*}    
where the rate $\{ \gamma_n, \gamma_{{\mathsf N}} \}$ in different areas is given in \cref{fig:scaling_law_norm_based}.
\end{proposition}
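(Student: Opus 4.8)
The plan is to work entirely on the deterministic side. By \cref{prop:asy_equiv_norm_RFRR} and \cref{eq:de_risk_rf} it suffices to analyze the deterministic quantities ${\mathsf R}_\lambda^{\tt RFM}={\mathsf B}_{{\mathsf R},\lambda}^{\tt RFM}+{\mathsf V}_{{\mathsf R},\lambda}^{\tt RFM}$ and ${\mathsf N}_\lambda^{\tt RFM}={\mathsf B}_{{\mathsf N},\lambda}^{\tt RFM}+{\mathsf V}_{{\mathsf N},\lambda}^{\tt RFM}$ as functions of $\nu_1,\nu_2,p,n,\lambda,\sigma$ and $\bm\theta_*$. Under \cref{ass:powerlaw_rf} every ingredient is a trace ${\rm Tr}({\bm\Lambda}^{j}({\bm\Lambda}+\nu_2{\bm I})^{-m})$ or a quadratic form $\langle\bm\theta_*,{\bm\Lambda}^{j}({\bm\Lambda}+\nu_2{\bm I})^{-m}\bm\theta_*\rangle$; replacing the sum over $k$ by an integral (the "$\approx$" already invoked in \cref{prop:relation_minnorm_powerlaw_rf}) turns each into an explicit Beta-type integral equal to a power of $\nu_2$ up to an $(\alpha,r)$-dependent constant, and likewise $\Upsilon(\nu_1,\nu_2)$ and $\chi(\nu_2)$ become explicit in $\nu_1,\nu_2,p$. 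The hypotheses $\alpha>1$ and $r\in(0,\tfrac12)$ are exactly what guarantee these integrals converge and that their dominant contribution comes from the bulk rather than an endpoint, so the estimates are two-sided ($\Theta$, not merely $O$); in particular $r<\tfrac12$ is what lets us discard the would-be-saturated tail term in $\langle\bm\theta_*,{\bm\Lambda}^{j}({\bm\Lambda}+\nu_2{\bm I})^{-m}\bm\theta_*\rangle$.

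Next I would solve the coupled self-consistent equations \cref{eq:def_nu} with $p=n^{q}$ and $\lambda=n^{-(\ell-1)}$. Writing $\nu_2=\Theta(n^{-s})$, the common right-hand side ${\rm Tr}({\bm\Lambda}({\bm\Lambda}+\nu_2{\bm I})^{-1})=\Theta(\nu_2^{-1/\alpha})=\Theta(n^{s/\alpha})$, so the two equations become balance conditions among the powers of $n$ carried by $n$, $\lambda/\nu_1$, $p$, $p\nu_1/\nu_2$, and $n^{s/\alpha}$. According to whether $\lambda/\nu_1$ is negligible against $n$ and whether $p\nu_1/\nu_2$ is negligible against $p$, the $(q,\ell)$-plane splits into finitely many regions, in the interior of each of which $\nu_1$ and $\nu_2$ are uniquely pinned down as explicit powers of $n$; this reproduces and refines the phase diagram of \cite{defilippis2024dimension}. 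This step is the main obstacle: unlike the ridgeless case, where $\lambda\to0$ decoupled into the two clean sub-cases recorded before \cref{sec:relationship_rf}, here $p,\lambda,\nu_1,\nu_2$ are genuinely coupled (cf.\ the remark after \cref{prop:relation_minnorm_overparam}), and one must enumerate the regions carefully and check that exactly one term dominates in each equation in every region, so that the resulting exponents are sharp and consistent across the shared boundaries.

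Substituting the region-dependent powers of $\nu_1,\nu_2$ back into the integral-evaluated formulas, each of ${\mathsf B}_{{\mathsf R},\lambda}^{\tt RFM}$, ${\mathsf V}_{{\mathsf R},\lambda}^{\tt RFM}$, ${\mathsf B}_{{\mathsf N},\lambda}^{\tt RFM}$, ${\mathsf V}_{{\mathsf N},\lambda}^{\tt RFM}$ becomes $\Theta(n^{(\cdot)})$, and comparing exponents identifies which term dominates ${\mathsf R}_\lambda^{\tt RFM}$ (bias vs.\ variance — this is the coloring in \cref{fig:scaling_law_norm_based}) and which dominates ${\mathsf N}_\lambda^{\tt RFM}$ (including whether it is the "extra" first summand of ${\mathsf B}_{{\mathsf N},\lambda}^{\tt RFM}$, the one with no counterpart in ${\mathsf B}_{{\mathsf R},\lambda}^{\tt RFM}$). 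To pass from "${\mathsf R}_\lambda^{\tt RFM}=\Theta(n^{\rho})$, ${\mathsf N}_\lambda^{\tt RFM}=\Theta(n^{\mu})$" to the stated form, I would use the structural identities of \cref{prop:asy_equiv_norm_RFRR}: in a region where the variance of ${\mathsf N}$ dominates, ${\mathsf N}_\lambda^{\tt RFM}\asymp{\mathsf V}_{{\mathsf N},\lambda}^{\tt RFM}=\tfrac{p\chi(\nu_2)}{n\Upsilon(\nu_1,\nu_2)}{\mathsf V}_{{\mathsf R},\lambda}^{\tt RFM}\asymp\tfrac{p\chi(\nu_2)}{n\Upsilon(\nu_1,\nu_2)}{\mathsf R}_\lambda^{\tt RFM}$, and since the prefactor is a pure power of $n$ on that region we read off $\gamma_n$ with $\gamma_{{\mathsf N}}=1$; similarly when the second summand of ${\mathsf B}_{{\mathsf N},\lambda}^{\tt RFM}$ dominates one gets ${\mathsf R}_\lambda^{\tt RFM}\asymp\tfrac{n}{p\chi(\nu_2)}{\mathsf N}_\lambda^{\tt RFM}$, again with $\gamma_{{\mathsf N}}=1$; but in regions where the first summand of ${\mathsf B}_{{\mathsf N},\lambda}^{\tt RFM}$ dominates, ${\mathsf N}_\lambda^{\tt RFM}$ is a different power of $\nu_2$ than ${\mathsf R}_\lambda^{\tt RFM}$, and eliminating $\nu_2$ between the two produces a genuinely different exponent $\gamma_{{\mathsf N}}\neq1$ (possibly negative), which is the source of the non-monotonicity claimed in the proposition. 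A final cross-check that the $(\gamma_n,\gamma_{{\mathsf N}})$ pairs agree at the boundaries of the regions in \cref{fig:scaling_law_norm_based} completes the argument.
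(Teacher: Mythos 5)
Your strategy is essentially the paper's: the appendix proof likewise reduces everything to region-by-region power counting in $n$, using the integral approximations of the traces together with the rates $\nu_2 = \Theta\bigl(n^{-\alpha(1\wedge q\wedge \nicefrac{\ell}{\alpha})}\bigr)$, $1-\Upsilon(\nu_1,\nu_2)=\Theta(1)$ and $\chi(\nu_2)=n^{-q}\,\Theta\bigl(\nu_2^{-1-\nicefrac{1}{\alpha}}\bigr)$ (imported from \cite{defilippis2024dimension} rather than re-derived from the coupled self-consistent equations, which removes the obstacle you flag), computing $\gamma_{{\mathsf B}_{{\mathsf N},\lambda}^{\tt RFM}}$ and $\gamma_{{\mathsf V}_{{\mathsf N},\lambda}^{\tt RFM}}$, and then combining with the risk rates of \cite[Corollary 4.1]{defilippis2024dimension} in each of the five regions. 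The one caveat is that the pair $(\gamma_n,\gamma_{{\mathsf N}})$ is not unique when ${\mathsf R}_\lambda^{\tt RFM}$ and ${\mathsf N}_\lambda^{\tt RFM}$ are both pure powers of $n$: your rule ``${\mathsf N}\asymp \tfrac{p\chi}{n\Upsilon}{\mathsf R}$, hence $\gamma_{{\mathsf N}}=1$'' yields a valid but differently normalized answer in the variance-dominated region $\tfrac{1}{2\alpha r+1}<q<1$, $q<\nicefrac{\ell}{\alpha}$ (you would get $\gamma_n=-\alpha q$, $\gamma_{{\mathsf N}}=1$, whereas the paper eliminates $q$ to report $(-\nicefrac{\alpha}{\alpha+1},\nicefrac{1}{\alpha+1})$) and likewise in the small-$q$ bias region (paper: $(0,-\nicefrac{2r}{1-2r})$), so to reproduce \cref{fig:scaling_law_norm_based} verbatim you must adopt the paper's normalization rather than defaulting to $\gamma_{{\mathsf N}}=1$.
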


\noindent{\bf Remark:}
In all regions of \cref{fig:scaling_law_norm_based}, \(\gamma_{n} \leq 0\), which aligns with the classical scaling law—that increasing the number of training samples leads to a reduction in test risk. As for \(\gamma_{{\mathsf N}}\), in regions \ding{172}, \ding{173}, \ding{174}, and \ding{175}, \(\gamma_{{\mathsf N}} > 0\), indicating that when \(q\) is large (i.e., \(p\) is large), the test risk increases monotonically with the norm. In contrast, in region \ding{176}, \(\gamma_{{\mathsf N}} < 0\), meaning that when \(q\) is small (i.e., \(p\) is small), the risk decreases monotonically with the norm. This again resembles the traditional U-shaped curve. These findings highlight the dual role of model norm in generalization: while a larger norm can be beneficial in low-complexity regimes, it becomes detrimental when the model is already sufficiently complex.

\section{Conclusion and future work}

This paper derives a precise characterization of the learning curve under the $\ell_2$-norm based capacity for both linear models and RFMs. 
It implies that, with suitably chosen model capacity, the learning curve more closely follows a U-shape than a double descent, and accordingly reshapes scaling laws. One limitation may be that the studied model is relatively simple, however, deterministic equivalence on complex models requires more exploration \cite{dandi2025random}.

In future work, we will investigate the relationship between test risk and model complexity under (stochastic) gradient descent training. Leveraging recent advances in characterizing learning dynamics \cite{paquette20244+,paquette2024homogenization,bordelon2024feature}, we aim to precisely analyze the evolution of model norms and establish rigorous theoretical connections between norm dynamics and generalization behavior.
Besides, our new deterministic quantities provide a possible way to study distribution shift and out-of-distribution (OOD) \cite{patil2024optimal} with a precise estimation, which requires the deterministic equivalence of $ {\rm Tr} ({\bm A}({\bm X}^{\!\top} {\bm X} + \lambda {\bm I})^{-1}{\bm B}({\bm X}^{\!\top} {\bm X} + \lambda {\bm I})^{-1})$ for two matrices $\bm A$ and $\bm B$.

\section*{Acknowledgment}
We thank for Denny Wu's discussion on asymptotic deterministic equivalence and optimal regularization.
Y. C. was supported in part by National Science Foundation
grants CCF-2233152.
F. L. was supported by Royal Society KTP R1 241011 Kan Tong Po Visiting Fellowships and Warwick-SJTU seed fund.
L. R. acknowledges the financial support of:  the European Commission (Horizon Europe grant ELIAS 101120237), the Ministry of Education, University and Research (FARE grant ML4IP R205T7J2KP) the European Research Council (grant SLING 819789), the US Air Force Office of Scientific Research (FA8655-22-1-7034), the Ministry of Education, the grant BAC FAIR PE00000013 funded by the EU - NGEU and  the MIUR grant (PRIN 202244A7YL). This work represents only the view of the authors. The European Commission and the other organizations are not responsible for any use that may be made of the information it contains.
We thank Zulip\footnote{\url{https://zulip.com/}} for the project organization tool, and Sulis\footnote{\url{https://warwick.ac.uk/research/rtp/sc/
sulis/}} for GPU computation resources.

\appendix

\bibliography{main}
\bibliographystyle{plainnat} 

\newpage

\newpage
\enableaddcontentsline
\tableofcontents
\newpage

\section{Notations}
\label{app:notation}

\cref{table:symbols_and_notations} summarizes the notations used throughout the main text and appendices.

\begin{table}[H]
\begin{threeparttable}
\caption{Core notations used the main text and appendix.}
\label{table:symbols_and_notations}
\centering
\fontsize{7}{9}\selectfont
\begin{tabular}{c | c | c}
\toprule
Notation & Dimension(s) & Definition \\
\midrule
\(\mathcal{N}_{\lambda}^{\tt LS}\) & - & The \(\ell_2\) norm of the linear regression estimator under regularization \(\lambda\) for linear regression \\
\(\mathcal{B}_{\mathcal{N},\lambda}^{\tt LS}\) & - & The bias of \(\mathcal{N}_{\lambda}^{\tt LS}\) \\
\(\mathcal{V}_{\mathcal{N},\lambda}^{\tt LS}\) & - & The variance of \(\mathcal{N}_{\lambda}^{\tt LS}\) \\ \midrule
\({\mathsf N}_{\lambda}^{\tt LS}\) & - & The deterministic equivalent of \(\mathcal{N}_{\lambda}^{\tt LS}\) \\
\({\mathsf B}_{{\mathsf N}, \lambda}^{\tt LS}\) & - & The deterministic equivalent of \(\mathcal{B}_{\mathcal{N},\lambda}^{\tt LS}\) \\
\({\mathsf V}_{{\mathsf N}, \lambda}^{\tt LS}\) & - & The deterministic equivalent of \(\mathcal{V}_{\mathcal{N},\lambda}^{\tt LS}\) \\
\midrule
$\left \| \bm{v} \right \|_2$ & - & Euclidean norms of vectors $\bm{v}$ \\
$\left \| \bm{v} \right \|_{\bm \Sigma}$ & - & $\sqrt{{\bm v}^{\!\top} {\bm \Sigma} {\bm v}}$ \\
\midrule
$n$ & - & Number of training samples \\
$d$ & - & Dimension of the data for linear regression \\
$p$ & - & Number of features for random feature model \\
$\lambda$ & - & Regularization parameter \\
$\lambda_*$ & - & Effective regularization parameter for linear ridge regression \\
$\nu_1\,,\nu_2$ & - & Effective regularization parameters for random feature ridge regression \\
$\sigma_k({\bm M})$ & - & The $k$-th eigenvalue of ${\bm M}$ \\
\midrule
$\bm{x}$ & $\mathbb{R}^{d}$ & The data vector \\
${\bm X}$ & $\mathbb{R}^{n \times d}$ & The data matrix\\
${\bm \Sigma}$ & $\mathbb{R}^{d \times d}$ & The covariance matrix of ${\bm x}$\\
$y$ & $\mathbb{R}$ & The label \\
$\by$ & $\mathbb{R}^{n}$ & The label vector \\
${\bm \beta}_*$ & $\mathbb{R}^{d}$ & The target function for linear regression \\
$\hat{{\bm \beta}}$ & $\mathbb{R}^{d}$ & The estimator of ridge regression model \\
$\hat{{\bm \beta}}_{\min}$ & $\mathbb{R}^{d}$ & The min-$\ell_2$-norm estimator of ridge regression model \\
$\varepsilon$ & $\mathbb{R}$ & The noise \\
$\varepsilon_i$ & $\mathbb{R}$ & The $i$-th noise \\
$\bm\varepsilon$ & $\mathbb{R}^{n}$ & The noise vector \\
$\sigma^2$ & $\mathbb{R}$ & The variance of the noise\\ \midrule
${\bm w}_i$ & $\mathbb{R}^{d}$ & The $i$-th weight vector for random feature model \\ 
$\varphi(\cdot;\cdot)$ & - & Nonlinear activation function for random feature model \\
${\bm z}_i$ & $\mathbb{R}^{p}$ & The $i$-th feature for random feature model \\
${\bm Z}$ & $\mathbb{R}^{n \times p}$ & Feature matrix for random feature model \\
$\hat{{\bm a}}$ & $\mathbb{R}^{p}$ & The estimator of random feature ridge regression model\\
$\hat{{\bm a}}_{\min}$ & $\mathbb{R}^{p}$ & The min-$\ell_2$-norm estimator of random feature ridge regression model\\ \midrule
$f_*(\cdot)$ & - & The target function \\
$\mu_{\bm x}$ & - & The distribution of ${\bm x}$ \\
$\mu_{\bm w}$ & - & The distribution of ${\bm w}$ \\
$\mathbb{T}$ & - & An integral
operator defined by $(\mathbb{T}f)({\bm w}) := \int_{\mathbb{R}^d} \varphi({\bm x}; {\bm w}) f({\bm x}) \mathrm{d}\mu_{\bm x} \,,\quad \forall f \in L_2(\mu_{\bm x})$ \\
$\mathcal{V}$ & - & The image of $\mathbb{T}$\\
$\xi_k$ & $\mathbb{R}$ & The $k$-th eigenvalue of $\mathbb{T}$, defined by
$\mathbb{T} = \sum_{k=1}^\infty \xi_k \psi_k \phi_k^*$ \\
$\psi_k$ & - & The $k$-th eigenfunction of $\mathbb{T}$ in the space $L_2(\mu_{\bm x})$, defined by the decomposition
$\mathbb{T} = \sum_{k=1}^\infty \xi_k \psi_k \phi_k^*$ \\
$\phi_k$ & - & The $k$-th eigenfunction of $\mathbb{T}$ in the space $\mathcal{V}$, defined by the decomposition
$\mathbb{T} = \sum_{k=1}^\infty \xi_k \psi_k \phi_k^*$ \\
$\bm \Lambda$ & $\mathbb{R}^{\infty \times \infty}$ & The spectral matrix of $\mathbb{T}$, $\bm \Lambda = \operatorname{diag}(\xi_1^2, \xi_2^2, \ldots) \in \mathbb{R}^{\infty \times \infty}$ \\
${\bm g}_i$ & $\mathbb{R}^{\infty}$ & ${\bm g}_i := (\psi_k({\bm x}_i))_{k \geq 1}$\\
${\bm f}_i$ & $\mathbb{R}^{\infty}$ & ${\bm f}_i := (\xi_k\phi_k({\bm w}_i))_{k \geq 1}$\\
${\bm G}$ & $\mathbb{R}^{n \times \infty}$ & 
${\bm G} \!:=\! [{\bm g}_1, \ldots, {\bm g}_n]^{\!\top} \!\in\! \mathbb{R}^{n \times \infty}$ with ${\bm g}_i := (\psi_k({\bm x}_i))_{k \geq 1}$\\
${\bm F}$ & $\mathbb{R}^{p \times \infty}$ & ${\bm F} \!:=\! [{\bm f}_1, \ldots, {\bm f}_p]^{\!\top} \!\in\! \mathbb{R}^{p \times \infty}$\\
$\widehat{\bm \Lambda}_{\bm F}$ & $\mathbb{R}^{p \times p}$ & $\widehat{\bm \Lambda}_{\bm F} := {\mathbb E}_{\bm z}[{\bm z}{\bm z}^{\!\top}|{\bm F}] = \frac{1}{p}{\bm F}{\bm F}^{\!\top} \in {\mathbb R}^{p \times p}$ \\
${\bm \theta}_{*,k}$ & $\mathbb{R}$ & The coefficients associated with the eigenfunction $\psi_k$ in the expansion of $f_*({\bm x})=\sum_{k\geq1}{\bm \theta}_{*,k}\psi_k({\bm x})$ \\
${\bm \theta}_*$ & $\mathbb{R}^{\infty}$ & ${\bm \theta}_* = ({\bm \theta}_{*,k})_{k \geq 1}$ \\
\midrule
\end{tabular}
\begin{tablenotes}
    \footnotesize
    \item[1] Replacing $\mathcal{N}$ with $\mathcal{R}$ (${\mathsf N}$ with ${\mathsf R}$), we get the notations associated to the test risk.
    \item[2] Replacing $\lambda$ with $0$, we get the notations associated to the min-$\ell_2$-norm solution.
    \item[3] Replacing ${\tt LS}$ with ${\tt RFM}$, we get the notations associated to random feature regression.
\end{tablenotes}
\end{threeparttable}
\end{table}

\section{Preliminary and background}
\label{app:pre_result}

We provide an overview of the preliminary results used in this work. For self-contained completeness, we include results on asymptotic deterministic equivalence in \cref{app:pre_asy_deter_equiv}, results on ridge regression in \cref{app:pre_lr}, and results on random feature ridge regression in \cref{app:pre_rfrr}. Additionally, \cref{app:pre_non-asy_deter_equiv} presents results on non-asymptotic deterministic equivalence, along with definitions of quantities required for these results. Finally, \cref{app:pre_scaling_law} introduces key results for deriving the scaling law.

\subsection{Asymptotic deterministic equivalence}
\label{app:pre_asy_deter_equiv}

For the ease of description, we include preliminary results on asymptotic deterministic equivalence here. In fact, these assumptions and results can be recovered from non-asymptotic results, e.g., \cite{misiakiewicz2024non}.

For linear regression, the asymptotic deterministic equivalence aim to find $\mathcal{B}_{\mathcal{R},\lambda}^{\tt LS} \sim {\mathsf B}_{{\mathsf R}, \lambda}^{\tt LS}$, $\mathcal{V}_{\mathcal{R},\lambda}^{\tt LS} \sim {\mathsf V}_{{\mathsf R}, \lambda}^{\tt LS}$, where ${\mathsf B}_{{\mathsf R}, \lambda}^{\tt LS}$ and ${\mathsf V}_{{\mathsf R}, \lambda}^{\tt LS}$ are some deterministic quantities.
For asymptotic results, a series of assumptions in high-dimensional statistics via random matrix theory are required, on well-behaved data, spectral properties of $\bm{\Sigma}$ under nonlinear transformation in high-dimensional regime.
We put the assumption from \cite{bach2024high} here that are also widely used in previous literature \cite{dobriban2018high, richards2021asymptotics}. 

\begin{assumption}\cite[Well-behaved data]{bach2024high}\label{ass:asym}
    We assume that:
    \begin{itemize}
    \item[\textbf{(A1)}] The sample size $n$ and dimension $d$ grow to infinity with $\frac{d}{n} \to \gamma > 0$.
    \item[\textbf{(A2)}] ${\bm X} = {\bm T} {\bm \Sigma}^{1/2}$, where ${\bm T} \in \mathbb{R}^{n \times d}$ has i.i.d.\ sub-Gaussian entries with zero mean and unit variance.
    \item[\textbf{(A3)}] ${\bm \Sigma}$ is invertible with $\| {\bm \Sigma} \|_{\text{op}}< \infty$ and its spectral measure $ \frac{1}{d} \sum_{i=1}^d \delta_{\sigma_i} $ converges to a compactly supported probability distribution $\mu$ on $\mathbb{R}^+$.
    \item[\textbf{(A4)}] $\|{\bm \beta}_\ast\|_2 < \infty$ and the measure $ \sum_{i=1}^d ({\bm v}_i^{\!\top} {\bm \beta}_\ast)^2 \delta_{\sigma_i} $ converges to a measure $\nu$ with bounded mass, where ${\bm v}_i$ is the unit-norm eigenvector of ${\bm \Sigma}$ related to its respective eigenvalue $\sigma_i$.
    \end{itemize}
\end{assumption}

\begin{definition}[Effective regularization]
    For $n$, ${\bm \Sigma}$, and $\lambda \geq 0$, we define the \emph{effective regularization} $\lambda_*$ to be the unique non-negative solution to the self-consistent equation
\begin{equation}\label{eq:def_lambda_star_asy}
    n - \frac{\lambda}{\lambda_*} \sim {\rm Tr} ( {\bm \Sigma} ( {\bm \Sigma} + \lambda_* )^{-1} ).
\end{equation}
\end{definition}

\begin{definition}[Degrees of freedom]\label{def:df}
\[
{\rm df}_1(\lambda_*) := {\rm Tr} ( {\bm \Sigma} ( {\bm \Sigma} + \lambda_*)^{-1}), \quad {\rm df}_2(\lambda_*) := {\rm Tr} ( {\bm \Sigma}^2 ( {\bm \Sigma} + \lambda_*)^{-2}).
\]
\end{definition}

\begin{proposition}\citep[Restatement of Proposition 1]{bach2024high}\label{prop:spectral}
    Assume \textbf{(A1)}, \textbf{(A2)}, \textbf{(A3)}, we consider ${\bm A}$ and ${\bm B}$ with bounded operator norm, admitting the convergence of the empirical measures, i.e., $ \sum_{i=1}^d   {\bm v}_i^{\!\top} {\bm A} {\bm v}_i  \cdot\delta_{\sigma_i} \rightarrow \nu_A$
    and $ \sum_{i=1}^d   {\bm v}_i^{\!\top} {\bm B} {\bm v}_i  \cdot\delta_{\sigma_i} \rightarrow \nu_B$ with bounded total variation, respectively. Then, for $\lambda \geq 0$, with $\lambda_*$ satisfying Eq.~\eqref{eq:def_lambda_star_asy},
    we have the following {\bf asymptotic deterministic equivalence}
    \begin{align}
        \label{eq:trA1}
        {\rm Tr} ( {\bm A} {\bm X}^{\!\top} {\bm X} ( {\bm X}^{\!\top} {\bm X} +\lambda )^{-1} ) \sim&~ {\rm Tr} ( {\bm A} {\bm \Sigma} ( {\bm \Sigma} + \lambda_* )^{-1} )\,,
        \\
        \label{eq:trAB1}
        {\rm Tr} ( {\bm A} {\bm X}^{\!\top} {\bm X} ( {\bm X}^{\!\top} {\bm X} + \lambda )^{-1} {\bm B} {\bm X}^{\!\top} {\bm X} ( {\bm X}^{\!\top} {\bm X} + \lambda )^{-1}) \sim&~ {\rm Tr} ( {\bm A} {\bm \Sigma} ( {\bm \Sigma} + \lambda_* )^{-1} {\bm B} {\bm \Sigma} ( {\bm \Sigma} + \lambda_* )^{-1} ) \nonumber \\
        + \lambda_*^2 {\rm Tr} ( {\bm A} ( {\bm \Sigma} + \lambda_* )^{-2}  {\bm \Sigma} ) &\cdot {\rm Tr} ( {\bm B} ( {\bm \Sigma} + \lambda_* )^{-2} {\bm \Sigma} ) \cdot \frac{1}{ n -  {\rm df}_2(\lambda_*) }\,,\\
        \label{eq:trA2}
        {\rm Tr} ( {\bm A} ( {\bm X}^{\!\top} {\bm X} +\lambda )^{-1} ) \sim&~ \frac{\lambda_*}{\lambda} {\rm Tr} ( {\bm A} ( {\bm \Sigma} + \lambda_* )^{-1} )\,,
        \\
        \label{eq:trAB2}
        {\rm Tr} ( {\bm A} ( {\bm X}^{\!\top} {\bm X} + \lambda )^{-1} {\bm B} ( {\bm X}^{\!\top} {\bm X} + \lambda )^{-1}) \sim&~ \frac{\lambda_*^2}{\lambda^2} {\rm Tr} ( {\bm A} ( {\bm \Sigma} + \lambda_* )^{-1} {\bm B} ( {\bm \Sigma} + \lambda_* )^{-1} ) \nonumber \\
        + \frac{\lambda_*^2}{\lambda^2} {\rm Tr} ( {\bm A} ( {\bm \Sigma} + \lambda_* )^{-2}  {\bm \Sigma} ) &\cdot {\rm Tr} ( {\bm B} ( {\bm \Sigma} + \lambda_* )^{-2} {\bm \Sigma} ) \cdot \frac{1}{ n -  {\rm df}_2(\lambda_*) }\,.
    \end{align}
\end{proposition}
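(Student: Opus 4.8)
The plan is to reduce all four equivalences to the anisotropic deterministic equivalent for the single resolvent $\bm R(\lambda) := (\bm X^{\!\top}\bm X + \lambda\bm I_d)^{-1}$, the classical object of Marchenko--Pastur / Bai--Silverstein theory. First I would write $\bm X^{\!\top}\bm X = \sum_{i=1}^n \bm x_i\bm x_i^{\!\top}$ with $\bm x_i = \bm\Sigma^{1/2}\bm t_i$, $\bm t_i$ the rows of $\bm T$, and run a leave-one-out (cavity) argument: Sherman--Morrison to peel off row $i$, followed by concentration of the quadratic forms $\bm t_i^{\!\top}\bm M\bm t_i \approx \Tr(\bm M)$, valid under the sub-Gaussian assumption \textbf{(A2)}. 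This yields a scalar fixed-point relation whose solution is exactly $\lambda_*$ of \eqref{eq:def_lambda_star_asy}, together with the anisotropic statement $\Tr(\bm A\bm R(\lambda)) = \tfrac{\lambda_*}{\lambda}\Tr(\bm A(\bm\Sigma+\lambda_*)^{-1}) + o(1)$ for deterministic $\bm A$ of bounded operator norm; assumption \textbf{(A3)} and the convergence of $\nu_A$ ensure the traces converge and that $\lambda_*$ is well defined and stable. This is precisely \eqref{eq:trA2}, and \eqref{eq:trA1} follows from the algebraic identity $\bm X^{\!\top}\bm X\,\bm R(\lambda) = \bm I_d - \lambda\bm R(\lambda)$, since $\Tr(\bm A) - \lambda_*\Tr(\bm A(\bm\Sigma+\lambda_*)^{-1}) = \Tr(\bm A\bm\Sigma(\bm\Sigma+\lambda_*)^{-1})$.

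For the two-resolvent functionals I would use the differentiation identity $\bm R(\lambda)\bm B\bm R(\lambda) = \partial_t|_{t=0}(\bm X^{\!\top}\bm X + \lambda\bm I_d - t\bm B)^{-1}$ and apply the first-order equivalent to the perturbed ensemble, propagating the perturbation $t\bm B$ through the self-consistent equation. Differentiating at $t=0$ produces the naive term $\tfrac{\lambda_*^2}{\lambda^2}\Tr(\bm A(\bm\Sigma+\lambda_*)^{-1}\bm B(\bm\Sigma+\lambda_*)^{-1})$ plus a rank-one correction coming from the fact that $\lambda_*$ itself responds to the perturbation: differentiating the fixed-point relation and solving for $\partial_t\lambda_*$ produces exactly the denominator $n - \Tr(\bm\Sigma^2(\bm\Sigma+\lambda_*)^{-2}) = n - {\rm df}_2(\lambda_*)$ — the derivative of the self-consistent map at its fixed point, cf.\ \cref{def:df} — while the numerator factorizes into the two source traces $\lambda_*^2\Tr(\bm A(\bm\Sigma+\lambda_*)^{-2}\bm\Sigma)$ and $\Tr(\bm B(\bm\Sigma+\lambda_*)^{-2}\bm\Sigma)$. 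This gives \eqref{eq:trAB2}, and \eqref{eq:trAB1} then follows by substituting $\bm X^{\!\top}\bm X\bm R = \bm I_d - \lambda\bm R$ in both slots: the identity and single-resolvent pieces reduce via the previous step, and the double-resolvent piece carries the correction through, the $\lambda_*^2$ prefactors being absorbed into the stated $\bm\Sigma(\bm\Sigma+\lambda_*)^{-1}$ form after collecting terms. Equivalently one can re-run the cavity expansion to second order and identify the correction with the limiting variance of the linear statistic $\Tr(\bm A\bm\Sigma\bm R)$.

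The hard part will be the rigorous bookkeeping in the two-resolvent step: showing that the only surviving subleading contribution is the stated rank-one term (no cross terms, and no contribution from the off-diagonal fluctuations of $\bm t_i^{\!\top}\bm M\bm t_i$ beyond their variance), and justifying the interchange of the $t$-derivative with the $n\to\infty$ limit, which requires error control uniform in $t$ in a neighborhood of $0$ and a matrix-valued generalization of the first-order equivalent that allows the shift $\lambda\bm I_d - t\bm B$. A secondary nuisance is the regime $\lambda\to 0$ with $\lambda_*/\lambda = \Theta(1)$ (the over-parametrized case), but this is harmless since all four identities are already phrased in terms of the bounded ratios $\lambda_*/\lambda$ and $\lambda_*^2/\lambda^2$. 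Since the statement is a verbatim restatement of Proposition~1 of \cite{bach2024high}, the cleanest route is to invoke that proof directly (see also the closely related derivations in \cite{hastie2022surprises,cheng2022dimension,misiakiewicz2024non}); the only work genuinely needed here is to fix the unnormalized-Gram normalization and to align $\lambda_*$ with \eqref{eq:def_lambda_star_asy} and ${\rm df}_1,{\rm df}_2$ of \cref{def:df}.
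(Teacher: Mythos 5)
Your proposal is consistent with how the paper handles this statement: the proposition is presented purely as a restatement of Proposition~1 of \cite{bach2024high}, with no proof given in the paper beyond the remark in \cref{app:proof_rf} that the sub-Gaussian assumption can be replaced by the Hanson--Wright-type concentration condition because the key step in the original proof is the almost-sure negligibility of a leave-one-out error term --- exactly the cavity/Sherman--Morrison argument you sketch. Your outline of the underlying derivation (single-resolvent equivalent via leave-one-out, the identity $\bm X^{\!\top}\bm X\,\bm R = \bm I - \lambda \bm R$ for \eqref{eq:trA1}, and the perturbation/derivative trick producing the $n - {\rm df}_2(\lambda_*)$ correction for the two-resolvent functionals) is correct and matches the standard route taken in \cite{bach2024high}, so invoking that proof directly, as you conclude, is precisely what the paper does.
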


\begin{proposition}\citep[Restatement of Proposition 2]{bach2024high}
\label{prop:spectralK}
Assume \textbf{(A1)}, \textbf{(A2)}, \textbf{(A3)}, we consider ${\bm A}$ and ${\bm B}$ with bounded operator norm, admitting the convergence of the empirical measures, i.e., $ \sum_{i=1}^d   {\bm v}_i^{\!\top} {\bm A} {\bm v}_i  \cdot\delta_{\sigma_i} \rightarrow \nu_A$ and $ \sum_{i=1}^d   {\bm v}_i^{\!\top} {\bm B} {\bm v}_i  \cdot\delta_{\sigma_i} \rightarrow \nu_B$ with bounded total variation, respectively. Then, for $\lambda \in \mathbb{C} {\bm a}ckslash \mathbb{R}_+$, with $\lambda_*$ satisfying Eq.~\eqref{eq:def_lambda_star_asy}, we have the following {\bf asymptotic deterministic equivalence}
\begin{align}
\label{eq:trA1K}
{\rm Tr} ( {\bm A} {\bm T}^{\!\top} ( {\bm T} {\bm \Sigma} {\bm T}^{\!\top} + \lambda )^{-1} {\bm T}) \sim&~ {\rm Tr} ( {\bm A} ( {\bm \Sigma} + \lambda_* )^{-1} ),
\\
\label{eq:trAB1K}
{\rm Tr} ( {\bm A} {\bm T}^{\!\top} ( {\bm T} {\bm \Sigma} {\bm T}^{\!\top} + \lambda )^{-1} {\bm T} {\bm B} {\bm T}^{\!\top} ( {\bm T} {\bm \Sigma} {\bm T}^{\!\top} + \lambda )^{-1} {\bm T}) \nonumber \sim&~ {\rm Tr} ( {\bm A} ( {\bm \Sigma} + \lambda_* )^{-1} {\bm B} ( {\bm \Sigma} + \lambda_* )^{-1} )\\
+ \lambda_*^2 {\rm Tr} ( {\bm A} ( {\bm \Sigma} + \lambda_* )^{-2} )&~ \cdot {\rm Tr} ( {\bm B} ( {\bm \Sigma} + \lambda_* )^{-2} ) \cdot \frac{1}{ n -  {\rm df}_2(\lambda_*) }\,.
\end{align}
\end{proposition}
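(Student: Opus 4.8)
The plan is to reduce both sample-side (kernel) equivalences to the feature-side equivalences of \cref{prop:spectral}, which are already available, through an exact resolvent (push-through) identity, so that no new random matrix theory need be reinvented. Writing ${\bm T} = {\bm X}{\bm \Sigma}^{-1/2}$ from ${\bm X} = {\bm T}{\bm \Sigma}^{1/2}$ and using the standard push-through relation ${\bm X}^{\!\top}({\bm X}{\bm X}^{\!\top}+\lambda{\bm I})^{-1}{\bm X} = {\bm X}^{\!\top}{\bm X}({\bm X}^{\!\top}{\bm X}+\lambda{\bm I})^{-1} = {\bm I} - \lambda({\bm X}^{\!\top}{\bm X}+\lambda{\bm I})^{-1}$, I first establish the central operator identity ${\bm T}^{\!\top}({\bm T}{\bm \Sigma}{\bm T}^{\!\top}+\lambda)^{-1}{\bm T} = {\bm \Sigma}^{-1} - \lambda{\bm \Sigma}^{-1/2}({\bm X}^{\!\top}{\bm X}+\lambda)^{-1}{\bm \Sigma}^{-1/2}$, valid for any ${\bm \Sigma}\succ 0$ since ${\bm T}{\bm \Sigma}{\bm T}^{\!\top}={\bm X}{\bm X}^{\!\top}$. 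This turns every $n\times n$ sample-side resolvent in \eqref{eq:trA1K}--\eqref{eq:trAB1K} into the $d\times d$ feature-side resolvent $({\bm X}^{\!\top}{\bm X}+\lambda)^{-1}$ conjugated by ${\bm \Sigma}^{-1/2}$.

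For the first-order identity \eqref{eq:trA1K}, I take the trace against ${\bm A}$ and apply \eqref{eq:trA2} with test matrix ${\bm \Sigma}^{-1/2}{\bm A}{\bm \Sigma}^{-1/2}$, giving $\lambda\,{\rm Tr}({\bm A}{\bm \Sigma}^{-1/2}({\bm X}^{\!\top}{\bm X}+\lambda)^{-1}{\bm \Sigma}^{-1/2}) \sim \lambda_*\,{\rm Tr}({\bm A}{\bm \Sigma}^{-1}({\bm \Sigma}+\lambda_*)^{-1})$. Subtracting this from ${\rm Tr}({\bm A}{\bm \Sigma}^{-1})$ and using the algebraic collapse ${\bm \Sigma}^{-1} - \lambda_*{\bm \Sigma}^{-1}({\bm \Sigma}+\lambda_*)^{-1} = ({\bm \Sigma}+\lambda_*)^{-1}$ yields exactly ${\rm Tr}({\bm A}({\bm \Sigma}+\lambda_*)^{-1})$, as claimed.

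For the second-order identity \eqref{eq:trAB1K}, I set ${\bm M} := {\bm \Sigma}^{-1} - \lambda{\bm \Sigma}^{-1/2}({\bm X}^{\!\top}{\bm X}+\lambda)^{-1}{\bm \Sigma}^{-1/2}$ and expand ${\rm Tr}({\bm A}{\bm M}{\bm B}{\bm M})$ into four terms: a purely deterministic term ${\rm Tr}({\bm A}{\bm \Sigma}^{-1}{\bm B}{\bm \Sigma}^{-1})$, two cross terms (equal after symmetrizing ${\bm A},{\bm B}$, by cyclicity and transpose-invariance of the trace) handled by the single-resolvent equivalence \eqref{eq:trA2}, and one double-resolvent term handled by \eqref{eq:trAB2} with test matrices ${\bm \Sigma}^{-1/2}{\bm A}{\bm \Sigma}^{-1/2}$ and ${\bm \Sigma}^{-1/2}{\bm B}{\bm \Sigma}^{-1/2}$. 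Two simplifications finish the computation: the principal pieces reassemble as ${\rm Tr}({\bm A}{\bm P}{\bm B}{\bm P})$ with ${\bm P} := {\bm \Sigma}^{-1} - \lambda_*{\bm \Sigma}^{-1}({\bm \Sigma}+\lambda_*)^{-1} = ({\bm \Sigma}+\lambda_*)^{-1}$, producing the target leading term ${\rm Tr}({\bm A}({\bm \Sigma}+\lambda_*)^{-1}{\bm B}({\bm \Sigma}+\lambda_*)^{-1})$; and the rank-one correction factor simplifies because ${\rm Tr}({\bm \Sigma}^{-1/2}{\bm A}{\bm \Sigma}^{-1/2}({\bm \Sigma}+\lambda_*)^{-2}{\bm \Sigma}) = {\rm Tr}({\bm A}({\bm \Sigma}+\lambda_*)^{-2})$, recovering precisely $\lambda_*^2\,{\rm Tr}({\bm A}({\bm \Sigma}+\lambda_*)^{-2})\cdot{\rm Tr}({\bm B}({\bm \Sigma}+\lambda_*)^{-2})/(n-{\rm df}_2(\lambda_*))$.

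The main obstacle is the \emph{legitimacy} of the reduction, not the algebra: invoking \cref{prop:spectral} with the conjugated matrices ${\bm \Sigma}^{-1/2}{\bm A}{\bm \Sigma}^{-1/2}$ requires them to have bounded operator norm and convergent weighted spectral measures, which needs the eigenvalues of ${\bm \Sigma}$ to stay uniformly bounded away from zero. Under (A3) the limiting spectral measure is compactly supported in $(0,\infty)$, but one must still exclude vanishing finite-$d$ eigenvalues, e.g.\ by assuming $\liminf \sigma_{\min}({\bm \Sigma}) > 0$ or by a truncation/continuity argument. I would verify this step carefully, and as a self-contained fallback note the alternative route that establishes \eqref{eq:trA1K}--\eqref{eq:trAB1K} directly on the $n\times n$ side via the companion Stieltjes transform together with a leave-one-out (cavity) argument and Hanson--Wright concentration, which avoids ${\bm \Sigma}^{-1}$ entirely but duplicates the machinery underlying \cref{prop:spectral}.
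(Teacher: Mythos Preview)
The paper does not supply its own proof of this proposition: it is explicitly labeled a ``Restatement of Proposition 2'' from \cite{bach2024high} and is invoked as a black box throughout (e.g.\ in the proofs of \cref{prop:asy_equiv_norm_RFRR}, \cref{prop:asy_equiv_norm_RFRR_minnorm}, \cref{prop:asy_equiv_error_RFRR_minnorm}). The only related discussion is at the start of \cref{app:proof_rf}, where the authors argue that the sub-Gaussian hypothesis \textbf{(A2)} can be replaced by the Hanson--Wright concentration of \cref{ass:concentrated_LR}; this is a remark about assumptions, not a proof of the equivalences themselves.

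Your push-through reduction is therefore a genuinely different route from what the paper (and presumably \cite{bach2024high}) does. The algebra is correct: the identity ${\bm T}^{\!\top}({\bm T}{\bm \Sigma}{\bm T}^{\!\top}+\lambda)^{-1}{\bm T}={\bm \Sigma}^{-1}-\lambda{\bm \Sigma}^{-1/2}({\bm X}^{\!\top}{\bm X}+\lambda)^{-1}{\bm \Sigma}^{-1/2}$ holds, the single-resolvent collapse ${\bm \Sigma}^{-1}-\lambda_*{\bm \Sigma}^{-1}({\bm \Sigma}+\lambda_*)^{-1}=({\bm \Sigma}+\lambda_*)^{-1}$ gives \eqref{eq:trA1K}, and the four-term expansion of ${\rm Tr}({\bm A}{\bm M}{\bm B}{\bm M})$ reassembles exactly into the two pieces of \eqref{eq:trAB1K}. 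You also correctly flag the one real obstruction: invoking \cref{prop:spectral} with test matrices ${\bm \Sigma}^{-1/2}{\bm A}{\bm \Sigma}^{-1/2}$ (and the mixed ones ${\bm \Sigma}^{-1/2}{\bm B}{\bm \Sigma}^{-1}{\bm A}{\bm \Sigma}^{-1/2}$ appearing in the cross terms) requires bounded operator norm, hence $\liminf\sigma_{\min}({\bm \Sigma})>0$. There is a second, related subtlety worth making explicit: since ``$\sim$'' in this paper means ratio convergence, passing from $Y\sim b$ to $a-Y\sim a-b$ needs $b=O(a-b)$; in your first-order step this reads $\lambda_*{\rm Tr}({\bm A}{\bm \Sigma}^{-1}({\bm \Sigma}+\lambda_*)^{-1})=O({\rm Tr}({\bm A}({\bm \Sigma}+\lambda_*)^{-1}))$, which again follows once $\|{\bm \Sigma}^{-1}\|_{\rm op}$ is bounded. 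So the entire reduction is clean under the extra hypothesis $\sigma_{\min}({\bm \Sigma})\ge c>0$, and your fallback (the direct $n\times n$-side cavity/leave-one-out argument, which is what \cite{bach2024high} actually does) is the right way to avoid it.
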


Note that the results in \cref{prop:spectral}, \ref{prop:spectralK} still hold even for the random features model.
We will explain this in details in \cref{app:proof_rf}.

\subsection{Deterministic equivalence for ridge regression}
\label{app:pre_lr}

We consider $n$ samples $\{ {\bm x}_i \}_{i=1}^n$ sampled i.i.d.\ from a distribution $\mu_{\bm x}$ over $\mathbb{R}^d$ with covariance matrix $\bm{\Sigma} := \mathbb{E}[{\bm x} {\bm x}^{\!\top}] \in \mathbb{R}^{d \times d}$.
The label $y_i$ is generated by a linear target function parameterized by $\bm{\beta}_\ast \in \mathbb{R}^d$, i.e., $y_i = {\bm x}_i^{\!\top} \bm{\beta}_\ast + \varepsilon_i$, where $\varepsilon_i$ is additive noise independent of ${\bm x}_i$ satisfying $\mathbb{E}[\varepsilon_i] = 0$ and $\text{var}(\varepsilon_i) = \sigma^2$.
We can write the model in a compact form as ${\bm y} = {\bm X} \bm{\beta}_\ast + \bm{\varepsilon}$, where the data matrix as ${\bm X} \in \mathbb{R}^{n \times d}$, the label vector ${\bm y} \in \mathbb{R}^n$, and the noise vector as $\bm \varepsilon \in \mathbb{R}^n$.
The estimator of ridge regression is given by $\hat{\bm{\beta}} = \left( {\bm X}^{\!\top} {\bm X} + \lambda {\bm I} \right)^{-1} {\bm X}^{\!\top} {\bm y}$.
We also consider min-$\ell_2$-norm solution in the over-parameterized regime, i.e., $\hat{\bm{\beta}}_{\min} = \argmin_{\bm{\beta}} \| \bm{\beta} \|_2, \text{s.t. } {\bm X} \bm{\beta} = {\bm y} $.
The excess risk of $\hat{\bm{\beta}}$ admits a bias-variance decomposition
\[
    \mathcal{R}^{\tt LS} := \mathbb{E}_{\varepsilon}\|\bm{\beta}_* - \hat{\bm{\beta}}\|_{\bm{\Sigma}}^2 \!=\! {\color{dred}\|\bm{\beta}_* - \mathbb{E}_{\varepsilon}[\hat{\bm{\beta}}]\|_{\bm{\Sigma}}^2} +  {\color{dblue}{\rm Tr}(\bm{\Sigma} \mathrm{Cov}_{\varepsilon}(\hat{\bm{\beta}}))}\,,
\]
where the first RHS term is the {\color{dred}\emph{bias}}, denoted by  $\mathcal{B}^{\tt LS}_{\mathcal{R},\lambda}$, and the second term is the {\color{dblue}\emph{variance}}, denoted by $\mathcal{V}^{\tt LS}_{\mathcal{R},\lambda}$. Accordingly, the bias-variance decomposition is given by
\begin{align}
    \mathcal{B}_{\mathcal{R},\lambda}^{\tt LS} :=&~ \|{\bm \beta}_* - \mathbb{E}_{\varepsilon}[\hat{{\bm \beta}}]\|_{{\bm \Sigma}}^2 = \lambda^2 \langle {\bm \beta}_*,({\bm X}^{\!\top} {\bm X} + \lambda{\bm I})^{-1} {\bm \Sigma} ({\bm X}^{\!\top} {\bm X} + \lambda{\bm I})^{-1} {\bm \beta}_* \rangle\,,\label{eq:lr_risk_bias}\\
    \mathcal{V}_{\mathcal{R},\lambda}^{\tt LS} :=&~ {\rm Tr}\left({\bm \Sigma} \mathrm{Cov}_{\varepsilon}(\hat{{\bm \beta}})\right) = \sigma^2{\rm Tr}({\bm \Sigma} {\bm X}^{\!\top} {\bm X} ({\bm X}^{\!\top} {\bm X} + \lambda{\bm I})^{-2})\,.\label{eq:lr_risk_variance}
\end{align}

Under proper assumptions (to be detailed later), we have the following deterministic equivalents, asymptotically \citep{bach2024high} and non-asymptotically \citep{cheng2022dimension}
{\small
\begin{equation}\label{eq:de_risk}
       \mathcal{B}^{\tt LS}_{\mathcal{R},\lambda} \!\sim\! \sB^{\tt LS}_{\sR,\lambda} \!:=\! \frac{\lambda_*^2\<\bbeta_*,\bSigma(\bSigma+\lambda_*\id)^{-2}\bbeta_*\>}{1-n^{-1}\Tr(\bSigma^2(\bSigma+\lambda_*\id)^{-2})}\,, \quad \mathcal{V}^{\tt LS}_{\mathcal{R},\lambda} \!\sim\!  \sV^{\tt LS}_{\sR,\lambda} \!:=\! \frac{\sigma^2\Tr(\bSigma^2(\bSigma+\lambda_*\id)^{-2})}{n-\Tr(\bSigma^2(\bSigma+\lambda_*\id)^{-2})}\,,
\end{equation}
}
where $\lambda_*$ is the non-negative solution to the self-consistent equation $n - \frac{\lambda}{\lambda_*} = \Tr ( \bSigma ( \bSigma + \lambda_*\id )^{-1} )$.

Accordingly, the risk admits the following deterministic equivalents via bias-variance decomposition.
\begin{proposition}\citep[Restatement of Proposition 3]{bach2024high}\label{prop:asy_equiv_risk_LR}
    Given the bias variance decomposition in \cref{eq:lr_risk_bias} and \cref{eq:lr_risk_variance}, \({\bm X}\), \({\bm \Sigma}\) and \({\bm \beta}_*\) satisfy \cref{ass:asym}, we have the following asymptotic deterministic equivalents $\mathcal{R}_{\lambda}^{\tt LS}  \sim {\mathsf R}_{\lambda}^{\tt LS} := {\mathsf B}_{{\mathsf R},\lambda}^{\tt LS} + {\mathsf V}_{{\mathsf R},\lambda}^{\tt LS}$ such that $\mathcal{B}^{\tt LS}_{\mathcal{R},\lambda} \sim {\mathsf B}_{{\mathsf R},\lambda}^{\tt LS}$, $\mathcal{V}^{\tt LS}_{\mathcal{R},\lambda} \sim {\mathsf V}_{{\mathsf R},\lambda}^{\tt LS}$, where ${\mathsf B}_{{\mathsf R},\lambda}^{\tt LS}$ and ${\mathsf V}_{{\mathsf R},\lambda}^{\tt LS}$ are defined by \cref{eq:de_risk}.
\end{proposition}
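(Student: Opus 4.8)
The plan is to reduce both $\mathcal{B}^{\tt LS}_{\mathcal{R},\lambda}$ and $\mathcal{V}^{\tt LS}_{\mathcal{R},\lambda}$ to trace functionals of the resolvent $({\bm X}^{\!\top}{\bm X}+\lambda{\bm I})^{-1}$ and then apply the trace deterministic equivalences of \cref{prop:spectral}. For the bias, I would rewrite \cref{eq:lr_risk_bias} as $\mathcal{B}^{\tt LS}_{\mathcal{R},\lambda} = \lambda^2 {\rm Tr}\left({\bm \beta}_*{\bm \beta}_*^{\!\top}\,({\bm X}^{\!\top}{\bm X}+\lambda{\bm I})^{-1}\,{\bm \Sigma}\,({\bm X}^{\!\top}{\bm X}+\lambda{\bm I})^{-1}\right)$ and invoke \cref{eq:trAB2} with ${\bm A}={\bm \beta}_*{\bm \beta}_*^{\!\top}$ and ${\bm B}={\bm \Sigma}$: assumption \textbf{(A4)} supplies exactly the weighted spectral measure convergence needed for ${\bm A}$, and \textbf{(A3)} gives $\|{\bm \Sigma}\|_\op<\infty$ for ${\bm B}$. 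Since ${\bm \Sigma}$ commutes with $({\bm \Sigma}+\lambda_*{\bm I})^{-1}$, the two deterministic trace terms produced by \cref{eq:trAB2} both collapse to $\langle{\bm \beta}_*,{\bm \Sigma}({\bm \Sigma}+\lambda_*{\bm I})^{-2}{\bm \beta}_*\rangle$, the prefactor ${\rm Tr}({\bm \Sigma}^2({\bm \Sigma}+\lambda_*{\bm I})^{-2})$ equals ${\rm df}_2(\lambda_*)$ from \cref{def:df}, the outer $\lambda^2$ cancels the $\lambda_*^2/\lambda^2$, and $1+{\rm df}_2/(n-{\rm df}_2)=1/(1-n^{-1}{\rm df}_2)$; this reproduces $\sB^{\tt LS}_{\sR,\lambda}$ of \cref{eq:de_risk}.

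For the variance, I would apply the resolvent identity ${\bm X}^{\!\top}{\bm X}({\bm X}^{\!\top}{\bm X}+\lambda{\bm I})^{-2} = ({\bm X}^{\!\top}{\bm X}+\lambda{\bm I})^{-1}-\lambda({\bm X}^{\!\top}{\bm X}+\lambda{\bm I})^{-2}$ so that $\sigma^{-2}\mathcal{V}^{\tt LS}_{\mathcal{R},\lambda} = {\rm Tr}({\bm \Sigma}({\bm X}^{\!\top}{\bm X}+\lambda{\bm I})^{-1})-\lambda\,{\rm Tr}({\bm \Sigma}({\bm X}^{\!\top}{\bm X}+\lambda{\bm I})^{-2})$. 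By \cref{eq:trA2} with ${\bm A}={\bm \Sigma}$ the first term is equivalent to $(\lambda_*/\lambda)\,{\rm df}_1(\lambda_*)$; by \cref{eq:trAB2} with ${\bm A}={\bm \Sigma}$, ${\bm B}={\bm I}$ the second is equivalent to $(\lambda_*^2/\lambda^2)\,{\rm Tr}({\bm \Sigma}({\bm \Sigma}+\lambda_*{\bm I})^{-2})\cdot n/(n-{\rm df}_2(\lambda_*))$. Using the elementary identity ${\rm df}_1(\lambda_*)-{\rm df}_2(\lambda_*) = \lambda_*\,{\rm Tr}({\bm \Sigma}({\bm \Sigma}+\lambda_*{\bm I})^{-2})$ and the self-consistent relation $\lambda/\lambda_* = n-{\rm df}_1(\lambda_*)$ from \cref{eq:def_lambda_star_asy}, the difference reduces to $(\lambda_*/\lambda)\big[{\rm df}_1 - \tfrac{n}{n-{\rm df}_2}({\rm df}_1-{\rm df}_2)\big] = \tfrac{1}{n-{\rm df}_1}\cdot\tfrac{{\rm df}_2(n-{\rm df}_1)}{n-{\rm df}_2} = \tfrac{{\rm df}_2(\lambda_*)}{n-{\rm df}_2(\lambda_*)}$, i.e.\ $\mathcal{V}^{\tt LS}_{\mathcal{R},\lambda}\sim \sV^{\tt LS}_{\sR,\lambda}$. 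Since $\sim$ is stable under finite sums and both bias and variance are nonnegative, adding the two equivalences gives $\mathcal{R}^{\tt LS}_{\lambda} \sim {\mathsf B}^{\tt LS}_{{\mathsf R},\lambda} + {\mathsf V}^{\tt LS}_{{\mathsf R},\lambda} = {\mathsf R}^{\tt LS}_{\lambda}$; the ridgeless case $\lambda=0$ is recovered by letting $\lambda\to0^+$ and tracking $\lambda_*$ through \cref{eq:def_lambda_star_asy}.

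The argument is essentially bookkeeping once \cref{prop:spectral} is in hand, so the substantive work is organizing the ``fluctuation'' corrections — the $1/(n-{\rm df}_2)$ terms coming from the two-resolvent equivalence \cref{eq:trAB2} — so that they recombine into the clean $1-n^{-1}{\rm df}_2$ and $n-{\rm df}_2$ denominators, which is precisely where the self-consistent equation and the ${\rm df}_1-{\rm df}_2$ identity enter, and in tracking the powers of $\lambda$ exactly. The one genuinely delicate point is justifying \cref{eq:trAB2} with the inner matrix taken to be ${\bm \Sigma}$ itself, whose weighted spectral measure $\sum_i \sigma_i\,\delta_{\sigma_i}$ is not a priori of bounded total mass; this is handled by truncating the spectrum of ${\bm \Sigma}$ at a large constant (legitimate by the compact support in \textbf{(A3)}), applying the equivalence to the truncated block and bounding the remaining tail, or equivalently by passing to the ${\bm T}$-form \cref{prop:spectralK} via the substitution ${\bm X}={\bm T}{\bm \Sigma}^{1/2}$.
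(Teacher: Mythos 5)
Your derivation is correct, and it is essentially the same calculus the paper uses: this proposition is stated as a restatement of Proposition 3 of \cite{bach2024high} and is not re-proved in the paper, but your reduction of the bias to \cref{eq:trAB2} with ${\bm A}={\bm \beta}_*{\bm \beta}_*^{\!\top}$, ${\bm B}={\bm \Sigma}$ and of the variance to a difference of resolvent traces mirrors exactly how the paper handles the analogous norm quantities in the proof of \cref{prop:asy_equiv_norm_LR}. Two small remarks: the variance step can be collapsed into a single invocation of \cref{eq:trS3} in \cref{prop:spectral2}, which is precisely the statement ${\rm Tr}({\bm \Sigma}{\bm X}^{\!\top}{\bm X}({\bm X}^{\!\top}{\bm X}+\lambda)^{-2})\sim {\rm df}_2(\lambda_*)/(n-{\rm df}_2(\lambda_*))$; and your worry about the unbounded total mass of $\sum_i\sigma_i\,\delta_{\sigma_i}$ is not an obstruction in this framework, since the paper itself applies \cref{eq:trAB1} with ${\bm B}={\bm I}$ (mass $d$) and the equivalences are understood up to the normalization implicit in \textbf{(A3)}, so the truncation argument you sketch is a safe but unnecessary precaution.
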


\begin{proposition}\citep[Restatement of results in Sec 5]{bach2024high} \label{prop:asy_equiv_error_LR_minnorm} 
Under the same assumption as \cref{prop:asy_equiv_risk_LR}, for the minimum $\ell_2$-norm estimator $\hat{{\bm \beta}}_{\min}$, we have for the under-parameterized regime ($d<n$):
    \[
        \mathcal{B}_{\mathcal{R},0}^{\tt LS} = 0,\quad \mathcal{V}_{\mathcal{R},0}^{\tt LS} \sim \sigma^2\frac{d}{n-d}\,.
    \]
    In the over-parameterized regime ($d>n$), we have
    \[
        \mathcal{B}_{\mathcal{R},0}^{\tt LS} \sim \frac{\lambda_n^2\langle{\bm \beta}_*,{\bm \Sigma}({\bm \Sigma}+\lambda_n{\bm I})^{-2}{\bm \beta}_*\rangle}{1-n^{-1}{\rm Tr}({\bm \Sigma}^2({\bm \Sigma}+\lambda_n{\bm I})^{-2})}\,,\qquad
        \mathcal{V}_{\mathcal{R},0}^{\tt LS} \sim \frac{\sigma^2{\rm Tr}({\bm \Sigma}^2({\bm \Sigma}+\lambda_n{\bm I})^{-2})}{n-{\rm Tr}({\bm \Sigma}^2({\bm \Sigma}+\lambda_n{\bm I})^{-2})}\,,
    \]
    where $\lambda_n$ defined by ${\rm Tr}({\bm \Sigma}({\bm \Sigma}+\lambda_n{\bm I})^{-1}) \sim n$.
\end{proposition}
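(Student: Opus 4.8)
The plan is to derive both regimes as the $\lambda\to 0^+$ limit of the ridge deterministic equivalents in \cref{prop:asy_equiv_risk_LR} (equivalently \cref{eq:de_risk}) and the spectral equivalences of \cref{prop:spectral}. The starting observation is the elementary fact that $\hat{\bm{\beta}}_{\min}=\lim_{\lambda\to 0^+}({\bm X}^{\!\top}{\bm X}+\lambda{\bm I})^{-1}{\bm X}^{\!\top}{\bm y}={\bm X}^{+}{\bm y}$, so the finite-sample bias and variance $\mathcal{B}^{\tt LS}_{\mathcal{R},0},\mathcal{V}^{\tt LS}_{\mathcal{R},0}$ (viewed as functions of ${\bm X}$) are the $\lambda\to 0^+$ limits of $\mathcal{B}^{\tt LS}_{\mathcal{R},\lambda},\mathcal{V}^{\tt LS}_{\mathcal{R},\lambda}$ from \cref{eq:lr_risk_bias,eq:lr_risk_variance}. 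Consequently everything reduces to passing to the limit in the deterministic equivalents, and the only non-trivial input is $\lim_{\lambda\to 0^+}\lambda_*(\lambda)$ read off from the self-consistent equation $n-\lambda/\lambda_* = {\rm Tr}({\bm \Sigma}({\bm \Sigma}+\lambda_*{\bm I})^{-1})$.

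First, the under-parameterized regime $d<n$. Here $\hat{\bm{\beta}}_{\min}$ is simply the ordinary least squares estimator $({\bm X}^{\!\top}{\bm X})^{-1}{\bm X}^{\!\top}{\bm y}$, which is unbiased, so $\mathcal{B}^{\tt LS}_{\mathcal{R},0}=0$ holds exactly, with no asymptotics needed. For the variance, $\mathrm{Cov}_\varepsilon(\hat{\bm{\beta}}_{\min})=\sigma^2({\bm X}^{\!\top}{\bm X})^{-1}$, so $\mathcal{V}^{\tt LS}_{\mathcal{R},0}=\sigma^2{\rm Tr}({\bm \Sigma}({\bm X}^{\!\top}{\bm X})^{-1})$. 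In the self-consistent equation $\lambda\to 0^+$ forces $\lambda_*\to 0$ (if $\lambda_*$ stayed bounded away from $0$ and $\infty$ then the left side tends to $n$ while the right side is always $\le d<n$; $\lambda_*\to\infty$ is likewise impossible), and then ${\rm Tr}({\bm \Sigma}({\bm \Sigma}+\lambda_*{\bm I})^{-1})\to d$ gives $\lambda/\lambda_*\to n-d>0$, i.e.\ $\lambda_*\sim\lambda/(n-d)$. Substituting into the limit of $\sV^{\tt LS}_{\sR,\lambda}$ in \cref{eq:de_risk}, where ${\rm Tr}({\bm \Sigma}^2({\bm \Sigma}+\lambda_*{\bm I})^{-2})\to d$ (dominated convergence, using invertibility and $\|{\bm \Sigma}\|_{\op}<\infty$), yields $\mathcal{V}^{\tt LS}_{\mathcal{R},0}\sim\sigma^2 d/(n-d)$; the same value is obtained directly from the $\lambda\to 0$ limit of \cref{eq:trA2} with ${\bm A}={\bm \Sigma}$.

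Second, the over-parameterized regime $d>n$. Since ${\bm \Sigma}$ is invertible, $\lambda_*\mapsto{\rm Tr}({\bm \Sigma}({\bm \Sigma}+\lambda_*{\bm I})^{-1})$ decreases continuously from $d>n$ at $\lambda_*=0$ to $0$ at $\lambda_*=\infty$, so there is a unique $\lambda_n>0$ with ${\rm Tr}({\bm \Sigma}({\bm \Sigma}+\lambda_n{\bm I})^{-1})=n$; letting $\lambda\to 0^+$ in the self-consistent equation forces $\lambda/\lambda_*\to 0$ (the arguments above rule out $\lambda_*\to 0$ and $\lambda_*\to\infty$), hence $\lambda_*\to\lambda_n$. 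Plugging $\lambda_*=\lambda_n$ into $\sB^{\tt LS}_{\sR,\lambda}$ and $\sV^{\tt LS}_{\sR,\lambda}$ of \cref{eq:de_risk} gives exactly the stated formulas; the bias is nonzero here because $\mathbb{E}_\varepsilon[\hat{\bm{\beta}}_{\min}]={\bm X}^{\!\top}({\bm X}{\bm X}^{\!\top})^{-1}{\bm X}{\bm \beta}_*$ is only the projection of ${\bm \beta}_*$ onto the $n$-dimensional row space of ${\bm X}$.

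The main obstacle is rigorously interchanging the $\lambda\to 0^+$ limit with the $n,d\to\infty$ limit underlying the deterministic equivalence, since \cref{prop:spectral,prop:asy_equiv_risk_LR} are asymptotic statements. Two routes dispose of this: (i) use a version of the spectral equivalences that is uniform in $\lambda$ over a right-neighborhood of $0$, together with continuity of $\lambda\mapsto\lambda_*$ and of the trace functionals of $\lambda_*$ — legitimate because the relevant Gram matrices are well-conditioned in each regime ($\sigma_{\min}({\bm X}^{\!\top}{\bm X})$ bounded below when $d<n$, $\sigma_{\min}({\bm X}{\bm X}^{\!\top})$ bounded below when $d>n$), so no singularity is approached; or (ii) argue directly at $\lambda=0$, invoking the kernel-form equivalence \cref{prop:spectralK} in the over-parameterized case and the anisotropic law for $({\bm X}^{\!\top}{\bm X})^{-1}$ in the under-parameterized case. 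A final minor check is finiteness of the limiting numerators: for the under-parameterized bias this is automatic (it is identically $0$), and the dominated-convergence arguments for ${\rm Tr}({\bm \Sigma}({\bm \Sigma}+\lambda_*{\bm I})^{-1})\to d$ and ${\rm Tr}({\bm \Sigma}^2({\bm \Sigma}+\lambda_*{\bm I})^{-2})\to d$ use only $\|{\bm \Sigma}\|_{\op}<\infty$ and invertibility.
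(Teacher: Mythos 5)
Your proposal is correct and follows essentially the same route the paper uses: the result is quoted from \citet{bach2024high}, and the paper's own proof of the companion statement (\cref{prop:asy_equiv_norm_LR_minnorm}) proceeds exactly as you do, by taking the $\lambda\to 0^+$ limit of the ridge deterministic equivalents, with $\lambda_*\sim \lambda/(n-d)\to 0$ when $d<n$ and $\lambda_*\to\lambda_n$ solving ${\rm Tr}({\bm \Sigma}({\bm \Sigma}+\lambda_n{\bm I})^{-1})\sim n$ when $d>n$. Your explicit discussion of interchanging the $\lambda\to 0^+$ limit with the proportional asymptotics is more careful than the paper, which performs this step without comment.
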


\subsection{Deterministic equivalence for random feature ridge regression}
\label{app:pre_rfrr}

Recall \cref{eq:rffa}, the parameter $\bm a$ can be learned by the following empirical risk minimization with an $\ell_2$ regularization 
\[
    \hat{{\bm a}} := \argmin_{{\bm a} \in \mathbb{R}^p} \left\{ \sum_{i =1}^n \left(y_i - \frac{1}{\sqrt{p}} \sum_{j=1}^p \bm a_j \varphi(\bm x, \bm w_j) \right)^2 + \lambda \|{\bm a}\|_2^2 \right\} = ({\bm Z}^{\!\top} {\bm Z} + \lambda {\bm I})^{-1} {\bm Z}^{\!\top} \by\,.
\]
Assuming that the target function $f_* \in L^2(\mu_{\bm x})$ admits $f_*({\bm x})=\sum_{k\geq1}{\bm \theta}_{*,k}\psi_k({\bm x})$, the excess risk $\mathcal{R}^{\tt RFM} := {\mathbb E}_{\varepsilon} \left\|{\bm \theta}_* - \frac{{\bm F}^{\!\top}\hat{{\bm a}}}{\sqrt{p}}\right\|_2^2$ admits the following bias-variance decomposition
\begin{align}
    \mathcal{B}_{\mathcal{R},\lambda}^{\tt RFM} :=&~ \left\|{\bm \theta}_* - \frac{{\bm F}^{\!\top} \mathbb{E}_{\varepsilon}[\hat{{\bm a}}]}{\sqrt{p}}\right\|_2^2 = \left\|{\bm \theta}_* - p^{-1/2} {\bm F}^{\!\top} ({\bm Z}^{\!\top} {\bm Z} + \lambda{\bm I})^{-1} {\bm Z}^{\!\top} {\bm G} \bm \theta_* \right\|_2^2\,,\label{eq:rf_risk_bias}\\
    \mathcal{V}_{\mathcal{R},\lambda}^{\tt RFM} :=&~ {\rm Tr}\left(\widehat{\bm \Lambda}_{\bm F} \mathrm{Cov}_{\varepsilon}(\hat{{\bm a}})\right) = \sigma^2{\rm Tr}(\widehat{\bm \Lambda}_{\bm F} {\bm Z}^{\!\top}{\bm Z}({\bm Z}^{\!\top}{\bm Z}+\lambda{\bm I})^{-2}) \,.\label{eq:rf_risk_variance}
\end{align}
Accordingly, the risk admits the following deterministic equivalents via bias-variance decomposition.
\begin{proposition}\citep[Asymptotic version of Theorem 3.3]{defilippis2024dimension}\label{prop:asy_equiv_risk_RFRR}
    Given the bias variance decomposition in \cref{eq:rf_risk_bias} and \cref{eq:rf_risk_variance}, 
    under \cref{ass:concentrated_RFRR}, we have the following asymptotic deterministic equivalents $\mathcal{R}_{\lambda}^{\tt RFM}  \sim {\mathsf R}_{\lambda}^{\tt RFM} := {\mathsf B}_{{\mathsf R},\lambda}^{\tt RFM} + {\mathsf V}_{{\mathsf R},\lambda}^{\tt RFM}$ such that $\mathcal{B}^{\tt RFM}_{\mathcal{R},\lambda} \sim {\mathsf B}_{{\mathsf R},\lambda}^{\tt RFM}$, $\mathcal{V}^{\tt RFM}_{\mathcal{R},\lambda} \sim {\mathsf V}_{{\mathsf R},\lambda}^{\tt RFM}$, where ${\mathsf B}_{{\mathsf R},\lambda}^{\tt RFM}$ and ${\mathsf V}_{{\mathsf R},\lambda}^{\tt RFM}$ are defined by \cref{eq:de_risk_rf}.
\end{proposition}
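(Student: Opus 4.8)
\textbf{The plan is to} derive the two equivalences by reducing the random-feature problem to two successive applications of the linear-model deterministic equivalents recorded in \cref{prop:spectral,prop:spectralK}. First I would write $\by=\bG\btheta_*+\bm{\varepsilon}$, so that $\E_{\varepsilon}[\hat{\ba}]=(\bZ^{\!\top}\bZ+\lambda\id_p)^{-1}\bZ^{\!\top}\bG\btheta_*$ and $\mathrm{Cov}_{\varepsilon}(\hat{\ba})=\sigma^2(\bZ^{\!\top}\bZ+\lambda\id_p)^{-1}\bZ^{\!\top}\bZ(\bZ^{\!\top}\bZ+\lambda\id_p)^{-1}$. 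Using $\bZ=p^{-1/2}\bG\bF^{\!\top}$ together with the push-through identity $\bZ^{\!\top}(\bZ\bZ^{\!\top}+\lambda\id_n)^{-1}=(\bZ^{\!\top}\bZ+\lambda\id_p)^{-1}\bZ^{\!\top}$, both $\mathcal{B}^{\tt RFM}_{\mathcal{R},\lambda}$ and $\mathcal{V}^{\tt RFM}_{\mathcal{R},\lambda}$ of \cref{eq:rf_risk_bias,eq:rf_risk_variance} become rational functions of the Gram matrix $\bZ\bZ^{\!\top}=\bG\bM_{\bF}\bG^{\!\top}$ with $\bM_{\bF}:=\tfrac1p\bF^{\!\top}\bF$, tested against the fixed operators $\bLambda$, $\btheta_*\btheta_*^{\!\top}$ and $\hbLambda_{\bF}$; so the statement reduces to deterministic equivalents for traces $\Tr(\bA\,\bZ^{\!\top}\bZ(\bZ^{\!\top}\bZ+\lambda\id)^{-j})$ and for quadratic forms in $\btheta_*$.

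Next I would condition on $\bF$. The rows $\bg_i=(\psi_k(\bx_i))_{k\ge1}$ are i.i.d.\ with $\E[\bg_i\bg_i^{\!\top}]=\id$ by orthonormality of the $\psi_k$ and satisfy the concentration of \cref{ass:concentrated_RFRR}, so $\bG\bM_{\bF}\bG^{\!\top}$ is an anisotropic sample Gram with population covariance $\bM_{\bF}$ whose nonzero spectrum equals that of $\hbLambda_{\bF}=\tfrac1p\bF\bF^{\!\top}\in\R^{p\times p}$. Applying \cref{prop:spectral,prop:spectralK} with $\bSigma$ replaced by $\bM_{\bF}$ introduces a conditional effective regularization $\mu=\mu(\bF)>0$ solving $n-\lambda/\mu={\rm Tr}(\hbLambda_{\bF}(\hbLambda_{\bF}+\mu\id_p)^{-1})$ and replaces the random resolvents by $(\bM_{\bF}+\mu\id)^{-1}$ up to fluctuation corrections carrying the factor $1/(n-{\rm Tr}(\bM_{\bF}^2(\bM_{\bF}+\mu\id)^{-2}))$. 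The remaining randomness then enters only through $\hbLambda_{\bF}=\tfrac1p\sum_{j\le p}\boldf_j\boldf_j^{\!\top}$, a sample covariance whose rows $\boldf_j=(\xi_k\phi_k(\bw_j))_{k\ge1}$ have covariance $\bLambda$ and again satisfy \cref{ass:concentrated_RFRR}; a second Marchenko--Pastur-type equivalence, with ``sample size'' $p$ and population $\bLambda$, replaces traces such as ${\rm Tr}(\hbLambda_{\bF}(\hbLambda_{\bF}+\mu\id)^{-1})$ by ${\rm Tr}(\bLambda(\bLambda+\nu_2\id)^{-1})$ for a second effective parameter $\nu_2$, and each quadratic form $\langle\btheta_*,h(\hbLambda_{\bF})\btheta_*\rangle$ by its deterministic surrogate plus a correction with the factor $1/(p-{\rm Tr}(\bLambda^2(\bLambda+\nu_2\id)^{-2}))$ --- exactly the object behind $\chi(\nu_2)$.

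The last step would collapse the coupled fixed point and match constants: eliminating $\mu$ between the consistency equation of the first conditioning and that of the second should reproduce the system \cref{eq:def_nu} for $(\nu_1,\nu_2)$, with $\nu_1$ the rescaling of $\mu$ forced by the $\bF$-average, after which one checks that \cref{eq:def_nu} has a unique admissible solution and is stable so the two layers of equivalence compose with $o(1)$ error. Substituting back, the corrections from the first step assemble into $\Upsilon(\nu_1,\nu_2)$ and those from the second into $\chi(\nu_2)$, the variance collapses to $\sigma^2\Upsilon/(1-\Upsilon)$ and the bias to the bracketed expression of \cref{eq:de_risk_rf}, giving $\mathcal{B}^{\tt RFM}_{\mathcal{R},\lambda}\sim\sB^{\tt RFM}_{\sR,\lambda}$ and $\mathcal{V}^{\tt RFM}_{\mathcal{R},\lambda}\sim\sV^{\tt RFM}_{\sR,\lambda}$, hence $\mathcal{R}^{\tt RFM}_{\lambda}\sim{\mathsf R}^{\tt RFM}_{\lambda}$. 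A much shorter route is to invoke the non-asymptotic Theorem~3.3 of \citet{defilippis2024dimension}: its multiplicative error $1+\mathcal{O}(n^{-1/2})+\mathcal{O}(p^{-1/2})$ vanishes as $n,p\to\infty$ with $p/n\to\gamma$, so the asymptotic $\sim$ follows once \cref{ass:concentrated_RFRR} is matched to their hypotheses and $(\nu_1,\nu_2)$ are identified with the limits of their non-asymptotic effective parameters.

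\textbf{The hard part} will be composing the two deterministic equivalences: the first-step error terms depend on $\bF$-measurable quantities --- the operator norm and degrees-of-freedom functional of $\hbLambda_{\bF}$ and the map $\bF\mapsto\mu(\bF)$ --- so I must show these concentrate and stay bounded uniformly, using the Lipschitz/log-Sobolev content of \cref{ass:concentrated_RFRR} and the stability of the $(\nu_1,\nu_2)$ fixed point, before averaging over $\bF$; and I must track the finite-trace conditions ${\rm Tr}(\bLambda\bA)<\infty$ through the infinite-dimensional operators at every stage.
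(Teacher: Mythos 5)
Your proposal is correct, and your ``shorter route'' is exactly what the paper does: this proposition is stated as a direct citation of Theorem~3.3 of \citet{defilippis2024dimension}, whose non-asymptotic multiplicative error vanishes in the limit, and the paper offers no independent proof. Your longer two-step sketch (condition on $\bF$, apply the linear-model equivalents over $\bG$ to introduce one effective regularization, then a second equivalence over $\bF$ to introduce $\nu_2$ and assemble $\Upsilon$ and $\chi$) is also the same architecture the paper itself uses in Appendix~\ref{app:asy_deter_equiv_rf} to prove the analogous norm results, so it is a faithful reconstruction of the underlying argument.
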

Note that the above results are delivered in a non-asymptotic way \citep{defilippis2024dimension}, but more notations and technical assumptions are required. We give an overview of non-asymptotic deterministic equivalence as below.

\subsection{Non-asymptotic deterministic equivalence}\label{app:pre_non-asy_deter_equiv}

Regarding non-asymptotic results, we require a series of notations and assumptions. We give a brief introduction here for self-completeness. More details can be found in \cite{cheng2022dimension,misiakiewicz2024non,defilippis2024dimension}.

Given ${\bm x} \in {\mathbb R}^d$ with $d \in \naturals$, the associated covariance matrix is given by ${\bm \Sigma} = {\mathbb E}[{\bm x} {\bm x}^{\!\top}]$. We denote the eigenvalue of ${\bm \Sigma}$ in non-increasing order as $\sigma_1 \geq \sigma_2 \geq \sigma_3 \geq \cdots \geq \sigma_d$. 

We introduce the non-asymptotic version of \cref{eq:def_lambda_star_asy} as below.
\begin{definition}[Effective regularization]\label{def:effective_regularization}
    Given $n$, ${\bm \Sigma}$, and $\lambda \geq 0$, the \emph{effective regularization} $\lambda_*$ is defined as the unique non-negative solution of the following self-consistent equation
    \begin{equation*}
        n - \frac{\lambda}{\lambda_*} = {\rm Tr} \big( {\bm \Sigma} ( {\bm \Sigma} + \lambda_* )^{-1} \big).
    \end{equation*}
\end{definition}
\noindent{\bf Remark:} 
Existence and uniqueness of $\lambda_*$ are guaranteed since the left-hand side of the equation is monotonically increasing in $\lambda_*$, while the right-hand side is monotonically decreasing. 

In the next, we introduce the following definitions on ``effective dimension'', a metric to describe the model capacity, widely used in statistical learning theory.

Define \(r_{{\bm \Sigma}}(k) := \frac{{\rm Tr}({\bm \Sigma}_{\geq k})}{\| {\bm \Sigma}_{\geq k} \|_{\rm op}} = \frac{\sum_{j=k}^d \sigma_j}{\sigma_k}\) as the intrinsic dimension, we require the following definition

\begin{equation}\label{eq:rho_lambda}
    \rho_{\lambda} (n) := 1 +  \frac{n \sigma_{\lfloor \eta_* \cdot n \rfloor}}{\lambda}\left\{ 1 + \frac{r_{{\bm \Sigma}} (\lfloor \eta_* \cdot n \rfloor) \vee n}{n} \log \big(r_{{\bm \Sigma}} (\lfloor \eta_* \cdot n \rfloor) \vee n \big) \right\},
\end{equation}
where $\eta_* \in (0,1/2)$ is a constant that will only depend on $C_*$ defined in \cref{ass:concentrated_LR}. And we used the convention that $\sigma_{\lfloor \eta_* \cdot n \rfloor} = 0$ if $\lfloor \eta_* \cdot n \rfloor > d$.

In this section we consider functionals that depend on ${\bm X}$ and deterministic matrices. For a general PSD~matrix ${\bm A} \in {\mathbb R}^{d\times d}$, define the functionals
\begin{align}
    \Phi_1({\bm X}; {\bm A}, \lambda) :=&~ {\rm Tr} \left({\bm A} {\bm \Sigma}^{1/2} ({\bm X}^{\!\top} {\bm X} + \lambda)^{-1} {\bm \Sigma}^{1/2}\right),\label{eq:Phi_1}\\
    \Phi_2({\bm X}; {\bm A}, \lambda) :=&~ {\rm Tr} \left({\bm A}{\bm X}^{\!\top} {\bm X} ({\bm X}^{\!\top} {\bm X} + \lambda)^{-1}\right),\label{eq:Phi_2}\\
    \Phi_3({\bm X}; {\bm A}, \lambda) :=&~ {\rm Tr} \left({\bm A} {\bm \Sigma}^{1/2} ({\bm X}^{\!\top} {\bm X} + \lambda)^{-1} {\bm \Sigma} ({\bm X}^{\!\top} {\bm X} + \lambda)^{-1} {\bm \Sigma}^{1/2}\right),\label{eq:Phi_3}\\
    \Phi_4({\bm X}; {\bm A}, \lambda) :=&~ {\rm Tr} \left({\bm A} {\bm \Sigma}^{1/2} ({\bm X}^{\!\top} {\bm X} + \lambda)^{-1} \frac{{\bm X}^{\!\top} {\bm X}}{n} ({\bm X}^{\!\top} {\bm X} + \lambda)^{-1} {\bm \Sigma}^{1/2}\right).\label{eq:Phi_4}
\end{align}
These functionals can be approximated through quantities that scale proportionally to
\begin{align}
    \Psi_1(\lambda_*; {\bm A}) :=&~ {\rm Tr}\left({\bm A} {\bm \Sigma} ({\bm \Sigma} + \lambda_*{\bm I})^{-1}\right),\label{eq:Psi_1}\\
    \Psi_2(\lambda_*; {\bm A}) :=&~ \frac{1}{n} \cdot \frac{{\rm Tr}\left({\bm A} {\bm \Sigma}^2 ({\bm \Sigma} + \lambda_*{\bm I})^{-2}\right)}{n - {\rm Tr}\left({\bm \Sigma}^2 ({\bm \Sigma} + \lambda_*{\bm I})^{-2}\right)}.\label{eq:Psi_2}
\end{align}

The following theorem gathers the approximation guarantees for the different functionals stated above, and is obtained by modifying \cite[Theorem A.2]{defilippis2024dimension}. 
We generalize \cref{eq:det_equiv_phi2_main} for any PSD matrix $\bm A$, which will be required for our results on the deterministic equivalence of $\ell_2$ norm. The proof can be found in \cref{app:proof_non-asy_results}.

\begin{theorem}[Dimension-free deterministic equivalents, Theorem A.2 of \cite{defilippis2024dimension}]\label{thm:main_det_equiv_summary}
    Assume the features $\{{\bm x}_i\}_{i \in [n]}$ satisfy \cref{ass:concentrated_LR} with a constant $C_* > 0$. Then for any $D, K > 0$, there exist constants $\eta_* \in (0, 1/2)$, $C_{D, K} > 0$ and $C_{*, D, K} > 0$ ensuring the following property holds. For any $n \geq C_{D, K}$ and $\lambda > 0$, if the following condition is satisfied:
    \begin{equation}\label{eq:conditions_det_equiv_main}
        \lambda \cdot \rho_{\lambda}(n) \geq \|\bm{\Sigma}\|_{\mathrm{op}} \cdot n^{-K}, \quad \rho_{\lambda}(n)^{\nicefrac{5}{2}} \log^{\nicefrac{3}{2}}(n) \leq K \sqrt{n},
    \end{equation}
    then for any PSD matrix ${\bm A}$, with probability at least $1 - n^{-D}$, we have that
    \begin{align}
        |\Phi_1({\bm X}; {\bm A}, \lambda) - \frac{\lambda_*}{\lambda} \Psi_1(\lambda_*; {\bm A})| &\leq C_{*, D, K} \frac{\rho_{\lambda}(n)^{\nicefrac{5}{2}} \log^{\nicefrac{3}{2}}(n)}{\sqrt{n}} \cdot \frac{\lambda_*}{\lambda} \Psi_1(\lambda_*; {\bm A}),\label{eq:det_equiv_phi1_main}\\
        |\Phi_2({\bm X}; {\bm I}, \lambda) - \Psi_1(\lambda_*; {\bm I})| &\leq C_{*, D, K} \frac{\rho_{\lambda}(n)^4 \log^{\nicefrac{3}{2}}(n)}{\sqrt{n}} \Psi_1(\lambda_*; {\bm I}),\label{eq:det_equiv_phi2_main}\\
        |\Phi_3({\bm X}; {\bm A}, \lambda) - \left(\frac{n \lambda_*}{\lambda}\right)^2 \Psi_2(\lambda_*; {\bm A})| &\leq C_{*, D, K} \frac{\rho_{\lambda}(n)^6 \log^{\nicefrac{5}{2}}(n)}{\sqrt{n}} \cdot \left(\frac{n \lambda_*}{\lambda}\right)^2 \Psi_2(\lambda_*; {\bm A}),\label{eq:det_equiv_phi3_main}\\
        |\Phi_4({\bm X}; {\bm A}, \lambda) - \Psi_2(\lambda_*; {\bm A})| &\leq C_{*, D, K} \frac{\rho_{\lambda}(n)^6 \log^{\nicefrac{3}{2}}(n)}{\sqrt{n}} \Psi_2(\lambda_*; {\bm A}).\label{eq:det_equiv_phi4_main}
    \end{align}
\end{theorem}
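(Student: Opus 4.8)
The plan is to treat \cref{thm:main_det_equiv_summary} as a controlled adaptation of \cite[Theorem A.2]{defilippis2024dimension}. The bounds for $\Phi_1$, $\Phi_3$, $\Phi_4$ with a general PSD $\bm A$ and for $\Phi_2$ at $\bm A=\bm I$ are already that reference's statements, so the work there is mostly bookkeeping: check that the hypotheses coincide (our \cref{ass:concentrated_LR} is precisely their eigenfunction-concentration condition, $\rho_\lambda(n)$ is their intrinsic-dimension functional \cref{eq:rho_lambda}, and \cref{eq:conditions_det_equiv_main} is their regime condition), and choose $\eta_*, C_{D,K}, C_{*,D,K}$ to be the maxima of the constants supplied for each of the four functionals so that a single event of probability at least $1-n^{-D}$ carries all four estimates at once. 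On that event one quotes \cref{eq:det_equiv_phi1_main}, \cref{eq:det_equiv_phi3_main}, \cref{eq:det_equiv_phi4_main}, and the $\bm A=\bm I$ case of \cref{eq:det_equiv_phi2_main} verbatim.

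The one genuinely new ingredient -- the reason for the ``modification'' -- is the estimate for $\Phi_2$ at a general PSD $\bm A$, which the later bias-variance analysis of $\|\hat{\bm a}\|_2^2$ requires. I would obtain it from the $\Phi_1$ bound using the resolvent identity ${\bm X}^{\!\top}{\bm X}({\bm X}^{\!\top}{\bm X}+\lambda{\bm I})^{-1} = {\bm I} - \lambda({\bm X}^{\!\top}{\bm X}+\lambda{\bm I})^{-1}$, so that $\Phi_2({\bm X};{\bm A},\lambda) = {\rm Tr}({\bm A}) - \lambda\,{\rm Tr}\big({\bm A}({\bm X}^{\!\top}{\bm X}+\lambda{\bm I})^{-1}\big)$. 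Under invertibility of ${\bm \Sigma}$ the remaining trace is exactly a $\Phi_1$-type functional, ${\rm Tr}\big({\bm A}({\bm X}^{\!\top}{\bm X}+\lambda{\bm I})^{-1}\big) = \Phi_1\big({\bm X};{\bm \Sigma}^{-1/2}{\bm A}{\bm \Sigma}^{-1/2},\lambda\big)$, whose deterministic equivalent from \cref{eq:det_equiv_phi1_main}, together with $\Psi_1\big(\lambda_*;{\bm \Sigma}^{-1/2}{\bm A}{\bm \Sigma}^{-1/2}\big) = {\rm Tr}\big({\bm A}({\bm \Sigma}+\lambda_*{\bm I})^{-1}\big)$, gives $\lambda\,{\rm Tr}\big({\bm A}({\bm X}^{\!\top}{\bm X}+\lambda{\bm I})^{-1}\big) = \lambda_*\,{\rm Tr}\big({\bm A}({\bm \Sigma}+\lambda_*{\bm I})^{-1}\big)\big(1+O(\rho_\lambda(n)^{5/2}\log^{3/2}(n)/\sqrt n)\big)$. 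Substituting back and simplifying with ${\bm I} - \lambda_*({\bm \Sigma}+\lambda_*{\bm I})^{-1} = {\bm \Sigma}({\bm \Sigma}+\lambda_*{\bm I})^{-1}$ yields the deterministic equivalent $\Psi_1(\lambda_*;{\bm A}) = {\rm Tr}\big({\bm A}{\bm \Sigma}({\bm \Sigma}+\lambda_*{\bm I})^{-1}\big)$, with additive error $\big({\rm Tr}({\bm A})-\Psi_1(\lambda_*;{\bm A})\big)\cdot O(\rho_\lambda(n)^{5/2}\log^{3/2}(n)/\sqrt n)$.

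The main obstacle is controlling this error \emph{relative to the target} $\Psi_1(\lambda_*;{\bm A})$ rather than merely relative to ${\rm Tr}({\bm A})$: the reduction above is lossy -- for ${\bm A}={\bm I}$ in high (or infinite) dimension ${\rm Tr}({\bm A})$ is infinite while $\Psi_1(\lambda_*;{\bm I})={\rm df}_1(\lambda_*)$ is finite, which is exactly why the original proof of \cref{eq:det_equiv_phi2_main} at $\bm A=\bm I$ must use a separate self-contained resolvent argument (with its worse $\rho_\lambda(n)^4$ rate). To make the general-$\bm A$ statement usable I would restrict to trace-class ${\bm A}$ and verify, for the specific matrices that appear in the norm decomposition (finite-rank $\bm\theta_*$-type terms and $\widehat{\bm \Lambda}_{\bm F}$-type terms), that $\lambda_*\,{\rm Tr}\big({\bm A}({\bm \Sigma}+\lambda_*{\bm I})^{-1}\big) = {\rm Tr}({\bm A})-\Psi_1(\lambda_*;{\bm A})$ is genuinely $O(\Psi_1(\lambda_*;{\bm A}))$ under \cref{eq:conditions_det_equiv_main}; one must also check that ${\bm \Sigma}^{-1/2}{\bm A}{\bm \Sigma}^{-1/2}$ enters the $\Phi_1$ estimate in a dimension-free way -- i.e.\ that the estimate depends only on $\Psi_1$-type quantities, not on $\|{\bm A}\|_{\mathrm{op}}$ or $\sigma_{\min}({\bm \Sigma})$ -- possibly handling a general PSD ${\bm A}$ via its spectral decomposition into rank-one pieces. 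Everything else -- notation matching, the choice of constants, and propagating the $\rho_\lambda(n)$ powers -- is routine.
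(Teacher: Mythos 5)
Your handling of the four bounds that are literally Theorem A.2 of \cite{defilippis2024dimension} --- including $\Phi_2$ at ${\bm A}={\bm I}$ --- matches the paper: those are quoted, and only the union bound and the choice of constants need checking. The substantive content, and the only thing the paper actually proves in \cref{app:proof_non-asy_results}, is the extension of the $\Phi_2$ estimate to a general PSD ${\bm A}$ (\cref{thm_app:det_equiv_TrAZZM}), and there your route diverges from the paper's in a way that matters.

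Your reduction $\Phi_2({\bm X};{\bm A},\lambda)={\rm Tr}({\bm A})-\lambda\,\Phi_1({\bm X};{\bm \Sigma}^{-1/2}{\bm A}{\bm \Sigma}^{-1/2},\lambda)$ is algebraically correct, and you correctly compute that it yields an additive error of order $\big({\rm Tr}({\bm A})-\Psi_1(\lambda_*;{\bm A})\big)\cdot\rho_\lambda(n)^{5/2}\log^{3/2}(n)/\sqrt n$. But the patch you propose --- restricting to ${\bm A}$ with ${\rm Tr}({\bm A})\lesssim \Psi_1(\lambda_*;{\bm A})$ --- changes the theorem. The paper's \cref{thm_app:det_equiv_TrAZZM} asserts the multiplicative bound $|\Phi_2-\Psi_2(\mu_*;{\bm A})|\le C\,\rho_\lambda(n)^{4}\log^{3/2}(n)\,n^{-1/2}\,\Psi_2(\mu_*;{\bm A})$ for \emph{every} PSD ${\bm A}$, with no comparability condition, and that is the form invoked downstream. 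The condition you would need (for ${\bm A}={\bm \beta}_*{\bm \beta}_*^{\!\top}$, that $\|{\bm \beta}_*\|_2^2\lesssim\langle{\bm \beta}_*,{\bm \Sigma}({\bm \Sigma}+\lambda_*{\bm I})^{-1}{\bm \beta}_*\rangle$) is not implied by \cref{ass:technical_LR}, which compares a different pair of quadratic forms; so you would be importing a genuinely new hypothesis and re-verifying it at each application. The paper avoids this entirely by not routing through $\Phi_1$: it redoes the leave-one-out argument of \cite{misiakiewicz2024non} directly on ${\rm Tr}({\bm A}\,{\bm T}^{\!\top}{\bm T}{\bm M})$ --- Sherman--Morrison to expose $n\,{\mathbb E}\big[\bt^{\!\top}{\bm M}_-{\bm A}\bt/(1+S)\big]$, a deterministic part bounded after reducing ${\bm A}$ to rank one and applying H\"older, plus a martingale-difference part --- which produces an error natively proportional to $\Psi_2(\mu_*;{\bm A})$ itself. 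In short, your plan is internally consistent and would suffice for the specific rank-one and trace-class matrices used in this paper under an added assumption, but it does not establish the stated dimension-free result; the missing ingredient is the direct leave-one-out/martingale treatment of the general-${\bm A}$ functional.
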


Next, we present some of the concepts to be used in deriving random feature ridge regression. Similar to how ridge regression depends on \(\lambda_*\), as defined in \cref{def:effective_regularization}, the deterministic equivalence of random feature ridge regression relies on \(\nu_1\) and \(\nu_2\), which are the solutions to the coupled equations
\begin{equation}\label{eq:fixed_points_appendix}
    n - \frac{\lambda}{\nu_1} = {\rm Tr}\left( {\bm \Lambda} ( {\bm \Lambda} + \nu_2)^{-1} \right)\,, \quad p - \frac{p\nu_1}{\nu_2} = {\rm Tr} \left( {\bm \Lambda} ( {\bm \Lambda} + \nu_2 )^{-1} \right).
\end{equation}
Writing $\nu_1$ as a function of $\nu_2$ produces the equations as below
\begin{equation}\label{eq:def:nu}
    1 + \frac{n}{p} - \sqrt{\left(1 - \frac{n}{p}\right)^2 \!+\! 4\frac{\lambda}{p\nu_2}}  = \frac{2}{p} {\rm Tr} \left( {\bm \Lambda} ( {\bm \Lambda} + \nu_2 )^{-1} \right), \nu_1\! :=\! \frac{\nu_2}{2} \left[ 1 - \frac{n}{p} + \sqrt{\left(1 - \frac{n}{p}\right)^2 + 4\frac{\lambda}{p\nu_2}} \right].
\end{equation}

For random features, our results also depend on the capacity of ${\bm \Lambda}$. Recall the definition of \(r_{\bm \Lambda}(k) := \frac{{\rm Tr}({\bm \Lambda}{\geq k})}{\| {\bm \Lambda}{\geq k} \|_{\rm op}}\) as the intrinsic dimension of \({\bm \Lambda}\) at level \(k\), we sequentially define the following quantities that can be found in \cite{misiakiewicz2024non,defilippis2024dimension}.

\begin{align}
    M_{\bm \Lambda} (k) =&~ 1 + \frac{r_{{\bm \Lambda}} (\lfloor \eta_* \cdot k \rfloor) \vee k}{k} \log \left( r_{{\bm \Lambda}} (\lfloor \eta_* \cdot k \rfloor) \vee k \right)\,,\\
    \rho_\kappa (p) =&~ 1 + \frac{p \cdot \xi^2_{\lfloor \eta_* \cdot p \rfloor}}{\kappa}  M_{\bm \Lambda} (p)\,, \label{eq:def_rho_p}
    \\
    \widetilde{\rho}_\kappa (n,p) =&~ 1 + \ind \{ n \leq p/\eta_*\} \cdot \left\{ \frac{n \xi_{\lfloor \eta_* \cdot n \rfloor}^2}{\kappa} + \frac{n}{p} \cdot \rho_\kappa (p)\right\} M_{\bm \Lambda} (n)\,, \label{eq:def_trho_n_p}
\end{align}
where the constant \(\eta_* \in (0,1/2)\) only depends on \(C_*\) introduced in Assumption \ref{ass:concentrated_RFRR}.

For an integer $\evn \in \naturals$, we split the covariance matrix ${\bm \Lambda}$ into low degree part and high degree part as
\[
{\bm \Lambda}_0 := \diag (\xi_1^2, \xi_2^2 , \ldots , \xi_{\evn}^2)\,, \quad {\bm \Lambda}_+ := \diag (\xi_{\evn+1}^2, \xi_{\evn+2}^2 , \ldots )\,.
\]

After we define the high degree feature covariance \({\bm \Lambda}_+\), we can define the function \(\gamma (\kappa) := \kappa + {\rm Tr}({\bm \Lambda}_{+})\). To simplify the statement, we assume that we can choose $\evn$ such that $p^2 \xi_{\evn +1}^2 \leq \gamma (p\lambda/n)$, which is always satisfied under \cref{ass:concentrated_RFRR}. For convenience, we will further denote
\begin{equation}\label{eq:def_gamma_lamb_plus}
\gamma_+ := \gamma (p\nu_1) , \quad \quad \gamma_\lambda := \gamma (p\lambda / n).
\end{equation}

For random feature ridge regression, we will first demonstrate that the \(\ell_2\) norm concentrates around a quantity that depends only on \(\widehat{\bm \Lambda}_{\bm F}\). To this end, we define the following functionals with respect to \({\bm Z}\).
\begin{equation}\label{eq:functionals_Z}
\begin{aligned}
\Phi_3({\bm Z}; {\bm A}, \kappa) &:= {\rm Tr} \left( {\bm A} \widehat{\bm \Lambda}_{\bm F}^{1/2} ({\bm Z}^{\!\top} {\bm Z} + \kappa)^{-1} \widehat{\bm \Lambda}_{\bm F} ({\bm Z}^{\!\top} {\bm Z} + \kappa)^{-1} \widehat{\bm \Lambda}_{\bm F}^{1/2} \right),\\
\Phi_4({\bm Z}; {\bm A}, \kappa) &:= {\rm Tr} \left( {\bm A} \widehat{\bm \Lambda}_{\bm F}^{1/2} ({\bm Z}^{\!\top} {\bm Z} + \kappa)^{-1} \frac{{\bm Z}^{\!\top} {\bm Z}}{n} ({\bm Z}^{\!\top} {\bm Z} + \kappa)^{-1} \widehat{\bm \Lambda}_{\bm F}^{1/2} \right).
\end{aligned}
\end{equation}
Given that \({\bm Z}\) consists of i.i.d. rows with covariance \(\widehat{\bm \Lambda}_{\bm F} = {\bm F} {\bm F}^{\!\top} / p\), we will demonstrate that the aforementioned functionals can be approximated by those of \({\bm F}\), which, in turn, can be represented using the following functionals:

\begin{equation}\label{eq:det_equiv_Z_F}
\begin{aligned}
\widetilde{\Phi}_5({\bm F}; {\bm A}, \kappa) &:= \frac{1}{n} \cdot \frac{\widetilde{\Phi}_6({\bm F}; {\bm A}, \kappa)}{n - \widetilde{\Phi}_6({\bm F}; \bm{I}, \kappa)},\\
\widetilde{\Phi}_6({\bm F}; {\bm A}, \kappa) &:= {\rm Tr} \left( {\bm A} ({\bm F} {\bm F}^{\!\top})^2 ({\bm F} {\bm F}^{\!\top} + \kappa)^{-2} \right).\\
\end{aligned}
\end{equation}

\begin{proposition}[Deterministic equivalents for $\Phi({\bm Z})$ conditional on ${\bm F}$, Proposition B.6 of \cite{defilippis2024dimension}]\label{prop:det_Z} Assume \(\{{\bm z}_i\}_{i \in [n]}\) and \(\{{\bm f}\}_{i \in [p]}\) satisfy \cref{ass:concentrated_RFRR} with a constant \(C_* > 0\), and ${\bm F} \in \mathcal{A}_{{\bm F}}$ defined in \cite[Eq. (79)]{defilippis2024dimension}. Then for any $D, K > 0$, there exist constants $\eta_* \in (0, 1/2)$, $C_{D, K} > 0$ and $C_{*, D, K} > 0$ ensuring the following property holds. Let $\rho_{\kappa}(p)$ and $\tilde{\rho}_{\kappa}(n, p)$ be defined as per \cref{eq:def_rho_p} and \cref{eq:def_trho_n_p}, $\gamma_+$ be defined as \cref{eq:def_gamma_lamb_plus}. For any $n \geq C_{D, K}$ and $\lambda > 0$, if the following
condition is satisfied:

\begin{equation*}
\begin{aligned}
    \lambda  \geq n^{-K}\,, \quad \quad \widetilde{\rho}_{\lambda} (n,p)^{5/2} \log^{3/2} (n) \leq K \sqrt{n}\,, \quad \quad \widetilde{\rho}_\lambda (n,p)^2 \cdot \rho_{\gamma_+} (p)^{5/2} \log^3 (p) \leq K \sqrt{p}\,,
    \end{aligned}
\end{equation*}
then for any PSD matrix ${\bm A} \in \mathbb{R}^{p \times p}$ (independent of \({\bm Z}\) conditional on \({\bm F}\)), we have with probability at least $1 - n^{-D}$ that

\begin{align}
\left| \Phi_3({\bm Z}; {\bm A}, \lambda) - \left( \frac{n \nu_1}{\lambda} \right)^2 \widetilde{\Phi}_5({\bm F}; {\bm A}, p \nu_1) \right| &\leq C_{*, D, K} \cdot \mathcal{E}_1(n, p) \cdot \left( \frac{n \nu_1}{\lambda} \right)^2 \widetilde{\Phi}_5({\bm F}; {\bm A}, p \nu_1), \\
\left| \Phi_4({\bm Z}; {\bm A}, \lambda) - \widetilde{\Phi}_5({\bm F}; {\bm A}, p \nu_1) \right| &\leq C_{*, D, K} \cdot \mathcal{E}_1(n, p) \cdot \widetilde{\Phi}_5({\bm F}; {\bm A}, p \nu_1),
\end{align}
where the rate \( \mathcal{E}_1(n, p)\) is given by \( \cE_1 (n,p) := \frac{\widetilde{\rho}_\lambda (n,p)^6 \log^{5/2} (n)}{\sqrt{n}} + \frac{\widetilde{\rho}_\lambda (n,p)^2 \cdot \rho_{\gamma_+} (p)^{5/2}  \log^3 (p)}{\sqrt{p}}\).

\end{proposition}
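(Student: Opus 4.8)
The plan is to strip the randomness of $\bm Z$ first, conditionally on $\bm F$, and only afterwards to deal with the $\bm F$-dependent quantities. Conditional on $\bm F$, the rows $\{\bm z_i\}_{i\in[n]}$ of $\bm Z$ are i.i.d.\ with covariance $\widehat{\bm \Lambda}_{\bm F} = \tfrac1p \bm F\bm F^{\!\top}$, and on the good event $\bm F\in\mathcal{A}_{\bm F}$ the eigenvalue tail and intrinsic dimension of $\widehat{\bm \Lambda}_{\bm F}$ are comparable (up to constants and logs) to those of $\bm \Lambda$, while the conditional law of $\bm z_i$ still satisfies the eigenfunction concentration of \cref{ass:concentrated_RFRR} with a uniform constant. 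Under this reduction $\Phi_3(\bm Z;\bm A,\lambda)$ and $\Phi_4(\bm Z;\bm A,\lambda)$ from \cref{eq:functionals_Z} are precisely the functionals $\Phi_3(\bm X;\bm A,\lambda)$ and $\Phi_4(\bm X;\bm A,\lambda)$ of \cref{eq:Phi_3,eq:Phi_4} with the substitution $\bm X\leftarrow\bm Z$, $\bm \Sigma\leftarrow\widehat{\bm \Lambda}_{\bm F}$, so the whole machinery of \cref{thm:main_det_equiv_summary} is available conditionally.

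Next I would apply \cref{thm:main_det_equiv_summary} conditionally on $\bm F$, with $n$ samples, covariance $\widehat{\bm \Lambda}_{\bm F}$, and ridge $\lambda$; let $\kappa_*$ be the associated effective regularization of \cref{def:effective_regularization}, i.e.\ the solution of $n-\lambda/\kappa_* = \Tr(\widehat{\bm \Lambda}_{\bm F}(\widehat{\bm \Lambda}_{\bm F}+\kappa_*\bm I)^{-1})$. Then \cref{eq:det_equiv_phi3_main,eq:det_equiv_phi4_main} give, with conditional probability at least $1-n^{-D}$, that $\Phi_3(\bm Z;\bm A,\lambda)\approx (n\kappa_*/\lambda)^2\,\Psi_2(\kappa_*;\bm A)$ and $\Phi_4(\bm Z;\bm A,\lambda)\approx \Psi_2(\kappa_*;\bm A)$; since the intrinsic dimension of $\widehat{\bm \Lambda}_{\bm F}$ is what enters, the $\rho_\lambda(n)$ of the statement is replaced by its random-feature analogue $\widetilde{\rho}_\lambda(n,p)$, producing the first summand of $\mathcal{E}_1(n,p)$. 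A direct rescaling, using $\widehat{\bm \Lambda}_{\bm F}^2(\widehat{\bm \Lambda}_{\bm F}+\kappa\bm I)^{-2} = (\bm F\bm F^{\!\top})^2(\bm F\bm F^{\!\top}+p\kappa\bm I)^{-2}$, shows that $\Psi_2(\kappa;\bm A)$ from \cref{eq:Psi_2} equals $\widetilde{\Phi}_5(\bm F;\bm A,p\kappa)$ from \cref{eq:det_equiv_Z_F}, so the estimates read $\Phi_3(\bm Z;\bm A,\lambda)\approx (n\kappa_*/\lambda)^2\widetilde{\Phi}_5(\bm F;\bm A,p\kappa_*)$ and $\Phi_4(\bm Z;\bm A,\lambda)\approx\widetilde{\Phi}_5(\bm F;\bm A,p\kappa_*)$.

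It remains to replace $p\kappa_*$ by $p\nu_1$. Here I would use the coupled equations \cref{eq:fixed_points_appendix}: the $\bm F$-problem (namely $p$ i.i.d.\ rows of covariance $\bm \Lambda$, at ridge $p\nu_1$) has effective regularization exactly $\nu_2$ by the second equation of \cref{eq:fixed_points_appendix}, so the trace deterministic equivalent \cref{eq:det_equiv_phi2_main} yields $\Tr(\widehat{\bm \Lambda}_{\bm F}(\widehat{\bm \Lambda}_{\bm F}+\nu_1\bm I)^{-1}) = \Tr(\bm F^{\!\top}\bm F(\bm F^{\!\top}\bm F+p\nu_1\bm I)^{-1}) \approx \Tr(\bm \Lambda(\bm \Lambda+\nu_2\bm I)^{-1}) = n-\lambda/\nu_1$; hence $\nu_1$ almost solves the equation defining $\kappa_*$, and monotonicity of the self-consistent map together with Lipschitz continuity of $\kappa\mapsto\widetilde{\Phi}_5(\bm F;\bm A,p\kappa)$ let one swap $\kappa_*$ for $\nu_1$ at a controlled cost. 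This swap, together with the $\widehat{\bm \Lambda}_{\bm F}\to\bm \Lambda$ concentration of the traces involved at level $\gamma_+$ and the verification that $\bm F\in\mathcal{A}_{\bm F}$, is exactly the origin of the second summand $\widetilde{\rho}_\lambda(n,p)^2\rho_{\gamma_+}(p)^{5/2}\log^3(p)/\sqrt p$ of $\mathcal{E}_1(n,p)$; the three stated conditions on $\lambda$, $\widetilde{\rho}_\lambda$ and $\rho_{\gamma_+}$ are precisely what lets one invoke \cref{thm:main_det_equiv_summary} in both places, and a union bound over the two failure events closes the argument. The hard part is the clean decoupling of the two layers of randomness: one must show that conditioning on $\bm F\in\mathcal{A}_{\bm F}$ leaves $\widehat{\bm \Lambda}_{\bm F}$ ``as regular as'' $\bm \Lambda$ (tail eigenvalues, intrinsic dimension, conditional concentration constant) so that the conditional deterministic-equivalent theorem applies with the advertised random-feature rate, and then that the $\kappa_*\leftrightarrow\nu_1$ and $\widehat{\bm \Lambda}_{\bm F}\leftrightarrow\bm \Lambda$ substitutions contribute only lower-order error --- which is the content already carried out in \cite{defilippis2024dimension}.
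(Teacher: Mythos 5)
The paper does not prove this proposition: it is imported verbatim as Proposition~B.6 of \cite{defilippis2024dimension}, so there is no in-paper argument to compare against. Your outline correctly reconstructs the strategy of that reference --- condition on $\bm F\in\mathcal{A}_{\bm F}$, apply the dimension-free deterministic equivalents for $\Phi_3,\Phi_4$ with covariance $\widehat{\bm\Lambda}_{\bm F}$ at the conditional effective regularization $\kappa_*$, identify $\Psi_2(\kappa;\bm A)$ with $\widetilde{\Phi}_5(\bm F;\bm A,p\kappa)$, and then swap $\kappa_*$ for $\nu_1$ via the coupled fixed-point equations, with the two summands of $\mathcal{E}_1(n,p)$ arising respectively from the conditional $\bm Z$-step and the $\bm F$-level swap. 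This is the right approach and the error bookkeeping is consistent with the stated rate, but as you acknowledge it is a plan rather than a proof: the uniform conditional concentration constant on $\mathcal{A}_{\bm F}$ and the quantitative Lipschitz/stability estimate for the $\kappa_*\leftrightarrow\nu_1$ substitution are exactly the technical content carried out in \cite{defilippis2024dimension}, and your argument is complete only modulo those results.
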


\subsection{Scaling law}\label{app:pre_scaling_law}

For the derivation of the scaling law, we use the results in \cite[Appendix D]{defilippis2024dimension}. We define $T^s_{\delta,\gamma}(\nu)$ as
\begin{equation*}
    T^s_{\delta,\gamma}(\nu) := \sum_{k = 1}^\infty \frac{k^{-s-\delta\alpha}}{(k^{-\alpha}+\nu)^{\gamma}}\,, \quad s \in {0,1},\;0\leq\delta\leq\gamma.
\end{equation*}
Under \cref{ass:powerlaw_rf}, according to \cite[Appendix D]{defilippis2024dimension}, we have the following results
\begin{equation}\label{eq:rate_T}
T_{\delta\gamma}^{s}(\nu) = O\left(\nu^{\nicefrac{1}{\alpha}\left[s-1 + \alpha(\delta-\gamma)\right]\wedge0}\right).
\end{equation}

Next, we present some rates of the quantities used in the deterministic equivalence of random feature ridge regression. The rate of $\nu_2$ is given by
\begin{equation}\label{eq:rate_nu2}
    \nu_2 \approx O\left(n^{-\alpha\left(1 \wedge q \wedge \nicefrac{\ell}{\alpha}\right)}\right),
\end{equation}
and in particular, for \(\Upsilon(\nu_1, \nu_2)\) and \(\chi (\nu_2)\), we have
\begin{equation}\label{eq:rates:Upsilon2}
    1 - \Upsilon(\nu_1, \nu_2) = O(1)\,,
\end{equation}

\begin{equation}\label{eq:rates:chi}
    \chi (\nu_2) = n^{-q}O\left(\nu_2^{-1-\nicefrac{1}{\alpha}}\right)\,.
\end{equation}

\section{Proofs on additional non-asymptotic deterministic equivalents}
\label{app:proof_non-asy_results}

In this section, we aim to generalize \cref{eq:det_equiv_phi2_main} for any PSD matrix $\bm A$, i.e.
\begin{equation*}
         \big\vert \Phi_2({\bm X};{\bm A}) - \Psi_2 (\mu_* ; {\bm A}) \big\vert \leq \widetilde{\mathcal{O}}(n^{-\frac{1}{2}}) \cdot \Psi_2 (\mu_* ; {\bm A}) \,,
\end{equation*}
that is required to derive our non-asymptotic deterministic equivalence for the bias term of the $\ell_2$ norm.

By introducing a change of variable $\mu_* := \mu_* (\lambda) = \lambda / \lambda_*$, we find that $\mu_*$ satisfies the following fixed-point equation:
\begin{align}\label{eq:det_equiv_fixed_point_mu_star}
    \mu_* = \frac{n}{1 + {\rm Tr}({\bm \Sigma} (\mu_* {\bm \Sigma} + \lambda)^{-1} )}.
\end{align}
We define \(\bt\) and \(\bT\) as follows
\[
    \bt = {\bm \Sigma}^{-\nicefrac{1}{2}}{\bm x}\,, \quad \bT = {\bm X}{\bm \Sigma}^{-\nicefrac{1}{2}}\,. 
\]
And the following resolvents are also defined 
\[
    \bR := ({\bm X}^{\!\top} {\bm X} + \lambda)^{-1}\,, \quad \overline{\bR} := (\mu_*{\bm \Sigma} + \lambda)^{-1}\,, \quad 
    {\bm M} := {\bm \Sigma}^{\nicefrac{1}{2}} \bR {\bm \Sigma}^{\nicefrac{1}{2}}\,, \quad \overline{{\bm M}} := {\bm \Sigma}^{\nicefrac{1}{2}} \overline{\bR} {\bm \Sigma}^{\nicefrac{1}{2}}.
\]

Since the proof relies on a leave-one-out argument, we define \({\bm X}_- \in \mathbb{R}^{(n-1) \times d}\) as the data matrix obtained by removing one data. We also introduce the associated resolvent and rescaled resolvent:
\[
    \bR_- := ({\bm X}_-^{\!\top} {\bm X}_- + \lambda)^{-1},
    \overline{\bR}_- :=\left(\frac{n}{1+\kappa}{\bm \Sigma} + \lambda\right)^{-1}, {\bm M}_- := {\bm \Sigma}^{\nicefrac{1}{2}} \bR_- {\bm \Sigma}^{\nicefrac{1}{2}}, \overline{{\bm M}}_- := {\bm \Sigma}^{\nicefrac{1}{2}} \overline{\bR}_- {\bm \Sigma}^{\nicefrac{1}{2}},
\]
where \(\kappa = {\mathbb E}[{\rm Tr}({\bm M}_-)]\).

For the sake of narrative convenience, we introduce a functional used in \cite{misiakiewicz2024non}
\[
\Psi_1(\mu_*; {\bm A}) := {\rm Tr}({\bm A}{\bm \Sigma}(\mu_*{\bm \Sigma} + \lambda)^{-1})\,.
\]

Next, we give the proof of \cref{eq:det_equiv_phi2_main}. We consider the functional
\[
\Phi_2 ({\bm X} ; {\bm A}) = {\rm Tr}({\bm A} {\bm \Sigma}^{-\nicefrac{1}{2}} {\bm X}^{\!\top} {\bm X} ( {\bm X}^{\!\top} {\bm X} + \lambda)^{-1}{\bm \Sigma}^{\nicefrac{1}{2}} ) = {\rm Tr}( {\bm A} \bT^{\!\top} \bT {\bm M})  .
\]
{\bf Remark:} Note that, to align more closely with the proof in \cite{misiakiewicz2024non}, the \(\Phi_2 ({\bm X}; {\bm A})\) defined here differs slightly from the \(\Phi_2 ({\bm X}; {\bm A}, \lambda)\) in \cref{eq:det_equiv_phi2_main}. However, the two definitions are equivalent if we take \({\bm A}\) here as \({\bm A} = {\bm \Sigma}^{-\nicefrac{1}{2}} \bB {\bm \Sigma}^{\nicefrac{1}{2}}\), which recovers the formulation in \cref{eq:det_equiv_phi2_main}.

We show that $\Phi_2 ({\bm X};{\bm A})$ is well approximated by the following deterministic equivalent:
\[
\Psi_2(\mu_*; {\bm A}) = {\rm Tr}({\bm A} \mu_* {\bm \Sigma} ( \mu_* {\bm \Sigma} + \lambda)^{-1} ) = {\rm Tr}({\bm A} {\bm \Sigma} ( {\bm \Sigma} + \lambda_* )^{-1} ).
\]

\begin{theorem}[Deterministic equivalent for ${\rm Tr}({\bm A} \bT^{\!\top} \bT {\bm M})$]\label{thm_app:det_equiv_TrAZZM}
    Assume the features $\{{\bm x}_i\}_{i\in[n]}$ satisfy \cref{ass:concentrated_LR} with a constant $C_* > 0$. Then for any $D,K>0$, there exist constants $\eta \in (0,1/2)$, $C_{D,K} >0$, and $C_{*,D,K}>0$ ensuring the following property holds. For any $n \geq C_{D,K}$ and  $\lambda >0$, if the following condition is satisfied:
    \begin{equation}\label{eq:conditions_thm4}
\lambda \cdot \rho_\lambda (n) \geq n^{-K}, \qquad  \rho_\lambda (n)^{2} \log^{\frac{3}{2}} (n) \leq K \sqrt{n} ,
    \end{equation}
    then for any PSD matrix ${\bm A}$, with probability at least $1 - n^{-D}$, we have that
    \begin{equation}
         \big\vert \Phi_2({\bm X};{\bm A}) - \Psi_2 (\mu_* ; {\bm A}) \big\vert \leq C_{*,D,K} \frac{ \rho_\lambda (n)^{4} \log^{\frac32} (n ) }{\sqrt{n}}   \Psi_2 (\mu_* ; {\bm A}) .
    \end{equation}
\end{theorem}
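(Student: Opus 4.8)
The plan is to run the leave-one-out argument of \cite{misiakiewicz2024non} that proves the special case $\bm A=\bm I$ recorded in \cref{eq:det_equiv_phi2_main}, carrying the general PSD matrix $\bm A$ through every estimate and closing the computation with the $\Phi_1$ bound \cref{eq:det_equiv_phi1_main}, which is already stated for arbitrary PSD matrices. First I would pass to the rescaled variable $\mu_*=\lambda/\lambda_*$, which by \cref{eq:det_equiv_fixed_point_mu_star} satisfies $\mu_*=n/(1+{\rm Tr}(\obM))$ with $\obM={\bm \Sigma}^{1/2}\obR{\bm \Sigma}^{1/2}$ and $\obR=(\mu_*{\bm \Sigma}+\lambda)^{-1}$; since $\obR$ commutes with ${\bm \Sigma}$ one checks that the target is $\Psi_2(\mu_*;\bm A)=\mu_*\,{\rm Tr}(\bm A\obM)$. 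Writing $\bT^{\!\top}\bT=\sum_{i=1}^n\bt_i\bt_i^{\!\top}$ with $\bt_i={\bm \Sigma}^{-1/2}{\bm x}_i$, denoting by $\bm X_{-i}$ the design with row $i$ removed and setting $\bm M_{-i}={\bm \Sigma}^{1/2}(\bm X_{-i}^{\!\top}\bm X_{-i}+\lambda)^{-1}{\bm \Sigma}^{1/2}$, the Sherman--Morrison identity gives $\bt_i^{\!\top}\bm M=\bt_i^{\!\top}\bm M_{-i}/(1+\bt_i^{\!\top}\bm M_{-i}\bt_i)$, hence the exact decomposition
\begin{equation*}
\Phi_2(\bm X;\bm A)={\rm Tr}(\bm A\bT^{\!\top}\bT\bm M)=\sum_{i=1}^n\frac{\bt_i^{\!\top}\bm M_{-i}\bm A\,\bt_i}{1+\bt_i^{\!\top}\bm M_{-i}\bt_i}\,,
\end{equation*}
in which $\bt_i$ is independent of the pair $(\bm M_{-i},\bm A)$.

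The remaining steps are: (i) condition on $\{{\bm x}_j\}_{j\ne i}$ and apply \cref{ass:concentrated_LR} (symmetrizing $\bm M_{-i}\bm A$) with a union bound over $i\in[n]$, giving, with $t\asymp\log n$, the bounds $|\bt_i^{\!\top}\bm M_{-i}\bt_i-{\rm Tr}(\bm M_{-i})|\lesssim t\,\|\bm M_{-i}\|_{\mathbf F}$ and $|\bt_i^{\!\top}\bm M_{-i}\bm A\,\bt_i-{\rm Tr}(\bm A\bm M_{-i})|\lesssim t\,\|\tfrac12(\bm M_{-i}\bm A+\bm A\bm M_{-i})\|_{\mathbf F}$; (ii) on the good event of \cite{misiakiewicz2024non} (which does not depend on $\bm A$) use the two-sided resolvent comparison $\bm M_{-i}\asymp\obM$ up to powers of $\rho_\lambda(n)$ together with the operator bound $\|\bm M_{-i}\|_{\rm op}\lesssim\rho_\lambda(n)(1+{\rm Tr}\obM)/n$, and the elementary estimate $\|\bm M_{-i}\bm A\|_{\mathbf F}^2={\rm Tr}(\bm M_{-i}^2\bm A^2)\le\|\bm M_{-i}\|_{\rm op}\,{\rm Tr}(\bm A\bm M_{-i}\bm A)$, to turn these deviations into a relative error $\lesssim\rho_\lambda(n)^4\log^{3/2}(n)/\sqrt n$ against ${\rm Tr}(\bm M_{-i})$ and ${\rm Tr}(\bm A\bm M_{-i})$; (iii) identify ${\rm Tr}(\bm M_{-i})=\Phi_1(\bm X_{-i};\bm I,\lambda)$ and ${\rm Tr}(\bm A\bm M_{-i})=\Phi_1(\bm X_{-i};\bm A,\lambda)$, and apply \cref{eq:det_equiv_phi1_main} to the $(n-1)$-sample design, together with the standard stability $\mu_*(n-1)\approx\mu_*$ and $\rho_\lambda(n-1)\approx\rho_\lambda(n)$, to get ${\rm Tr}(\bm M_{-i})=(1+O(\text{rate}))\,{\rm Tr}(\obM)$ and ${\rm Tr}(\bm A\bm M_{-i})=(1+O(\text{rate}))\,{\rm Tr}(\bm A\obM)$. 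Substituting, using $1+\bt_i^{\!\top}\bm M_{-i}\bt_i\ge1$ to control the denominator fluctuation, and summing the $n$ terms with $n\,{\rm Tr}(\bm A\obM)/(1+{\rm Tr}\obM)=\mu_*\,{\rm Tr}(\bm A\obM)=\Psi_2(\mu_*;\bm A)$, I would obtain $|\Phi_2(\bm X;\bm A)-\Psi_2(\mu_*;\bm A)|\le C_{*,D,K}\,\rho_\lambda(n)^4\log^{3/2}(n)\,n^{-1/2}\,\Psi_2(\mu_*;\bm A)$.

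The hard part will be step (ii): making the error \emph{relative} to $\Psi_2(\mu_*;\bm A)$ rather than additive, uniformly in the PSD matrix $\bm A$. The naive Frobenius bound $\|\bm M_{-i}\bm A\|_{\mathbf F}\le\sqrt{\|\bm M_{-i}\|_{\rm op}\,\|\bm A\|_{\rm op}\,{\rm Tr}(\bm A\bm M_{-i})}$ is too lossy when $\bm A$ is strongly anisotropic relative to ${\bm \Sigma}$, so one must instead keep every $\bm A$-dependent quantity tied to ${\rm Tr}(\bm A\obM)$ through the two-sided operator comparison $\bm M_{-i}\asymp\obM$ — the same mechanism used in \cite{defilippis2024dimension} to obtain the arbitrary-PSD versions of the $\Phi_3$ and $\Phi_4$ estimates in \cref{thm:main_det_equiv_summary} — and carefully verify that the leave-one-out stability statements for $\kappa={\mathbb E}[{\rm Tr}(\bm M_-)]$, for $\mu_*(n-1)$, and for the resolvent event are unaffected by inserting $\bm A$. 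Steps (i) and (iii) are then essentially routine given \cref{eq:det_equiv_phi1_main} and the analysis already carried out for $\bm A=\bm I$ in \cite{misiakiewicz2024non,defilippis2024dimension}, so I expect the bulk of the new work to be establishing these $\bm A$-uniform quadratic-form and resolvent estimates.
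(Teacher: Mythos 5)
Your opening moves match the paper's: the same Sherman--Morrison leave-one-out decomposition $\Phi_2(\bm X;\bm A)=\sum_{i}\bt_i^{\!\top}\bm M_{-i}\bm A\bt_i/(1+S_i)$, the identification of the leading term with $\Phi_1$ applied to the $(n-1)$-sample design, and the stability of $\mu_*$ under removing one row. The gap is in how you propose to control the fluctuations. Your step (i)--(ii) bounds each of the $n$ quadratic forms individually via \cref{ass:concentrated_LR} and a union bound, then sums. But the per-term deviation is of order $\log(n)\,\lVert\bm M_{-i}\bm A\rVert_{\mathbf F}$, and the ratio $\lVert\obM\bm A\rVert_{\mathbf F}/{\rm Tr}(\bm A\obM)$ is \emph{not} $O(n^{-1/2})$ for general PSD $\bm A$ --- e.g.\ for $\bm A=\bm v\bm v^{\!\top}$ with $\bm v$ an eigenvector of $\obM$ the ratio equals $1$. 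So each term fluctuates relatively by $\Theta(\log n)$, and summing $n$ such terms without cancellation yields a relative error of order $\log n$, not $\rho_\lambda(n)^4\log^{3/2}(n)/\sqrt n$. The two-sided comparison $\bm M_{-i}\asymp\obM$ that you flag as the "hard part" only helps replace $\bm M_{-i}$ by $\obM$ inside traces; it does nothing for the fluctuation of $\bt_i^{\!\top}\bm M_{-i}\bm A\bt_i$ about its conditional mean, which is where the argument actually breaks.

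The paper obtains the $n^{-1/2}$ rate from two cancellation mechanisms that your plan omits. First, it splits the error into a bias $\lvert\mathbb{E}[\Phi_2]-\Psi_2\rvert$ (\cref{prop:TrAZZM_LOO}) and a fluctuation $\lvert\Phi_2-\mathbb{E}[\Phi_2]\rvert$ (\cref{prop:TrAZZM_martingale}). In the bias, after writing $n\mathbb{E}[\bt^{\!\top}\bm M_-\bm A\bt/(1+S)]=n\mathbb{E}[{\rm Tr}(\bm A\bm M_-)]/(1+\kappa)+n\mathbb{E}[(\kappa-S)\bt^{\!\top}\bm M_-\bm A\bt/((1+\kappa)(1+S))]$, the cross term is small \emph{because the factor $\kappa-S$ is centered}: H\"older gives $\mathbb{E}[(\kappa-S)^2]^{1/2}\lesssim\rho_\lambda(n)/\sqrt n$, and the remaining fourth-moment factor is tied back to ${\rm Tr}(\bm A\obM)$ after reducing $\bm A$ to rank one, $\bm A=\bm v\bm v^{\!\top}$ (so that ${\rm Tr}(\bm A\obM_-^2\bm A)$ collapses to a product of traces). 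In the fluctuation, the $\sqrt n$ gain comes from the martingale-difference structure of $\sum_i(\mathbb{E}_i-\mathbb{E}_{i-1}){\rm Tr}(\bm A\bT^{\!\top}\bT\bm M)$ together with per-increment high-probability bounds, not from independent per-term concentration. To repair your proof you would need to add the centering/H\"older step with the rank-one reduction of $\bm A$, and replace the union-bound summation with the martingale argument.
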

\noindent{\bf Remark:} \cref{thm_app:det_equiv_TrAZZM} generalizes \cref{eq:det_equiv_phi2_main}. Note that there are some differences between \(\rho_\lambda\) as defined in \cref{eq:rho_lambda} and \(\nu_\lambda\) as defined in \cite{misiakiewicz2024non}. However, based on the discussion in \cite[Appendix A]{defilippis2024dimension}, \(\nu_\lambda\) can be easily adjusted to match \(\rho_\lambda\). Therefore, while we follow the argument in \cite{misiakiewicz2024non}, we use \(\rho_\lambda\) directly in this work to minimize additional notation.

Following the approach outlined in \cite{misiakiewicz2024non}, our proof involves separately bounding the deterministic and martingale components. This is accomplished in the following two propositions.

\begin{proposition}[Deterministic part of ${\rm Tr}({\bm A} \bT^{\!\top} \bT {\bm M})$]\label{prop:TrAZZM_LOO}
    Under the same assumption as \cref{thm_app:det_equiv_TrAZZM}, there exist constants $C_K$ and $C_{*,K}$, such that for all $n \geq C_K$ and $\lambda >0$ satisfying \cref{eq:conditions_thm4}, and for any PSD matrix ${\bm A}$, we have
    \begin{equation}\label{eq:det_part_TrAZZM}
         \big\vert{\mathbb E} [ \Phi_2({\bm X};{\bm A}) ] - \Psi_2 (\mu_* ; {\bm A}) \big\vert \leq C_{*,K} \frac{ \rho_\lambda (n)^{4} }{\sqrt{n}}   \Psi_2 (\mu_* ; {\bm A}) .
    \end{equation}
\end{proposition}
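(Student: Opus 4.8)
The argument is the leave-one-out (LOO) scheme of \cite{cheng2022dimension,misiakiewicz2024non}, carried through with a general PSD weight $\bA$ rather than $\bA={\bm I}$. Write
$\Phi_2(\bX;\bA)=\Tr(\bA\bT^{\!\top}\bT\bM)=\sum_{i=1}^n \bt_i^{\!\top}\bM\bA\bt_i$ with $\bt_i$ the rows of $\bT$, so that exchangeability gives $\E[\Phi_2(\bX;\bA)]=n\,\E[\bt_1^{\!\top}\bM\bA\bt_1]$. Let $\bX_-$ be $\bX$ with its first row deleted and $\bx_1=\bSigma^{1/2}\bt_1$. The Sherman--Morrison update for $\bR=(\bX_-^{\!\top}\bX_-+\bx_1\bx_1^{\!\top}+\lambda)^{-1}$, conjugated by $\bSigma^{1/2}$ and using $\bSigma^{1/2}\bR_-\bx_1=\bM_-\bt_1$ and $\bx_1^{\!\top}\bR_-\bx_1=\bt_1^{\!\top}\bM_-\bt_1$, gives $\bM=\bM_- - (1+\bt_1^{\!\top}\bM_-\bt_1)^{-1}\bM_-\bt_1\bt_1^{\!\top}\bM_-$, hence the exact identity
\begin{equation*}
\E[\Phi_2(\bX;\bA)] = n\,\E\!\left[\frac{\bt_1^{\!\top}\bM_-\bA\bt_1}{1+\bt_1^{\!\top}\bM_-\bt_1}\right],
\end{equation*}
where $\bM_-$ is independent of $\bt_1$ and the denominator is $\ge 1$.

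Next I would linearize the ratio. Conditioning on $\bX_-$, set $a=\bt_1^{\!\top}\bM_-\bA\bt_1$, $b=\bt_1^{\!\top}\bM_-\bt_1$, with conditional means $\bar a=\Tr(\bM_-\bA)$ and $\bar b=\Tr(\bM_-)$; expanding $a/(1+b)$ around $(\bar a,\bar b)$ and taking the conditional expectation leaves the leading term $\bar a/(1+\bar b)$ plus a covariance correction controlled by $\|\tfrac12(\bM_-\bA+\bA\bM_-)\|_{\mathbf F}\,\|\bM_-\|_{\mathbf F}/(1+\bar b)^2$, obtained from the quadratic-form concentration of \cref{ass:concentrated_LR} for $\bt_1$ (after symmetrizing $\bM_-\bA$ and splitting it into PSD parts to match the hypothesis). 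Multiplied by $n$ and using $\rho_{\lambda}(n)$-type deterministic bounds on $\|\bM_-\|_{\op}$ and on $\bar b$, this correction is $O(\rho_{\lambda}(n)^{c}n^{-1/2})$ relative to the main term (here and below $c$ denotes a fixed integer). I would then replace the still-random traces $\Tr(\bM_-)$, $\Tr(\bM_-\bA)$ by $\kappa=\E[\Tr(\bM_-)]$ and $\E[\Tr(\bM_-\bA)]$ via a resolvent-stability estimate proved by an Efron--Stein / martingale-difference computation over the $n-1$ rows of $\bX_-$; this contributes the remaining $n^{-1/2}$ factor, giving
\begin{equation*}
\E[\Phi_2(\bX;\bA)] = \frac{n}{1+\kappa}\,\E[\Tr(\bM_-\bA)]\cdot\bigl(1+O(\rho_{\lambda}(n)^{c}n^{-1/2})\bigr).
\end{equation*}

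It then remains to evaluate $\E[\Tr(\bM_-\bA)]$ and to dispose of $\kappa$. For the first I would invoke the resolvent deterministic equivalent \eqref{eq:det_equiv_phi1_main} with $n-1$ samples in place of $n$, giving $\E[\Tr(\bM_-\bA)]=\Tr(\overline{\bM}_-\bA)(1+O(\rho_{\lambda}(n)^{c}n^{-1/2}))$ with $\overline{\bM}_-=\bSigma^{1/2}(\tfrac{n}{1+\kappa}\bSigma+\lambda)^{-1}\bSigma^{1/2}$, which is by construction the deterministic resolvent matched to the scalar $\kappa$. For the second, set $\mu:=n/(1+\kappa)$; the trace-concentration step turns $\kappa=\E[\Tr(\bM_-)]$ into the near-self-consistency $\kappa\approx\Tr(\bSigma(\mu\bSigma+\lambda)^{-1})$, so $\mu$ is an approximate root of the fixed-point equation \eqref{eq:det_equiv_fixed_point_mu_star} defining $\mu_*$; monotonicity of that equation bounds $|\mu-\mu_*|$, and Lipschitz continuity (for $\lambda>0$, with modulus controlled by $\rho_{\lambda}(n)$) of $\mu\mapsto\mu\,\Tr(\bA\bSigma(\mu\bSigma+\lambda)^{-1})$ yields
\begin{equation*}
\frac{n}{1+\kappa}\,\Tr(\overline{\bM}_-\bA) = \frac{n}{1+\kappa}\,\Tr\!\left(\bA\bSigma\bigl(\tfrac{n}{1+\kappa}\bSigma+\lambda\bigr)^{-1}\right) = \Psi_2(\mu_*;\bA)\cdot\bigl(1+O(\rho_{\lambda}(n)^{c}n^{-1/2})\bigr).
\end{equation*}
Collecting the relative errors and tracking the accumulated powers of $\rho_{\lambda}(n)$ gives the claimed $\rho_{\lambda}(n)^4 n^{-1/2}$ bound.

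\textbf{Main obstacle.} The one genuinely new difficulty relative to the $\bA={\bm I}$ case of \eqref{eq:det_equiv_phi2_main} — where $\Phi_2(\bX;{\bm I})=d-\lambda\Tr((\bX^{\!\top}\bX+\lambda)^{-1})$ collapses to a single resolvent trace — is keeping every error \emph{relative to} $\Psi_2(\mu_*;\bA)$ for an arbitrary PSD $\bA$. Concretely one must dominate the Frobenius-norm fluctuation scales $\|\bM_-^{1/2}\bA\bM_-^{1/2}\|_{\mathbf F}$ and $\|\bM_-\bA+\bA\bM_-\|_{\mathbf F}$ by $\Tr(\bM_-\bA)$ up to powers of $\rho_{\lambda}(n)$, exploiting the resolvent structure $\bM_-=\bSigma^{1/2}(\bX_-^{\!\top}\bX_-+\lambda)^{-1}\bSigma^{1/2}$ and the effective-rank control encoded in $\rho_{\lambda}(n)$; the same relative-error bookkeeping recurs in the trace-concentration and fixed-point-perturbation steps. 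A secondary, purely notational, point is the translation between the $\nu_\lambda$ of \cite{misiakiewicz2024non} and the $\rho_\lambda$ used here, handled as in \cite[Appendix A]{defilippis2024dimension}.
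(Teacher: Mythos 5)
Your proposal follows essentially the same route as the paper's proof: the leave-one-out/Sherman--Morrison identity giving $\E[\Phi_2(\bX;\bA)]=n\,\E[\bt^{\!\top}\bM_-\bA\bt/(1+S)]$, an exact split into a main term $n\,\E[\Tr(\bA\bM_-)]/(1+\kappa)$ plus a correction involving $\kappa-S$, identification of $\E[\Tr(\bA\bM_-)]$ via the resolvent deterministic equivalent for $n-1$ samples, and a perturbation argument comparing $n/(1+\kappa)$ to $\mu_*$ through the fixed-point equation. All of that matches the paper.

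The one place where you diverge is exactly the spot you flag as the ``main obstacle'': controlling the correction term $n\,\E[(\kappa-S)\,\bt^{\!\top}\bM_-\bA\bt/((1+\kappa)(1+S))]$ relative to $\Psi_2(\mu_*;\bA)$ for general PSD $\bA$. Your plan -- symmetrize $\bM_-\bA$ and split into PSD parts to apply the quadratic-form concentration -- is risky, because $\tfrac12(\bM_-\bA+\bA\bM_-)$ need not be PSD, and after splitting it as a difference of PSD matrices the individual traces (hence the Frobenius fluctuation scales of each part) are not obviously dominated by $\Tr(\bM_-\bA)$ up to powers of $\rho_\lambda(n)$; cancellation between the two parts can make $\Tr(\bM_-\bA)$ much smaller than either. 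The paper avoids this entirely: since both the functional and $\Psi_2(\mu_*;\cdot)$ are linear in $\bA$, it reduces to rank-one $\bA=\bv\bv^{\!\top}$, writes $\bt^{\!\top}\bM_-\bv\bv^{\!\top}\bt$ as a product of two linear forms, and applies H\"older with fourth-moment bounds $\E[(\bt^{\!\top}\bM_-\bv)^4]^{1/2}\lesssim \bv^{\!\top}\bM_-^2\bv$ and $\E[(\bv^{\!\top}\bt)^4]^{1/2}\lesssim\bv^{\!\top}\bv$; the rank-one structure then makes $\Tr(\bA\overline{\bM}_-^2\bA)=\Tr(\bA\overline{\bM}_-)^2$, so the fluctuation scale collapses onto the trace automatically. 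If you replace your symmetrization step by this linearity-plus-rank-one reduction, the rest of your outline goes through and recovers the stated $\rho_\lambda(n)^4 n^{-1/2}$ rate.
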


\begin{proposition}[Martingale part of ${\rm Tr}({\bm A} \bT^{\!\top} \bT {\bm M})$] \label{prop:TrAZZM_martingale}
    Under the same assumption as \cref{thm_app:det_equiv_TrAZZM}, there exist constants $C_{K,D}$ and $C_{*,D,K}$, such that for all $n \geq C_{K,D}$ and  $\lambda >0$ satisfying \cref{eq:conditions_thm4}, and for any PSD matrix ${\bm A}$, we have with probability at least $1 -n^{-D}$ that
    \begin{equation}\label{eq:mart_part_TrAZZM}
         \big\vert \Phi_2({\bm X};{\bm A})  - {\mathbb E} [ \Phi_2({\bm X};{\bm A}) ]  \big\vert \leq C_{*,D,K} \frac{ \rho_\lambda (n)^{3} \log^{\frac32} (n)}{\sqrt{n}} \Psi_2 (\mu_* ; {\bm A}) .
    \end{equation}
\end{proposition}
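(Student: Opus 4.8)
The plan is to prove this concentration bound by a Doob (leave-one-out) martingale decomposition along the rows of ${\bm X}$, to control the resulting increments with a rank-one resolvent perturbation analysis, and to conclude with a Freedman-type martingale inequality. This follows the strategy of \cite{misiakiewicz2024non}, adapted to carry the general PSD weight ${\bm A}$ and to be phrased in terms of $\rho_\lambda$.

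\textbf{Step 1: martingale decomposition.} Let $\E_i[\cdot]$ denote conditional expectation given ${\bm x}_1,\dots,{\bm x}_i$, so $\E_0 = \E$ and $\E_n$ is the identity. Then
\[
\Phi_2({\bm X};{\bm A}) - \E[\Phi_2({\bm X};{\bm A})] = \sum_{i=1}^n D_i, \qquad D_i := (\E_i - \E_{i-1})\,\Phi_2({\bm X};{\bm A}),
\]
is a martingale difference sequence for the row filtration. Writing ${\bm X}_{-i}$ for the data matrix with row $i$ deleted and $\bR_{-i},{\bm M}_{-i}$ for the associated resolvents, we use that $\Phi_2({\bm X}_{-i};{\bm A})$ does not depend on ${\bm x}_i$, hence $(\E_i-\E_{i-1})\Phi_2({\bm X}_{-i};{\bm A})=0$, to rewrite $D_i = (\E_i-\E_{i-1})\big(\Phi_2({\bm X};{\bm A})-\Phi_2({\bm X}_{-i};{\bm A})\big)$.

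\textbf{Step 2: increment control via rank-one perturbation.} Since ${\bm X}^{\!\top}{\bm X} = {\bm X}_{-i}^{\!\top}{\bm X}_{-i} + {\bm x}_i{\bm x}_i^{\!\top}$, the Sherman--Morrison identity gives $\bR = \bR_{-i} - \bR_{-i}{\bm x}_i{\bm x}_i^{\!\top}\bR_{-i}/(1+{\bm x}_i^{\!\top}\bR_{-i}{\bm x}_i)$, and correspondingly for ${\bm M}={\bm \Sigma}^{1/2}\bR{\bm \Sigma}^{1/2}$ and for $\bT^{\!\top}\bT{\bm M}$. Substituting into $\Phi_2({\bm X};{\bm A})={\rm Tr}({\bm A}\bT^{\!\top}\bT{\bm M})$ and expanding, $\Phi_2({\bm X};{\bm A})-\Phi_2({\bm X}_{-i};{\bm A})$ is a sum of finitely many terms, each a ratio whose numerator is a quadratic form in ${\bm t}_i := {\bm \Sigma}^{-1/2}{\bm x}_i$ weighted by matrices built from ${\bm A}$, ${\bm M}_{-i}$ and its deterministic surrogate $\overline{{\bm M}}_{-i}$, and whose denominator is $1+{\bm x}_i^{\!\top}\bR_{-i}{\bm x}_i$. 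Conditionally on ${\bm X}_{-i}$ (independent of ${\bm x}_i$), \cref{ass:concentrated_LR} pins each numerator to its trace and the denominator to $1+{\rm Tr}({\bm M}_{-i})$, and these traces are in turn controlled by the dimension-free deterministic equivalents of \cref{thm:main_det_equiv_summary} together with \cref{prop:TrAZZM_LOO}. This yields, with probability $\ge 1-n^{-D}$, a uniform increment bound $|D_i|\le b := C_{*,D,K}\,\rho_\lambda(n)^{c}\log^{1/2}(n)\,\Psi_2(\mu_*;{\bm A})/n$ and a bound on the predictable quadratic variation $\sum_{i=1}^n \E_{i-1}[D_i^2] \le v := C_{*,D,K}\,\rho_\lambda(n)^{c'}\log(n)\,\Psi_2(\mu_*;{\bm A})^2/n$ for absolute exponents $c,c'$, the regime constraints \cref{eq:conditions_thm4} being used to absorb lower-order terms.

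\textbf{Step 3 and the main obstacle.} A Freedman/Bernstein inequality for martingales (after a preliminary truncation of the sub-exponential increments, which contributes the extra $\log$ factors) gives $|\sum_i D_i| \le C(\sqrt{v\log n} + b\log n)$ with probability $\ge 1-n^{-D}$; inserting $v$ and $b$ and collecting the powers of $\rho_\lambda(n)$ and $\log n$ gives the claimed bound $C_{*,D,K}\,\rho_\lambda(n)^3\log^{3/2}(n)\,n^{-1/2}\,\Psi_2(\mu_*;{\bm A})$. The delicate part is Step 2: extracting the pointwise and second-moment increment bounds with the correct power of the intrinsic-dimension quantity $\rho_\lambda(n)$ and \emph{without} any explicit $d$-dependence. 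This requires (i) an explicit expansion of the rank-one difference so every summand is a ratio of an ${\bm A}$-weighted quadratic form over $1+{\rm Tr}({\bm M}_{-i})$, (ii) showing the denominator stays bounded away from zero uniformly over $i$ so the ratio can be linearized, and (iii) invoking the concentration assumption for ${\bm x}_i$ jointly with the deterministic equivalents for the leave-one-out resolvent functionals, so that the martingale filtration and the randomness of ${\bm X}_{-i}$ interact correctly.
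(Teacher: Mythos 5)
Your proposal is correct and follows essentially the same route as the paper: a Doob martingale decomposition over the rows, a Sherman--Morrison leave-one-out expression for each increment, concentration of the resulting quadratic forms in $\bt_i$ combined with the deterministic equivalents for the leave-one-out resolvents, and a truncated martingale (Freedman/Azuma-type) inequality to conclude. The only refinement worth noting is that the paper collapses the increment into a single quadratic form via the identity ${\bm I}-\bT_i^{\!\top}\bT_i{\bm M}_i=\lambda{\bm\Sigma}^{-1}{\bm M}_i$ and the bound ${\bm M}_i\preceq{\bm\Sigma}/\lambda$, and the denominator issue you flag as delicate is trivial since $1+S_i\geq 1$.
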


Theorem \ref{thm_app:det_equiv_TrAZZM} is obtained by combining the bounds \eqref{eq:det_part_TrAZZM} and \eqref{eq:mart_part_TrAZZM}. Next, we prove the two propositions above separately.

\begin{proof}[Proof of Proposition \ref{prop:TrAZZM_LOO}]
First, by Sherman-Morrison identity 
\[
    {\bm M} = {\bm M}_- - \frac{{\bm M}_-\bt\bt^{\!\top}{\bm M}_-}{1+\bt^{\!\top}{\bm M}_-\bt}\,,\quad \text{and} \quad {\bm M}\bt = \frac{{\bm M}_-\bt}{1+\bt^{\!\top}{\bm M}_-\bt}\,,
\]
we decompose ${\mathbb E}[\Phi_2({\bm X};{\bm A})]$ as 

\[
\begin{aligned}
{\mathbb E} \left[ {\rm Tr}( {\bm A} \bT^{\!\top} \bT {\bm M}) \right] =&~ n{\mathbb E} \left[ \frac{\bt^{\!\top} {\bm M}_- {\bm A} \bt}{1 + S}\right] \\
= &~ n\frac{{\mathbb E}[{\rm Tr} ( {\bm A} {\bm M}_- )]}{1 + \kappa} + n{\mathbb E} \left[ \frac{\kappa - S}{(1+\kappa)(1 + S)} \bt^{\!\top} {\bm M}_- {\bm A} \bt\right],
\end{aligned}
\]
where we denoted $S = \bt^{\!\top} {\bm M}_- \bt$. Therefore, bounding the following two terms is sufficient
\begin{equation}\label{eq:decompo_deterministic_2}
\begin{aligned}
    &~ \left|{\mathbb E}[\Phi_2({\bm X};{\bm A})] -\Psi_2(\mu_*;{\bm A})\right| \\
    \le&~
    \left|\frac{n{\mathbb E}[{\rm Tr} (  {\bm A} {\bm M}_- )]}{1 + \kappa} - \Psi_2( \mu_*;{\bm A})\right|+ 
    \left| n{\mathbb E} \left[ \frac{\kappa - S}{(1+\kappa)(1 + S)} \bt^{\!\top} {\bm M}_- {\bm A} \bt\right] \right|.
\end{aligned}
\end{equation}

For the first term,  recall that $\tmu_*$ is the solution of the equation \eqref{eq:det_equiv_fixed_point_mu_star} where we replaced $n$ by $n-1$, and $\tmu_- := n/(1+ \kappa)$. By \cite[Proposition 2]{misiakiewicz2024non}, we have 
\[
\left| {\mathbb E}[{\rm Tr} (  {\bm A} {\bm M}_- )] - \Psi_1 (\tmu_* ; {\bm A} ) \right| \leq \cE^{(\sfD)}_{1,n-1}  \cdot \Psi_1 (\tmu_* ; {\bm A} )\,,
\]
where $\cE^{(\sfD)}_{1,n-1} = C_{*,K} \frac{ \rho_\lambda (n)^{5/2} }{\sqrt{n-1}}$. For $n \geq C$, we have $\cE_{1,n-1}^{(\sfD)} \leq C \cE_{1,n}^{(\sfD)}$ and by \cite[Lemma 3]{misiakiewicz2024non}, we have
\[
|\Psi_1 (\tmu_* ; {\bm A} ) - \Psi_1 (\mu_* ; {\bm A} )| \leq C \frac{\rho_\lambda(n)}{n} \Psi_1 (\mu_* ; {\bm A} )\,.
\]
Combining the above bounds, we obtain
\[
\left| {\mathbb E}[{\rm Tr} (  {\bm A} {\bm M}_- )] - \Psi_1 (\mu_* ; {\bm A} ) \right| \leq \cE^{(\sfD)}_{1,n}  \cdot \Psi_1 (\mu_* ; {\bm A} )\,.
\]
Furthermore, from the proof of \cite[Proposition 4, Claim 3]{misiakiewicz2024non}, we have
\[
\frac{| \mu_* - \tmu_- |}{\tmu_-} \leq C_{*,K} \frac{ \rho_\lambda (n)^{5/2}}{\sqrt{n}}.
\]
Then we conclude that
\begin{align*}
| \mu_* - \tmu_- | &\leq C_{*,K} \frac{ \rho_\lambda (n)^{5/2}}{\sqrt{n}} \cdot \tmu_- \\
&\leq C_{*,K} \frac{ \rho_\lambda (n)^{5/2}}{\sqrt{n}} \cdot \left(1 + C_{*,K} \frac{ \rho_\lambda (n)^{5/2}}{\sqrt{n}}\right)\mu_* \\
&\leq C_{*,K} \frac{ \rho_\lambda (n)^{5/2}}{\sqrt{n}} \cdot \mu_*,    
\end{align*}
where we use condition \eqref{eq:conditions_thm4} in the last inequality.

Combining this inequality with the previous bounds, we obtain
\[
\begin{aligned}
 \left|\frac{n{\mathbb E}[{\rm Tr} ({\bm A} {\bm M}_-)]}{1 + \kappa} - \Psi_2( \mu_* ; {\bm A})\right| =&~ \left|\tmu_-{\mathbb E}[{\rm Tr} (  {\bm A} {\bm M}_- )] - \mu_*\Psi_1(\mu_* ;{\bm A})\right| \\
 \leq&~ \tmu_- \left| {\mathbb E}[{\rm Tr} ({\bm A} {\bm M}_-)] - \Psi_1 ( \mu_* ; {\bm A}) \right| + \frac{| \tmu_- - \mu_*|}{\mu_*} \cdot \mu_* \Psi_1 ( \mu_* ; {\bm A})\\ 
 \leq&~ C \cE_{1,n}^{(\sfD)} \cdot \mu_*\Psi_1 (\mu_* ; {\bm A})\\
 =&~ C \cE_{1,n}^{(\sfD)} \cdot \Psi_2 (\mu_* ; {\bm A})\,.\\
\end{aligned}
\]

In the next, we aim to estimate the second term in Eq.~\eqref{eq:decompo_deterministic_2}. Here we can reduce ${\bm A}$ to be a rank-one matrix ${\bm A}:= \bm v \bm v^{{\!\top}}$ following \cite[Eq. (77)]{misiakiewicz2024non}. We simply apply H\"older's inequality and obtain
\begin{align*}
&~n\left| {\mathbb E}\left[\frac{\kappa- S}{(1+\kappa)(1+S)}\bt^{\!\top} {\bm M}_- {\bm A} \bt\right]\right|\\
=&~ n {\mathbb E}\left[\left|\frac{\kappa- S}{(1+\kappa)(1+S)}\bt^{\!\top} {\bm M}_- {\bm v} {\bm v}^{\!\top} \bt\right|\right]\\
\leq&~ n\mathbb{E}_{\bm M_-} \left[ {\mathbb E}_{\bt}\left[(\kappa - S)^2 \right]^{\nicefrac{1}{2}} {\mathbb E}_{\bt}\left[(\bt^{\!\top} {\bm M}_- {\bm v} {\bm v}^{\!\top} \bt)^2\right]^{\nicefrac{1}{2}} \right]\\
\leq&~ n\mathbb{E}_{\bm M_-} \left[ {\mathbb E}_{\bt}\left[(\kappa - S)^2 \right]\right]^{\nicefrac{1}{2}} {\mathbb E}_{{\bm M}_-}\left[{\mathbb E}_{\bt}\left[(\bt^{\!\top} {\bm M}_- {\bm v} {\bm v}^{\!\top} \bt)^2\right] \right]^{\nicefrac{1}{2}}\\
\leq&~ n\mathbb{E}_{\bm M_-} \left[ {\mathbb E}_{\bt}\left[(\kappa - S)^2 \right]\right]^{\nicefrac{1}{2}} {\mathbb E}_{{\bm M}_-}\left[{\mathbb E}_{\bt}\left[(\bt^{\!\top} {\bm M}_- {\bm v})^{4}\right]^{\nicefrac{1}{2}} {\mathbb E}_\bt\left[({\bm v}^{\!\top} \bt)^4\right]^{\nicefrac{1}{2}} \right]^{\nicefrac{1}{2}}.
\end{align*}
Each of these terms can be bounded, according to the proof of \cite[Proposition 2]{misiakiewicz2024non}, for the first term, we get
\[
 {\mathbb E}_{{\bm M}_-} \left[ {\mathbb E}_{\bt} \left[ ( \bt^{\!\top} {\bm M}_- \bt - \kappa)^2\right] \right]^{1/2 } \leq  C_{*,K} \frac{ \rho_\lambda (n)}{\sqrt{n}} .
\]

For the second term, first according to \cite[Lemma 2]{misiakiewicz2024non}, we have
\[
{\mathbb E}_\bt \left[(\bt^{\!\top} {\bm M}_- {\bm v})^4\right]^{\nicefrac{1}{2}} \leq C_{*,K} {\bm v}^{\!\top} {\bm M}_-^2 {\bm v} \,,
\]
\[
{\mathbb E}_\bt \left[({\bm v}^{\!\top} \bt)^4\right]^{\nicefrac{1}{2}} \leq C_{*,K} {\bm v}^{\!\top} {\bm v} \,.
\]
Thus we have
\[
\begin{aligned}
{\mathbb E}_{{\bm M}_-}\left[{\mathbb E}_{\bt}\left[(\bt^{\!\top} {\bm M}_- {\bm v})^{4}\right]^{\nicefrac{1}{2}} {\mathbb E}_\bt\left[({\bm v}^{\!\top} \bt)^4\right]^{\nicefrac{1}{2}} \right]^{\nicefrac{1}{2}}
&\leq {\mathbb E}_{{\bm M}_-}\left[C_{*,K} {\bm v}^{\!\top} {\bm M}_-^2 {\bm v} {\bm v}^{\!\top} {\bm v} \right]^{\nicefrac{1}{2}}\\
&= C_{*,K} {\mathbb E}_{{\bm M}_-}\left[{\rm Tr}({\bm A}{\bm M}_-^2 {\bm A}) \right]^{\nicefrac{1}{2}}.
\end{aligned}
\]
Then according to \cite[Lemma 4.(b)]{misiakiewicz2024non}, we have
\[
{\mathbb E}_{{\bm M}_-}\left[{\rm Tr}({\bm A}{\bm M}_-^2 {\bm A}) \right] \leq C_{*,K}\rho^2_{\lambda}(n){\rm Tr}({\bm A} \obM_-^2 {\bm A}) = C_{*,K}\rho^2_{\lambda}(n){\rm Tr}({\bm A} \obM_-)^2,
\]
where the last inequality holds due to ${\bm A}\obM_-$ being a rank-1 matrix. Combining the bounds for the second term, we have
\[
\begin{aligned}
{\mathbb E}_{{\bm M}_-}\left[{\mathbb E}_{\bt}\left[(\bt^{\!\top} {\bm M}_- {\bm v})^{4}\right]^{\nicefrac{1}{2}} {\mathbb E}_\bt\left[({\bm v}^{\!\top} \bt)^4\right]^{\nicefrac{1}{2}} \right]^{\nicefrac{1}{2}} \leq  C_{*,K}\rho_{\lambda}(n){\rm Tr}({\bm A}\obM_-) \leq C_{*,K}\rho_{\lambda}^2(n){\rm Tr}({\bm A}\obM).
\end{aligned}
\]
By combining the above bounds for the first and second term, we have
\[
\begin{aligned}
n\left| {\mathbb E}\left[\frac{\kappa- S}{(1+\kappa)(1+S)}\bT^{\!\top} {\bm M}_- {\bm A} \bT\right]\right| &\leq C_{*,K} \frac{ \rho_\lambda^3 (n) }{\sqrt{n}} n{\rm Tr}({\bm A}\obM)\\
&\leq C_{*,K} \frac{ \rho_\lambda^4 (n) }{\sqrt{n}} \mu_*{\rm Tr}({\bm A}\obM),
\end{aligned}
\]
where we use $\mu_* = \frac{n}{1+{\rm Tr}(\obM)} \geq \frac{n}{2\rho_\lambda(n)}$ according to \cite[Lemma 3]{misiakiewicz2024non} in the last inequality.

Combining the above bounds concludes the proof.
\end{proof}

\begin{proof}[Proof of Proposition \cref{prop:TrAZZM_martingale}]
The martingale argument follows a similar approach to the proofs of \cite[Propositions 3 and 5]{misiakiewicz2024non}. The key remaining steps are to adjust Step 2 in \cite[Proposition 3]{misiakiewicz2024non} and establish high-probability bounds for each term in the martingale difference sequence.

We rewrite this term as a martingale difference sequence
\[
\begin{aligned}
 S_n := {\rm Tr}({\bm A} \bT^{\!\top} \bT {\bm M}) - {\mathbb E}[{\rm Tr}({\bm A} \bT^{\!\top} \bT {\bm M})] = \sum_{i =1}^n  \left( {\mathbb E}_i - {\mathbb E}_{i-1} \right) {\rm Tr}( {\bm A} \bT^{\!\top} \bT {\bm M}) =:\sum_{i = 1}^n \Delta_i\,,
\end{aligned}
\]
where \({\mathbb E}_i\) is denoted as the expectation over \(\{{\bm x}_{i+1},\cdots,{\bm x}_n\}\).

We show below that $|\Delta_i| \leq R$ with probability at least $1 - n^{-D}$ with
\begin{equation}\label{eq:Rchoice_AMZZM}
R =  C_{*,D,K} \frac{ \rho_\lambda(n)^2 \log(n)}{n} \Psi_2( \mu_* ; {\bm A}).
\end{equation}

For Step 3 and bounding ${\mathbb E}_{i-1}[\Delta_i \ind_{\Delta_i \not\in[-R,R]}] $, observe that with probability at least $1-n^{-D}$, by \cite[Lemma 4.(b)]{misiakiewicz2024non}
\[
\begin{aligned}
{\mathbb E}_{i-1} [ \Delta_i^2]^{\nicefrac{1}{2}} 
&~\leq 
2 {\mathbb E}_{i-1} \left[ \frac{(\bt^{\!\top} {\bm M}_- {\bm A} \bt)^2}{(1+S)^2}\right]^{\nicefrac{1}{2}} \\
&~\leq 
C_{*,D,K} \frac{\rho_\lambda (n)^3\log^{1/2}(n)}{n} \mu_* {\rm Tr}({\bm A} \obM) \\
&~\leq 
C_{*,D,K} \frac{\rho_\lambda (n)^3\log^{1/2}(n)}{n} \Psi_2 (\mu_* ; {\bm A}).
\end{aligned}
\]

We establish a high-probability bound for \(\Delta_i\) by first decomposing it and strategically adding and subtracting carefully chosen terms. Observing that
\[
\Delta_i = \left( {\mathbb E}_i - {\mathbb E}_{i-1} \right) {\rm Tr}( {\bm A} \bT^{\!\top} \bT {\bm M}) = \left( {\mathbb E}_i - {\mathbb E}_{i-1} \right) \left( {\rm Tr}( {\bm A} \bT^{\!\top} \bT {\bm M}) - {\rm Tr}( {\bm A} \bT_i^{\!\top} \bT_i {\bm M}_i)\right)  ,
\]
where ${\bm M}_i$ is the rescaled resolvent removes ${\bm x}_i$, and we used that ${\mathbb E}_i\left[{\bm A}\bT_i^{\!\top} \bT_i {\bm M}_i\right] = {\mathbb E}_{i-1}\left[{\bm A}\bT_i^{\!\top} \bT_i {\bm M}_i\right]$, and we'll write (recall that $S_i = \bt_i^{\!\top} {\bm M}_i \bt_i$)
\begin{align*}
    {\rm Tr}({\bm A} \bT^{\!\top} \bT {\bm M} ) -{\rm Tr}({\bm A} \bT_i^{\!\top} \bT_i {\bm M}_i )
    =&~
    {\rm Tr}({\bm A} (\bt_i \bt_i^{\!\top} + \bT_i^{\!\top} \bT_i) {\bm M} ) -{\rm Tr}({\bm A} \bT_i^{\!\top} \bT_i {\bm M}_i )\\
    =&~ 
    \bt_i^{\!\top} {\bm M} {\bm A}\bt_i +   {\rm Tr}( {\bm A} \bT_i^{\!\top} \bT_i {\bm M})
    -{\rm Tr}( {\bm A} \bT_i^{\!\top}\bT_i {\bm M}_i)\\
    =&~ 
    \frac1{(1+S_i)} \left\{ \bt_i^{\!\top} {\bm M}_i {\bm A} \bt_i -   {\rm Tr}({\bm A} \bT_i^{\!\top} \bT_i {\bm M}_i \bt_i\bt_i^{\!\top} {\bm M}_i) \right\}\\
    =&~ 
    \frac1{(1+S_i)} {\rm Tr}(\bt_i \bt_i^{\!\top} {\bm M}_i {\bm A} ({\bm I} - \bT_i^{\!\top} \bT_i {\bm M}_i)).
\end{align*}
Observing that 
\begin{equation*}
  {\bm I} - \bT_i^{\!\top}\bT_i {\bm M}_i = \lambda {\bm \Sigma}^{-1} {\bm M}_i,
\end{equation*}
we can write for $j\in\{i-1,i\}$, with probability at least $1 - n^{-D}$,
\begin{align*}
\left|{\mathbb E}_{j} \left[
\frac1{(1+S_i)} {\rm Tr}( \bt_i\bt_i^{\!\top} {\bm M}_i {\bm A} ({\bm I} -\bT_i^{\!\top}\bT_i {\bm M}_i ))
\right]\right| 
\le &
\lambda {\mathbb E}_{j} \left[ | \bt_i^{\!\top} {\bm M}_i {\bm A} {\bm \Sigma}^{-1} {\bm M}_i \bt_i | \right]\\ 
\le&~
{\mathbb E}_{j} \left[ | \bt_i^{\!\top} {\bm M}_i {\bm A} \bt_i | \right]\\
\le&~
C_{*,D}  \log(n) {\mathbb E}_{j} \left[ {\rm Tr}({\bm A}{\bm M}_i) \right]\\
\leq&~ C_{*,D}  \rho_\lambda(n)  \log(n) {\rm Tr}({\bm A} \obM)\\
\leq&~ C_{*,D} \frac{ \rho_\lambda(n)^2  \log(n)}{n} \mu_*{\rm Tr}({\bm A}\obM)\\
=&~ C_{*,D} \frac{ \rho_\lambda(n)^2  \log(n)}{n} \Psi_2 (\mu_* ; {\bm A}),
\end{align*}
where we used that ${\bm M}_i \preceq {\bm \Sigma} / \lambda$ by definition in the second inequality, \cite[Lemma 4.(b)]{misiakiewicz2024non} in the fourth inequality, and $\mu_* = \frac{n}{1+{\rm Tr}(\obM)} \geq \frac{n}{2\rho_\lambda(n)}$ in the last inequality.

Applying a union bound and adjusting the choice of \(D\), we conclude that with probability at least \(1 - n^{-D}\), the following holds for all \(i \in [n]\):
\[
| \Delta_i | \leq C_{*,D,K} \frac{ \rho_\lambda (n)^2 \log (n) }{n} \Psi_2 (\mu_* ; {\bm A})\,.
\]
\end{proof}

\section{Main results and proofs for linear regression}
\label{sec:linear}

In this section, we study the asymptotic and non-asymptotic deterministic equivalent of the (ridge/ridgeless) estimator norm for linear regression. 
Based on these results, we are able to mathematically characterize the test risk under norm-based capacity.
\cref{tab:linear} presents our main results for linear regression.

\begin{table}[!htb]
    \centering
    \fontsize{8}{9}\selectfont
    \begin{threeparttable}
        \caption{Summary of our main results for \textbf{Linear Regression}.}
        \label{tab:linear}
        \begin{tabular}{ccccc}
            \toprule
            Type & Results & Regularization & Deterministic equivalents ${\mathsf N}$ & Relationship between ${\mathsf R}$ and ${\mathsf N}$ \\
            \midrule
            \multirow{3}{*}{\shortstack{Deterministic\\equivalence}}
            & \cref{prop:asy_equiv_norm_LR}      & \cellcolor{green!10} $\lambda > 0$    & \cellcolor{yellow!10} Asymptotic      & - \\
            \cmidrule(lr){2-5}
            & \cref{prop:asy_equiv_norm_LR_minnorm}  & \cellcolor{blue!10} $\lambda \to 0$    & \cellcolor{yellow!10} Asymptotic  & - \\
            \cmidrule(lr){2-5}
            & \cref{prop:det_equiv_LR}  & \cellcolor{green!10} $\lambda > 0$    & \cellcolor{red!10} Non-asymptotic  & - \\
            \cmidrule(lr){1-5}
            \multirow{4}{*}{Relationship}
            & \cref{prop:relation_id}            & \cellcolor{green!10} $\lambda > 0$    & -               & Under $\bSigma={\bm I}_d$ \\
            \cmidrule(lr){2-5}
            & \cref{prop:relation_minnorm_underparam} & \cellcolor{blue!10} $\lambda \to 0$    & -               & Under-parameterized regime \\
            \cmidrule(lr){2-5}
            & \cref{prop:relation_minnorm_id}    & \cellcolor{blue!10} $\lambda \to 0$  & -               & Under $\bSigma={\bm I}_d$ \\
            \cmidrule(lr){2-5}
            & \cref{prop:relation_minnorm_pl}    & \cellcolor{blue!10} $\lambda \to 0$  & -               & Under \cref{ass:powerlaw} (power-law) \\
            \bottomrule
        \end{tabular}
    \end{threeparttable}
\end{table}

To deliver our results, we need the following lemma for the bias-variance decomposition of the estimator's norm.
\begin{lemma}[Bias-variance decomposition of $\mathcal{N}_{\lambda}^{\tt LS}$]
\label{lemma:biasvariance}
We have the bias-variance decomposition ${\mathbb E}_{\varepsilon}\|\hat{{\bm \beta}}\|_2^2 =: \mathcal{N}_{\lambda}^{\tt LS} = \mathcal{B}^{\tt LS}_{\mathcal{N},\lambda} + \mathcal{V}^{\tt LS}_{\mathcal{N},\lambda}$, where $\mathcal{B}^{\tt LS}_{\mathcal{N},\lambda}$ and $\mathcal{V}^{\tt LS}_{\mathcal{N},\lambda}$ are defined as 
\[
\begin{aligned}
    \mathcal{B}^{\tt LS}_{\mathcal{N},\lambda} := \langle{\bm \beta}_*, ({\bm X}^{\!\top}{\bm X})^2({\bm X}^{\!\top}{\bm X} + \lambda{\bm I})^{-2}{\bm \beta}_*\rangle\,, \quad \mathcal{V}^{\tt LS}_{\mathcal{N},\lambda} := \sigma^2{\rm Tr}({\bm X}^{\!\top}{\bm X}({\bm X}^{\!\top}{\bm X} + \lambda{\bm I})^{-2})\,.
\end{aligned}
\]
\end{lemma}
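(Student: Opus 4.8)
The plan is to unwind the closed form $\hat{{\bm \beta}} = ({\bm X}^{\!\top}{\bm X} + \lambda{\bm I})^{-1}{\bm X}^{\!\top}{\bm y}$, isolate the noise contribution, and then invoke the elementary second-moment identity $\mathbb{E}_{\varepsilon}\|{\bm v}\|_2^2 = \|\mathbb{E}_{\varepsilon}{\bm v}\|_2^2 + {\rm Tr}(\mathrm{Cov}_{\varepsilon}({\bm v}))$ applied coordinatewise to ${\bm v} = \hat{{\bm \beta}}$. Substituting ${\bm y} = {\bm X}{\bm \beta}_* + {\bm \varepsilon}$ gives $\hat{{\bm \beta}} = ({\bm X}^{\!\top}{\bm X} + \lambda{\bm I})^{-1}{\bm X}^{\!\top}{\bm X}{\bm \beta}_* + ({\bm X}^{\!\top}{\bm X} + \lambda{\bm I})^{-1}{\bm X}^{\!\top}{\bm \varepsilon}$; since $\mathbb{E}[{\bm \varepsilon}] = 0$ and $\mathrm{Cov}({\bm \varepsilon}) = \sigma^2{\bm I}_n$, the first summand is exactly $\mathbb{E}_{\varepsilon}[\hat{{\bm \beta}}]$ and the second carries all the randomness, so $\mathrm{Cov}_{\varepsilon}(\hat{{\bm \beta}}) = \sigma^2 ({\bm X}^{\!\top}{\bm X} + \lambda{\bm I})^{-1}{\bm X}^{\!\top}{\bm X}({\bm X}^{\!\top}{\bm X} + \lambda{\bm I})^{-1}$.

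Next I would read off the two pieces. For the bias term, $\|\mathbb{E}_{\varepsilon}[\hat{{\bm \beta}}]\|_2^2 = \langle {\bm \beta}_*, ({\bm X}^{\!\top}{\bm X})({\bm X}^{\!\top}{\bm X} + \lambda{\bm I})^{-2}({\bm X}^{\!\top}{\bm X}){\bm \beta}_*\rangle$; because every matrix in sight is a rational function of the single symmetric PSD matrix ${\bm X}^{\!\top}{\bm X}$, they all commute and can be grouped to yield $\langle {\bm \beta}_*, ({\bm X}^{\!\top}{\bm X})^2({\bm X}^{\!\top}{\bm X} + \lambda{\bm I})^{-2}{\bm \beta}_*\rangle = \mathcal{B}^{\tt LS}_{\mathcal{N},\lambda}$. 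For the variance term, the same commutativity together with the cyclic invariance of the trace gives ${\rm Tr}(\mathrm{Cov}_{\varepsilon}(\hat{{\bm \beta}})) = \sigma^2 {\rm Tr}\big(({\bm X}^{\!\top}{\bm X} + \lambda{\bm I})^{-1}{\bm X}^{\!\top}{\bm X}({\bm X}^{\!\top}{\bm X} + \lambda{\bm I})^{-1}\big) = \sigma^2 {\rm Tr}\big({\bm X}^{\!\top}{\bm X}({\bm X}^{\!\top}{\bm X} + \lambda{\bm I})^{-2}\big) = \mathcal{V}^{\tt LS}_{\mathcal{N},\lambda}$. Adding these via the second-moment identity proves the claim.

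The argument is entirely elementary and I do not anticipate a genuine obstacle; the only two points worth spelling out are (i) that the bias-variance split of $\mathbb{E}_{\varepsilon}\|\hat{{\bm \beta}}\|_2^2$ is the Pythagorean identity applied componentwise, with the cross term vanishing because $\hat{{\bm \beta}} - \mathbb{E}_{\varepsilon}[\hat{{\bm \beta}}]$ is centered, and (ii) the commutativity of the resolvents with the powers of ${\bm X}^{\!\top}{\bm X}$ that lets the expressions be rewritten in the stated compact form. I would also note that the identity is pathwise in ${\bm X}$ (the only expectation taken is over ${\bm \varepsilon}$), so for $\lambda > 0$ no invertibility assumption on ${\bm X}^{\!\top}{\bm X}$ is needed, and the $\lambda = 0$ instance used later is recovered in the under-parameterized regime where ${\bm X}^{\!\top}{\bm X}$ is invertible (the over-parameterized min-norm case being treated separately through the pseudoinverse $\hat{{\bm \beta}}_{\min} = {\bm X}^{\!\top}({\bm X}{\bm X}^{\!\top})^{-1}{\bm y}$).
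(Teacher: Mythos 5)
Your proposal is correct and follows essentially the same route as the paper: substitute ${\bm y} = {\bm X}{\bm \beta}_* + {\bm \varepsilon}$ into the closed form of $\hat{{\bm \beta}}$, note the cross term vanishes since ${\bm \varepsilon}$ is centered, and simplify the two resulting terms using commutativity of rational functions of ${\bm X}^{\!\top}{\bm X}$ and cyclicity of the trace. Your explicit remarks on the Pythagorean structure and the pathwise-in-${\bm X}$ nature of the identity are accurate but add nothing beyond what the paper's computation already contains.
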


And we present the proof of \cref{lemma:biasvariance} as below.

\begin{proof}[Proof of \cref{lemma:biasvariance}]
Here we give the bias-variance decomposition of ${\mathbb E}_{\varepsilon}\|\hat{{\bm \beta}}\|_2^2$. The formulation of ${\mathbb E}_{\varepsilon}\|\hat{{\bm \beta}}\|_2^2$ is given by
\[
\begin{aligned}
     {\mathbb E}_{\varepsilon}\|\hat{{\bm \beta}}\|_2^2 =\|\left( {\bm X}^{\!\top} {\bm X} + \lambda {\bm I} \right)^{-1} {\bm X}^{\!\top} {\bm y} \|_2^2\,,
\end{aligned}
\]
which can be decomposed as
\[
\begin{aligned}
    {\mathbb E}_{\varepsilon}\|\hat{{\bm \beta}}\|_2^2 =&~ {\mathbb E}_{\varepsilon}\|\left( {\bm X}^{\!\top} {\bm X} + \lambda {\bm I} \right)^{-1} {\bm X}^{\!\top} ({\bm X}{\bm \beta}_* + \bm\varepsilon) \|_2^2\\
    =&~ \|\left( {\bm X}^{\!\top} {\bm X} + \lambda {\bm I} \right)^{-1} {\bm X}^{\!\top} {\bm X}{\bm \beta}_* \|_2^2 + {\mathbb E}_{\varepsilon}\|\left( {\bm X}^{\!\top} {\bm X} + \lambda {\bm I} \right)^{-1} {\bm X}^{\!\top} \bm\varepsilon \|_2^2\\
    =&~\langle{\bm \beta}_*, ({\bm X}^{\!\top}{\bm X})^2({\bm X}^{\!\top}{\bm X} + \lambda{\bm I})^{-2}{\bm \beta}_*\rangle + \sigma^2{\rm Tr}({\bm X}^{\!\top}{\bm X}({\bm X}^{\!\top}{\bm X} + \lambda{\bm I})^{-2})\\
    =:&~ \mathcal{B}_{\mathcal{N},\lambda}^{\tt LS} + \mathcal{V}_{\mathcal{N},\lambda}^{\tt LS}\,.
\end{aligned}
\]
Accordingly, we can see that it shares the similar spirit with the bias-variance decomposition.
\end{proof}

Our first goal is to relate $\mathcal{B}^{\tt LS}_{\mathcal{N},\lambda}$ and $\mathcal{V}^{\tt LS}_{\mathcal{N},\lambda}$ to their respective deterministic equivalents. And next, we will present the results for both asymptotic and non-asymptotic regime separately.

\subsection{Asymptotic deterministic equivalence for ridge regression}
\label{app:asy_deter_equiv_lr}

In this section, we establish the asymptotic approximation guarantees for linear regression, focusing on the relationships between the $\ell_2$ norm of the estimator and its deterministic equivalent.
These results can be recovered by our non-asymptotic results, but we put them here just for completeness.

Before presenting the results on deterministic equivalence for ridge regression and their proofs, we begin by introducing a couple of useful corollaries from \cref{prop:spectral,prop:spectralK}.

\begin{corollary}
\label{prop:spectral2}
    Under the same condition of \cref{prop:spectral}, we have
    \begin{align}\label{eq:trA3}
        {\rm Tr} ( {\bm A} {\bm X}^{\!\top} {\bm X} ( {\bm X}^{\!\top} {\bm X} + \lambda )^{-2}) \sim&~ \frac{{\rm Tr}({\bm A}{\bm \Sigma}({\bm \Sigma} + \lambda_*{\bm I})^{-2})}{n - {\rm df}_2(\lambda_*)}\,.
        \end{align}
        Specifically, if ${\bm A} = {\bm \Sigma}$, we have
        \begin{align}\label{eq:trS3}
        {\rm Tr} ( {\bm \Sigma} {\bm X}^{\!\top} {\bm X} ( {\bm X}^{\!\top} {\bm X} + \lambda )^{-2}) \sim&~ \frac{{\rm df}_2(\lambda_*)}{n - {\rm df}_2(\lambda_*)}\,.
    \end{align}
\end{corollary}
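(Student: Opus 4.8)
The plan is to deduce both displays directly from Proposition~\ref{prop:spectral} by a one-line resolvent identity, avoiding any fresh random matrix analysis. The starting point is
\[
{\bm X}^{\!\top}{\bm X}({\bm X}^{\!\top}{\bm X}+\lambda{\bm I})^{-2} = ({\bm X}^{\!\top}{\bm X}+\lambda{\bm I})^{-1} - \lambda\,({\bm X}^{\!\top}{\bm X}+\lambda{\bm I})^{-2},
\]
which one checks by multiplying through by $({\bm X}^{\!\top}{\bm X}+\lambda{\bm I})^2$. Hence $\Tr({\bm A}{\bm X}^{\!\top}{\bm X}({\bm X}^{\!\top}{\bm X}+\lambda{\bm I})^{-2})$ is an affine combination of exactly the two functionals controlled by \eqref{eq:trA2} and by \eqref{eq:trAB2} with ${\bm B}={\bm I}$. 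Substituting those asymptotic equivalents gives, with $\lambda_*$ solving \eqref{eq:def_lambda_star_asy},
\[
\Tr\big({\bm A}{\bm X}^{\!\top}{\bm X}({\bm X}^{\!\top}{\bm X}+\lambda{\bm I})^{-2}\big)\sim \frac{\lambda_*}{\lambda}\Tr\big({\bm A}({\bm \Sigma}+\lambda_*{\bm I})^{-1}\big) - \frac{\lambda_*^2}{\lambda}\Tr\big({\bm A}({\bm \Sigma}+\lambda_*{\bm I})^{-2}\big) - \frac{\lambda_*^2}{\lambda}\,\Tr\big({\bm A}({\bm \Sigma}+\lambda_*{\bm I})^{-2}{\bm \Sigma}\big)\cdot\Tr\big(({\bm \Sigma}+\lambda_*{\bm I})^{-2}{\bm \Sigma}\big)\cdot\frac{1}{n-{\rm df}_2(\lambda_*)}.
\]

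Next I would clean this up with two elementary algebraic identities: (i) $({\bm \Sigma}+\lambda_*{\bm I})^{-1}-\lambda_*({\bm \Sigma}+\lambda_*{\bm I})^{-2}={\bm \Sigma}({\bm \Sigma}+\lambda_*{\bm I})^{-2}$, which collapses the first two terms to $\tfrac{\lambda_*}{\lambda}\Tr({\bm A}{\bm \Sigma}({\bm \Sigma}+\lambda_*{\bm I})^{-2})$; and (ii) $\lambda_*\Tr(({\bm \Sigma}+\lambda_*{\bm I})^{-2}{\bm \Sigma}) = {\rm df}_1(\lambda_*)-{\rm df}_2(\lambda_*)$, obtained from $\lambda_*\,{\bm \Sigma}({\bm \Sigma}+\lambda_*{\bm I})^{-2}={\bm \Sigma}({\bm \Sigma}+\lambda_*{\bm I})^{-1}-{\bm \Sigma}^2({\bm \Sigma}+\lambda_*{\bm I})^{-2}$. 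After factoring out $\tfrac{\lambda_*}{\lambda}\Tr({\bm A}{\bm \Sigma}({\bm \Sigma}+\lambda_*{\bm I})^{-2})$, the bracket becomes $1-\tfrac{{\rm df}_1(\lambda_*)-{\rm df}_2(\lambda_*)}{n-{\rm df}_2(\lambda_*)}=\tfrac{n-{\rm df}_1(\lambda_*)}{n-{\rm df}_2(\lambda_*)}$. Finally the defining relation $n-{\rm df}_1(\lambda_*)=\lambda/\lambda_*$ kills the prefactor $\lambda_*/\lambda$ exactly, yielding \eqref{eq:trA3}. Specializing ${\bm A}={\bm \Sigma}$ turns the numerator $\Tr({\bm \Sigma}^2({\bm \Sigma}+\lambda_*{\bm I})^{-2})$ into ${\rm df}_2(\lambda_*)$, which is \eqref{eq:trS3}.

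There is no real obstacle here: once the resolvent identity is in place the content is entirely bookkeeping. The only point that requires care is checking that the spurious $1/\lambda$ factors introduced by \eqref{eq:trA2}--\eqref{eq:trAB2} cancel, which they do precisely because of the self-consistent equation $n-{\rm df}_1(\lambda_*)=\lambda/\lambda_*$; I would emphasize this cancellation in the write-up. (An alternative derivation differentiates \eqref{eq:trA1} in $\lambda$, using $-\partial_\lambda[{\bm X}^{\!\top}{\bm X}({\bm X}^{\!\top}{\bm X}+\lambda{\bm I})^{-1}]={\bm X}^{\!\top}{\bm X}({\bm X}^{\!\top}{\bm X}+\lambda{\bm I})^{-2}$ together with $\partial_\lambda\lambda_* = 1/(n-{\rm df}_2(\lambda_*))$ from implicit differentiation of \eqref{eq:def_lambda_star_asy}; this reproduces the same formula but needs a justification that differentiation commutes with the deterministic equivalence, so I would keep the purely algebraic route as the main proof.)
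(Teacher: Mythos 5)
Your proof is correct and follows essentially the same route as the paper: the paper reduces the functional to Proposition~\ref{prop:spectral} via the identity $\Tr({\bm A}{\bm X}^{\!\top}{\bm X}({\bm X}^{\!\top}{\bm X}+\lambda)^{-2})=\tfrac{1}{\lambda}\big[\Tr({\bm A}{\bm X}^{\!\top}{\bm X}({\bm X}^{\!\top}{\bm X}+\lambda)^{-1})-\Tr({\bm A}({\bm X}^{\!\top}{\bm X})^2({\bm X}^{\!\top}{\bm X}+\lambda)^{-2})\big]$ and invokes \eqref{eq:trA1} and \eqref{eq:trAB1}, whereas you use the complementary decomposition ${\bm X}^{\!\top}{\bm X}({\bm X}^{\!\top}{\bm X}+\lambda)^{-2}=({\bm X}^{\!\top}{\bm X}+\lambda)^{-1}-\lambda({\bm X}^{\!\top}{\bm X}+\lambda)^{-2}$ and invoke \eqref{eq:trA2} and \eqref{eq:trAB2} with ${\bm B}={\bm I}$, and after the cancellation driven by $n-{\rm df}_1(\lambda_*)=\lambda/\lambda_*$ both reduce to the same bookkeeping and yield \eqref{eq:trA3}. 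Your algebra checks out, and the specialization ${\bm A}={\bm \Sigma}$ giving \eqref{eq:trS3} is immediate.
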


\begin{corollary}
\label{prop:spectralK2}
Under the same condition of \cref{prop:spectralK}, we have
\begin{align}
\label{eq:trA3K}
{\rm Tr} ( {\bm A} {\bm T}^{\!\top} ( {\bm T} {\bm \Sigma} {\bm T}^{\!\top} + \lambda )^{-2} {\bm T}) \sim&~ \frac{{\rm Tr} ( {\bm A} ( {\bm \Sigma} + \lambda_* )^{-2})}{ n -  {\rm df}_2(\lambda_*) }\,.
\end{align}
\end{corollary}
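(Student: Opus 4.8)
The plan is to obtain \cref{eq:trA3K} by differentiating the linear-resolvent equivalent \cref{eq:trA1K} with respect to $\lambda$. This is precisely why \cref{prop:spectralK} is stated on the complex domain $\C \setminus \R_+$: on that set every quantity appearing there is analytic in $\lambda$, and an equivalence $f_n = g_n(1 + o(1))$ holding uniformly on compact subsets of $\C \setminus \R_+$ may be differentiated term by term --- the multiplicative error $o(1)$ is analytic, so its $\lambda$-derivative is again $o(1)$ on compacts by a Cauchy estimate, and $g_n / g_n'$ stays bounded there. Differentiating the left-hand side of \cref{eq:trA1K} via $\partial_\lambda ({\bm T}{\bm \Sigma}{\bm T}^{\!\top} + \lambda{\bm I})^{-1} = -({\bm T}{\bm \Sigma}{\bm T}^{\!\top} + \lambda{\bm I})^{-2}$ and the right-hand side via $\partial_\lambda ({\bm \Sigma} + \lambda_*{\bm I})^{-1} = -({\bm \Sigma} + \lambda_*{\bm I})^{-2}\, \partial_\lambda \lambda_*$ gives
\[
{\rm Tr}\bigl( {\bm A}{\bm T}^{\!\top} ({\bm T}{\bm \Sigma}{\bm T}^{\!\top} + \lambda{\bm I})^{-2} {\bm T} \bigr) \sim \partial_\lambda \lambda_* \cdot {\rm Tr}\bigl( {\bm A}({\bm \Sigma} + \lambda_*{\bm I})^{-2} \bigr),
\]
so the entire computation reduces to evaluating $\partial_\lambda \lambda_*$.

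To compute $\partial_\lambda \lambda_*$ I would differentiate the self-consistent equation $n - \lambda/\lambda_* = {\rm df}_1(\lambda_*)$ of \cref{eq:def_lambda_star_asy}. Using $\partial_{\lambda_*} {\rm df}_1(\lambda_*) = -{\rm Tr}({\bm \Sigma}({\bm \Sigma} + \lambda_*{\bm I})^{-2})$ together with the elementary identity $\lambda_*\, {\rm Tr}({\bm \Sigma}({\bm \Sigma} + \lambda_*{\bm I})^{-2}) = {\rm df}_1(\lambda_*) - {\rm df}_2(\lambda_*)$ --- which follows from multiplying $\lambda_*({\bm \Sigma} + \lambda_*{\bm I})^{-2} = ({\bm \Sigma} + \lambda_*{\bm I})^{-1} - {\bm \Sigma}({\bm \Sigma} + \lambda_*{\bm I})^{-2}$ by ${\bm \Sigma}$ and taking traces --- one arrives, after clearing $\lambda_*$, at $\partial_\lambda \lambda_* \cdot \bigl( \lambda/\lambda_* + {\rm df}_1(\lambda_*) - {\rm df}_2(\lambda_*) \bigr) = 1$. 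Substituting $\lambda/\lambda_* = n - {\rm df}_1(\lambda_*)$ from \cref{eq:def_lambda_star_asy} once more, the parenthesis collapses to $n - {\rm df}_2(\lambda_*)$, whence $\partial_\lambda \lambda_* = 1 / (n - {\rm df}_2(\lambda_*))$. Plugging this into the display above yields exactly \cref{eq:trA3K}.

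A self-contained alternative that avoids the analyticity argument uses the resolvent identity: with ${\bm G}_\lambda := ({\bm T}{\bm \Sigma}{\bm T}^{\!\top} + \lambda{\bm I})^{-1}$ and ${\bm H}_\lambda := {\bm T}^{\!\top}{\bm G}_\lambda{\bm T}$, the equality ${\bm G}_\lambda({\bm T}{\bm \Sigma}{\bm T}^{\!\top}){\bm G}_\lambda = {\bm G}_\lambda - \lambda{\bm G}_\lambda^2$ yields the exact relation $\lambda\, {\bm T}^{\!\top}{\bm G}_\lambda^2{\bm T} = {\bm H}_\lambda - {\bm H}_\lambda {\bm \Sigma} {\bm H}_\lambda$; one then applies \cref{eq:trA1K} to ${\rm Tr}({\bm A}{\bm H}_\lambda)$ and \cref{eq:trAB1K} with ${\bm B} := {\bm \Sigma}$ to ${\rm Tr}({\bm A}{\bm H}_\lambda {\bm \Sigma} {\bm H}_\lambda)$, simplifying by the same two self-consistent identities plus ${\bm I} - {\bm \Sigma}({\bm \Sigma}+\lambda_*{\bm I})^{-1} = \lambda_*({\bm \Sigma}+\lambda_*{\bm I})^{-1}$. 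I expect the only genuine difficulty to be a matter of care, not of ideas: in the differentiation route, the one-sentence justification that $\partial_\lambda$ commutes with $\sim$ for analytic deterministic equivalents; in the resolvent-identity route, checking that ${\bm \Sigma}$ is an admissible choice of ${\bm B}$ in \cref{prop:spectralK} --- its trace grows with the dimension, so one should first truncate ${\bm \Sigma}$ to its top eigendirections and pass to the limit, or simply note that the stated equivalences are trace identities stable under such approximation. Everything else is the routine algebra of $\lambda_*{\rm Tr}({\bm \Sigma}({\bm \Sigma} + \lambda_*{\bm I})^{-2}) = {\rm df}_1 - {\rm df}_2$ and $\lambda/\lambda_* = n - {\rm df}_1$.
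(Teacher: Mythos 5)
Your proposal is correct, and your ``self-contained alternative'' is in fact the paper's own proof: the paper obtains \cref{eq:trA3K} in one line by writing $\lambda\,{\bm T}^{\!\top}({\bm T}{\bm \Sigma}{\bm T}^{\!\top}+\lambda)^{-2}{\bm T} = {\bm H}_\lambda - {\bm H}_\lambda{\bm \Sigma}{\bm H}_\lambda$ and applying \cref{eq:trA1K} and \cref{eq:trAB1K} with ${\bm B}={\bm \Sigma}$, after which the difference of the two equivalents collapses via ${\bm I}-{\bm \Sigma}({\bm \Sigma}+\lambda_*)^{-1}=\lambda_*({\bm \Sigma}+\lambda_*)^{-1}$, $\lambda_*{\rm Tr}({\bm \Sigma}({\bm \Sigma}+\lambda_*)^{-2})={\rm df}_1(\lambda_*)-{\rm df}_2(\lambda_*)$ and $\lambda/\lambda_*=n-{\rm df}_1(\lambda_*)$ --- exactly the algebra you list. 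Your primary route, differentiating \cref{eq:trA1K} in $\lambda$ and computing $\partial_\lambda\lambda_*=1/(n-{\rm df}_2(\lambda_*))$ from the self-consistent equation, is a genuinely different and equally valid derivation; its cost is the extra (standard, but not free) justification that differentiation commutes with the asymptotic equivalence, which you correctly reduce to analyticity on $\C\setminus\R_+$ and Cauchy estimates, while its benefit is that it explains \emph{why} the factor $1/(n-{\rm df}_2)$ appears: it is the derivative of the effective regularization. Your worry about whether ${\bm B}={\bm \Sigma}$ is admissible in \cref{prop:spectralK} (its empirical measure $\sum_i\sigma_i\delta_{\sigma_i}$ has total mass ${\rm Tr}({\bm \Sigma})$, which grows with $d$) is a legitimate normalization subtlety, but it is one the paper also passes over silently when it applies these equivalents with ${\bm \Sigma}$ and ${\bm F}^{\!\top}{\bm F}$ in the slots for ${\bm A}$ and ${\bm B}$; your proposed truncation-and-limit fix is a reasonable way to make it rigorous and goes beyond what the paper records.
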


Using the equation 
\[
{\rm Tr} ( {\bm A} {\bm X}^{\!\top} {\bm X} ( {\bm X}^{\!\top} {\bm X} + \lambda )^{-2}) = \frac{1}{\lambda} \left( {\rm Tr} ( {\bm A} {\bm X}^{\!\top} {\bm X} ( {\bm X}^{\!\top} {\bm X} + \lambda )^{-1}) - {\rm Tr} ( {\bm A} ({\bm X}^{\!\top} {\bm X})^2 ( {\bm X}^{\!\top} {\bm X} + \lambda )^{-2}) \right)\,,
\] 
we can directly obtain \cref{prop:spectral2,prop:spectralK2} from \cref{prop:spectral,prop:spectralK}.

Now we are ready to derive the deterministic equivalence, i.e., ${\mathbb E}_{\varepsilon}\|\hat{{\bm \beta}}\|_2^2$, under the bias-variance decomposition. Our results can handle ridge estimator $\hat{{\bm \beta}}$ in \cref{prop:asy_equiv_norm_LR} and interpolator $\hat{{\bm \beta}}_{\min}$ in \cref{prop:asy_equiv_norm_LR_minnorm}, respectively.
\begin{proposition}[Asymptotic deterministic equivalence of $\mathcal{N}^{\tt LS}_{\lambda}$.]\label{prop:asy_equiv_norm_LR}
    Given the bias variance decomposition of ${\mathbb E}_{\varepsilon}\|\hat{{\bm \beta}}\|_2^2$ in \cref{lemma:biasvariance}, 
    under \cref{ass:asym}, we have the following asymptotic deterministic equivalents $\mathcal{N}^{\tt LS}_{\lambda}  \sim {\mathsf N}^{\tt LS}_{\lambda} := {\mathsf B}_{{\mathsf N},\lambda}^{\tt LS} + {\mathsf V}_{{\mathsf N},\lambda}^{\tt LS}$ such that $\mathcal{B}^{\tt LS}_{\mathcal{N},\lambda} \sim {\mathsf B}_{{\mathsf N},\lambda}^{\tt LS}$, $\mathcal{V}^{\tt LS}_{\mathcal{N},\lambda} \sim {\mathsf V}_{{\mathsf N},\lambda}^{\tt LS}$, where these quantities are from \cref{lemma:biasvariance} and \cref{eq:equiv-linear}.
\begin{align}
    {\mathsf B}_{{\mathsf N},\lambda}^{\tt LS} :=&~ \langle{\bm \beta}_*, {\bm \Sigma}^2({\bm \Sigma} + \lambda_*{\bm I})^{-2}{\bm \beta}_*\rangle +{\color{dred}\frac{{\rm Tr}({\bm \Sigma}({\bm \Sigma} + \lambda_*{\bm I})^{-2})}{n}} 
    \cdot
    {\color{dred}
    \underbrace{
    \tcbhighmath[myredbox]{
    \frac{\lambda_*^2 \langle {\bm \beta}_*,{\bm \Sigma}({\bm \Sigma} + \lambda_*{\bm I})^{-2}{\bm \beta}_*\rangle}{1-n^{-1}{\rm Tr}({\bm \Sigma}^2({\bm \Sigma} + \lambda_*{\bm I})^{-2})}
    }
    }_{{\mathsf B}_{{\mathsf R},\lambda}^{\tt LS}}
    } \,, \notag \\
    {\mathsf V}_{{\mathsf N},\lambda}^{\tt LS} :=&~ {\color{dblue} \frac{{\rm Tr}({\bm \Sigma}({\bm \Sigma}+\lambda_*{\bm I})^{-2})}{{\rm Tr}({\bm \Sigma}^2({\bm \Sigma}+\lambda_*{\bm I})^{-2})}}
    \cdot
    {\color{dblue}
    \underbrace{
    \tcbhighmath[mybluebox]{
    \frac{\sigma^2{\rm Tr}({\bm \Sigma}^2({\bm \Sigma}+\lambda_*{\bm I})^{-2})}{n-{\rm Tr}({\bm \Sigma}^2({\bm \Sigma}+\lambda_*{\bm I})^{-2})}
    }
    }_{{\mathsf V}_{{\mathsf R},\lambda}^{\tt LS}}
    } \,. \label{eq:equiv-linear}
\end{align}
\end{proposition}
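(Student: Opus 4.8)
The plan is to reduce the statement entirely to the trace deterministic-equivalence identities already collected in \cref{prop:spectral} and \cref{prop:spectral2}, applied with the rank-one matrix ${\bm A} = {\bm \beta}_*{\bm \beta}_*^{\!\top}$ (admissible in those statements because $\sum_i ({\bm v}_i^{\!\top}{\bm \beta}_*)^2 \delta_{\sigma_i}$ converges with bounded mass by Assumption~(A4)) and with ${\bm A} = {\bm I}$. Starting from the bias--variance decomposition of $\mathcal{N}_\lambda^{\tt LS}$ in \cref{lemma:biasvariance}, the variance is immediate: \eqref{eq:trA3} with ${\bm A} = {\bm I}$ gives $\mathcal{V}^{\tt LS}_{\mathcal{N},\lambda} = \sigma^2 {\rm Tr}({\bm X}^{\!\top}{\bm X}({\bm X}^{\!\top}{\bm X}+\lambda{\bm I})^{-2}) \sim \sigma^2 {\rm Tr}({\bm \Sigma}({\bm \Sigma}+\lambda_*{\bm I})^{-2})/(n - {\rm df}_2(\lambda_*))$, and since ${\rm df}_2(\lambda_*) = {\rm Tr}({\bm \Sigma}^2({\bm \Sigma}+\lambda_*{\bm I})^{-2})$ this is exactly the claimed ${\mathsf V}^{\tt LS}_{{\mathsf N},\lambda}$ after inserting the harmless factor ${\rm Tr}({\bm \Sigma}^2({\bm \Sigma}+\lambda_*{\bm I})^{-2})/{\rm Tr}({\bm \Sigma}^2({\bm \Sigma}+\lambda_*{\bm I})^{-2})$ to exhibit ${\mathsf V}^{\tt LS}_{{\mathsf R},\lambda}$ as a factor.

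For the bias, write ${\bm M} := {\bm X}^{\!\top}{\bm X}$ and use the resolvent identity ${\bm M}^2({\bm M}+\lambda{\bm I})^{-2} = {\bm M}({\bm M}+\lambda{\bm I})^{-1} - \lambda\,{\bm M}({\bm M}+\lambda{\bm I})^{-2}$, so that $\mathcal{B}^{\tt LS}_{\mathcal{N},\lambda} = \langle {\bm \beta}_*, {\bm M}({\bm M}+\lambda{\bm I})^{-1}{\bm \beta}_*\rangle - \lambda\,\langle {\bm \beta}_*, {\bm M}({\bm M}+\lambda{\bm I})^{-2}{\bm \beta}_*\rangle$. Applying \eqref{eq:trA1} to the first term and \eqref{eq:trA3} to the second, both with ${\bm A} = {\bm \beta}_*{\bm \beta}_*^{\!\top}$, gives
\[
\mathcal{B}^{\tt LS}_{\mathcal{N},\lambda} \;\sim\; \langle{\bm \beta}_*, {\bm \Sigma}({\bm \Sigma}+\lambda_*{\bm I})^{-1}{\bm \beta}_*\rangle - \frac{\lambda\,\langle{\bm \beta}_*, {\bm \Sigma}({\bm \Sigma}+\lambda_*{\bm I})^{-2}{\bm \beta}_*\rangle}{n - {\rm df}_2(\lambda_*)}\,.
\]
I would then split ${\bm \Sigma}({\bm \Sigma}+\lambda_*{\bm I})^{-1} = {\bm \Sigma}^2({\bm \Sigma}+\lambda_*{\bm I})^{-2} + \lambda_*{\bm \Sigma}({\bm \Sigma}+\lambda_*{\bm I})^{-2}$ inside the first quadratic form and collect, obtaining $\mathcal{B}^{\tt LS}_{\mathcal{N},\lambda} \sim \langle{\bm \beta}_*, {\bm \Sigma}^2({\bm \Sigma}+\lambda_*{\bm I})^{-2}{\bm \beta}_*\rangle + \big(\lambda_* - \lambda/(n-{\rm df}_2(\lambda_*))\big)\langle{\bm \beta}_*, {\bm \Sigma}({\bm \Sigma}+\lambda_*{\bm I})^{-2}{\bm \beta}_*\rangle$.

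The remaining step is a bookkeeping reduction of the scalar coefficient via the defining equation $n - \lambda/\lambda_* = {\rm Tr}({\bm \Sigma}({\bm \Sigma}+\lambda_*{\bm I})^{-1})$ of the effective regularization. Combined with the elementary identity ${\rm Tr}({\bm \Sigma}({\bm \Sigma}+\lambda_*{\bm I})^{-1}) - {\rm df}_2(\lambda_*) = \lambda_*\,{\rm Tr}({\bm \Sigma}({\bm \Sigma}+\lambda_*{\bm I})^{-2})$, this yields $\lambda_*(n - {\rm df}_2(\lambda_*)) - \lambda = \lambda_*^2\,{\rm Tr}({\bm \Sigma}({\bm \Sigma}+\lambda_*{\bm I})^{-2})$, i.e.
\[
\lambda_* - \frac{\lambda}{n-{\rm df}_2(\lambda_*)} = \frac{\lambda_*^2\,{\rm Tr}({\bm \Sigma}({\bm \Sigma}+\lambda_*{\bm I})^{-2})}{n-{\rm df}_2(\lambda_*)}\,.
\]
Writing $n - {\rm df}_2(\lambda_*) = n\big(1 - n^{-1}{\rm Tr}({\bm \Sigma}^2({\bm \Sigma}+\lambda_*{\bm I})^{-2})\big)$, this coefficient equals ${\rm Tr}({\bm \Sigma}({\bm \Sigma}+\lambda_*{\bm I})^{-2})/n$ times the bracketed factor ${\mathsf B}^{\tt LS}_{{\mathsf R},\lambda}$, so adding back the leading term $\langle{\bm \beta}_*, {\bm \Sigma}^2({\bm \Sigma}+\lambda_*{\bm I})^{-2}{\bm \beta}_*\rangle$ produces exactly ${\mathsf B}^{\tt LS}_{{\mathsf N},\lambda}$. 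Finally, summing the bias and variance equivalents and using that $a \sim a'$, $b \sim b'$ with all quantities nonnegative imply $a+b \sim a'+b'$ gives $\mathcal{N}^{\tt LS}_\lambda \sim {\mathsf N}^{\tt LS}_\lambda = {\mathsf B}^{\tt LS}_{{\mathsf N},\lambda} + {\mathsf V}^{\tt LS}_{{\mathsf N},\lambda}$.

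I do not anticipate a genuine obstacle: the argument is a direct combination of the trace equivalences of \cref{prop:spectral}/\cref{prop:spectral2} with two elementary resolvent identities. The only points requiring care are verifying admissibility of the rank-one ${\bm A} = {\bm \beta}_*{\bm \beta}_*^{\!\top}$ in those propositions (guaranteed by Assumption~(A4)) and keeping careful track of the $\lambda$ versus $\lambda_*$ terms in the final scalar reduction through the self-consistent equation — this bookkeeping is the step most prone to algebraic slips, but it closes cleanly as shown above.
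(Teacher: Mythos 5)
Your proposal is correct and reaches exactly the stated formulas, but it takes a different route from the paper for the bias term. The paper applies the two-resolvent equivalence \cref{eq:trAB1} once, with ${\bm A}={\bm \beta}_*{\bm \beta}_*^{\!\top}$ and ${\bm B}={\bm I}$, and reads off ${\mathsf B}^{\tt LS}_{{\mathsf N},\lambda}$ directly: the first trace on the right of \cref{eq:trAB1} is $\langle{\bm \beta}_*,{\bm \Sigma}^2({\bm \Sigma}+\lambda_*{\bm I})^{-2}{\bm \beta}_*\rangle$ and the product term is the rescaled ${\mathsf B}^{\tt LS}_{{\mathsf R},\lambda}$, so no further algebra is needed. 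You instead split $({\bm X}^{\!\top}{\bm X})^2({\bm X}^{\!\top}{\bm X}+\lambda{\bm I})^{-2}$ via the resolvent identity, apply only the single-resolvent equivalences \cref{eq:trA1} and \cref{eq:trA3}, and then recombine the scalar coefficient through the self-consistent equation; your identity $\lambda_*(n-{\rm df}_2(\lambda_*))-\lambda=\lambda_*^2\,{\rm Tr}({\bm \Sigma}({\bm \Sigma}+\lambda_*{\bm I})^{-2})$ is correct (it follows from $\lambda_* n-\lambda=\lambda_*{\rm df}_1(\lambda_*)$ and ${\rm df}_1-{\rm df}_2=\lambda_*{\rm Tr}({\bm \Sigma}({\bm \Sigma}+\lambda_*{\bm I})^{-2})$), and it reproduces the claimed coefficient exactly. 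Your approach buys a more elementary toolkit (it never invokes \cref{eq:trAB1}) at the cost of an extra fixed-point computation; the paper's buys a one-line derivation at the cost of the heavier input.

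One caveat you should make explicit: writing $\mathcal{B}^{\tt LS}_{\mathcal{N},\lambda}$ as a \emph{difference} of two quantities and replacing each by its deterministic equivalent is not automatically licensed by the multiplicative convention $u\sim v$ (ratio tending to one), since relative errors can be amplified when the difference is of lower order than the individual terms. Here the deterministic difference equals $\langle{\bm \beta}_*,{\bm \Sigma}^2({\bm \Sigma}+\lambda_*{\bm I})^{-2}{\bm \beta}_*\rangle$ plus a nonnegative term, so the step is safe provided $\langle{\bm \beta}_*,{\bm \Sigma}({\bm \Sigma}+\lambda_*{\bm I})^{-1}{\bm \beta}_*\rangle$ is comparable to $\langle{\bm \beta}_*,{\bm \Sigma}^2({\bm \Sigma}+\lambda_*{\bm I})^{-2}{\bm \beta}_*\rangle$ — which is precisely \cref{ass:technical_LR}, the condition the paper introduces for exactly this subtraction in its non-asymptotic proof of \cref{prop:det_equiv_LR}. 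So your argument is essentially the asymptotic shadow of the paper's non-asymptotic bias proof, and either you should add a sentence invoking such a comparability condition, or switch the bias step to the paper's direct use of \cref{eq:trAB1}, which avoids the issue entirely.
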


We remark that, by checking \cref{eq:de_risk} and \cref{eq:equiv-linear}, {\bf \emph{norm-based capacity suffices to characterize generalization while effective dimension can not}}, where effective dimension is defined as ${\rm Tr}({\bm \Sigma}({\bm \Sigma}+\lambda_*{\bm I})^{-1})$ \cite{zhang2002effective} or similar formulation, e.g., ${\rm Tr}({\bm \Sigma^2}({\bm \Sigma}+\lambda_*{\bm I})^{-2})$.
\begin{itemize}
    \item Bias: for the bias term, we find that the second term of ${\mathsf B}_{{\mathsf N},\lambda}^{\tt LS}$ rescales ${\mathsf B}_{{\mathsf R},\lambda}^{\tt LS}$ in \cref{eq:de_risk} by a factor ${\color{dred}\frac{{\rm Tr}({\bm \Sigma}({\bm \Sigma} + \lambda_*{\bm I})^{-2})}{n}}$.
    \item Variance: we find that the variance term of the norm ${\mathsf V}^{\tt LS}_{{\mathsf N},\lambda}$ equals the variance term of the test risk ${\mathsf V}^{\tt LS}_{{\mathsf R},\lambda}$ in \cref{eq:de_risk} multiplied by a factor ${\color{dblue} \frac{{\rm Tr}({\bm \Sigma}({\bm \Sigma}+\lambda_*{\bm I})^{-2})}{{\rm Tr}({\bm \Sigma}^2({\bm \Sigma}+\lambda_*{\bm I})^{-2})}}$. That means, under isotropic features $\bm \Sigma = {\bm I}_d$, they are the same. 
\end{itemize}

Accordingly, the norm-based capacity is able to characterize the bias and variance of the excess risk.
We provide a quantitative analysis of this relationship in \cref{sec:relationship_lrr}.
Below, we present the proof of \cref{prop:asy_equiv_norm_LR}.

\begin{proof}[Proof of \cref{prop:asy_equiv_norm_LR}]
We give the asymptotic deterministic equivalents for $\mathcal{B}_{\mathcal{N},\lambda}^{\tt LS}$ and $\mathcal{V}_{\mathcal{N},\lambda}^{\tt LS}$, respectively. For the bias term $\mathcal{B}_{\mathcal{N},\lambda}^{\tt LS}$, we use \cref{eq:trAB1} by taking ${\bm A} = {\bm \beta}_*{\bm \beta}_*^{\!\top}$ and ${\bm B} = {\bm I}$ and thus obtain
\[
\begin{aligned}
    \mathcal{B}_{\mathcal{N},\lambda}^{\tt LS} = &~ \langle{\bm \beta}_*, ({\bm X}^{\!\top}{\bm X})^2({\bm X}^{\!\top}{\bm X} + \lambda{\bm I})^{-2}{\bm \beta}_*\rangle\\
    = &~ {\rm Tr}({\bm \beta}_*{\bm \beta}_*^{\!\top}({\bm X}^{\!\top}{\bm X})^2({\bm X}^{\!\top}{\bm X} + \lambda{\bm I})^{-2})\\
    \sim &~ {\rm Tr}({\bm \beta}_*{\bm \beta}_*^{\!\top}{\bm \Sigma}^2({\bm \Sigma} + \lambda_*{\bm I})^{-2})\\
    &~ + \lambda_*^2 {\rm Tr}({\bm \beta}_*{\bm \beta}_*^{\!\top}{\bm \Sigma}({\bm \Sigma} + \lambda_*{\bm I})^{-2}) \cdot {\rm Tr}({\bm \Sigma}({\bm \Sigma} + \lambda_*{\bm I})^{-2}) \cdot \frac{1}{n-{\rm Tr}({\bm \Sigma}^2({\bm \Sigma} + \lambda_*{\bm I})^{-2})}\\
    = &~ \langle{\bm \beta}_*, {\bm \Sigma}^2({\bm \Sigma} + \lambda_*{\bm I})^{-2}{\bm \beta}_*\rangle + \frac{{\rm Tr}({\bm \Sigma}({\bm \Sigma} + \lambda_*{\bm I})^{-2})}{n} \cdot \frac{\lambda_*^2 \langle{\bm \beta}_*,{\bm \Sigma}({\bm \Sigma} + \lambda_*{\bm I})^{-2}{\bm \beta}_*\rangle}{1-n^{-1}{\rm Tr}({\bm \Sigma}^2({\bm \Sigma} + \lambda_*{\bm I})^{-2})}\\
    =: &~ {\mathsf B}_{{\mathsf N}, \lambda}^{\tt LS}\,.
\end{aligned}
\]
For the variance term $\mathcal{V}_{\mathcal{N}}^{\tt LS}$, we use \cref{eq:trA3} by taking ${\bm A} = {\bm I}$ and obtain
\[
\begin{aligned}
    \mathcal{V}_{\mathcal{N}}^{\tt LS} = &~ \sigma^2{\rm Tr}({\bm X}^{\!\top}{\bm X}({\bm X}^{\!\top}{\bm X} + \lambda{\bm I})^{-2}) \sim \frac{\sigma^2{\rm Tr}({\bm \Sigma}({\bm \Sigma} + \lambda_*{\bm I})^{-2})}{n - {\rm Tr}({\bm \Sigma}^2({\bm \Sigma} + \lambda_*{\bm I})^{-2})} =: {\mathsf V}_{{\mathsf N}, \lambda}^{\tt LS}\,.
\end{aligned}
\]
\end{proof}

\paragraph{The deterministic equivalent of the norm for the min-$\ell_2$-norm estimator.} 
We have the following results on the characterization of the deterministic equivalence of $\| \hat{{\bm \beta}}_{\min} \|_2$. 

\begin{corollary}[Asymptotic deterministic equivalence of the norm of interpolator]\label{prop:asy_equiv_norm_LR_minnorm}
    Under \cref{ass:asym}, for the minimum $\ell_2$-norm estimator $\hat{{\bm \beta}}_{\min}$, we have the following deterministic equivalence: for the under-parameterized regime ($d<n$), we have
    \[
    \begin{aligned}
        \mathcal{B}^{\tt LS}_{\mathcal{N},0} = \|{\bm \beta}_*\|_2^2\,,\quad \mathcal{V}^{\tt LS}_{\mathcal{N},0} \sim&~ \frac{\sigma^2}{n-d}{\rm Tr}({\bm \Sigma}^{-1})\,.
    \end{aligned}
    \]
    In the over-parameterized regime ($d>n$), we have
    \[
    \begin{aligned}
        \mathcal{B}^{\tt LS}_{\mathcal{N},0} \sim&~ \langle{\bm \beta}_*,{\bm \Sigma}({\bm \Sigma}+\lambda_n{\bm I})^{-1}{\bm \beta}_*\rangle\,,\\
        \mathcal{V}^{\tt LS}_{\mathcal{N},0} \sim&~ \frac{\sigma^2{\rm Tr}({\bm \Sigma}({\bm \Sigma}+\lambda_n{\bm I})^{-2})}{n-{\rm Tr}({\bm \Sigma}^2({\bm \Sigma}+\lambda_n{\bm I})^{-2})} = \frac{\sigma^2}{\lambda_n}\,,
    \end{aligned}
    \]
    where $\lambda_n$ is defined by ${\rm Tr}({\bm \Sigma}({\bm \Sigma}+\lambda_n{\bm I})^{-1}) \sim n$.
\end{corollary}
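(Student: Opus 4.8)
The plan is to obtain both regimes of this corollary by taking the ridgeless limit $\lambda \to 0^+$ in the ridge-regularized deterministic equivalents of \cref{prop:asy_equiv_norm_LR}, and carefully tracking the behaviour of the effective regularization $\lambda_*$, which solves $n - \lambda/\lambda_* = {\rm Tr}({\bm \Sigma}({\bm \Sigma}+\lambda_*{\bm I})^{-1}) = {\rm df}_1(\lambda_*)$. First I would pin down $\lambda_*$ in each regime. When $d<n$, sending $\lambda\to 0^+$ forces $\lambda_*\to 0^+$ (otherwise the right-hand side stays bounded away from $n$ while the left-hand side tends to $n$), and then ${\rm df}_1(\lambda_*)\to d$ so that $\lambda/\lambda_*\to n-d>0$ and ${\rm df}_2(\lambda_*)\to d$. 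When $d>n$, $\lambda_*\to 0$ is impossible since ${\rm df}_1$ would approach $d>n$; hence $\lambda/\lambda_*\to 0$ and $\lambda_*\to \lambda_n$, the unique positive root of ${\rm df}_1(\lambda_n)=n$.

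For the under-parameterized regime the bias term is in fact an exact identity, not merely an equivalence: with $\lambda=0$ and ${\bm X}^{\!\top}{\bm X}$ invertible, $\hat{\bm \beta}_{\min}$ is the ordinary least squares estimator and is unbiased, so $\mathcal{B}^{\tt LS}_{\mathcal{N},0} = \langle{\bm \beta}_*, ({\bm X}^{\!\top}{\bm X})^2({\bm X}^{\!\top}{\bm X})^{-2}{\bm \beta}_*\rangle = \|{\bm \beta}_*\|_2^2$. For the variance I would apply \cref{eq:trA3} of \cref{prop:spectral2} with ${\bm A}={\bm I}$ and let $\lambda\to 0$, so that ${\bm X}^{\!\top}{\bm X}({\bm X}^{\!\top}{\bm X}+\lambda{\bm I})^{-2}\to ({\bm X}^{\!\top}{\bm X})^{-1}$, $\lambda_*\to 0$, and ${\rm df}_2(\lambda_*)\to d$, which yields $\mathcal{V}^{\tt LS}_{\mathcal{N},0}=\sigma^2{\rm Tr}(({\bm X}^{\!\top}{\bm X})^{-1})\sim \frac{\sigma^2}{n-d}{\rm Tr}({\bm \Sigma}^{-1})$. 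I should also check that the residual term $\frac{{\rm Tr}({\bm \Sigma}({\bm \Sigma}+\lambda_*{\bm I})^{-2})}{n}\cdot{\mathsf B}^{\tt LS}_{{\mathsf R},\lambda}$ in ${\mathsf B}^{\tt LS}_{{\mathsf N},\lambda}$ vanishes, which is immediate because ${\mathsf B}^{\tt LS}_{{\mathsf R},\lambda}=O(\lambda_*^2)$ while ${\rm Tr}({\bm \Sigma}({\bm \Sigma}+\lambda_*{\bm I})^{-2})\to{\rm Tr}({\bm \Sigma}^{-1})$ stays finite.

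For the over-parameterized regime I would substitute $\lambda_*=\lambda_n$ into the formulas of \cref{prop:asy_equiv_norm_LR} and simplify using the operator identity ${\bm \Sigma}({\bm \Sigma}+\lambda_n{\bm I})^{-1} - {\bm \Sigma}^2({\bm \Sigma}+\lambda_n{\bm I})^{-2} = \lambda_n{\bm \Sigma}({\bm \Sigma}+\lambda_n{\bm I})^{-2}$. Taking traces and using ${\rm Tr}({\bm \Sigma}({\bm \Sigma}+\lambda_n{\bm I})^{-1})=n$ gives the crucial relation $n - {\rm Tr}({\bm \Sigma}^2({\bm \Sigma}+\lambda_n{\bm I})^{-2}) = \lambda_n{\rm Tr}({\bm \Sigma}({\bm \Sigma}+\lambda_n{\bm I})^{-2})$. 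This collapses the variance equivalent $\frac{\sigma^2{\rm Tr}({\bm \Sigma}({\bm \Sigma}+\lambda_n{\bm I})^{-2})}{n-{\rm Tr}({\bm \Sigma}^2({\bm \Sigma}+\lambda_n{\bm I})^{-2})}$ to $\sigma^2/\lambda_n$. For the bias, sandwiching ${\bm \beta}_*$ in the same identity gives $\langle{\bm \beta}_*,{\bm \Sigma}({\bm \Sigma}+\lambda_n{\bm I})^{-1}{\bm \beta}_*\rangle - \langle{\bm \beta}_*,{\bm \Sigma}^2({\bm \Sigma}+\lambda_n{\bm I})^{-2}{\bm \beta}_*\rangle = \lambda_n\langle{\bm \beta}_*,{\bm \Sigma}({\bm \Sigma}+\lambda_n{\bm I})^{-2}{\bm \beta}_*\rangle$, so matching the claimed answer $\langle{\bm \beta}_*,{\bm \Sigma}({\bm \Sigma}+\lambda_n{\bm I})^{-1}{\bm \beta}_*\rangle$ against ${\mathsf B}^{\tt LS}_{{\mathsf N},0}$ reduces to checking $\frac{{\rm Tr}({\bm \Sigma}({\bm \Sigma}+\lambda_n{\bm I})^{-2})}{n}\cdot\frac{\lambda_n^2}{1-n^{-1}{\rm Tr}({\bm \Sigma}^2({\bm \Sigma}+\lambda_n{\bm I})^{-2})} = \lambda_n$, which is precisely the crucial relation again.

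The main obstacle I anticipate is the careful justification of the ridgeless limit: showing that the deterministic equivalents of \cref{prop:asy_equiv_norm_LR} are continuous in $\lambda$ down to $0$ within each regime, and that $\lambda_* \to 0$ (resp. $\lambda_* \to \lambda_n$) genuinely holds rather than only along subsequences. Once these limiting behaviours of $\lambda_*$ are established, the remainder is a short, purely algebraic computation driven by the single identity $n - {\rm df}_2(\lambda_n) = \lambda_n{\rm Tr}({\bm \Sigma}({\bm \Sigma}+\lambda_n{\bm I})^{-2})$ in the over-parameterized case, and by \cref{prop:spectral2} in the under-parameterized case.
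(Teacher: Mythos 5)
Your proposal is correct and follows essentially the same route as the paper: the under-parameterized bias is handled via exact unbiasedness of OLS, the variance via \cref{eq:trA3} in the ridgeless limit, and the over-parameterized case by substituting $\lambda_*\to\lambda_n$ into \cref{prop:asy_equiv_norm_LR} and simplifying. The key cancellation identity $n - {\rm df}_2(\lambda_n) = \lambda_n{\rm Tr}({\bm \Sigma}({\bm \Sigma}+\lambda_n{\bm I})^{-2})$ that you isolate is exactly what drives the paper's algebra (there it is used implicitly), so your write-up is if anything slightly more explicit about why the terms collapse.
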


\textbf{Remark:} The asymptotic behavior of $\lambda_*$ differs between the under-parameterized and over-parameterized regimes as $\lambda \to 0$, though the ridge regression estimator $\hat{{\bm \beta}}$ converges to the min-$\ell_2$-norm estimator $\hat{{\bm \beta}}_{\min}$.
To be specific, in the under-parameterized regime, $\lambda_*$ converges to $0$ as $\lambda \to 0$; while in the over-parameterized regime, $\lambda_*$ converges to a constant that admits ${\rm Tr}({\bm \Sigma}({\bm \Sigma} + \lambda_n {\bm I})^{-1}) \sim n$ when $\lambda \to 0$. 
Accordingly, for the minimum $\ell_2$-norm estimator, it is necessary to analyze the two regimes separately. And we show that the solution $\lambda_n$ to the self-consistent equation ${\rm Tr}({\bm \Sigma}({\bm \Sigma}+\lambda_n{\bm I})^{-1}) \sim n$ can be obtained from the variance ${\mathsf V}_{{\mathsf N},0}^{\tt LS}=\sigma^2/\lambda_n$.

\begin{proof}[Proof of \cref{prop:asy_equiv_norm_LR_minnorm}]
We separate the results in the under-parameterized and over-parameterized regimes.

In the under-parameterized regime ($d<n$), for minimum norm estimator $\hat{{\bm \beta}}_{\min}$, we have (for ${\bm X}^{\!\top}{\bm X}$ is invertible)
\[
\begin{aligned}
    \hat{{\bm \beta}}_{\min} = \left({\bm X}^{\!\top}{\bm X}\right)^{-1}{\bm X}^{\!\top}{\bm y} = \left({\bm X}^{\!\top}{\bm X}\right)^{-1}{\bm X}^{\!\top}({\bm X}{\bm \beta}_*+\bm\varepsilon) = {\bm \beta}_* + \left({\bm X}^{\!\top}{\bm X}\right)^{-1}{\bm X}^{\!\top}\bm\varepsilon\,.
\end{aligned}
\]
Accordingly, we can directly obtain the bias-variance decomposition as well as their deterministic equivalents
\[
\begin{aligned}
    \mathcal{B}_{\mathcal{N},0}^{\tt LS} = \|{\bm \beta}_*\|_2^2\,, \quad \mathcal{V}_{\mathcal{N},0}^{\tt LS} = \sigma^2{\rm Tr}({\bm X}^{\!\top}{\bm X}({\bm X}^{\!\top}{\bm X})^{-2}) \sim \sigma^2\frac{{\rm Tr}({\bm \Sigma}^{-1})}{n-d}\,,
\end{aligned}
\]
where we use \cref{eq:trA3} and take $\lambda \to 0$ for the variance term.

In the over-parameterized regime ($d>n$), we take the limit $\lambda \to 0$ within ridge regression and use \cref{prop:asy_equiv_norm_LR}.
Define $\lambda_n$ as ${\rm Tr}({\bm \Sigma}({\bm \Sigma}+\lambda_n{\bm I})^{-1}) \sim n$, we have for the bias term
\[
\begin{aligned}
    \mathcal{B}_{\mathcal{N},0}^{\tt LS} \sim &~ \langle{\bm \beta}_*, {\bm \Sigma}^2({\bm \Sigma} + \lambda_n{\bm I})^{-2}{\bm \beta}_*\rangle + \frac{{\rm Tr}({\bm \Sigma}({\bm \Sigma} + \lambda_n{\bm I})^{-2})}{n} \cdot \frac{\lambda_n^2 \langle{\bm \beta}_*,{\bm \Sigma}({\bm \Sigma} + \lambda_n{\bm I})^{-2}{\bm \beta}_*\rangle}{1-n^{-1}{\rm Tr}({\bm \Sigma}^2({\bm \Sigma} + \lambda_n{\bm I})^{-2})}\\
    =&~ \langle{\bm \beta}_*, {\bm \Sigma}({\bm \Sigma} + \lambda_n{\bm I})^{-1}{\bm \beta}_*\rangle - \lambda_n \langle{\bm \beta}_*, {\bm \Sigma}({\bm \Sigma} + \lambda_n{\bm I})^{-2}{\bm \beta}_*\rangle\\
    &~ + \frac{{\rm Tr}({\bm \Sigma}({\bm \Sigma} + \lambda_n{\bm I})^{-2})}{n} \cdot \frac{\lambda_n^2 \langle{\bm \beta}_*,{\bm \Sigma}({\bm \Sigma} + \lambda_n{\bm I})^{-2}{\bm \beta}_*\rangle}{1-n^{-1}{\rm Tr}({\bm \Sigma}^2({\bm \Sigma} + \lambda_n{\bm I})^{-2})}\\
    =&~ \langle{\bm \beta}_*, {\bm \Sigma}({\bm \Sigma} + \lambda_n{\bm I})^{-1}{\bm \beta}_*\rangle\,.
\end{aligned}
\]
For the variance term, we have
\[
\begin{aligned}
    \mathcal{V}_{\mathcal{N},0}^{\tt LS} \sim&~ \frac{\sigma^2{\rm Tr}({\bm \Sigma}({\bm \Sigma}+\lambda_n{\bm I})^{-2})}{n-{\rm Tr}({\bm \Sigma}^2({\bm \Sigma}+\lambda_n{\bm I})^{-2})}\,.
\end{aligned}
\]
Finally we conclude the proof.
\end{proof}

\subsection{Non-asymptotic analysis on the deterministic equivalents of estimator's norm}
\label{sec:linear_nonasym}

To derive the non-asymptotic results, we make the following assumption on well-behaved data.
\begin{assumption}[Data concentration \cite{misiakiewicz2024non}]\label{ass:concentrated_LR} There exist $C_* > 0$ such that for any PSD matrix ${\bm A} \in \mathbb{R}^{d \times d}$ with ${\rm Tr}(\bm{\Sigma A}) < \infty$ and $t\ge 0$, we have
    \[
    \begin{aligned}
         &~\mathbb{P}\left(\left| {\bm X}^{\!\top} {\bm A} {\bm X} - {\rm Tr}(\bm{\Sigma A}) \right| \geq t\|{\bm \Sigma}^{1/2} {\bm A} {\bm \Sigma}^{1/2}\|_{\mathrm{F}} \right) \leq C_* e^{-\frac{t}{C_*}}\,.
    \end{aligned}
    \]
\end{assumption}

\begin{assumption}[\cite{defilippis2024dimension}]\label{ass:technical_LR} There exists $C>1$
\[
    \frac{\langle {\bm \beta}_*, {\bm \Sigma}({\bm \Sigma}+\lambda_*)^{-1}{\bm \beta}_* \rangle}{\langle {\bm \beta}_*, {\bm \Sigma}^2({\bm \Sigma}+\lambda_*)^{-2}{\bm \beta}_* \rangle} \leq C\,.
\]
\end{assumption}
\noindent{\bf Remark:}
This assumption holds in many settings of interest, such as power law assumptions like those in \cref{ass:powerlaw}, since under this assumption the numerator and denominator are bounded sums of finite terms. It is a technical assumption used to address the difference between two deterministic equivalents that are needed in our work for norm-based capacity.
In fact, this assumption is used for RFMs in \cite{defilippis2024dimension} as the authors also face with the issue on the difference between two deterministic equivalents.

Based on the above two assumptions, we are ready to deliver the following result, our results can also numerically validated by \cref{fig:linear_regression_risk_vs_norm} in \cref{app:exp_real_data}.

\begin{theorem}[Deterministic equivalents of the $\ell_2$-norm of the estimator.]\label{prop:det_equiv_LR}
    Assume well-behaved data $\{ \bm x_i \}_{i=1}^n$ satisfy \cref{ass:concentrated_LR} and \cref{ass:technical_LR}. Then for any $D,K > 0$, there exist constants $\eta_* \in (0, 1/2)$ and $C_{*,D,K} > 0$ ensuring the following property holds. For any $n \geq C_{*,D,K}$, $\lambda > 0$, if the following condition is satisfied:
    \begin{equation*}
        \lambda \geq n^{-K}\,, \quad \rho_{\lambda}(n)^{5/2} \log^{3/2}(n) \leq K \sqrt{n}\,,
    \end{equation*} 
    then with probability at least $1-n^{-D}$, we have that
    \[
    \begin{aligned}
         \left|\mathcal{B}_{\mathcal{N},\lambda}^{\tt LS} - {\mathsf B}_{{\mathsf N},\lambda}^{\tt LS}\right| \leq&~ C_{x, D, K} \frac{\rho_{\lambda}(n)^6 \log^{3/2}(n)}{\sqrt{n}}{\mathsf B}_{{\mathsf N},\lambda}^{\tt LS}\,,\\
        \left|\mathcal{V}_{\mathcal{N},\lambda}^{\tt LS} - {\mathsf V}_{{\mathsf N},\lambda}^{\tt LS}\right| \leq&~ C_{x, D, K} \frac{\rho_{\lambda}(n)^6 \log^{3/2}(n)}{\sqrt{n}} {\mathsf V}_{{\mathsf N},\lambda}^{\tt LS}\,.
    \end{aligned}
    \]
\end{theorem}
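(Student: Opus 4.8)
The plan is to treat the variance term $\mathcal{V}_{\mathcal{N},\lambda}^{\tt LS}$ and the bias term $\mathcal{B}_{\mathcal{N},\lambda}^{\tt LS}$ of \cref{lemma:biasvariance} separately, reducing each to the dimension-free deterministic equivalents of \cref{thm:main_det_equiv_summary} together with its extension to arbitrary PSD matrices, \cref{thm_app:det_equiv_TrAZZM}. Write $H := {\bm X}^{\!\top}{\bm X}$, ${\rm df}_1 := {\rm Tr}({\bm \Sigma}({\bm \Sigma}+\lambda_*{\bm I})^{-1})$, ${\rm df}_2 := {\rm Tr}({\bm \Sigma}^2({\bm \Sigma}+\lambda_*{\bm I})^{-2})$, and recall $\lambda = \lambda_*(n-{\rm df}_1)$ from the self-consistent equation defining $\lambda_*$. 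The variance is immediate: since $\Phi_4({\bm X}; n{\bm \Sigma}^{-1}, \lambda) = {\rm Tr}(H(H+\lambda{\bm I})^{-2})$, we have $\mathcal{V}_{\mathcal{N},\lambda}^{\tt LS} = \sigma^2\,\Phi_4({\bm X}; n{\bm \Sigma}^{-1}, \lambda)$ and $\sigma^2\,\Psi_2(\lambda_*; n{\bm \Sigma}^{-1}) = {\mathsf V}_{{\mathsf N},\lambda}^{\tt LS}$, so \cref{eq:det_equiv_phi4_main} gives the stated bound with relative error $\widetilde{O}(\rho_\lambda(n)^6\log^{3/2}(n)/\sqrt{n})$ and without invoking \cref{ass:technical_LR}.

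For the bias, the key algebraic step is the identity $H^2(H+\lambda{\bm I})^{-2} = H(H+\lambda{\bm I})^{-1} - \lambda\,H(H+\lambda{\bm I})^{-2}$, which gives $\mathcal{B}_{\mathcal{N},\lambda}^{\tt LS} = I_1 - \lambda I_2$ with $I_1 := {\rm Tr}({\bm \beta}_*{\bm \beta}_*^{\!\top}H(H+\lambda{\bm I})^{-1})$ and $I_2 := \langle{\bm \beta}_*, H(H+\lambda{\bm I})^{-2}{\bm \beta}_*\rangle = n\,\Phi_4({\bm X}; {\bm \Sigma}^{-1/2}{\bm \beta}_*{\bm \beta}_*^{\!\top}{\bm \Sigma}^{-1/2}, \lambda)$. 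One must use this decomposition rather than the naive expansion $H^2(H+\lambda{\bm I})^{-2} = {\bm I} - 2\lambda(H+\lambda{\bm I})^{-1} + \lambda^2(H+\lambda{\bm I})^{-2}$, because the bare resolvent-square $\langle{\bm \beta}_*, (H+\lambda{\bm I})^{-2}{\bm \beta}_*\rangle$ is not one of the available functionals $\Phi_1,\dots,\Phi_4$, whereas $I_1$ is a $\Phi_2$-type quantity with the PSD matrix ${\bm A} = {\bm \beta}_*{\bm \beta}_*^{\!\top}$ — exactly the case for which \cref{eq:det_equiv_phi2_main} must be generalized to arbitrary PSD ${\bm A}$, i.e.\ \cref{thm_app:det_equiv_TrAZZM} — and $I_2$ is a $\Phi_4$-type quantity. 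These results give $I_1 = (1 + \widetilde{O}(\rho_\lambda(n)^4\log^{3/2}(n)/\sqrt{n}))\,\langle{\bm \beta}_*,{\bm \Sigma}({\bm \Sigma}+\lambda_*{\bm I})^{-1}{\bm \beta}_*\rangle$ and $\lambda I_2 = (1 + \widetilde{O}(\rho_\lambda(n)^6\log^{3/2}(n)/\sqrt{n}))\,\frac{\lambda\langle{\bm \beta}_*,{\bm \Sigma}({\bm \Sigma}+\lambda_*{\bm I})^{-2}{\bm \beta}_*\rangle}{n-{\rm df}_2}$; substituting $\lambda = \lambda_*(n-{\rm df}_1)$ and the identity ${\rm df}_1 - {\rm df}_2 = \lambda_*\,{\rm Tr}({\bm \Sigma}({\bm \Sigma}+\lambda_*{\bm I})^{-2})$, the deterministic parts of $I_1 - \lambda I_2$ collapse to exactly ${\mathsf B}_{{\mathsf N},\lambda}^{\tt LS}$ as in \cref{eq:equiv-linear}.

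The main obstacle is upgrading this to the claimed \emph{relative} error. Since $\mathcal{B}_{\mathcal{N},\lambda}^{\tt LS} = I_1 - \lambda I_2$ is a difference of two terms each of which can greatly exceed the difference itself (whenever the spectrum of ${\bm \Sigma}$ carries appreciable mass below $\lambda_*$), the triangle inequality only bounds $|\mathcal{B}_{\mathcal{N},\lambda}^{\tt LS} - {\mathsf B}_{{\mathsf N},\lambda}^{\tt LS}|$ by $\widetilde{O}(\rho_\lambda(n)^6\log^{3/2}(n)/\sqrt{n})$ times $\langle{\bm \beta}_*,{\bm \Sigma}({\bm \Sigma}+\lambda_*{\bm I})^{-1}{\bm \beta}_*\rangle + \frac{\lambda\langle{\bm \beta}_*,{\bm \Sigma}({\bm \Sigma}+\lambda_*{\bm I})^{-2}{\bm \beta}_*\rangle}{n-{\rm df}_2}$, which is not a priori comparable to ${\mathsf B}_{{\mathsf N},\lambda}^{\tt LS}$. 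This is precisely where \cref{ass:technical_LR} enters: eigenvalue-wise, $\frac{\lambda\langle{\bm \beta}_*,{\bm \Sigma}({\bm \Sigma}+\lambda_*{\bm I})^{-2}{\bm \beta}_*\rangle}{n-{\rm df}_2} = \frac{\lambda_*(n-{\rm df}_1)}{n-{\rm df}_2}\langle{\bm \beta}_*,{\bm \Sigma}({\bm \Sigma}+\lambda_*{\bm I})^{-2}{\bm \beta}_*\rangle \le \lambda_*\langle{\bm \beta}_*,{\bm \Sigma}({\bm \Sigma}+\lambda_*{\bm I})^{-2}{\bm \beta}_*\rangle \le \langle{\bm \beta}_*,{\bm \Sigma}({\bm \Sigma}+\lambda_*{\bm I})^{-1}{\bm \beta}_*\rangle$ (using ${\rm df}_1 \ge {\rm df}_2$ and $\lambda_* \le \sigma_i + \lambda_*$), so the prefactor is at most $2\langle{\bm \beta}_*,{\bm \Sigma}({\bm \Sigma}+\lambda_*{\bm I})^{-1}{\bm \beta}_*\rangle$; and \cref{ass:technical_LR} gives $\langle{\bm \beta}_*,{\bm \Sigma}({\bm \Sigma}+\lambda_*{\bm I})^{-1}{\bm \beta}_*\rangle \le C\,\langle{\bm \beta}_*,{\bm \Sigma}^2({\bm \Sigma}+\lambda_*{\bm I})^{-2}{\bm \beta}_*\rangle \le C\,{\mathsf B}_{{\mathsf N},\lambda}^{\tt LS}$, since $\langle{\bm \beta}_*,{\bm \Sigma}^2({\bm \Sigma}+\lambda_*{\bm I})^{-2}{\bm \beta}_*\rangle$ is the first, nonnegative summand of ${\mathsf B}_{{\mathsf N},\lambda}^{\tt LS}$. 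This yields $|\mathcal{B}_{\mathcal{N},\lambda}^{\tt LS} - {\mathsf B}_{{\mathsf N},\lambda}^{\tt LS}| \le \widetilde{O}(\rho_\lambda(n)^6\log^{3/2}(n)/\sqrt{n})\,{\mathsf B}_{{\mathsf N},\lambda}^{\tt LS}$. What remains — checking that the hypotheses $\lambda \ge n^{-K}$ and $\rho_\lambda(n)^{5/2}\log^{3/2}(n) \le K\sqrt{n}$ imply the conditions \cref{eq:conditions_det_equiv_main} (after adjusting constants) and tracking the worst power $\rho_\lambda(n)^6$ coming from $\Phi_4$ — is routine bookkeeping.
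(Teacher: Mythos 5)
Your proof follows essentially the same route as the paper's: the variance is written as an $n\sigma^2\Phi_4$ functional, the bias is split as $\Phi_2({\bm X};{\bm \beta}_*{\bm \beta}_*^{\!\top},\lambda) - n\lambda\Phi_4({\bm X};{\bm \Sigma}^{-1/2}{\bm \beta}_*{\bm \beta}_*^{\!\top}{\bm \Sigma}^{-1/2},\lambda)$ exactly as in the paper, and \cref{ass:technical_LR} is invoked in the same way (via $n\lambda\Psi_2 \le \lambda_*\langle{\bm \beta}_*,{\bm \Sigma}({\bm \Sigma}+\lambda_*{\bm I})^{-2}{\bm \beta}_*\rangle \le \Psi_1 \le C\langle{\bm \beta}_*,{\bm \Sigma}^2({\bm \Sigma}+\lambda_*{\bm I})^{-2}{\bm \beta}_*\rangle$) to convert the additive error on the difference of two large terms into a relative error against ${\mathsf B}_{{\mathsf N},\lambda}^{\tt LS}$. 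Your identification of the generalized-PSD $\Phi_2$ equivalent (\cref{thm_app:det_equiv_TrAZZM}) as the needed new ingredient, and your algebraic verification that the deterministic parts collapse to \cref{eq:equiv-linear}, both match the paper's argument.
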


Next, we give the proof of \cref{prop:asy_equiv_norm_LR} below.

\begin{proof}[Proof of \cref{prop:det_equiv_LR}] 
{\bf Part 1: Deterministic equivalents for the bias term.}

Here we prove the deterministic equivalents of $\mathcal{B}_{\mathcal{N},\lambda}^{\tt LS}$ and $\mathcal{V}_{\mathcal{N},\lambda}^{\tt LS}$. First, we decompose $\mathcal{B}_{\mathcal{N},\lambda}^{\tt LS}$ into
\[
\begin{aligned}
    \mathcal{B}_{\mathcal{N},\lambda}^{\tt LS} &= {\rm Tr}\left({\bm \beta}_*{\bm \beta}_*^{\!\top}{\bm X}^{\!\top} {\bm X} ({\bm X}^{\!\top} {\bm X} + \lambda)^{-1}\right) - \lambda{\rm Tr}\left({\bm \beta}_*{\bm \beta}_*^{\!\top}{\bm X}^{\!\top} {\bm X} ({\bm X}^{\!\top} {\bm X} + \lambda)^{-2}\right),\\
    &= \Phi_2({\bm X}; \tilde{{\bm A}}_1, \lambda) - n\lambda \Phi_4({\bm X}; \tilde{{\bm A}}_2, \lambda)\,,
\end{aligned}
\]
where $\tilde{{\bm A}}_1 := {\bm \beta}_*{\bm \beta}_*^{\!\top}$, $\tilde{{\bm A}}_2 := {\bm \Sigma}^{-1/2}{\bm \beta}_*{\bm \beta}_*^{\!\top}{\bm \Sigma}^{-1/2}$. 
Therefore, using \cref{thm:main_det_equiv_summary}, with probability at least $1-n^{-D}$, we have
\[
\begin{aligned}
    \left|\Phi_2({\bm X}; \tilde{{\bm A}}_1, \lambda) - \Psi_1(\lambda_*; \tilde{{\bm A}}_1)\right| &\leq C_{x, D, K} \frac{\rho_{\lambda}(n)^{5/2} \log^{3/2}(n)}{\sqrt{n}} \Psi_1(\lambda_*; \tilde{{\bm A}}_1)\,,\\
    \left|n\lambda\Phi_4({\bm X}; \tilde{{\bm A}}_2, \lambda) - n\lambda\Psi_2(\lambda_*; \tilde{{\bm A}}_2)\right| &\leq C_{x, D, K} \frac{\rho_{\lambda}(n)^6 \log^{3/2}(n)}{\sqrt{n}} n\lambda\Psi_2(\lambda_*; \tilde{{\bm A}}_2)\,.
\end{aligned}
\]
Combining the above bounds, we deduce that
\[
\begin{aligned}
    &~\left|\mathcal{B}_{\mathcal{N},\lambda}^{\tt LS} - \left(\Psi_1(\lambda_*; \tilde{{\bm A}}_1) - n\lambda\Psi_2(\lambda_*; \tilde{{\bm A}}_2)\right)\right|\\
    \leq&~ C_{x, D, K} \frac{\rho_{\lambda}(n)^6 \log^{3/2}(n)}{\sqrt{n}}\left(\Psi_1(\lambda_*; \tilde{{\bm A}}_1)+n\lambda\Psi_2(\lambda_*; \tilde{{\bm A}}_2)\right).
\end{aligned}
\]
Note that
\[
\begin{aligned}
    \Psi_1(\lambda_*; \tilde{{\bm A}}_1) - n\lambda\Psi_2(\lambda_*; \tilde{{\bm A}}_2) = {\mathsf B}_{{\mathsf N},\lambda}^{\tt LS}\,.    
\end{aligned}
\]
For $n\lambda\Psi_2(\lambda_*; \tilde{{\bm A}}_2)$, recall that $\Psi_2(\lambda_*; {\bm A}) := \frac{1}{n} \frac{{\rm Tr}({\bm A} {\bm \Sigma}^2 ({\bm \Sigma} + \lambda_*{\bm I})^{-2})}{n - {\rm Tr}({\bm \Sigma}^2 ({\bm \Sigma} + \lambda_*{\bm I})^{-2})}$, and according to \cref{def:effective_regularization} 
and \cref{ass:technical_LR}, we have
\[
\begin{aligned}
    n\lambda\Psi_2(\lambda_*; \tilde{{\bm A}}_2) =&~ \lambda\frac{{\rm Tr}({\bm \beta}_*{\bm \beta}_*^{\!\top}{\bm \Sigma} ({\bm \Sigma} + \lambda_*{\bm I})^{-2})}{n - {\rm Tr}({\bm \Sigma}^2 ({\bm \Sigma} + \lambda_*{\bm I})^{-2})}\\
    \leq&~ \lambda_*{\rm Tr}({\bm \beta}_*{\bm \beta}_*^{\!\top}{\bm \Sigma} ({\bm \Sigma} + \lambda_*{\bm I})^{-2})\\
    =&~ {\rm Tr}({\bm \beta}_*{\bm \beta}_*^{\!\top} {\bm \Sigma} ({\bm \Sigma} + \lambda_*{\bm I})^{-1}) - {\rm Tr}({\bm \beta}_*{\bm \beta}_*^{\!\top}{\bm \Sigma}^2 ({\bm \Sigma} + \lambda_*{\bm I})^{-2})\\
    \leq&~ \left(1-\frac{1}{C}\right) {\rm Tr}({\bm \beta}_*{\bm \beta}_*^{\!\top}{\bm \Sigma}({\bm \Sigma}+\lambda_*)^{-1})\,,
\end{aligned}
\]
and therefore
\[
\begin{aligned}
    \Psi_1(\lambda_*; \tilde{{\bm A}}_1)+n\lambda\Psi_2(\lambda_*; \tilde{{\bm A}}_2) \leq&~ \left(2-\frac{1}{C}\right) {\rm Tr}({\bm \beta}_*{\bm \beta}_*^{\!\top}{\bm \Sigma}({\bm \Sigma}+\lambda_*)^{-1})\\
    \leq&~ \left(2C-1\right)\frac{1}{C}{\rm Tr}({\bm \beta}_*{\bm \beta}_*^{\!\top}{\bm \Sigma}({\bm \Sigma}+\lambda_*)^{-1})\\
    \leq&~ \left(2C-1\right)\left(\Psi_1(\lambda_*; \tilde{{\bm A}}_1)-n\lambda\Psi_2(\lambda_*; \tilde{{\bm A}}_2)\right).
\end{aligned}
\]
Then we conclude that
\[
    \left|\mathcal{B}_{\mathcal{N},\lambda}^{\tt LS} - {\mathsf B}_{{\mathsf N},\lambda}^{\tt LS}\right| \leq C_{x, D, K} \frac{\rho_{\lambda}(n)^6 \log^{3/2}(n)}{\sqrt{n}}{\mathsf B}_{{\mathsf N},\lambda}^{\tt LS},
\]
with probability at least $1-n^{-D}$.

{\bf Part 2: Deterministic equivalents for the variance term.} Next, we prove the deterministic equivalent of $\mathcal{V}_{\mathcal{N},\lambda}^{\tt LS}$. First, note that $\mathcal{V}_{\mathcal{N},\lambda}^{\tt LS}$ can be written in terms of the functional $\Phi_4({\bm X}; {\bm A}, \lambda)$ defined in \cref{eq:Phi_4}
\[
    \mathcal{V}_{\mathcal{N},\lambda}^{\tt LS} = n\sigma_{\varepsilon}^2\Phi_4({\bm X}; {\bm \Sigma}^{-1}, \lambda)\,.
\]
Thus, under the assumptions, we can apply \cref{thm:main_det_equiv_summary} to obtain that with probability at least $1-n^{-D}$
\[
\left|n\sigma_{\varepsilon}^2\Phi_4({\bm X}; {\bm \Sigma}^{-1}, \lambda) - n\sigma_{\varepsilon}^2\Psi_2(\lambda_*; {\bm \Sigma}^{-1})\right| \leq C_{x, D, K} \frac{\rho_{\lambda}(n)^6 \log^{3/2}(n)}{\sqrt{n}} n\sigma_{\varepsilon}^2\Psi_2(\lambda_*; {\bm \Sigma}^{-1})\,.
\]
Recall that $\Psi_2(\lambda_*; {\bm A}) := \frac{1}{n} \frac{{\rm Tr}({\bm A} {\bm \Sigma}^2 ({\bm \Sigma} + \lambda_*{\bm I})^{-2})}{n - {\rm Tr}({\bm \Sigma}^2 ({\bm \Sigma} + \lambda_*{\bm I})^{-2})}$, then we have
\[
\left|\mathcal{V}_{\mathcal{N},\lambda}^{\tt LS} - {\mathsf V}_{{\mathsf N},\lambda}^{\tt LS}\right| \leq C_{x, D, K} \frac{\rho_{\lambda}(n)^6 \log^{3/2}(n)}{\sqrt{n}} {\mathsf V}_{{\mathsf N},\lambda}^{\tt LS}\,,
\]
with probability at least $1-n^{-D}$.
\end{proof}

\subsection{Characterization of learning curves}

By deriving deterministic equivalents for the norm in linear regression, we can now analyze learning curves through the lens of norm-based capacity. In certain cases, these learning curves can even be expressed in closed form.

In this section, we first examine the general characteristics of learning curves from a norm-based capacity perspective in \cref{sec:descriptionshape_lr}. We then provide a precise characterization of these curves in \cref{sec:relationship_lrr}.

\subsubsection{The shape description of learning curves}\label{sec:descriptionshape_lr}

We plot the bias and variance components of the test risk over $\gamma:=\frac{d}{n}$ and norm, see \cref{fig:bias_variance_risk_lr} and \cref{fig:bias_variance_risk_norm_lr}, respectively. Note that, our theory (shown in curve) can precisely predict experimental results (shown by points).

\cref{fig:bias_variance_risk_lr} reveals a clear bias-variance tradeoff in the over-parameterized regime (where $\gamma > 1$, as shown in the right portion of \cref{fig:bias_variance_risk_lr}). Specifically, we observe that: \textit{i)} The bias exhibits a strictly increasing relationship with the parameter $\gamma$. \textit{ii)} The variance demonstrates a corresponding strictly decreasing trend. 

However, in the under-parameterized regime ($\gamma < 1$), both bias and variance increase monotonically with $\gamma$, therefore the bias-variance tradeoff does not exist. In particular, for the min-$\ell_2$-norm interpolator, since the bias equals 0, the risk is entirely composed of variance.

Because the self-consistent equation differs between the under- and over-parameterized regimes, the learning curve plotted against the norm (see \cref{fig:bias_variance_risk_norm_lr}) is not single-valued—this is due to the phase transition between regimes. Specifically, a single norm value can correspond to two distinct error levels, depending on whether the model is under- or over-parameterized. However, when examined separately, each regime displays a one-to-one relationship between test risk and norm.

\begin{figure*}[!ht]
    \centering
    \subfigure[Test risk vs. $\gamma$ $(\frac{d}{n})$]{\label{fig:bias_variance_risk_lr}
        \includegraphics[width=0.45\textwidth]{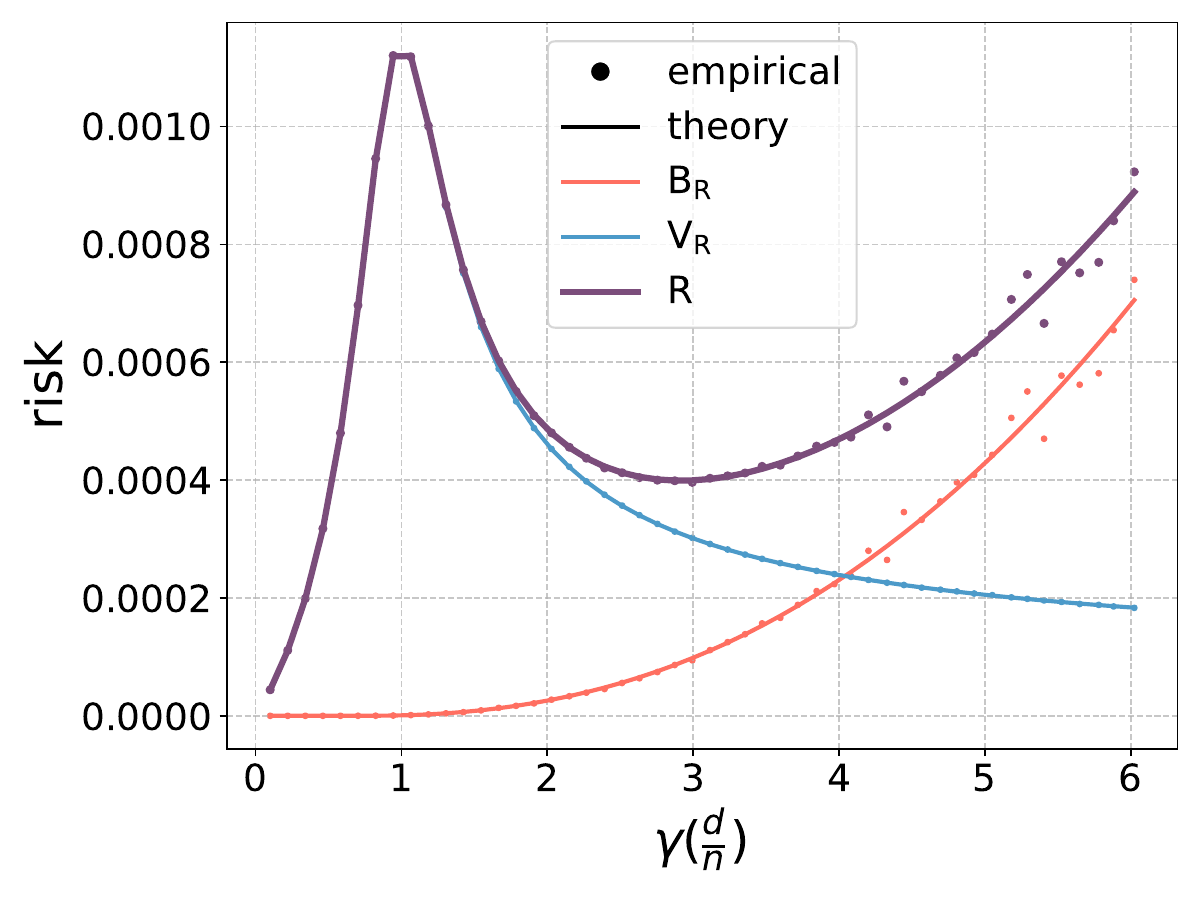}
    }
    \subfigure[Test risk vs. Norm]{\label{fig:bias_variance_risk_norm_lr}
        \includegraphics[width=0.45\textwidth]{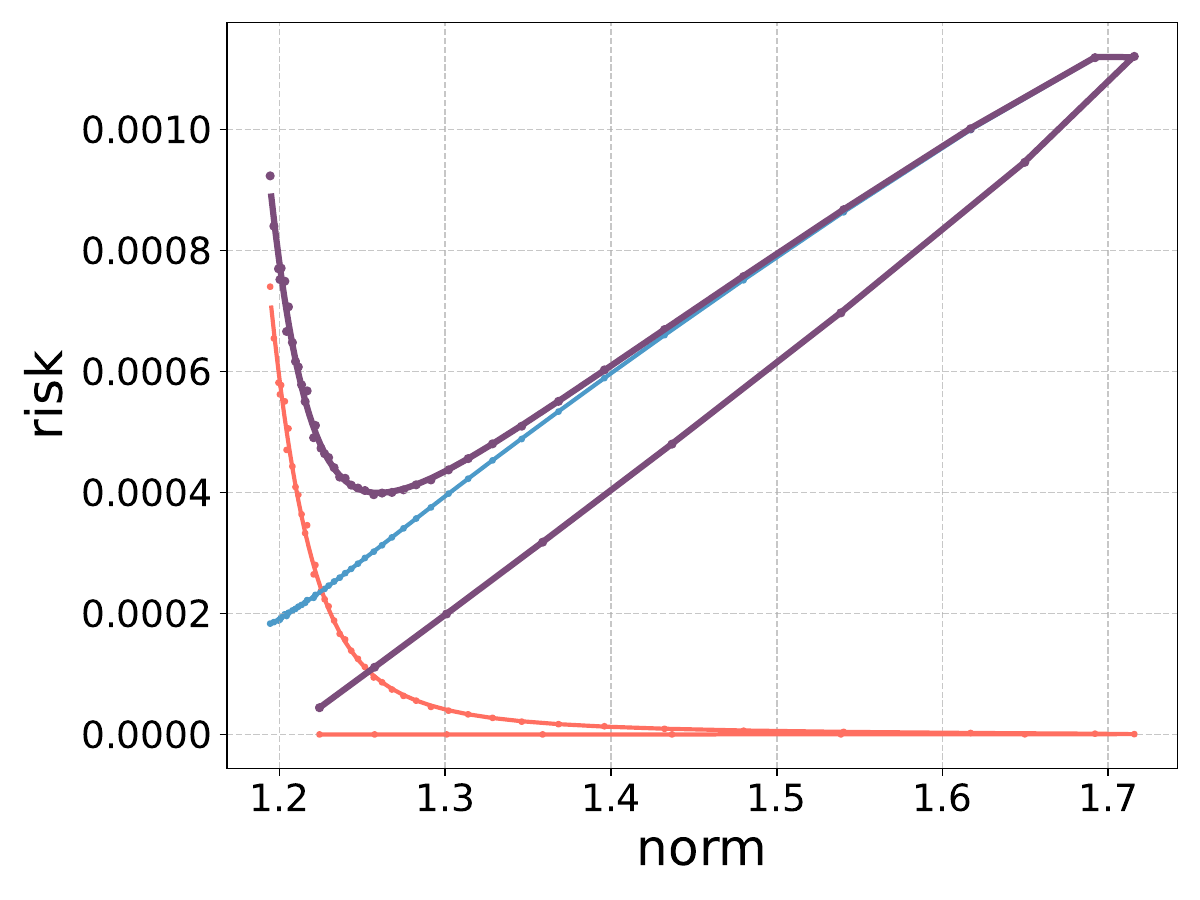}
    }
    \caption{The relationship between the test risk $\mathsf{R}$, norm $\mathsf{N}$, their bias and variance ($\mathsf{B}_{\mathsf{R}}$, $\mathsf{V}_{\mathsf{R}}$, $\mathsf{B}_{\mathsf{N}}$, $\mathsf{V}_{\mathsf{N}}$), and the ratio $\gamma := \frac{d}{n}$ for linear regression model. Training data \(\{({\bm x}_i, y_i)\}_{i \in [n]}\), \(d = 1000\), sampled from a linear model \(y_i = {\bm x}_i^{\!\top} {\bm \beta}_* + \varepsilon_i\), \(\sigma^2 = 0.0004\), \({\bm x}_i \sim \mathcal{N}(0, {\bm \Sigma})\), with \(\sigma_k({\bm \Sigma})=k^{-1}\), \({\bm \beta}_{*,k}=k^{-\nicefrac{3}{2}}\). The ridge $\lambda = 0.005$. Note that in the under-parameterized regime ($d < n$), the bias of the test risk is zero.}\label{fig:bias_variance_lr}
\end{figure*}

\subsubsection{Mathematical formulation of learning curves}\label{sec:relationship_lrr}

In this section, we give the mathematical formulation of learning curves in several settings of interest. First we give some concrete examples on the relationship between ${\mathsf R}$ and  ${\mathsf N}$ in terms of isotropic features.

\begin{proposition}[Isotropic features for ridge regression, see \cref{fig:linear_risk}]\label{prop:relation_id}
    Consider covariance matrix ${\bm \Sigma} = {\bm I}_d$, the deterministic equivalents ${\mathsf R}^{\tt LS}_{\lambda}$ and ${\mathsf N}^{\tt LS}_{\lambda}$ satisfy 
    {\tiny
    \[
    \begin{aligned}
        \left(\|{\bm \beta}_*\|_2^2 \!-\! {\mathsf R}^{\tt LS}_{\lambda} \!-\! {\mathsf N}^{\tt LS}_{\lambda}\right)\left(\|{\bm \beta}_*\|_2^2 \!+\! {\mathsf R}^{\tt LS}_{\lambda} \!-\! {\mathsf N}^{\tt LS}_{\lambda}\right)^2d \!+\! 2\|{\bm \beta}_*\|_2^2\left(\left(\|{\bm \beta}_*\|_2^2 \!+\! {\mathsf R}^{\tt LS}_{\lambda} \!-\! {\mathsf N}^{\tt LS}_{\lambda}\right)^2 \!-\! 4\|{\bm \beta}_*\|_2^2{\mathsf R}^{\tt LS}_{\lambda} \right) \lambda \!=\! 2\left( \left({\mathsf R}^{\tt LS}_{\lambda} \!-\! {\mathsf N}^{\tt LS}_{\lambda}\right)^2 \!-\! \|{\bm \beta}_*\|_2^4 \right) d \sigma^2\,.
    \end{aligned}
    \]
    }
\end{proposition}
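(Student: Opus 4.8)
The plan is to start from the explicit asymptotic deterministic equivalents for $\mathsf{R}^{\tt LS}_\lambda$ and $\mathsf{N}^{\tt LS}_\lambda$ under $\bm\Sigma = {\bm I}_d$, obtained by specializing \cref{prop:asy_equiv_risk_LR} and \cref{prop:asy_equiv_norm_LR}, and then algebraically eliminate the effective regularization $\lambda_*$ between the two expressions, producing a polynomial identity that also involves $\lambda$ (through the self-consistent equation) and $d,\sigma^2,\|\bm\beta_*\|_2^2$. Concretely, with $\bm\Sigma = {\bm I}_d$ every trace collapses: ${\rm Tr}(\bm\Sigma(\bm\Sigma+\lambda_*{\bm I})^{-1}) = d/(1+\lambda_*)$, ${\rm Tr}(\bm\Sigma^2(\bm\Sigma+\lambda_*{\bm I})^{-2}) = d/(1+\lambda_*)^2$, $\langle\bm\beta_*,\bm\Sigma(\bm\Sigma+\lambda_*{\bm I})^{-2}\bm\beta_*\rangle = \|\bm\beta_*\|_2^2/(1+\lambda_*)^2$, and similarly for the $\bm\Sigma^2(\bm\Sigma+\lambda_*{\bm I})^{-2}$ version. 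So both $\mathsf{R}^{\tt LS}_\lambda$ and $\mathsf{N}^{\tt LS}_\lambda$ become explicit rational functions of the single scalar $u := 1+\lambda_*$ (plus the fixed constants), and the self-consistent equation $n - \lambda/\lambda_* = d/(1+\lambda_*)$ gives a third rational relation tying $u$, $\lambda$, $n$, $d$ together.

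The key steps, in order, are: (1) Write down $\mathsf{R}^{\tt LS}_\lambda = \mathsf{B}^{\tt LS}_{\mathsf{R},\lambda} + \mathsf{V}^{\tt LS}_{\mathsf{R},\lambda}$ and $\mathsf{N}^{\tt LS}_\lambda = \mathsf{B}^{\tt LS}_{\mathsf{N},\lambda} + \mathsf{V}^{\tt LS}_{\mathsf{N},\lambda}$ in the isotropic case as functions of $u$. One finds $\mathsf{R}^{\tt LS}_\lambda = \frac{\|\bm\beta_*\|_2^2 (u-1)^2 + \sigma^2 d/(n u^2)\cdot u^2 \cdots}{\cdots}$-type expressions; I expect $\mathsf{R}^{\tt LS}_\lambda - \mathsf{N}^{\tt LS}_\lambda$ and $\|\bm\beta_*\|_2^2 \pm(\mathsf{R}^{\tt LS}_\lambda - \mathsf{N}^{\tt LS}_\lambda)$ to factor nicely, which is presumably why the statement is organized around those combinations. (2) Observe from the structure of \cref{eq:equiv-linear} that in the isotropic case $\mathsf{V}^{\tt LS}_{\mathsf{N},\lambda} = \mathsf{V}^{\tt LS}_{\mathsf{R},\lambda}$ exactly (the rescaling factor ${\rm Tr}(\bm\Sigma(\bm\Sigma+\lambda_*{\bm I})^{-2})/{\rm Tr}(\bm\Sigma^2(\bm\Sigma+\lambda_*{\bm I})^{-2}) = 1$ when $\bm\Sigma = {\bm I}$), so the variance parts cancel in $\mathsf{R}^{\tt LS}_\lambda - \mathsf{N}^{\tt LS}_\lambda$, leaving only bias contributions — this should already be a major simplification and explains why $\mathsf{R}-\mathsf{N}$ is the natural variable. (3) Solve the resulting two bias relations plus the self-consistent equation as a system, eliminating $u$ (equivalently $\lambda_*$); this is the step that yields the stated cubic-in-$d$, linear-in-$\lambda$, quadratic-in-$\sigma^2$ polynomial. (4) Collect terms and match to the displayed identity.

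The main obstacle I anticipate is the elimination in step (3): although each quantity is rational in $u$, clearing denominators and eliminating $u$ between three relations generically produces a high-degree polynomial, and it takes care to see that the cross-terms organize into exactly the compact product form $(\|\bm\beta_*\|_2^2 - \mathsf{R} - \mathsf{N})(\|\bm\beta_*\|_2^2 + \mathsf{R} - \mathsf{N})^2 d + \dots$ rather than a sprawling mess. The trick will be to not expand prematurely: introduce shorthand $P := \mathsf{R}^{\tt LS}_\lambda - \mathsf{N}^{\tt LS}_\lambda$ (pure bias difference, hence a clean rational function of $u$) and $Q := \|\bm\beta_*\|_2^2$, express $P$ and $\mathsf{R}^{\tt LS}_\lambda$ separately in terms of $u$, then use the self-consistent equation to substitute $\lambda$ in terms of $u,n,d$ and verify the identity by back-substitution rather than forward elimination. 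A secondary subtlety is bookkeeping the asymptotic ``$\sim$'': all equalities hold up to $1+o(1)$ factors as $n,d\to\infty$ with $d/n\to\gamma$, so the final polynomial identity should be read in that asymptotic sense, exactly as elsewhere in the paper; one should note this once and then work with honest equalities among the deterministic quantities.
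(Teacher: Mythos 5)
Your proposal is correct and follows essentially the same route as the paper: specialize \cref{eq:de_risk} and \cref{eq:equiv-linear} to $\bm\Sigma={\bm I}_d$ so that ${\mathsf R}^{\tt LS}_\lambda$ and ${\mathsf N}^{\tt LS}_\lambda$ become rational functions of $\lambda_*$ (and $n$), then eliminate the auxiliary variables via the self-consistent equation; the paper does this by solving $\lambda_*$ in closed form and eliminating $n$ with computer algebra, which is equivalent to your elimination of $u=1+\lambda_*$. Your observation that ${\mathsf V}^{\tt LS}_{{\mathsf N},\lambda}={\mathsf V}^{\tt LS}_{{\mathsf R},\lambda}$ under isotropy, so that ${\mathsf R}^{\tt LS}_\lambda-{\mathsf N}^{\tt LS}_\lambda=\|\bm\beta_*\|_2^2\frac{\lambda_*-1}{\lambda_*+1}$ is purely a bias difference, is a genuine simplification the paper only states implicitly, and it does make the elimination tractable by hand.
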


\noindent{\bf Remark:} ${\mathsf R}^{\tt LS}_\lambda$ and ${\mathsf N}^{\tt LS}_\lambda$ formulates a third-order polynomial.
When $\lambda \to \infty$, it degenerates to ${\mathsf R}^{\tt LS}_{\lambda} = (\|{\bm \beta}_*\|_2 - \sqrt{{\mathsf N}^{\tt LS}_{\lambda}})^2 $ when $ {\mathsf N}^{\tt LS}_{\lambda} \leq \|{\bm \beta}_*\|_2^2$.
Hence \( {\mathsf R}^{\tt LS}_{\lambda} \) is monotonically decreasing with respect to \( {\mathsf N}^{\tt LS}_{\lambda} \), empirically verified by \cref{fig:linear_risk}.
Besides, if we take $\lambda = \frac{d\sigma^2}{\|{\bm \beta}_*\|_2^2}$, which is the {\bf optimal regularization parameter} discussed in \cite{wu2020optimal, nakkiran2020optimal}, the relationship in \cref{prop:relation_id} will become ${\mathsf R}^{\tt LS}_{\lambda} = \|{\bm \beta}_*\|_2^2 - {\mathsf N}^{\tt LS}_{\lambda}$, which corresponds to a straight line. This is empirically shown in \cref{fig:linear_risk} with $\lambda=50$. In addition to isotropic features, we further examine the relationship under the power-law assumption for the data. 

Apart from sufficiently large $\lambda$ and optimal $\lambda$ mentioned before, below we consider min-$\ell_2$-norm estimator. 
Note that when $\lambda \to 0$, the ridge regression estimator $\hat{{\bm \beta}}$ converges to the min-$\ell_2$-norm estimator $\hat{{\bm \beta}}_{\min}$.
However, the behavior of \(\lambda_*\) differs between the under-parameterized and over-parameterized regimes as \(\lambda \to 0\). Thus, the min-\(\ell_2\)-norm estimator requires {\bf separate analysis of the two regimes}. 

\begin{figure}[H]
    \centering
    \includegraphics[width=0.4\textwidth]{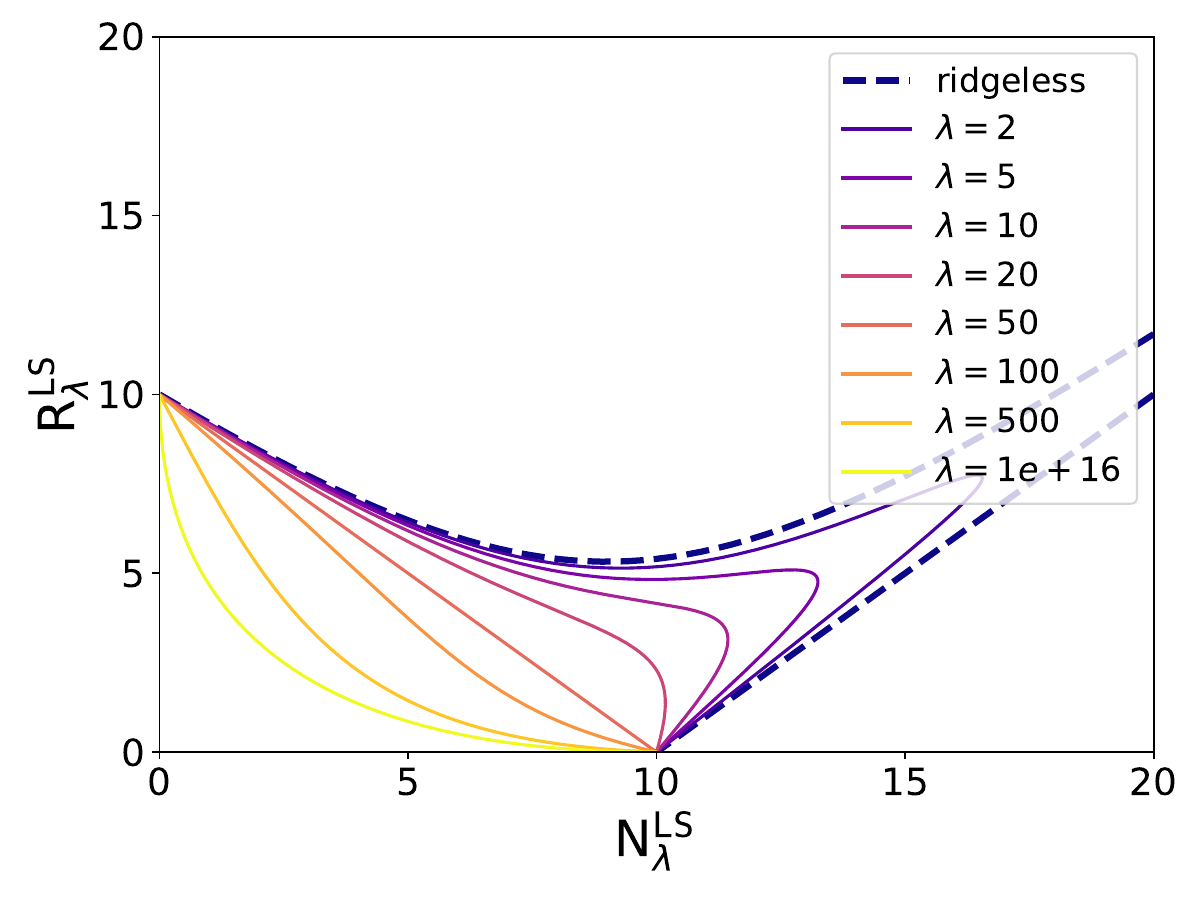} 
    \caption{Relationship between ${\mathsf R}^{\tt LS}_\lambda$ and ${\mathsf N}^{\tt LS}_\lambda$ under the linear model \(y_i = {\bm x}_i^{\!\top} {\bm \beta}_* + \varepsilon_i\), with $d=500$, \({\bm \Sigma} = {\bm I}_d\), \(\|{\bm \beta}_*\|_2^2=10\), and \(\sigma^2 = 1\). The dashed line corresponds to the ridgeless regression curve.} 
    \label{fig:linear_risk} 
\end{figure}

\label{proof:linear:relation}
\begin{proof}[Proof of \cref{prop:relation_id}]
According to the formulation of ${\mathsf B}_{{\mathsf N},\lambda}^{\tt LS}$ and ${\mathsf V}_{{\mathsf N},\lambda}^{\tt LS}$ in \cref{eq:equiv-linear}, for ${\bm \Sigma}={\bm I}_d$, we have
\[
    {\mathsf B}_{{\mathsf N},\lambda}^{\tt LS} = \frac{1}{(1+\lambda_*)^2}\|{\bm \beta}_*\|_2^2 + \frac{d}{n(1+\lambda_*)^2} \cdot \frac{\lambda_*^2 \frac{1}{(1+\lambda_*)^2}\|{\bm \beta}_*\|_2^2}{1-\frac{d}{n(1+\lambda_*)^2}} \,, \quad {\mathsf V}_{{\mathsf N},\lambda}^{\tt LS} = \frac{\sigma^2\frac{d}{(1+\lambda_*)^2}}{n-\frac{d}{(1+\lambda_*)^2}}\,,
\]
\[
    {\mathsf N}_{\lambda}^{\tt LS} = \frac{d}{(1+\lambda_*)^2}\|{\bm \beta}_*\|_2^2 + \frac{d}{n(1+\lambda_*)^2} \cdot \frac{\lambda_*^2 \frac{d}{(1+\lambda_*)^2}\|{\bm \beta}_*\|_2^2}{1-\frac{d}{n(1+\lambda_*)^2}} + \frac{\sigma^2\frac{d}{(1+\lambda_*)^2}}{n-\frac{d}{(1+\lambda_*)^2}}\,,
\]
where $\lambda_*$ admits a closed-form solution
\[
\lambda_*=\frac{d+\lambda-n+\sqrt{4\lambda n + (n-d-\lambda)^2}}{2n} \,.
\]
Recall the formulation ${\mathsf B}_{{\mathsf R},\lambda}^{\tt LS}$ and ${\mathsf V}_{{\mathsf R},\lambda}^{\tt LS}$ (for test risk) in \cref{eq:de_risk}, for ${\bm \Sigma}={\bm I}_d$, we have
\[
\begin{aligned}
    {\mathsf B}_{{\mathsf R},\lambda}^{\tt LS} = \frac{\lambda_*^2 \frac{1}{(1+\lambda_*)^2}\|{\bm \beta}_*\|_2^2}{1-\frac{d}{n(1+\lambda_*)^2}} \,, \quad {\mathsf V}_{{\mathsf R},\lambda}^{\tt LS} = \frac{\sigma^2\frac{d}{(1+\lambda_*)^2}}{n-\frac{d}{(1+\lambda_*)^2}}\,, \quad
    {\mathsf R}_{\lambda}^{\tt LS} = \frac{\lambda_*^2 \frac{d}{(1+\lambda_*)^2}\|{\bm \beta}_*\|_2^2}{1-\frac{d}{n(1+\lambda_*)^2}} + \frac{\sigma^2\frac{d}{(1+\lambda_*)^2}}{n-\frac{d}{(1+\lambda_*)^2}}\,.
\end{aligned}
\]
Accordingly, to establish the relationship between ${\mathsf R}_{\lambda}^{\tt LS}$ and ${\mathsf N}_{\lambda}^{\tt LS}$, we combine their formulation and eliminate $n$ to obtain\footnote{Due to the complexity of the calculations, we use Mathematica Wolfram to eliminate $n$. The same approach is applied later whenever $n$ or $p$ elimination is required.}
\[
\begin{aligned}
    2( ({\mathsf R}^{\tt LS}_{\lambda} - {\mathsf N}^{\tt LS}_{\lambda})^2 - \|{\bm \beta}_*\|_2^4 ) d \sigma^2 =&~ (\|{\bm \beta}_*\|_2^2 - {\mathsf R}^{\tt LS}_{\lambda} - {\mathsf N}^{\tt LS}_{\lambda})(\|{\bm \beta}_*\|_2^2 + {\mathsf R}^{\tt LS}_{\lambda} - {\mathsf N}^{\tt LS}_{\lambda})^2d\\
    &~+ 2\|{\bm \beta}_*\|_2^2((\|{\bm \beta}_*\|_2^2 + {\mathsf R}^{\tt LS}_{\lambda} - {\mathsf N}^{\tt LS}_{\lambda})^2-4\|{\bm \beta}_*\|_2^2{\mathsf R}^{\tt LS}_{\lambda} ) \lambda\,.
\end{aligned}
\]
\end{proof}

\begin{corollary}[Isotropic features for min-$\ell_2$-norm interpolator, see \cref{fig:linear_risk}]\label{prop:relation_minnorm_id}
    Consider covariance matrix ${\bm \Sigma} = {\bm I}_d$, the relationship between ${\mathsf R}^{\tt LS}_0$ and ${\mathsf N}^{\tt LS}_0$ from under-parameterized to over-parameterized regimes admit
    \begin{equation*}
		{\mathsf R}^{\tt LS}_0 = \left\{
		\begin{array}{rcl}
			\begin{aligned}
				&  {\mathsf N}^{\tt LS}_0 - \|{\bm \beta}_*\|_2^2\,,  ~~\text{if}~~ d<n ~\mbox{(under-parameterized)} ; \\
				& \sqrt{\left[{\mathsf N}^{\tt LS}_0 - (\|{\bm \beta}_*\|_2^2 - \sigma^2)\right]^2 + 4\|{\bm \beta}_*\|_2^2 \sigma^2 } - \sigma^2 \,, \mbox{o/w}\,.
			\end{aligned}
		\end{array} \right.
    \end{equation*}
    For the variance part of ${\mathsf R}^{\tt LS}_0$ and ${\mathsf N}^{\tt LS}_0$, we have ${\mathsf V}_{{\mathsf R},0}^{\tt LS} = {\mathsf V}_{{\mathsf N},0}^{\tt LS}$; For the respective bias part, we have ${\mathsf B}_{{\mathsf R},0}^{\tt LS} + {\mathsf B}_{{\mathsf N},0}^{\tt LS} = \| \bm \beta_* \|_2^2$.
\end{corollary}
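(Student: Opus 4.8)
The plan is to derive \cref{prop:relation_minnorm_id} directly from the deterministic equivalents already established for the min-$\ell_2$-norm interpolator, namely \cref{prop:asy_equiv_norm_LR_minnorm} for the norm ${\mathsf N}^{\tt LS}_0$ and \cref{prop:asy_equiv_error_LR_minnorm} for the excess risk ${\mathsf R}^{\tt LS}_0$. I would specialize both to the isotropic case ${\bm \Sigma}={\bm I}_d$ and, following the remark after \cref{prop:asy_equiv_norm_LR_minnorm}, handle the under- and over-parameterized regimes separately because the effective parameter behaves differently in each. The stated closed form is then obtained by eliminating the aspect ratio $d/n$ (equivalently, the shared variance term) between the two resulting expressions.

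First I would treat $d<n$. Using $\operatorname{Tr}({\bm I}_d^{-1})=d$ in \cref{prop:asy_equiv_norm_LR_minnorm} gives ${\mathsf B}_{{\mathsf N},0}^{\tt LS}=\|{\bm \beta}_*\|_2^2$ and ${\mathsf V}_{{\mathsf N},0}^{\tt LS}=\sigma^2 d/(n-d)$, while \cref{prop:asy_equiv_error_LR_minnorm} gives ${\mathsf B}_{{\mathsf R},0}^{\tt LS}=0$ and ${\mathsf V}_{{\mathsf R},0}^{\tt LS}=\sigma^2 d/(n-d)$. The auxiliary identities ${\mathsf V}_{{\mathsf R},0}^{\tt LS}={\mathsf V}_{{\mathsf N},0}^{\tt LS}$ and ${\mathsf B}_{{\mathsf R},0}^{\tt LS}+{\mathsf B}_{{\mathsf N},0}^{\tt LS}=\|{\bm \beta}_*\|_2^2$ are then immediate, and summing bias and variance yields ${\mathsf R}^{\tt LS}_0={\mathsf N}^{\tt LS}_0-\|{\bm \beta}_*\|_2^2$, the first branch of the claim. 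Next, for $d>n$, the defining equation $\operatorname{Tr}({\bm \Sigma}({\bm \Sigma}+\lambda_n{\bm I})^{-1})\sim n$ becomes $d/(1+\lambda_n)=n$, so $1+\lambda_n=d/n$ and $\lambda_n=(d-n)/n$. Substituting this and ${\bm \Sigma}={\bm I}_d$ into the over-parameterized formulas, I would simplify ${\mathsf B}_{{\mathsf N},0}^{\tt LS}=(n/d)\|{\bm \beta}_*\|_2^2$, ${\mathsf V}_{{\mathsf N},0}^{\tt LS}=\sigma^2/\lambda_n$; and for the risk, the factor $1-n^{-1}\operatorname{Tr}({\bm \Sigma}^2({\bm \Sigma}+\lambda_n{\bm I})^{-2})$ collapses to $(d-n)/d$ and the $\lambda_n^2$-weighted quadratic form in the numerator to $(d-n)^2\|{\bm \beta}_*\|_2^2/d^2$, giving ${\mathsf B}_{{\mathsf R},0}^{\tt LS}=(d-n)\|{\bm \beta}_*\|_2^2/d$ and ${\mathsf V}_{{\mathsf R},0}^{\tt LS}=\sigma^2 n/(d-n)=\sigma^2/\lambda_n$. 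Again ${\mathsf V}_{{\mathsf R},0}^{\tt LS}={\mathsf V}_{{\mathsf N},0}^{\tt LS}$ and ${\mathsf B}_{{\mathsf R},0}^{\tt LS}+{\mathsf B}_{{\mathsf N},0}^{\tt LS}=\|{\bm \beta}_*\|_2^2$, which establishes the two auxiliary identities stated in the corollary.

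For the closed form in the over-parameterized regime I would eliminate the free parameter. Setting $B:=\|{\bm \beta}_*\|_2^2$ and $v:={\mathsf V}_{{\mathsf R},0}^{\tt LS}={\mathsf V}_{{\mathsf N},0}^{\tt LS}=\sigma^2/\lambda_n$, the relations $\lambda_n=(d-n)/n$ and $v=\sigma^2/\lambda_n$ give $n/d=v/(v+\sigma^2)$ and $(d-n)/d=\sigma^2/(v+\sigma^2)$, hence ${\mathsf N}^{\tt LS}_0=vB/(v+\sigma^2)+v$ and ${\mathsf R}^{\tt LS}_0=\sigma^2 B/(v+\sigma^2)+v$. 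Adding $\sigma^2$ to each and combining fractions, I would observe the two identities $({\mathsf R}^{\tt LS}_0+\sigma^2)-({\mathsf N}^{\tt LS}_0-B+\sigma^2)=2\sigma^2 B/(v+\sigma^2)$ and $({\mathsf R}^{\tt LS}_0+\sigma^2)+({\mathsf N}^{\tt LS}_0-B+\sigma^2)=2(v+\sigma^2)$; multiplying them eliminates $v$ and yields $({\mathsf R}^{\tt LS}_0+\sigma^2)^2=[{\mathsf N}^{\tt LS}_0-(B-\sigma^2)]^2+4B\sigma^2$. Taking the nonnegative square root (valid since ${\mathsf R}^{\tt LS}_0+\sigma^2>0$) gives the second branch of the claim.

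The proof requires no new machinery beyond the cited propositions, so the main obstacle is organizational rather than conceptual: keeping the over-parameterized algebra under control by spotting the difference-of-squares structure above instead of brute-forcing the elimination, and making sure the $\ell_2$-norm and risk formulas are always compared at the same $\lambda_n$ (and with the same choice of regime for $\lambda_*$). A minor point worth flagging explicitly is that in the under-parameterized regime the curve is a straight line of slope one, whereas in the over-parameterized regime it is the hyperbola-type branch given by the square root, so the overall curve is not single-valued across the phase transition — consistent with the discussion in \cref{sec:descriptionshape_lr}.
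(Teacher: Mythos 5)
Your proposal is correct and follows essentially the same route as the paper: specialize \cref{prop:asy_equiv_norm_LR_minnorm} and \cref{prop:asy_equiv_error_LR_minnorm} to ${\bm \Sigma}={\bm I}_d$, treat the two regimes separately with $\lambda_n=(d-n)/n$ in the over-parameterized case, and eliminate the aspect ratio. The only (cosmetic) difference is that you carry out the elimination by hand via the sum/difference factorization $({\mathsf R}^{\tt LS}_0+\sigma^2)^2-({\mathsf N}^{\tt LS}_0-\|{\bm \beta}_*\|_2^2+\sigma^2)^2=4\|{\bm \beta}_*\|_2^2\sigma^2$, whereas the paper eliminates $n$ directly (with computer-algebra assistance); your version is arguably cleaner but mathematically identical.
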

\noindent{\bf Remark:} 
In the under-parameterized regime, the test error ${\mathsf R}^{\tt LS}_0$ is a linear function of the norm ${\mathsf N}^{\tt LS}_0$. 
In the over-parameterized regime, ${\mathsf R}^{\tt LS}_0$ and ${\mathsf N}^{\tt LS}_0$ formulates a rectangular hyperbola: ${\mathsf R}^{\tt LS}_0$ decreases with ${\mathsf N}^{\tt LS}_0$ if ${\mathsf N}^{\tt LS}_0 < \|{\bm \beta}_*\|_2^2 - \sigma^2$ while ${\mathsf R}^{\tt LS}_0$ increases with ${\mathsf N}^{\tt LS}_0$ if ${\mathsf N}^{\tt LS}_0 > \|{\bm \beta}_*\|_2^2 - \sigma^2$.

\begin{proof}[Proof of \cref{prop:relation_minnorm_id}]
According to \cref{prop:asy_equiv_error_LR_minnorm} and \cref{prop:asy_equiv_norm_LR_minnorm}, for minimum $\ell_2$-norm estimator and ${\bm \Sigma} = {\bm I}_d$, for the under-parameterized regime ($d<n$), we have
\[
\begin{aligned}
{\mathsf B}_{{\mathsf R},0}^{\tt LS} = 0\,, \quad {\mathsf V}_{{\mathsf R},0}^{\tt LS} = \frac{\sigma^2d}{n-d}\,; \quad \quad {\mathsf B}_{{\mathsf N},0}^{\tt LS} = \|{\bm \beta}_*\|_2^2\,, \quad {\mathsf V}_{{\mathsf N},0}^{\tt LS} = \frac{\sigma^2d}{n-d}\,. 
\end{aligned}
\]
From these expressions, we can conclude that
\[
\begin{aligned}
    {\mathsf R}_{0}^{\tt LS} = {\mathsf B}_{{\mathsf R},0}^{\tt LS} + {\mathsf V}_{{\mathsf R},0}^{\tt LS} = \frac{\sigma^2d}{n-d}\,; \quad \quad {\mathsf N}_{0}^{\tt LS} = {\mathsf B}_{{\mathsf N},0}^{\tt LS} + {\mathsf V}_{{\mathsf N},0}^{\tt LS} = \|{\bm \beta}_*\|_2^2 + \frac{\sigma^2d}{n-d}\,. 
\end{aligned}
\]
Finally, in the under-parameterized regime, it follows that
\begin{equation*}
 {\mathsf R}_{0}^{\tt LS} = {\mathsf N}_{0}^{\tt LS} - \|{\bm \beta}_*\|_2^2\,.   
\end{equation*}

In the over-parameterized regime ($d>n$), the effective regularization $\lambda_*$ will have an explicit formulation as $\lambda_* = \frac{d-n}{n}$, thus for the bias and variance of the test error, we have
\[
\begin{aligned}
{\mathsf B}_{{\mathsf R},0}^{\tt LS} = \frac{\lambda_n^2\<{\bm \beta}_*,{\bm \Sigma}({\bm \Sigma}+\lambda_n{\bm I})^{-2}{\bm \beta}_*\rangle}{1-n^{-1}{\rm Tr}({\bm \Sigma}^2({\bm \Sigma}+\lambda_n)^{-2})} = \frac{\lambda_n^2 \frac{1}{(1 + \lambda_n)^2}\|{\bm \beta}_*\|_2^2}{1 - \frac{1}{n}\frac{d}{(1+\lambda_n)^2}} = \|{\bm \beta}_*\|_2^2\frac{d-n}{d}\,,
\end{aligned}
\]
\[
\begin{aligned}
{\mathsf V}_{{\mathsf R},0}^{\tt LS} = \frac{\sigma^2{\rm Tr}({\bm \Sigma}^2({\bm \Sigma}+\lambda_n{\bm I})^{-2})}{n-{\rm Tr}({\bm \Sigma}^2({\bm \Sigma}+\lambda_n{\bm I})^{-2})} = \frac{\sigma^2\frac{d}{(1+\lambda_n)^2}}{n-\frac{d}{(1+\lambda_n)^2}} = \sigma^2\frac{n}{d-n}\,,
\end{aligned}
\]
and combining the bias and variance, we have
\begin{align}\label{eq:r_under}
{\mathsf R}_{0}^{\tt LS} = {\mathsf B}_{{\mathsf R},0}^{\tt LS} + {\mathsf V}_{{\mathsf R},0}^{\tt LS} = \|{\bm \beta}_*\|_2^2\frac{d-n}{d} + \sigma^2\frac{n}{d-n}\,.
\end{align}
For the bias and variance of the norm, we have
\[
\begin{aligned}
{\mathsf B}_{{\mathsf N},0}^{\tt LS} = \<{\bm \beta}_*,{\bm \Sigma}({\bm \Sigma}+\lambda_n{\bm I})^{-1}{\bm \beta}_*\rangle = \frac{1}{1+\lambda_n}\|{\bm \beta}_*\|_2^2 = \|{\bm \beta}_*\|_2^2\frac{n}{d}\,,
\end{aligned}
\]
\[
\begin{aligned}
{\mathsf V}_{{\mathsf N},0}^{\tt LS} = \frac{\sigma{\rm Tr}({\bm \Sigma}({\bm \Sigma}+\lambda_n{\bm I})^{-2})}{n-{\rm Tr}({\bm \Sigma}^2({\bm \Sigma}+\lambda_n{\bm I})^{-2})} = \frac{\sigma^2\frac{d}{(1+\lambda_n)^2}}{n-\frac{d}{(1+\lambda_n)^2}} = \sigma^2\frac{n}{d-n}\,,
\end{aligned}
\]
and combining the bias and variance, we have

\begin{align}\label{eq:n_under}
{\mathsf N}_{0}^{\tt LS} = {\mathsf B}_{{\mathsf N},0}^{\tt LS} + {\mathsf V}_{{\mathsf N},0}^{\tt LS} = \|{\bm \beta}_*\|_2^2\frac{n}{d} + \sigma^2\frac{n}{d-n}\,.
\end{align}
Finally, combining \cref{eq:r_under} and \cref{eq:n_under}, we eliminate $n$ and thus obtain
\[
\begin{aligned}
    {\mathsf R}_{0}^{\tt LS} = \sqrt{({\mathsf N}^{\tt LS}_0)^2 \!-\! 2(\|{\bm \beta}_*\|_2^2 \!-\! \sigma^2){\mathsf N}^{\tt LS}_0 + (\|{\bm \beta}_*\|_2^2 \!+\! \sigma^2)^2} \!-\!\sigma^2\,.
\end{aligned}
\]
By taking the derivative of ${\mathsf R}_{0}^{\tt LS}$ with respect to ${\mathsf N}_{0}^{\tt LS}$, we get
\[
\frac{\partial {\mathsf R}_{0}^{\tt LS}}{\partial {\mathsf N}_{0}^{\tt LS}} = \frac{{\mathsf N}_{0}^{\tt LS} - (\|{\bm \beta}_*\|_2^2 - \sigma^2)}{\sqrt{({\mathsf N}_{0}^{\tt LS})^2 - 2(\|{\bm \beta}_*\|_2^2 - \sigma^2){\mathsf N}_{0}^{\tt LS} + (\|{\bm \beta}_*\|_2^2 + \sigma^2)^2}}\,.
\]
From the derivative function, we observe that ${\mathsf R}_{0}^{\tt LS}$ decreases monotonically with ${\mathsf N}_{0}^{\tt LS}$ when ${\mathsf N}_{0}^{\tt LS} < \|\boldsymbol{\beta}_*\|_2^2 - \sigma^2$, and increases monotonically with ${\mathsf N}_{0}^{\tt LS}$ when ${\mathsf N}_{0}^{\tt LS} > \|\boldsymbol{\beta}_*\|_2^2 - \sigma^2$.
\end{proof}

\paragraph{Relationship for min-$\ell_2$-norm interpolator in the under-parameterized regime.} Next, we consider the min-norm estimator, and we find that for the min-norm estimator, in the under-parameterized regime, the relationship between risk and norm is linear, and this linearity is independent of the data distribution.

\begin{proposition}[Relationship for min-$\ell_2$-norm interpolator in the {\bf under-parameterized} regime]\label{prop:relation_minnorm_underparam}
The deterministic equivalents ${\mathsf R}^{\tt LS}_{0}$ and ${\mathsf N}^{\tt LS}_{0}$, in under-parameterized regimes ($d < n$) admit the linear relationship
\[
    \begin{aligned}
        {\mathsf R}^{\tt LS}_0 = {d}\left({\mathsf N}^{\tt LS}_0 - \|{\bm \beta}_*\|_2^2\right)/{{\rm Tr}({\bm \Sigma}^{-1})}\,.
    \end{aligned}
\]
\end{proposition}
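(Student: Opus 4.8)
The plan is to simply combine the two deterministic-equivalence results already established for the min-$\ell_2$-norm interpolator in the under-parameterized regime and eliminate the nuisance quantity $\sigma^2/(n-d)$ that they share. First I would invoke \cref{prop:asy_equiv_error_LR_minnorm} for the test risk: in the regime $d<n$ it gives $\mathcal{B}_{\mathcal{R},0}^{\tt LS}=0$ and $\mathcal{V}_{\mathcal{R},0}^{\tt LS}\sim \sigma^2\, d/(n-d)$, hence ${\mathsf R}_0^{\tt LS}=\sigma^2 d/(n-d)$. Next I would invoke \cref{prop:asy_equiv_norm_LR_minnorm} for the estimator norm: in the same regime it gives $\mathcal{B}_{\mathcal{N},0}^{\tt LS}=\|{\bm \beta}_*\|_2^2$ and $\mathcal{V}_{\mathcal{N},0}^{\tt LS}\sim \sigma^2\,{\rm Tr}({\bm \Sigma}^{-1})/(n-d)$, hence ${\mathsf N}_0^{\tt LS}=\|{\bm \beta}_*\|_2^2+\sigma^2\,{\rm Tr}({\bm \Sigma}^{-1})/(n-d)$.

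Then the single algebraic step is to solve the norm identity for $\sigma^2/(n-d)$, namely
\[
\frac{\sigma^2}{n-d}=\frac{{\mathsf N}_0^{\tt LS}-\|{\bm \beta}_*\|_2^2}{{\rm Tr}({\bm \Sigma}^{-1})}\,,
\]
and substitute this into ${\mathsf R}_0^{\tt LS}=d\cdot\sigma^2/(n-d)$ to obtain ${\mathsf R}_0^{\tt LS}=d\,({\mathsf N}_0^{\tt LS}-\|{\bm \beta}_*\|_2^2)/{\rm Tr}({\bm \Sigma}^{-1})$, which is exactly the claimed linear relationship; the data distribution enters only through the scalar ${\rm Tr}({\bm \Sigma}^{-1})$, so the linearity is indeed distribution-independent once this trace is fixed.

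There is essentially no obstacle here: the content of the proposition is already packaged inside the two cited corollaries, and the only thing to verify is that ${\rm Tr}({\bm \Sigma}^{-1})<\infty$ (so that division is legitimate), which is guaranteed by \cref{ass:asym} since $\bm\Sigma$ is invertible with bounded operator norm in finite dimension $d$, together with $d<n$ ensuring $n-d>0$ so that all denominators are positive. If one wanted, the same conclusion can be re-derived directly from $\hat{\bm\beta}_{\min}={\bm\beta}_*+({\bm X}^{\!\top}{\bm X})^{-1}{\bm X}^{\!\top}\bm\varepsilon$, using ${\rm Tr}({\bm X}^{\!\top}{\bm X}({\bm X}^{\!\top}{\bm X})^{-2})\sim {\rm Tr}({\bm\Sigma}^{-1})/(n-d)$ from \cref{eq:trA3} with $\lambda\to0$ and ${\bm A}={\bm I}$, but the cleanest route is the substitution above.
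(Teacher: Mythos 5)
Your proposal is correct and follows exactly the same route as the paper: invoke \cref{prop:asy_equiv_error_LR_minnorm} and \cref{prop:asy_equiv_norm_LR_minnorm} to get ${\mathsf R}_0^{\tt LS}=\sigma^2 d/(n-d)$ and ${\mathsf N}_0^{\tt LS}=\|{\bm \beta}_*\|_2^2+\sigma^2\,{\rm Tr}({\bm \Sigma}^{-1})/(n-d)$, then eliminate the common factor $\sigma^2/(n-d)$. The added sanity checks on ${\rm Tr}({\bm \Sigma}^{-1})<\infty$ and $n-d>0$ are fine but not needed beyond what the paper already assumes.
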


\begin{proof}[Proof of \cref{prop:relation_minnorm_underparam}]
According to \cref{prop:asy_equiv_error_LR_minnorm} and \cref{prop:asy_equiv_norm_LR_minnorm}, for minimum $\ell_2$-norm estimator, in the under-parameterized regime ($d<n$), we have
\[
\begin{aligned}
{\mathsf B}_{{\mathsf R},0}^{\tt LS} = 0\,, \quad {\mathsf V}_{{\mathsf R},0}^{\tt LS} = \frac{\sigma^2d}{n-d}\,; \quad \quad {\mathsf B}_{{\mathsf N},0}^{\tt LS} = \|{\bm \beta}_*\|_2^2\,, \quad {\mathsf V}_{{\mathsf N},0}^{\tt LS} = \frac{\sigma^2{\rm Tr}({\bm \Sigma}^{-1})}{n-d}\,. 
\end{aligned}
\]
From these expressions, we can conclude that
\[
\begin{aligned}
    {\mathsf R}_{0}^{\tt LS} = {\mathsf B}_{{\mathsf R},0}^{\tt LS} + {\mathsf V}_{{\mathsf R},0}^{\tt LS} = \frac{\sigma^2d}{n-d}\,; \quad \quad {\mathsf N}_{0}^{\tt LS} = {\mathsf B}_{{\mathsf N},0}^{\tt LS} + {\mathsf V}_{{\mathsf N},0}^{\tt LS} = \|{\bm \beta}_*\|_2^2 + \frac{\sigma^2{\rm Tr}({\bm \Sigma}^{-1})}{n-d}\,. 
\end{aligned}
\]
Finally, combing the above equation and eliminate \(n\), in the under-parameterized regime, it follows that
\begin{equation}\label{eq:rn_under}
 {\mathsf R}_{0}^{\tt LS} = \frac{d}{{\rm Tr}({\bm \Sigma}^{-1})}\left({\mathsf N}_{0}^{\tt LS} - \|{\bm \beta}_*\|_2^2\right)\,.   
\end{equation}
\end{proof}

The relationship in the over-parameterized regime is more complicated. We present it in the special case of isotropic features in \cref{prop:relation_minnorm_id} of \cref{prop:relation_id}, and we also give an approximation in \cref{prop:relation_minnorm_pl} under the power-law assumption.

\begin{assumption}[Power-law assumption]\label{ass:powerlaw}
For the covariance matrix $\bm \Sigma$ and the target function $\bm \beta_*$, we assume that $ \sigma_k(\bm \Sigma) = k^{-\alpha}, \alpha >0$ and $ {\bm \beta}_{*,k} =k^{-\nicefrac{\alpha\beta}{2}},  \beta \in \mathbb{R}$.
\end{assumption}

This assumption is close to classical source condition and capacity condition~\citep{caponnetto2007optimal} and is similarly used in \cite[Assumption 1]{paquette20244+}.

\paragraph{Relationship under power-law assumption.} Instead of assuming ${\bm \Sigma} = {\bm I}_d$, we next consider power-law features in \cref{ass:powerlaw} and characterize the relationship.
\begin{proposition}[Power-law features for min-$\ell_2$ norm estimator]\label{prop:relation_minnorm_pl}
    Under \cref{ass:powerlaw}, in the over-parameterized regime ($d>n$), we consider some special cases for analytic formulation: if $\alpha=1$ and $\beta=0$, when $n \to d$, we have\footnote{The symbol $\approx$ here represents two types of approximations: i) approximation for self-consistent equations; ii) Taylor approximation of logarithmic function around zero (related to $n \to d$).}
    \[
    \begin{aligned}
        {\mathsf V}_{{\mathsf R}, 0}^{\tt LS} \approx \frac{2({\mathsf V}_{{\mathsf N}, 0}^{\tt LS})^2}{d{\mathsf V}_{{\mathsf N}, 0}^{\tt LS}-d^2\sigma^2}\,, \quad {\mathsf B}_{{\mathsf R},0}^{\tt LS} \!\approx\! \frac{2{\mathsf B}_{{\mathsf N}, 0}^{\tt LS}(d-{\mathsf B}_{{\mathsf N}, 0}^{\tt LS})}{d^2}\,.
    \end{aligned}
    \]
\end{proposition}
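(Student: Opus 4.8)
Proof proposal for \cref{prop:relation_minnorm_pl}.

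The plan is to specialize the over-parameterized deterministic equivalents for the min-$\ell_2$-norm interpolator --- \cref{prop:asy_equiv_norm_LR_minnorm} for the norm and \cref{prop:asy_equiv_error_LR_minnorm} for the risk --- to the power-law covariance of \cref{ass:powerlaw} with $\alpha=1$, $\beta=0$, and then to eliminate the auxiliary variables $\lambda_n$ and $n$ so as to express $\mathcal{V}_{\mathcal{R},0}^{\tt LS}$ and $\mathcal{B}_{\mathcal{R},0}^{\tt LS}$ through the norm quantities $\mathcal{V}_{\mathcal{N},0}^{\tt LS}$ and $\mathcal{B}_{\mathcal{N},0}^{\tt LS}$ alone. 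The first move is a set of exact reductions. With $\sigma_k(\bm{\Sigma})=k^{-1}$ and $\bm{\beta}_{*,k}=1$, the flatness of $\bm{\beta}_*$ forces $\langle\bm{\beta}_*,\bm{\Sigma}(\bm{\Sigma}+\lambda_n\bm{I})^{-1}\bm{\beta}_*\rangle=\Tr(\bm{\Sigma}(\bm{\Sigma}+\lambda_n\bm{I})^{-1})$ and $\langle\bm{\beta}_*,\bm{\Sigma}(\bm{\Sigma}+\lambda_n\bm{I})^{-2}\bm{\beta}_*\rangle=\Tr(\bm{\Sigma}(\bm{\Sigma}+\lambda_n\bm{I})^{-2})$, so $\mathcal{B}_{\mathcal{N},0}^{\tt LS}$ coincides with the self-consistent equation and $\mathcal{B}_{\mathcal{N},0}^{\tt LS}\sim n$. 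Using the operator identity $\bm{\Sigma}(\bm{\Sigma}+\lambda_n\bm{I})^{-1}-\bm{\Sigma}^2(\bm{\Sigma}+\lambda_n\bm{I})^{-2}=\lambda_n\bm{\Sigma}(\bm{\Sigma}+\lambda_n\bm{I})^{-2}$ one gets $n-\Tr(\bm{\Sigma}^2(\bm{\Sigma}+\lambda_n\bm{I})^{-2})=\lambda_n\Tr(\bm{\Sigma}(\bm{\Sigma}+\lambda_n\bm{I})^{-2})$; feeding this into the bias-of-risk formula collapses it to $\mathcal{B}_{\mathcal{R},0}^{\tt LS}\sim n\lambda_n$, and into the variance-of-norm formula gives $\mathcal{V}_{\mathcal{N},0}^{\tt LS}\sim\sigma^2/\lambda_n$, so that $\lambda_n=\sigma^2/\mathcal{V}_{\mathcal{N},0}^{\tt LS}$ and $n=\mathcal{B}_{\mathcal{N},0}^{\tt LS}$ are read directly off the deterministic quantities. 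What remains is $\mathcal{V}_{\mathcal{R},0}^{\tt LS}\sim\sigma^2\,\Tr(\bm{\Sigma}^2(\bm{\Sigma}+\lambda_n\bm{I})^{-2})/(\lambda_n\Tr(\bm{\Sigma}(\bm{\Sigma}+\lambda_n\bm{I})^{-2}))$.

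Next I would carry out the integral approximation and the $n\to d$ expansion. Replacing the truncated sums by integrals yields $\Tr(\bm{\Sigma}(\bm{\Sigma}+\lambda_n\bm{I})^{-1})=\sum_{k\le d}\tfrac{1}{1+\lambda_n k}\approx\tfrac1{\lambda_n}\log(1+\lambda_n d)$, $\Tr(\bm{\Sigma}^2(\bm{\Sigma}+\lambda_n\bm{I})^{-2})=\sum_{k\le d}\tfrac{1}{(1+\lambda_n k)^2}\approx\tfrac{d}{1+\lambda_n d}$, and $\Tr(\bm{\Sigma}(\bm{\Sigma}+\lambda_n\bm{I})^{-2})\approx\tfrac1{\lambda_n^2}\bigl(\log(1+\lambda_n d)+\tfrac1{1+\lambda_n d}-1\bigr)$. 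As $n\to d$ one has $\lambda_n\to0$ and $\lambda_n d\to0$, so Taylor-expanding $\log(1+\lambda_n d)$ about $0$ in the first relation gives $n\approx d-\tfrac12\lambda_n d^2$, i.e. $\lambda_n\approx 2(d-n)/d^2$. Substituting $n=\mathcal{B}_{\mathcal{N},0}^{\tt LS}$ into $\mathcal{B}_{\mathcal{R},0}^{\tt LS}=n\lambda_n$ produces $\mathcal{B}_{\mathcal{R},0}^{\tt LS}\approx 2\,\mathcal{B}_{\mathcal{N},0}^{\tt LS}\,(d-\mathcal{B}_{\mathcal{N},0}^{\tt LS})/d^2$. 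For the variance, the same expansions turn all three traces into elementary functions of $\lambda_n d$; inserting $\lambda_n=\sigma^2/\mathcal{V}_{\mathcal{N},0}^{\tt LS}$ throughout and eliminating $\lambda_n$ (hence $d-n$) reduces $\mathcal{V}_{\mathcal{R},0}^{\tt LS}$ to a rational function of $\mathcal{V}_{\mathcal{N},0}^{\tt LS}$, $d$, $\sigma^2$, which after simplification is $2(\mathcal{V}_{\mathcal{N},0}^{\tt LS})^2/(d\,\mathcal{V}_{\mathcal{N},0}^{\tt LS}-d^2\sigma^2)$.

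The hard part will be the bookkeeping of these asymptotics. Both $\mathcal{B}_{\mathcal{R},0}^{\tt LS}$ and $\mathcal{V}_{\mathcal{R},0}^{\tt LS}$ diverge as $n\to d$, while the quantities one divides by --- namely $n-\Tr(\bm{\Sigma}^2(\bm{\Sigma}+\lambda_n\bm{I})^{-2})=\Theta(\lambda_n d^2)$ and the gap $d-n$ --- vanish at the same rate, so the target expressions are governed by the \emph{first subleading} term in the $\lambda_n d\to0$ expansion of each integral. The delicate point is two-fold: keeping every expansion to exactly the matching order throughout the elimination, and checking that the error committed in replacing $\sum_{k\le d}$ by the integral is of strictly smaller order than the terms retained (a monotonicity / Euler--Maclaurin estimate), so that the resulting identities are legitimate at the $\approx$ level of precision the statement advertises (self-consistent-equation approximation plus the logarithmic Taylor expansion near $n\to d$, per the footnote). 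Once these expansions are pinned down, the elimination of $n$ and $\lambda_n$ is purely algebraic and can be carried out with computer algebra, as the authors remark.
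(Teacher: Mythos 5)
Your route is essentially the paper's: specialize the min-norm deterministic equivalents to $\alpha=1$, $\beta=0$, replace eigensums by integrals, Taylor-expand near $n\to d$, and eliminate $n,\lambda_n$. Your opening reductions are correct and in fact cleaner than the paper's presentation: the identity $n-\Tr({\bm \Sigma}^2({\bm \Sigma}+\lambda_n{\bm I})^{-2})=\lambda_n\Tr({\bm \Sigma}({\bm \Sigma}+\lambda_n{\bm I})^{-2})$ gives ${\mathsf B}_{{\mathsf R},0}^{\tt LS}=n\lambda_n$ and ${\mathsf V}_{{\mathsf N},0}^{\tt LS}=\sigma^2/\lambda_n$ exactly (the latter is already recorded in \cref{prop:asy_equiv_norm_LR_minnorm}), and with ${\mathsf B}_{{\mathsf N},0}^{\tt LS}=n$ and $\lambda_n\approx 2(d-n)/d^2$ the bias relation follows immediately and matches the paper.

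The one place where your plan does not, as written, land on the stated formula is the variance. The displayed expression ${\mathsf V}_{{\mathsf R},0}^{\tt LS}\approx 2({\mathsf V}_{{\mathsf N},0}^{\tt LS})^2/(d{\mathsf V}_{{\mathsf N},0}^{\tt LS}-d^2\sigma^2)$ is equivalent to ${\mathsf V}_{{\mathsf R},0}^{\tt LS}\approx \frac{2\sigma^2}{d\lambda_n(1-d\lambda_n)}=\frac{2{\mathsf V}_{{\mathsf N},0}^{\tt LS}}{d}+2\sigma^2+O(d\lambda_n)$, and the subleading constant $2\sigma^2$ comes from the paper's specific mixed truncation: $\log(1+d\lambda_n)$ expanded to second order inside $n\approx d-\tfrac12 d^2\lambda_n$, while $(1+d\lambda_n)^{-1}$ is kept exact. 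If instead you use your exact identity $n-\Tr({\bm \Sigma}^2(\cdot)^{-2})=\lambda_n\Tr({\bm \Sigma}(\cdot)^{-2})$ together with the full integral $\Tr({\bm \Sigma}(\cdot)^{-2})\approx\lambda_n^{-2}\bigl(\log(1+d\lambda_n)+(1+d\lambda_n)^{-1}-1\bigr)$ and expand consistently, you get ${\mathsf V}_{{\mathsf R},0}^{\tt LS}=\sigma^2 t/\bigl((1+t)\log(1+t)-t\bigr)$ with $t=d\lambda_n$, whose small-$t$ expansion is $\frac{2{\mathsf V}_{{\mathsf N},0}^{\tt LS}}{d}+\tfrac{2}{3}\sigma^2+O(t)$ — same leading term, different $O(\sigma^2)$ constant. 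So your asserted "after simplification" step does not follow from your intermediate formulas; to reproduce the proposition's exact expression you must adopt the paper's particular truncation order. Since both versions agree within the slack the footnote advertises (the constant is negligible against ${\mathsf V}_{{\mathsf N},0}^{\tt LS}\to\infty$ as $n\to d$), this is a bookkeeping discrepancy rather than a fatal gap, but it is exactly the delicate point you flagged and it needs to be resolved explicitly rather than deferred to "simplification."
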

{\bf Remark:}
The relationship between ${\mathsf R}^{\tt LS}_0$ and ${\mathsf N}^{\tt LS}_0$ is still linear in the under-parameterized regime, but is quite complex in the over-parameterized regime. 

\begin{proof}[Proof of \cref{prop:relation_minnorm_pl}]
In the over-parameterized regime ($d > n$), according to \cref{prop:asy_equiv_error_LR_minnorm} and \cref{prop:asy_equiv_norm_LR_minnorm}, under \cref{ass:powerlaw}, we have
\[
\begin{aligned}
    {\mathsf B}_{{\mathsf R},0}^{\tt LS} =&~ \frac{\lambda_n^2\<{\bm \beta}_*,{\bm \Sigma}({\bm \Sigma}+\lambda_n{\bm I})^{-2}{\bm \beta}_*\rangle}{1-n^{-1}{\rm Tr}({\bm \Sigma}^2({\bm \Sigma}+\lambda_n{\bm I})^{-2})} = \frac{\lambda_n^2{\rm Tr}({\bm \Sigma}^{1+\beta}({\bm \Sigma}+\lambda_n{\bm I})^{-2})}{1-n^{-1}{\rm Tr}({\bm \Sigma}^2({\bm \Sigma}+\lambda_n{\bm I})^{-2})}\,,\\
    {\mathsf V}_{{\mathsf R},0}^{\tt LS} =&~ \frac{\sigma^2 {\rm Tr}\left({\bm \Sigma}^2 ({\bm \Sigma} + \lambda_n{\bm I})^{-2}\right)}{n - {\rm Tr}\left({\bm \Sigma}^2 ({\bm \Sigma} + \lambda_n{\bm I})^{-2}\right)}\,,\\
    {\mathsf B}_{{\mathsf N},0}^{\tt LS} =&~ \<{\bm \beta}_*,{\bm \Sigma}({\bm \Sigma}+\lambda_n{\bm I})^{-1}{\bm \beta}_*\rangle = {\rm Tr}({\bm \Sigma}^{1+\beta}({\bm \Sigma}+\lambda_n{\bm I})^{-1})\,,\\
    {\mathsf V}_{{\mathsf N},0}^{\tt LS} =&~ \frac{\sigma^2 {\rm Tr}\left({\bm \Sigma} ({\bm \Sigma} + \lambda_n{\bm I})^{-2}\right)}{n - {\rm Tr}\left({\bm \Sigma}^2 ({\bm \Sigma} + \lambda_n{\bm I})^{-2}\right)}\,.
\end{aligned}
\]
To compute these quantities, here we introduce the following continuum approximations to eigensums.
\begin{equation}\label{eq:df1_inter_approx} 
\int_{1}^{d+1} \frac{k^{-\alpha}}{k^{-\alpha} + \lambda_n}\, \mathrm{d}k \leq {\rm Tr}({\bm \Sigma}({\bm \Sigma}+\lambda_n)^{-1}) = \sum_{i=1}^{d} \frac{\sigma_i}{\sigma_i + \lambda_n} \leq 
   \int_{0}^{d} \frac{k^{-\alpha}}{k^{-\alpha} + \lambda_n}\, \mathrm{d}k \,,
\end{equation}
due to the fact that the integrand is non-increasing function of $k$.
Similarly, we also have
\begin{equation}\label{eq:df2_inter_approx}\int_{1}^{d+1} \frac{k^{-2\alpha}}{(k^{-\alpha} + \lambda_n)^2}\, \mathrm{d}k \leq {\rm Tr}({\bm \Sigma}^2({\bm \Sigma}+\lambda_n)^{-2}) = \sum_{i=1}^{d} \frac{\sigma_i^2}{(\sigma_i + \lambda_n)^2} \leq  \int_{0}^{d} \frac{k^{-2\alpha}}{(k^{-\alpha} + \lambda_n)^2}\, \mathrm{d}k \,.
\end{equation}

We consider some special cases that are useful for discussion.
When $\alpha=1$, we have

\begin{equation}\label{eq:df1_inter_approx_alpha1}
   \frac{\log(1+d\lambda_n + \lambda_n) - \log (1+\lambda_n)}{\lambda_n}  \leq {\rm Tr}({\bm \Sigma}({\bm \Sigma}+\lambda_n)^{-1}) \leq  \frac{\log(1+d\lambda_n)}{\lambda_n} \,,
\end{equation}

\begin{equation}\label{eq:df2_inter_approx_alpha1}
\frac{d+1}{\lambda_n d +\lambda_n +1} - \frac{1}{\lambda_n+1} \leq    {\rm Tr}({\bm \Sigma}^2({\bm \Sigma}+\lambda_n)^{-2}) = \sum_{i=1}^{d} \frac{\sigma_i^2}{(\sigma_i + \lambda_n)^2} \leq \frac{d}{1+d\lambda_n}\,.
\end{equation}
Recall that \(\lambda_n\) is defined by \({\rm Tr}({\bm \Sigma}({\bm \Sigma} + \lambda_n {\bm I})^{-1}) = n\). Using \cref{eq:df1_inter_approx}, we have 
\[
\frac{\log(1 + d\lambda_n)}{\lambda_n} \approx n.
\]
Observe that as \(n \to d\), \(\lambda_n \to 0\), allowing us to apply a Taylor expansion:
\[
\frac{\log(1 + d\lambda_n)}{\lambda_n} \approx \frac{d\lambda_n - \frac{1}{2}(d\lambda_n)^2}{\lambda_n} = d - \frac{1}{2}d^2\lambda_n.
\]
Based on this approximation, \(\lambda_n\) can be expressed as
\[
\lambda_n \approx \frac{2(d - n)}{d^2}.
\]
In the following discussion, we consider the case $n \to d$. Thus, we have the approximation
\[
{\rm Tr}({\bm \Sigma}({\bm \Sigma}+\lambda_n)^{-1}) \approx n\,, \quad {\rm Tr}({\bm \Sigma}^2({\bm \Sigma}+\lambda_n)^{-2}) \approx \frac{d}{1+d\lambda_n}\,.
\]
Then we have
\[
\begin{aligned}
    {\mathsf V}_{{\mathsf R},0}^{\tt LS} =&~ \frac{\sigma^2 {\rm Tr}\left({\bm \Sigma}^2 ({\bm \Sigma} + \lambda_n{\bm I})^{-2}\right)}{n - {\rm Tr}\left({\bm \Sigma}^2 ({\bm \Sigma} + \lambda_n{\bm I})^{-2}\right)} \approx \frac{\sigma^2\frac{d}{1+d\lambda_n}}{n-\frac{d}{1+d\lambda_n}} = \frac{\sigma^2 d}{n+d(n\lambda_n-1)}\,,\\
    {\mathsf V}_{{\mathsf N},0}^{\tt LS} =&~ \frac{\sigma^2 {\rm Tr}\left({\bm \Sigma} ({\bm \Sigma} + \lambda_n{\bm I})^{-2}\right)}{n - {\rm Tr}\left({\bm \Sigma}^2 ({\bm \Sigma} + \lambda_n{\bm I})^{-2}\right)} \approx \frac{\sigma^2\frac{1}{\lambda_n}(d - \frac{1}{2}d^2\lambda_n - \frac{d}{1+d\lambda_n})}{n-\frac{d}{1+d\lambda_n}} = \frac{\sigma^2d^2(d\lambda_n-1)}{2(n+d(n\lambda_n-1))}\,.
\end{aligned}
\]
Use these two formulation to eliminate $n$, we obtain
\[
{\mathsf V}_{{\mathsf R}, 0}^{\tt LS} \approx \frac{2({\mathsf V}_{{\mathsf N}, 0}^{\tt LS})^2}{d{\mathsf V}_{{\mathsf N}, 0}^{\tt LS}-d^2\sigma^2}\,.
\]

Next we discuss the situation under different $\beta$.

For $\beta=0$, we have
\[
\begin{aligned}
    {\mathsf B}_{{\mathsf R},0}^{\tt LS} =&~ \frac{\lambda_n^2{\rm Tr}({\bm \Sigma}({\bm \Sigma}+\lambda_n{\bm I})^{-2})}{1-n^{-1}{\rm Tr}({\bm \Sigma}^2({\bm \Sigma}+\lambda_n{\bm I})^{-2})} \approx \frac{\lambda_n(d - \frac{1}{2}d^2\lambda_n - \frac{d}{1+d\lambda_n})}{1-\frac{d}{n(1+d\lambda_n)}} = n\lambda_n\,,\\
    {\mathsf B}_{{\mathsf N},0}^{\tt LS} =&~ {\rm Tr}({\bm \Sigma}({\bm \Sigma}+\lambda_n{\bm I})^{-1}) \approx d - \frac{1}{2}d^2\lambda_n\,,\\
\end{aligned}
\]
Use these two formulation to eliminate $n$, we obtain
\[
{\mathsf B}_{{\mathsf R},0}^{\tt LS} \approx \frac{2{\mathsf B}_{{\mathsf N}, 0}^{\tt LS}(d-{\mathsf B}_{{\mathsf N}, 0}^{\tt LS})}{d^2}\,.
\]

For $\beta=1$, we have
\[
\begin{aligned}
    {\mathsf B}_{{\mathsf R},0}^{\tt LS} =&~ \frac{\lambda_n^2{\rm Tr}({\bm \Sigma}^2({\bm \Sigma}+\lambda_n{\bm I})^{-2})}{1-n^{-1}{\rm Tr}({\bm \Sigma}^2({\bm \Sigma}+\lambda_n{\bm I})^{-2})} \approx \frac{\lambda_n^2\frac{d}{1+d\lambda_n}}{1-\frac{d}{n(1+d\lambda_n)}} = \frac{nd\lambda_n^2}{n(1+d\lambda_n)-d}\,,\\
    {\mathsf B}_{{\mathsf N},0}^{\tt LS} =&~ {\rm Tr}({\bm \Sigma}^2({\bm \Sigma}+\lambda_n{\bm I})^{-1}) = {\rm Tr}({\bm \Sigma}) - \lambda_n{\rm Tr}({\bm \Sigma}({\bm \Sigma}+\lambda_n{\bm I})^{-1}) \approx {\rm Tr}({\bm \Sigma}) - n\lambda_n\,.\\
\end{aligned}
\]
Use these two formulation to eliminate $n$, we obtain
\[
{\mathsf B}_{{\mathsf R},0}^{\tt LS} \approx \frac{2\sqrt{({\mathsf B}_{{\mathsf N}, 0}^{\tt LS})^2-2{\rm Tr}({\bm \Sigma}){\mathsf B}_{{\mathsf N}, 0}^{\tt LS}+{\rm Tr}({\bm \Sigma})^2}}{\sqrt{d^2+2d^2{\mathsf B}_{{\mathsf N}, 0}^{\tt LS}-2d^2{\rm Tr}({\bm \Sigma})}} = \frac{2({\mathsf B}_{{\mathsf N}, 0}^{\tt LS} - {\rm Tr}({\bm \Sigma}))}{d\sqrt{1+2{\mathsf B}_{{\mathsf N}, 0}^{\tt LS}-2{\rm Tr}({\bm \Sigma})}}\,.
\]

For $\beta=-1$, we need to use another two continuum approximations to eigensums
\begin{equation*}
    {\rm Tr}(({\bm \Sigma}+\lambda_n)^{-1}) = \sum_{i=1}^{d} \frac{1}{\sigma_i + \lambda_n} \approx \int_{0}^{d} \frac{1}{k^{-\alpha} + \lambda_n}\, \mathrm{d}k = \frac{d\lambda_n - \log(1+d\lambda_n)}{\lambda_n^2}\,,
\end{equation*}
\begin{equation*}
    {\rm Tr}(({\bm \Sigma}+\lambda_n)^{-2}) = \sum_{i=1}^{d} \frac{1}{(\sigma_i + \lambda_n)^2} \approx \int_{0}^{d} \frac{1}{(k^{-\alpha} + \lambda_n)^2}\, \mathrm{d}k = \frac{\frac{d\lambda_n(2+d\lambda_n)}{1+d\lambda_n}-2\log(1+d\lambda_n)}{\lambda_n^3}\,.
\end{equation*}
Once again, we apply the Taylor expansion, but this time expanding to the third order
\[
\log(1 + d\lambda_n) \approx d\lambda_n - \frac{1}{2}(d\lambda_n)^2 + \frac{1}{3}(d\lambda_n)^3\,.
\]
Then we have
\begin{equation*}
    {\rm Tr}(({\bm \Sigma}+\lambda_n)^{-1}) \approx \frac{d\lambda_n - \log(1+d\lambda_n)}{\lambda_n^2} \approx \frac{1}{2}d^2-\frac{1}{3}d^3\lambda_n\,,
\end{equation*}
\begin{equation*}
    {\rm Tr}(({\bm \Sigma}+\lambda_n)^{-2}) \approx \frac{\frac{d\lambda_n(2+d\lambda_n)}{1+d\lambda_n}-2\log(1+d\lambda_n)}{\lambda_n^3} = \frac{\frac{1}{3}d^3-\frac{2}{3}d^4\lambda_n}{1+d\lambda_n}\,.
\end{equation*}
Using the approximation sated above, we have
\[
\begin{aligned}
    {\mathsf B}_{{\mathsf R},0}^{\tt LS} =&~ \frac{\lambda_n^2{\rm Tr}(({\bm \Sigma}+\lambda_n{\bm I})^{-2})}{1-n^{-1}{\rm Tr}({\bm \Sigma}^2({\bm \Sigma}+\lambda_n{\bm I})^{-2})} \approx \frac{\lambda_n^2(\nicefrac{(\frac{1}{3}d^3-\frac{2}{3}d^4\lambda_n)}{(1+d\lambda_n)})}{1-\frac{d}{n(1+d\lambda_n)}} \,,\\
    {\mathsf B}_{{\mathsf N},0}^{\tt LS} =&~ {\rm Tr}(({\bm \Sigma}+\lambda_n{\bm I})^{-1}) = \frac{1}{2}d^2-\frac{1}{3}d^3\lambda_n \,.\\
\end{aligned}
\]
Use these two formulation to eliminate $n$, we obtain
\[
{\mathsf B}_{{\mathsf R},0}^{\tt LS} \approx \frac{216 ({\mathsf B}_{{\mathsf N}, 0}^{\tt LS})^4 \!-\! 324d^2 ({\mathsf B}_{{\mathsf N}, 0}^{\tt LS})^3 \!+\! 126d^4 ({\mathsf B}_{{\mathsf N}, 0}^{\tt LS})^2 \!+\! d^6 {\mathsf B}_{{\mathsf N}, 0}^{\tt LS} \!-\! 5d^8}{2d^5(6 {\mathsf B}_{{\mathsf N}, 0}^{\tt LS}-d^2)}\,.
\]
\end{proof}

Here we present some experimental results to check the relationship between ${\mathsf B}_{{\mathsf R},0}^{\tt LS}$ and ${\mathsf B}_{{\mathsf N},0}^{\tt LS}$, as well as ${\mathsf V}_{{\mathsf R},0}^{\tt LS}$ and ${\mathsf V}_{{\mathsf N},0}^{\tt LS}$, see \cref{fig:linear_regression_power_law}.
We can see that our approximate relationship on variance (see the {\color{red}red} line in \cref{fig:lrpld}) provides the precise estimation.
For the bias (see the left three figures of \cref{fig:linear_regression_power_law}), our approximate relationship is accurate if ${\mathsf B}_{{\mathsf N},0}^{\tt LS}$ is large.

\begin{figure*}[!ht]
    \centering
    \subfigure[\(\beta = 0\)]{
        \includegraphics[width=0.23\textwidth]{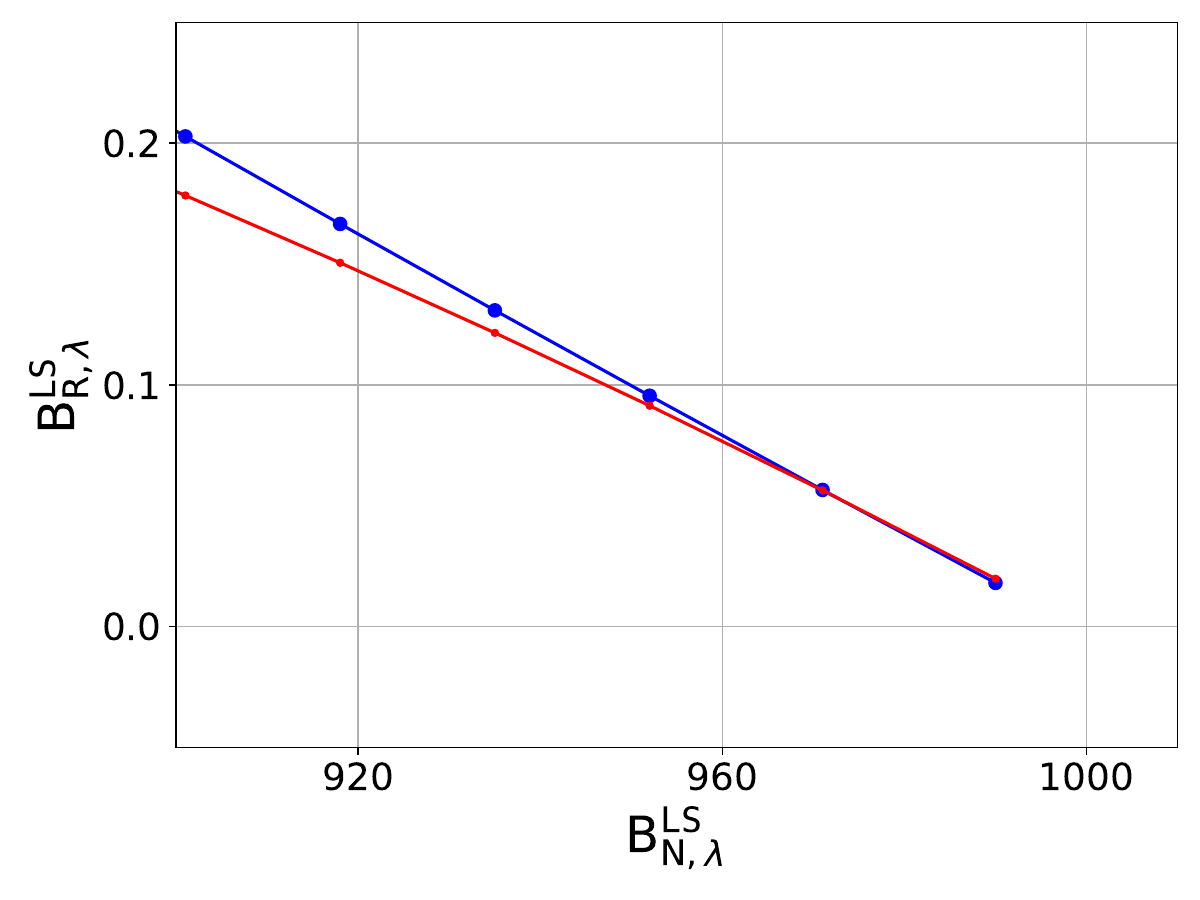}
    }
    \subfigure[\(\beta = 1\)]{
        \includegraphics[width=0.23\textwidth]{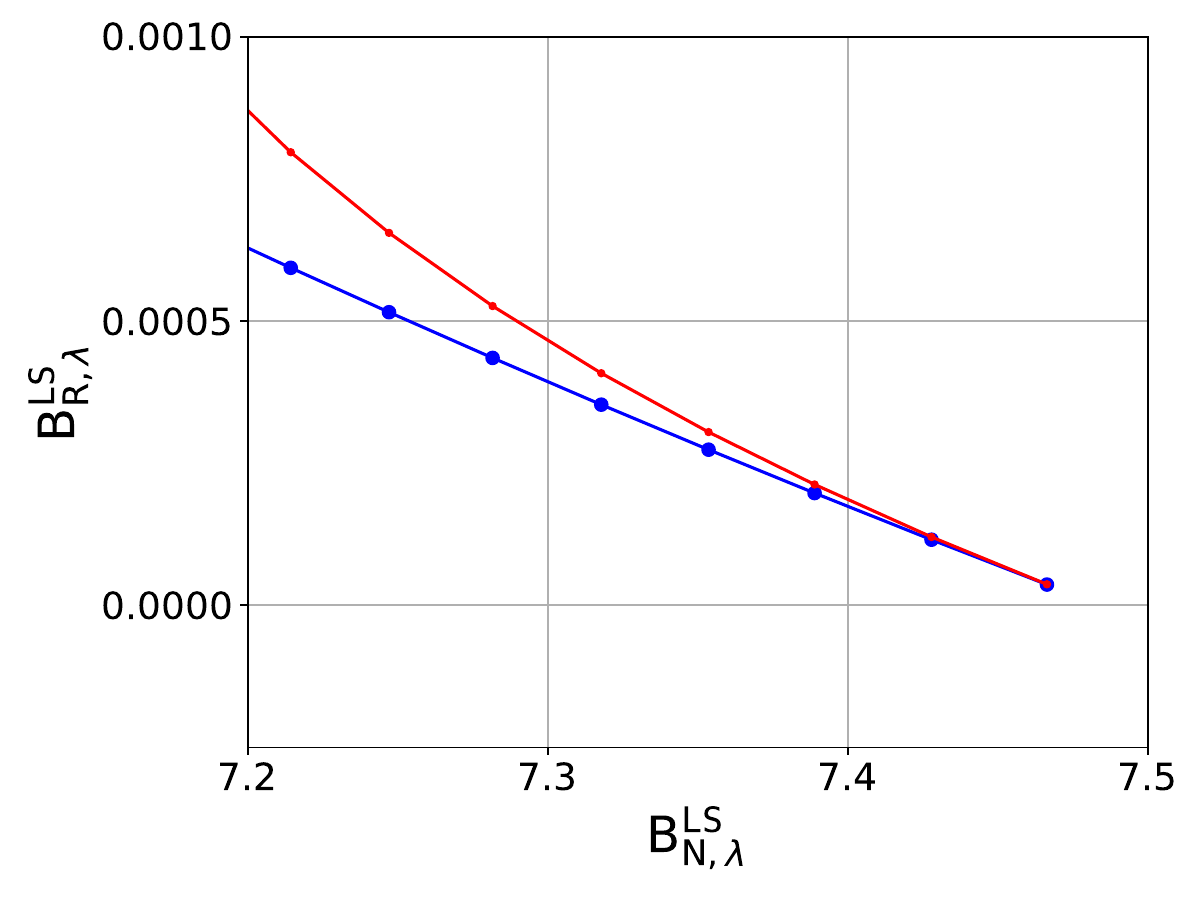}
    }
    \subfigure[\(\beta = -1\)]{
        \includegraphics[width=0.23\textwidth]{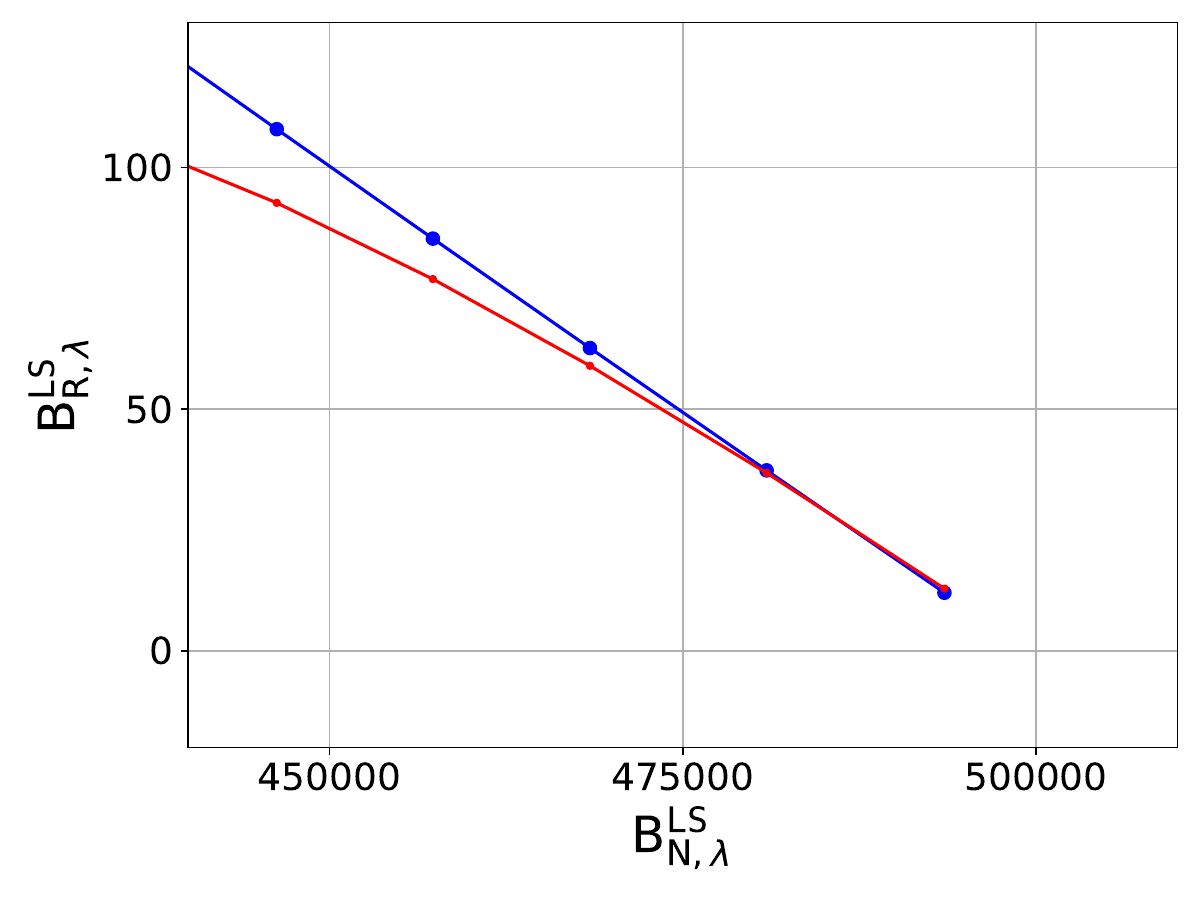}
    }
    \subfigure[${\mathsf V}_{{\mathsf R},0}^{\tt LS}$ vs. ${\mathsf V}_{{\mathsf N},0}^{\tt LS}$]{
        \includegraphics[width=0.23\textwidth]{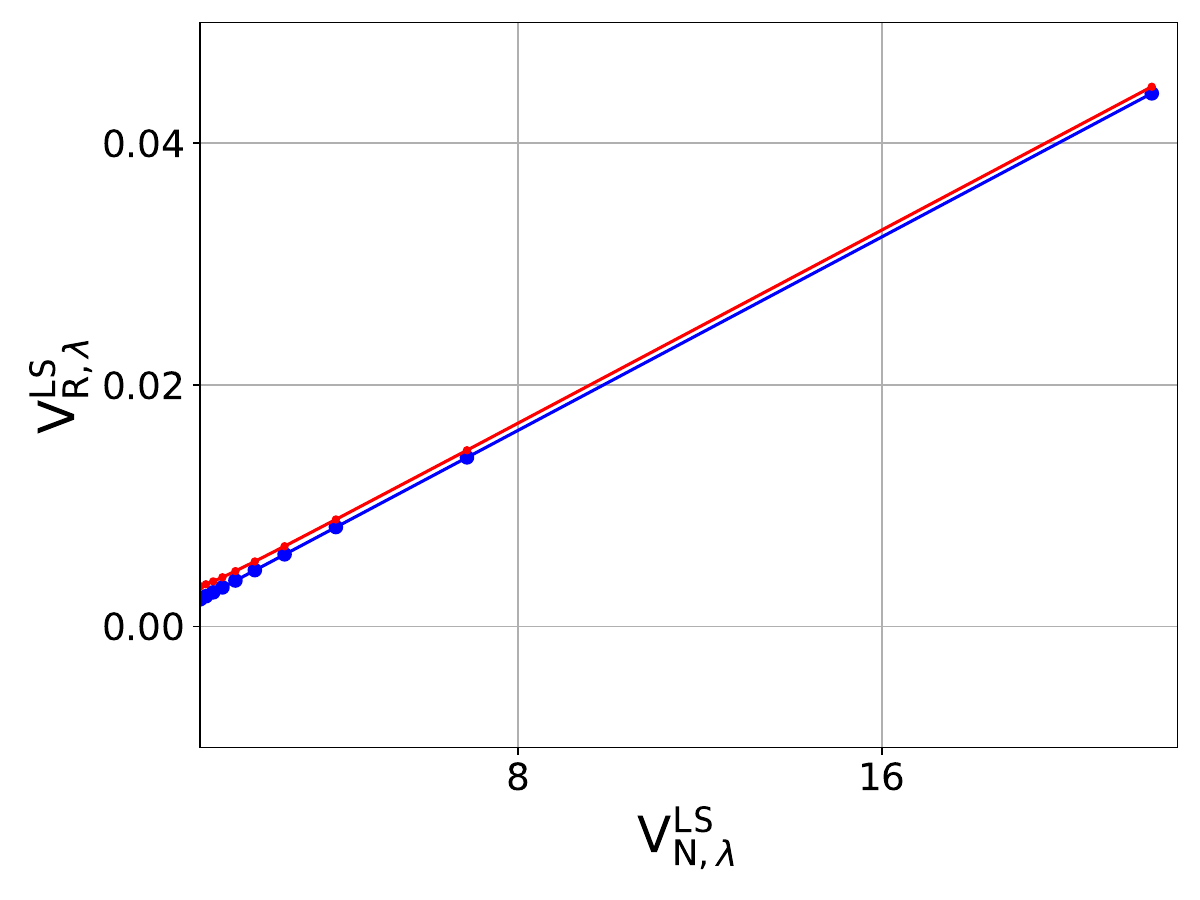}\label{fig:lrpld}
    }
    \caption{The left three figures (a) (b) (c) show the relationship between ${\mathsf B}_{{\mathsf R},0}^{\tt LS}$ and ${\mathsf B}_{{\mathsf N},0}^{\tt LS}$ when $\alpha =1$ and $\beta$ takes on different values. The figure (d) shows the relationship between ${\mathsf V}_{{\mathsf R},0}^{\tt LS}$ and ${\mathsf V}_{{\mathsf N},0}^{\tt LS}$ when $\alpha = 1$. The {\color{blue}blue line} is the relationship obtained by deterministic equivalent experiments, and the {\color{red}red line} is the approximate relationship we give.}
    \label{fig:linear_regression_power_law}
\end{figure*}

\section{Proofs for random feature ridge regression}\label{app:proof_rf}

In this section, we provide the proof of deterministic equivalence for random feature ridge regression in both the asymptotic (\cref{app:asy_deter_equiv_rf}) and non-asymptotic (\cref{app:nonasy_deter_equiv_rf}) settings. Additionally, we provide the proof of the relationship between test risk and the $\ell_2$ norm given in the main text, as detailed in \cref{app:relationship_rf}.

Though the results \cite{bach2024high} are for linear regression, we can still use the results for RFMs, which requires some knowledge from \cref{eq:det_equiv_phi2_main,eq:det_equiv_phi1_main}.

We firstly confirm that \cref{ass:asym} in \cref{app:pre_asy_deter_equiv}, used to derive all asymptotic results, can be replaced by the Hanson-Wright assumption employed in the non-asymptotic analysis.
It is evident that \cref{eq:trA1,eq:trA2} are obtained directly by taking the limits of \cref{eq:det_equiv_phi2_main,eq:det_equiv_phi1_main} as \(n \to \infty\).

Additionally, a key step in the proof of \cref{eq:trAB1,eq:trAB2} in \cite{bach2024high} involves showing that \(\Delta\) is almost surely negligible, where \(\Delta\) is defined as
\[
\Delta = \frac{1}{n} \sum_{i=1}^{n} \frac{{\bm x}_i{\bm x}_i^{\!\top}(\hbSigma_{-i}-z{\bm I})^{-1}-{\bm \Sigma}(\hbSigma-z{\bm I})^{-1}}{1 + {\bm x}_i^{\!\top}(n\widehat{\bm \Sigma}_{-i}-nz{\bm I})^{-1}{\bm x}_i}\,,
\]
with \(\hbSigma = \frac{1}{n}\sum_{i=1}^{n}{\bm x}_i{\bm x}_i^{\!\top}\), \(\hbSigma_{-i} = \frac{1}{n}\sum_{j\neq i}{\bm x}_j{\bm x}_j^{\!\top}\), and \(z \in {\mathbb R}\).

In the analysis of \cite{bach2024high}, the negligibility of \(\Delta\) arises from the assumption that the components of \({\bm x}_i\) follow a sub-Gaussian distribution, which leads to the Hanson-Wright inequality
\[
\mathbb{P} \left[ \left| {\bm x}_i^{\!\top} {\bm x}_i - \mathrm{tr}({\bm \Sigma}) \right| \leq c \left( t \|{\bm \Sigma}\|_{\mathrm{op}} + \sqrt{t} \|{\bm \Sigma}\|_F \right) \right] \geq 1 - 2e^{-t}.
\]

In this way, \cref{ass:concentrated_LR} is also sufficient to establish the negligibility of \(\Delta\).

After obtaining \cref{eq:trA1,eq:trA2} and the negligibility of \(\Delta\), we can follow the argument of \cite{bach2024high} and derive the rest asymptotic deterministic equivalence.

Finally, with these observations, we can eliminate the reliance on \cref{ass:asym} and instead rely solely on \cref{ass:concentrated_LR} to derive all the asymptotic results.

\subsection{Asymptotic deterministic equivalence for random features ridge regression}
\label{app:asy_deter_equiv_rf}

In this section, we establish the asymptotic approximation guarantees for random feature regression in terms of its $\ell_2$-norm based capacity. Before presenting the proof of \cref{prop:asy_equiv_norm_RFRR}, we firstly give the proof of the bias-variance decomposition.

\begin{proof}
Here we give the bias-variance decomposition of ${\mathbb E}_{\varepsilon}\|\hat{{\bm a}}\|_2^2$. The formulation of ${\mathbb E}_{\varepsilon}\|\hat{{\bm a}}\|_2^2$ is given by
\[
{\mathbb E}_{\varepsilon}\|\hat{{\bm a}}\|_2^2 = {\mathbb E}_{\varepsilon} \|({\bm Z}^{\!\top} {\bm Z} + \lambda {\bm I})^{-1} {\bm Z}^{\!\top} {\bm y}\|_2^2\,,
\]
which admits a similar bias-variance decomposition
\[
\begin{aligned}
    {\mathbb E}_{\varepsilon}\|\hat{{\bm a}}\|_2^2 =&~ {\mathbb E}_{\varepsilon}\|({\bm Z}^{\!\top} {\bm Z} + \lambda {\bm I})^{-1} {\bm Z}^{\!\top} ({\bm G} {\bm \theta}_*+\bm\varepsilon)\|_2^2\\
    =&~ \|({\bm Z}^{\!\top} {\bm Z} + \lambda {\bm I})^{-1} {\bm Z}^{\!\top} {\bm G} {\bm \theta}_*\|_2^2 + {\mathbb E}_{\varepsilon}\|({\bm Z}^{\!\top} {\bm Z} + \lambda {\bm I})^{-1} {\bm Z}^{\!\top} \bm\varepsilon\|_2^2\\
    =&~ \langle{\bm \theta}_*, {\bm G}^{\!\top} {\bm Z} ({\bm Z}^{\!\top} {\bm Z} + \lambda{\bm I})^{-2} {\bm Z}^{\!\top} {\bm G}{\bm \theta}_* \rangle + \sigma^2{\rm Tr}\left({\bm Z}^{\!\top} {\bm Z}({\bm Z}^{\!\top} {\bm Z} + \lambda{\bm I})^{-2}\right)\\
    =:&~ \mathcal{B}_{\mathcal{N},\lambda}^{\tt RFM} + \mathcal{V}_{\mathcal{N},\lambda}^{\tt RFM}\,.
\end{aligned}
\]
Accordingly, we conclude the proof.
\end{proof}

Now we are ready to present the proof of \cref{prop:asy_equiv_norm_RFRR} as below.

\begin{proof}[Proof of \cref{prop:asy_equiv_norm_RFRR}]
We give the asymptotic deterministic equivalents for the norm from the bias $\mathcal{B}_{\mathcal{N},\lambda}^{\tt RFM}$ and variance $\mathcal{V}_{\mathcal{N},\lambda}^{\tt RFM}$, respectively. We provide asymptotic expansions in two steps, by first considering the deterministic equivalent over ${\bm G}$, and then over ${\bm F}$.

Under \cref{ass:concentrated_RFRR}, we can apply \cref{prop:spectral,prop:spectral2,prop:spectralK,prop:spectralK2} directly in the proof below.

\paragraph{Deterministic equivalent over ${\bm G}$:}
For the bias term, we use \cref{eq:trAB1K} in \cref{prop:spectralK} with ${\bm T}={\bm G}$, ${\bm \Sigma}={\bm F}^{\!\top}{\bm F}$, ${\bm A}={\bm \theta}_*{\bm \theta}_*^{\!\top}$ and $\bB={\bm F}^{\!\top}{\bm F}$ and obtain
\begin{equation}\label{eq:bnrfm}
   \begin{split}
          \mathcal{B}_{\mathcal{N},\lambda}^{\tt RFM} =&~ \langle{\bm \theta}_*, {\bm G}^{\!\top} {\bm Z} ({\bm Z}^{\!\top} {\bm Z} + \lambda{\bm I})^{-2} {\bm Z}^{\!\top} {\bm G}{\bm \theta}_* \rangle\\
    =&~ {\rm Tr}({\bm \theta}_*^{\!\top} {\bm G}^{\!\top} {\bm Z} ({\bm Z}^{\!\top} {\bm Z} + \lambda{\bm I})^{-2} {\bm Z}^{\!\top} {\bm G}{\bm \theta}_* )\\
    =&~ p{\rm Tr}({\bm \theta}_* {\bm \theta}_*^{\!\top} {\bm G}^{\!\top} ( {\bm G} {\bm F}^{\!\top} {\bm F} {\bm G}^{\!\top} + p\lambda{\bm I})^{-1} {\bm G} {\bm F}^{\!\top} {\bm F} {\bm G}^{\!\top} ( {\bm G} {\bm F}^{\!\top} {\bm F} {\bm G}^{\!\top} + p\lambda{\bm I})^{-1} {\bm G} )\\
    \sim&~ p \underbrace{{\rm Tr}({\bm \theta}_* {\bm \theta}_*^{\!\top} ( {\bm F}^{\!\top} {\bm F} + \nu_1{\bm I})^{-1} {\bm F}^{\!\top} {\bm F} ( {\bm F}^{\!\top} {\bm F} + \nu_1{\bm I})^{-1} )}_{\tt I_1} \\
    &~ + p\nu_1^2 \underbrace{{\rm Tr}({\bm \theta}_* {\bm \theta}_*^{\!\top} ( {\bm F}^{\!\top} {\bm F} + \nu_1{\bm I})^{-2})}_{:=I_2} \cdot \underbrace{{\rm Tr}({\bm F}^{\!\top} {\bm F} ( {\bm F}^{\!\top} {\bm F} + \nu_1{\bm I})^{-2})}_{:=I_3} \cdot \frac{1}{n-\widehat{\rm df}_2(\nu_1)} \,,
   \end{split} 
\end{equation}
where $\nu_1$ defined by $\nu_1(1-\frac{1}{n}\widehat{\rm df}_1(\nu_1)) \sim \frac{p\lambda}{n}$, $\widehat{\rm df}_1(\nu_1)$ and $\widehat{\rm df}_2(\nu_1)$ are degrees of freedom associated to ${\bm F}^{\!\top} {\bm F}$ in \cref{def:df}.

For the variance term, we use \cref{eq:trA3K} with ${\bm T}={\bm G}$ in \cref{prop:spectralK}, ${\bm A}={\bm F}^{\!\top}{\bm F}$, ${\bm \Sigma}={\bm F}^{\!\top}{\bm F}$ and obtain
\[
\begin{aligned}
    \mathcal{V}_{\mathcal{N},\lambda}^{\tt RFM} =&~ \sigma^2{\rm Tr}\left({\bm Z}^{\!\top} {\bm Z}({\bm Z}^{\!\top} {\bm Z} + \lambda{\bm I})^{-2}\right) = \sigma^2{\rm Tr}\left({\bm Z} {\bm Z}^{\!\top}({\bm Z} {\bm Z}^{\!\top} + \lambda{\bm I})^{-2}\right)\\    =&~\sigma^2p{\rm Tr}\left({\bm G}{\bm F}^{\!\top}{\bm F}{\bm G}^{\!\top}({\bm G}{\bm F}^{\!\top}{\bm F}{\bm G}^{\!\top} + p\lambda{\bm I})^{-2}\right)\\
    \sim&~\sigma^2p\frac{{\rm Tr}({\bm F}^{\!\top}{\bm F}({\bm F}^{\!\top}{\bm F}+\nu_1{\bm I})^{-2})}{n-\widehat{\rm df}_2(\nu_1)}\,.
\end{aligned}
\]

\paragraph{Deterministic equivalent over ${\bm F}$:}

In the next, we aim to eliminate the randomness over ${\bm F}$ in \cref{eq:bnrfm} from the bias part.
First our result depends on the asymptotic equivalents for $\widehat{\rm df}_1(\nu_1)$ and $\widehat{\rm df}_2(\nu_1)$. For $\widehat{\rm df}_1(\nu_1)$, we use \cref{eq:trA1} in \cref{prop:spectral} with ${\bm X}={\bm F}$ and obtain
\[
\begin{aligned}
    \widehat{\rm df}_1(\nu_1) = {\rm Tr}({\bm F}^{\!\top} {\bm F} ({\bm F}^{\!\top} {\bm F} + \nu_1{\bm I})^{-1}) \sim {\rm Tr}(\bLambda(\bLambda + \nu_2{\bm I})^{-1})={\rm df}_1(\nu_2)\,,
\end{aligned}
\]
where $\nu_2$ defined by $\nu_2(1-\frac{1}{p}{\rm df}_1(\nu_2)) \sim \frac{\nu_1}{p}$. Hence $\nu_1$ can be defined by $\nu_1(1-\frac{1}{n}{\rm df}_1(\nu_2))\sim\frac{p\lambda}{n}$ from \cref{eq:def_nu}.

For $\widehat{\rm df}_2(\nu_1)$, we use \cref{eq:trAB1} in \cref{prop:spectral} with ${\bm X}={\bm F}$, ${\bm A}=\bB={\bm I}$ and obtain
\begin{equation}\label{eq:df2v1}
    \begin{split}
    \widehat{\rm df}_2(\nu_1) &=~ {\rm Tr}({\bm F}^{\!\top} {\bm F} ({\bm F}^{\!\top} {\bm F} + \nu_1{\bm I})^{-1} {\bm F}^{\!\top} {\bm F} ({\bm F}^{\!\top} {\bm F} + \nu_1{\bm I})^{-1})\\
    &\sim~ {\rm Tr}(\bLambda^2(\bLambda + \nu_2{\bm I})^{-2}) + \nu_2^2 {\rm Tr}(\bLambda(\bLambda + \nu_2{\bm I})^{-2}) \cdot {\rm Tr}(\bLambda^2(\bLambda + \nu_2{\bm I})^{-2}) \cdot \frac{1}{p - {\rm df}_2(\nu_2)}\\
    &=:~ n\Upsilon(\nu_1, \nu_2)\,. 
    \end{split}
\end{equation}

For $I_3:= {\rm Tr}({\bm F}^{\!\top} {\bm F} ( {\bm F}^{\!\top} {\bm F} + \nu_1{\bm I})^{-2})$, we use \cref{eq:trA3} with ${\bm X}={\bm F}$ and obtain
\begin{align}\label{eq:I3}
{\rm Tr}({\bm F}^{\!\top} {\bm F} ( {\bm F}^{\!\top} {\bm F} + \nu_1{\bm I})^{-2}) \sim&~ {\rm Tr}(\bLambda(\bLambda + \nu_2{\bm I})^{-2}) \cdot \frac{1}{p - {\rm df}_2(\nu_2)}\,.
\end{align}
Then we use \cref{eq:trA3} again with ${\bm X}={\bm F}$, ${\bm A} = {\bm \theta}_*{\bm \theta}_*^{\!\top}$ to obtain the deterministic equivalent of $I_1$
\[
\begin{aligned}
{\rm Tr}({\bm \theta}_* {\bm \theta}_*^{\!\top} ( {\bm F}^{\!\top} {\bm F} + \nu_1{\bm I})^{-1} {\bm F}^{\!\top} {\bm F} ( {\bm F}^{\!\top} {\bm F} + \nu_1{\bm I})^{-1}) =&~ {\rm Tr}({\bm \theta}_* {\bm \theta}_*^{\!\top} {\bm F}^{\!\top} {\bm F} ( {\bm F}^{\!\top} {\bm F} + \nu_1{\bm I})^{-2})\\
\sim&~ {\rm Tr}({\bm \theta}_* {\bm \theta}_*^{\!\top} \bLambda ( \bLambda + \nu_2{\bm I})^{-2}) \cdot \frac{1}{p - {\rm df}_2(\nu_2)}\\
=&~ {\bm \theta}_*^{\!\top} \bLambda ( \bLambda + \nu_2{\bm I})^{-2} {\bm \theta}_* \cdot \frac{1}{p - {\rm df}_2(\nu_2)}.
\end{aligned}
\]
Further, for $I_2$, use \cref{eq:trAB2} with ${\bm A}={\bm \theta}_*{\bm \theta}_*^{\!\top}$ and $\bB={\bm I}$, we obtain
\[
\begin{aligned}
{\rm Tr}({\bm \theta}_* {\bm \theta}_*^{\!\top} ( {\bm F}^{\!\top} {\bm F} + \nu_1{\bm I})^{-2}) \sim&~ \frac{\nu_2^2}{\nu_1^2}{\rm Tr}({\bm \theta}_* {\bm \theta}_*^{\!\top} (\bLambda + \nu_2{\bm I})^{-2})\\
&~+ \frac{\nu_2^2}{\nu_1^2}{\rm Tr}({\bm \theta}_* {\bm \theta}_*^{\!\top} (\bLambda + \nu_2{\bm I})^{-2} \bLambda) \cdot {\rm Tr}( (\bLambda + \nu_2{\bm I})^{-2} \bLambda) \cdot \frac{1}{p - {\rm df}_2(\nu_2)}.
\end{aligned}
\]
Finally, combine the above equivalents, for the bias, we obtain
\[
\begin{aligned}
    \mathcal{B}_{\mathcal{N},\lambda}^{\tt RFM} \sim&~ p {\bm \theta}_*^{\!\top} \bLambda ( \bLambda + \nu_2{\bm I})^{-2} {\bm \theta}_* \cdot \frac{1}{p - {\rm df}_2(\nu_2)}\\
    &~+ p \nu_1^2 \left(\frac{\nu_2^2}{\nu_1^2}{\rm Tr}({\bm \theta}_* {\bm \theta}_*^{\!\top} (\bLambda + \nu_2{\bm I})^{-2}) + \frac{\nu_2^2}{\nu_1^2}{\rm Tr}({\bm \theta}_* {\bm \theta}_*^{\!\top} (\bLambda + \nu_2{\bm I})^{-2} \bLambda) \cdot \frac{{\rm Tr}( (\bLambda + \nu_2{\bm I})^{-2} \bLambda)}{p - {\rm df}_2(\nu_2)} \right)\\
    &~\cdot {\rm Tr}(\bLambda(\bLambda + \nu_2{\bm I})^{-2}) \cdot \frac{1}{p - {\rm df}_2(\nu_2)} \cdot \frac{1}{n - n\Upsilon(\nu_1, \nu_2)}\\
    =&~ p{\bm \theta}_*^{\!\top} \bLambda ( \bLambda + \nu_2{\bm I})^{-2} {\bm \theta}_* \cdot \frac{1}{p - {\rm df}_2(\nu_2)}\\
    &~+ \frac{p}{n} \left(\nu_2^2 {\bm \theta}_*^{\!\top} (\bLambda + \nu_2{\bm I})^{-2} {\bm \theta}_* + \nu_2^2 {\bm \theta}_*^{\!\top} \bLambda (\bLambda + \nu_2{\bm I})^{-2} {\bm \theta}_* \cdot \frac{{\rm Tr}( \bLambda (\bLambda + \nu_2{\bm I})^{-2} )}{p - {\rm df}_2(\nu_2)} \right)\\
    &~\cdot {\rm Tr}(\bLambda(\bLambda + \nu_2{\bm I})^{-2}) \cdot \frac{1}{p - {\rm df}_2(\nu_2)} \cdot \frac{1}{1 - \Upsilon(\nu_1, \nu_2)}\\
    =&~\frac{p\langle {\bm \theta}_*, \bLambda ( \bLambda + \nu_2{\bm I})^{-2} {\bm \theta}_* \rangle}{p - {\rm Tr}\left(\bLambda^2 (\bLambda + \nu_2{\bm I})^{-2}\right)}\\
    &~ + \frac{p\chi(\nu_2)}{n} \cdot \frac{\nu_2^2\left[ \langle {\bm \theta}_*, (\bLambda + \nu_2{\bm I})^{-2} {\bm \theta}_* \rangle \!+\! \chi(\nu_2) \langle {\bm \theta}_*, \bLambda (\bLambda + \nu_2{\bm I})^{-2} {\bm \theta}_* \rangle \right]}{1 - \Upsilon(\nu_1, \nu_2)}\,.
\end{aligned}
\]
Similarly, for the variance, using \cref{eq:df2v1} and \cref{eq:I3} for $I_3$, we have
\[
\begin{aligned}
    \mathcal{V}_{\mathcal{N},\lambda}^{\tt RFM} \sim&~ \sigma^2 p {\rm Tr}(\bLambda(\bLambda + \nu_2{\bm I})^{-2}) \cdot \frac{1}{p - {\rm df}_2(\nu_2)}\cdot \frac{1}{n-n\Upsilon(\nu_1,\nu_2)}\\
    \sim&~ \sigma^2 \frac{\frac{p}{n}\chi(\nu_2)}{1-\Upsilon(\nu_1, \nu_2)}\,.
\end{aligned}
\]
Accordingly, we finish the proof.
\end{proof}

In the next, we present the proof for min-$\ell_2$-norm interpolator under RFMs.

\begin{proof}[Proof of \cref{prop:asy_equiv_norm_RFRR_minnorm}]
Similar to linear regression, we separate the two regimes $p<n$ and $p>n$ as well. For both of them, we provide asymptotic expansions in two steps, first with respect to ${\bm G}$ and then ${\bm F}$ in the under-parameterized regime and vice-versa for the over-parameterized regime.
\paragraph{Under-parameterized regime: Deterministic equivalent over ${\bm G}$} For the variance term, we can use \cref{eq:trA3K} with ${\bm T}={\bm G}$, ${\bm \Sigma}={\bm F}^{\!\top}{\bm F}$, ${\bm A}={\bm F}^{\!\top}{\bm F}$ and obtain
\[
\begin{aligned}
\mathcal{V}_{\mathcal{N},0}^{\tt RFM} =&~ \sigma^2 \cdot {\rm Tr}({\bm Z}^{\!\top} {\bm Z}({\bm Z}^{\!\top} {\bm Z} + \lambda{\bm I})^{-2})\\
=&~ \sigma^2 \cdot p{\rm Tr}({\bm F}{\bm G}^{\!\top} {\bm G} {\bm F}^{\!\top}({\bm F}{\bm G}^{\!\top} {\bm G} {\bm F}^{\!\top} + p\lambda{\bm I})^{-2})\\
=&~ \sigma^2 \cdot p{\rm Tr}({\bm F}^{\!\top} {\bm F}{\bm G}^{\!\top} ( {\bm G} {\bm F}^{\!\top} {\bm F} {\bm G}^{\!\top} + p\lambda{\bm I})^{-2}{\bm G} )\\
\sim&~ \sigma^2 \cdot p{\rm Tr}({\bm F}^{\!\top} {\bm F} ( {\bm F}^{\!\top} {\bm F} + \tilde\lambda{\bm I})^{-2} ) \cdot \frac{1}{n-p}\\
\sim&~ \sigma^2 \cdot {\rm Tr}(( {\bm F} {\bm F}^{\!\top} )^{-1}) \cdot \frac{p}{n-p}\,,\\
\end{aligned}
\]
where $\tilde\lambda$ is defined by
\begin{equation}\label{eq:tilde_lambda}
    \tilde\lambda(1-\frac{1}{n}\widetilde{\rm df}_1(\tilde\lambda)) \sim \frac{p\lambda}{n}\,,
\end{equation}
where $\widetilde{\rm df}_1(\tilde\lambda)$ and $\widetilde{\rm df}_2(\tilde\lambda)$ are degrees of freedom associated to ${\bm F}^{\!\top} {\bm F}$. In the under-parameterized regime ($p<n$), when $\lambda$ goes to zero, we have $\tilde\lambda \to 0$ and  $\widetilde{\rm df}_2(\tilde\lambda) \to p$ \citep{bach2024high}.

For the bias term, we use \cref{eq:trAB1K} with ${\bm T}={\bm G}$, ${\bm \Sigma}={\bm F}^{\!\top}{\bm F}$, ${\bm A}={\bm \theta}_* {\bm \theta}_*^{\!\top}$, $\bB={\bm F}^{\!\top}{\bm F}$ and then obtain
\[
\begin{aligned}
\mathcal{B}_{\mathcal{N},0}^{\tt RFM} =&~ {\rm Tr}({\bm \theta}_*^{\!\top} {\bm G}^{\!\top} {\bm Z} ({\bm Z}^{\!\top} {\bm Z} + \lambda{\bm I})^{-2} {\bm Z}^{\!\top} {\bm G}{\bm \theta}_* )\\
=&~ p{\rm Tr}({\bm \theta}_*^{\!\top} {\bm G}^{\!\top} {\bm G} {\bm F}^{\!\top} ({\bm F} {\bm G}^{\!\top} {\bm G} {\bm F}^{\!\top} + p\lambda{\bm I})^{-2} {\bm F} {\bm G}^{\!\top} {\bm G} {\bm \theta}_* )\\
=&~ p{\rm Tr}({\bm \theta}_* {\bm \theta}_*^{\!\top} {\bm G}^{\!\top} ( {\bm G} {\bm F}^{\!\top} {\bm F} {\bm G}^{\!\top} + p\lambda{\bm I})^{-1} {\bm G} {\bm F}^{\!\top} {\bm F} {\bm G}^{\!\top} ( {\bm G} {\bm F}^{\!\top} {\bm F} {\bm G}^{\!\top} + p\lambda{\bm I})^{-1} {\bm G} )\\
\sim&~ p{\rm Tr}({\bm \theta}_* {\bm \theta}_*^{\!\top} ( {\bm F}^{\!\top} {\bm F} + \tilde\lambda{\bm I})^{-1} {\bm F}^{\!\top} {\bm F} ( {\bm F}^{\!\top} {\bm F} + \tilde\lambda{\bm I})^{-1} )\\ 
&~+ p \tilde\lambda^2 {\rm Tr}({\bm \theta}_* {\bm \theta}_*^{\!\top} ( {\bm F}^{\!\top} {\bm F} + \tilde\lambda{\bm I})^{-2}) \cdot {\rm Tr}({\bm F}^{\!\top} {\bm F}  ( {\bm F}^{\!\top} {\bm F} + \tilde\lambda{\bm I})^{-2}) \cdot \frac{1}{n-p}\\
\sim&~ p{\rm Tr}({\bm \theta}_* {\bm \theta}_*^{\!\top} {\bm F}^{\!\top} ( {\bm F} {\bm F}^{\!\top} )^{-2} {\bm F}) + p {\rm Tr}({\bm \theta}_* {\bm \theta}_*^{\!\top} ( {\bm I} - {\bm F}^{\!\top} ({\bm F}{\bm F}^{\!\top})^{-1} {\bm F} )) \cdot {\rm Tr}(( {\bm F} {\bm F}^{\!\top})^{-1}) \cdot \frac{1}{n-p}\,.
\end{aligned}
\]

In the next, we are ready to eliminate the randomness over ${\bm F}$.
\paragraph{Under-parameterized regime: deterministic equivalent over ${\bm F}$}
For the variance term, from \cite[Sec 3.2]{bach2024high} we know that $\nicefrac{1}{\lambda_p}$ is almost surely the limit of ${\rm Tr}(({\bm F}{\bm F}^{\!\top})^{-1})$, thus we have
\[
\begin{aligned}
{\rm Tr}(({\bm F}{\bm F}^{\!\top})^{-1}) \sim \frac{1}{\lambda_p}\,,
\end{aligned}
\]
where $\lambda_p$ defined by ${\rm df_1}(\lambda_p) = p$, where ${\rm df_1}(\lambda_p)$ and ${\rm df_2}(\lambda_p)$ are degrees of freedom associated to $\bLambda$. Hence we can obtain
\[
\begin{aligned}
\mathcal{V}_{\mathcal{N},0}^{\tt RFM} \sim \sigma^2 \cdot \frac{1}{\lambda_p} \cdot \frac{p}{n-p} = \frac{\sigma^2p}{\lambda_p(n-p)}\,.
\end{aligned}
\]

For the bias term, denote $\bD:={\bm F}\bLambda^{-1/2}$, we first use \cref{eq:trA3K} with ${\bm T}=\bD$, ${\bm \Sigma}=\bLambda$, ${\bm A}=\bLambda^{1/2} {\bm \theta}_* {\bm \theta}_*^{\!\top} \bLambda^{1/2}$ and obtain the deterministic equivalent of the first term in $
\mathcal{B}_{\mathcal{N},0}^{\tt RFM}$
\[
\begin{aligned}
{\rm Tr}({\bm \theta}_* {\bm \theta}_*^{\!\top} {\bm F}^{\!\top} ( {\bm F} {\bm F}^{\!\top} )^{-2} {\bm F}) &~= {\rm Tr}( \bLambda^{1/2} {\bm \theta}_* {\bm \theta}_*^{\!\top} \bLambda^{1/2} \bD^{\!\top} ( \bD \bLambda \bD^{\!\top} )^{-2} \bD)\\
&~\sim {\rm Tr}( {\bm \theta}_* {\bm \theta}_*^{\!\top} \bLambda ( \bLambda + \lambda_p )^{-2} ) \cdot \frac{1}{n-{\rm df}_2(\lambda_p)}\,.
\end{aligned}
\]
Then we use \cref{eq:trAB1K} with ${\bm T}=\bD$, ${\bm \Sigma}=\bLambda$, ${\bm A}=\bLambda^{1/2} {\bm \theta}_* {\bm \theta}_*^{\!\top} \bLambda^{1/2}$ and obtain 
\[
\begin{aligned}
{\rm Tr}({\bm \theta}_* {\bm \theta}_*^{\!\top} {\bm F}^{\!\top} ({\bm F} {\bm F}^{\!\top})^{-1} {\bm F}) = {\rm Tr}( \bLambda^{1/2} {\bm \theta}_* {\bm \theta}_*^{\!\top} \bLambda^{1/2} \bD^{\!\top} ( \bD \bLambda \bD^{\!\top} )^{-1} \bD) \sim {\rm Tr}({\bm \theta}_* {\bm \theta}_*^{\!\top} \bLambda (\bLambda +\lambda_p)^{-1})\,,
\end{aligned}
\]
Then the deterministic equivalent of the second term in $\mathcal{B}_{\mathcal{N},0}^{\tt RFM} $ is given by
\[
\begin{aligned}
{\rm Tr}({\bm \theta}_* {\bm \theta}_*^{\!\top} ( {\bm I} - {\bm F}^{\!\top} ({\bm F}{\bm F}^{\!\top})^{-1} {\bm F} )) \sim \lambda_p {\bm \theta}_*^{\!\top} (\bLambda +\lambda_p)^{-1} {\bm \theta}_*.
\end{aligned}
\]
Finally, combine the above equivalents and we have
\[
\begin{aligned}
\mathcal{B}_{\mathcal{N},0}^{\tt RFM} \sim&~ {\bm \theta}_*^{\!\top} \bLambda (\bLambda +\lambda_p)^{-2} {\bm \theta}_* \cdot \frac{p}{n-{\rm df}_2(\lambda_p)} + {\bm \theta}_*^{\!\top} (\bLambda +\lambda_p)^{-1} {\bm \theta}_* \cdot \frac{p}{n-p}\\
=&~ \frac{p\langle{\bm \theta}_*, \bLambda (\bLambda +\lambda_p)^{-2} {\bm \theta}_*\rangle}{n-{\rm Tr}(\bLambda^2(\bLambda+\lambda_n{\bm I})^{-2})} + \frac{p\langle{\bm \theta}_*, (\bLambda +\lambda_p)^{-1} {\bm \theta}_*\rangle}{n-p}\,.
\end{aligned}
\]
\paragraph{Over-parameterized regime: deterministic equivalent over ${\bm F}$}

Denote $ \bK:=\bLambda^{1/2}{\bm G}^{\!\top}{\bm G}\bLambda^{1/2}$, for the variance term, we use \cref{eq:trA3K} with ${\bm T}=\bD$, ${\bm \Sigma}={\bm A}=\bK$ and obtain 
\[
\begin{aligned}
\mathcal{V}_{\mathcal{N},0}^{\tt RFM} =&~ \sigma^2 \cdot p{\rm Tr}({\bm F}{\bm G}^{\!\top} {\bm G} {\bm F}^{\!\top}({\bm F}{\bm G}^{\!\top} {\bm G} {\bm F}^{\!\top} + p\lambda{\bm I})^{-2})\\
=&~ \sigma^2 \cdot p{\rm Tr}(\bK \bD^{\!\top} (\bD \bK \bD^{\!\top} + p\lambda{\bm I})^{-2} \bD)\\
\sim&~ \sigma^2 \cdot p{\rm Tr}(\bK (\bK + \hat\lambda{\bm I})^{-2}) \cdot \frac{1}{p-n}\\
\sim&~ \sigma^2 \cdot {\rm Tr}( ({\bm G} \bLambda {\bm G}^{\!\top} )^{-1}) \cdot \frac{p}{p-n}\,,
\end{aligned}
\]
where $\hat\lambda$ is defined by
\begin{equation}\label{eq:hat_lambda}
    \hat\lambda(1-\frac{1}{n}\widehat{\rm df}_1(\hat\lambda)) \sim \frac{p\lambda}{n}\,,
\end{equation}
where $\widehat{\rm df}_1(\hat\lambda)$ and $\widehat{\rm df}_2(\hat\lambda)$ are degrees of freedom associated to $\bK$. In the over-parameterized regime ($p>n$), when $\lambda$ goes to zero, we have $\hat\lambda \to 0$ and  $\widehat{\rm df}_2(\hat\lambda) \to n$ \citep{bach2024high}.

For the bias term, we use \cref{eq:trA3K} with ${\bm T}=\bD$, ${\bm \Sigma}=\bK$, ${\bm A}=\bLambda^{1/2} {\bm G}^{\!\top} {\bm G} {\bm \theta}_* {\bm \theta}_*^{\!\top} {\bm G}^{\!\top} {\bm G} \bLambda^{1/2}$ and obtain 
\[
\begin{aligned}
\mathcal{B}_{\mathcal{N},0}^{\tt RFM} =&~ p{\rm Tr}({\bm \theta}_*^{\!\top} {\bm G}^{\!\top} {\bm G} {\bm F}^{\!\top} ({\bm F} {\bm G}^{\!\top} {\bm G} {\bm F}^{\!\top} + p\lambda{\bm I})^{-2} {\bm F} {\bm G}^{\!\top} {\bm G} {\bm \theta}_* )\\
=&~ p{\rm Tr}(\bLambda^{1/2} {\bm G}^{\!\top} {\bm G} {\bm \theta}_* {\bm \theta}_*^{\!\top} {\bm G}^{\!\top} {\bm G} \bLambda^{1/2} \bD (\bD \bK \bD^{\!\top} + p\lambda{\bm I})^{-2} \bD )\\
\sim&~ p{\rm Tr}(\bLambda^{1/2} {\bm G}^{\!\top} {\bm G} {\bm \theta}_* {\bm \theta}_*^{\!\top} {\bm G}^{\!\top} {\bm G} \bLambda^{1/2} (\bK + \hat\lambda{\bm I})^{-2} ) \cdot \frac{1}{p-n}\\
\sim&~ {\rm Tr}( {\bm \theta}_* {\bm \theta}_*^{\!\top} {\bm G}^{\!\top} ({\bm G} \bLambda {\bm G}^{\!\top})^{-1} {\bm G} ) \cdot \frac{p}{p-n}\,.
\end{aligned}
\]

\paragraph{Over-parameterized regime: deterministic equivalent over ${\bm G}$}

For the variance term, we have
\[
\begin{aligned}
\mathcal{V}_{\mathcal{N},0}^{\tt RFM} \sim \sigma^2 \cdot \frac{1}{\lambda_n} \cdot \frac{p}{p-n} = \frac{\sigma^2p}{\lambda_n(p-n)}.
\end{aligned}
\]

For the bias term, we have
\[
\begin{aligned}
\mathcal{B}_{\mathcal{N},0}^{\tt RFM} \sim&~ {\rm Tr}( {\bm \theta}_* {\bm \theta}_*^{\!\top} ( \bLambda + \lambda_n)^{-1} ) \cdot \frac{p}{p-n}\\
=&~ {\bm \theta}_*^{\!\top} ( \bLambda + \lambda_n)^{-1} {\bm \theta}_* \cdot \frac{p}{p-n}\\
=&~ \frac{p\langle{\bm \theta}_*, ( \bLambda + \lambda_n)^{-1} {\bm \theta}_*\rangle}{p-n}\,.
\end{aligned}
\]
Finally, we conclude the proof.
\end{proof}

To build the connection between the test risk and norm for the min-$\ell_2$-norm estimator for random features regression, we also need the deterministic equivalent of the test risk as below.

\begin{proposition}[Asymptotic deterministic equivalence of the test risk of the min-$\ell_2$-norm interpolator]\label{prop:asy_equiv_error_RFRR_minnorm}
    Under \cref{ass:concentrated_RFRR}, for the minimum $\ell_2$-norm estimator $\hat{{\bm a}}_{\min}$, we have the following deterministic equivalence: for the under-parameterized regime ($p<n$), we have
    \[
    \begin{aligned}
        \mathcal{B}^{\tt RFM}_{\mathcal{R},0} \sim \frac{n\lambda_p \langle{\bm \theta}_*, (\bLambda +\lambda_p{\bm I})^{-1} {\bm \theta}_*\rangle}{n-p}\,,\quad \mathcal{V}^{\tt RFM}_{\mathcal{R},0} \sim&~ \frac{\sigma^2p}{n-p}\,,
    \end{aligned}
    \]
    where $\lambda_p$ is defined by ${\rm Tr}(\bLambda(\bLambda+\lambda_p{\bm I})^{-1}) \sim p$. In the over-parameterized regime ($p>n$), we have
    \[
    \begin{aligned}
        \mathcal{B}^{\tt RFM}_{\mathcal{R},0} \sim&~ \frac{n\lambda_n^2 \langle{\bm \theta}_*, ( \bLambda + \lambda_n {\bm I})^{-2} {\bm \theta}_*\rangle}{ n - {\rm Tr}(\bLambda^2(\bLambda+\lambda_n{\bm I})^{-2})} + \frac{n\lambda_n \langle{\bm \theta}_*, ( \bLambda + \lambda_n{\bm I})^{-1} {\bm \theta}_*\rangle}{p-n}\,,\\
        \mathcal{V}^{\tt RFM}_{\mathcal{R},0} \sim&~  \frac{\sigma^2{\rm Tr}(\bLambda^2(\bLambda+\lambda_n{\bm I})^{-2})}{n - {\rm Tr}(\bLambda^2(\bLambda+\lambda_n{\bm I})^{-2})} + \frac{\sigma^2n}{p-n}\,,
    \end{aligned}
    \]
    where $\lambda_n$ is defined by ${\rm Tr}(\bLambda(\bLambda+\lambda_n{\bm I})^{-1}) \sim n$.
\end{proposition}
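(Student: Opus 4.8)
The plan is to obtain \cref{prop:asy_equiv_error_RFRR_minnorm} as the $\lambda\to 0^+$ limit of the random feature ridge regression equivalents $\sB^{\tt RFM}_{\sR,\lambda}$ and $\sV^{\tt RFM}_{\sR,\lambda}$ of \cref{eq:de_risk_rf}, in exact parallel with how \cref{prop:asy_equiv_norm_RFRR_minnorm} was derived for the norm. Since $\hat{\bm a}\to\hat{\bm a}_{\min}$ as $\lambda\to 0$ and the risk functionals in \cref{eq:rf_risk_bias,eq:rf_risk_variance} are continuous in $\lambda$ away from the degeneracies, it suffices to evaluate the limits of the self-consistent pair $(\nu_1,\nu_2)$ of \cref{eq:def_nu} and of the auxiliary quantities $\Upsilon(\nu_1,\nu_2)$, $\chi(\nu_2)$, and then plug into \cref{eq:de_risk_rf}. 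The two regimes $p<n$ and $p>n$ must be treated separately because $(\nu_1,\nu_2)$ degenerate differently.

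First I would pin down the limiting behaviour of $(\nu_1,\nu_2)$. Feeding $\nu_1\to 0$ into the second equation of \cref{eq:def_nu} forces $\Tr(\bLambda(\bLambda+\nu_2{\bm I})^{-1})\to p$, i.e.\ $\nu_2\to\lambda_p$; the first equation then gives $\lambda/\nu_1\to n-p>0$, so $\nu_1\sim\lambda/(n-p)\to 0$, which closes the argument in the under-parameterized regime. In the over-parameterized regime one instead posits $\nu_2\to\lambda_n$ (so that $\Tr(\bLambda(\bLambda+\nu_2{\bm I})^{-1})\to n$); the first equation of \cref{eq:def_nu} then gives $\lambda/\nu_1\to 0$ and the second gives $p\nu_1/\nu_2\to p-n$, hence $\nu_1\to\lambda_n(1-n/p)$. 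These are precisely the limits already recorded in \cref{sec:preli} just before \cref{sec:relationship_rf}.

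Next I would substitute these limits into $\Upsilon$ and $\chi$ and simplify. Writing ${\rm df}_2:=\Tr(\bLambda^2(\bLambda+\nu_2{\bm I})^{-2})$ and using the elementary identity $\Tr(\bLambda(\bLambda+\nu_2{\bm I})^{-1})={\rm df}_2+\nu_2\Tr(\bLambda(\bLambda+\nu_2{\bm I})^{-2})$, one gets in the under-parameterized case $\chi(\lambda_p)=1/\lambda_p$ and $\Upsilon\to p/n$, so $\sV^{\tt RFM}_{\sR,0}\to\sigma^2 p/(n-p)$, while for the bias the factor $a+\chi(\lambda_p)b$ with $a:=\langle\btheta_*,(\bLambda+\lambda_p{\bm I})^{-2}\btheta_*\rangle$, $b:=\langle\btheta_*,\bLambda(\bLambda+\lambda_p{\bm I})^{-2}\btheta_*\rangle$ collapses via $(\bLambda+\lambda_p{\bm I})^{-1}=\bLambda(\bLambda+\lambda_p{\bm I})^{-2}+\lambda_p(\bLambda+\lambda_p{\bm I})^{-2}$ to give the stated $\frac{n\lambda_p}{n-p}\langle\btheta_*,(\bLambda+\lambda_p{\bm I})^{-1}\btheta_*\rangle$. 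In the over-parameterized case the same identity yields $\chi(\lambda_n)=(n-{\rm df}_2)/(\lambda_n(p-{\rm df}_2))$ and, after clearing denominators, $1-\Upsilon=(p-n)(n-{\rm df}_2)/(n(p-{\rm df}_2))$; substituting these into \cref{eq:de_risk_rf}, distributing, and re-expressing with $\langle\btheta_*,(\bLambda+\lambda_n{\bm I})^{-1}\btheta_*\rangle=\langle\btheta_*,\bLambda(\bLambda+\lambda_n{\bm I})^{-2}\btheta_*\rangle+\lambda_n\langle\btheta_*,(\bLambda+\lambda_n{\bm I})^{-2}\btheta_*\rangle$ (together with $\tfrac{1}{n-{\rm df}_2}+\tfrac{1}{p-n}=\tfrac{p-{\rm df}_2}{(n-{\rm df}_2)(p-n)}$) reproduces the two-term bias and variance formulas in the statement.

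The main obstacle is the limiting analysis of $(\nu_1,\nu_2)$ and verifying that the degenerations are uniform enough that no hidden blow-up survives: in particular one must check that the term $(\nu_1/\nu_2)^2\,{\rm df}_2/(p-{\rm df}_2)$ in $\Upsilon$ genuinely vanishes in the under-parameterized regime, which it does because $p-{\rm df}_2$ stays bounded away from $0$ as $\nu_2\to\lambda_p>0$, whereas in the over-parameterized regime $\nu_1$ does \emph{not} vanish and this term contributes a nonzero constant. As an alternative that avoids the $\lambda\to 0$ limit altogether, one could reprove the statement directly along the lines of the proof of \cref{prop:asy_equiv_norm_RFRR_minnorm}: apply the two-stage deterministic equivalents of \cref{prop:spectralK,prop:spectralK2} (first over ${\bm G}$, then over ${\bm F}$, with the roles reversed in the over-parameterized regime) to the ridgeless bias and variance functionals in \cref{eq:rf_risk_bias,eq:rf_risk_variance}; this route requires the same case bookkeeping but no continuity argument.
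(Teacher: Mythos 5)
Your proposal is correct, and the algebra checks out: the limits $\nu_2\to\lambda_p$, $\nu_1\sim\lambda/(n-p)$ (under-parameterized) and $\nu_2\to\lambda_n$, $\nu_1\to\lambda_n(1-n/p)$ (over-parameterized) are the right ones, the simplifications $\chi(\lambda_p)=1/\lambda_p$, $\chi(\lambda_n)=(n-{\rm df}_2)/(\lambda_n(p-{\rm df}_2))$ and $1-\Upsilon=(p-n)(n-{\rm df}_2)/(n(p-{\rm df}_2))$ all follow from the identity $\bLambda(\bLambda+\nu)^{-1}=\bLambda^2(\bLambda+\nu)^{-2}+\nu\bLambda(\bLambda+\nu)^{-2}$ evaluated at the fixed point, and substituting into \cref{eq:de_risk_rf} does reproduce the stated formulas (I verified the over-parameterized bias via $\tfrac{1}{n-{\rm df}_2}+\tfrac{1}{p-n}=\tfrac{p-{\rm df}_2}{(n-{\rm df}_2)(p-n)}$ and the variance via $\Upsilon/(1-\Upsilon)=\tfrac{{\rm df}_2}{n-{\rm df}_2}+\tfrac{n}{p-n}$). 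However, your primary route is not the one the paper takes: the paper proves \cref{prop:asy_equiv_error_RFRR_minnorm} by the direct two-stage deterministic-equivalence computation on the ridgeless functionals \cref{eq:rf_risk_bias,eq:rf_risk_variance} — conditioning first on ${\bm F}$ and averaging over ${\bm G}$ in the under-parameterized regime (and in the reverse order when $p>n$), applying \cref{prop:spectralK,prop:spectralK2} with carefully chosen ${\bm T}$, ${\bm \Sigma}$, ${\bm A}$, ${\bm B}$ — which is exactly the ``alternative'' you sketch in your last sentence. The trade-off is real: your limit argument is shorter and purely algebraic, but it silently interchanges the $\lambda\to0^+$ limit with the $n,p\to\infty$ asymptotic equivalence of \cref{prop:asy_equiv_risk_RFRR}; this interchange is standard in this literature (the paper itself uses it for linear regression in the proof of \cref{prop:asy_equiv_norm_LR_minnorm}) but is not justified beyond a continuity appeal, whereas the paper's direct computation avoids it at the cost of a much longer case analysis. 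Your observation that $p-{\rm df}_2$ stays bounded away from zero at $\nu_2=\lambda_p>0$ (so the second term of $\Upsilon$ genuinely vanishes when $p<n$ but not when $p>n$) is the correct and necessary check for the limit route.
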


\begin{proof}[Proof of \cref{prop:asy_equiv_error_RFRR_minnorm}]
For the proof, we separate the two regimes $p<n$ and $p>n$. For both of them, we provide asymptotic expansions in two steps, first with respect to ${\bm G}$ and then ${\bm F}$ in the under-parameterized regime and vice-versa for the over-parameterized regime.

\paragraph{Under-parameterized regime: deterministic equivalent over ${\bm G}$}

For the variance term, in the under-parameterized regime, when $\lambda \to 0$, the variance term will become $\mathcal{V}^{\tt RFM}_{\mathcal{R},0} = \sigma^2 \cdot {\rm Tr}(\widehat{\bLambda}_{{\bm F}} ({\bm Z}^{\!\top} {\bm Z})^{-1})$. Accordingly, using \cite[Eq. (12)]{bach2024high}, we have 
\[
\begin{aligned}
\mathcal{V}^{\tt RFM}_{\mathcal{R},0} =&~ \sigma^2 \cdot {\rm Tr}(\widehat{\bLambda}_{{\bm F}} ({\bm Z}^{\!\top} {\bm Z})^{-1})\\
=&~ \sigma^2 \cdot {\rm Tr}({\bm F}{\bm F}^{\!\top}({\bm F}{\bm G}^{\!\top}{\bm G}{\bm F}^{\!\top})^{-1})\\
\sim&~ \frac{\sigma^2}{n-p} \cdot {\rm Tr}({\bm F}{\bm F}^{\!\top}({\bm F}{\bm F}^{\!\top})^{-1})\\
=&~\frac{\sigma^2p}{n-p}\,.
\end{aligned}
\]

For the bias term, it can be decomposed into
\[
\begin{aligned}
\mathcal{B}^{\tt RFM}_{\mathcal{R},0} =&~ \|{\bm \theta}_* - p^{-1/2} {\bm F}^{\!\top} ({\bm Z}^{\!\top} {\bm Z} + \lambda{\bm I})^{-1} {\bm Z}^{\!\top} \bm{G} {\bm \theta}_*\|_2^2\\
=&~ {\bm \theta}_*^{\!\top} {\bm \theta}_* -2 p^{-1/2}{\bm \theta}_*^{\!\top} {\bm F}^{\!\top} ({\bm Z}^{\!\top} {\bm Z} + \lambda{\bm I})^{-1} {\bm Z}^{\!\top} \bm{G} {\bm \theta}_* \\
&~ + {\bm \theta}_*^{\!\top} {\bm G}^{\!\top} {\bm Z} ({\bm Z}^{\!\top} {\bm Z} + \lambda{\bm I})^{-1} \widehat{\bLambda}_{{\bm F}} ({\bm Z}^{\!\top} {\bm Z} + \lambda{\bm I})^{-1} {\bm Z}^{\!\top} \bm{G} {\bm \theta}_*.
\end{aligned}
\]
For the second term: $p^{-1/2}{\bm \theta}_*^{\!\top} {\bm F}^{\!\top} ({\bm Z}^{\!\top} {\bm Z} + \lambda{\bm I})^{-1} {\bm Z}^{\!\top} \bm{G} {\bm \theta}_*$, we can use \cref{eq:trA1K} with ${\bm T}={\bm G}$, ${\bm \Sigma}={\bm F}^{\!\top}{\bm F}$, ${\bm A}={\bm \theta}_*{\bm \theta}_*^{\!\top} {\bm F}^{\!\top} {\bm F}$ and obtain
\[
\begin{aligned}
p^{-1/2}{\bm \theta}_*^{\!\top} {\bm F}^{\!\top} ({\bm Z}^{\!\top} {\bm Z} + \lambda{\bm I})^{-1} {\bm Z}^{\!\top} \bm{G} {\bm \theta}_* =&~ {\rm Tr}( {\bm \theta}_*{\bm \theta}_*^{\!\top} {\bm F}^{\!\top} {\bm F} {\bm G}^{\!\top} ({\bm G}{\bm F}^{\!\top}{\bm F}{\bm G}^{\!\top} + p\lambda{\bm I})^{-1} {\bm G})\\
\sim&~ {\rm Tr}( {\bm \theta}_*{\bm \theta}_*^{\!\top} {\bm F}^{\!\top} {\bm F} ({\bm F}^{\!\top}{\bm F} + \tilde\lambda{\bm I})^{-1})\\
\sim&~ {\rm Tr}( {\bm \theta}_*{\bm \theta}_*^{\!\top} {\bm F}^{\!\top} ({\bm F} {\bm F}^{\!\top})^{-1}{\bm F})\,,
\end{aligned}
\]
where the implicit regularization parameter $\tilde\lambda$ is defined by \cref{eq:tilde_lambda}.

For the third term: ${\bm \theta}_*^{\!\top} {\bm G}^{\!\top} {\bm Z} ({\bm Z}^{\!\top} {\bm Z} + \lambda{\bm I})^{-1} \widehat{\bLambda}_{{\bm F}} ({\bm Z}^{\!\top} {\bm Z} + \lambda{\bm I})^{-1} {\bm Z}^{\!\top} \bm{G} {\bm \theta}_*$, we can use \cref{eq:trAB1K} with ${\bm T}={\bm G}$, ${\bm \Sigma}={\bm F}^{\!\top}{\bm F}$, ${\bm A}={\bm \theta}_*{\bm \theta}_*^{\!\top}$, $\bB={\bm F}^{\!\top}{\bm F}{\bm F}^{\!\top}{\bm F}$ and obtain
\[
\begin{aligned}
&~{\bm \theta}_*^{\!\top} {\bm G}^{\!\top} {\bm Z} ({\bm Z}^{\!\top} {\bm Z} + \lambda{\bm I})^{-1} \widehat{\bLambda}_{{\bm F}} ({\bm Z}^{\!\top} {\bm Z} + \lambda{\bm I})^{-1} {\bm Z}^{\!\top} \bm{G} {\bm \theta}_*\\
=&~{\rm Tr}({\bm \theta}_* {\bm \theta}_*^{\!\top} {\bm G}^{\!\top} {\bm G} {\bm F}^{\!\top}({\bm F} {\bm G}^{\!\top} {\bm G} {\bm F}^{\!\top} + p\lambda{\bm I})^{-1} {\bm F}{\bm F}^{\!\top} ({\bm F} {\bm G}^{\!\top} {\bm G} {\bm F}^{\!\top} + p\lambda{\bm I})^{-1} {\bm F} {\bm G}^{\!\top} {\bm G} )\\
=&~{\rm Tr}({\bm \theta}_* {\bm \theta}_*^{\!\top} {\bm G}^{\!\top} ( {\bm G} {\bm F}^{\!\top} {\bm F} {\bm G}^{\!\top} + p\lambda{\bm I})^{-1} {\bm G} {\bm F}^{\!\top} {\bm F}{\bm F}^{\!\top} {\bm F} {\bm G}^{\!\top} ( {\bm G} {\bm F}^{\!\top} {\bm F} {\bm G}^{\!\top} + p\lambda{\bm I})^{-1} {\bm G} )\\
\sim&~ {\rm Tr}({\bm \theta}_* {\bm \theta}_*^{\!\top} ({\bm F}^{\!\top} {\bm F} + \tilde\lambda{\bm I})^{-1} {\bm F}^{\!\top} {\bm F}{\bm F}^{\!\top} {\bm F} ({\bm F}^{\!\top} {\bm F} + \tilde\lambda{\bm I})^{-1})\\
&~+ \tilde\lambda^2 {\rm Tr}({\bm \theta}_* {\bm \theta}_*^{\!\top} ({\bm F}^{\!\top} {\bm F} + \tilde\lambda{\bm I})^{-2}) \cdot {\rm Tr}({\bm F}^{\!\top} {\bm F}{\bm F}^{\!\top} {\bm F} ({\bm F}^{\!\top} {\bm F} + \tilde\lambda{\bm I})^{-2}) \cdot \frac{1}{n-p}\\
\sim&~ {\rm Tr}({\bm \theta}_* {\bm \theta}_*^{\!\top} {\bm F}^{\!\top} ({\bm F} {\bm F}^{\!\top})^{-1} {\bm F}) + {\rm Tr}({\bm \theta}_* {\bm \theta}_*^{\!\top} ({\bm I} -{\bm F}^{\!\top} ({\bm F} {\bm F}^{\!\top})^{-1} {\bm F} )) \cdot \frac{p}{n-p}\,.
\end{aligned}
\]
Combining the above equivalents, we have
\[
\begin{aligned}
\mathcal{B}^{\tt RFM}_{\mathcal{R},0} =&~ {\bm \theta}_*^{\!\top} {\bm \theta}_* -{\rm Tr}({\bm \theta}_* {\bm \theta}_*^{\!\top} {\bm F}^{\!\top} ({\bm F} {\bm F}^{\!\top})^{-1} {\bm F}) + {\rm Tr}({\bm \theta}_* {\bm \theta}_*^{\!\top} ({\bm I} -{\bm F}^{\!\top} ({\bm F} {\bm F}^{\!\top})^{-1} {\bm F} )) \cdot \frac{p}{n-p}\\
=&~ {\bm \theta}_*^{\!\top} {\bm \theta}_* \cdot \frac{n}{n-p} -{\rm Tr}({\bm \theta}_* {\bm \theta}_*^{\!\top} {\bm F}^{\!\top} ({\bm F} {\bm F}^{\!\top})^{-1} {\bm F}) \cdot \frac{n}{n-p}\,.
\end{aligned}
\]

\paragraph{Under-parameterized regime: deterministic equivalent over ${\bm F}$}
For the bias term, we can use \cref{eq:trA1K} with ${\bm T} = \bD := {\bm F} \bLambda^{-1/2}$, ${\bm A}=\bLambda^{1/2} {\bm \theta}_* {\bm \theta}_*^{\!\top} \bLambda^{1/2}$ and obtain
\[
\begin{aligned}
{\rm Tr}({\bm \theta}_* {\bm \theta}_*^{\!\top} {\bm F}^{\!\top} ({\bm F} {\bm F}^{\!\top})^{-1} {\bm F}) =&~ {\rm Tr}(\bLambda^{1/2} {\bm \theta}_* {\bm \theta}_*^{\!\top} \bLambda^{1/2} \bD^{\!\top} (\bD \bLambda \bD^{\!\top})^{-1} \bD )\\
\sim&~ {\rm Tr}(\bLambda^{1/2} {\bm \theta}_* {\bm \theta}_*^{\!\top} \bLambda^{1/2} (\bLambda +\lambda_p)^{-1})\\
=&~ {\bm \theta}_*^{\!\top} \bLambda (\bLambda +\lambda_p)^{-1} {\bm \theta}_*\,.
\end{aligned}
\]
Thus, we finally obtain
\[
\begin{aligned}
\mathcal{B}^{\tt RFM}_{\mathcal{R},0} \sim&~ {\bm \theta}_*^{\!\top} {\bm \theta}_* \cdot \frac{n}{n-p} - {\bm \theta}_*^{\!\top} \bLambda (\bLambda +\lambda_p)^{-1} {\bm \theta}_* \cdot \frac{n}{n-p}\\
=&~ \lambda_p {\bm \theta}_*^{\!\top} (\bLambda +\lambda_p)^{-1} {\bm \theta}_* \cdot \frac{n}{n-p}\\
=&~ \frac{n\lambda_p \langle{\bm \theta}_*, (\bLambda +\lambda_p{\bm I})^{-1} {\bm \theta}_*\rangle}{n-p}\,.
\end{aligned}
\]
\paragraph{Over-parameterized regime: deterministic equivalent over ${\bm F}$}
For the variance term, with $\bD := {\bm F} \bLambda^{-1/2}$ and $\bK := \bLambda^{1/2} {\bm G}^{\!\top} {\bm G} \bLambda^{1/2}$ we can obtain
\[
\begin{aligned}
\mathcal{V}^{\tt RFM}_{\mathcal{R},0} &= \sigma^2 \cdot \mathrm{Tr}(\widehat{\bLambda}_{{\bm F}} {\bm Z}^{\!\top} {\bm Z} ({\bm Z}^{\!\top} {\bm Z} + \lambda{\bm I})^{-2})\\
&= \sigma^2 \cdot \mathrm{Tr}({\bm F} {\bm F}^{\!\top} {\bm F} {\bm G}^{\!\top} {\bm G} {\bm F}^{\!\top} ({\bm F} {\bm G}^{\!\top} {\bm G} {\bm F}^{\!\top} + p\lambda{\bm I})^{-2})\\
&= \sigma^2 \cdot \mathrm{Tr}(\bD \bLambda \bD^{\!\top} \bD \bLambda^{1/2} {\bm G}^{\!\top} {\bm G} \bLambda^{1/2} \bD^{\!\top} (\bD \bLambda^{1/2} {\bm G}^{\!\top} {\bm G} \bLambda^{1/2} \bD^{\!\top} + p\lambda{\bm I})^{-2})\\
&= \sigma^2 \cdot \mathrm{Tr}(\bLambda \bD^{\!\top} (\bD \bK \bD^{\!\top} + p\lambda{\bm I})^{-1} \bD \bK \bD^{\!\top} (\bD \bK \bD^{\!\top} + p\lambda{\bm I})^{-1} \bD )\,,
\end{aligned}
\]
then we directly use \cref{eq:trAB1K} with ${\bm T}=\bD$, ${\bm \Sigma}=\bK$, ${\bm A}=\bLambda$, $\bB=\bK$ and obtain
\[
\begin{aligned}
&~\mathrm{Tr}(\bLambda \bD^{\!\top} (\bD \bK \bD^{\!\top} + p\lambda{\bm I})^{-1} \bD \bK \bD^{\!\top} (\bD \bK \bD^{\!\top} + p\lambda{\bm I})^{-1} \bD )\\
\sim&~ \mathrm{Tr}(\bLambda ( \bK + \hat\lambda{\bm I})^{-1} \bK ( \bK + \hat\lambda{\bm I})^{-1} ) + \hat\lambda^2 \mathrm{Tr}(\bLambda ( \bK + \hat\lambda{\bm I})^{-2} ) \cdot \mathrm{Tr}( \bK ( \bK + \hat\lambda{\bm I})^{-2} ) \cdot \frac{1}{p-n}\\
\sim&~ {\rm Tr}(\bLambda^2 {\bm G}^{\!\top} ({\bm G} \bLambda {\bm G}^{\!\top})^{-2} {\bm G} ) \!+\! \mathrm{Tr}(\bLambda ( {\bm I} - \bLambda^{1/2}{\bm G}^{\!\top} ({\bm G} \bLambda {\bm G}^{\!\top})^{-1} {\bm G} \bLambda^{1/2} ) ) \!\cdot\! \mathrm{Tr}( ({\bm G} \bLambda {\bm G}^{\!\top})^{-1} ) \!\cdot\! \frac{1}{p-n}\,,
\end{aligned}
\]
where the implicit regularization parameter $\hat\lambda$ is defined by \cref{eq:hat_lambda}.

For the bias term, first we have
\[
\begin{aligned}
p^{-1/2}{\bm \theta}_*^{\!\top} {\bm F}^{\!\top} ({\bm Z}^{\!\top} {\bm Z} + \lambda{\bm I})^{-1} {\bm Z}^{\!\top} \bm{G} {\bm \theta}_* =&~ {\rm Tr}( {\bm \theta}_*{\bm \theta}_*^{\!\top} {\bm F}^{\!\top} ({\bm F}{\bm G}^{\!\top} {\bm G}{\bm F}^{\!\top}+ p\lambda{\bm I})^{-1} {\bm F} {\bm G}^{\!\top} {\bm G})\\
=&~ {\rm Tr}(\bLambda^{1/2} {\bm G}^{\!\top} {\bm G} {\bm \theta}_*{\bm \theta}_*^{\!\top} \bLambda^{1/2} \bD^{\!\top} (\bD \bK \bD^{\!\top}+ p\lambda{\bm I})^{-1} \bD )\,,
\end{aligned}
\]
then we use \cref{eq:trA1K} with ${\bm T}=\bD$, ${\bm \Sigma}=\bK$, ${\bm A}=\bLambda^{1/2} {\bm G}^{\!\top} {\bm G} {\bm \theta}_*{\bm \theta}_*^{\!\top} \bLambda^{1/2}$ and obtain
\[
\begin{aligned}
{\rm Tr}(\bLambda^{1/2} {\bm G}^{\!\top} {\bm G} {\bm \theta}_*{\bm \theta}_*^{\!\top} \bLambda^{1/2} \bD^{\!\top} (\bD \bK \bD^{\!\top}+ p\lambda{\bm I})^{-1} \bD ) \sim&~ {\rm Tr}( {\bm \theta}_*{\bm \theta}_*^{\!\top} \bLambda {\bm G}^{\!\top} ( {\bm G} \bLambda {\bm G}^{\!\top})^{-1} {\bm G})\,.
\end{aligned}
\]
Furthermore, we use \cref{eq:trAB1K} with ${\bm T}=\bD$, ${\bm \Sigma}=\bK$, ${\bm A}=\bLambda^{1/2} {\bm G}^{\!\top} {\bm G} {\bm \theta}_* {\bm \theta}_*^{\!\top} {\bm G}^{\!\top} {\bm G} \bLambda^{1/2}$, $\bB=\bLambda$ and obtain
\[
\begin{aligned}
&~{\bm \theta}_*^{\!\top} {\bm G}^{\!\top} {\bm Z} ({\bm Z}^{\!\top} {\bm Z} + \lambda{\bm I})^{-1} \widehat{\bLambda}_{{\bm F}} ({\bm Z}^{\!\top} {\bm Z} + \lambda{\bm I})^{-1} {\bm Z}^{\!\top} \bm{G} {\bm \theta}_*\\
=&~{\rm Tr}(\bLambda^{1/2} {\bm G}^{\!\top} {\bm G} {\bm \theta}_* {\bm \theta}_*^{\!\top} {\bm G}^{\!\top} {\bm G} \bLambda^{1/2} \bD^{\!\top}(\bD \bK \bD^{\!\top} + p\lambda{\bm I})^{-1} \bD \bLambda \bD^{\!\top} (\bD \bK \bD^{\!\top} + p\lambda{\bm I})^{-1} \bD )\\
\sim&~ {\rm Tr}(\bLambda^{1/2} {\bm G}^{\!\top} {\bm G} {\bm \theta}_* {\bm \theta}_*^{\!\top} {\bm G}^{\!\top} {\bm G} \bLambda^{1/2} ( \bK + \hat\lambda{\bm I})^{-1} \bLambda ( \bK + \hat\lambda{\bm I})^{-1} )\\
&~+ \hat\lambda^2 {\rm Tr}(\bLambda^{1/2} {\bm G}^{\!\top} {\bm G} {\bm \theta}_* {\bm \theta}_*^{\!\top} {\bm G}^{\!\top} {\bm G} \bLambda^{1/2} ( \bK + \hat\lambda{\bm I})^{-2} ) \cdot {\rm Tr}( \bLambda ( \bK + \hat\lambda{\bm I})^{-2} ) \cdot \frac{1}{p-n}\\
\sim&~ {\rm Tr}( {\bm \theta}_* {\bm \theta}_*^{\!\top} {\bm G}^{\!\top} ( {\bm G} \bLambda {\bm G}^{\!\top} )^{-1} {\bm G} \bLambda^2 {\bm G}^{\!\top} ( {\bm G} \bLambda {\bm G}^{\!\top} )^{-1} {\bm G})\\
&~+ {\rm Tr}( {\bm \theta}_* {\bm \theta}_*^{\!\top} {\bm G}^{\!\top} ( {\bm G} \bLambda {\bm G}^{\!\top})^{-1} {\bm G}) \cdot {\rm Tr}(\bLambda ( {\bm I} - \bLambda^{1/2}{\bm G}^{\!\top} ({\bm G} \bLambda {\bm G}^{\!\top})^{-1} {\bm G} \bLambda^{1/2} ) ) \cdot \frac{1}{p-n}\,.
\end{aligned}
\]
In the next, we are ready to eliminate the randomness over ${\bm G}$.

\paragraph{Over-parameterized regime: deterministic equivalent over ${\bm G}$}
For the variance term, we use \cref{eq:trA3K} to obtain
\[
\begin{aligned}
{\rm Tr}(\bLambda^2 {\bm G}^{\!\top} ({\bm G} \bLambda {\bm G}^{\!\top})^{-2} {\bm G} ) \sim \frac{{\rm df}_2(\lambda_n)}{n - {\rm df}_2(\lambda_n)}\,.
\end{aligned}
\]
Then we use \cref{eq:trA1K} to obtain
\[
\begin{aligned}
{\rm Tr}(\bLambda^2 {\bm G}^{\!\top} ({\bm G} \bLambda {\bm G}^{\!\top})^{-1} {\bm G} ) \sim {\rm Tr}(\bLambda^2(\bLambda + \lambda_n)^{-1}),
\end{aligned}
\]
where $\lambda_n$ is defined by ${\rm df_1}(\lambda_n) = n$. Hence we have
\[
\begin{aligned}
{\rm Tr}(\bLambda ( {\bm I} - \bLambda^{1/2}{\bm G}^{\!\top} ({\bm G} \bLambda {\bm G}^{\!\top})^{-1} {\bm G} \bLambda^{1/2} ) ) \sim n\lambda_n.
\end{aligned}
\]
Combine the above equivalents, we have
\[
\begin{aligned}
\mathcal{V}^{\tt RFM}_{\mathcal{R},0} \sim&~  \sigma^2 \cdot  \frac{{\rm df}_2(\lambda_n)}{n - {\rm df}_2(\lambda_n)} + \sigma^2 \cdot \frac{n}{p-n}\\
=&~ \frac{\sigma^2{\rm Tr}(\bLambda^2(\bLambda+\lambda_n{\bm I})^{-2})}{n - {\rm Tr}(\bLambda^2(\bLambda+\lambda_n{\bm I})^{-2})} + \frac{\sigma^2n}{p-n}\,.
\end{aligned}
\]

For the bias term, we first use \cref{eq:trA1K} to obtain
\[
\begin{aligned}
{\rm Tr}( {\bm \theta}_*{\bm \theta}_*^{\!\top} \bLambda {\bm G}^{\!\top} ( {\bm G} \bLambda {\bm G}^{\!\top})^{-1} {\bm G}) \sim {\rm Tr}({\bm \theta}_*{\bm \theta}_*^{\!\top} \bLambda (\bLambda + \lambda_n)^{-1})\,.
\end{aligned}
\]
Moreover, we use \cref{eq:trAB1K} to obtain
\[
\begin{aligned}
&~ {\rm Tr}( {\bm \theta}_* {\bm \theta}_*^{\!\top} {\bm G}^{\!\top} ( {\bm G} \bLambda {\bm G}^{\!\top} )^{-1} {\bm G} \bLambda^2 {\bm G}^{\!\top} ( {\bm G} \bLambda {\bm G}^{\!\top} )^{-1} {\bm G})\\ 
\sim&~ {\rm Tr}({\bm \theta}_* {\bm \theta}_*^{\!\top} \bLambda^2 ( \bLambda + \lambda_n )^{-2}) + \lambda_n^2 \cdot {\rm Tr}({\bm \theta}_* {\bm \theta}_*^{\!\top} ( \bLambda + \lambda_n )^{-2}) \cdot \frac{{\rm df}_2(\lambda_n)}{n - {\rm df}_2(\lambda_n)}.
\end{aligned}
\]
Accordingly, we finally conclude that
\[
\begin{aligned}
\mathcal{B}^{\tt RFM}_{\mathcal{R},0} \sim&~ \lambda_n^2 {\bm \theta}_*^{\!\top} ( \bLambda + \lambda_n {\bm I})^{-2} {\bm \theta}_* \cdot \frac{n}{ n - {\rm df}_2(\lambda_n)} + \lambda_n {\bm \theta}_*^{\!\top} ( \bLambda+ \lambda_n{\bm I})^{-1} {\bm \theta}_* \cdot \frac{n}{p-n}\\
=&~ \frac{n\lambda_n^2 \langle{\bm \theta}_*, ( \bLambda + \lambda_n {\bm I})^{-2} {\bm \theta}_*\rangle}{ n - {\rm Tr}(\bLambda^2(\bLambda+\lambda_n{\bm I})^{-2})} + \frac{n\lambda_n \langle{\bm \theta}_*, ( \bLambda + \lambda_n{\bm I})^{-1} {\bm \theta}_*\rangle}{p-n}\,.
\end{aligned}
\]
\end{proof}

\subsection{Non-asymptotic deterministic equivalence for random features ridge regression}
\label{app:nonasy_deter_equiv_rf}

Here we present the proof for the non-asymptotic results on the variance and then discuss the related results on bias due to the insufficient deterministic equivalence.

\subsubsection{Proof on the variance term}

\begin{theorem}[Deterministic equivalence of variance part of the $\ell_2$ norm]\label{prop:det_equiv_RFRR_V}
    Assume the features $\{{\bm z}_i\}_{i \in [n]}$ and $\{{\bm f}_j\}_{j \in [p]}$ satisfy \cref{ass:concentrated_RFRR} with a constant $C_* > 0$. Then for any $D,K > 0$, there exist constant $\eta_* \in (0, 1/2)$ and $C_{*,D,K} > 0$ ensuring the following property holds. For any $n,p \geq C_{*,D,K}$, $\lambda > 0$, if the following condition is satisfied:
    \begin{equation*}
        \lambda  \geq n^{-K}, \quad \gamma_\lambda \geq p^{-K}, \quad \trho_{\lambda} (n,p)^{5/2} \log^{3/2} (n) \leq K \sqrt{n}\,, \quad \trho_\lambda (n,p)^2 \cdot \rho_{\gamma_+} (p)^{7} \log^4 (p) \leq K \sqrt{p}\,,
    \end{equation*}
    then with probability at least $1-n^{-D}-p^{-D}$, we have that
    \[
    \begin{aligned}
        \left|\mathcal{V}_{\mathcal{N},\lambda}^{\tt RFM} - {\mathsf V}_{{\mathsf N},\lambda}^{\tt RFM}\right| \leq&~ C_{x, D, K} \cdot \mathcal{E}_V(n, p) \cdot {\mathsf V}_{{\mathsf N},\lambda}^{\tt RFM}\,.
    \end{aligned}
    \]
    where the approximation rate is given by
    \[
    \mathcal{E}_V(n, p) := \frac{\widetilde{\rho}_\lambda(n, p)^6 \log^{5/2}(n)}{\sqrt{n}} + \frac{\widetilde{\rho}_\lambda(n, p)^2 \cdot \rho_{\gamma_+}(p)^7 \log^3(p)}{\sqrt{p}}.
    \]
\end{theorem}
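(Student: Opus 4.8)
The plan is to reduce the variance functional $\mathcal{V}_{\mathcal{N},\lambda}^{\tt RFM} = \sigma^2 {\rm Tr}({\bm Z}^{\!\top} {\bm Z}({\bm Z}^{\!\top} {\bm Z} + \lambda{\bm I})^{-2})$ to the deterministic quantity ${\mathsf V}_{{\mathsf N},\lambda}^{\tt RFM} = \frac{p\chi(\nu_2)}{n\Upsilon(\nu_1,\nu_2)} \cdot \frac{\sigma^2 \Upsilon(\nu_1,\nu_2)}{1-\Upsilon(\nu_1,\nu_2)}$ in two stages, exactly mirroring the two-step ``deterministic equivalent over ${\bm Z}$ conditional on ${\bm F}$, then over ${\bm F}$'' strategy used for the test risk in \cite{defilippis2024dimension}. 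First I would rewrite the functional in a form covered by \cref{prop:det_Z}: note that $\sigma^2 {\rm Tr}({\bm Z}^{\!\top}{\bm Z}({\bm Z}^{\!\top}{\bm Z}+\lambda{\bm I})^{-2})$ is, up to the factor $\lambda/n$ absorbed into the definition, precisely $n\sigma^2 \Phi_4({\bm Z}; \widehat{\bm \Lambda}_{\bm F}^{-1}, \lambda)$ in the notation of \cref{eq:functionals_Z} — one has ${\bm Z}^{\!\top}{\bm Z}({\bm Z}^{\!\top}{\bm Z}+\lambda{\bm I})^{-2} = \widehat{\bm \Lambda}_{\bm F}^{1/2}(\widehat{\bm \Lambda}_{\bm F}^{-1/2}{\bm Z}^{\!\top}{\bm Z}({\bm Z}^{\!\top}{\bm Z}+\lambda{\bm I})^{-2}\widehat{\bm \Lambda}_{\bm F}^{-1/2})\widehat{\bm \Lambda}_{\bm F}^{1/2}$, so taking ${\bm A} = \widehat{\bm \Lambda}_{\bm F}^{-1}$ (which is PSD and, conditional on ${\bm F}$, independent of ${\bm Z}$) puts it in scope. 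Applying \cref{prop:det_Z} then gives, with probability at least $1-n^{-D}$ over ${\bm Z}\mid{\bm F}$ (on the event ${\bm F}\in\mathcal{A}_{\bm F}$),
\[
\left|\mathcal{V}_{\mathcal{N},\lambda}^{\tt RFM} - n\sigma^2\widetilde{\Phi}_5({\bm F}; \widehat{\bm \Lambda}_{\bm F}^{-1}, p\nu_1)\right| \le C_{*,D,K}\,\mathcal{E}_1(n,p)\cdot n\sigma^2\widetilde{\Phi}_5({\bm F};\widehat{\bm \Lambda}_{\bm F}^{-1},p\nu_1).
\]

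Second, I would eliminate the remaining randomness in ${\bm F}$. Here $\widetilde{\Phi}_5({\bm F};\widehat{\bm \Lambda}_{\bm F}^{-1}, p\nu_1) = \frac{1}{n}\cdot\frac{{\rm Tr}(\widehat{\bm \Lambda}_{\bm F}^{-1}({\bm F}\bF^{\!\top})^2({\bm F}\bF^{\!\top}+p\nu_1)^{-2})}{n - {\rm Tr}(({\bm F}\bF^{\!\top})^2({\bm F}\bF^{\!\top}+p\nu_1)^{-2})}$; since $\widehat{\bm \Lambda}_{\bm F}^{-1} = p({\bm F}\bF^{\!\top})^{-1}$, the numerator trace collapses to $p\,{\rm Tr}({\bm F}\bF^{\!\top}({\bm F}\bF^{\!\top}+p\nu_1)^{-2})$, and both this and the denominator trace ${\rm Tr}(({\bm F}\bF^{\!\top})^2({\bm F}\bF^{\!\top}+p\nu_1)^{-2}) = \widetilde\Phi_6({\bm F};{\bm I},p\nu_1)$ are functionals of the $p\times\infty$ matrix ${\bm F}$ with row covariance ${\bm \Lambda}$, so I apply the linear-model non-asymptotic equivalents — the analogue of \cref{thm:main_det_equiv_summary} with $({\bm X},{\bm \Sigma},n)$ replaced by $({\bm F},{\bm \Lambda},p)$ and $\lambda$ by $p\nu_1$ (valid because, by the remarks in \cref{app:proof_rf}, the Hanson–Wright/concentration assumption \cref{ass:concentrated_RFRR} on $\bphi$ is exactly what those results need). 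This yields ${\rm Tr}({\bm F}\bF^{\!\top}({\bm F}\bF^{\!\top}+p\nu_1)^{-2}) \approx {\rm Tr}({\bm \Lambda}({\bm \Lambda}+\nu_2)^{-2})/(p-{\rm df}_2(\nu_2))$ and $\widetilde\Phi_6({\bm F};{\bm I},p\nu_1)\approx n\Upsilon(\nu_1,\nu_2)$ (the latter is \cref{eq:df2v1}, already established in the asymptotic proof of \cref{prop:asy_equiv_norm_RFRR}), each with multiplicative error $\widetilde{\mathcal{O}}(p^{-1/2})\rho_{\gamma_+}(p)^{\text{const}}$. Substituting and simplifying using the definitions of $\chi(\nu_2)$ and $\Upsilon$ recovers $n\sigma^2\widetilde\Phi_5 = {\mathsf V}_{{\mathsf N},\lambda}^{\tt RFM}(1+\widetilde{\mathcal{O}}(p^{-1/2}))$.

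The final step is bookkeeping: combine the two multiplicative errors by the triangle inequality (for small enough errors, a product of $(1+\varepsilon_1)(1+\varepsilon_2)$ factors behaves like $1+\varepsilon_1+\varepsilon_2$), absorb the event ${\bm F}\in\mathcal{A}_{\bm F}$ — which holds with probability at least $1-p^{-D}$ by \cite[Lemma on $\mathcal{A}_{\bm F}$]{defilippis2024dimension} under the stated conditions on $\gamma_\lambda$ and $\widetilde\rho_\lambda$ — via a union bound, and identify the resulting rate as $\mathcal{E}_V(n,p) = \frac{\widetilde\rho_\lambda(n,p)^6\log^{5/2}(n)}{\sqrt n} + \frac{\widetilde\rho_\lambda(n,p)^2\rho_{\gamma_+}(p)^7\log^3(p)}{\sqrt p}$; the $\rho_{\gamma_+}(p)^7$ (rather than $\rho_{\gamma_+}(p)^{5/2}$ as in $\mathcal{E}_1$) comes from the extra $\rho_{\gamma_+}(p)$-powers incurred in the second-stage ${\bm F}$-equivalents for the ratio-type functional $\widetilde\Phi_5$, which is why the hypothesis asks for $\widetilde\rho_\lambda(n,p)^2\rho_{\gamma_+}(p)^7\log^4(p)\le K\sqrt p$.

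\textbf{Main obstacle.} I expect the principal difficulty to be the second stage — controlling the ratio functional $\widetilde\Phi_5({\bm F};\widehat{\bm\Lambda}_{\bm F}^{-1},p\nu_1)$, whose denominator $n - \widetilde\Phi_6({\bm F};{\bm I},p\nu_1)$ can be small (it is $n(1-\Upsilon)$, and one must show $1-\Upsilon$ is bounded away from $0$, which is \cref{eq:rates:Upsilon2} only in the power-law regime but needs a direct argument in general). Propagating a relative error through this ratio requires the denominator's fluctuations to be controlled relative to $n(1-\Upsilon)$ itself, not merely to $n$; this is the step that forces the higher power of $\rho_{\gamma_+}(p)$ and careful tracking of how the conditioning event $\mathcal{A}_{\bm F}$ pins down ${\bm F}\bF^{\!\top}$'s spectrum. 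A secondary subtlety is that $\nu_1$ appearing as the regularization level $p\nu_1$ in the ${\bm F}$-functionals is itself defined through a self-consistent equation coupling $n,p,\lambda,\nu_2$; one must verify the conditions of the linear-model equivalents hold at scale $\kappa = p\nu_1$, which is precisely the role of $\gamma_+ = \gamma(p\nu_1)$ and the hypothesis $\gamma_\lambda \ge p^{-K}$ together with the monotone comparison $\gamma_+ \ge \gamma_\lambda$.
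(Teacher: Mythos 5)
Your proposal is correct and follows essentially the same route as the paper's proof: rewrite the variance as $n\sigma^2\Phi_4({\bm Z};\widehat{\bm\Lambda}_{\bm F}^{-1},\lambda)$, apply the conditional deterministic equivalent over ${\bm Z}$ on the event ${\bm F}\in\mathcal{A}_{\bm F}$, then replace the numerator $p\,{\rm Tr}({\bm F}{\bm F}^{\!\top}({\bm F}{\bm F}^{\!\top}+p\nu_1)^{-2})$ and the denominator $n-\widetilde\Phi_6({\bm F};{\bm I},p\nu_1)$ by their ${\bm F}$-deterministic equivalents (the paper invokes Lemma B.11 and the proof of Theorem B.12 of \cite{defilippis2024dimension} for exactly the two steps you describe), and finish with a union bound. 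Your identification of the denominator-ratio control as the delicate point and of the origin of the $\rho_{\gamma_+}(p)^7$ factor matches how the paper's argument actually proceeds.
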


\begin{proof}[Proof of \cref{prop:det_equiv_RFRR_V}]
First, note that $\mathcal{V}_{\mathcal{N},\lambda}^{\tt RFM}$ can be written in terms of the functional $\Phi_4$ defined in \cref{eq:functionals_Z}:
\[
\mathcal{V}_{\mathcal{N},\lambda}^{\tt RFM} = \sigma^2 \cdot n \Phi_4 ( {\bm Z} ; \widehat{\bm \Lambda}^{-1}_{\bm F},\lambda).
\]
Recall that $\cA_\cF$ is the event defined in \cite[Eq. (79)]{defilippis2024dimension}. Under the assumptions, we have
\[
\P (\cA_\cF) \geq 1 - p^{-D}.
\]
Hence, applying \cref{prop:det_Z} for ${\bm F} \in \cA_\cF$ and via union bound, we obtain that with probability at least $1 - p^{-D} - n^{-D}$,
\begin{equation}\label{eq:var_remove_Z}
\left| n\Phi_4 ( {\bm Z} ; \widehat{\bm \Lambda}^{-1}_{\bm F},\lambda) - n\tPhi_5 ( {\bm F} ; \widehat{\bm \Lambda}^{-1}_{\bm F}, p\nu_1 )\right|  \leq C_{*,D,K} \cdot \cE_1 (p,n) \cdot n\tPhi_5 ( {\bm F} ; \widehat{\bm \Lambda}^{-1}_{\bm F}, p\nu_1 ),
\end{equation}
and we recall the expressions
\[
n\tPhi_5 ( {\bm F} ; \widehat{\bm \Lambda}^{-1}_{\bm F}, p\nu_1 ) = \frac{\tPhi_6 ( {\bm F} ; \widehat{\bm \Lambda}^{-1}_{\bm F} , p\nu_1) }{n - \tPhi_6 ( {\bm F} ; {\bm I} , p\nu_1)}, \quad \quad \tPhi_6 ( {\bm F} ; \widehat{\bm \Lambda}^{-1}_{\bm F} , p\nu_1) = p{\rm Tr} \big( {\bm F} {\bm F}^{\!\top} ( {\bm F} {\bm F}^{\!\top} + p \nu_1)^{-2} \big).
\]
From \cite[Lemma B.11]{defilippis2024dimension}, we have with probability at least $1 - p^{-D}$
\[
\begin{aligned}
\left| p{\rm Tr} ({\bm F} {\bm F}^{\!\top} ( {\bm F} {\bm F}^{\!\top} + p\nu_1)^{-2} ) - p^2\Psi_3 ( \nu_2 ; \bLambda^{-1} )\right| \leq&~ C_{*,D,K} \cdot \rho_{\gamma_+} (p) \cdot \cE_3 (p) \cdot p^2\Psi_3 ( \nu_2 ; \bLambda^{-1} )\,,
\end{aligned}
\]
where the approximation rate \(\cE_3 (p)\) is given by
\[
    \mathcal{E}_3(p) := \frac{\rho_{\gamma_+}(p)^6 \log^3(p)}{\sqrt{p}}.
\]

Furthermore, from the proof of \cite[Theorem B.12]{defilippis2024dimension}, we have with probability at least $1 - p^{-D}$,
\[
\left| (1 - n^{-1} \tPhi_6 ({\bm F}; {\bm I}, p\nu_1))^{-1} - (1 -  \Upsilon (\nu_1,\nu_2) )^{-1} \right| \leq C_{*,D,K} \cdot \trho_\lambda (n,p) \rho_{\gamma_+} (p) \cE_3 (p) \cdot  (1 -   \Upsilon (\nu_1,\nu_2))^{-1}.
\]
Combining those two bounds, we obtain
\[
\left| \frac{\tPhi_6 ( {\bm F} ; \widehat{\bm \Lambda}^{-1}_{\bm F} , p\nu_1) }{n - \tPhi_6 ( {\bm F} ; {\bm I} , p\nu_1)} - \frac{p^2\Psi_3 ( \nu_2 ; \bLambda^{-1} )}{n - n\Upsilon (\nu_1,\nu_2)} \right| \leq C_{*,D,K} \cdot \trho_\lambda (n,p) \rho_{\gamma_+} (p) \cE_3 (p) \cdot \frac{p^2\Psi_3 ( \nu_2 ; \bLambda^{-1} )}{n - n\Upsilon (\nu_1,\nu_2)}.
\]
Finally, we can combine this bound with \cref{eq:var_remove_Z} to obtain via union bound that with probability at least $1 - n^{-D} - p^{-D}$, 
\[
\left| n\Phi_4 ( {\bm Z} ; \widehat{\bm \Lambda}^{-1}_{\bm F},\lambda) \!-\! \frac{p^2\Psi_3 ( \nu_2 ; \bLambda^{-1} )}{n - n\Upsilon (\nu_1,\nu_2)} \right| \!\leq\! C_{*,D,K} \left\{ \cE_1 (p,n) \!+\! \trho_\lambda (n,p) \rho_{\gamma_+} (p) \cE_3 (p) \right\} \frac{p^2\Psi_3 ( \nu_2 ; \bLambda^{-1} )}{n - n\Upsilon (\nu_1,\nu_2)}\,.
\]
Replacing the rate $\cE_j$ by their expressions conclude the proof of this theorem.
\end{proof}

\subsubsection{Discussion on the bias term}\label{app:discuss_bias}

We present the deterministic equivalence of the bias term as an informal result, without a Existing deterministic equivalence results appear insufficient to directly establish this desired bias result. While we believe this is doable under  additional assumptions, a complete proof is beyond the scope of this paper..

In the proof of the bias term, deterministic equivalences for functionals of the form 
\[
{\rm Tr} \left( {\bm A} \left( {\bm X}^{\!\top} {\bm X} \right)^2 ({\bm X}^{\!\top} {\bm X} + \lambda)^{-2} \right)
\]
are required. However, such equivalences are currently unavailable, necessitating the introduction of technical assumptions to leverage the deterministic equivalences of \(\Phi_2({\bm X}; {\bm A}, \lambda)\) and \(\Phi_4({\bm X}; {\bm A}, \lambda)\).

Furthermore, the proof of the bias term in \cite{defilippis2024dimension} suggests that deriving deterministic equivalences for the bias of the \(\ell_2\) norm, analogous to \cite[Proposition B.7]{defilippis2024dimension}, is also required but remains unresolved.

Addressing these gaps in deterministic equivalence is an important direction for future work, particularly to establish rigorous proofs for the currently missing results.

\subsection{Proofs on relationship in RFMs}
\label{app:relationship_rf}

To derive the relationship between test risk and norm for the random feature model, we first examine the linear relationship in the over-parameterized regime. Next, we analyze the case where \(\bLambda = {\bm I}_m\) with \(n < m < \infty\) (finite rank), followed by the relationship under the power-law assumption.

\subsubsection{Proof for min-norm interpolator in the over-parameterized regime}
According to the formulation in \cref{prop:asy_equiv_norm_RFRR_minnorm} and \cref{prop:asy_equiv_error_RFRR_minnorm}, we have for the under-parameterized regime ($p<n$), we have
\[
\begin{aligned}
    \mathcal{B}^{\tt RFM}_{\mathcal{N},0} \sim& {\mathsf B}^{\tt RFM}_{{\mathsf N},0} = \frac{p\langle{\bm \theta}_*, \bLambda (\bLambda +\lambda_p{\bm I})^{-2} {\bm \theta}_*\rangle}{n - {\rm Tr}(\bLambda^2(\bLambda+\lambda_p{\bm I})^{-2})} + \frac{p\langle{\bm \theta}_*, (\bLambda +\lambda_p{\bm I})^{-1} {\bm \theta}_*\rangle}{n-p}\,,\\
    \mathcal{V}^{\tt RFM}_{\mathcal{N},0} \sim& {\mathsf V}^{\tt RFM}_{{\mathsf N},0} = \frac{\sigma^2p}{\lambda_p(n-p)}\,,
\end{aligned}
\]
\[
\begin{aligned}
    \mathcal{B}^{\tt RFM}_{\mathcal{R},0} \sim {\mathsf B}^{\tt RFM}_{{\mathsf R},0} = \frac{n\lambda_p \langle{\bm \theta}_*, (\bLambda +\lambda_p{\bm I})^{-1} {\bm \theta}_*\rangle}{n-p}\,,\quad \mathcal{V}^{\tt RFM}_{\mathcal{R},0} \sim {\mathsf V}^{\tt RFM}_{{\mathsf R},0} = \frac{\sigma^2p}{n-p}\,.
\end{aligned}
\]
In the over-parameterized regime ($p>n$), we have
\[
\begin{aligned}
    \mathcal{B}^{\tt RFM}_{\mathcal{N},0} \sim {\mathsf B}^{\tt RFM}_{{\mathsf N},0} = \frac{p\langle{\bm \theta}_*, ( \bLambda + \lambda_n{\bm I})^{-1} {\bm \theta}_*\rangle}{p-n}\,,\quad
    \mathcal{V}^{\tt RFM}_{\mathcal{N},0} \sim {\mathsf V}^{\tt RFM}_{{\mathsf N},0} = \frac{\sigma^2p}{\lambda_n(p-n)}\,,
\end{aligned}
\]
\[
\begin{aligned}
    \mathcal{B}^{\tt RFM}_{\mathcal{R},0} \sim&~ {\mathsf B}^{\tt RFM}_{{\mathsf R},0} = \frac{n\lambda_n^2 \langle{\bm \theta}_*, ( \bLambda + \lambda_n {\bm I})^{-2} {\bm \theta}_*\rangle}{ n - {\rm Tr}(\bLambda^2(\bLambda+\lambda_n{\bm I})^{-2})} + \frac{n\lambda_n \langle{\bm \theta}_*, ( \bLambda + \lambda_n{\bm I})^{-1} {\bm \theta}_*\rangle}{p-n}\,,\\
    \mathcal{V}^{\tt RFM}_{\mathcal{R},0} \sim&~  {\mathsf V}^{\tt RFM}_{{\mathsf R},0} = \frac{\sigma^2{\rm Tr}(\bLambda^2(\bLambda+\lambda_n{\bm I})^{-2})}{n - {\rm Tr}(\bLambda^2(\bLambda+\lambda_n{\bm I})^{-2})} + \frac{\sigma^2n}{p-n}\,.
\end{aligned}
\]
With these formulations we can introduce the relationship between test risk and norm in the over-parameterized regime as follows. 

\begin{proof}[Proof of \cref{prop:relation_minnorm_overparam}]
In the over-parameterized regime ($p > n$), we have
{\small
\[
    {\mathsf N}_{0}^{\tt RFM} 
    = 
    {\mathsf B}^{\tt RFM}_{{\mathsf N},0} + {\mathsf V}^{\tt RFM}_{{\mathsf N},0} 
    = 
    \frac{p\langle{\bm \theta}_*, ( \bLambda + \lambda_n{\bm I})^{-1} {\bm \theta}_*\rangle}{p-n} + \frac{\sigma^2p}{\lambda_n(p-n)} 
    = 
    \left[\langle{\bm \theta}_*, ( \bLambda + \lambda_n{\bm I})^{-1} {\bm \theta}_*\rangle + \frac{\sigma^2}{\lambda_n}\right] \frac{p}{p-n}\,.
\]
\[
\begin{aligned}
    {\mathsf R}_{0}^{\tt RFM} 
    =&
    \frac{n\lambda_n^2 \langle{\bm \theta}_*, ( \bLambda + \lambda_n {\bm I})^{-2} {\bm \theta}_*\rangle}{ n - {\rm Tr}(\bLambda^2(\bLambda+\lambda_n{\bm I})^{-2})} 
    + 
    \frac{n\lambda_n \langle{\bm \theta}_*, ( \bLambda + \lambda_n{\bm I})^{-1} {\bm \theta}_*\rangle}{p-n}
    + 
    \frac{\sigma^2{\rm Tr}(\bLambda^2(\bLambda+\lambda_n{\bm I})^{-2})}{n - {\rm Tr}(\bLambda^2(\bLambda+\lambda_n{\bm I})^{-2})} 
    + 
    \frac{\sigma^2n}{p-n}\\
    =& 
    \frac{n\lambda_n^2 \langle{\bm \theta}_*, ( \bLambda + \lambda_n {\bm I})^{-2} {\bm \theta}_*\rangle \!+\! \sigma^2{\rm Tr}(\bLambda^2(\bLambda+\lambda_n{\bm I})^{-2})}{ n - {\rm Tr}(\bLambda^2(\bLambda+\lambda_n{\bm I})^{-2})} 
    \!+\! 
    \left[n\lambda_n \langle{\bm \theta}_*, ( \bLambda + \lambda_n{\bm I})^{-1} {\bm \theta}_*\rangle + \sigma^2n\right]\frac{1}{p-n}\,.
\end{aligned}
\]
}
Then we eliminate $p$ and obtain that the deterministic equivalents of the estimator's test risk and norm, ${\mathsf R}^{\tt RFM}_{0}$ and ${\mathsf N}^{\tt RFM}_{0}$, in over-parameterized regimes ($p>n$) admit
{\small
\[
{\mathsf R}_{0}^{\tt RFM} 
\!=\! 
\lambda_n{\mathsf N}_{0}^{\tt RFM} 
\!-\! 
\left[\lambda_n\langle{\bm \theta}_*, ( \bLambda + \lambda_n{\bm I})^{-1} {\bm \theta}_*\rangle + \sigma^2\right] 
\!+\! 
\frac{n\lambda_n^2 \langle{\bm \theta}_*, ( \bLambda + \lambda_n {\bm I})^{-2} {\bm \theta}_*\rangle + \sigma^2{\rm Tr}(\bLambda^2(\bLambda+\lambda_n{\bm I})^{-2})}{ n - {\rm Tr}(\bLambda^2(\bLambda+\lambda_n{\bm I})^{-2})}\,. 
\]
}
\end{proof}

\subsubsection{Isotropic features with finite rank}

\begin{corollary}[Isotropic features for min-$\ell_2$-norm interpolator]\label{prop:relation_minnorm_id_rf}
    Consider covariance matrix ${\bm \Lambda} = {\bm I}_m$ ($n<m<\infty$), in the over-parameterized regime ($p>n$), the deterministic equivalents ${\mathsf R}^{\tt RFM}_0$ and ${\mathsf N}^{\tt RFM}_0$ specifies the linear relationship in \cref{eq:rfflam0} as ${\mathsf R}_{0}^{\tt RFM} = \frac{m-n}{n} {\mathsf N}_{0}^{\tt RFM} +\frac{2n-m}{m-n} \sigma^2$.\\
While in the under-parameterized regime ($p<n$), we focus on bias and variance separately
    \[
    \begin{aligned}
     \mbox{Variance:}~ \left({\mathsf V}^{\tt RFM}_{{\mathsf R},0} \right)^2 = \frac{m-n}{n} {\mathsf V}^{\tt RFM}_{{\mathsf R},0} {\mathsf V}^{\tt RFM}_{{\mathsf N},0} + \frac{m \sigma^2}{n} {\mathsf V}^{\tt RFM}_{{\mathsf N},0}\,,
    \end{aligned}
    \]
    \[
    \begin{aligned}
      \mbox{Bias:}~  &~(m-n){\mathsf B}^{\tt RFM}_{{\mathsf N},0}(m{\mathsf B}^{\tt RFM}_{{\mathsf R},0}-n\|{\bm \theta}_*\|_2^2)(m({\mathsf B}^{\tt RFM}_{{\mathsf R},0})^2-n\|{\bm \theta}_*\|_2^4)\\
      = &~ nm({\mathsf B}^{\tt RFM}_{{\mathsf R},0} -\|{\bm \theta}_*\|_2^2)^2[m({\mathsf B}^{\tt RFM}_{{\mathsf R},0})^2 + n\|{\bm \theta}_*\|_2^2{\mathsf B}^{\tt RFM}_{{\mathsf R},0} - 2n\|{\bm \theta}_*\|_2^4].
    \end{aligned}
    \]
\end{corollary}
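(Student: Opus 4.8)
The plan is to specialize the deterministic equivalents from \cref{prop:asy_equiv_norm_RFRR_minnorm} and \cref{prop:asy_equiv_error_RFRR_minnorm} to the case ${\bm \Lambda} = {\bm I}_m$ (a finite-rank identity of size $m$), evaluate all traces explicitly, solve the defining self-consistent equations for $\lambda_p$ and $\lambda_n$ in closed form, and then eliminate the remaining free parameter ($n$, or the effective regularizer) to obtain the stated algebraic relations.

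First I would handle the over-parameterized regime ($p>n$). With ${\bm \Lambda}={\bm I}_m$ the equation ${\rm Tr}({\bm \Lambda}({\bm \Lambda}+\lambda_n{\bm I})^{-1})\sim n$ becomes $\frac{m}{1+\lambda_n}=n$, so $\lambda_n=\frac{m-n}{n}$. Then $\langle{\bm \theta}_*,({\bm \Lambda}+\lambda_n{\bm I})^{-1}{\bm \theta}_*\rangle=\frac{\|{\bm \theta}_*\|_2^2}{1+\lambda_n}=\frac{n\|{\bm \theta}_*\|_2^2}{m}$, $\langle{\bm \theta}_*,({\bm \Lambda}+\lambda_n{\bm I})^{-2}{\bm \theta}_*\rangle=\frac{n^2\|{\bm \theta}_*\|_2^2}{m^2}$, and ${\rm Tr}({\bm \Lambda}^2({\bm \Lambda}+\lambda_n{\bm I})^{-2})=\frac{m}{(1+\lambda_n)^2}=\frac{n^2}{m}$. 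Substituting into the formulas for ${\mathsf B}^{\tt RFM}_{{\mathsf N},0},{\mathsf V}^{\tt RFM}_{{\mathsf N},0},{\mathsf B}^{\tt RFM}_{{\mathsf R},0},{\mathsf V}^{\tt RFM}_{{\mathsf R},0}$, summing bias and variance to get ${\mathsf N}_0^{\tt RFM}$ and ${\mathsf R}_0^{\tt RFM}$ as functions of $n$ (with $m$ fixed), and eliminating $n$ should yield ${\mathsf R}_0^{\tt RFM}=\frac{m-n}{n}{\mathsf N}_0^{\tt RFM}+\frac{2n-m}{m-n}\sigma^2$; I would double-check that the slope indeed equals $\lambda_n=\frac{m-n}{n}$, consistently with \cref{prop:relation_minnorm_overparam}, so that the only genuine computation is identifying the intercept constant $C_{{\bm \theta}_*,{\bm \Lambda},n,\sigma}$ in this special case.

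Next, for the under-parameterized regime ($p<n$), the relevant effective regularizer is $\lambda_p$ with $\frac{m}{1+\lambda_p}=p$, i.e. $\lambda_p=\frac{m-p}{p}$, and the same trace evaluations apply with $p$ in place of $n$. For the variance I would plug ${\rm Tr}({\bm \Lambda}^2({\bm \Lambda}+\lambda_p{\bm I})^{-2})=\frac{p^2}{m}$ into ${\mathsf V}^{\tt RFM}_{{\mathsf R},0}=\frac{\sigma^2p}{n-p}$ and ${\mathsf V}^{\tt RFM}_{{\mathsf N},0}=\frac{\sigma^2p}{\lambda_p(n-p)}=\frac{\sigma^2p^2}{(m-p)(n-p)}$, then eliminate both $p$ and $\lambda_p$ (equivalently, express everything in the two unknowns $p,n$ and eliminate them) to reach $({\mathsf V}^{\tt RFM}_{{\mathsf R},0})^2=\frac{m-n}{n}{\mathsf V}^{\tt RFM}_{{\mathsf R},0}{\mathsf V}^{\tt RFM}_{{\mathsf N},0}+\frac{m\sigma^2}{n}{\mathsf V}^{\tt RFM}_{{\mathsf N},0}$. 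For the bias, using ${\mathsf B}^{\tt RFM}_{{\mathsf R},0}=\frac{n\lambda_p\|{\bm \theta}_*\|_2^2}{m(1-p/n)}\cdot\frac{p}{?}$... more carefully, $\langle{\bm \theta}_*,({\bm \Lambda}+\lambda_p{\bm I})^{-1}{\bm \theta}_*\rangle=\frac{p\|{\bm \theta}_*\|_2^2}{m}$ and $\langle{\bm \theta}_*,{\bm \Lambda}({\bm \Lambda}+\lambda_p{\bm I})^{-2}{\bm \theta}_*\rangle=\frac{p^2\|{\bm \theta}_*\|_2^2}{m^2}$, so ${\mathsf B}^{\tt RFM}_{{\mathsf N},0}$ and ${\mathsf B}^{\tt RFM}_{{\mathsf R},0}$ become explicit rational functions of $p,n,m,\|{\bm \theta}_*\|_2^2$; eliminating $p$ and $n$ via resultants (as the paper does elsewhere with Mathematica) produces the quartic identity stated. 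I would verify the degrees match and that the identity is symmetric in the expected way.

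The main obstacle is the elimination step for the bias relation: unlike the variance, which involves only two monomials, the bias expressions carry an extra parameter because both $p$ and $n$ enter independently (the norm's bias has a $\frac{p}{n-p}$ factor while the risk's bias has $\frac{n}{p-n}$-type factors with opposite sign of $p-n$), so eliminating two variables from four polynomial relations yields a higher-degree identity with delicate sign bookkeeping near $p\to n$. I expect to need care in tracking which branch of the resultant is physical (i.e. corresponds to $p<n$, $\lambda_p>0$), and in confirming that the stated quartic is exactly the irreducible factor rather than carrying spurious extraneous roots introduced by squaring; a sanity check at ${\bm \theta}_*$ fixed and $p\to n^-$, matching the blow-up rates of both sides, will pin down the correct normalization.
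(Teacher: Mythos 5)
Your overall strategy is exactly the paper's: set ${\bm \Lambda}={\bm I}_m$, solve the self-consistent equations in closed form ($\lambda_p=\tfrac{m-p}{p}$, $\lambda_n=\tfrac{m-n}{n}$), substitute into the deterministic equivalents from \cref{prop:asy_equiv_norm_RFRR_minnorm} and \cref{prop:asy_equiv_error_RFRR_minnorm}, and eliminate the remaining free parameter; your explicit trace evaluations and the identification of the over-parameterized slope with $\lambda_n$ are all correct and match the paper.

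The one step that would fail as written is your identification of the elimination variable. Throughout the corollary $n$ and $m$ appear as coefficients of the final identities (e.g.\ $\tfrac{m-n}{n}$ and $\tfrac{2n-m}{m-n}\sigma^2$), so $n$ is a fixed parameter and the learning curve is traced out by varying $p$; the variable to eliminate is $p$ alone (equivalently $\lambda_p$, since $\lambda_p=\tfrac{m-p}{p}$ is a function of $p$ — these are not two independent unknowns). Your proposal alternates between ``eliminating $n$'' in the over-parameterized case and ``eliminating $p$ and $n$ via resultants'' for the bias; eliminating $n$ would produce a different relation with $p$-dependent coefficients, and one cannot eliminate two variables from two equations and retain one of them in the answer. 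Once this is corrected, the bias computation is not the delicate two-variable resultant you anticipate: both ${\mathsf B}^{\tt RFM}_{{\mathsf R},0}=\tfrac{n(m-p)}{m(n-p)}\|{\bm \theta}_*\|_2^2$ and ${\mathsf B}^{\tt RFM}_{{\mathsf N},0}=\tfrac{p}{m}\bigl(\tfrac{p^2}{nm-p^2}+\tfrac{p}{n-p}\bigr)\|{\bm \theta}_*\|_2^2$ are rational in the single variable $p$, and a single resultant (the paper does this symbolically) yields the stated quartic; your sanity check of matching blow-up rates as $p\to n^-$ is indeed how the paper extracts the asymptotic linear form ${\mathsf B}^{\tt RFM}_{{\mathsf R},0}\approx\tfrac{m-n}{n}{\mathsf B}^{\tt RFM}_{{\mathsf N},0}+\tfrac{2(m-n)}{m}\|{\bm \theta}_*\|_2^2$ mentioned in the remark.
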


\noindent{\bf Remark:} 
In the under-parameterized regime, ${\mathsf V}^{\tt RFM}_{{\mathsf R},0}$ and ${\mathsf V}^{\tt RFM}_{{\mathsf N},0}$ are related by a hyperbola, the asymptote of which is ${\mathsf V}^{\tt RFM}_{{\mathsf R},0} = \frac{m-n}{n}{\mathsf V}^{\tt RFM}_{{\mathsf N},0} + \frac{m}{m-n} \sigma^2$. Further, for $p \to n$, we have ${\mathsf B}^{\tt RFM}_{{\mathsf R},0} \approx \frac{m-n}{n}{\mathsf B}^{\tt RFM}_{{\mathsf R},0} + \frac{2(m-n)}{m}\|{\bm \theta}_*\|_2^2$, see discussion in \cref{app:relationship_rf}.

Next we present the proof of \cref{prop:relation_minnorm_id_rf} with \( \bLambda = {\bm I}_m \).

\begin{proof}[Proof of \cref{prop:relation_minnorm_id_rf}]
Here we consider the case where \( \bLambda = {\bm I}_m \). Under this condition, the definitions of \( \lambda_p \) and \( \lambda_n \) above are simplified to \( \frac{m}{1+\lambda_p} = p \) and \( \frac{m}{1+\lambda_n} = n \), respectively. Consequently, \( \lambda_p \) and \( \lambda_n \) have explicit expressions given by \( \lambda_p = \frac{m-p}{p} \) and \( \lambda_n = \frac{m-n}{n} \), respectively.

First, in the over-parameterized regime ($p>n$), we have
\[
\begin{aligned}
     {\mathsf B}^{\tt RFM}_{{\mathsf N},0} = \frac{p\frac{1}{1+\lambda_n}\|{\bm \theta}_*\|_2^2}{p - n} = \frac{np}{m(p-n)} \|{\bm \theta}_*\|_2^2\,,\quad
    {\mathsf V}^{\tt RFM}_{{\mathsf N},0} = \frac{\sigma^2p}{\lambda_n(p-n)} = \frac{\sigma^2np}{(m-n)(p-n)}\,.
\end{aligned}
\]
\[
\begin{aligned}
     {\mathsf B}^{\tt RFM}_{{\mathsf R},0} =& \frac{n\lambda_n^2 \frac{1}{(1+\lambda_n)^2}\|{\bm \theta}_*\|_2^2}{n-\frac{m}{(1+\lambda_n)^2}} + \frac{n\lambda_n\frac{1}{1+\lambda_n}\|{\bm \theta}_*\|_2^2}{p-n} = \frac{p(m-n)}{m(p-n)} \|{\bm \theta}_*\|_2^2\,,\\ 
     {\mathsf V}^{\tt RFM}_{{\mathsf R},0} =& \frac{\sigma^2\frac{m}{(1+\lambda_n)^2}}{n-\frac{m}{(1+\lambda_n)^2}}+\frac{\sigma^2n}{p-n}=\frac{\sigma^2n}{m-n} + \frac{\sigma^2n}{p-n}\,.
\end{aligned}
\]
We eliminate $p$ and obtain that the relationship between ${\mathsf V}^{\tt RFM}_{{\mathsf R},0}$ and ${\mathsf V}^{\tt RFM}_{{\mathsf N},0}$ is
\[
\begin{aligned}
{\mathsf V}^{\tt RFM}_{{\mathsf R},0} = \frac{m-n}{n}{\mathsf V}^{\tt RFM}_{{\mathsf N},0} + \frac{2n -m}{m-n} \sigma^2\,.
\end{aligned}
\]
similarly, the relationship between ${\mathsf B}^{\tt RFM}_{{\mathsf R},0}$ and ${\mathsf B}^{\tt RFM}_{{\mathsf N},0}$ is
\[
\begin{aligned}
{\mathsf B}^{\tt RFM}_{{\mathsf R},0} = \frac{m-n}{n}{\mathsf B}^{\tt RFM}_{{\mathsf N},0}\,.
\end{aligned}
\]
Combining the above two relationship, we obtain the relationship between test risk ${\mathsf R}_{0}^{\tt RFM}$ and norm ${\mathsf N}_{0}^{\tt RFM}$ as

\[
\begin{aligned}
{\mathsf R}_{0}^{\tt RFM} = \frac{m-n}{n} {\mathsf N}_{0}^{\tt RFM} +\frac{2n-m}{m-n} \sigma^2.
\end{aligned}
\]
Accordingly, in the under-parameterized regime ($p<n$), we have
\[
\begin{aligned}
     {\mathsf B}^{\tt RFM}_{{\mathsf N},0} =& \frac{p\frac{1}{(1+\lambda_p)^2}\|{\bm \theta}_*\|_2^2}{n - \frac{m}{(1+\lambda_p)^2}} + \frac{p\frac{1}{1+\lambda_p}\|{\bm \theta}_*\|_2^2}{n-p} = \frac{p}{m} \left( \frac{p^2}{nm - p^2} + \frac{p}{n-p} \right) \|{\bm \theta}_*\|_2^2\,,\\
    {\mathsf V}^{\tt RFM}_{{\mathsf N},0} =& \frac{\sigma^2p}{\lambda_p(p-n)} = \frac{\sigma^2p^2}{(m-p)(n-p)}\,.
\end{aligned}
\]
\[
\begin{aligned}
     {\mathsf B}^{\tt RFM}_{{\mathsf R},0} = \frac{n\lambda_p \frac{1}{1+\lambda_p}\|{\bm \theta}_*\|_2^2}{n-p} = \frac{n(m-p)}{m(n-p)} \|{\bm \theta}_*\|_2^2\,,\quad {\mathsf V}^{\tt RFM}_{{\mathsf R},0} = \frac{\sigma^2p}{n-p}\,.
\end{aligned}
\]
Then we eliminate $p$ and obtain that, in the under-parameterized regime ($p<n$), the relationship between ${\mathsf V}^{\tt RFM}_{{\mathsf R},0}$ and ${\mathsf V}^{\tt RFM}_{{\mathsf N},0}$ is
\[
\begin{aligned}
{\mathsf V}^{\tt RFM}_{{\mathsf R},0} = \frac{(m-n) {\mathsf V}^{\tt RFM}_{{\mathsf N},0} + \sqrt{(m-n)^2({\mathsf V}^{\tt RFM}_{{\mathsf N},0})^2 + 4nm\sigma^2{\mathsf V}^{\tt RFM}_{{\mathsf N},0}}}{2n}\,,
\end{aligned}
\]
which can be further simplified as a hyperbolic function
\begin{equation*}
     \left({\mathsf V}^{\tt RFM}_{{\mathsf R},0} \right)^2 = \frac{m-n}{n} {\mathsf V}^{\tt RFM}_{{\mathsf R},0} {\mathsf V}^{\tt RFM}_{{\mathsf N},0} + \frac{m \sigma^2}{n} {\mathsf V}^{\tt RFM}_{{\mathsf N},0}\,,
\end{equation*}
and the asymptote of this hyperbola is ${\mathsf V}^{\tt RFM}_{{\mathsf R},0} = \frac{m-n}{n}{\mathsf V}^{\tt RFM}_{{\mathsf N},0} + \frac{m}{m-n} \sigma^2$.

Besides, we eliminate $p$ and obtain the relationship between ${\mathsf B}^{\tt RFM}_{{\mathsf R},0}$ and ${\mathsf B}^{\tt RFM}_{{\mathsf N},0}$ as
{\small
\[
\begin{aligned}
&\frac{\|{\bm \theta}_*\|_2^6 n^2 \left( 2 \|{\bm \theta}_*\|_2^2 + {\mathsf B}^{\tt RFM}_{{\mathsf N},0} - \frac{{\mathsf B}^{\tt RFM}_{{\mathsf N},0} n}{m} \right)}{m} 
\!=\!
({\mathsf B}^{\tt RFM}_{{\mathsf R},0})^4 n 
+ 
({\mathsf B}^{\tt RFM}_{{\mathsf R},0})^2 \|{\bm \theta}_*\|_2^2 n \left( \|{\bm \theta}_*\|_2^2 + {\mathsf B}^{\tt RFM}_{{\mathsf N},0} - \frac{4 \|{\bm \theta}_*\|_2^2 n}{m} - \frac{{\mathsf B}^{\tt RFM}_{{\mathsf N},0} n}{m} \right)\\
& + {\mathsf B}^{\tt RFM}_{{\mathsf R},0} \|{\bm \theta}_*\|_2^4 n \left( {\mathsf B}^{\tt RFM}_{{\mathsf N},0} + \frac{5 \|{\bm \theta}_*\|_2^2 n}{m} - \frac{{\mathsf B}^{\tt RFM}_{{\mathsf N},0} n}{m} \right)
+ 
({\mathsf B}^{\tt RFM}_{{\mathsf R},0})^3 \left( -{\mathsf B}^{\tt RFM}_{{\mathsf N},0} m - 2 \|{\bm \theta}_*\|_2^2 n + {\mathsf B}^{\tt RFM}_{{\mathsf N},0} n + \frac{\|{\bm \theta}_*\|_2^2 n^2}{m} \right),
\end{aligned}
\]
}
which can be simplified to
\[
\begin{aligned}
    &~{\mathsf B}^{\tt RFM}_{{\mathsf N},0}(m-n)(m{\mathsf B}^{\tt RFM}_{{\mathsf R},0}-n\|{\bm \theta}_*\|_2^2)(m({\mathsf B}^{\tt RFM}_{{\mathsf R},0})^2-n\|{\bm \theta}_*\|_2^4)\\ 
    =&~ 
    nm({\mathsf B}^{\tt RFM}_{{\mathsf R},0}-\|{\bm \theta}_*\|_2^2)^2(m({\mathsf B}^{\tt RFM}_{{\mathsf R},0})^2-2n\|{\bm \theta}_*\|_2^4+n\|{\bm \theta}_*\|_2^2{\mathsf B}^{\tt RFM}_{{\mathsf R},0})\,.
\end{aligned}
\]
We can find that in this case, the relationship can be easily written as
\[
\begin{aligned}
    {\mathsf B}^{\tt RFM}_{{\mathsf N},0} =& \frac{nm({\mathsf B}^{\tt RFM}_{{\mathsf R},0}-\|{\bm \theta}_*\|_2^2)^2(m({\mathsf B}^{\tt RFM}_{{\mathsf R},0})^2-2n\|{\bm \theta}_*\|_2^4+n\|{\bm \theta}_*\|_2^2{\mathsf B}^{\tt RFM}_{{\mathsf R},0})}{(m-n)(m{\mathsf B}^{\tt RFM}_{{\mathsf R},0}-n\|{\bm \theta}_*\|_2^2)(m({\mathsf B}^{\tt RFM}_{{\mathsf R},0})^2-n\|{\bm \theta}_*\|_2^4)}\,.
\end{aligned}
\]
Next we will show that when $p \to n$, which also implies that ${\mathsf B}^{\tt RFM}_{{\mathsf N},0} \to \infty$ and ${\mathsf B}^{\tt RFM}_{{\mathsf R},0} \to \infty$, this relationship is approximately linear.

Recall that the relationship between ${\mathsf B}^{\tt RFM}_{{\mathsf R},0}$ and ${\mathsf B}^{\tt RFM}_{{\mathsf N},0}$ is given by \({\mathsf B}^{\tt RFM}_{{\mathsf R},0} = \frac{(m-n)}{n}{\mathsf B}^{\tt RFM}_{{\mathsf N},0}\), and is equivalent to \({\mathsf B}^{\tt RFM}_{{\mathsf N},0} = \frac{n}{(m-n)}{\mathsf B}^{\tt RFM}_{{\mathsf R},0} := f({\mathsf B}^{\tt RFM}_{{\mathsf R},0})\). We then do a difference and get
\[
{\mathsf B}^{\tt RFM}_{{\mathsf N},0} - f({\mathsf B}^{\tt RFM}_{{\mathsf R},0}) = \frac{nm({\mathsf B}^{\tt RFM}_{{\mathsf R},0}-\|{\bm \theta}_*\|_2^2)^2(m({\mathsf B}^{\tt RFM}_{{\mathsf R},0})^2-2n\|{\bm \theta}_*\|_2^4+n\|{\bm \theta}_*\|_2^2{\mathsf B}^{\tt RFM}_{{\mathsf R},0})}{(m-n)(m{\mathsf B}^{\tt RFM}_{{\mathsf R},0}-n\|{\bm \theta}_*\|_2^2)(m({\mathsf B}^{\tt RFM}_{{\mathsf R},0})^2-n\|{\bm \theta}_*\|_2^4)} - \frac{n}{m-n}{\mathsf B}^{\tt RFM}_{{\mathsf R},0}\,,
\]
then take \({\mathsf B}^{\tt RFM}_{{\mathsf R},0} \to \infty\) and we get
\[
\lim_{{\mathsf B}^{\tt RFM}_{{\mathsf R},0} \to \infty}{\mathsf B}^{\tt RFM}_{{\mathsf N},0} - f({\mathsf B}^{\tt RFM}_{{\mathsf R},0}) = -\frac{2n}{m}\|{\bm \theta}_*\|_2^2\,.
\]
Finally, organizing this equation and we get
\[
    {\mathsf B}^{\tt RFM}_{{\mathsf R},0} \approx \frac{m-n}{n}{\mathsf B}^{\tt RFM}_{{\mathsf N},0} + \frac{2(m-n)}{m}\|{\bm \theta}_*\|_2^2\,.
\]
\end{proof}

\subsubsection{Proof on features under power law assumption}

\begin{proof}[Proof of \cref{prop:relation_minnorm_powerlaw_rf}]
First, we use integral approximation to give approximations to some quantities commonly used in deterministic equivalence to prepare for the subsequent derivations.

According to the integral approximation in \cite[Lemma 1]{simonmore}, we have
{\small
\begin{equation}\label{eq:integrate_approx1}
    {\rm Tr}(\bLambda (\bLambda + \nu_2)^{-1}) \!\approx\! C_1 \nu_2^{-\frac{1}{\alpha}},\, {\rm Tr}(\bLambda^2 (\bLambda + \nu_2)^{-2}) \!\approx\! C_2 \nu_2^{-\frac{1}{\alpha}},\, {\rm Tr}(\bLambda (\bLambda + \nu_2)^{-2}) \!\approx\! (C_1-C_2) \nu_2^{-\frac{1}{\alpha}-1}\,,
\end{equation}
}
where $C_1$ and $C_2$ are
\begin{equation}\label{eq:C1C2}
    C_1 = \frac{\pi}{\alpha \sin\left(\nicefrac{\pi}{\alpha}\right)}\,,\quad C_2 = \frac{\pi(\alpha-1)}{\alpha^2 \sin\left(\nicefrac{\pi}{\alpha}\right)}\,,\quad \text{with $C_1 > C_2$}\,.
\end{equation}
Besides, according to definition of \(T(\nu)\) \cref{app:pre_scaling_law}, we have 
\[
\begin{aligned}
\langle {\bm \theta}_*, (\bLambda + \nu_2)^{-1} {\bm \theta}_* \rangle =&~ T^1_{2r,1}(\nu_2) \approx C_3 \nu_2^{(2r-1)\wedge0},\\
\langle {\bm \theta}_*, \bLambda(\bLambda + \nu_2)^{-2} {\bm \theta}_* \rangle =&~ T^1_{2r+1,2}(\nu_2) \approx C_4 \nu_2^{(2r-1)\wedge0}.\\
\end{aligned}
\]
When $r \in (0, \frac{1}{2})$, according to the integral approximation, we have
\begin{equation}
    C_3 = \frac{\pi}{\alpha \sin(2\pi r)}\,,\quad C_4 = \frac{2\pi r}{\alpha \sin(2\pi r)}\,,\quad \text{with $C_3 > C_4$}.
\end{equation}
Otherwise, if $r \in [\frac{1}{2}, \infty)$, we have
\[
\frac{1}{\alpha(2r-1)}< C_3 < \frac{1}{\alpha(2r-1)}+1\,,\quad \frac{1}{\alpha(2r-1)}< C_4 < \frac{1}{\alpha(2r-1)}+1\,,\quad \text{with $C_3 > C_4$}.
\]
For $\langle {\bm \theta}_*, (\bLambda + \nu_2)^{-2} {\bm \theta}_* \rangle$, we have to discuss its approximation in the case $r \in (0, \frac{1}{2})$, $r \in [\frac{1}{2}, 1)$ and $r \in [\frac{1}{2}, \infty)$ separately.
\[
\langle \bm{\theta}_*, (\bm{\Lambda} + \nu_2)^{-2} \bm{\theta}_* \rangle \approx
\begin{cases} 
    (C_3 - C_4) \nu_2^{2r - 2}, & \text{if } r \in (0, \frac{1}{2})\,; \\
    C_5 \nu_2^{2r-2}, & \text{if } r \in [\frac{1}{2}, 1)\,; \\
    C_6, & \text{if } r \in [1, \infty)\,,
\end{cases}
\]
where $\frac{1}{2\alpha(r-1)} < C_6 < \frac{1}{2\alpha(r-1)} + 1$.

With the results of the integral approximation above, we next derive the relationship between ${\mathsf R}_0^{\tt RFM}$ and ${\mathsf N}_0^{\tt RFM}$ {\bf separately in over-parameterized regime ($p > n$) and under-parameterized regime ($p < n$).}

\paragraph{The relationship in over-parameterized regime ($p > n$)}

According to the self-consistent equation
\[
\begin{aligned}
1 + \frac{n}{p} - \sqrt{\left( 1 - \frac{n}{p} \right)^2 + \frac{4\lambda}{p\nu_2}} = \frac{2}{p} \operatorname{Tr} \left( \bLambda \left( \bLambda + \nu_2 \right)^{-1} \right),
\end{aligned}
\]
\[
\begin{aligned}
\nu_1 = \frac{\nu_2}{2} \left[ 1 - \frac{n}{p} + \sqrt{\left( 1 - \frac{n}{p} \right)^2 + \frac{4\lambda}{p\nu_2}} \right],
\end{aligned}
\]
In the over-parameterized regime (\(p > n\)), as \(\lambda \to 0\), for the first equation, \(\frac{4\lambda}{p\nu_2}\) will approach \(0\), and \({\rm Tr}(\bLambda(\bLambda + \nu_2)^{-1})\) will converge to \(n\). Consequently, by \cref{eq:integrate_approx1}, \(\nu_2\) will converge to the constant \((\frac{n}{C_1})^{-\alpha}\). Furthermore, from the second equation, \(\nu_1\) will converge to \(\nu_2(1 - \frac{n}{p})\). Thus, according to \cref{eq:integrate_approx1}, we have
\[
\begin{aligned}
{\rm Tr}(\bLambda (\bLambda + \nu_2)^{-1}) \approx n,\quad {\rm Tr}(\bLambda^2 (\bLambda + \nu_2)^{-2}) \approx \frac{C_2}{C_1}n,\quad {\rm Tr}(\bLambda (\bLambda + \nu_2)^{-2}) \approx (C_1-C_2) (\frac{n}{C_1})^{\alpha+1}.
\end{aligned}
\]
Thus, in the over-parameterized regime
\[
\begin{aligned}
\Upsilon(\nu_1, \nu_2) =&~ \frac{p}{n} \left[ \left( 1 - \frac{\nu_1}{\nu_2} \right)^2 + \left( \frac{\nu_1}{\nu_2} \right)^2 \frac{\operatorname{Tr}\left(\bLambda^2 (\bLambda + \nu_2)^{-2}\right)}{p - \operatorname{Tr}\left(\bLambda^2 (\bLambda + \nu_2)^{-2}\right)} \right]\\
\approx&~ \frac{p}{n} \left[ \left( \frac{n}{p} \right)^2 + \left( 1 - \frac{n}{p} \right)^2 \frac{\operatorname{Tr}\left(\bLambda^2 (\bLambda + \nu_2)^{-2}\right)}{p - \operatorname{Tr}\left(\bLambda^2 (\bLambda + \nu_2)^{-2}\right)} \right]\\
\approx&~ \frac{\frac{C_2}{C_1}p -2 \frac{C_2}{C_1} n + n}{p - \frac{C_2}{C_1}n}\,,
\end{aligned}
\]
\[
\begin{aligned}
\chi(\nu_2) =~ \frac{{\rm Tr}(\bLambda (\bLambda + \nu_2)^{-2})}{p - {\rm Tr}(\bLambda^2 (\bLambda + \nu_2)^{-2})} \approx~ \frac{(C_1-C_2)(\frac{n}{C_1})^{\alpha+1}}{p-\frac{C_2}{C_1}n}\,.
\end{aligned}
\]
According to the approximation, we have the deterministic equivalents of variance terms
\[
\begin{aligned}
{\mathsf V}_{{\mathsf R},0}^{\tt RFM} =&~ \sigma^2 \frac{\Upsilon(\nu_1, \nu_2)}{1 - \Upsilon(\nu_1, \nu_2)} \approx \sigma^2 \frac{(C_1-2C_2)n + C_2p}{(C_1-C_2)(p-n)}\,,\\
{\mathsf V}_{{\mathsf N},0}^{\tt RFM} =&~ \sigma^2 \frac{p}{n} \frac{\chi(\nu_2)}{1 - \Upsilon(\nu_1, \nu_2)} \approx \sigma^2 \frac{(\frac{n}{C_1})^\alpha p}{p-n}\,.
\end{aligned}
\]
Then recall \cref{eq:C1C2}, we eliminate $p$ and obtain
\begin{equation}\label{eq:V_over_power_law}
    {\mathsf V}_{{\mathsf R},0}^{\tt RFM} \approx \left(\frac{n}{C_1}\right)^{-\alpha}{\mathsf V}_{{\mathsf N},0}^{\tt RFM} + \sigma^2\frac{2C_2-C_1}{C_1-C_2} = \left(\frac{n}{C_1}\right)^{-\alpha}{\mathsf V}_{{\mathsf N},0}^{\tt RFM} + \sigma^2(\alpha - 2)\,.
\end{equation}

For the bias terms, due to the varying approximation behaviors of the quantities containing ${\bm \theta}_*$ for different values of $r$, we have to discuss their approximations in the conditions $r \in (0, \frac{1}{2})$, $r \in [\frac{1}{2}, 1)$ and $r \in [\frac{1}{2}, \infty)$ separately.

\paragraph{Condition 1: $r \in (0, \frac{1}{2})$}
\[
\begin{aligned}
{\mathsf B}_{{\mathsf R},0}^{\tt RFM} =&~ \frac{\nu_2^2}{1 - \Upsilon(\nu_1, \nu_2)} \left[ \langle {\bm \theta}_*, (\bLambda + \nu_2)^{-2} {\bm \theta}_* \rangle + \chi(\nu_2) \langle {\bm \theta}_*, \bLambda (\bLambda + \nu_2)^{-2} {\bm \theta}_* \rangle \right]\\
\approx&~\frac{\left(\frac{n}{C_1}\right)^{-2\alpha r}\left((C_1C_4 -C_2C_3)n+C_1(C_3-C_4)p\right)}{(C1-C2)(p-n)},\\
{\mathsf B}_{{\mathsf N},0}^{\tt RFM} =&~ \frac{\nu_2}{\nu_1} \langle {\bm \theta}_*, (\bLambda + \nu_2)^{-1} {\bm \theta}_* \rangle - \frac{\lambda}{n} \frac{\nu_2^2}{\nu_1^2} \frac{\langle {\bm \theta}_*, (\bLambda + \nu_2)^{-2} {\bm \theta}_* \rangle + \chi(\nu_2) \langle {\bm \theta}_*, \bLambda (\bLambda + \nu_2)^{-2} {\bm \theta}_* \rangle}{1 - \Upsilon(\nu_1, \nu_2)}\\
\approx&~ \frac{\nu_2}{\nu_1} \langle {\bm \theta}_*, (\bLambda + \nu_2)^{-1} {\bm \theta}_* \rangle\\
\approx&~ \frac{\left(\frac{n}{C_1}\right)^{-\alpha(2r-1)}C_3 p}{p-n}.
\end{aligned}
\]
Then we eliminate $p$ and obtain
\begin{equation}\label{eq:B_over_power_law_1}
    {\mathsf B}_{{\mathsf R},0}^{\tt RFM} \approx \left(\frac{n}{C_1}\right)^{-\alpha} {\mathsf B}_{{\mathsf N},0}^{\tt RFM} + \left(\frac{n}{C_1}\right)^{-2\alpha r}\frac{C_2C_3-C_1C_4}{C_1-C_2}\,.
\end{equation}

\paragraph{Condition 2: $r \in [\frac{1}{2}, 1)$}
\[
\begin{aligned}
{\mathsf B}_{{\mathsf R},0}^{\tt RFM} =&~ \frac{\nu_2^2}{1 - \Upsilon(\nu_1, \nu_2)} \left[ \langle {\bm \theta}_*, (\bLambda + \nu_2)^{-2} {\bm \theta}_* \rangle + \chi(\nu_2) \langle {\bm \theta}_*, \bLambda (\bLambda + \nu_2)^{-2} {\bm \theta}_* \rangle \right]\\
\approx&~\frac{\left(\frac{n}{C_1}\right)^{-\alpha}\left(C_1\left(C_4 n + C_5 \left(\frac{n}{C_1}\right)^{-\alpha (2r-1)} p\right)-C_2 n \left(C_4 + C_5 \left(\frac{n}{C_1}\right)^{-\alpha (2r-1)}\right)\right)}{(C1-C2)(p-n)},\\
{\mathsf B}_{{\mathsf N},0}^{\tt RFM} =&~ \langle {\bm \theta}_*, \bLambda ( \bLambda + \nu_2)^{-2} {\bm \theta}_* \rangle \cdot \frac{p}{p - {\rm df}_2(\nu_2)}\\
&~+ \frac{p}{n} \nu_2^2 \left( \langle {\bm \theta}_*, (\bLambda + \nu_2)^{-2} {\bm \theta}_* \rangle + \chi(\nu_2) \langle {\bm \theta}_*, \bLambda (\bLambda + \nu_2)^{-2} {\bm \theta}_* \rangle \right) \cdot \frac{\chi(\nu_2)}{1 - \Upsilon(\nu_1, \nu_2)}\\
\approx&~ \frac{\left(C_4+C_5\left(\frac{n}{C_1}\right)^{-\alpha(2r-1)}\right)p}{p-n}\,.
\end{aligned}
\]
Then we eliminate $p$ and obtain
\[
\begin{aligned}
{\mathsf B}_{{\mathsf R},0}^{\tt RFM} \approx&~ \left(\frac{n}{C_1}\right)^{-\alpha} {\mathsf B}_{{\mathsf N},0}^{\tt RFM} + \left(\frac{n}{C_1}\right)^{-\alpha}\frac{-C_1C_4+C_2C_4+C_2C_5\left(\frac{n}{C_1}\right)^{-\alpha(2r-1)}}{C_1-C_2}\\
\approx&~ \left(\frac{n}{C_1}\right)^{-\alpha} {\mathsf B}_{{\mathsf N},0}^{\tt RFM} - \left(\frac{n}{C_1}\right)^{-\alpha}C_4\,.
\end{aligned}
\]
The last ``$\approx$'' holds because $\left(\frac{n}{C_1}\right)^{-\alpha(2r-1)} = o(1)$.

\paragraph{Condition 3: $r \in [1, \infty)$}
\[
\begin{aligned}
{\mathsf B}_{{\mathsf R},0}^{\tt RFM} =&~ \frac{\nu_2^2}{1 - \Upsilon(\nu_1, \nu_2)} \left[ \langle {\bm \theta}_*, (\bLambda + \nu_2)^{-2} {\bm \theta}_* \rangle + \chi(\nu_2) \langle {\bm \theta}_*, \bLambda (\bLambda + \nu_2)^{-2} {\bm \theta}_* \rangle \right]\\
\approx&~\frac{\left(\frac{n}{C_1}\right)^{-2\alpha}\left(C_1\left(C_4 n \left(\frac{n}{C_1}\right)^{\alpha} + C_6 p\right) - C_2 n \left(C_6 + C_4 \left(\frac{n}{C_1}\right)^{\alpha}\right)\right)}{(C1-C2)(p-n)},\\
{\mathsf B}_{{\mathsf N},0}^{\tt RFM} =&~ \langle {\bm \theta}_*, \bLambda ( \bLambda + \nu_2)^{-2} {\bm \theta}_* \rangle \cdot \frac{p}{p - {\rm df}_2(\nu_2)}\\
&~+ \frac{p}{n} \nu_2^2 \left( \langle {\bm \theta}_*, (\bLambda + \nu_2)^{-2} {\bm \theta}_* \rangle + \chi(\nu_2) \langle {\bm \theta}_*, \bLambda (\bLambda + \nu_2)^{-2} {\bm \theta}_* \rangle \right) \cdot \frac{\chi(\nu_2)}{1 - \Upsilon(\nu_1, \nu_2)}\\
\approx&~ \frac{\left(C_4+C_6\left(\frac{n}{C_1}\right)^{-\alpha}\right)p}{p-n}\,.
\end{aligned}
\]
Then we eliminate $p$ and obtain
\[
\begin{aligned}
{\mathsf B}_{{\mathsf R},0}^{\tt RFM} \approx&~ \left(\frac{n}{C_1}\right)^{-\alpha} {\mathsf B}_{{\mathsf N},0}^{\tt RFM} + \left(\frac{n}{C_1}\right)^{-\alpha}\frac{-C_1C_4+C_2C_4+C_2C_6\left(\frac{n}{C_1}\right)^{-\alpha}}{C_1-C_2}\\
\approx&~ \left(\frac{n}{C_1}\right)^{-\alpha} {\mathsf B}_{{\mathsf N},0}^{\tt RFM} - \left(\frac{n}{C_1}\right)^{-\alpha}C_4\,.
\end{aligned}
\]
The last ``$\approx$'' holds because $\left(\frac{n}{C_1}\right)^{-\alpha(2r-1)} = o(1)$.

Combining the above condition \(r \in [\frac{1}{2}, 1)\) and \(r \in [1, \infty)\), we have for \(r \in [\frac{1}{2}, \infty)\)

\begin{equation}\label{eq:B_over_power_law_2}
{\mathsf B}_{{\mathsf R},0}^{\tt RFM} \approx \left(\frac{n}{C_1}\right)^{-\alpha} {\mathsf B}_{{\mathsf N},0}^{\tt RFM} - \left(\frac{n}{C_1}\right)^{-\alpha}C_4\,.    
\end{equation}

From \cref{eq:V_over_power_law,eq:B_over_power_law_1,eq:B_over_power_law_2}, we know that the relationship between \({\mathsf R}_0^{\tt RFM}\) and \({\mathsf N}_0^{\tt RFM}\) in the over-parameterized regime can be written as
\[
{\mathsf R}_0^{\tt RFM} \approx \left(\nicefrac{n}{C_\alpha}\right)^{-\alpha} {\mathsf N}_0^{\tt RFM} + C_{n,\alpha,r,1}\,.
\]

\paragraph{The relationship in under-parameterized regime ($p < n$)} While in the under-parameterized regime ($p < n$), When $\lambda \to 0$, ${\rm Tr}(\bLambda(\bLambda + \nu_2)^{-1})$ will converge to $p$, which means $\nu_2$ will converge to $(\frac{p}{C_1})^{-\alpha}$ and $\nu_1$ will converge to 0, with $\frac{\lambda}{\nu_1}\to n-p$. 

Accordingly, in the under-parameterized regime
\[
\begin{aligned}
\Upsilon(\nu_1, \nu_2) = \frac{p}{n} \left[ \left( 1 - \frac{\nu_1}{\nu_2} \right)^2 + \left( \frac{\nu_1}{\nu_2} \right)^2 \frac{\operatorname{Tr}\left(\bLambda^2 (\bLambda + \nu_2)^{-2}\right)}{p - \operatorname{Tr}\left(\bLambda^2 (\bLambda + \nu_2)^{-2}\right)} \right] \to \frac{p}{n}\,,
\end{aligned}
\]
\[
\begin{aligned}
\chi(\nu_2) = \frac{\operatorname{Tr}\left(\bLambda (\bLambda + \nu_2)^{-2}\right)}{p - \operatorname{Tr}\left(\bLambda^2 (\bLambda + \nu_2)^{-2}\right)} \to \frac{1}{\nu_2} \approx (\frac{p}{C_1})^{\alpha}\,.
\end{aligned}
\]
Then we can further obtain that, for the variance
\[
\begin{aligned}
{\mathsf V}_{{\mathsf R},0}^{\tt RFM} =&~ \sigma^2 \frac{\Upsilon(\nu_1, \nu_2)}{1 - \Upsilon(\nu_1, \nu_2)} \approx \sigma^2 \frac{p}{n-p},\\
{\mathsf V}_{{\mathsf N},0}^{\tt RFM} =&~ \sigma^2 \frac{p}{n} \frac{\chi(\nu_2)}{1 - \Upsilon(\nu_1, \nu_2)} \approx \sigma^2 C_1^{-\alpha} \frac{p^{\alpha+1}}{n-p}.
\end{aligned}
\]
For the relationship in the under-parameterized regime, we separately consider two cases, i.e. $p \ll n$ and $p\to n$.

First, we derive the relationship in the under-parameterized regime ($p < n$) as $p \to n$, based on the relationship in the over-parameterized regime.
Recall the relationship between ${\mathsf V}_{{\mathsf R},0}^{\tt RFM}$ and ${\mathsf V}_{{\mathsf N},0}^{\tt RFM}$ in the over-parameterized regime, as presented in \cref{eq:V_over_power_law}, given by 
\[
{\mathsf V}_{{\mathsf R},0}^{\tt RFM} \approx \left(\frac{n}{C_1}\right)^{-\alpha} {\mathsf V}_{{\mathsf N},0}^{\tt RFM} + \sigma^2 (\alpha-2) =:h({\mathsf V}_{{\mathsf N},0}^{\tt RFM})\,.
\]
Substituting the expression for ${\mathsf V}_{{\mathsf N},0}^{\tt RFM}$ in the under-parameterized regime into this relationship, we obtain
\[
{\mathsf V}_{{\mathsf R},0}^{\tt RFM} \approx \left(\frac{n}{C_1}\right)^{-\alpha} \sigma^2 C_1^{-\alpha} \frac{p^{\alpha+1}}{n-p} + \sigma^2 (\alpha-2)\,,
\]
then we compute ${\mathsf V}_{{\mathsf R},0}^{\tt RFM} - h({\mathsf V}_{{\mathsf N},0}^{\tt RFM})$ and obtain
\[
\begin{aligned}
    {\mathsf V}_{{\mathsf R},0}^{\tt RFM} - h({\mathsf V}_{{\mathsf N},0}^{\tt RFM}) =&~ \sigma^2 \frac{p}{n-p} - \left(\frac{n}{C_1}\right)^{-\alpha} \sigma^2 C_1^{-\alpha} \frac{p^{\alpha+1}}{n-p} - \sigma^2 (\alpha-2)\\
    =&~ \sigma^2\left(\frac{p-p^{\alpha+1}n^{-\alpha}}{n-p}\right) - \sigma^2 (\alpha-2)\,.
\end{aligned}
\]
Taking limits on the left and right sides of the equation, we get
\[
\lim_{p \to n} \left({\mathsf V}_{{\mathsf R},0}^{\tt RFM} - h({\mathsf V}_{{\mathsf N},0}^{\tt RFM})\right) = 2\sigma^2\,.
\]
Then when $p \to n$, we have
\begin{equation}\label{eq:V_under_power_law}
    {\mathsf V}_{{\mathsf R},0}^{\tt RFM} \approx \left(\frac{n}{C_1}\right)^{-\alpha} {\mathsf V}_{{\mathsf N},0}^{\tt RFM} + \sigma^2 \alpha\,.
\end{equation}
For $p \ll n$, we have $\frac{1}{n-p} \approx \frac{1}{n}$, then 
\[
\begin{aligned}
{\mathsf V}_{{\mathsf R},0}^{\tt RFM} =&~ \sigma^2 \frac{\Upsilon(\nu_1, \nu_2)}{1 - \Upsilon(\nu_1, \nu_2)} \approx \sigma^2 \frac{p}{n},\\
{\mathsf V}_{{\mathsf N},0}^{\tt RFM} =&~ \sigma^2 \frac{p}{n} \frac{\chi(\nu_2)}{1 - \Upsilon(\nu_1, \nu_2)} \approx \sigma^2 C_1^{-\alpha} \frac{p^{\alpha+1}}{n}.
\end{aligned}
\]
Eliminate $p$ and we have
\[
{\mathsf V}_{{\mathsf R},0}^{\tt RFM} \approx \left(\sigma^2\right)^{\frac{\alpha}{\alpha+1}} C_1^{\frac{\alpha}{\alpha+1}} \left({\mathsf V}_{{\mathsf R},0}^{\tt RFM}\right)^\frac{1}{\alpha+1}\,.
\]
Next, for the bias term we have
\[
\begin{aligned}
{\mathsf B}_{{\mathsf R},0}^{\tt RFM} =&~ \frac{\nu_2^2}{1 - \Upsilon(\nu_1, \nu_2)} \left[ \langle {\bm \theta}_*, (\bLambda + \nu_2)^{-2} {\bm \theta}_* \rangle + \chi(\nu_2) \langle {\bm \theta}_*, \bLambda (\bLambda + \nu_2)^{-2} {\bm \theta}_* \rangle \right]\\
\approx&~ \frac{\nu_2}{1 - \Upsilon(\nu_1, \nu_2)} \langle {\bm \theta}_*, (\bLambda + \nu_2)^{-1} {\bm \theta}_* \rangle\\
\approx&~ \frac{n}{n-p} C_3 \nu_2^{2r\wedge1}.\\
{\mathsf B}_{{\mathsf N},0}^{\tt RFM} =&~ p \langle {\bm \theta}_*, \bLambda ( \bLambda + \nu_2)^{-2} {\bm \theta}_* \rangle \cdot \frac{1}{p - {\rm df}_2(\nu_2)}\\
&~+ \frac{p}{n} \chi(\nu_2) \frac{\nu_2^2}{1 - \Upsilon(\nu_1, \nu_2)} \left[ \langle {\bm \theta}_*, (\bLambda + \nu_2)^{-2} {\bm \theta}_* \rangle + \chi(\nu_2) \langle {\bm \theta}_*, \bLambda (\bLambda + \nu_2)^{-2} {\bm \theta}_* \rangle \right] \\
\approx&~ p \langle {\bm \theta}_*, \bLambda ( \bLambda + \nu_2)^{-2} {\bm \theta}_* \rangle \cdot \frac{1}{p - {\rm df}_2(\nu_2)} + \frac{p}{n} \chi(\nu_2) \frac{\nu_2}{1 - \Upsilon(\nu_1, \nu_2)} \langle {\bm \theta}_*, (\bLambda + \nu_2)^{-1} {\bm \theta}_* \rangle\\
\approx&~ \frac{p}{p-\frac{C_2}{C_1}p} C_4 \nu_2^{(2r-1)\wedge0} + \frac{p}{n-p} C_3 \nu_2^{(2r-1)\wedge0}\\
\approx&~ \left(\frac{C_1C_4}{C_1-C_2} + \frac{p}{n-p}C_3\right) \nu_2^{(2r-1)\wedge0}.
\end{aligned}
\]
Then we use the approximation $\nu_2 \approx (\frac{p}{C_1})^{-\alpha}$ and obtain
\[
\begin{aligned}
{\mathsf B}_{{\mathsf R},0}^{\tt RFM} 
\approx \frac{n}{n-p} C_3 \nu_2^{2r\wedge1} \approx \frac{n}{n-p} C_3 \left( \frac{p}{C_1} \right)^{-\alpha\left(2r\wedge1\right)},
\end{aligned}
\]
\[
\begin{aligned}
{\mathsf B}_{{\mathsf N},0}^{\tt RFM} \approx \left(\frac{C_1C_4}{C_1-C_2}+\frac{p}{n-p}C_3\right) \nu_2^{(2r-1)\wedge0} \approx \left(\frac{C_1C_4}{C_1-C_2}+\frac{p}{n-p}C_3\right) \left(\frac{p}{C_1}\right)^{-\alpha\left[(2r-1)\wedge0\right]}.
\end{aligned}
\]
Similarly to the bias term, we derive the relationship in the under-parameterized regime ($p < n$) as $p \to n$, based on the relationship in the over-parameterized regime. And we discuss the relationship when $r \in (0, \frac{1}{2})$ and $r \in [\frac{1}{2}, \infty)$ separately.

\paragraph{Condition 1: $r \in (0, \frac{1}{2})$.}
Recall the relationship between ${\mathsf B}_{{\mathsf R},0}^{\tt RFM}$ and ${\mathsf B}_{{\mathsf N},0}^{\tt RFM}$ in the over-parameterized regime, as presented in \cref{eq:B_over_power_law_1}, given by: 
\[
\begin{aligned}
{\mathsf B}_{{\mathsf R},0}^{\tt RFM} = \left(\frac{n}{C_1}\right)^{-\alpha} {\mathsf B}_{{\mathsf N},0}^{\tt RFM} + \left(\frac{n}{C_1}\right)^{-2\alpha r}\frac{C_2C_3-C_1C_4}{C_1-C_2} =: f({\mathsf B}_{{\mathsf N},0}^{\tt RFM}).
\end{aligned}
\]
Substituting the expression for ${\mathsf B}_{{\mathsf N},0}^{\tt RFM}$ in the under-parameterized regime into this relationship, we obtain:
\[
\begin{aligned}
f({\mathsf B}_{{\mathsf N},0}^{\tt RFM}) =&~ \left(\frac{n}{C_1}\right)^{-\alpha} \left(\frac{C_1C_4}{C_1-C_2}+\frac{p}{n-p}C_3\right) \left(\frac{p}{C_1}\right)^{-\alpha(2r-1)} + \left(\frac{n}{C_1}\right)^{-2\alpha r}\frac{C_2C_3-C_1C_4}{C_1-C_2},
\end{aligned}
\]
then we compute ${\mathsf B}_{{\mathsf R},0}^{\tt RFM} - f({\mathsf B}_{{\mathsf N},0}^{\tt RFM})$ and obtain
\[
\begin{aligned}
{\mathsf B}_{{\mathsf R},0}^{\tt RFM} - f({\mathsf B}_{{\mathsf N},0}^{\tt RFM}) =&~ C_1^{2\alpha r}\Big(\frac{n}{n-p}C_3p^{-2\alpha r} - \frac{C_1C_4}{C_1-C_2}p^{-\alpha(2r-1)}n^{-\alpha}\\
&~ -\frac{p}{n-p}C_3p^{-\alpha(2r-1)}n^{-\alpha} - \frac{C_2C_3-C_1C_4}{C_1-C_2}n^{-2\alpha r}\Big).\\
\end{aligned}
\]
To simplify this equation, we begin by computing $\frac{n}{n-p}C_3p^{-2\alpha r} - \frac{p}{n-p}C_3p^{-\alpha(2r-1)}n^{-\alpha}$ and obtain
\[
\begin{aligned}
\frac{n}{n-p}C_3p^{-2\alpha r} - \frac{p}{n-p}C_3p^{-\alpha(2r-1)}n^{-\alpha} =&~ C_3p^{-\alpha(2r-1)} \left(\frac{n}{n-p}p^{-\alpha} - \frac{p}{n-p}n^{-\alpha} \right)\\
=&~ C_3p^{-\alpha(2r-1)}\frac{np^{-\alpha} - pn^{-\alpha}}{n-p},
\end{aligned}
\]
where $\frac{np^{-\alpha} - pn^{-\alpha}}{n-p}$ is monotonically decreasing in $p$ (monotonicity can be obtained by simple derivatives), and by applying L'Hôpital's rule, we have:
\[
\lim_{p \to n} \frac{np^{-\alpha} - pn^{-\alpha}}{n-p} = \lim_{p \to n} \frac{-\alpha n p^{-\alpha-1}-n^{-\alpha}}{-1} = (\alpha+1)n^{-\alpha}.
\]
Thus we have
\[
\begin{aligned}
\lim_{p \to n} C_3p^{-\alpha(2r-1)}\frac{np^{-\alpha} - pn^{-\alpha}}{n-p} = (\alpha+1)C_3n^{-2\alpha r}.
\end{aligned}
\]
Thus we have
\[
\begin{aligned}
&~\lim_{p \to n} C_1^{2\alpha r}\Big(C_3p^{-\alpha(2r-1)}\frac{np^{-\alpha} - pn^{-\alpha}}{n-p} - \frac{C_1C_4}{C_1-C_2}p^{-\alpha(2r-1)}n^{-\alpha} - \frac{C_2C_3-C_1C_4}{C_1-C_2}n^{-2\alpha r}\Big)\\
=&~ C_1^{2\alpha r}\Big((\alpha+1)C_3n^{-2\alpha r} - \frac{C_1C_4}{C_1-C_2}n^{-2\alpha r} - \frac{C_2C_3-C_1C_4}{C_1-C_2}n^{-2\alpha r}\Big)\\
=&~ C_1^{2\alpha r}C_3n^{-2\alpha r}\Big((\alpha+1) - \frac{C_2}{C_1-C_2}\Big).
\end{aligned}
\]
Recall that from \cref{eq:C1C2} we have
\[
\begin{aligned}
C_1 = \frac{\pi}{\alpha \sin\left(\nicefrac{\pi}{\alpha}\right)}\,, \quad C_2 = \frac{\pi(\alpha-1)}{\alpha^2 \sin\left(\nicefrac{\pi}{\alpha}\right)},
\end{aligned}
\]
thus 
\[
\begin{aligned}
(\alpha+1) - \frac{C_2}{C_1-C_2} = (\alpha+1) - \frac{ \frac{\pi(\alpha-1)}{\alpha^2 \sin\left(\nicefrac{\pi}{\alpha}\right)}}{\frac{\pi}{\alpha \sin\left(\nicefrac{\pi}{\alpha}\right)} - \frac{\pi(\alpha-1)}{\alpha^2 \sin\left(\nicefrac{\pi}{\alpha}\right)}} = 2.
\end{aligned}
\]
Finally, we have
\[
\begin{aligned}
\lim_{p \to n}\left( {\mathsf B}_{{\mathsf R},0}^{\tt RFM} - f({\mathsf B}_{{\mathsf N},0}^{\tt RFM}) \right) = 2C_1^{2\alpha r}C_3n^{-2\alpha r} = 2C_3\left(\frac{n}{C_1}\right)^{-2\alpha r},
\end{aligned}
\]
and then the relationship between ${\mathsf B}_{{\mathsf R},0}^{\tt RFM}$ and ${\mathsf B}_{{\mathsf N},0}^{\tt RFM}$ is 
\begin{equation}\label{eq:B_under_power_law_1}
    \begin{split}
        {\mathsf B}_{{\mathsf R},0}^{\tt RFM} \approx&~ \left(\frac{n}{C_1}\right)^{-\alpha} {\mathsf B}_{{\mathsf N},0}^{\tt RFM} + \left(\frac{n}{C_1}\right)^{-2\alpha r}\frac{C_2C_3-C_1C_4}{C_1-C_2} + 2C_3\left(\frac{n}{C_1}\right)^{-2\alpha r}\\
        \approx&~ \left(\frac{n}{C_1}\right)^{-\alpha} {\mathsf B}_{{\mathsf N},0}^{\tt RFM} + \left(\frac{n}{C_1}\right)^{-2\alpha r}\frac{2C_1C_3-C_2C_3-C_1C_4}{C_1-C_2}.
    \end{split}
\end{equation}

\paragraph{Condition 2: $r \in [\frac{1}{2}, \infty)$.}

In this condition, the approximation of ${\mathsf B}_{{\mathsf R},0}^{\tt RFM}$ and ${\mathsf B}_{{\mathsf N},0}^{\tt RFM}$ can be simplified to 
\[
\begin{aligned}
{\mathsf B}_{{\mathsf R},0}^{\tt RFM} 
\approx \frac{n}{n-p} C_3 \nu_2^{2r\wedge1} \approx \frac{n}{n-p} C_3 \left( \frac{p}{C_1} \right)^{-\alpha\left(2r\wedge1\right)} = \frac{n}{n-p} C_3 \left( \frac{p}{C_1} \right)^{-\alpha}\,,
\end{aligned}
\]
\[
\begin{aligned}
{\mathsf B}_{{\mathsf N},0}^{\tt RFM} \approx&~ \left(\frac{C_1C_4}{C_1-C_2}+\frac{p}{n-p}C_3\right) \nu_2^{(2r-1)\wedge0}\\
\approx&~ \left(\frac{C_1C_4}{C_1-C_2}+\frac{p}{n-p}C_3\right) \left(\frac{p}{C_1}\right)^{-\alpha\left[(2r-1)\wedge0\right]}\\
=&~ \frac{C_1C_4}{C_1-C_2}+\frac{p}{n-p}C_3\,.
\end{aligned}
\]
Recall the relationship between ${\mathsf B}_{{\mathsf R},0}^{\tt RFM}$ and ${\mathsf B}_{{\mathsf N},0}^{\tt RFM}$ in the over-parameterized regime is presented in \cref{eq:B_over_power_law_2}, given by: 
\[
\begin{aligned}
{\mathsf B}_{{\mathsf R},0}^{\tt RFM} \approx&~ \left(\frac{n}{C_1}\right)^{-\alpha} {\mathsf B}_{{\mathsf N},0}^{\tt RFM} - \left(\frac{n}{C_1}\right)^{-\alpha}C_4 =: g({\mathsf B}_{{\mathsf N},0}^{\tt RFM})\,.
\end{aligned}
\]
Substituting the expression for ${\mathsf B}_{{\mathsf N},0}^{\tt RFM}$ in the under-parameterized regime into this relationship, we obtain:
\[
\begin{aligned}
g({\mathsf B}_{{\mathsf N},0}^{\tt RFM}) =&~ \left(\frac{n}{C_1}\right)^{-\alpha} \left(\frac{C_1C_4}{C_1-C_2}+\frac{p}{n-p}C_3\right) - \left(\frac{n}{C_1}\right)^{-\alpha}C_4\,,
\end{aligned}
\]
then we compute ${\mathsf B}_{{\mathsf R},0}^{\tt RFM} - g({\mathsf B}_{{\mathsf N},0}^{\tt RFM})$ and obtain
\[
\begin{aligned}
    {\mathsf B}_{{\mathsf R},0}^{\tt RFM} - g({\mathsf B}_{{\mathsf N},0}^{\tt RFM}) = C_3 C_1^{\alpha} \frac{np^{-\alpha} - pn^{-\alpha}}{n-p} - \left(\frac{n}{C_1}\right)^{-\alpha} \left( \frac{C_2C_4}{C_1-C_2} \right)\,.
\end{aligned}
\]
Thus we have
\[
\begin{aligned}
    \lim_{p \to n}\left( {\mathsf B}_{{\mathsf R},0}^{\tt RFM} - f({\mathsf B}_{{\mathsf N},0}^{\tt RFM}) \right) =&~ \left(\frac{n}{C_1}\right)^{-\alpha} \left((\alpha+1)C_3 - \frac{C_2C_4}{C_1-C_2}\right)\\
    \approx&~ \left(\frac{n}{C_1}\right)^{-\alpha} \left((\alpha+1)C_4 - \frac{C_2}{C_1-C_2}C_4\right)\\
    =&~ \left(\frac{n}{C_1}\right)^{-\alpha} 2 C_4\,,
\end{aligned}
\]
and the relationship between ${\mathsf B}_{{\mathsf R},0}^{\tt RFM}$ and ${\mathsf B}_{{\mathsf N},0}^{\tt RFM}$ is 
\begin{equation}\label{eq:B_under_power_law_2}
    \begin{split}
        {\mathsf B}_{{\mathsf R},0}^{\tt RFM} \approx&~ \left(\frac{n}{C_1}\right)^{-\alpha} {\mathsf B}_{{\mathsf N},0}^{\tt RFM} - \left(\frac{n}{C_1}\right)^{-\alpha}C_4 + \left(\frac{n}{C_1}\right)^{-\alpha}2C_4\\
        \approx&~ \left(\frac{n}{C_1}\right)^{-\alpha} {\mathsf B}_{{\mathsf N},0}^{\tt RFM} + \left(\frac{n}{C_1}\right)^{-\alpha}C_4\,.
    \end{split}
\end{equation}

When $p \ll n$, we discuss cases $r \in (0, \frac{1}{2})$ and $r \in (\frac{1}{2}, \infty)$ separately. 

If $r \in (0, \frac{1}{2})$, we have $\frac{n}{n-p} \approx 1$ and $\frac{p}{n-p} \approx 0$, then
\[
\begin{aligned}
{\mathsf B}_{{\mathsf R},0}^{\tt RFM} 
\approx C_3 \nu_2^{2r\wedge1} \approx C_3 \left( \frac{p}{C_1} \right)^{-\alpha2r},
\end{aligned}
\]
\[
\begin{aligned}
{\mathsf B}_{{\mathsf N},0}^{\tt RFM} \approx \frac{C_1C_4}{C_1-C_2} \nu_2^{(2r-1)\wedge0} \approx \frac{C_1C_4}{C_1-C_2} \left(\frac{p}{C_1}\right)^{-\alpha\left(2r-1\right)}.
\end{aligned}
\]
Then we eliminate $p$ and obtain
\[
\begin{aligned}
{\mathsf B}_{{\mathsf R},0}^{\tt RFM} \approx C_3 \left(\frac{C_1-C_2}{C_1C_4}\right)^{\nicefrac{2r}{(2r-1)}} \left({\mathsf B}_{{\mathsf N},0}^{\tt RFM}\right)^{\nicefrac{2r}{(2r-1)}}.
\end{aligned}
\]

If $2r \ge 1$, we have
\[
\begin{aligned}
{\mathsf B}_{{\mathsf R},0}^{\tt RFM} \approx&~ \frac{n}{n-p} C_3 \nu_2 \approx \frac{n}{n-p} C_3\left(\frac{p}{C_1}\right)^{-\alpha},
\end{aligned}
\]
\[
\begin{aligned}
{\mathsf B}_{{\mathsf N},0}^{\tt RFM} \approx&~ \frac{C_1C_4}{C_1-C_2} + \frac{p}{n-p}C_3 .
\end{aligned}
\]
Then we eliminate $p$ and obtain
\[
\begin{aligned}
{\mathsf B}_{{\mathsf R},0}^{\tt RFM} \approx \left(\frac{C_1C_3-C_2C_3-C_1C_4}{C_1-C_2}+{\mathsf B}_{{\mathsf N},0}^{\tt RFM}\right)\left(\frac{n\left({\mathsf B}_{{\mathsf N},0}^{\tt RFM}-\frac{C_1C_4}{C_1-C_2}\right)}{C_1\left(C_3+{\mathsf B}_{{\mathsf N},0}^{\tt RFM}-\frac{C_1C_4}{C_1-C_2}\right)}\right)^{-\alpha}.
\end{aligned}
\]

From \cref{eq:V_under_power_law,eq:B_under_power_law_1,eq:B_under_power_law_2}, we know that the relationship between \({\mathsf R}_0^{\tt RFM}\) and \({\mathsf N}_0^{\tt RFM}\) in the under-parameterized regime when \(p \to n\) can be written as
\[
{\mathsf R}_0^{\tt RFM} \approx \left(\nicefrac{n}{C_\alpha}\right)^{-\alpha} {\mathsf N}_0^{\tt RFM} + C_{n,\alpha,r,2}\,.
\]

\end{proof}

\section{Scaling laws}
\label{app:scaling_law}
To derive the scaling laws based on norm-based capacity, we first give the decay rate of the $\ell_2$ norm w.r.t.\ \(n\).

The rate of the deterministic equivalent of the random feature ridge regression estimator's $\ell_2$ norm  is given by
\[
{\mathsf N}_\lambda^{\tt RFM} = \Theta\left(n^{-\gamma_{{\mathsf B}_{{\mathsf N},\lambda}^{\tt RFM}}}+\sigma^2n^{-\gamma_{{\mathsf V}_{{\mathsf N},\lambda}^{\tt RFM}}}\right) = \Theta\left(n^{-\gamma_{{\mathsf N}_\lambda^{\tt RFM}}}\right)\,,
\]
where $\gamma_{{\mathsf N}_\lambda^{\tt RFM}} := \gamma_{{\mathsf B}_{{\mathsf N},\lambda}^{\tt RFM}} \wedge \gamma_{{\mathsf V}_{{\mathsf N},\lambda}^{\tt RFM}}$ for $\sigma^2 \neq 0$.
    
\subsection{Variance term}
Using \cref{eq:rate_nu2,eq:rates:Upsilon2,eq:rates:chi}, we have
\[
\begin{aligned}
{\mathsf V}_{{\mathsf N},\lambda}^{\tt RFM} =&~ \sigma^2 \frac{p}{n} \frac{\chi(\nu_2)}{1 - \Upsilon(\nu_1, \nu_2)} =~ n^{q-1}n^{-q}O\left(\nu_2^{-1-\nicefrac{1}{\alpha}}\right)\\
=&~O\left(n^{-\left(1- \left(\alpha+1\right)\left(1 \wedge q \wedge \nicefrac{\ell}{\alpha}\right)\right)}\right).
\end{aligned}
\]
Hence, the variance term of the norm decays with $n$ with rate
\[
\gamma_{{\mathsf V}_{{\mathsf N},\lambda}^{\tt RFM}}(\ell, q) = 1 - \left(\alpha+1\right)\left(\frac{\ell}{\alpha}\wedge q\wedge 1\right). 
\]

\subsection{Bias term}
First, one could notice, using the integral approximation and \cref{eq:rate_T,eq:rate_nu2}, that
\[
\begin{aligned}
\frac{p}{p - {\rm df}_2(\nu_2)} =&~ \left(1 + n^{-q} O\left(\nu_2^{-\nicefrac{1}{\alpha}}\right) \right) = \left(1 + O\left(n^{-q}n^{\left(1 \wedge q \wedge \nicefrac{\ell}{\alpha}\right)}\right) \right) = O\left(1\right).
\end{aligned}
\]
Thus for the bias term, using \cref{eq:rate_T,eq:rate_nu2,eq:rates:Upsilon2,eq:rates:chi} we have
\[
\begin{aligned}
{\mathsf B}_{{\mathsf N},\lambda}^{\tt RFM} =&~ \langle {\bm \theta}_*, \bLambda ( \bLambda + \nu_2)^{-2} {\bm \theta}_* \> \cdot \frac{p}{p - {\rm df}_2(\nu_2)}\\
&~+ \frac{p}{n} \nu_2^2 \left( \langle {\bm \theta}_*, (\bLambda + \nu_2)^{-2} {\bm \theta}_* \> + \chi(\nu_2) \langle {\bm \theta}_*, \bLambda (\bLambda + \nu_2)^{-2} {\bm \theta}_* \> \right) \cdot \frac{\chi(\nu_2)}{1 - \Upsilon(\nu_1, \nu_2)}\\
=&~ T_{2r+1, 2}^1(\nu_2) + n^{q-1}\nu_2^2\left( T_{2r,2}^1(\nu_2) + \chi(\nu_2) T_{2r+1,2}^1(\nu_2)\right)\chi(\nu_2)\\
=&~ \nu_2^{(2r-1)\wedge 0} + n^{q-1} \nu_2^2 O\left(\nu_2^{(2r-2)\wedge 0} +  n^{-q}\nu_2^{-1-\nicefrac{1}{\alpha}+(2r-1)\wedge 0}\right) n^{-q}O\left(\nu_2^{-1-\nicefrac{1}{\alpha}}\right)\\
=&~ \nu_2^{(2r-1)\wedge 0} + n^{-1} O\left(\nu_2^{2r\wedge 2} +  n^{-q}\nu_2^{-\nicefrac{1}{\alpha}+2r\wedge 1}\right) O\left(\nu_2^{-1-\nicefrac{1}{\alpha}}\right)\\
=&~ O\left( n^{-\alpha \left(1 \wedge q \wedge \nicefrac{\ell}{\alpha}\right) \left[(2r-1)\wedge 0\right]} \right)\\
&~+ O\left(n^{-\alpha \left(1 \wedge q \wedge \nicefrac{\ell}{\alpha}\right) \left[(2r-1)\wedge 1\right] + \left(1 \wedge q \wedge \nicefrac{\ell}{\alpha}\right) - 1}
+ 
n^{-\alpha \left(1 \wedge q \wedge \nicefrac{\ell}{\alpha}\right) \left[(2r-1)\wedge 0\right] + 2\left(1 \wedge q \wedge \nicefrac{\ell}{\alpha}\right) - 1 - q}\right)\\
=&~ O\left( n^{-\alpha \left(1 \wedge q \wedge \nicefrac{\ell}{\alpha}\right) \left[(2r-1)\wedge 0\right]}
+ 
n^{-\alpha \left(1 \wedge q \wedge \nicefrac{\ell}{\alpha}\right) \left[(2r-1)\wedge 1\right] + \left(1 \wedge q \wedge \nicefrac{\ell}{\alpha}\right) - 1}
\right)\\
=&~ O\left( n^{-\alpha \left(1 \wedge q \wedge \nicefrac{\ell}{\alpha}\right) \left[(2r-1)\wedge 0\right]}\right).
\end{aligned}
\]
Hence, the bias term of the norm decays with $n$ with rate
\[
\begin{aligned}
\gamma_{{\mathsf B}_{{\mathsf N},\lambda}^{\tt RFM}}(\ell, q) =&~ \alpha \left(1 \wedge q \wedge \nicefrac{\ell}{\alpha}\right) \left[(2r-1)\wedge 0\right].
\end{aligned}
\]
Recalling that we have
\[
\gamma_{{\mathsf N}_\lambda^{\tt RFM}} := \gamma_{{\mathsf B}_{{\mathsf N},\lambda}^{\tt RFM}} \wedge \gamma_{{\mathsf V}_{{\mathsf N},\lambda}^{\tt RFM}}\,,
\]
according to which, we obtain the norm exponent $\gamma_{{\mathsf N}_\lambda^{\tt RFM}}$ as a function of $\ell$ and $q$, showing in \cref{fig:scaling_law}. As observed in \cref{fig:scaling_law}, $\gamma_{{\mathsf N}_\lambda^{\tt RFM}}$ is non-positive across all regions, indicating that the norm either increases or remains constant with \(n\) in every case.

\begin{figure}[H]
    \centering
    \includegraphics[width=0.6\textwidth]{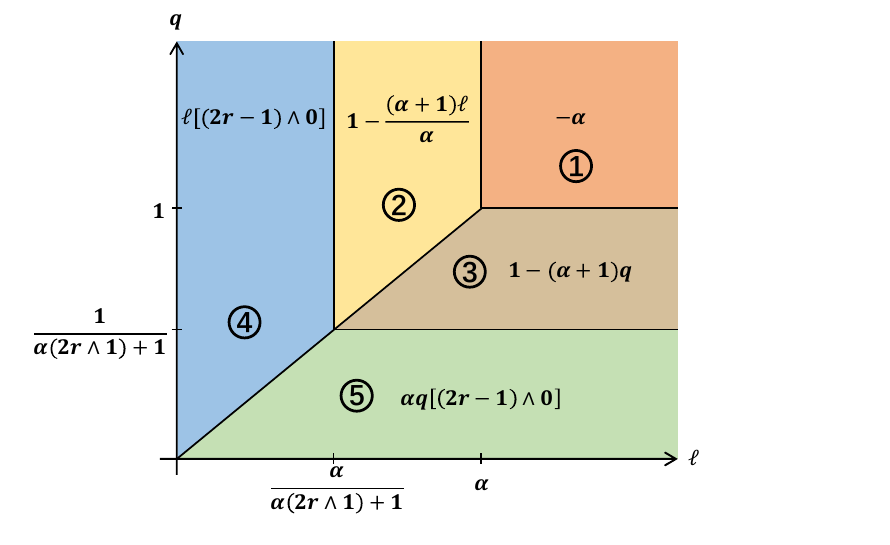} 
    \caption{The norm rate $\gamma_{{\mathsf N}_\lambda^{\tt RFM}}$ as a function of $(\ell,q)$. Variance dominated region is colored by {\color{regionorange}orange}, {\color{regionyellow}yellow} and {\color{regionbrown}brown}, bias dominated region is colored by {\color{regionblue}blue} and {\color{regiongreen}green}.} 
    \label{fig:scaling_law} 
\end{figure}

Next for the condition $r \in (0, \frac{1}{2})$, we derive the scaling law under norm-based capacity.

\paragraph{Region 1: $\ell > \alpha$ and $q > 1$}

In this region, according to \cite[Corollary 4.1]{defilippis2024dimension}, we have
\[
{\mathsf R}_\lambda^{\tt RFM} = \Theta\left( n^{-0} \right) = \Theta\left( 1 \right)\,,
\]
and according to \cref{fig:scaling_law}, we have
\[
{\mathsf N}_\lambda^{\tt RFM} = \Theta\left( n^{\alpha} \right)\,,
\]
combing the above rate, we can obtain that
\[
{\mathsf R}_\lambda^{\tt RFM} = \Theta\left( n^{-\alpha} \cdot {\mathsf N}_\lambda^{\tt RFM} \right)\,.
\]

\paragraph{Region 2: $\frac{\alpha}{2\alpha r+1} < \ell < \alpha$ and $q > \frac{\ell}{\alpha}$}

In this region, according to \cite[Corollary 4.1]{defilippis2024dimension}, we have
\[
{\mathsf R}_\lambda^{\tt RFM} = \Theta\left( n^{-\left(1-\frac{\ell}{\alpha}\right)} \right)\,,
\]
and according to \cref{fig:scaling_law}, we have
\[
{\mathsf N}_\lambda^{\tt RFM} = \Theta\left( n^{-\left(1-\frac{(\alpha+1)\ell}{\alpha}\right)} \right)\,,
\]
combing the above rate, we can obtain that
\[
{\mathsf R}_\lambda^{\tt RFM} = \Theta\left( n^{-\ell} \cdot {\mathsf N}_\lambda^{\tt RFM} \right)\,.
\]

\paragraph{Region 3: $\frac{1}{2\alpha r+1} < q < 1$ and $q < \frac{\ell}{\alpha}$}

In this region, according to \cite[Corollary 4.1]{defilippis2024dimension}, we have
\[
{\mathsf R}_\lambda^{\tt RFM} = \Theta\left( n^{-\left(1-q\right)} \right)\,,
\]
and according to \cref{fig:scaling_law}, we have
\[
{\mathsf N}_\lambda^{\tt RFM} = \Theta\left( n^{-\left(1-(\alpha+1)q\right)} \right)\,,
\]
combing the above rate and eliminate $q$, we can obtain that
\[
{\mathsf R}_\lambda^{\tt RFM} = \Theta\left( n^{-\frac{\alpha}{\alpha+1}} \cdot \left({\mathsf N}_\lambda^{\tt RFM}\right)^{\frac{1}{\alpha+1}} \right)\,.
\]

\paragraph{Region 4: $\ell < \frac{\alpha}{2\alpha r+1}$ and $q > \frac{\ell}{\alpha}$}

In this region, according to \cite[Corollary 4.1]{defilippis2024dimension}, we have
\[
{\mathsf R}_\lambda^{\tt RFM} = \Theta\left( n^{-2\ell r} \right)\,,
\]
and according to \cref{fig:scaling_law}, we have
\[
{\mathsf N}_\lambda^{\tt RFM} = \Theta\left( n^{-\ell(2r-1)} \right)\,,
\]
combing the above rate, we can obtain that
\[
{\mathsf R}_\lambda^{\tt RFM} = \Theta\left( n^{-1} \cdot {\mathsf N}_\lambda^{\tt RFM} \right)\,.
\]

\paragraph{Region 5: $q < \frac{1}{2\alpha r+1}$ and $q < \frac{\ell}{\alpha}$}

In this region, according to \cite[Corollary 4.1]{defilippis2024dimension}, we have
\[
{\mathsf R}_\lambda^{\tt RFM} = \Theta\left( n^{-2\alpha q r} \right)\,,
\]
and according to \cref{fig:scaling_law}, we have
\[
{\mathsf N}_\lambda^{\tt RFM} = \Theta\left( n^{-\alpha q(2r-1)} \right)\,,
\]
combing the above rate, we can obtain that
\[
{\mathsf R}_\lambda^{\tt RFM} = \Theta\left( n^0 \cdot \left({\mathsf N}_\lambda^{\tt RFM}\right)^{-\frac{2r}{1-2r}} \right)\,.
\]

\section{Discussion}

In this section, we discuss several issues related to the shape of generalization curves, norm control, and model complexity. 
In \cref{app:discussion_1}, we examine the shape of generalization curves under various settings, emphasizing when theoretical predictions align with or diverge from empirical observations, particularly across synthetic and real-world datasets.
In \cref{app:discussion_2}, we analyze a practical approach to modifying norm by fixing the parameter count and imposing a norm constraint, and demonstrate its equivalence to adjusting the regularization strength. 
Finally, in \cref{app:discussion_3}, we compare norm-based capacity with alternative complexity measures, including smoother-based metrics and degrees of freedom, and highlight their limitations in capturing test risk behavior.

\subsection{Discussion on the shape of the generalization curve in \texorpdfstring{\cref{fig:intro_figure}}{Figure~1}}\label{app:discussion_1}

As illustrated in \cref{fig:lecture_figure}, and based on empirical observations from \cite[Figure 8.12]{ngcs229}, the test risk in the over-parameterized regime initially exceeds that of the under-parameterized regime. However, as over-parameterization increases, the test risk begins to decrease. Eventually, in a sufficiently over-parameterized regime, the test risk becomes lower than in the under-parameterized case—indicating that sufficient over-parameterization can outperform under-parameterization.

In contrast, our experimental results in \cref{fig:RFM_result} reveal a slightly different behavior: the learning curve in the over-parameterized regime consistently remains below its under-parameterized counterpart throughout. This phenomenon presents an intriguing contrast, and the central question we address in this section is: What underlying factors cause this fundamental difference in behavior - where in some cases the over-parameterized curve initially above then crosses the under-parameterized curve, while in others it stays strictly lower?

We first conduct experiments on synthetic datasets to validate our theoretical findings. We generate training samples $\{(\bm{x}_i, y_i)\}_{i \in [n]}$ using a teacher-student model: $y_i = \tanh(\langle \bm{\beta}, \bm{x}_i \rangle)$, where input features $\bm{x}_i \overset{i.i.d}{\sim}~\mathcal{N}(\bm{0}, \bm{I}_d)$ with dimension $d = 100$. As demonstrated in \cref{table:curve_shape_gaussian}, our experimental results reveal that when the input features follow Gaussian distribution, the test loss curves in the over-parameterized regime consistently lie below those in the under-parameterized regime, regardless of the activation functions or ridge parameter values. This observation aligns perfectly with our theoretical predictions.

\begin{table}[t]
    \centering
    \caption{Generalization curves (test loss \emph{vs}. $\ell_2$ norm) under different activation functions in RFMs. Training data $\{(\bm{x}_i, y_i)\}_{i \in [n]}$ are generated from a teacher-student model $y_i = \tanh(\langle \bm{\beta}, \bm{x}_i \rangle)$, where $\bm{x}_i \sim \text{i.i.d.}~\mathcal{N}(\bm{0}, \bm{I}_d)$ with $d = 100$. The number of training samples is fixed at $n = 300$. The random feature map is defined as $\varphi(\bm{x}, \bm{w}) = \varphi(\langle \bm{w}, \bm{x} \rangle)$ with random Gaussian initialization $\bm w \sim \mathcal{N}(\bm{0}, \bm{I}_d)$, where the activation function $\varphi(\cdot)$ is chosen from \texttt{ReLU}, \texttt{erf}, \texttt{tanh}, or \texttt{sigmoid}.}
    \label{table:curve_shape_gaussian}
    \begin{tabular}{c c c c c}
        \toprule
        & ReLU & erf & tanh & sigmoid \\
        \midrule
        \multirow{1}{*}[5mm]{\rotatebox[origin=c]{90}{With ridge}} &
        \makebox[0.2\textwidth]{\parbox{0.2\textwidth}{\centering \includegraphics[width=\linewidth]{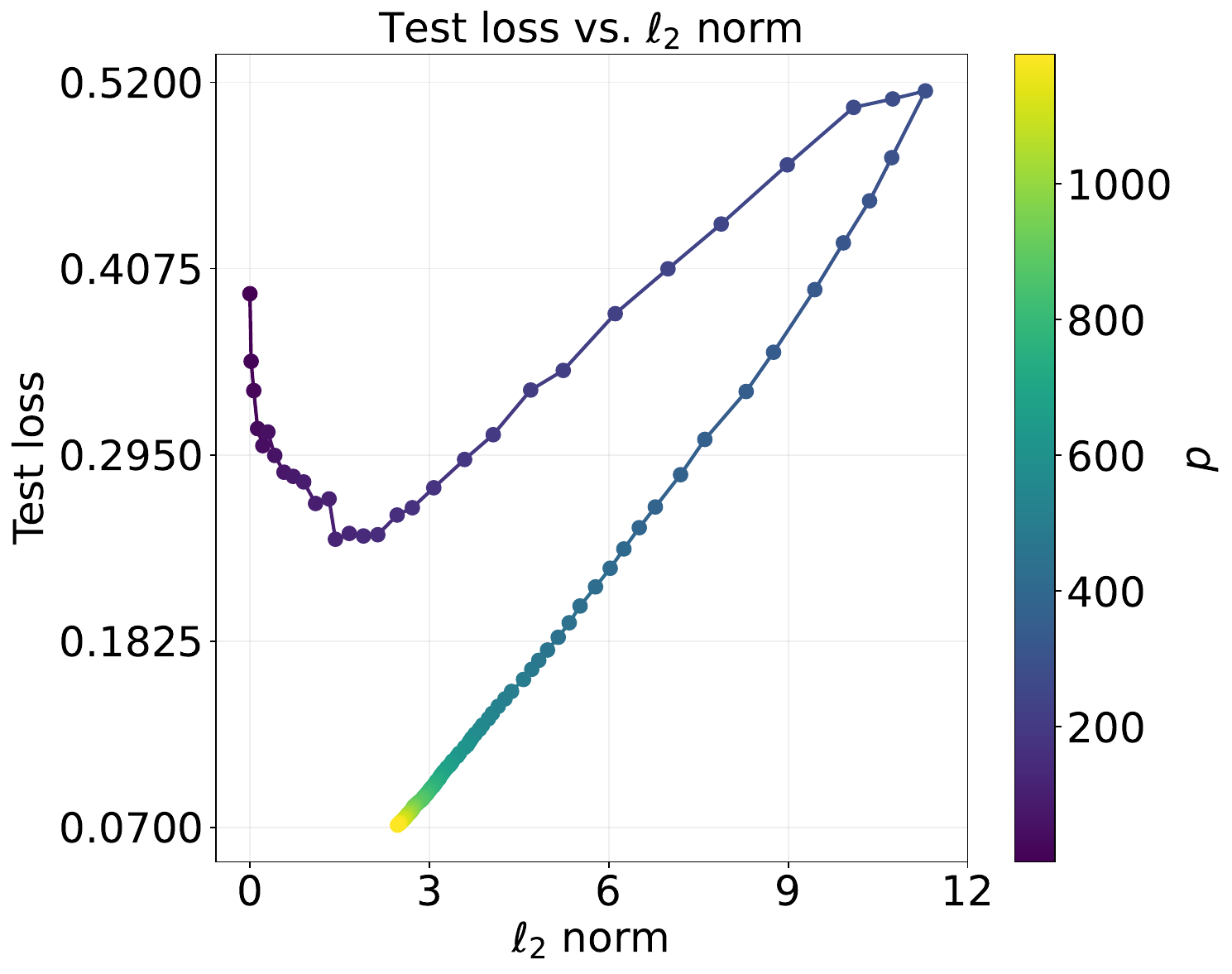}\\ $\lambda=0.1$}} &
        \makebox[0.2\textwidth]{\parbox{0.2\textwidth}{\centering \includegraphics[width=\linewidth]{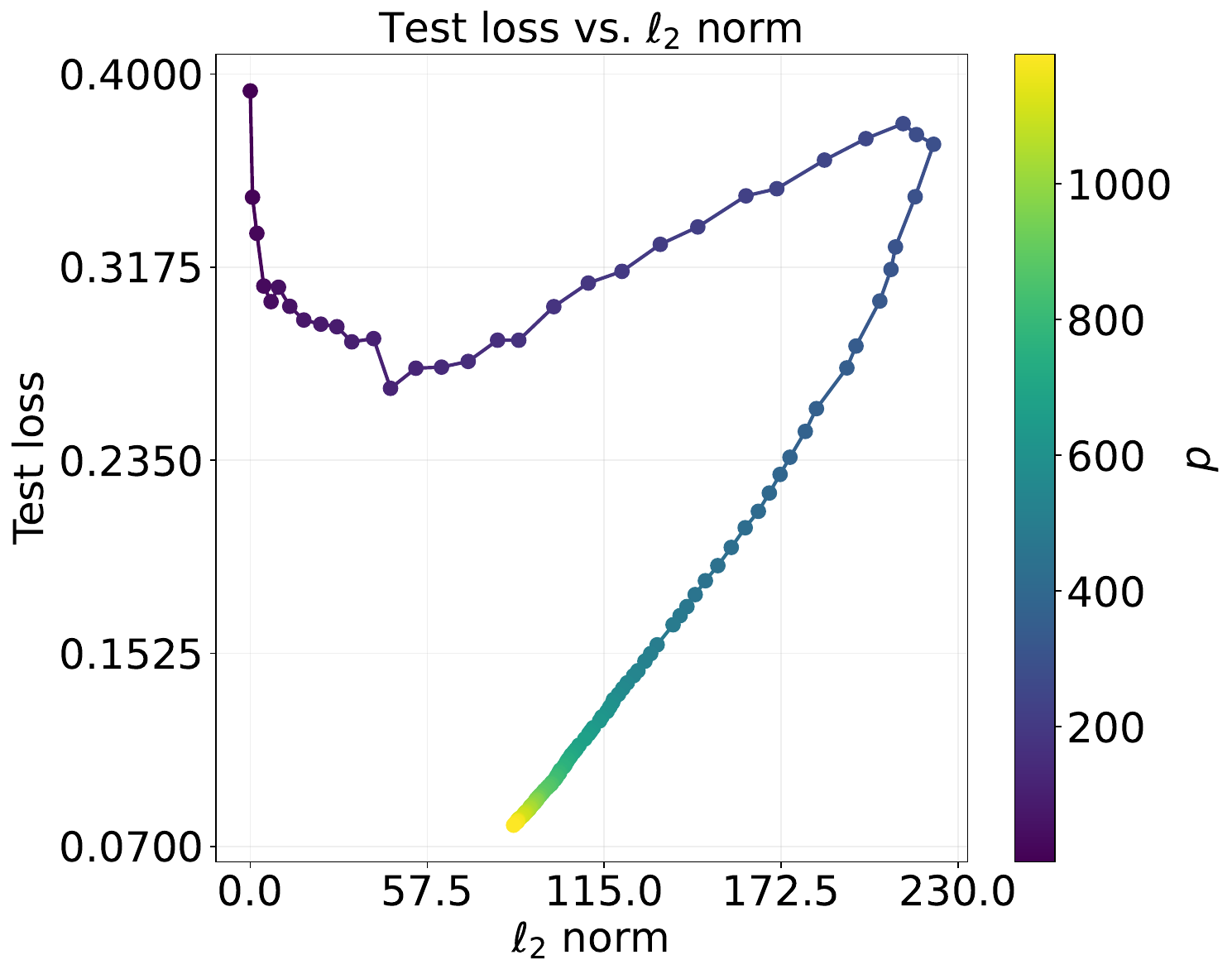}\\ $\lambda=0.01$}} &
        \makebox[0.2\textwidth]{\parbox{0.2\textwidth}{\centering \includegraphics[width=\linewidth]{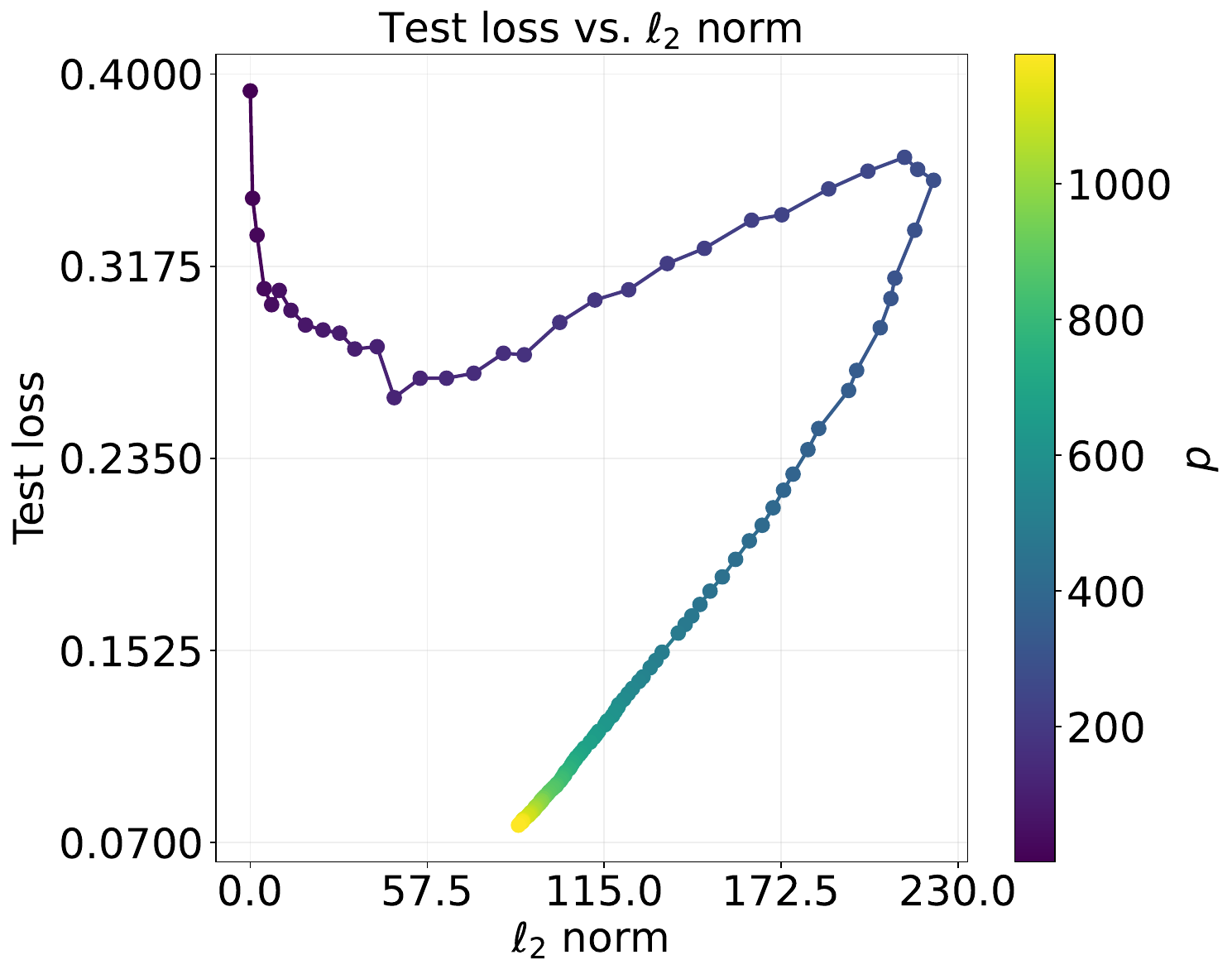}\\ $\lambda=0.1$}} &
        \makebox[0.2\textwidth]{\parbox{0.2\textwidth}{\centering \includegraphics[width=\linewidth]{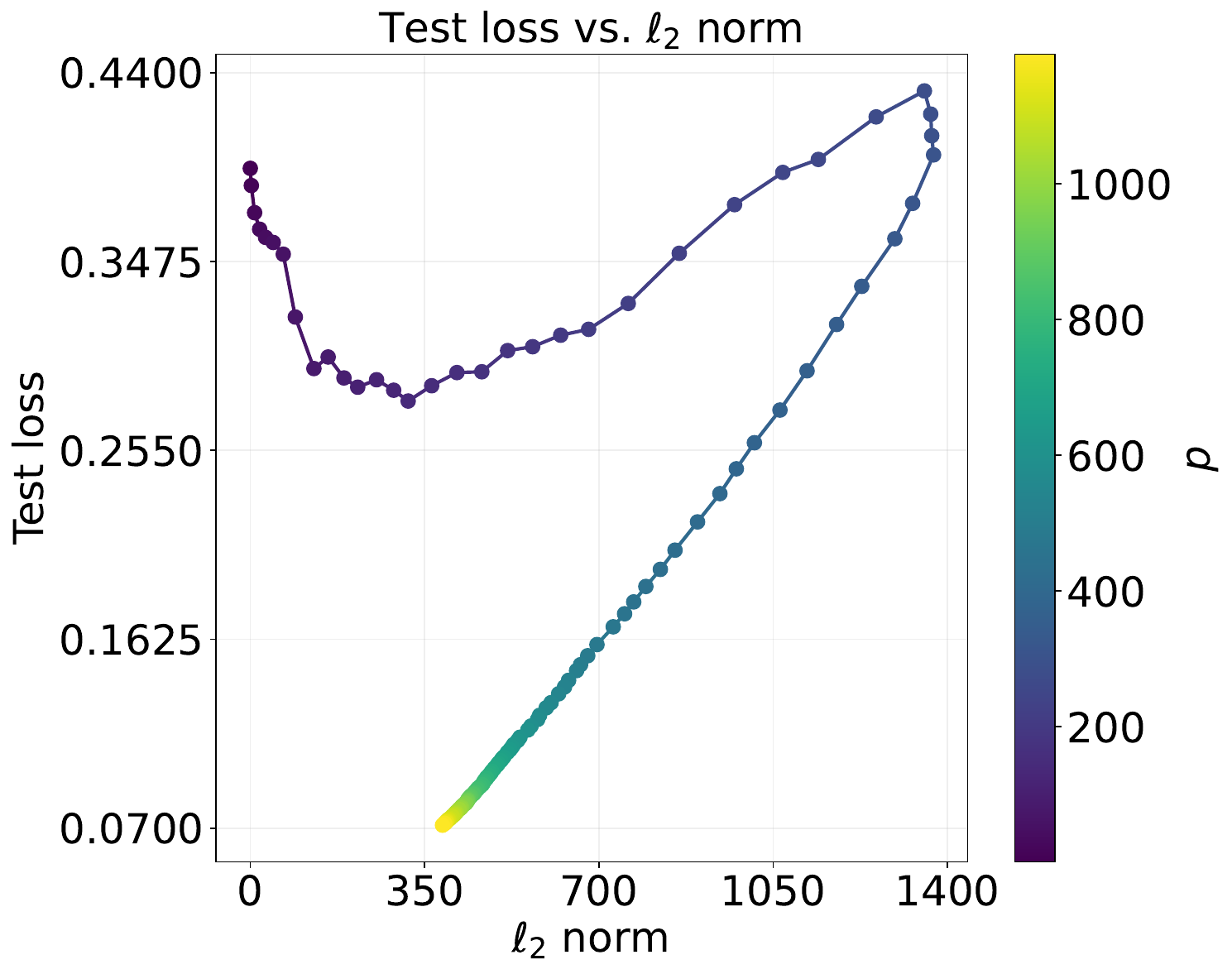}\\ $\lambda=0.1$}} \\
        
        \cmidrule{2-5}
        \multirow{1}{*}[5mm]{\rotatebox[origin=c]{90}{Ridgeless}} &
        \makebox[0.2\textwidth]{\parbox{0.2\textwidth}{\centering \includegraphics[width=\linewidth]{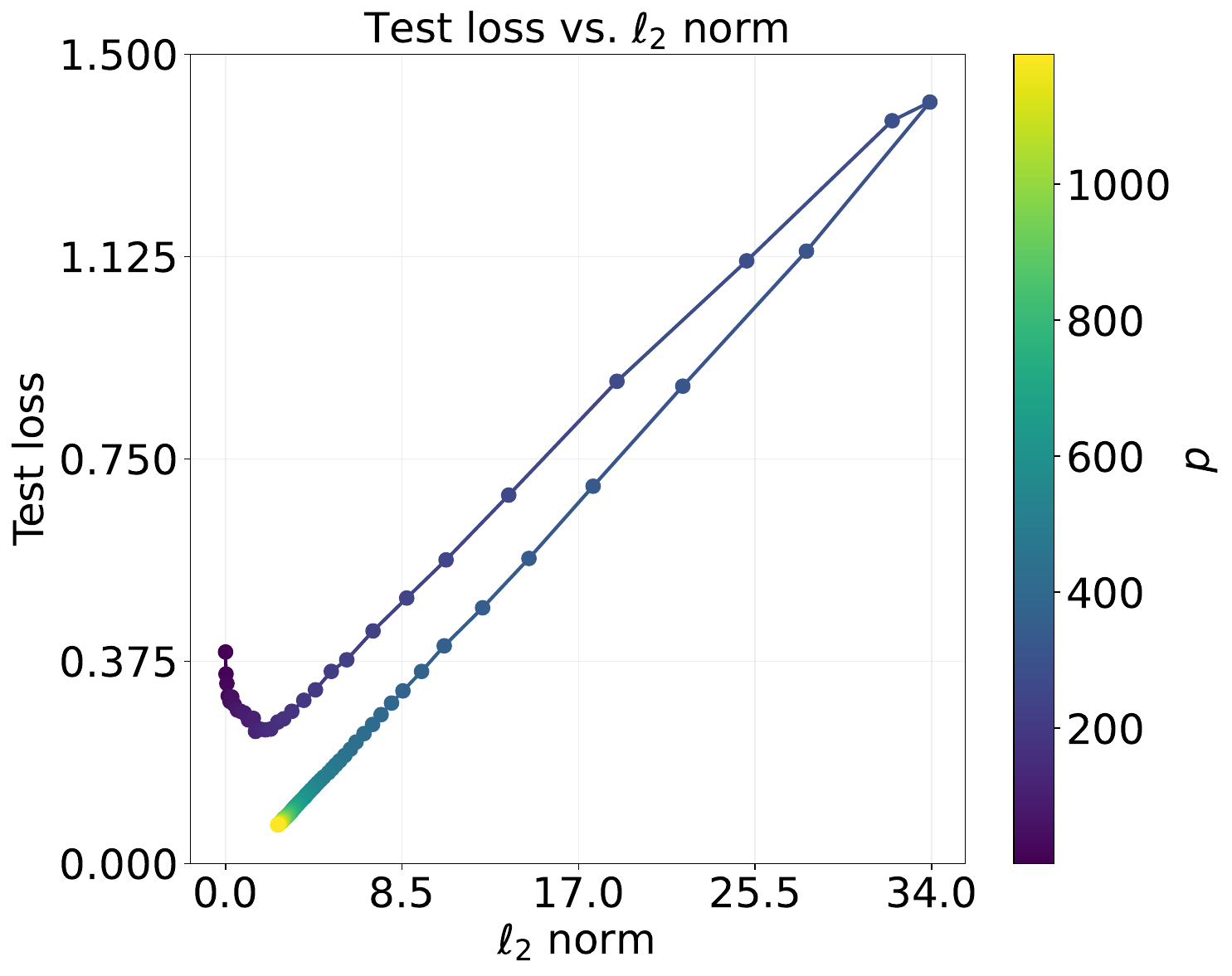}\\ $\lambda=0.01$}} &
        \makebox[0.2\textwidth]{\parbox{0.2\textwidth}{\centering \includegraphics[width=\linewidth]{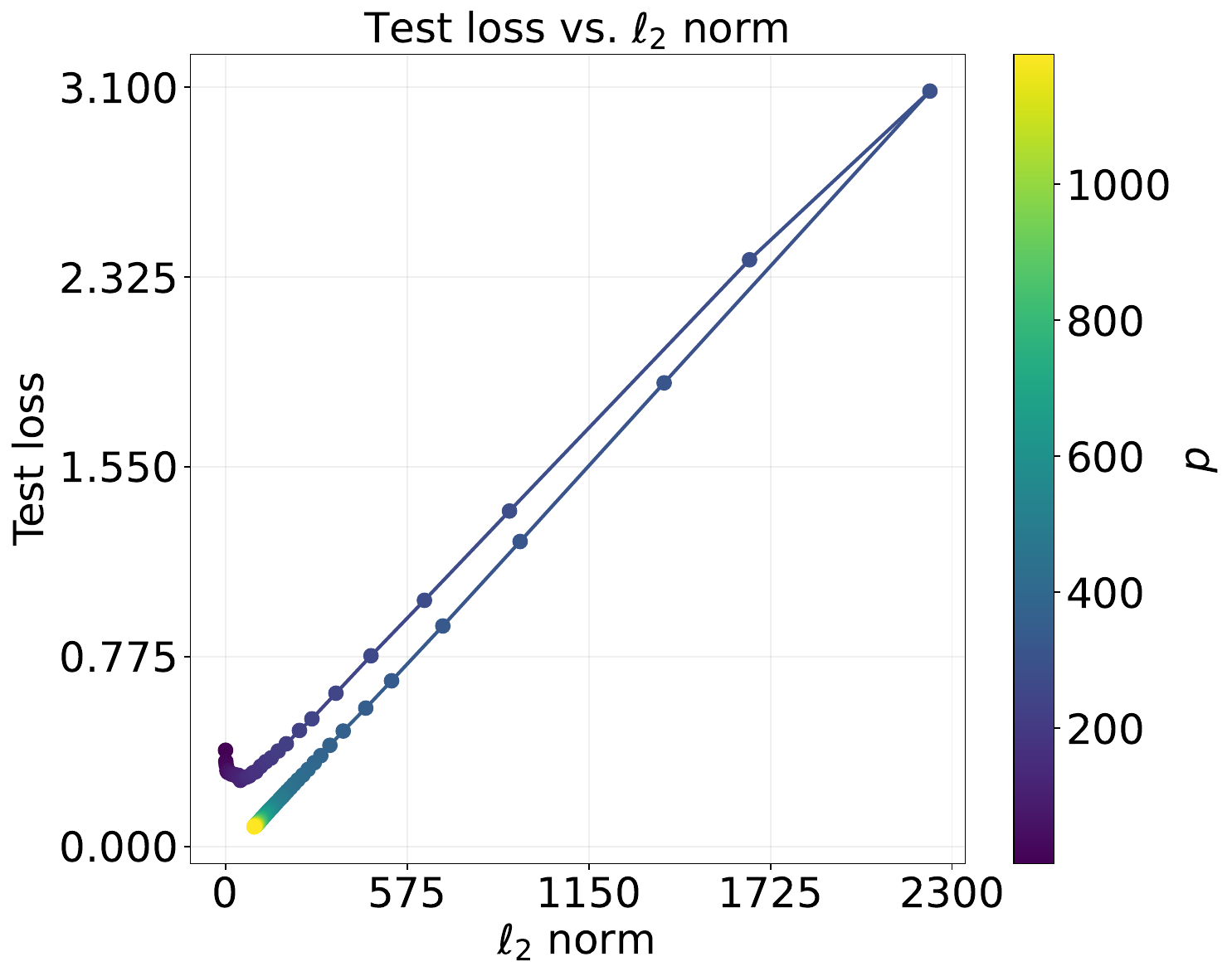}\\ $\lambda=10^{-4}$}} &
        \makebox[0.2\textwidth]{\parbox{0.2\textwidth}{\centering \includegraphics[width=\linewidth]{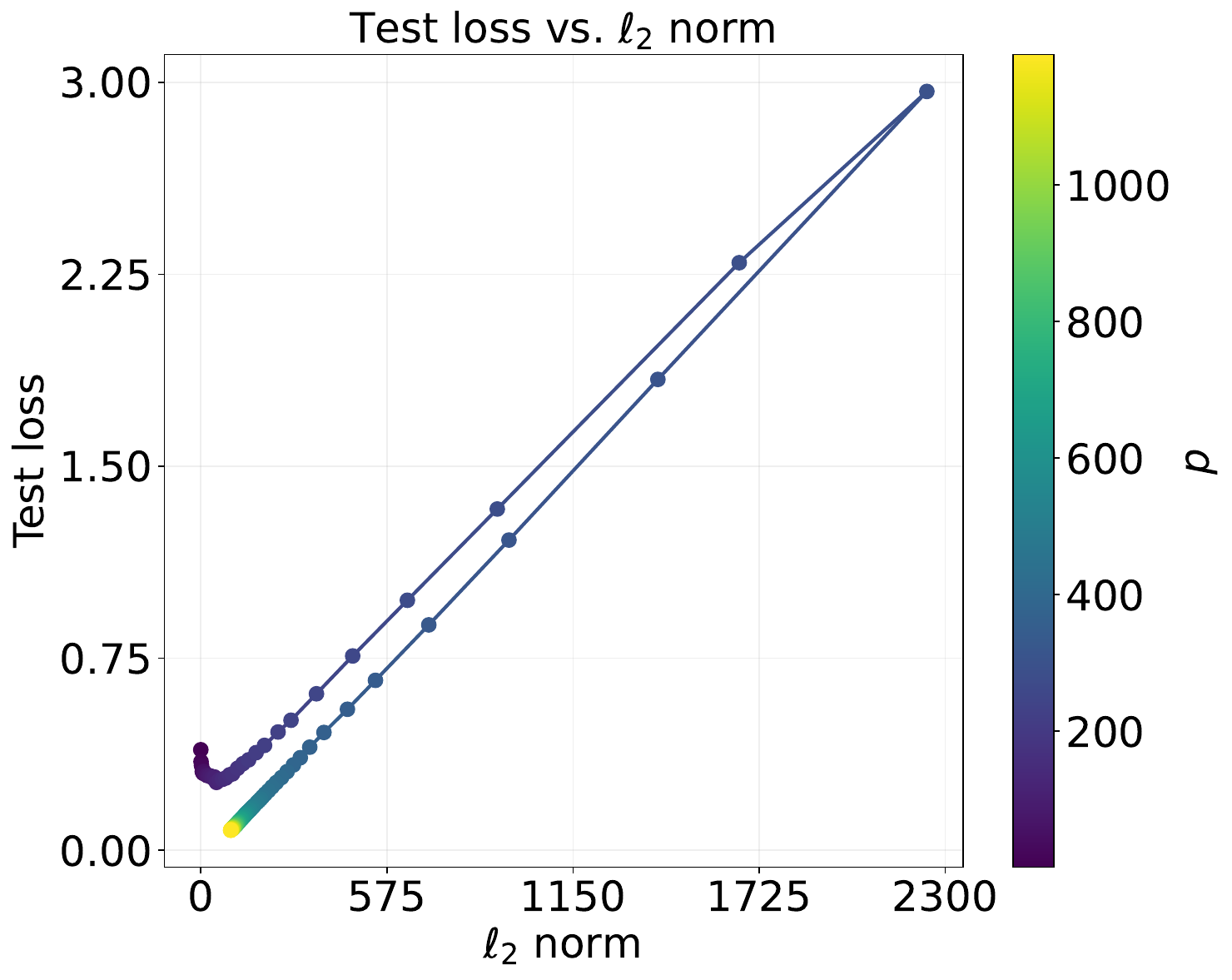}\\ $\lambda=10^{-4}$}} &
        \makebox[0.2\textwidth]{\parbox{0.2\textwidth}{\centering \includegraphics[width=\linewidth]{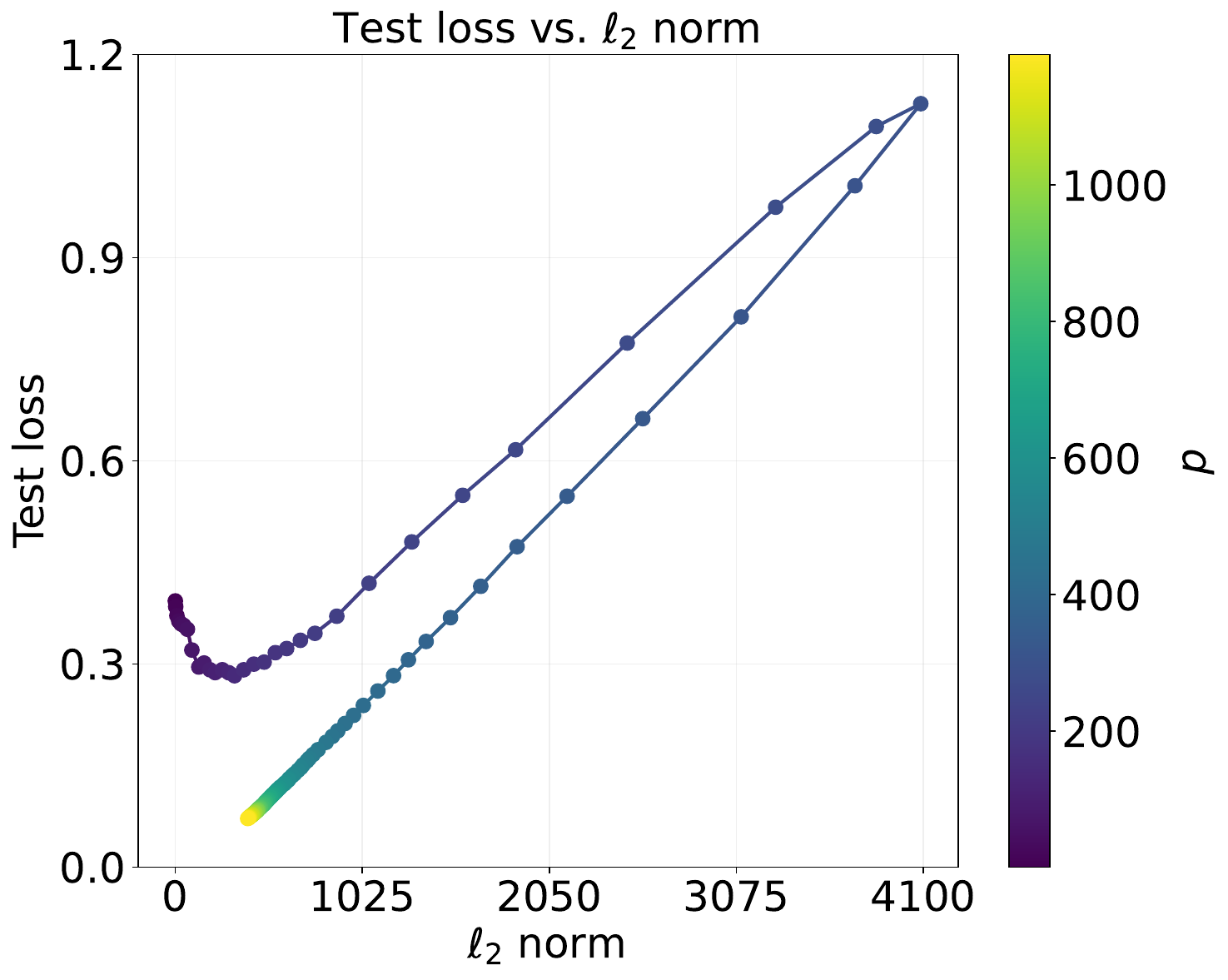}\\ $\lambda=10^{-4}$}} \\
        \bottomrule
    \end{tabular}
\end{table}

We further conducted experiments on the FashionMNIST data set \cite{xiao2017fashion}. In this practical setting, we observed a discrepancy between our experimental results and theoretical predictions. 

As in \cref{table:curve_shape_FashionMNIST_1}, for cases with substantial ridge regularization, the test error curves in the over-parameterized regime remained below those in the under-parameterized regime, consistent with our synthetic data experiments. However, in the ridgeless case (corresponding to minimum-$\ell_2$-norm interpolators), we discovered a different phenomenon:

\begin{itemize}
    \item Initially, the over-parameterized regime exhibited higher test error than the under-parameterized regime.
    \item As the number of parameters $p$ increased, the over-parameterized curve crossed under the under-parameterized curve. And this interaction formed a distinctive $\varphi$-shaped learning curve.
\end{itemize}

We attribute this behavior to the non-Gaussian distribution of input images $\bm{x}$ in FashionMNIST, which may violates our theoretical assumption like \cref{ass:concentrated_RFRR}.

\begin{table}[!ht]
    \centering
    \caption{Generalization curves (test error \emph{vs}. $\ell_2$ norm) under different activation functions. Training data $\{(\bm{x}_i, y_i)\}_{i \in [n]}$ are sampled from the \textbf{FashionMNIST} data set \cite{xiao2017fashion}, with input vectors normalized and flattened to $[-1, 1]^d$ for $d=748$. The random feature map is defined as $\varphi(\bm{x}, \bm{w}) = \varphi(\langle \bm{w}, \bm{x} \rangle)$ with random Gaussian initialization $\bm w \sim \mathcal{N}(\bm{0}, \bm{I}_d)$, where the activation function $\varphi(\cdot)$ is chosen from \texttt{ReLU}, \texttt{erf}, \texttt{tanh}, or \texttt{sigmoid}. The number of training samples is fixed at $n = 300$.}
    \label{table:curve_shape_FashionMNIST_1}
    \begin{tabular}{c c c c c}
        \toprule
        & ReLU & erf & tanh & sigmoid \\
        \midrule
        \multirow{1}{*}[5mm]{\rotatebox[origin=c]{90}{With ridge}} &
        \makebox[0.2\textwidth]{\parbox{0.2\textwidth}{\centering \includegraphics[width=\linewidth]{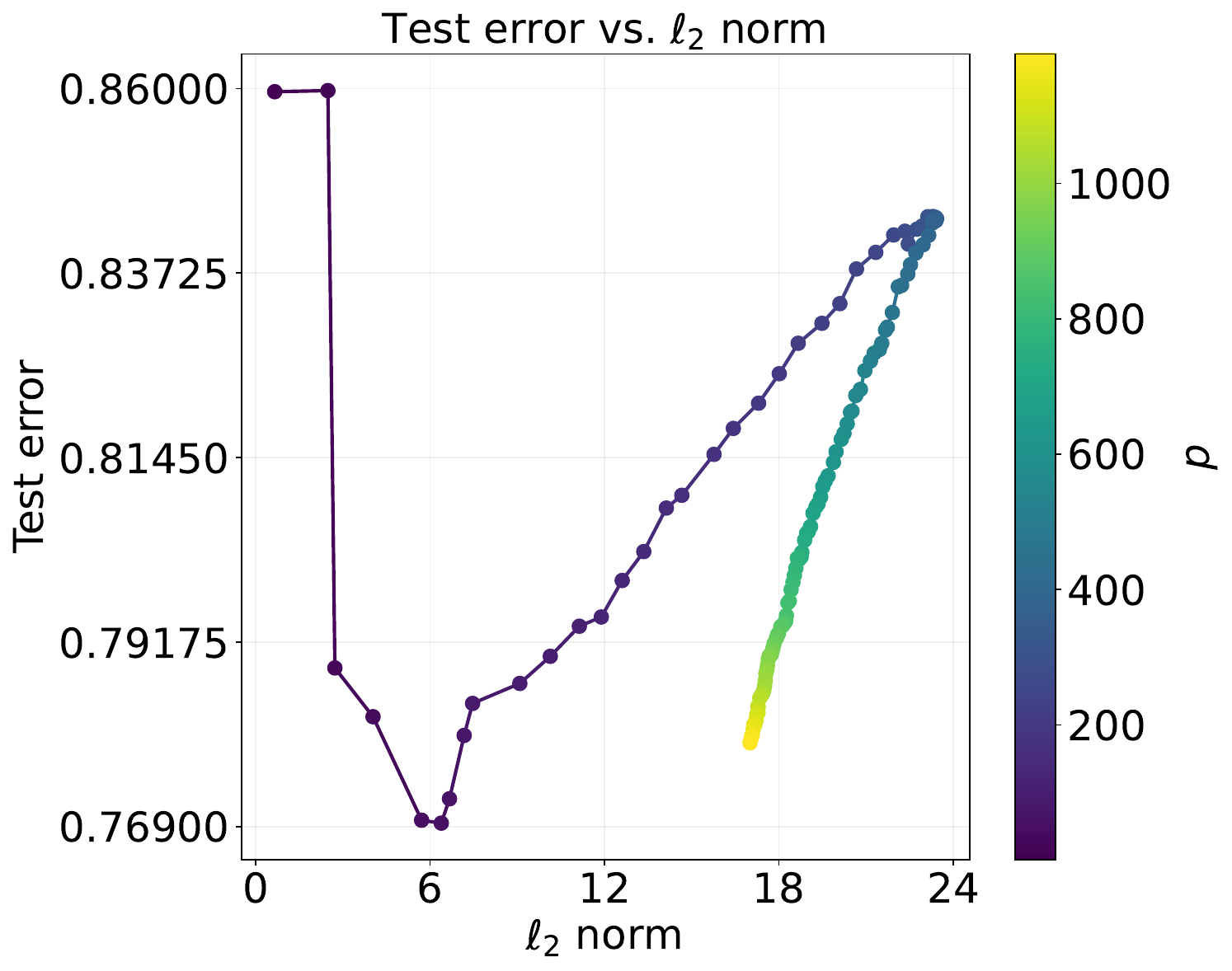}\\ $\lambda=10^{-4}$}} &
        \makebox[0.2\textwidth]{\parbox{0.2\textwidth}{\centering \includegraphics[width=\linewidth]{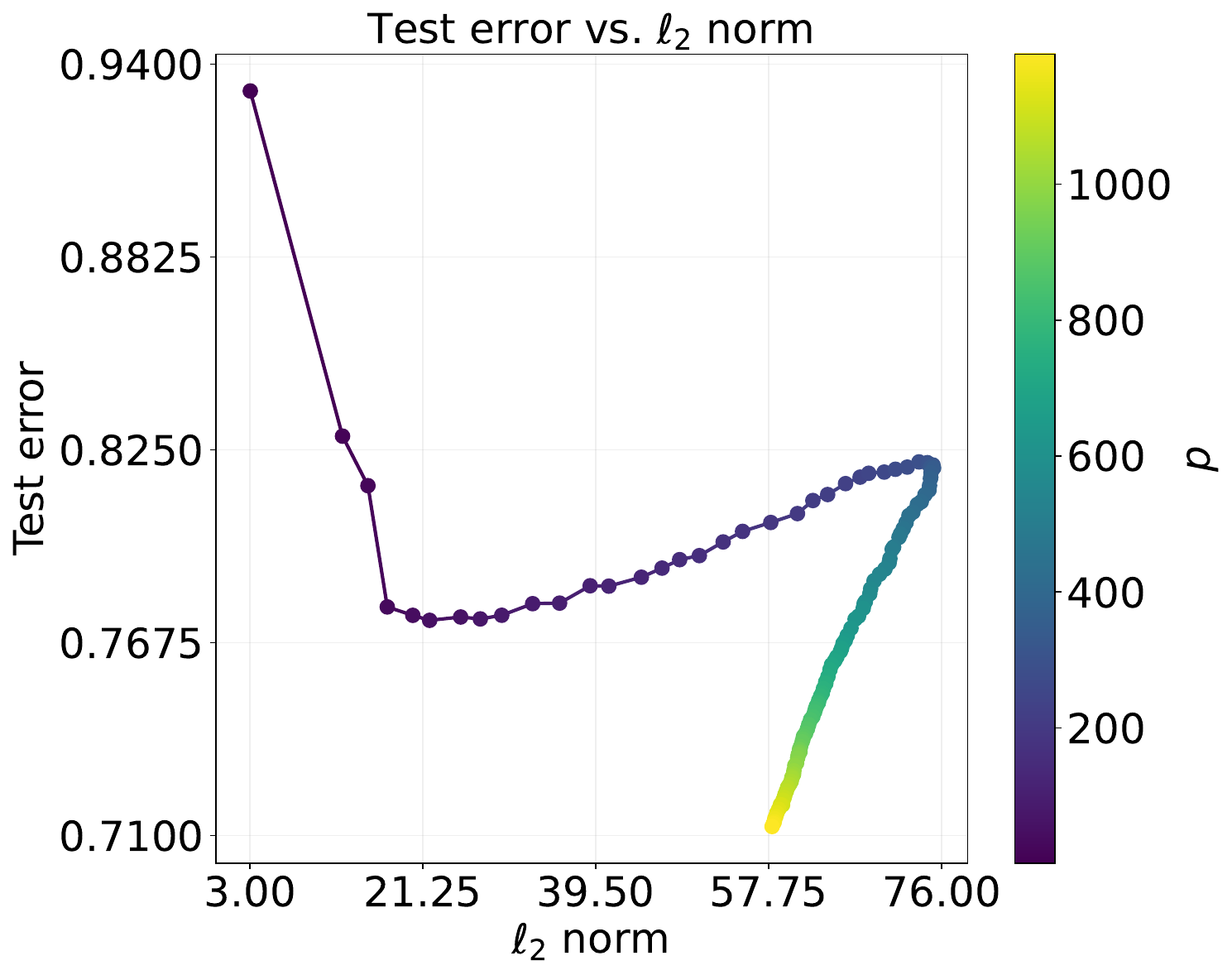}\\ $\lambda=10^{-4}$}} &
        \makebox[0.2\textwidth]{\parbox{0.2\textwidth}{\centering \includegraphics[width=\linewidth]{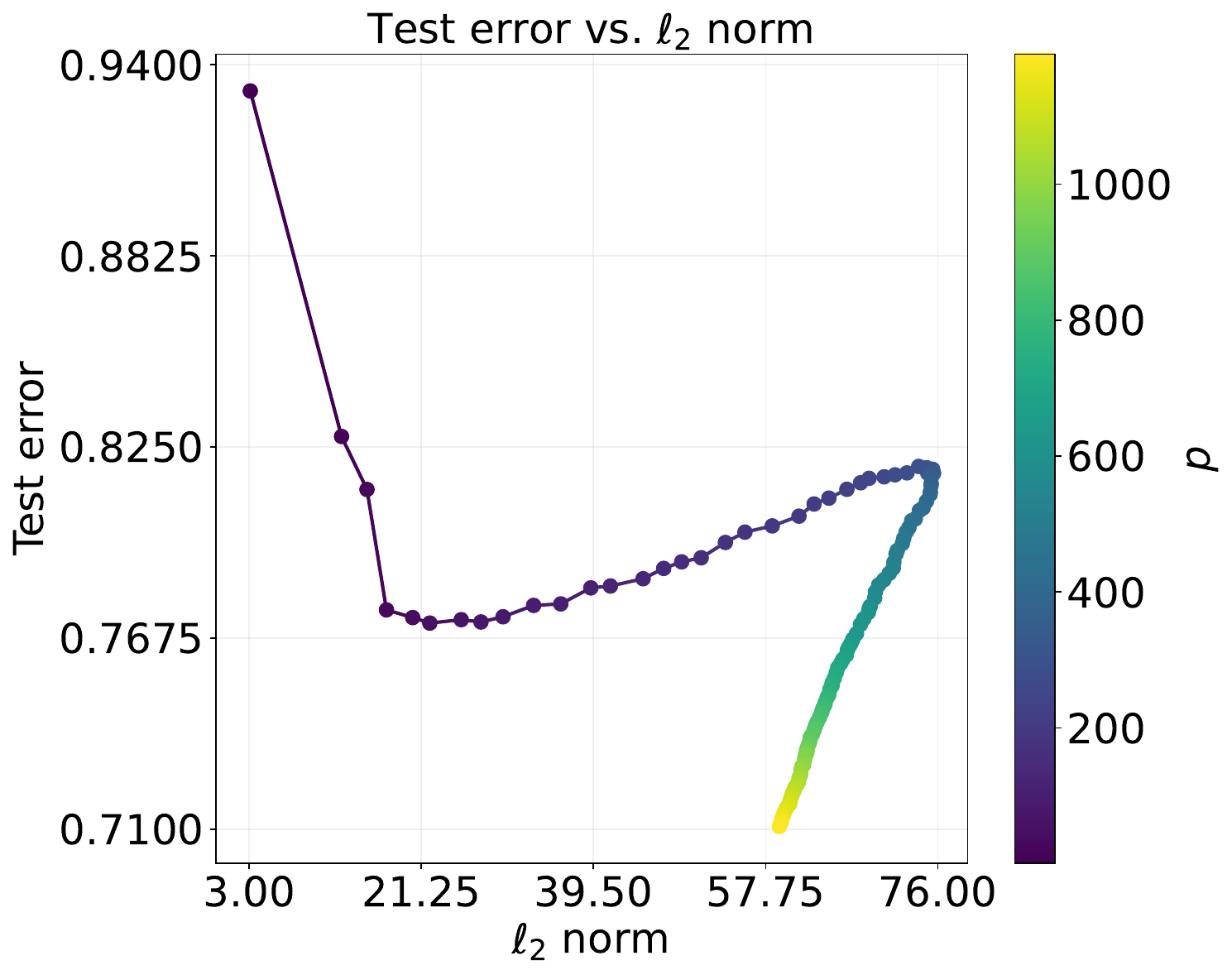}\\ $\lambda=10^{-4}$}} &
        \makebox[0.2\textwidth]{\parbox{0.2\textwidth}{\centering \includegraphics[width=\linewidth]{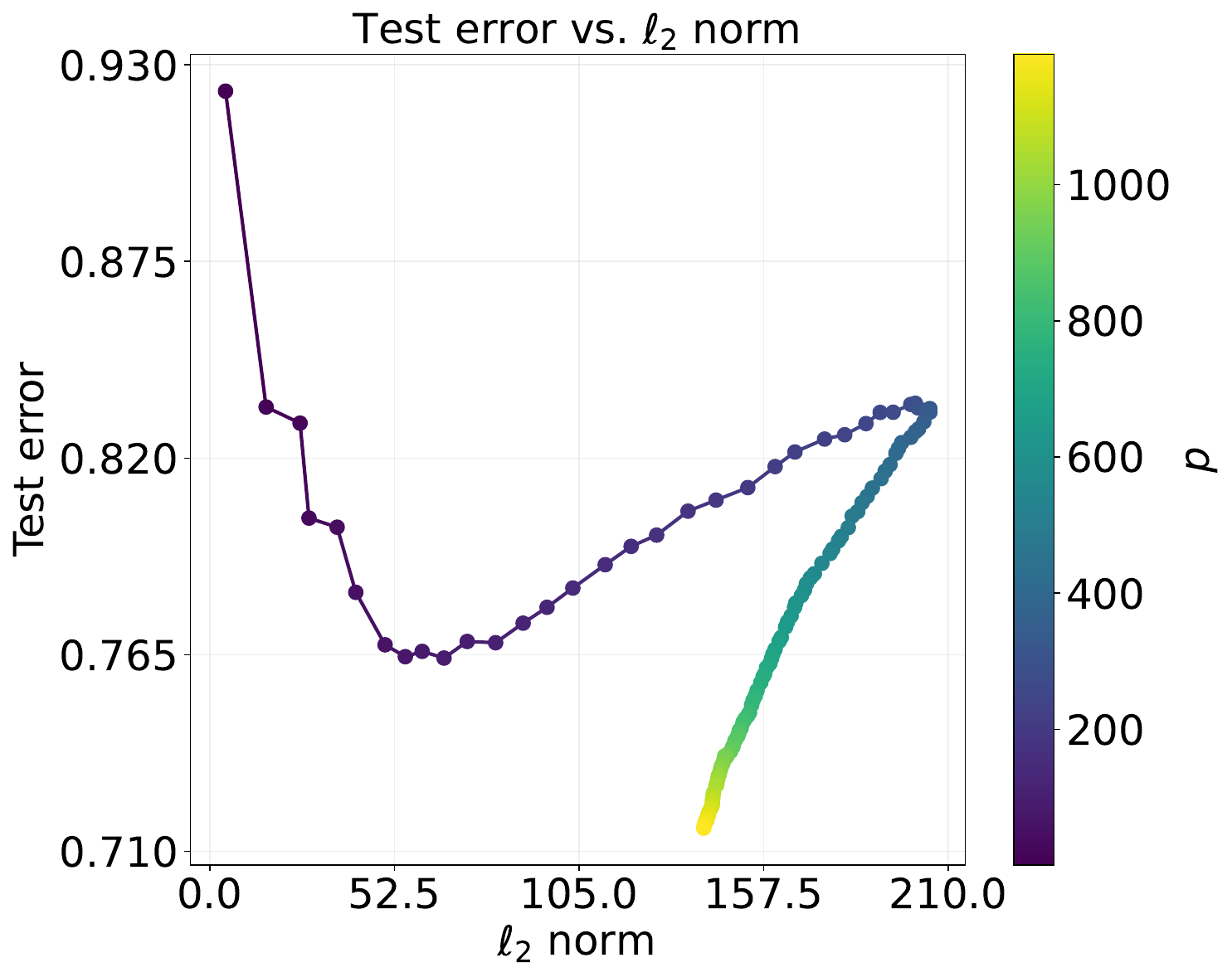}\\ $\lambda=10^{-3}$}} \\
        
        \cmidrule{2-5}
        \multirow{1}{*}[5mm]{\rotatebox[origin=c]{90}{Ridgeless}} &
        \makebox[0.2\textwidth]{\parbox{0.2\textwidth}{\centering \includegraphics[width=\linewidth]{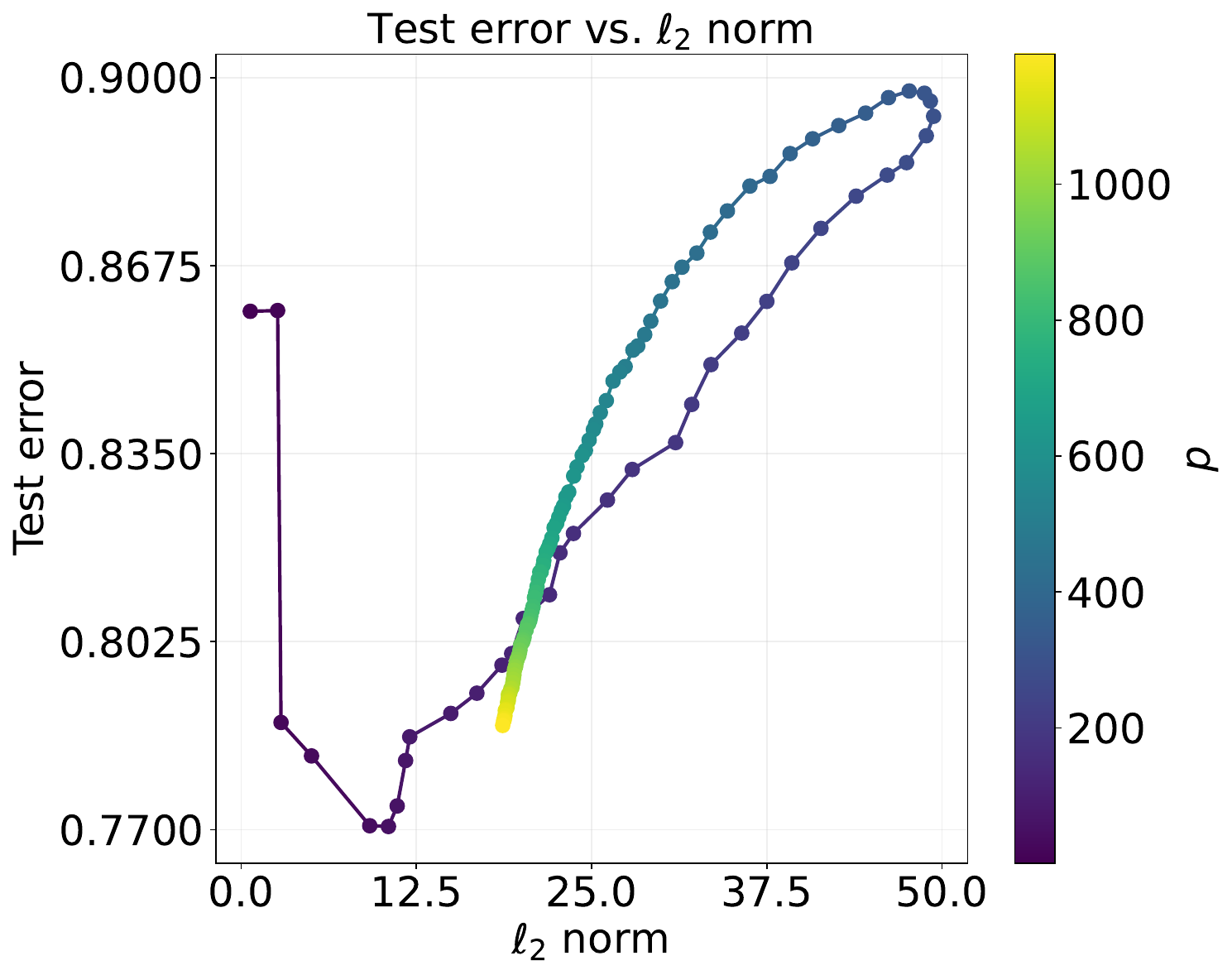}\\ $\lambda=10^{-8}$}} &
        \makebox[0.2\textwidth]{\parbox{0.2\textwidth}{\centering \includegraphics[width=\linewidth]{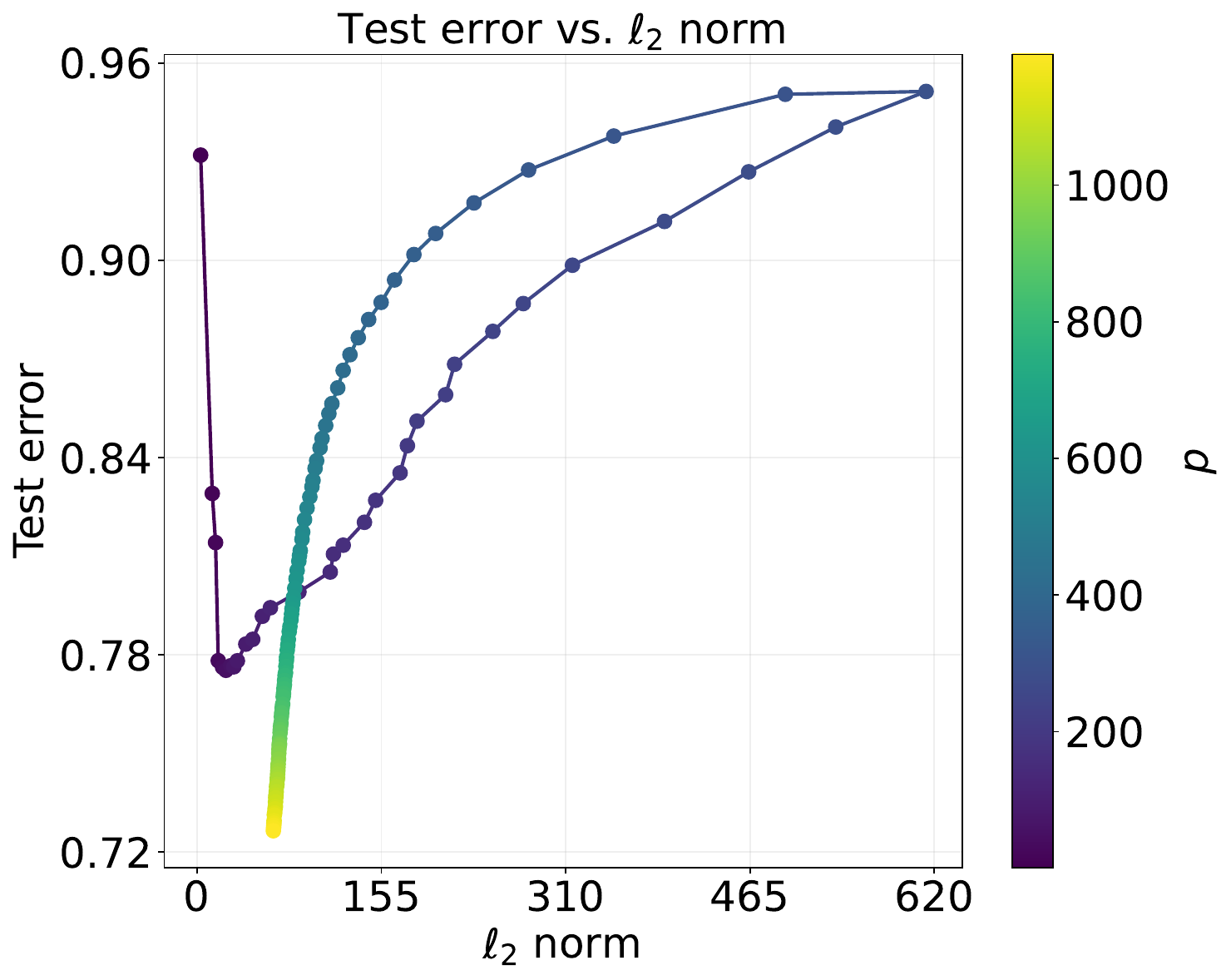}\\ $\lambda=10^{-8}$}} &
        \makebox[0.2\textwidth]{\parbox{0.2\textwidth}{\centering \includegraphics[width=\linewidth]{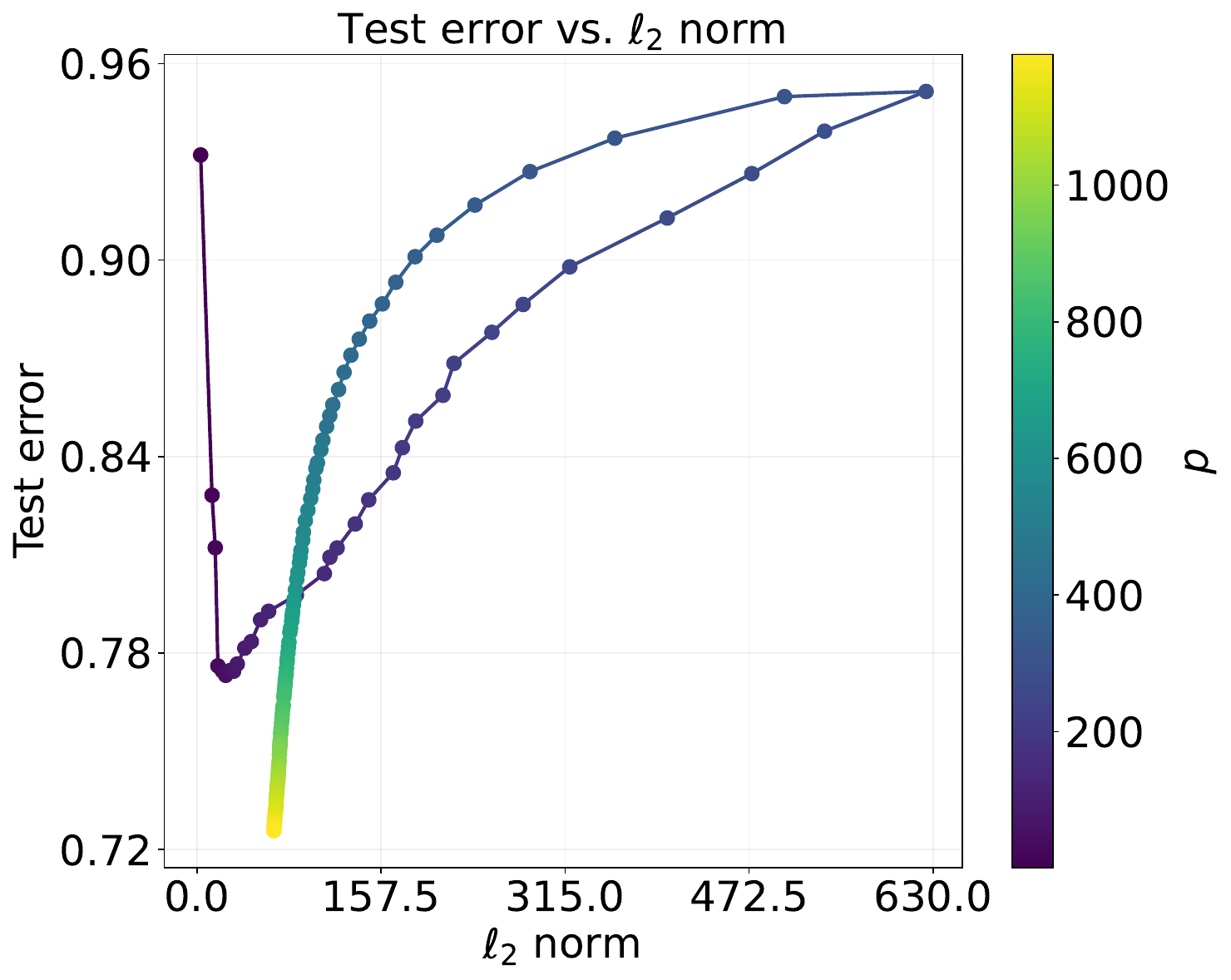}\\ $\lambda=10^{-8}$}} &
        \makebox[0.2\textwidth]{\parbox{0.2\textwidth}{\centering \includegraphics[width=\linewidth]{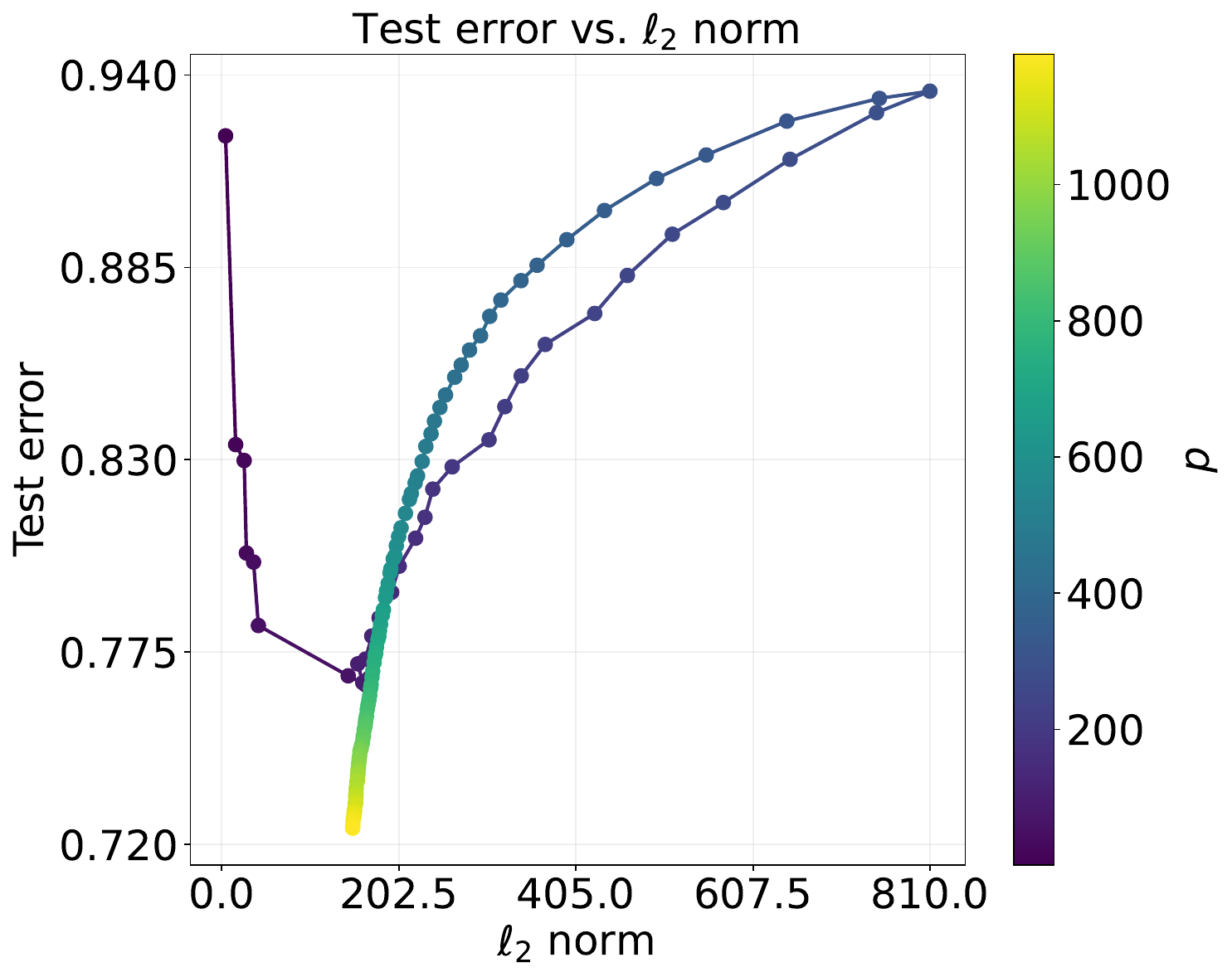}\\ $\lambda=10^{-8}$}} \\
        \bottomrule
    \end{tabular}
\end{table}

\if 0
To investigate this phenomenon more thoroughly, we examined how scaling the variance of ${\bm w}$ affects the results. When we set the variance to $d^{-1}$ as in \cref{table:curve_shape_FashionMNIST_2}, we observed that:

\begin{itemize}
    \item For \texttt{erf}, \texttt{tanh}, and \texttt{sigmoid} activation functions, the test error curves in the over-parameterized regime consistently remained below those in the under-parameterized regime.
    
    \item However, for the \texttt{ReLU} activation function, the behavior showed no significant difference between ${\bm w} \sim \mathcal{N}(\bm{0}, d^{-1}\bm{I}_d)$ and ${\bm w} \sim \mathcal{N}(\bm{0}, \bm{I}_d)$ cases.
\end{itemize}

We hypothesize that this distinction arises because ReLU violates the theoretical assumption that $\varphi(\cdot)$ must be a bounded function, unlike the other activation functions we tested.

\begin{table}[!ht]
    \centering
    \caption{Generalization curves (test error \emph{vs}. $\ell_2$ norm) under different activation functions. Training data $\{(\bm{x}_i, y_i)\}_{i \in [n]}$ are sampled from the \textbf{FashionMNIST} data set \cite{xiao2017fashion}, with input vectors normalized and flattened to $[-1, 1]^d$ for $d=748$. The random feature map is defined as $\varphi(\bm{x}, \bm{w}) = \varphi(\langle \bm{w}, \bm{x} \rangle)$, where the activation function $\varphi(\cdot)$ is chosen from \texttt{ReLU}, \texttt{erf}, \texttt{tanh}, or \texttt{sigmoid}. Each feature weight vector $\bm{w}_i$ is sampled independently from $\mathcal{N}(\bm{0}, d^{-1}\bm{I}_d)$. The number of training samples is fixed at $n = 300$.}
    \label{table:curve_shape_FashionMNIST_2}
    \begin{tabular}{c c c c c}
        \toprule
        & ReLU & erf & tanh & sigmoid \\
        \midrule
        \multirow{1}{*}[5mm]{\rotatebox[origin=c]{90}{With ridge}} &
        \makebox[0.2\textwidth]{\parbox{0.2\textwidth}{\centering \includegraphics[width=\linewidth]{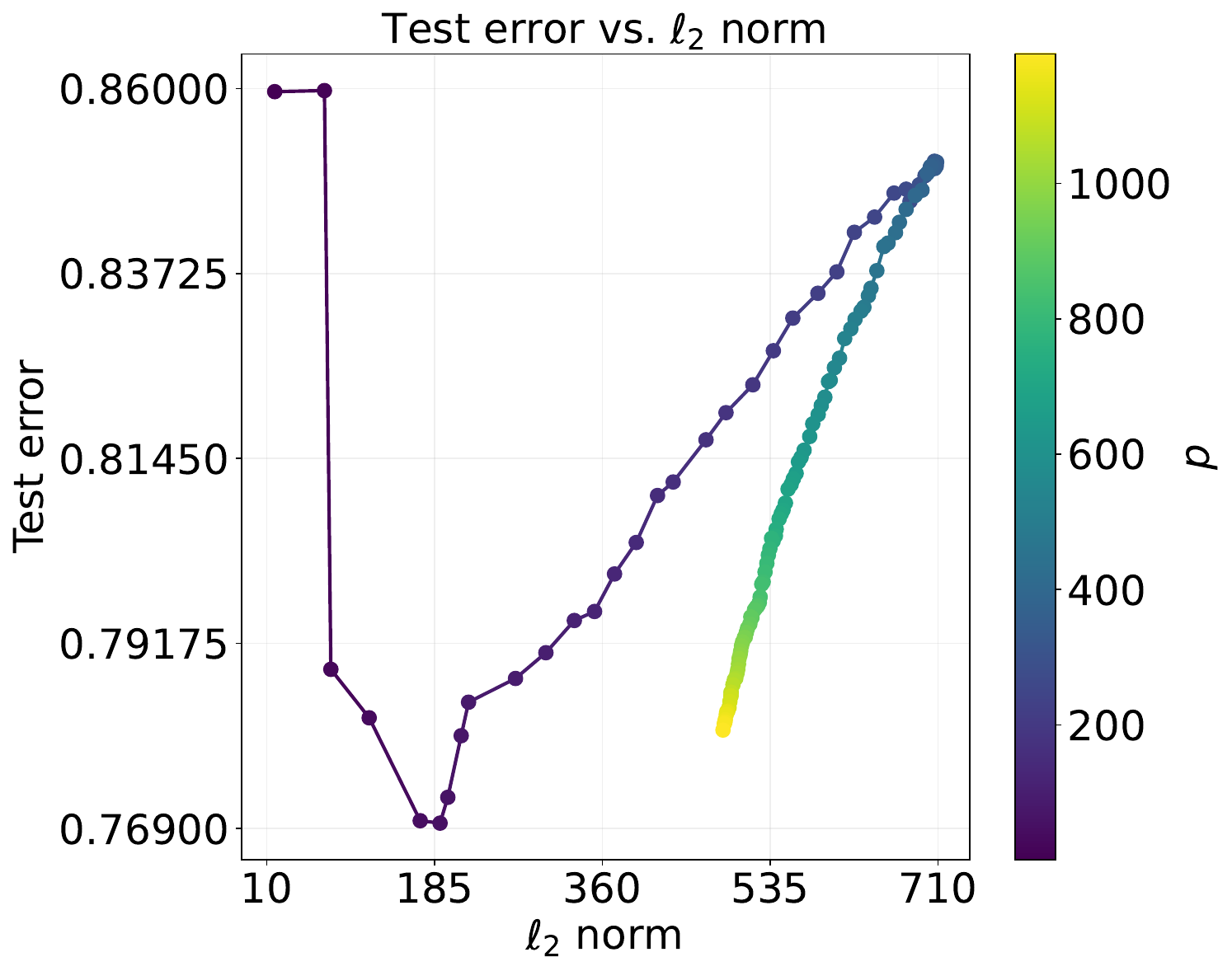}\\ $\lambda=10^{-4}$}} &
        \makebox[0.2\textwidth]{\parbox{0.2\textwidth}{\centering \includegraphics[width=\linewidth]{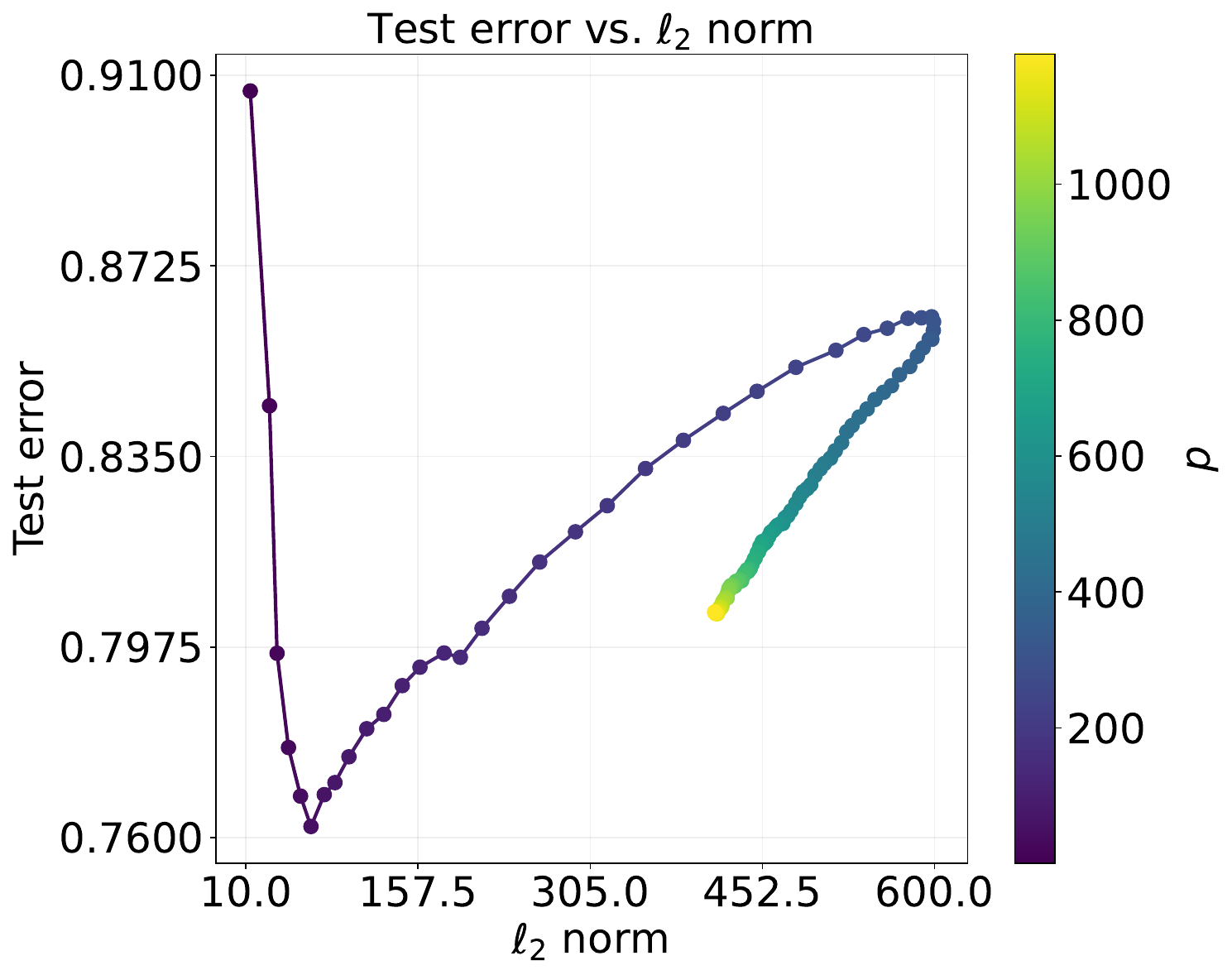}\\ $\lambda=10^{-4}$}} &
        \makebox[0.2\textwidth]{\parbox{0.2\textwidth}{\centering \includegraphics[width=\linewidth]{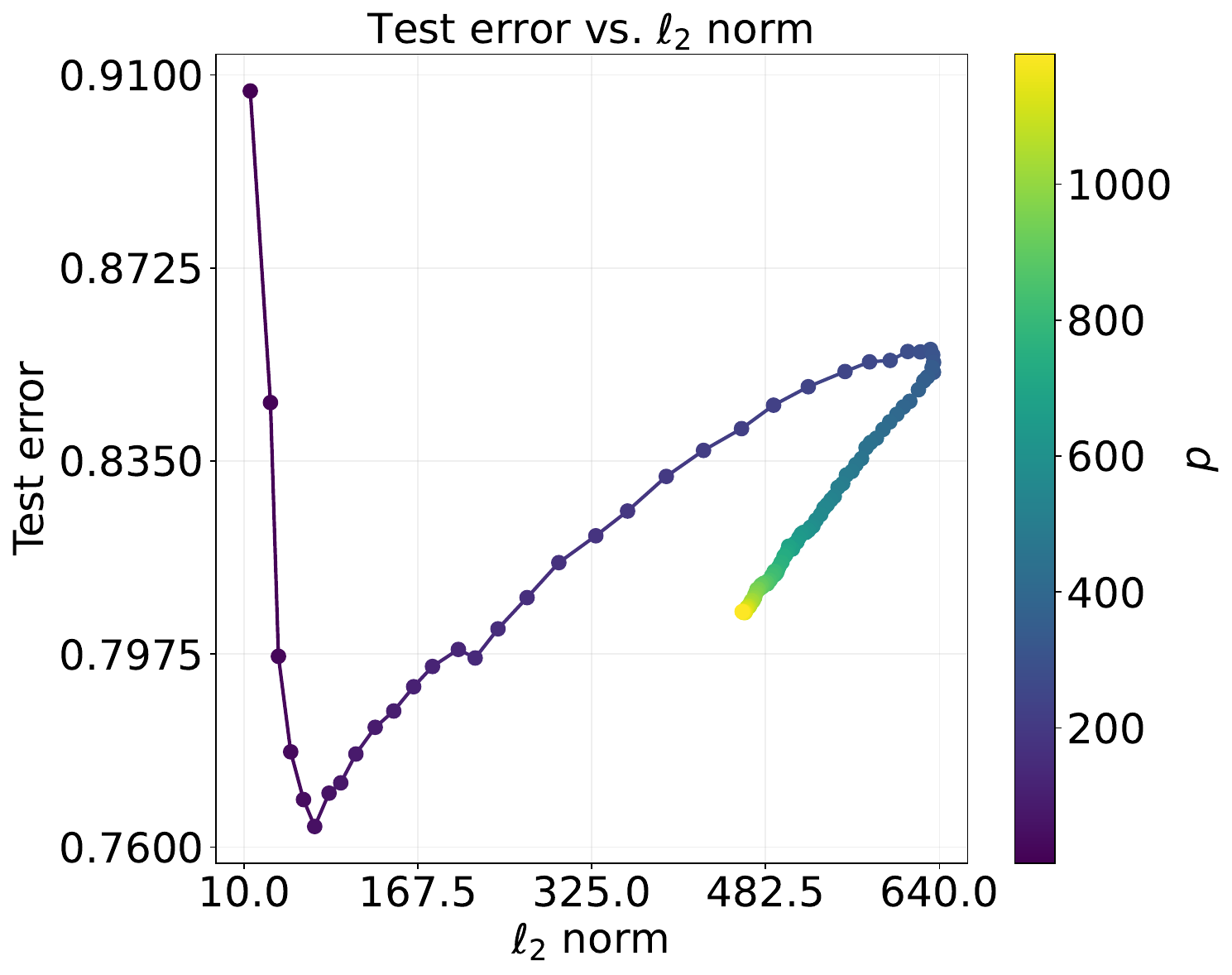}\\ $\lambda=10^{-4}$}} &
        \makebox[0.2\textwidth]{\parbox{0.2\textwidth}{\centering \includegraphics[width=\linewidth]{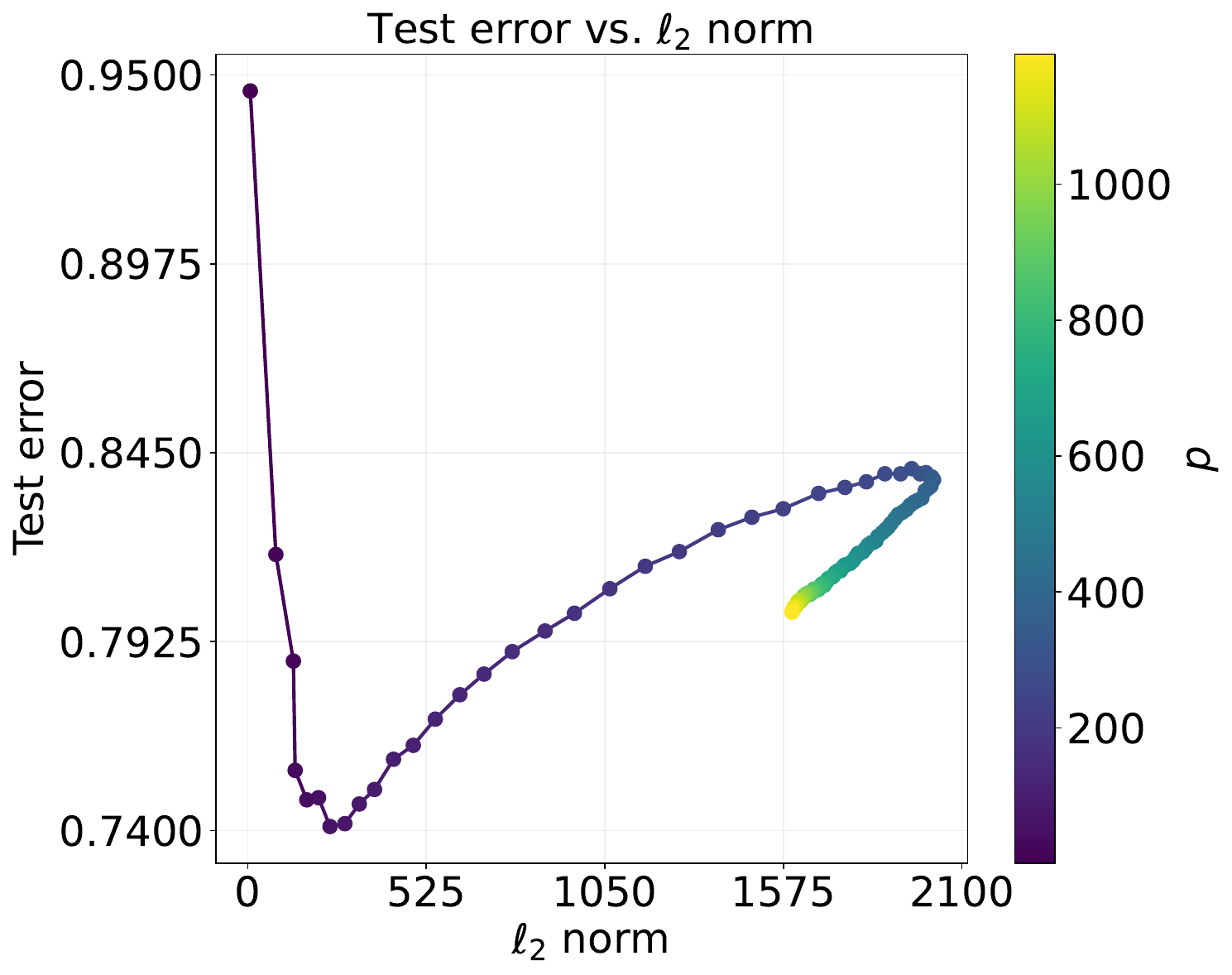}\\ $\lambda=10^{-3}$}} \\
        
        \cmidrule{2-5}
        \multirow{1}{*}[5mm]{\rotatebox[origin=c]{90}{Ridgeless}} &
        \makebox[0.2\textwidth]{\parbox{0.2\textwidth}{\centering \includegraphics[width=\linewidth]{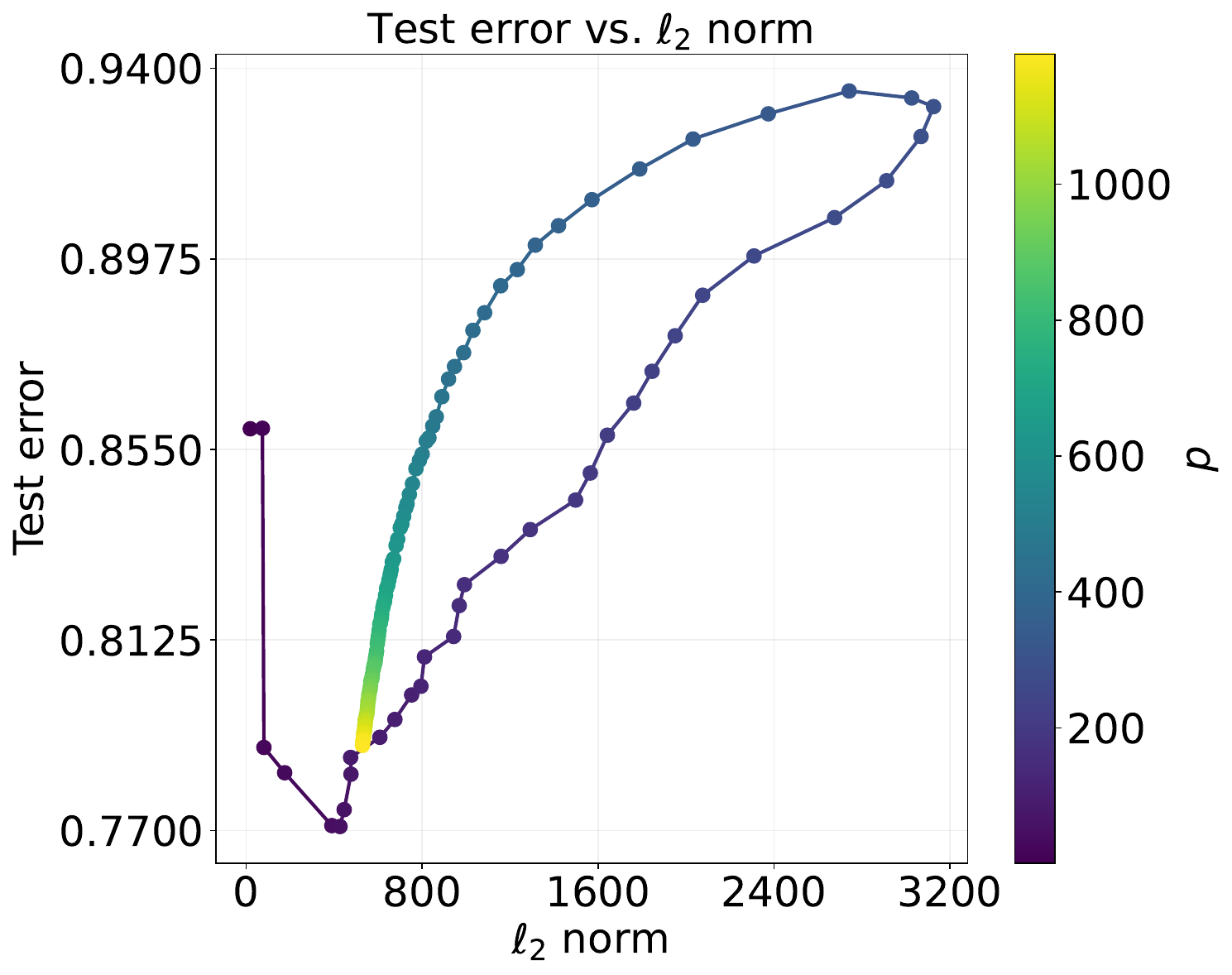}\\ $\lambda=10^{-8}$}} &
        \makebox[0.2\textwidth]{\parbox{0.2\textwidth}{\centering \includegraphics[width=\linewidth]{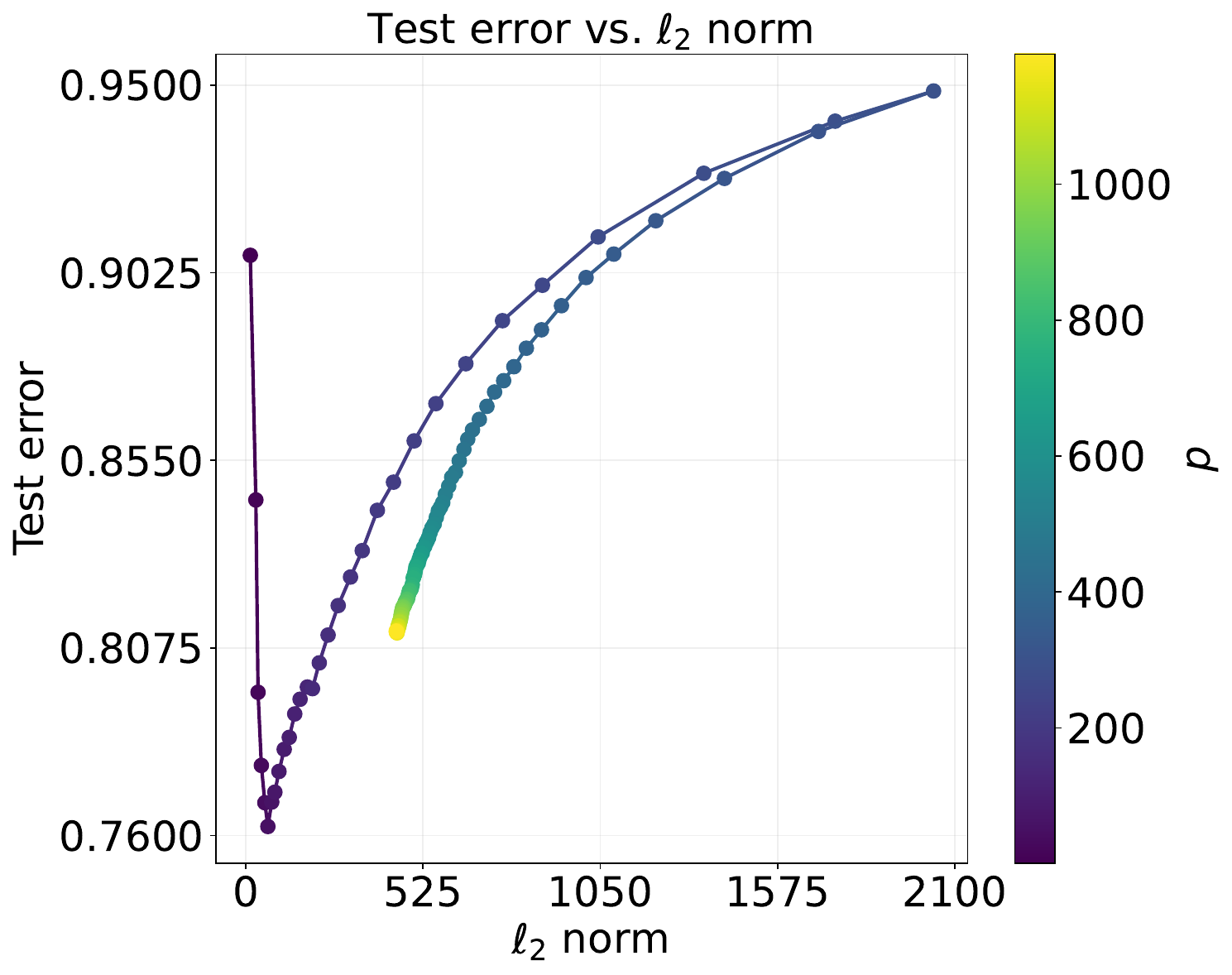}\\ $\lambda=10^{-8}$}} &
        \makebox[0.2\textwidth]{\parbox{0.2\textwidth}{\centering \includegraphics[width=\linewidth]{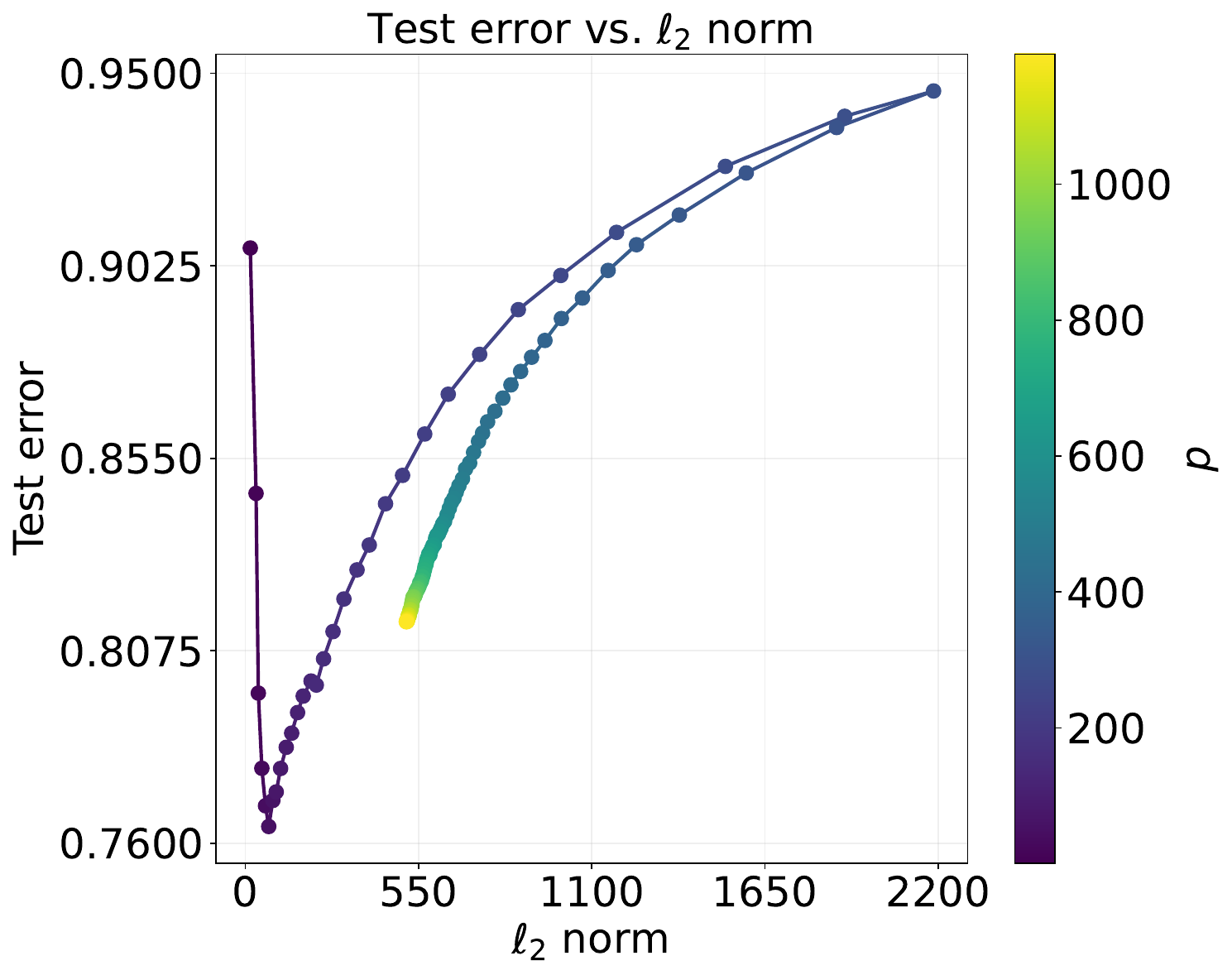}\\ $\lambda=10^{-8}$}} &
        \makebox[0.2\textwidth]{\parbox{0.2\textwidth}{\centering \includegraphics[width=\linewidth]{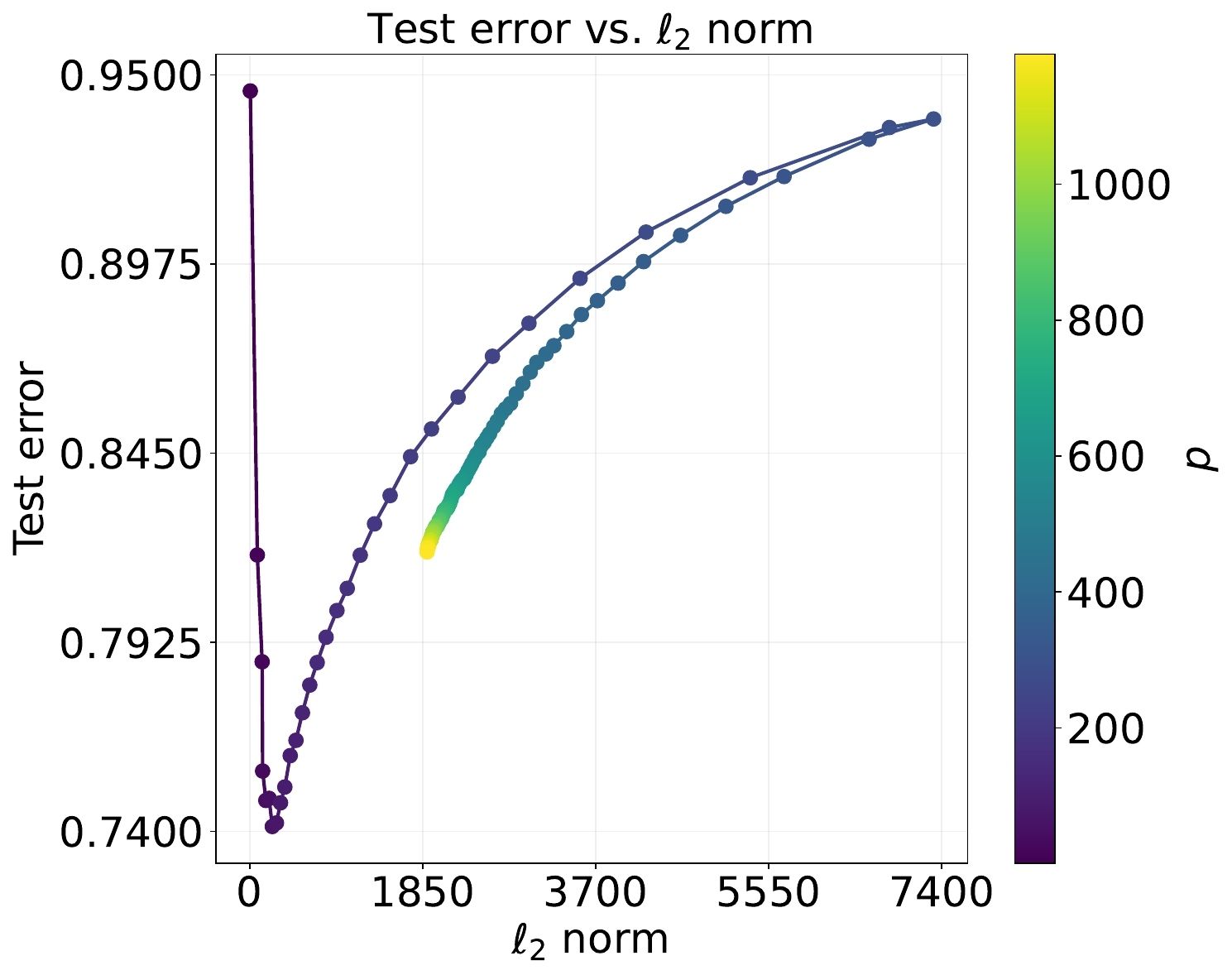}\\ $\lambda=10^{-8}$}} \\
        \bottomrule
    \end{tabular}
\end{table}
\fi

In summary, while our theoretical framework may not fully capture the generalization behavior when the dataset or activation functions deviate significantly from our assumptions, this does not undermine the core contributions of our work. When the data is well-behaved and aligns with our assumptions, our theory provides a highly accurate and effective characterization of the generalization curves under norm-based capacity control in the under-parameterized regime.

\subsection{Discussion on approaches to modifying the norm}\label{app:discussion_2}

Regarding approaches to controlling model norm, one method involves fixing the regularization strength while varying the model parameter count $p$, which serves as the primary focus of this paper. Alternatively, one can fix $p$ and constrain the weight norm to specific magnitudes. We later show that this approach is mathematically equivalent to fixing the parameter count while varying the regularization strength. In this section, we primarily focus on the latter approach.

We consider the problem of minimizing the squared loss under an \(\ell_2\)-norm constraint on the coefficients:
\[
\min_{\bm a} \|{\bm y} - {\bm Z}{\bm a}\|^2 \quad \text{subject to} \quad \|{\bm a}\|_2^2 = B^2.
\]

To incorporate the constraint, we introduce a Lagrange multiplier \(\lambda\) and define the Lagrangian:
\[
\mathcal{L}({\bm a}, \lambda) = \|{\bm y} - {\bm Z}{\bm a}\|^2 + \lambda \left( \|{\bm a}\|_2^2 - B^2 \right).
\]

Taking the gradient of \(\mathcal{L}\) with respect to \({\bm a}\) and setting it to zero yields the first-order optimality condition:
\[
\nabla_{\bm a} \mathcal{L} = -2 {\bm Z}^{\!\top} {\bm y} + 2 {\bm Z}^{\!\top} {\bm Z} {\bm a} + 2 \lambda {\bm a} = \bm 0.
\]

Solving this equation gives the solution:
\[
\hat{\bm a} = ({\bm Z}^{\!\top} {\bm Z} + \lambda I)^{-1} {\bm Z}^{\!\top} {\bm y}, \quad \text{subject to} \quad \|\hat{\bm a}\|_2^2 = B^2.
\]

\textbf{Relation to Ridge Regression:} The solution resembles ridge regression, but $\lambda$ is chosen to strictly satisfy $\|\hat{\bm a}\|_2 = B$ rather than being a hyperparameter. $\lambda$ corresponds one-to-one with $B$, since $\lambda$ and $\|\hat{\bm a}\|_2^2$ are in one-to-one correspondence if $\lambda \geq 0$ ($\frac{\partial \|\hat{\bm a}\|_2^2}{\partial \lambda} = -2{\bm y}^{\!\top}{\bm Z}({\bm Z}^{\!\top}{\bm Z}+\lambda {\bm I})^{-3}{\bm Z}^{\!\top}{\bm y} \leq 0$). Therefore, we can say that changing the constraint $B$ (the restriction on $\|{\bm a}\|_2$) is equivalent to changing the regularization strength $\lambda$.

We conducted experiments on the random feature model by fixing the number of training samples and the aspect ratio $\gamma$, and varying the regularization parameter $\lambda$ to control the norm of the estimator. We then plotted the curves showing the relationships among test risk, norm, and $\lambda$ as in \cref{fig:vary_lambda} (\cref{fig:vary_lambda_a,fig:vary_lambda_b,fig:vary_lambda_c} for under-parameterized regimes and \cref{fig:vary_lambda_d,fig:vary_lambda_e,fig:vary_lambda_f} for over-parameterized regimes). We can find that the norm is monotonically decreasing with the increasing $\lambda$, see \cref{fig:vary_lambda_b,fig:vary_lambda_e}. In fact, the relationship between the estimator's norm and the regularization parameter is called L-curve \cite{hansen1992analysis}.
in both under- and over-parameterized ($\lambda < 1$ or $\lambda > 1$), the test risk is always a U-shaped curve of the regularization parameter $\lambda$ or norm, see \cref{fig:vary_lambda_a,fig:vary_lambda_c} and \cref{fig:vary_lambda_d,fig:vary_lambda_f}, respectively.

To validate these observations on real data, we also conducted complementary experiments using the MNIST data set \cite{lecun1998gradient}. As shown in \cref{fig:vary_lambda_MNIST}, all of the above phenomena persist.

Moreover, in modern ML practice, capacity can be steered by standard regularization—e.g., weight decay and early stopping—which explicitly or implicitly constrain model norms. Optimization itself also induces implicit regularization, notably with SGD. Recent work has begun to precisely characterize test risk under SGD for linear models~\cite{paquette20244+,paquette2024homogenization}. Extending our deterministic-equivalent framework to incorporate such optimization effects is both important and challenging, and we leave this for future work.

\begin{figure*}[htp]
    \centering
    \subfigure[Test risk vs. $\lambda$]{\label{fig:vary_lambda_a}
        \includegraphics[width=0.25\textwidth]{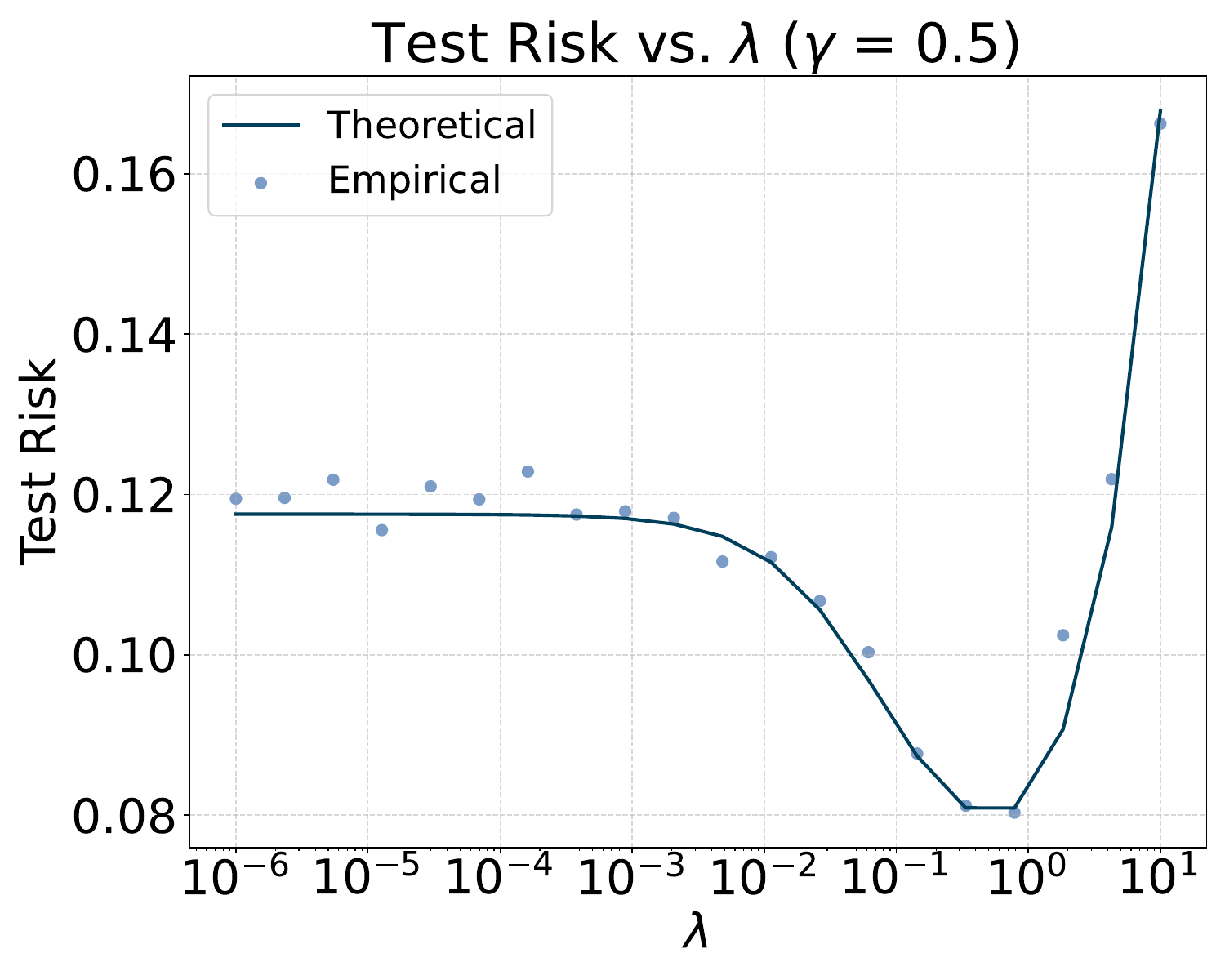}
    }
    \subfigure[Norm vs. $\lambda$]{\label{fig:vary_lambda_b}
        \includegraphics[width=0.25\textwidth]{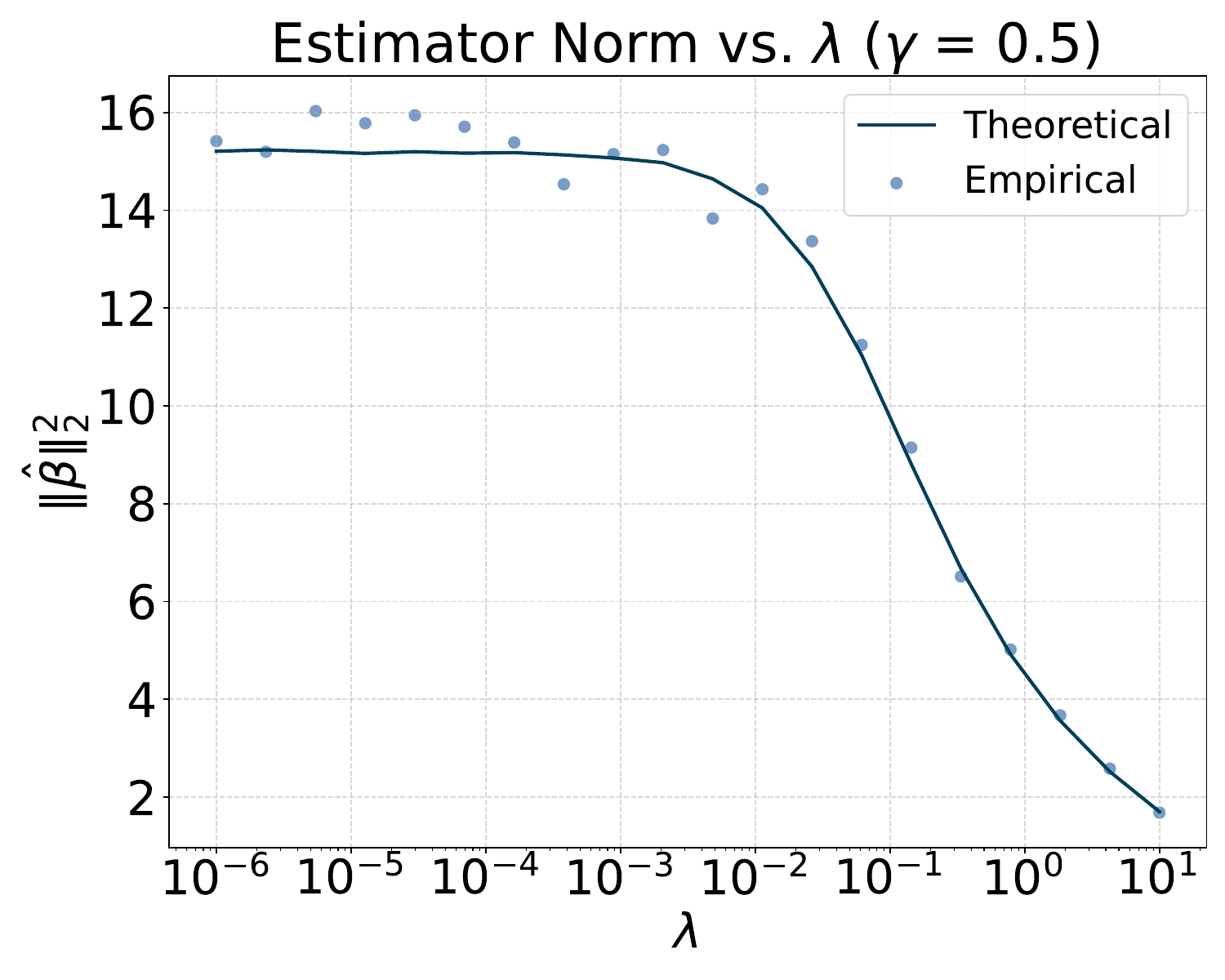}
    }
    \subfigure[Test risk vs. Norm]{\label{fig:vary_lambda_c}
        \includegraphics[width=0.25\textwidth]{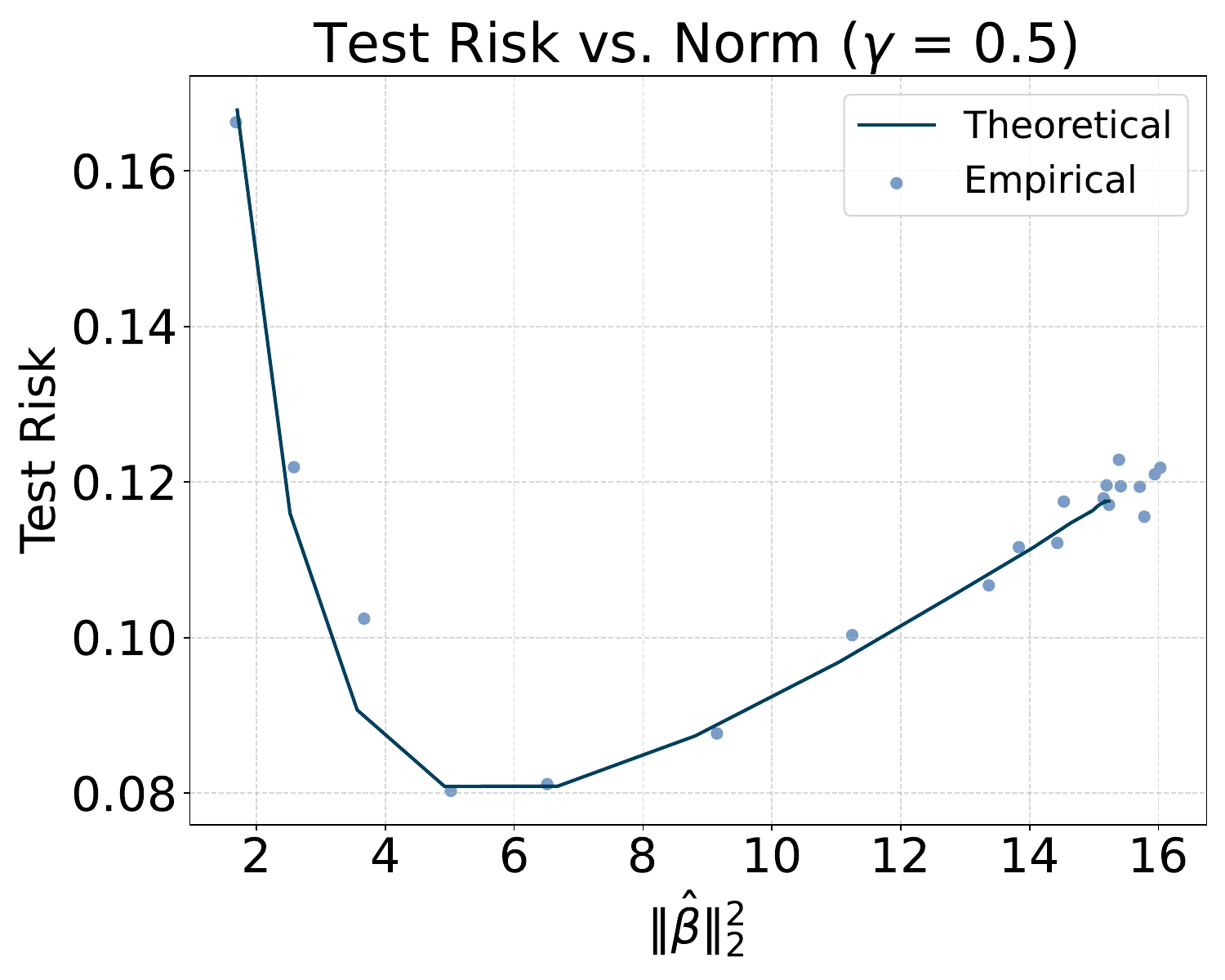}
    }

    \vspace{0.5cm}  

    \subfigure[Test risk vs. $\lambda$]{\label{fig:vary_lambda_d}
        \includegraphics[width=0.25\textwidth]{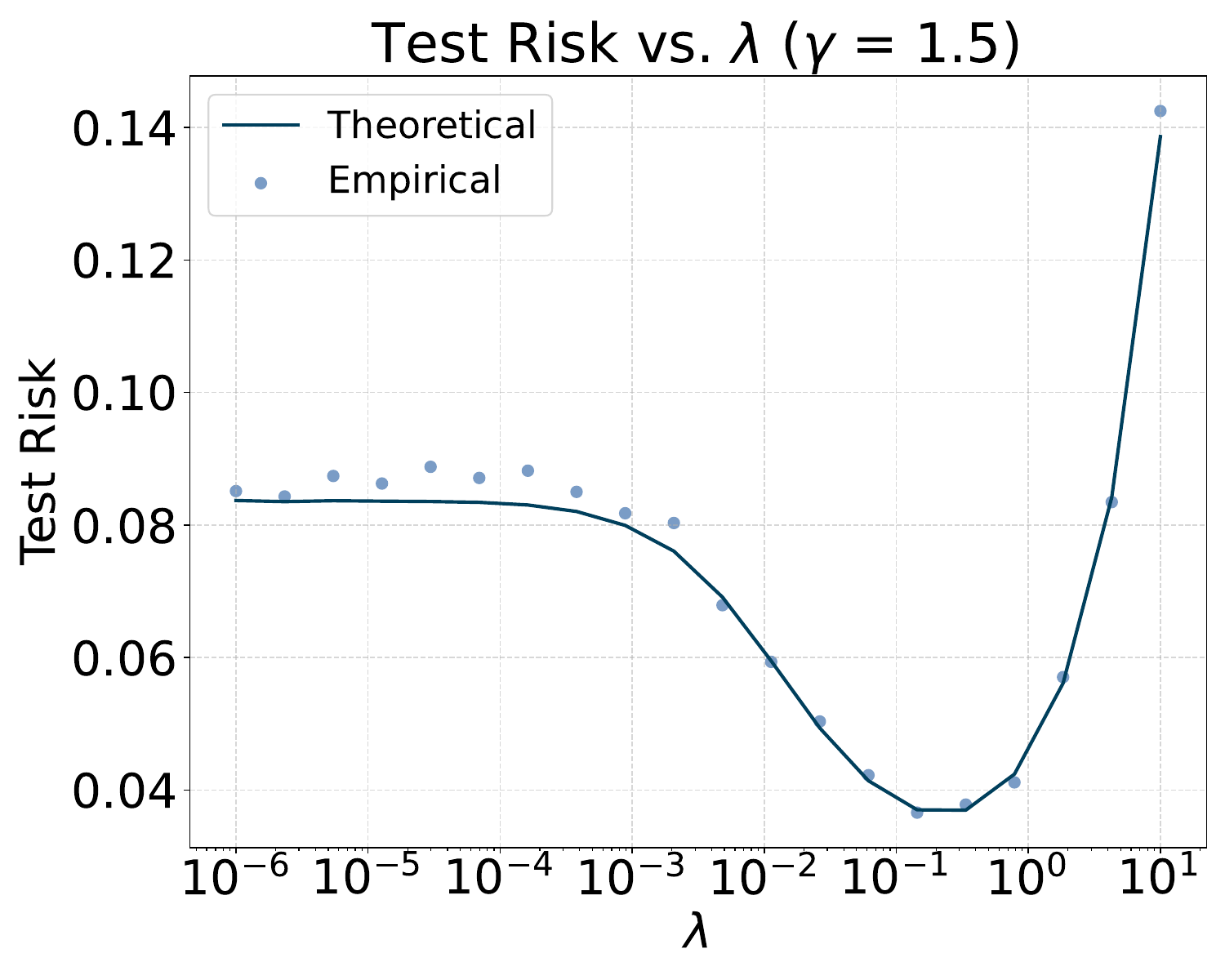}
    }
    \subfigure[Norm vs. $\lambda$]{\label{fig:vary_lambda_e}
        \includegraphics[width=0.25\textwidth]{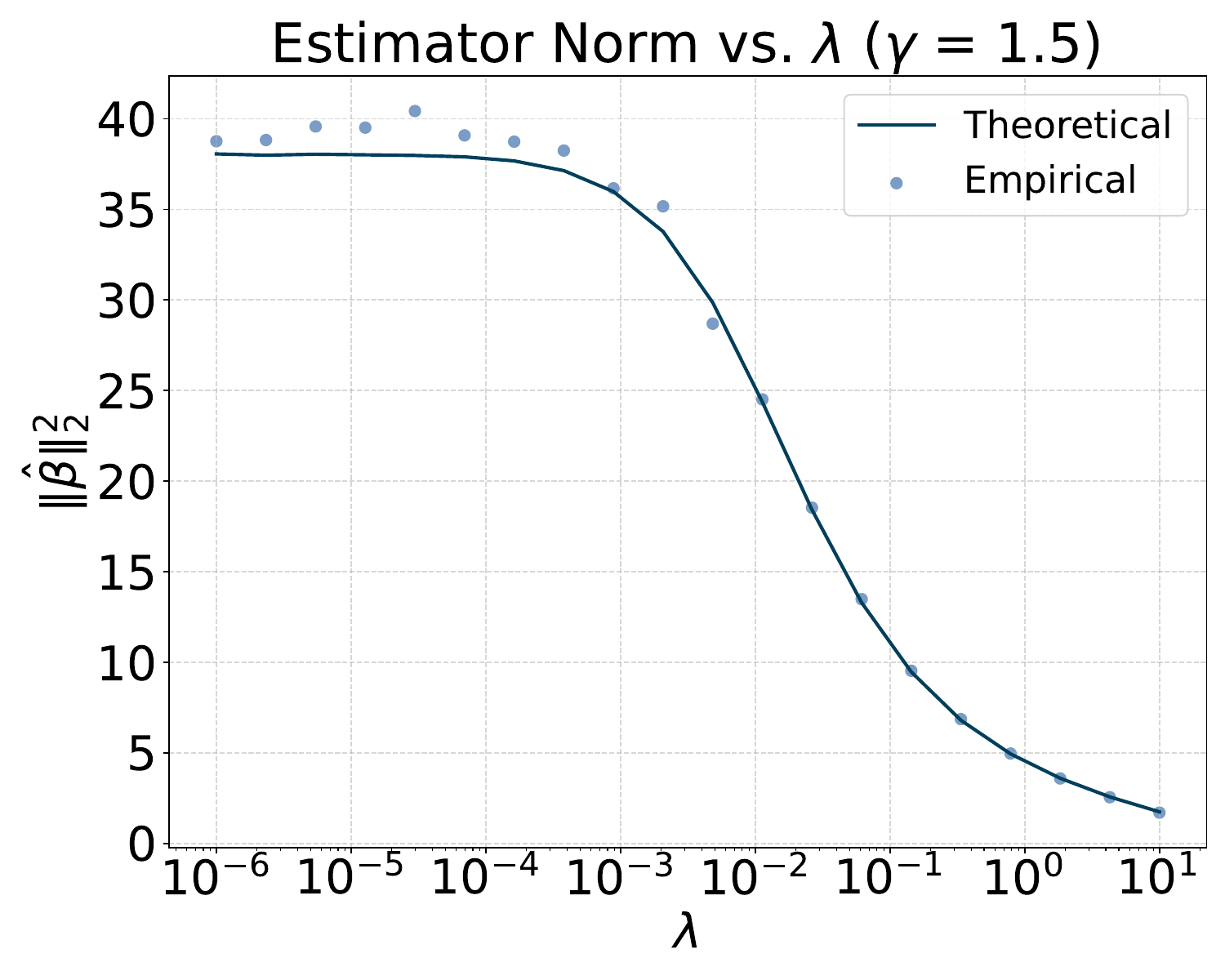}
    }
    \subfigure[Test risk vs. Norm]{\label{fig:vary_lambda_f}
        \includegraphics[width=0.25\textwidth]{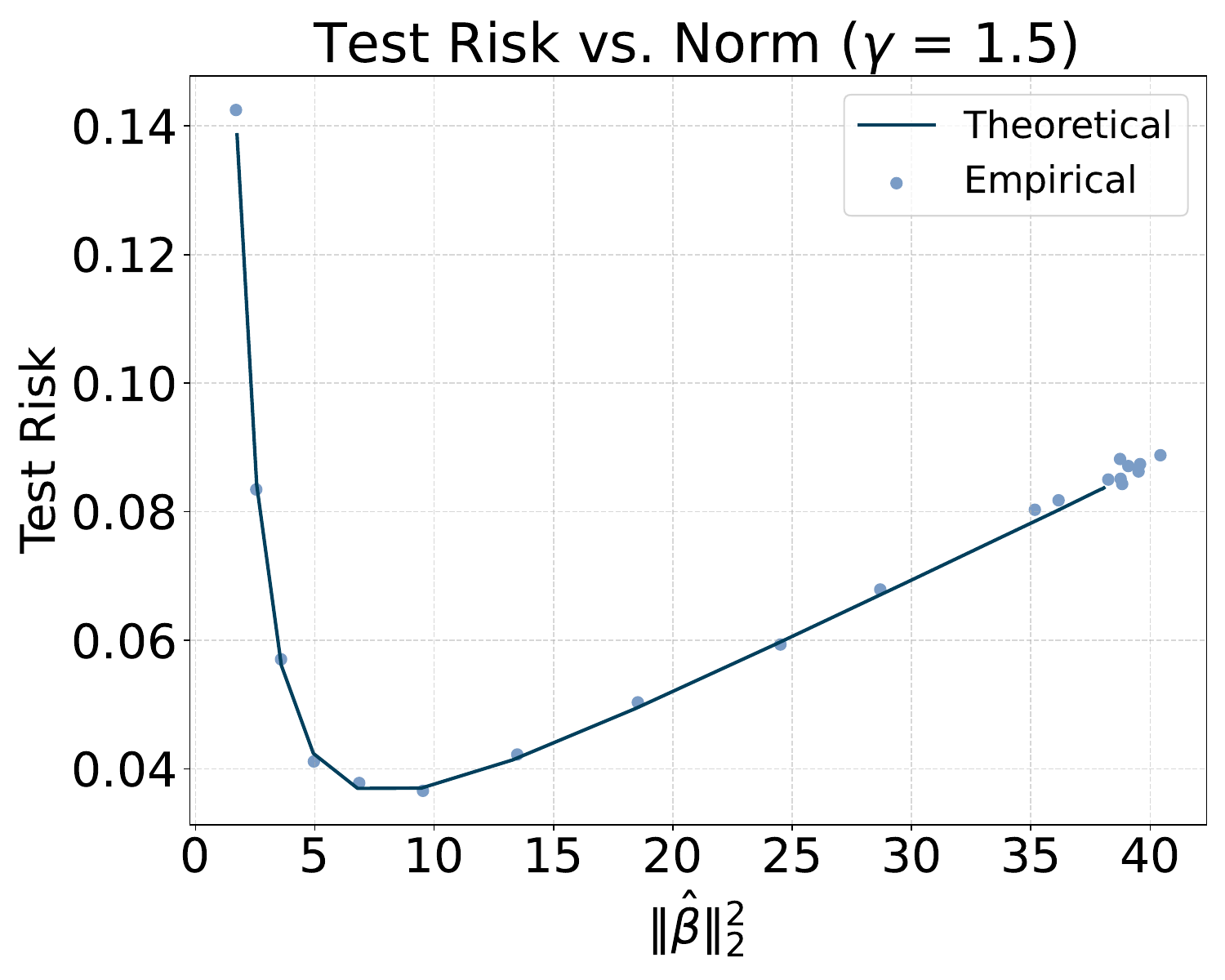}
    }
    \caption{Relationship between test risk, $\ell_2$ norm, and $\lambda$ for different $\gamma = \frac{p}{n}$ for random feature ridge regression. Points in these figures are given by our experimental results, centering around the curves given by deterministic equivalents we derive. Training data \(\{({\bm x}_i, y_i)\}_{i \in [n]}\), \(n = 100\), sampled from the model \(y_i = {\bm g}_i^{\!\top} {\bm \theta}_* + \varepsilon_i\), \(\sigma^2 = 0.01\), \({\bm g}_i \sim \mathcal{N}(0, {\bm I})\), \({\bm f}_i \sim \mathcal{N}(0, {\bm \Lambda})\) (\({\bm g}_i\) and \({\bm f}_i\) is defined in \cref{sec:preli}), with \(\xi^2_k({\bm \Lambda})=k^{-\nicefrac{3}{2}}\) and \({\bm \theta}_{*,k}=k^{-\nicefrac{11}{10}}\), given by \(\alpha=1.5\), \(r=0.4\) in \cref{ass:powerlaw_rf}.}
    \label{fig:vary_lambda}
\end{figure*}

\begin{figure*}[htp]
    \centering
    \subfigure[Test risk vs. $\lambda$]{
        \includegraphics[width=0.25\textwidth]{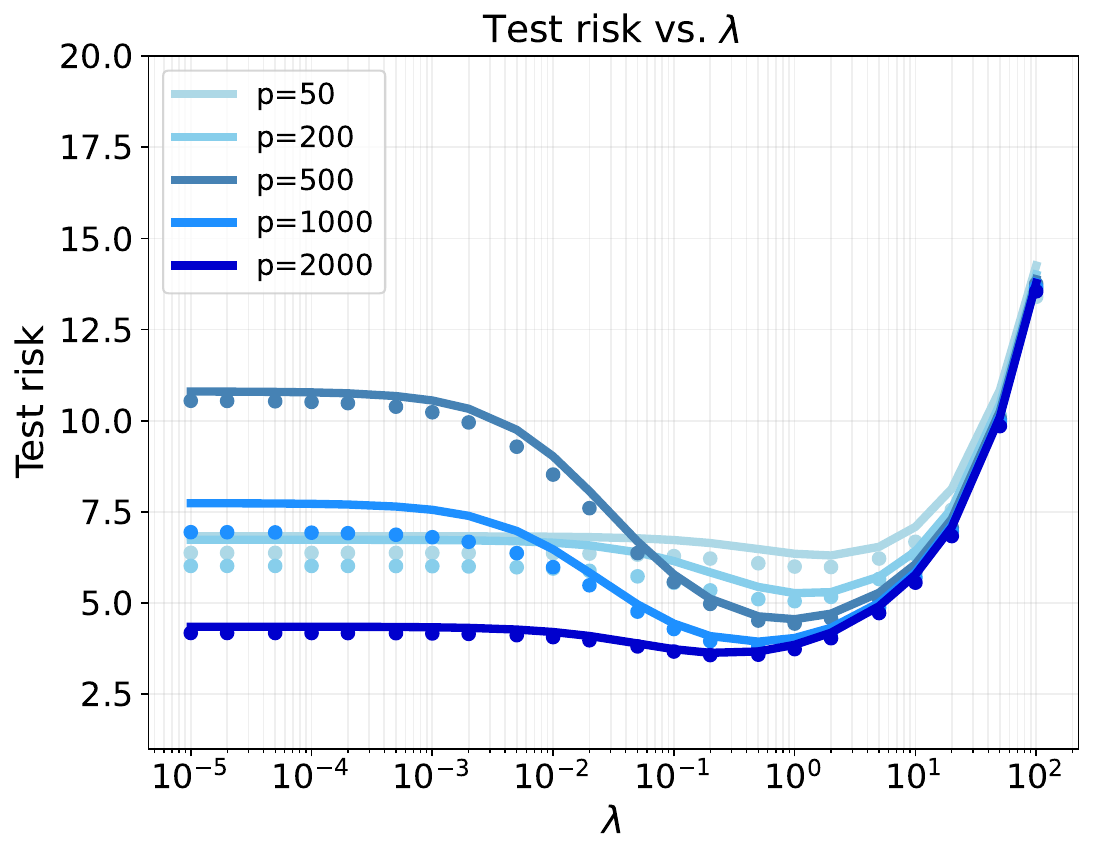}
    }
    \subfigure[Norm vs. $\lambda$]{
        \includegraphics[width=0.25\textwidth]{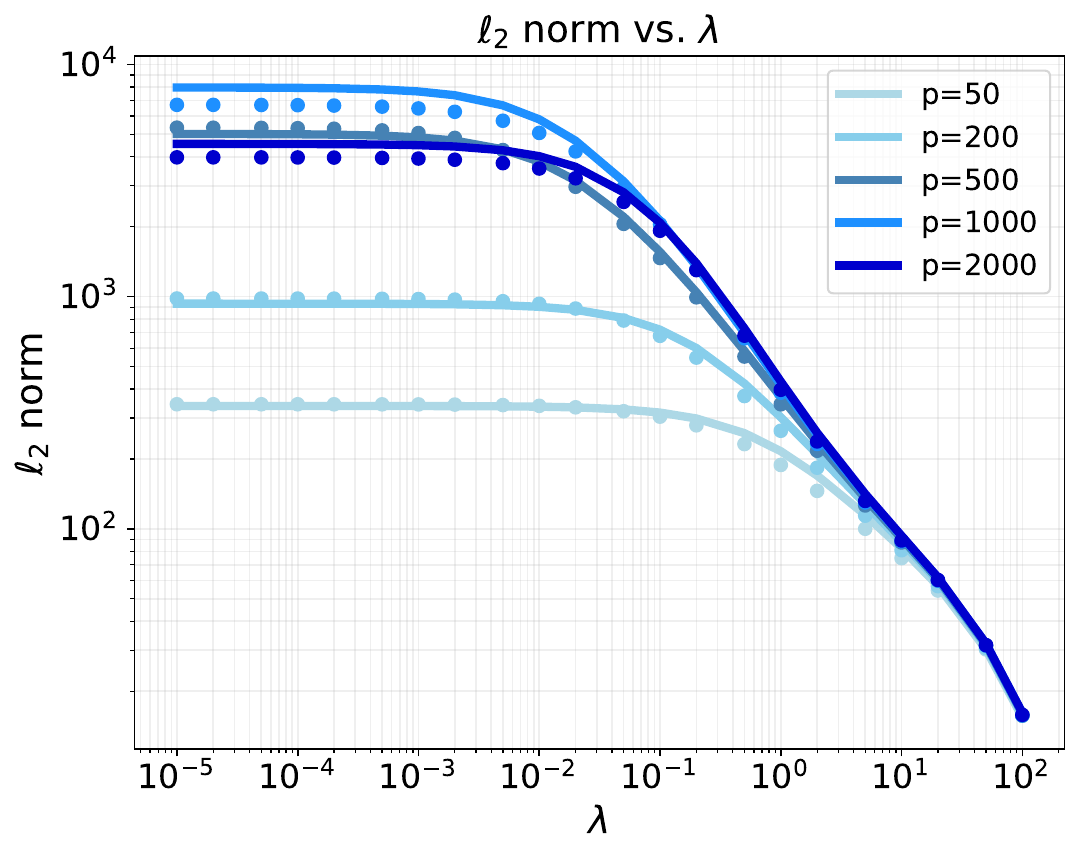}
    }
    \subfigure[Test risk vs. Norm]{
        \includegraphics[width=0.25\textwidth]{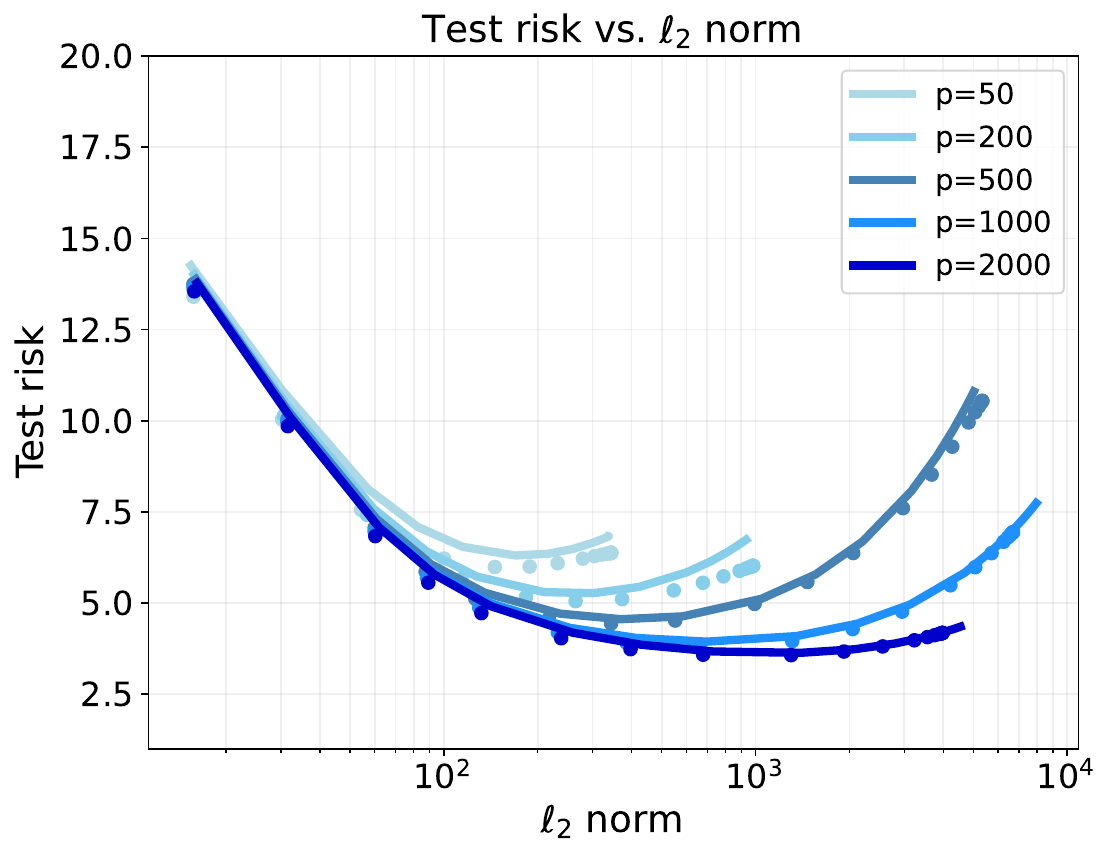}
    }
    \caption{Relationship between test risk, $\ell_2$ norm, and $\lambda$ for different $\gamma = \frac{p}{n}$ for random feature ridge regression. Points in these figures are given by our experimental results, centering around the curves given by deterministic equivalents we derive. Training data $\{({\bm x}_i, y_i)\}_{i \in [n]}$, $n=300$, sub-sampled from the \textbf{MNIST} data set \cite{lecun1998gradient}, with feature map given by $\varphi({\bm x}, {\bm w}) = {\rm erf}(\langle {\bm x}, {\bm w}\rangle)$ and ${\bm w} \sim {\mathcal N}(0, {\bm I})$.}
    \label{fig:vary_lambda_MNIST}
\end{figure*}

\subsection{Discussion with other model capacities}\label{app:discussion_3}

In this section, we discuss two other model capacities: {\em generalized effective number of parameters} and {\em degrees of freedom}, which are widely used to describe a model's generalization ability. From this discussion, we conclude that these two capacities are less suitable compared to norm-based model capacity.

\paragraph{Generalized Effective Number of Parameters:} The authors \cite{curth2024u} assess model complexity from the perspective of smoother by introducing a variance-based effective-parameter measure, termed the {\bf generalized effective number of parameters}. In the context of ridge regression, this measure is given by
\[
\begin{aligned}
    p_{\hat{\bm{s}}}^{\text{test}} = \frac{n}{|\mathcal{I}_{\text{test}}|}\sum_{j \in \mathcal{I}_{\text{test}}} \|{\bm x}_{j}^{\text{test}}({\bm X}^{\!\top} {\bm X} + \lambda)^{-1}{\bm X}^{\!\top}\|_2^2\,,
\end{aligned}
\]
where $\{{\bm x}_{j}^{\text{test}}\}_{j\in\mathcal{I}_{\text{test}}}$ is the set of test inputs. Taking the expectation with respect to the test set yields
\[
    p_{\hat{\bm{s}}}^{\text{test}} = n {\mathbb E}_{{\bm x}_{j}^{\text{test}}} \|{\bm x}_{j}^{\text{test}}({\bm X}^{\!\top} {\bm X} + \lambda)^{-1}{\bm X}^{\!\top}\|_2^2 = n{\rm Tr}({\bm \Sigma}{\bm X}^{\!\top}{\bm X}({\bm X}^{\!\top}{\bm X} + \lambda)^{-2})\,,
\]
which corresponds to the variance of the test risk $\mathcal{V}^{\tt LS}_\mathcal{R}$ scaled by the factor $\frac{n}{\sigma^2}$. 

For the random feature ridge regression, the generalized effective number of parameters can be similarly given by
\[
    p_{\hat{\bm{s}}}^{\text{test}} = n {\mathbb E}_{{\bm z}_{j}^{\text{test}}} \|{\bm z}_{j}^{\text{test}}({\bm Z}^{\!\top} {\bm Z} + \lambda)^{-1}{\bm Z}^{\!\top}\|_2^2 = n{\rm Tr}(\widehat{\bm \Lambda}_{\bm F}{\bm Z}^{\!\top}{\bm Z}({\bm Z}^{\!\top}{\bm Z} + \lambda)^{-2})\,,
\]
which corresponds to the variance of the test risk $\mathcal{V}^{\tt RFM}_\mathcal{R}$ scaled by the factor $\frac{n}{\sigma^2}$.

The connection between variance and \( p_{\hat{\bm{s}}}^{\text{test}} \) enables it to effectively capture the variance of test risk. However, due to the lack of information about the target function (without label information $y$), this model capacity cannot fully describe the behavior of test risk, as it neglects the bias component. This limitation becomes apparent when the test risk is dominated by bias.

\paragraph{Degrees of freedom} For linear ridge regression, another measure of model capacity, known as the ``degrees of freedom'' \citep{caponnetto2007optimal, hastie2017generalized, bach2024high}, is defined as
\[
     {\rm df}_1(\lambda_*) := {\rm Tr}({\bm \Sigma}({\bm \Sigma} + \lambda_*)^{-1})\,, \quad {\rm df}_2(\lambda_*) := {\rm Tr}({\bm \Sigma}^2({\bm \Sigma} + \lambda_*)^{-2})\,.
\]
\({\rm df}_1(\lambda_*)\) and \({\rm df}_2(\lambda_*)\) measures the number of ``effective'' parameters the model can fit. As the regularization strength \(\lambda\) increases, model complexity decreases. From \cref{def:effective_regularization}, we have \(n - \frac{\lambda}{\lambda_*} = {\rm Tr}({\bm \Sigma}({\bm \Sigma}+\lambda_*)^{-1})\), implying that an increase in \(\lambda\) raises \(\lambda_*\), leading to a reduction in \({\rm df}_1(\lambda_*)\) and \({\rm df}_2(\lambda_*)\). This suggests that degrees of freedom can, to some extent, represent model complexity.

\begin{figure*}[t]
    \centering
    \subfigure[Test Risk vs. Norm]{\label{fig:mcda}
        \includegraphics[width=0.30\textwidth]{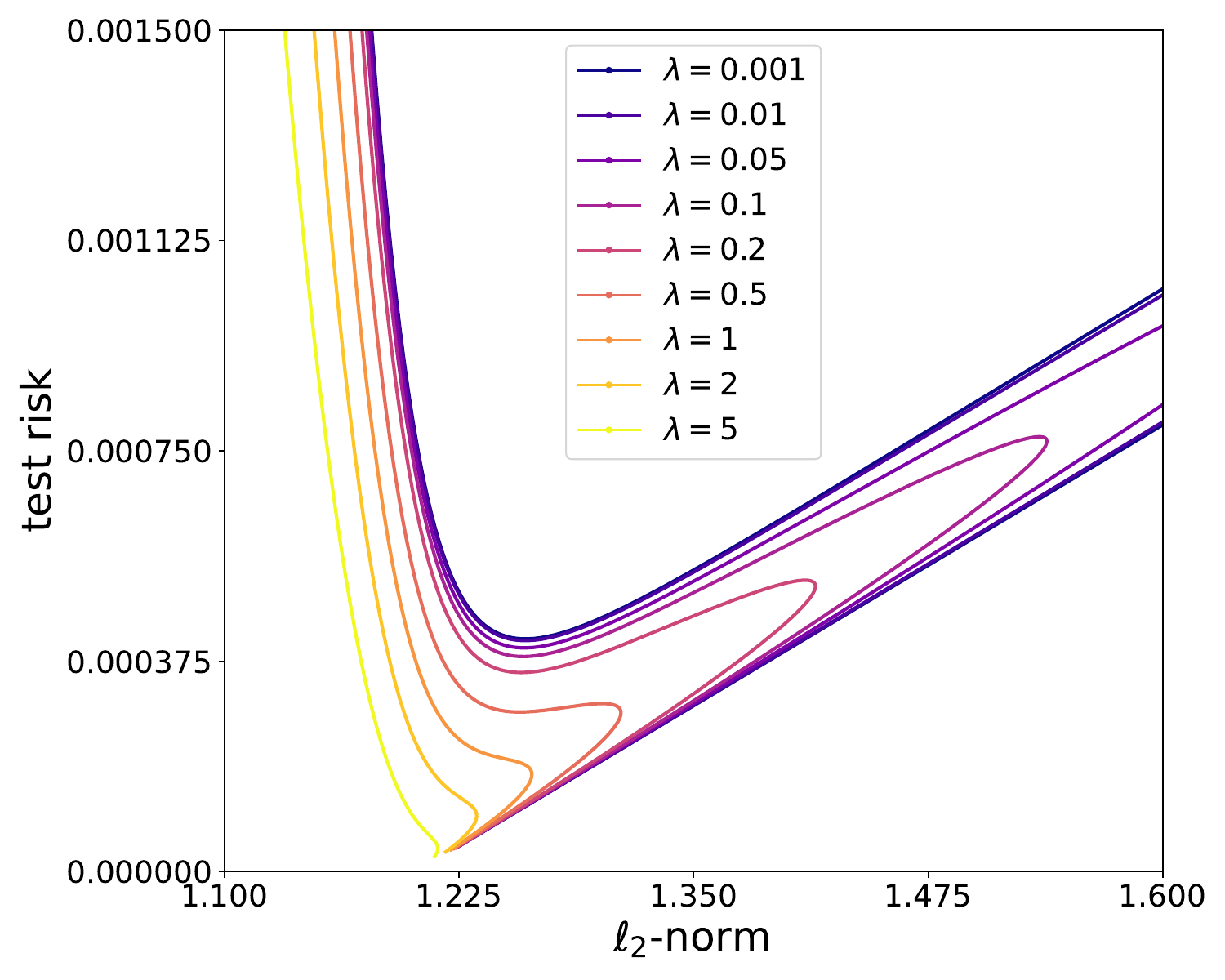}
    }
    \subfigure[Test Risk vs. ${\rm df}_1$]{\label{fig:mcdc}
        \includegraphics[width=0.30\textwidth]{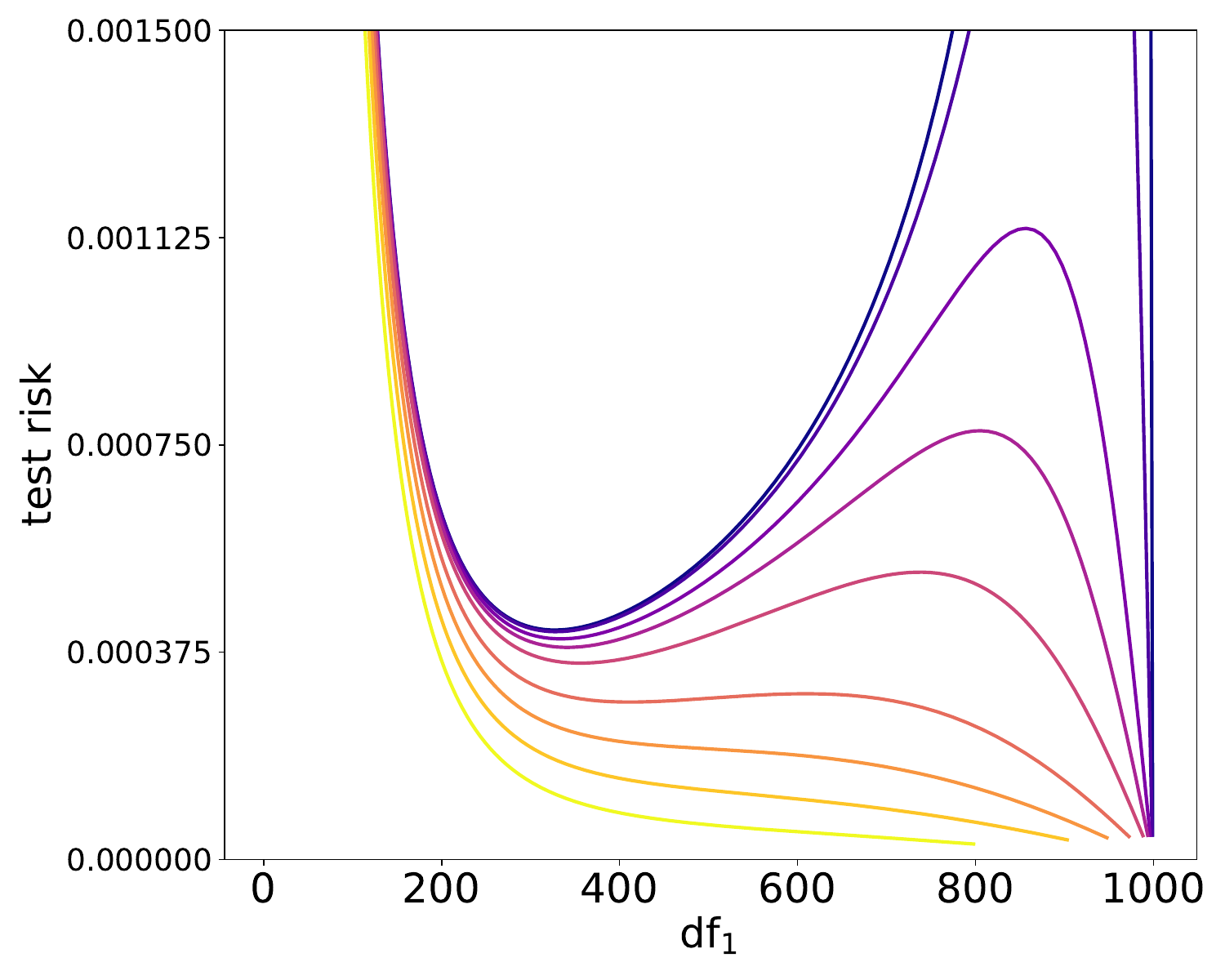}
    }
    \subfigure[Test Risk vs. ${\rm df}_2$]{\label{fig:mcdd}
        \includegraphics[width=0.30\textwidth]{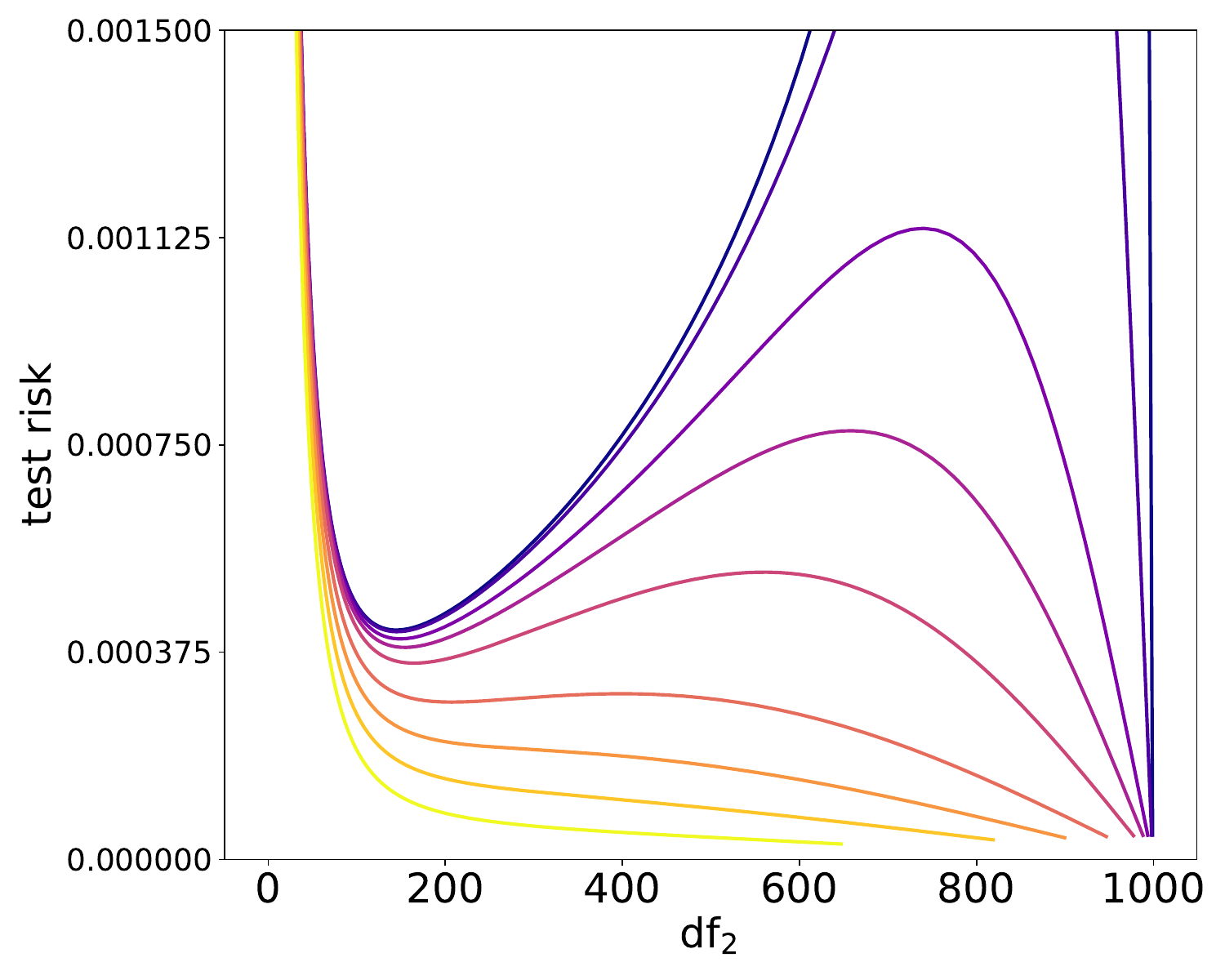}
    }
    \caption{Relationship between test risk and different model capacities. Training data \(\{({\bm x}_i, y_i)\}_{i \in [n]}\), \(d = 1000\), sampled from a linear model \(y_i = {\bm x}_i^{\!\top} {\bm \beta}_* + \varepsilon_i\), \(\sigma^2 = 0.0004\), \({\bm x}_i \sim \mathcal{N}(0, {\bm \Sigma})\), with \(\sigma_k({\bm \Sigma})=k^{-1}\), \({\bm \beta}_{*,k}=k^{-\nicefrac{3}{2}}\).} 
    \label{fig:model_capacity_discussion}
\end{figure*}

However, it is worth noting that since in linear ridge regression we only vary the number of training data \(n\), according to the self-consistent equation \(n - \frac{\lambda
}{\lambda_*} = {\rm Tr}({\bm \Sigma}({\bm \Sigma}+\lambda_*)^{-1})\) we can tell that \(\lambda_*\) decreases monotonically as \(n\) increases, which leads to \({\rm df}_1\) and \({\rm df}_2\) increasing monotonically as \(n\) increases. This monotonic relationship with \(n\) suggests that when using degrees of freedom as a measure of model capacity, the double descent phenomenon still exists, as the effective capacity of the model continues to increase even beyond the interpolation threshold.

Similar to the generalized effective number of parameters mentioned above, these degrees of freedom also lack information about the target function, making them insufficient for accurately capturing the model's generalization ability.

\cref{fig:model_capacity_discussion} illustrates the relationship between test risk and different model capacity for linear ridge regression. It shows that double descent persists for degrees of freedom \({\rm df}_1\) and \({\rm df}_2\), indicating that degrees of freedom is not an appropriate measure of model capacity.

\section{Experiment}
\label{app:experiment}

To systematically validate our theoretical findings, we conduct a comprehensive empirical study across three distinct settings: (1) synthetic datasets (\cref{app:exp_syn_data}), (2) real-world datasets (MNIST\citep{lecun1998gradient}/FashionMNIST\cite{xiao2017fashion}) with random features (\cref{app:exp_real_data}), and (3) two-layer neural networks with various norm-based capacity measures (\cref{app:exp_two_layer_NNs}). All experiments can be conducted on a standard laptops with 16 GB memory.

\subsection{Experiment on synthetic dataset}\label{app:exp_syn_data}

To validate our theoretical framework, we conduct comprehensive experiments on synthetic datasets on linear regression in \cref{fig:linear_regression_risk_vs_norm} and RFMs in \cref{fig:random_feature_risk_vs_norm}, respectively. 
The strong agreement between theoretical predictions and empirical results confirms the accuracy of our theoretical analysis.

\begin{figure*}[!ht]
    \centering
    \subfigure[{\fontsize{7}{9}\selectfont Test Risk vs. $\gamma:=d/n$}]{
        \includegraphics[width=0.23\textwidth]{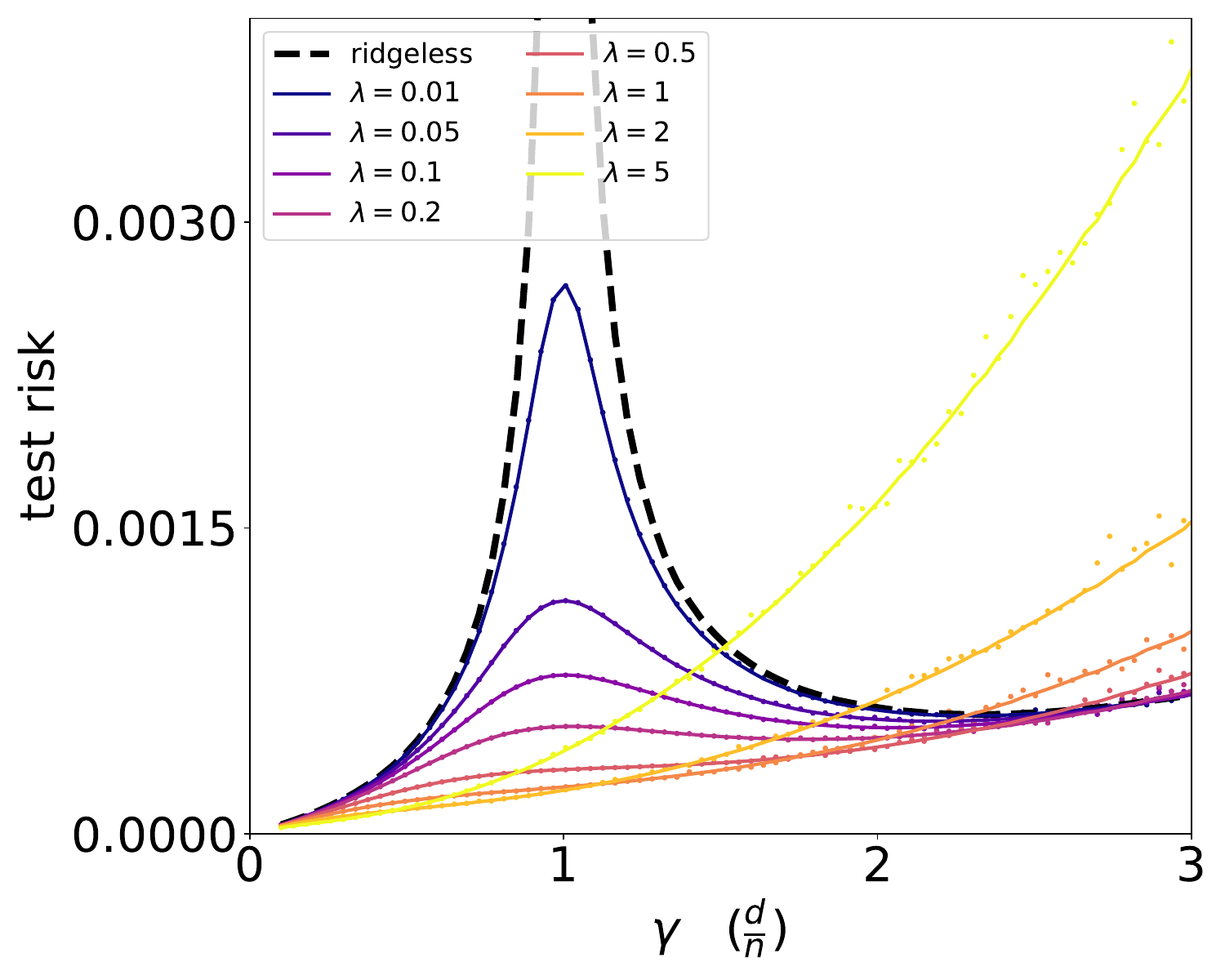}
    }\label{fig:linear_regression_risk_vs_norm_1}
    \subfigure[{\fontsize{7}{9}\selectfont $\ell_2$ norm vs. $\gamma$}]{
        \includegraphics[width=0.23\textwidth]{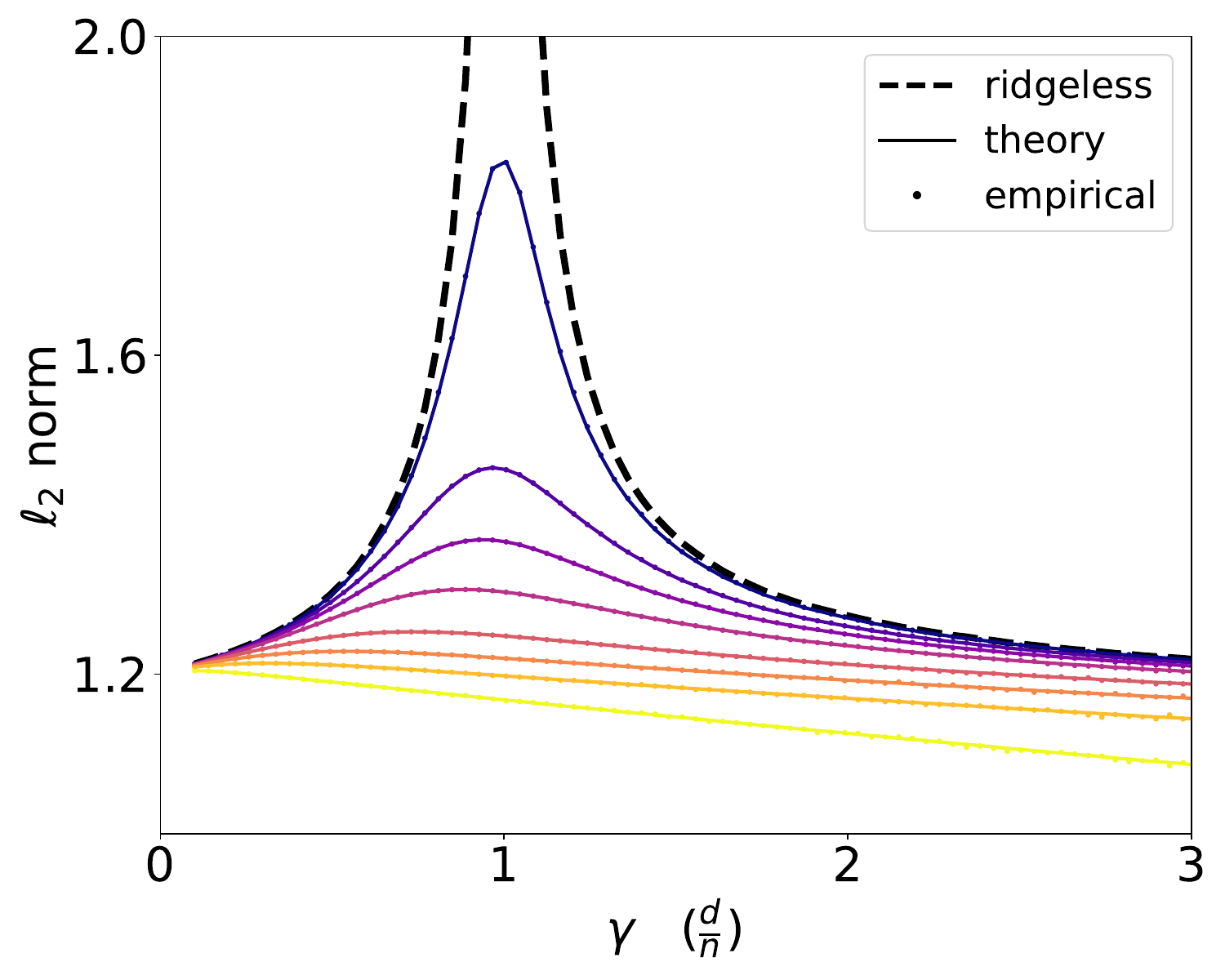}
    }\label{fig:linear_regression_risk_vs_norm_2}
    \subfigure[{\fontsize{7}{9}\selectfont Test Risk vs. $\ell_2$ norm}]{
        \includegraphics[width=0.23\textwidth]{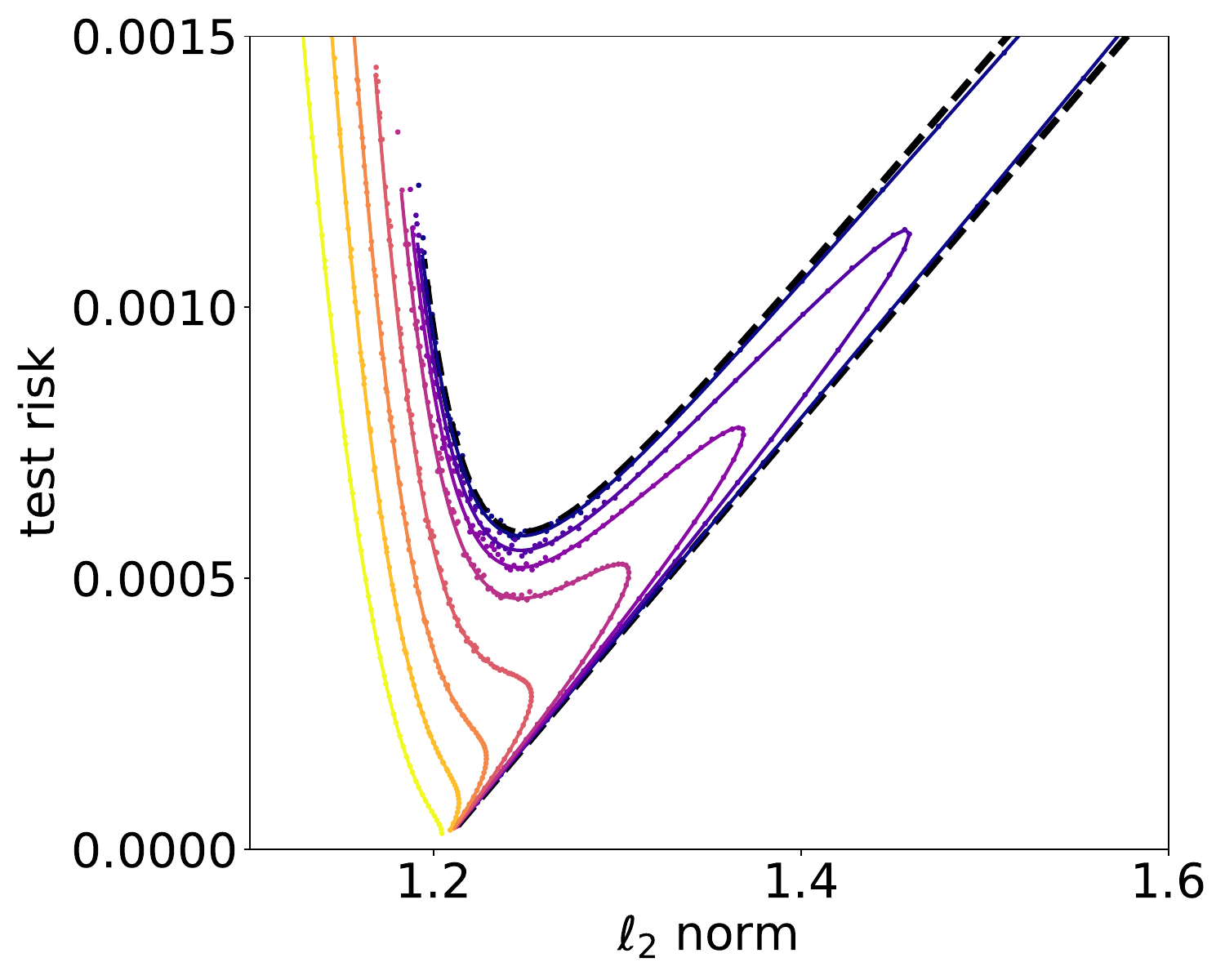}
    }\label{fig:linear_regression_risk_vs_norm_3}
    \subfigure[{\fontsize{7}{9}\selectfont Risk vs. norm ($\lambda\!=\!0.05$)}]{
        \includegraphics[width=0.23\textwidth]{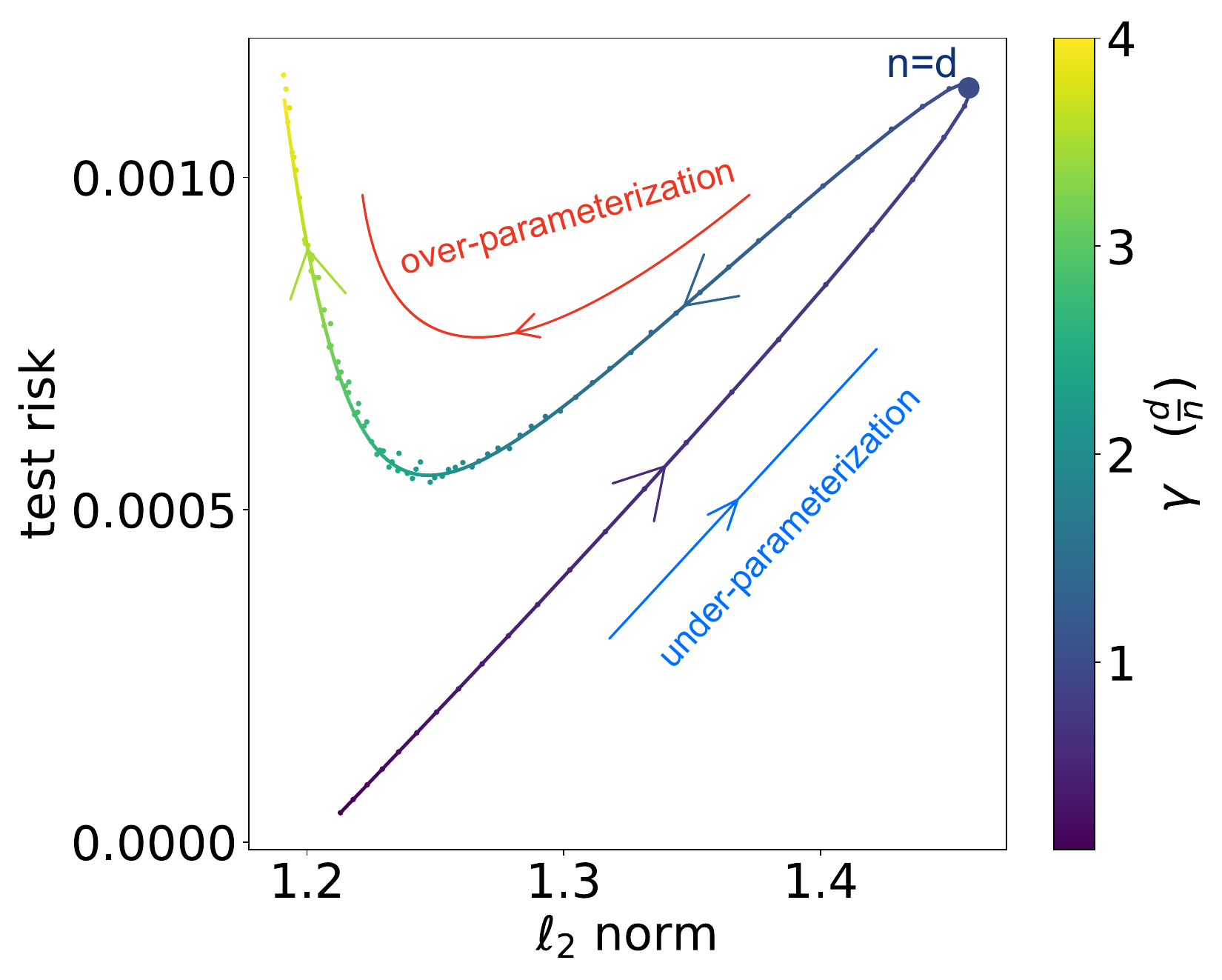}
    }\label{fig:linear_regression_risk_vs_norm_4}
    \caption{Results for the ridge regression estimator. Points in these four figures are given by our experimental results, and the curves are given by our theoretical results via deterministic equivalents. Training data \(\{({\bm x}_i, y_i)\}_{i \in [n]}\), \(d = 1000\), sampled from a linear model \(y_i = {\bm x}_i^{\!\top} {\bm \beta}_* + \varepsilon_i\), \(\sigma^2 = 0.0004\), \({\bm x}_i \sim \mathcal{N}(0, {\bm \Sigma})\), with \(\sigma_k({\bm \Sigma})=k^{-1}\), \({\bm \beta}_{*,k}=k^{-\nicefrac{3}{2}}\).} 
    \label{fig:linear_regression_risk_vs_norm}
\end{figure*}

\begin{figure*}[!ht]
    \centering
    \subfigure[{\fontsize{7}{8}\selectfont Test Risk vs. $\gamma:=p/n$}]{\label{fig:rfma}
        \includegraphics[width=0.23\textwidth]{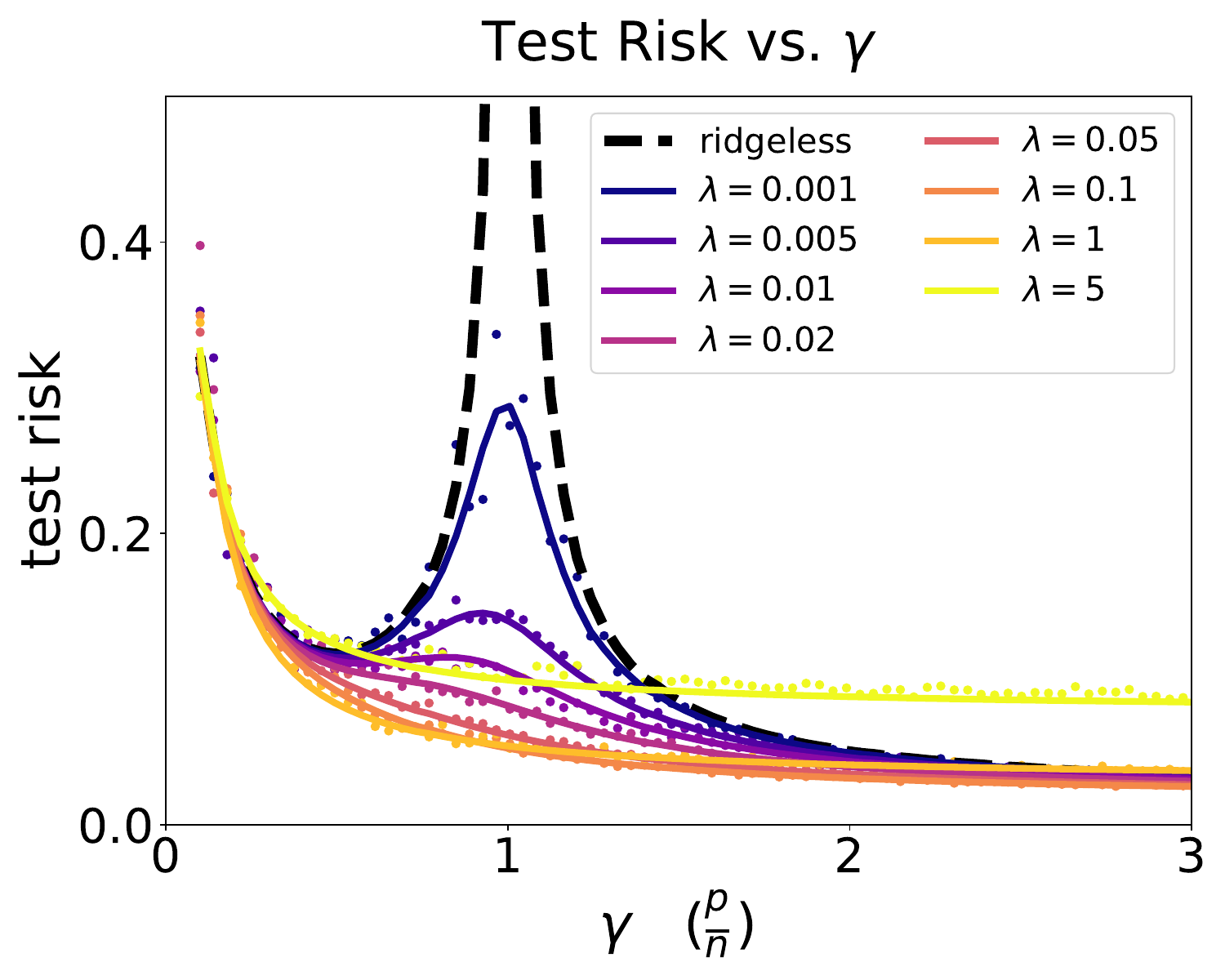}
    }
    \subfigure[{\fontsize{7}{8}\selectfont $\ell_2$ norm vs. $\gamma$}]{\label{fig:rfmb}
        \includegraphics[width=0.23\textwidth]{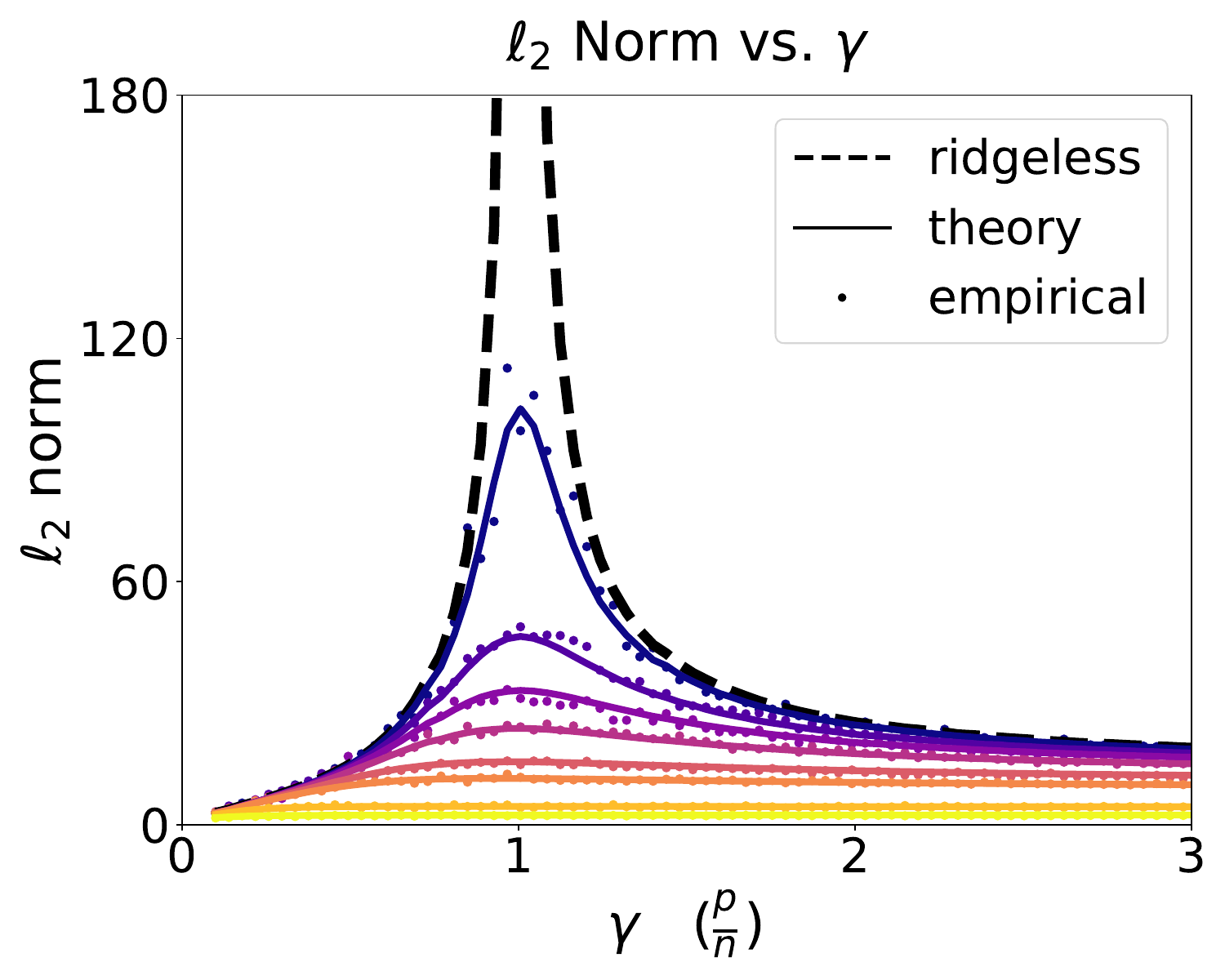}
    }
    \subfigure[{\fontsize{7}{8}\selectfont Test Risk vs. $\ell_2$ norm}]{\label{fig:rfmc}
        \includegraphics[width=0.23\textwidth]{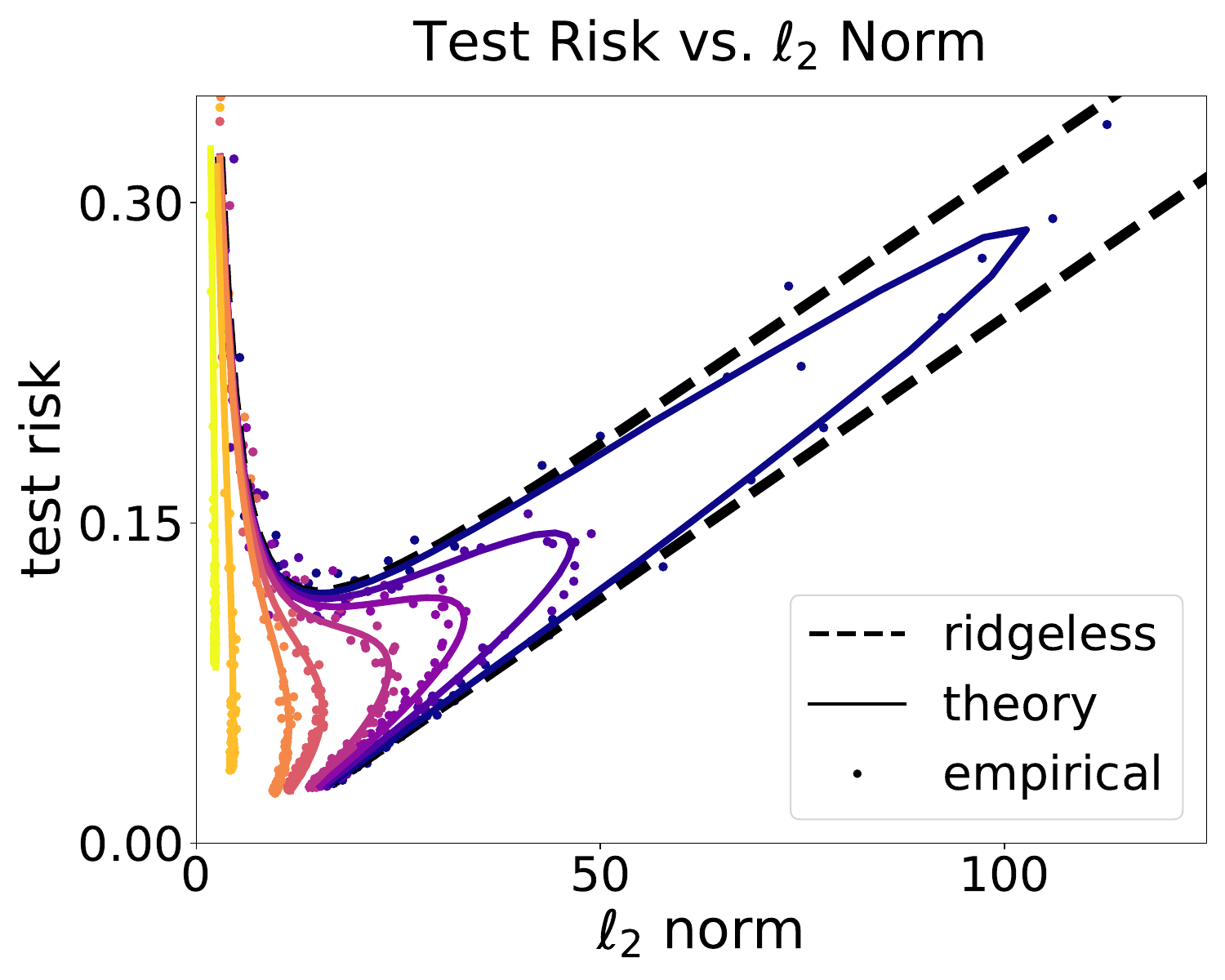}
    }
    \subfigure[{\fontsize{7}{8}\selectfont Risk vs. norm ($\lambda=0.001$)}]{\label{fig:rfmd}
        \includegraphics[width=0.23\textwidth]{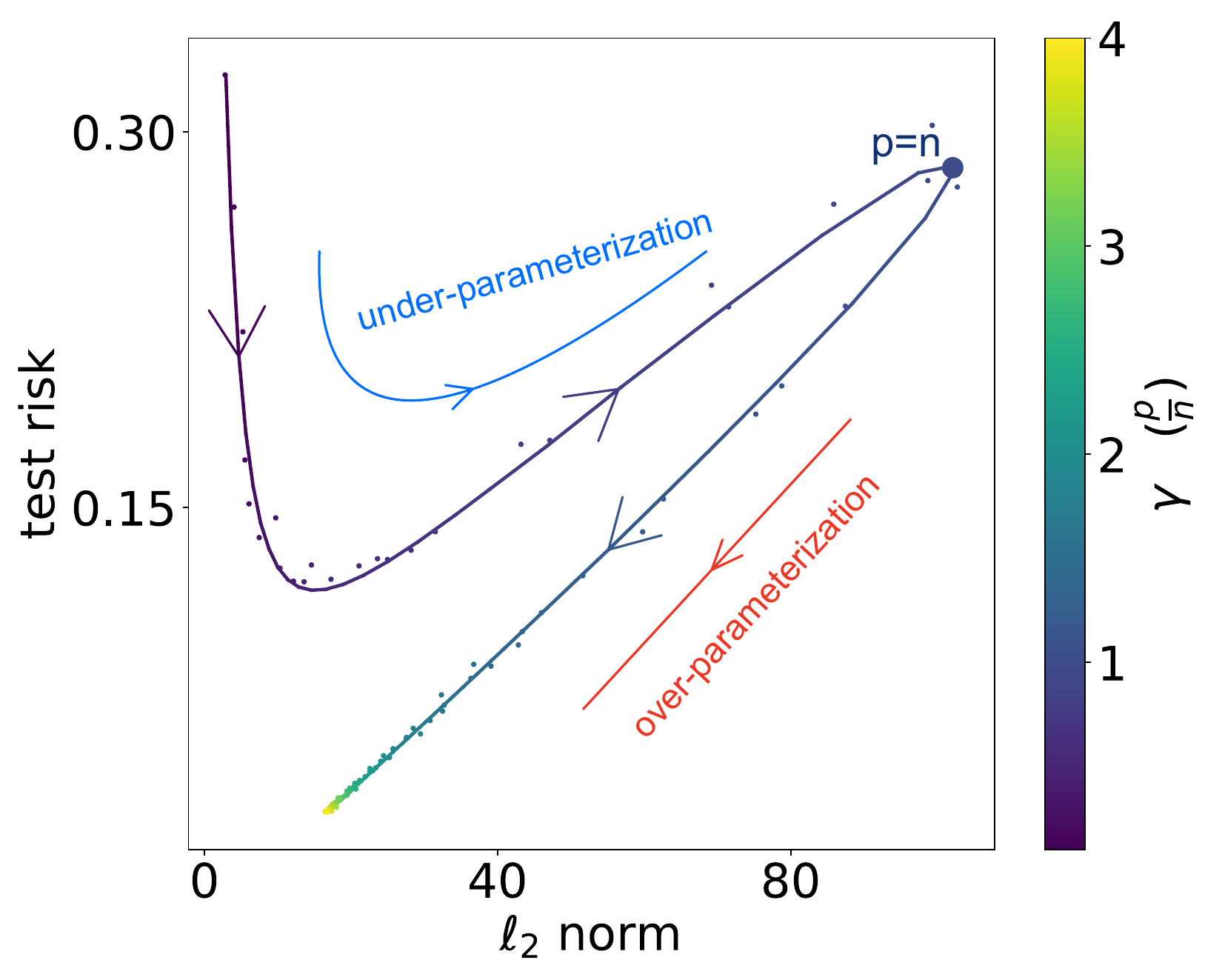}
    }
    \caption{Relationship between test risk, ratio $\gamma := p/n$, and $\ell_2$ norm of the random feature ridge regression estimator (the regularization parameter is defined in \cref{sec:preli}). Points in these four figures are given by our experimental results, centering around the curves given by deterministic equivalents we derive. Training data \(\{({\bm x}_i, y_i)\}_{i \in [n]}\), \(n = 100\), sampled from the model \(y_i = {\bm g}_i^{\!\top} {\bm \theta}_* + \varepsilon_i\), \(\sigma^2 = 0.01\), \({\bm g}_i \sim \mathcal{N}(0, {\bm I})\), \({\bm f}_i \sim \mathcal{N}(0, {\bm \Lambda})\) (\({\bm g}_i\) and \({\bm f}_i\) is defined in \cref{sec:preli}), with \(\xi^2_k({\bm \Lambda})=k^{-\nicefrac{3}{2}}\) and \({\bm \theta}_{*,k}=k^{-\nicefrac{11}{10}}\), given by \(\alpha=1.5\), \(r=0.4\) in \cref{ass:powerlaw_rf}.}
    \label{fig:random_feature_risk_vs_norm}
\end{figure*}

\subsection{Experiment on real-world dataset}\label{app:exp_real_data}

To complement the synthetic experiments presented in \cref{app:exp_syn_data}, we additionally conducted experiments on the \textbf{MNIST} (\cref{fig:rff_MNIST}) and \textbf{FashionMNIST} (\cref{fig:rff_FashionMNIST}) datasets \citep{lecun1998gradient,xiao2017fashion}. We applied the empirical diagonalization procedure introduced in \citep[Algorithm 1]{defilippis2024dimension} to estimate the key quantities $\bm{\Lambda}$ and $\bm{\theta}_*$ required for our analysis. The results on these real-world datasets are largely consistent with those observed on the synthetic data: in the under-parameterized regime, the curve of test risk versus norm exhibits a U-shape, while in the over-parameterized regime, the test risk increases monotonically with the norm and is approximately linear for ridge-less regression.

Notably, our random features model can be interpreted as a two-layer neural network with fixed first-layer weights $\bm W$, where the random features $\varphi(\bm x, \bm w_i)$ correspond to the hidden layer activations. This connection motivates our investigation of the Frobenius norm $\|\bm W\|_{\mathrm{F}}$ in \cref{fig:rff_first_layer_norm}, which captures the effective capacity of the frozen hidden layer. Furthermore, \cref{fig:rff_pathnorm} examines the path norm—a natural complexity measure for neural networks that sums over all input-output paths and is defined as
\[
\mu_{\text{path-norm}} = \sum_{j=1}^p a_j^2 \|\bm w_j\|_2^2\,.
\]
This quantity can be interpreted as the product of the norms of the first-layer and second-layer weights. Prior empirical work by~\cite{jiang2019fantastic} demonstrates that among various norm-based complexity measures, the path norm shows the strongest correlation with generalization performance in neural networks. Motivated by this finding, we investigate the relationship between test risk and path norm in our setting.

Comparing \cref{fig:rff_MNIST}, \cref{fig:rff_first_layer_norm}, and \cref{fig:rff_pathnorm}, we observe that the test risk curve aligns more closely with the norm-based capacity of the second-layer parameters in the random feature model, rather than with that of the first-layer weights. Therefore, it is meaningful to study the relationship between the test risk and the norm of the RFM estimator, i.e., the second-layer parameters, as this quantity plays a central role in determining the model’s effective capacity and generalization behavior.

\begin{figure*}[!ht]
    \centering
    
    \subfigure[Test risk vs. $p$]{
        \includegraphics[width=0.3\textwidth]{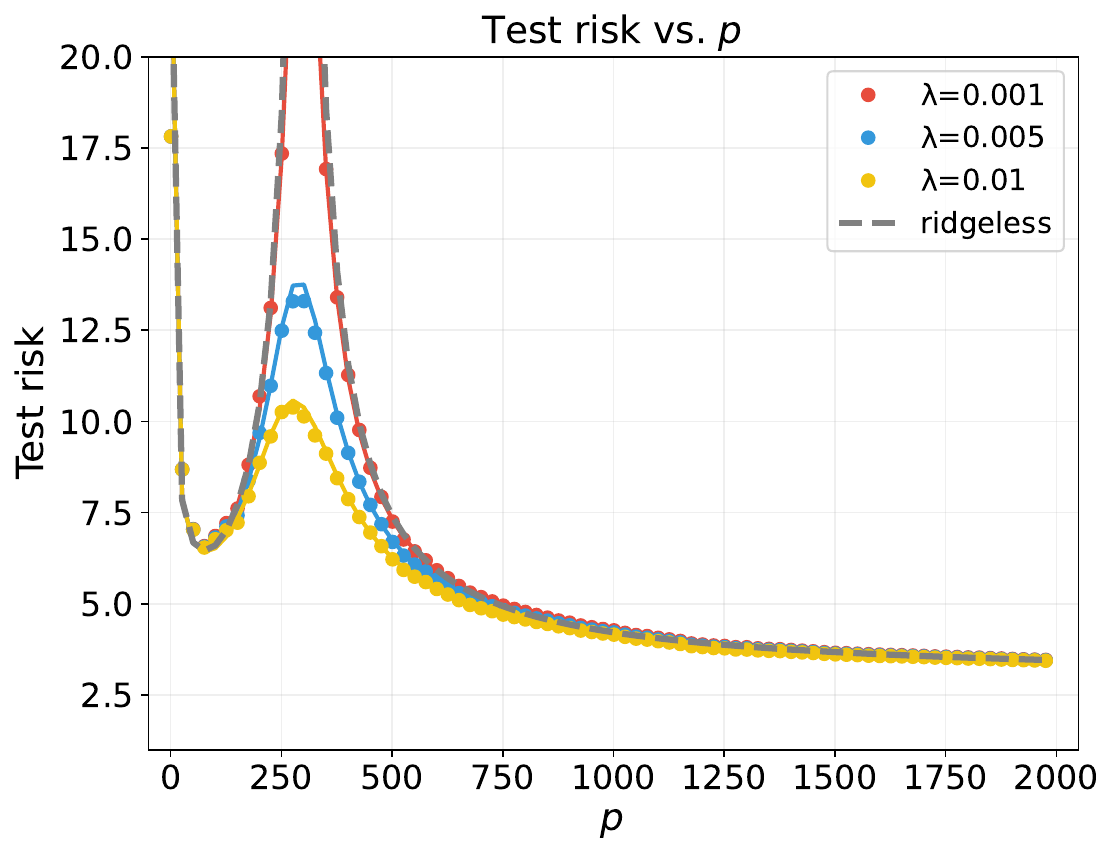}
    }
    \subfigure[$\ell_2$ norm vs. $p$]{
        \includegraphics[width=0.3\textwidth]{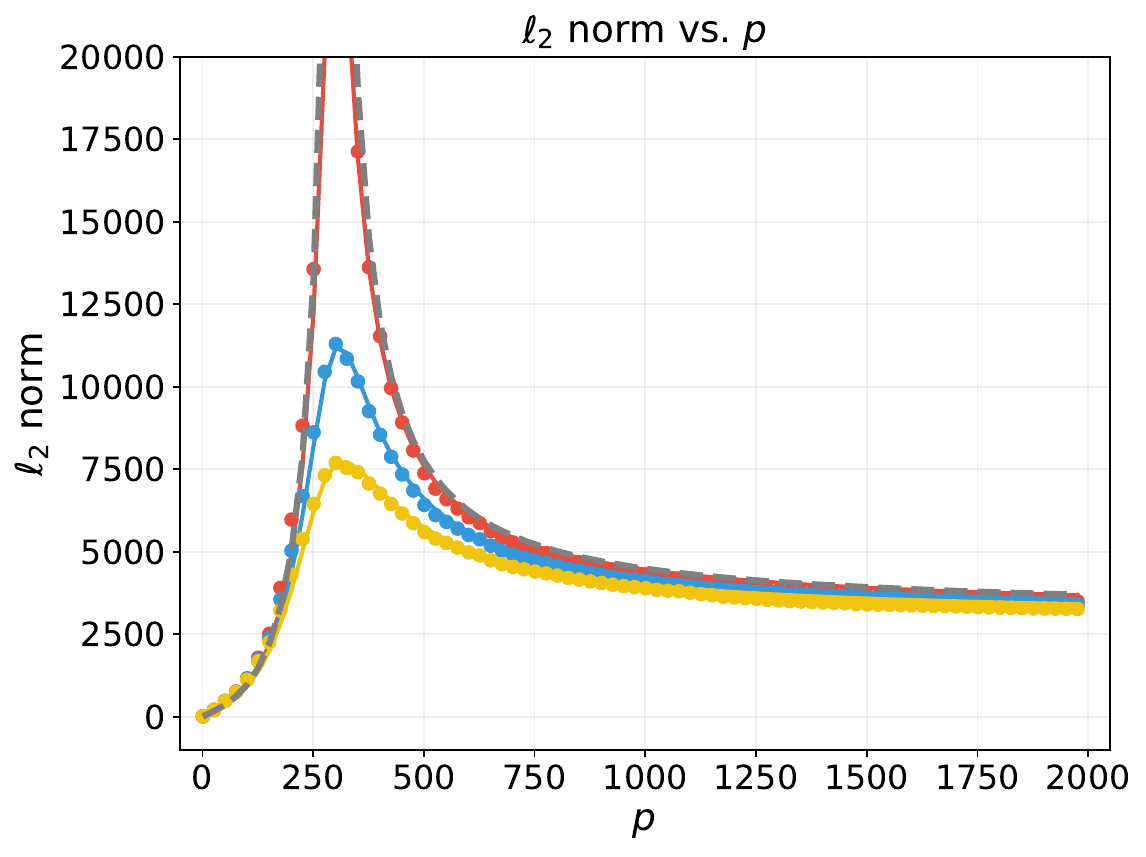}
    }
    \subfigure[Test risk vs. $\ell_2$ norm]{
        \includegraphics[width=0.3\textwidth]{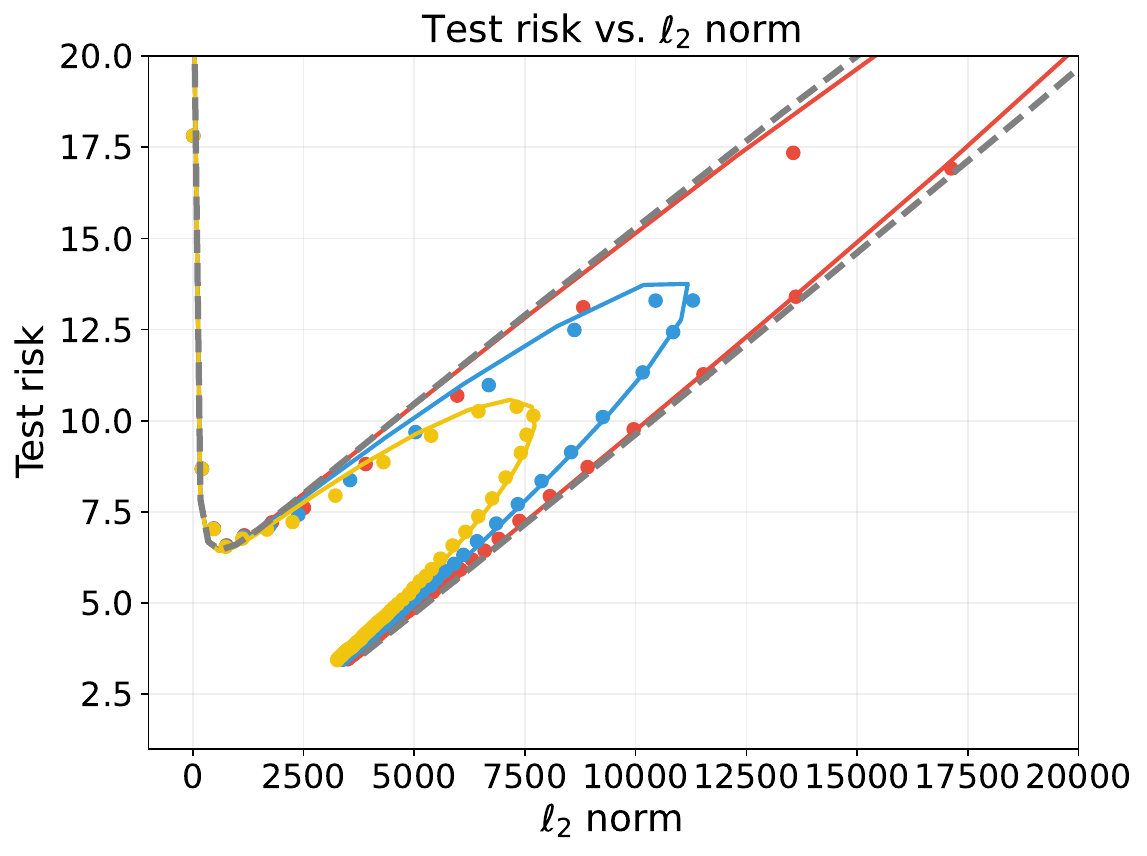}
    }
    \caption{The relationship between test risk, $\ell_2$ norm and the number of features $p$. Solid lines are obtained from the deterministic equivalent, and points are numerical simulations, with the different curves denoting different regularization strengths. Training data $\{({\bm x}_i, y_i)\}_{i \in [n]}$, $n=300$, sub-sampled from the \textbf{MNIST} data set \citep{lecun1998gradient}, with feature map given by $\varphi({\bm x}, {\bm w}) = {\rm erf}(\langle {\bm x}, {\bm w}\rangle)$ and ${\bm w} \sim {\mathcal N}(0, {\bm I}/d)$, where $d=748$.}
    \label{fig:rff_MNIST}
\end{figure*}

\begin{figure*}[!ht]
    \centering
    
    \subfigure[Test risk vs. $p$]{
        \includegraphics[width=0.3\textwidth]{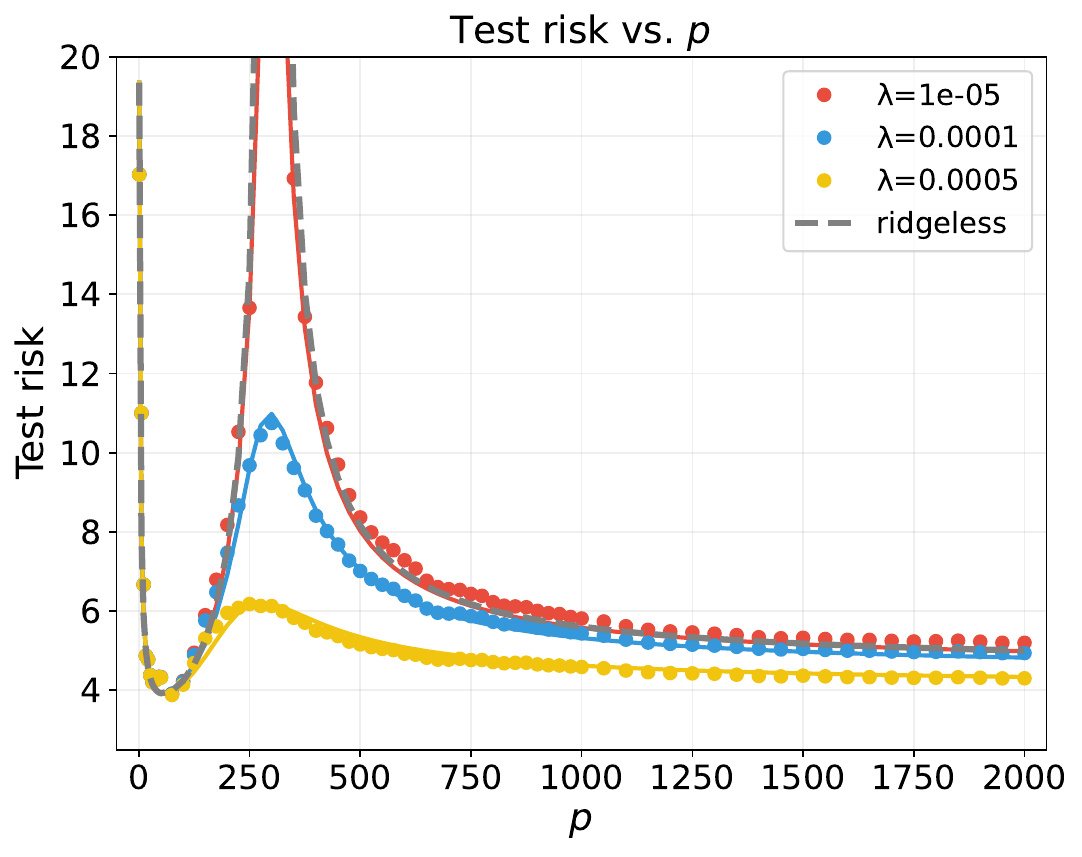}
    }
    \subfigure[Norm vs. $p$]{
        \includegraphics[width=0.3\textwidth]{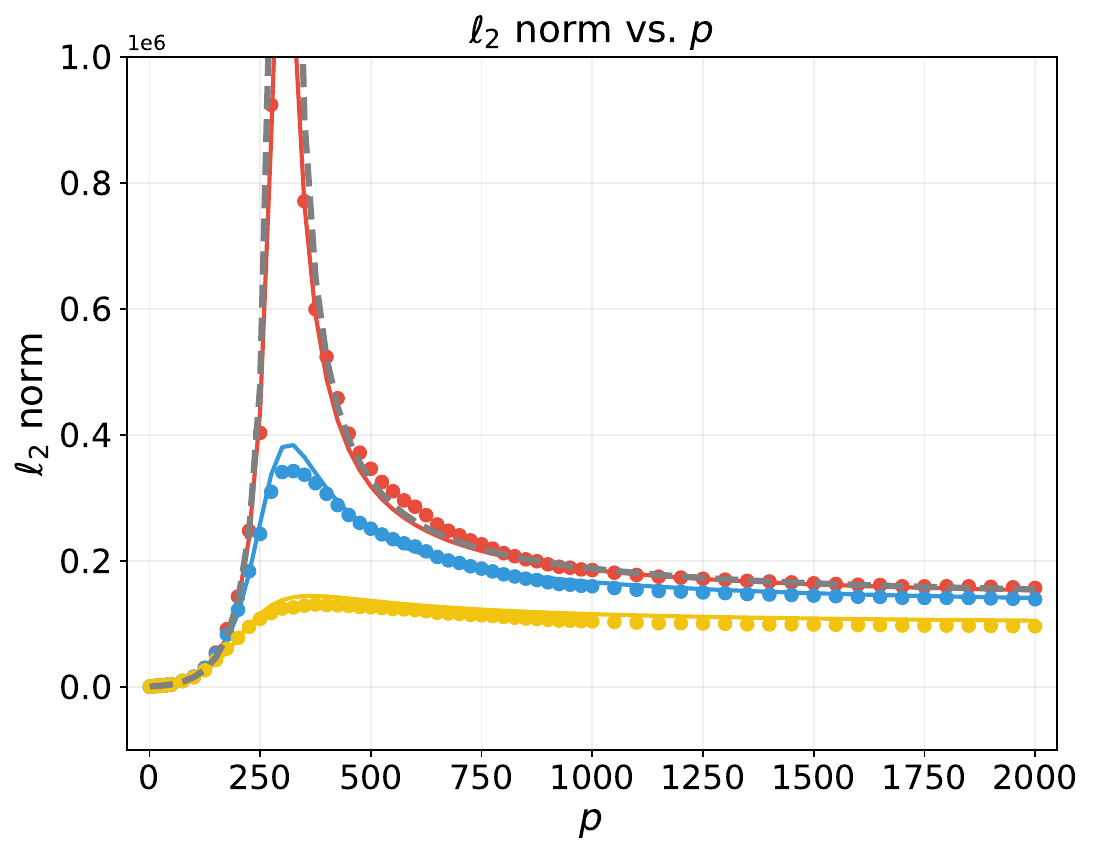}
    }
    \subfigure[Test risk vs. Norm]{
        \includegraphics[width=0.3\textwidth]{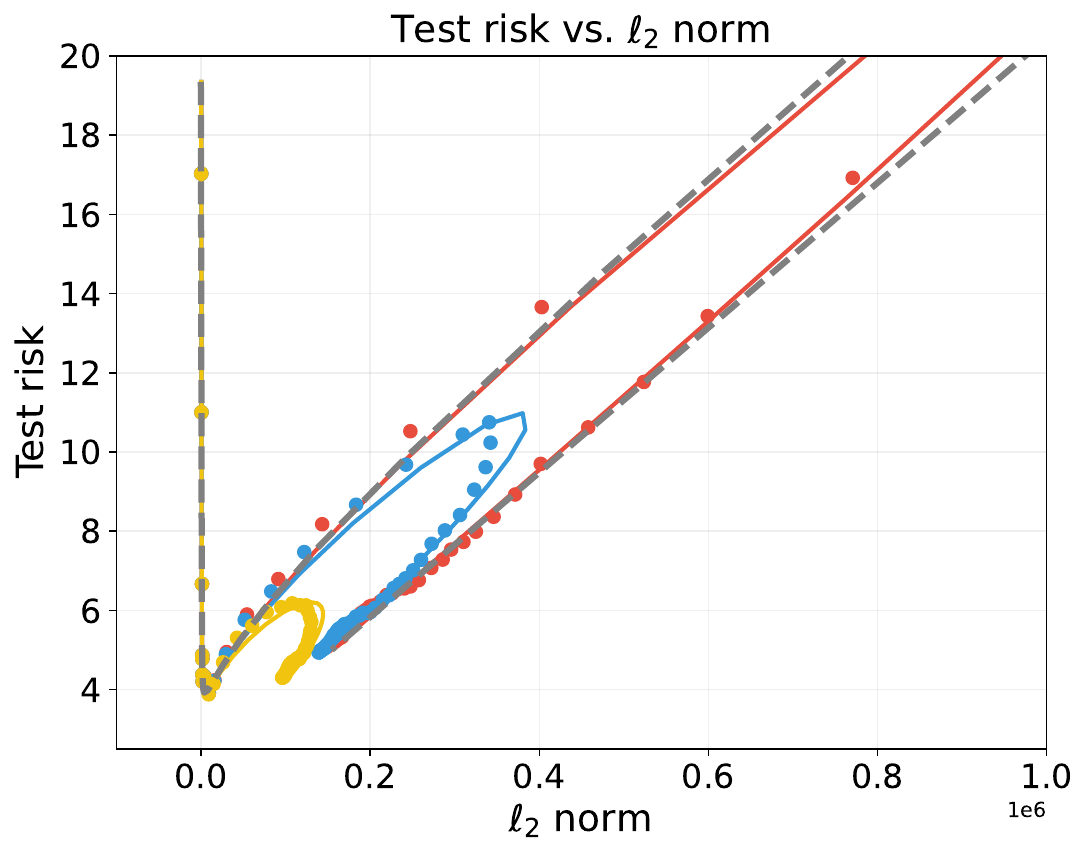}
    }
    \caption{The relationship between test risk, $\ell_2$ norm and the number of features $p$. Solid lines are obtained from the deterministic equivalent, and points are numerical simulations, with the different curves denoting different regularization strengths. Training data $\{({\bm x}_i, y_i)\}_{i \in [n]}$, $n=300$, sub-sampled from the \textbf{FashionMNIST} data set \citep{xiao2017fashion}, with feature map given by $\varphi({\bm x}, {\bm w}) = {\rm erf}(\langle {\bm x}, {\bm w}\rangle)$ and ${\bm w} \sim {\mathcal N}(0, {\bm I}/d)$, where $d=748$.}
    \label{fig:rff_FashionMNIST}
\end{figure*}

\begin{figure*}[!ht]
    \centering
    
    \subfigure[Test risk vs. $p$]{
        \includegraphics[width=0.3\textwidth]{figures/rff_real_world/test_error_vs_features_MNIST.pdf}
    }
    \subfigure[Frobenius norm vs. $p$]{
        \includegraphics[width=0.3\textwidth]{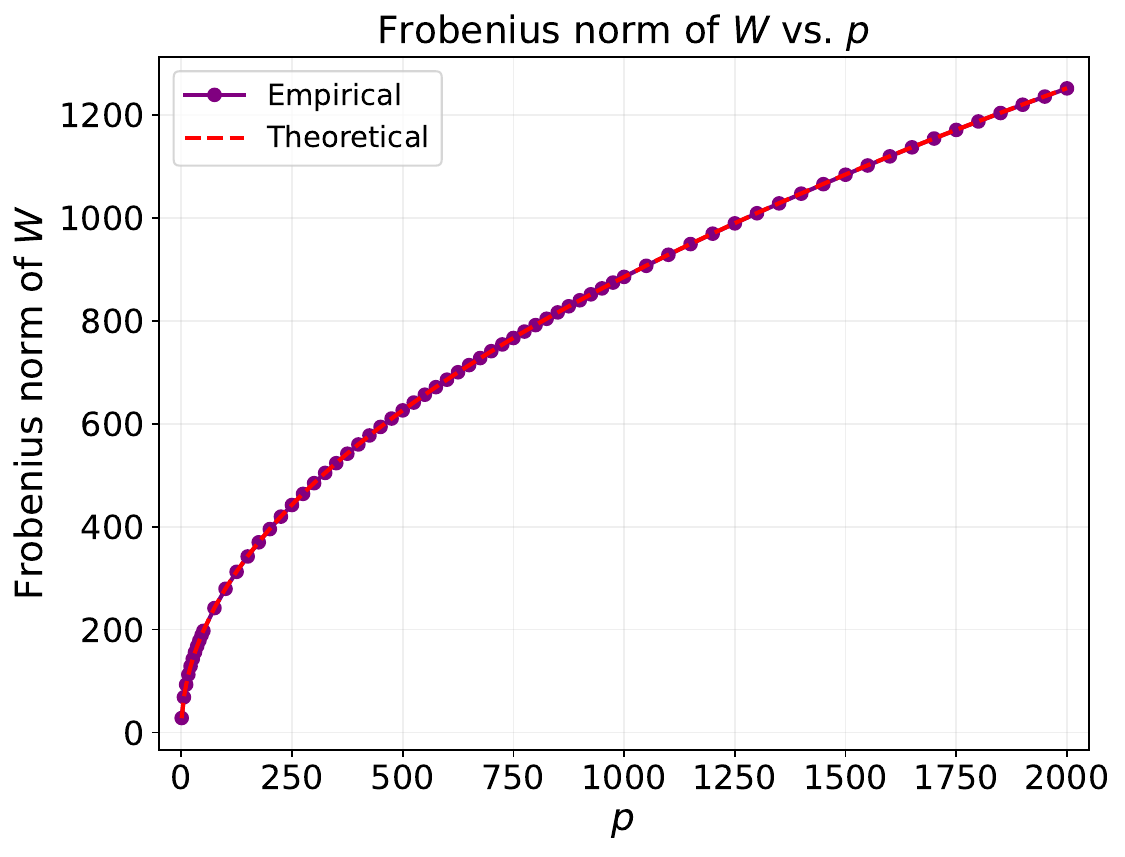}
    }
    \subfigure[Test risk vs. Frobenius norm]{
        \includegraphics[width=0.3\textwidth]{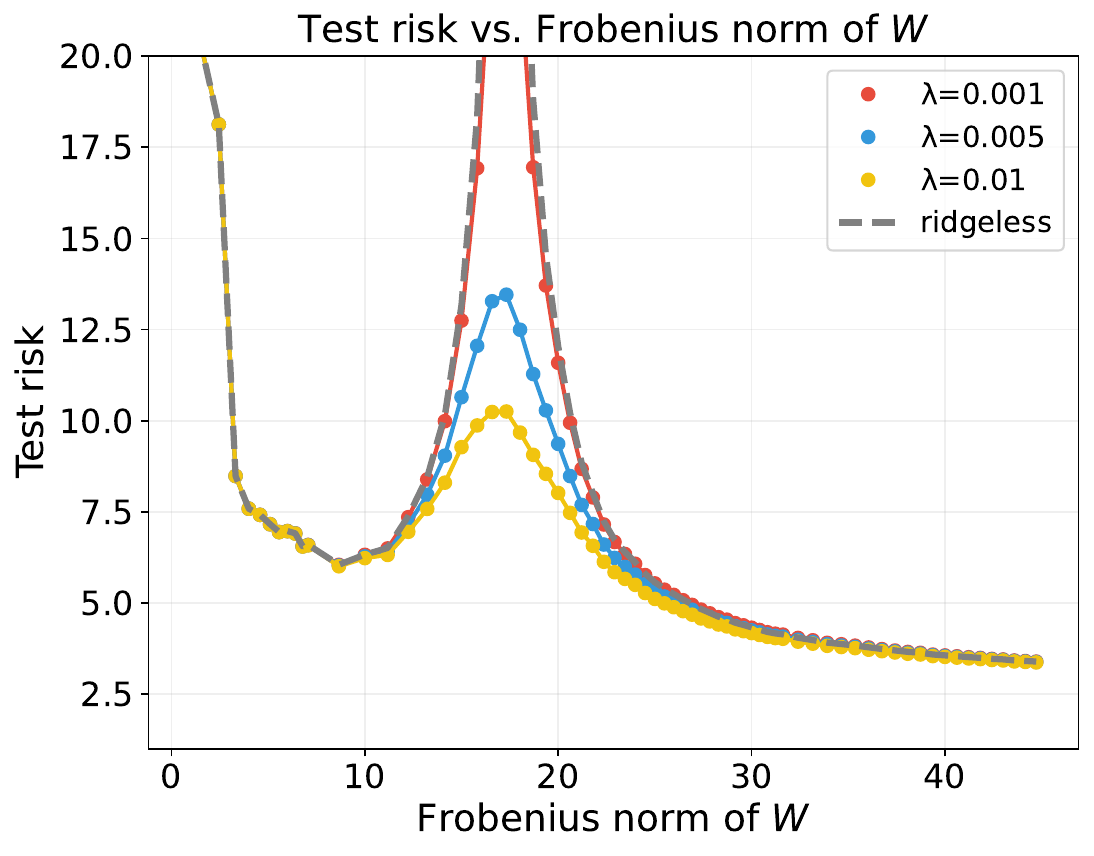}
    }
    \caption{The relationship between test risk, Frobenius norm of $\bm W$ (the weights in the \textbf{hidden layer}) and the number of features $p$. Training data $\{({\bm x}_i, y_i)\}_{i \in [n]}$, $n=300$, sub-sampled from the MNIST data set \citep{lecun1998gradient}, with feature map given by $\varphi({\bm x}, {\bm w}) = {\rm erf}(\langle {\bm x}, {\bm w}\rangle)$ and ${\bm w} \sim {\mathcal N}(0, {\bm I}/d)$, where $d=748$.}
    \label{fig:rff_first_layer_norm}
\end{figure*}

\begin{figure*}[!ht]
    \centering
    
    \subfigure[Test risk vs. $p$]{
        \includegraphics[width=0.3\textwidth]{figures/rff_real_world/test_error_vs_features_MNIST.pdf}
    }
    \subfigure[Path norm vs. $p$]{
        \includegraphics[width=0.3\textwidth]{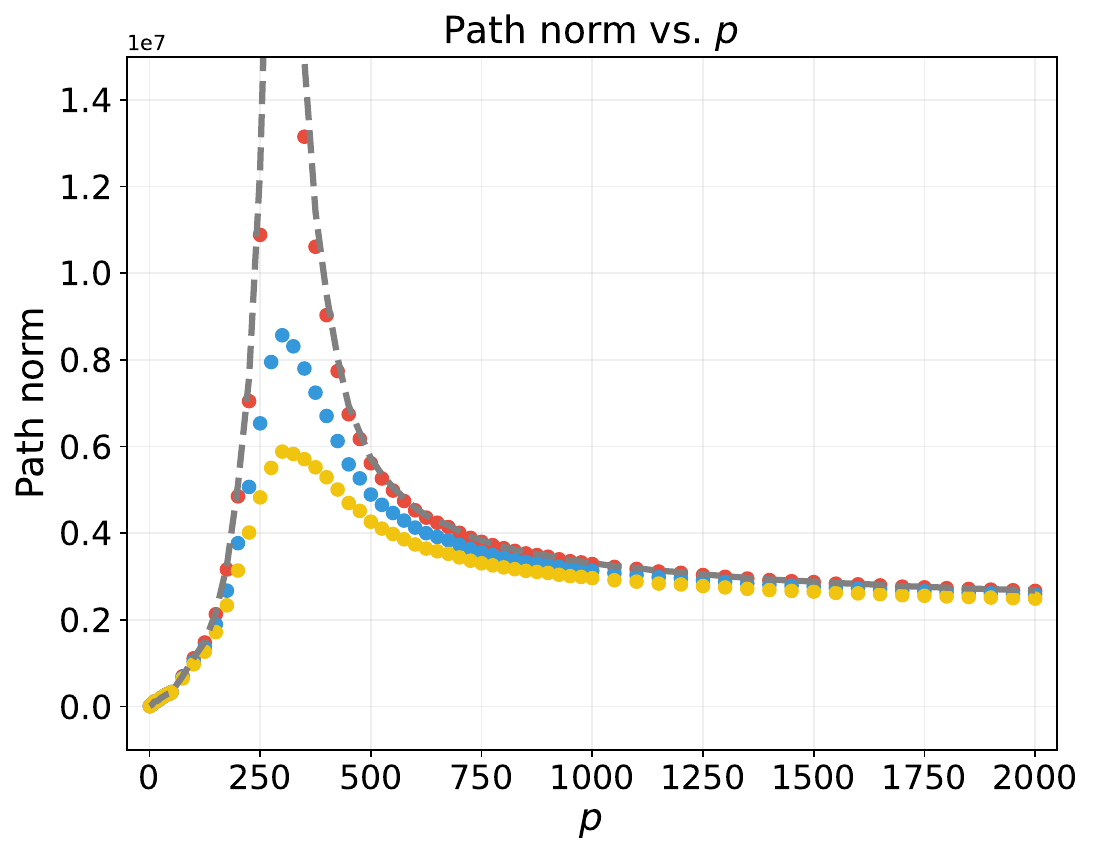}
    }
    \subfigure[Test risk vs. Path norm]{
        \includegraphics[width=0.3\textwidth]{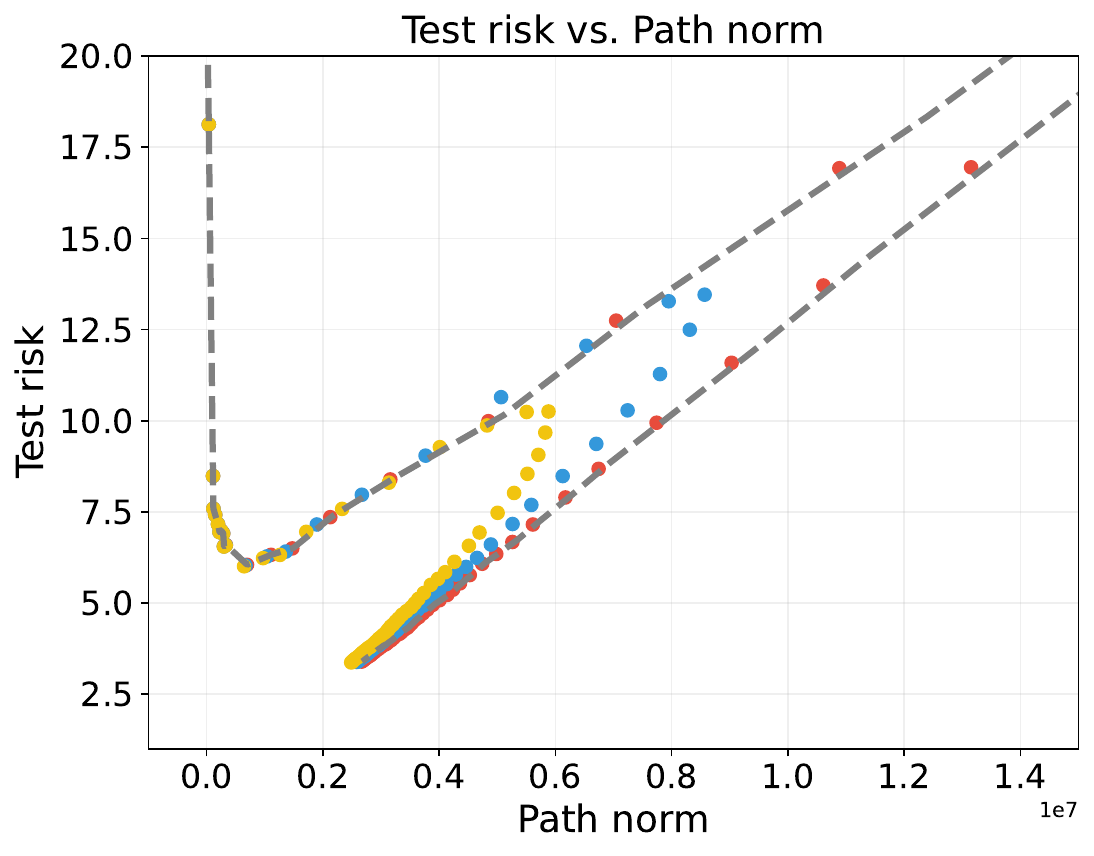}
    }
    \caption{The relationship between test risk, Path norm and the number of features $p$. Training data $\{({\bm x}_i, y_i)\}_{i \in [n]}$, $n=300$, sub-sampled from the MNIST data set \citep{lecun1998gradient}, with feature map given by $\varphi({\bm x}, {\bm w}) = {\rm erf}(\langle {\bm x}, {\bm w}\rangle)$ and ${\bm w} \sim {\mathcal N}(0, {\bm I}/d)$, where $d=748$.}
    \label{fig:rff_pathnorm}
\end{figure*}

\subsection{Norm-based capacity in two-layer neural networks}\label{app:exp_two_layer_NNs}

In this section, we investigate the relationship between test loss and different norm-based capacities for two-layer fully connected neural networks. Specifically, we evaluate four norm-based capacities: \textbf{Frobenius norm}, \textbf{Frobenius distance}, \textbf{spectral complexity}, and \textbf{path norm}. Our empirical results indicate that the path norm is the most suitable model capacity among these three norm-based capacities, which coincides with 
\cite{jiang2019fantastic}.

In our experiments, we use a balanced subset of the MNIST data set \citep{lecun1998gradient}, consisting of 4,000 training samples from all the 10 classes. To simulate real-world noisy data, a noise level $\eta$ is introduced, meaning $\eta \cdot 100\%$ of the training labels are randomly corrupted. 

The model is chosen as a two-layer fully connected neural network with parameters including a bias term. The network is initialized using the Xavier initialization scheme and trained using the Stochastic Gradient Descent (SGD) optimizer with a learning rate of 0.1 and momentum of 0.95 over 2,000 epochs. During training, a batch size of 128 is used. 

To control model complexity, we vary the number of neurons in the hidden layer, thereby adjusting the number of model parameters. To ensure the robustness of the results, each experiment is repeated 10 times for each hidden layer dimension. The model’s performance is evaluated using the Mean Squared Error (MSE) loss on both the training and test sets.

{\bf Frobenius norm:}
The parameter Frobenius norm is defined as for such two-layer neural networks
\[
\mu_{\text{fro}}(f_{\bm w}) = \sum_{j=1}^{2} \|{\bm W}_j\|_{\mathrm{F}}^2\,,
\]
where \({\bm W}_j\) is the parameter matrix of layer \(j\).

\cref{fig:two-layer_NNs_risk_vs_frobenius_norm_noise_0.2} illustrates the relationship between test loss, Frobenius norm \(\mu_{\text{fro}}\), and the number of parameters \(p\). As the number of model parameters increases, the test loss exhibits the typical double descent phenomenon. However, the Frobenius norm consistently increases monotonically (with a slowdown in the growth rate in the over-parameterized regime). Consequently, when using the Frobenius norm as a measure of model capacity, the double descent phenomenon remains observable.

\begin{figure*}[!ht]
    \centering
    \subfigure[Test (training) Loss vs. \(p\)]{\label{fig:fro_a}
        \includegraphics[width=0.30\textwidth]{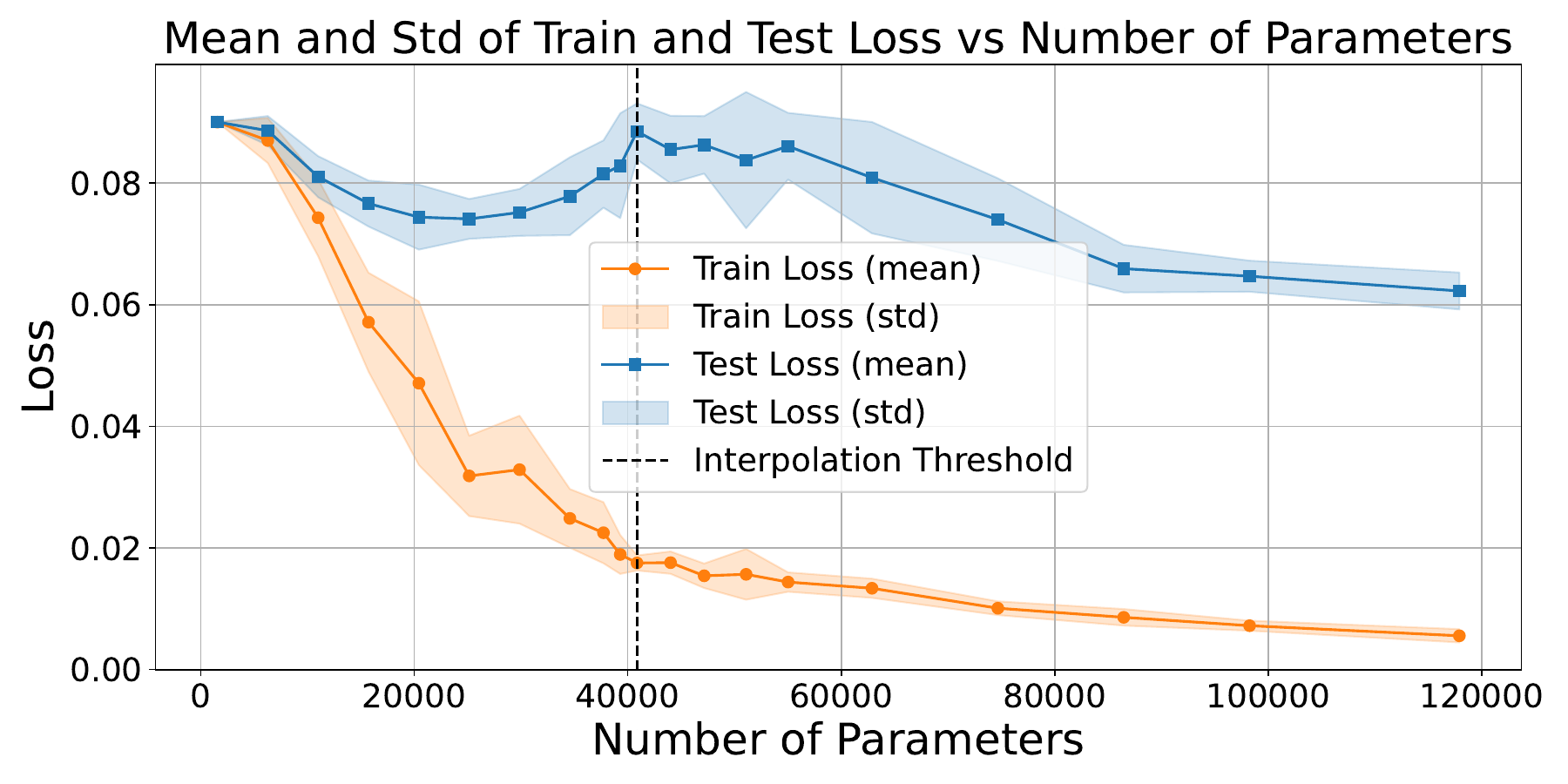}
    }
    \subfigure[\(\mu_{\text{fro}}\) vs. \(p\)]{
        \includegraphics[width=0.30\textwidth]{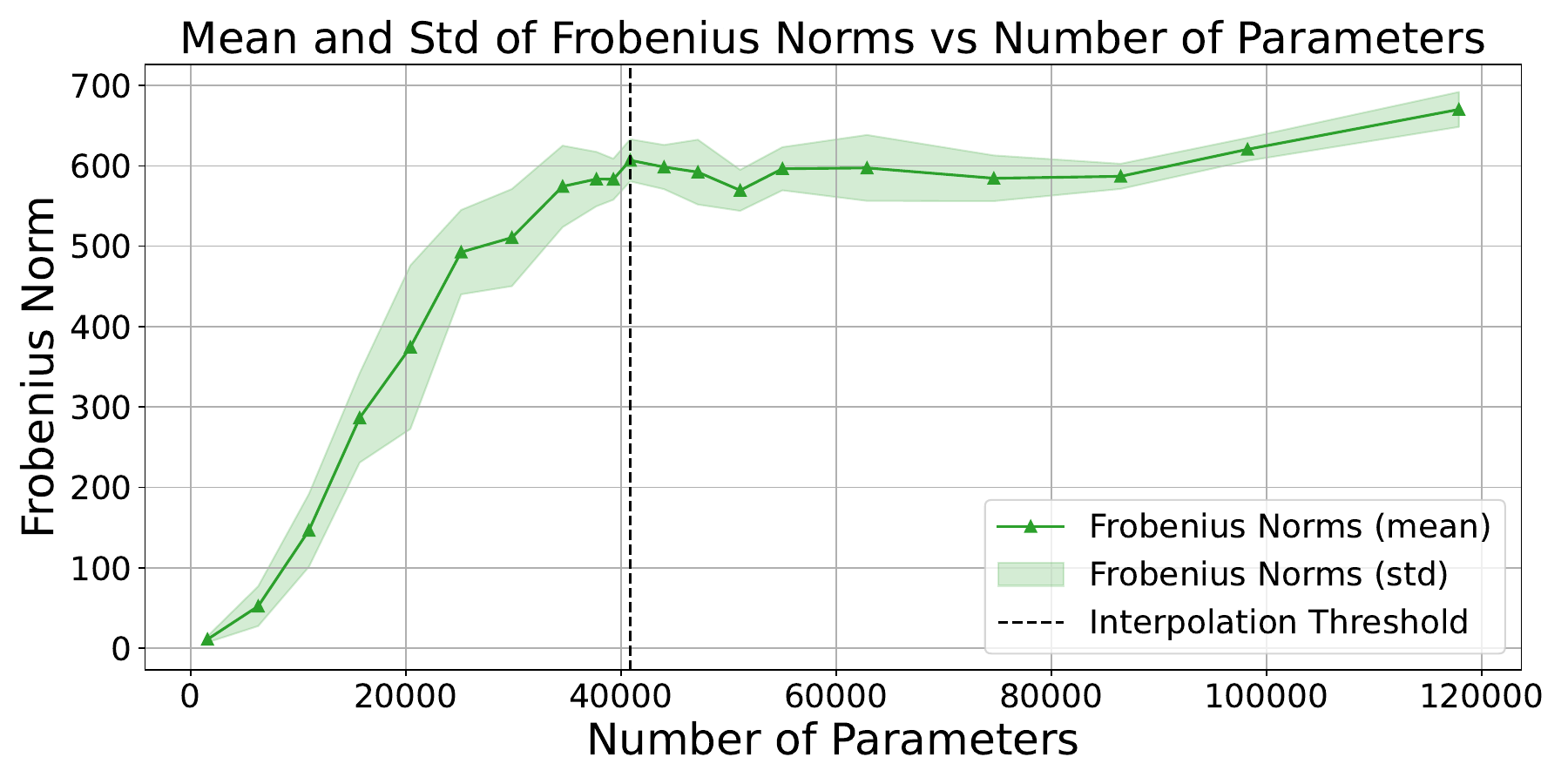}
    }
    \subfigure[Test Loss vs. \(\mu_{\text{fro}}\)]{
        \includegraphics[width=0.30\textwidth]{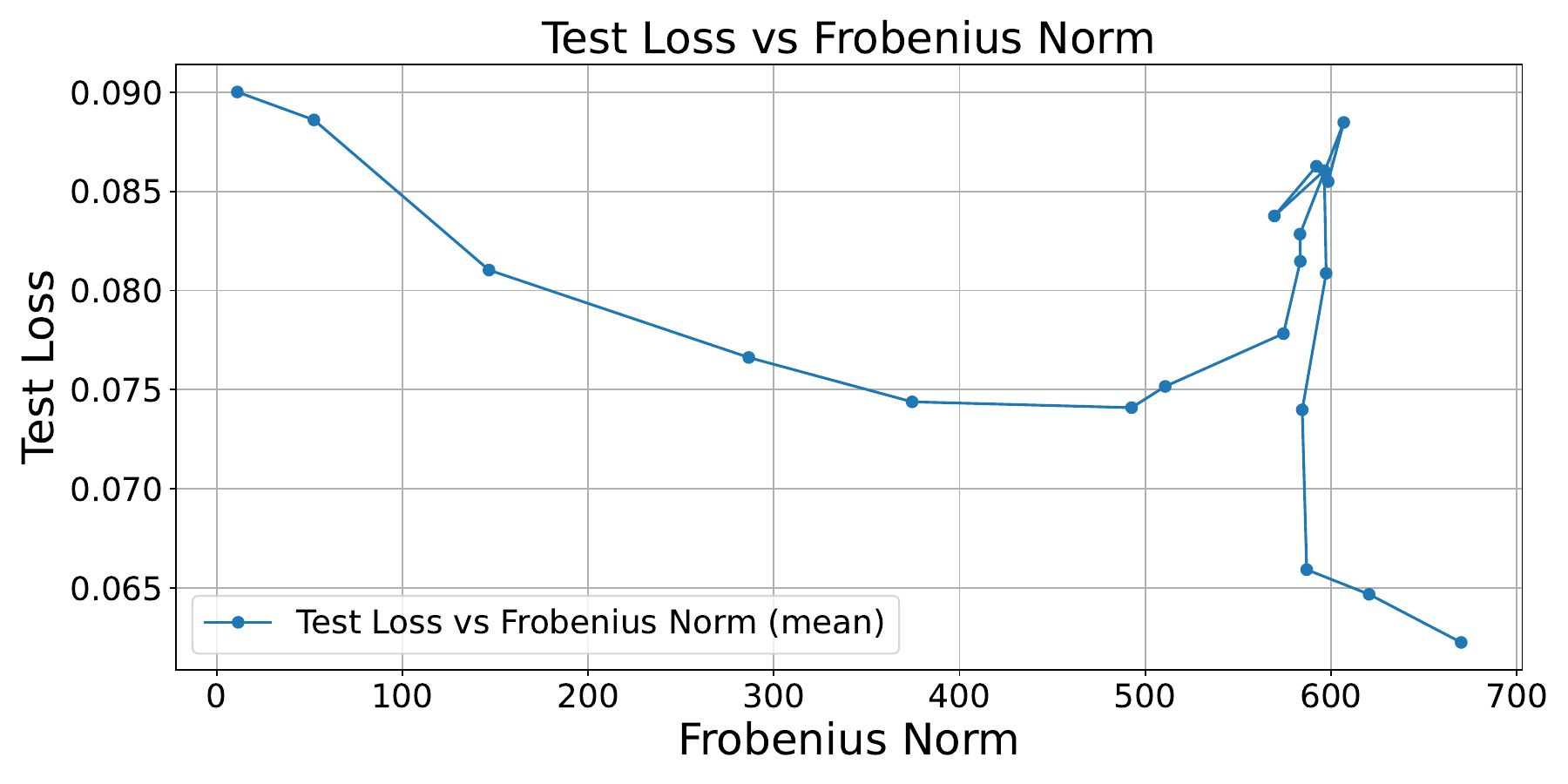}
    }
    \caption{Experiments on two-layer fully connected neural networks with noise level $\eta=0.2$. The \textbf{left} figure shows the relationship between test (training) loss and the number of the parameters \(p\). The \textbf{middle} figure shows the relationship between the Frobenius norm \(\mu_{\text{fro}}\) and \(p\). The \textbf{right} figure shows the relationship between the test loss and \(\mu_{\text{fro}}\).}
    \label{fig:two-layer_NNs_risk_vs_frobenius_norm_noise_0.2}
\end{figure*}

{\bf Frobenius distance:}
The Frobenius distance is defined as for such two-layer neural networks
\[
\mu_{\text{fro-dis}}(f_{\bm w}) = \sum_{j=1}^{2} \|{\bm W}_j - {\bm W}_j^0\|_{\mathrm{F}}^2\,,
\]
where \({\bm W}_j^0\) is the initialization of \({\bm W}_j^0\).

\begin{figure*}[!ht]
    \centering
    \subfigure[Test (training) Loss vs. \(p\)]{
        \includegraphics[width=0.30\textwidth]{figures/two_layer_NNs/loss_0.2.pdf}
    }
    \subfigure[\(\mu_{\text{fro-dis}}\) vs. \(p\)]{
        \includegraphics[width=0.30\textwidth]{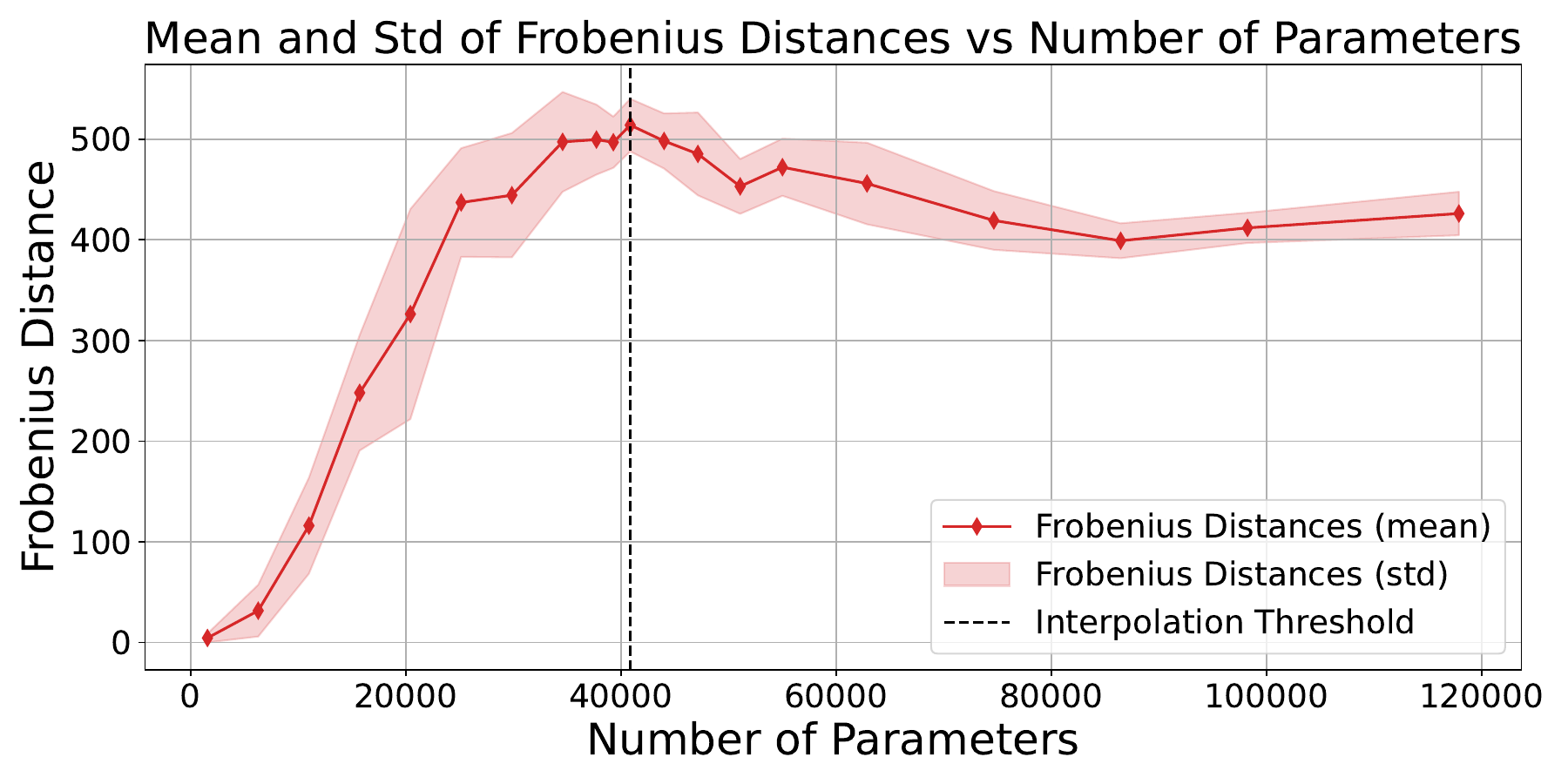}
    }
    \subfigure[Test Loss vs. \(\mu_{\text{fro-dis}}\)]{
        \includegraphics[width=0.30\textwidth]{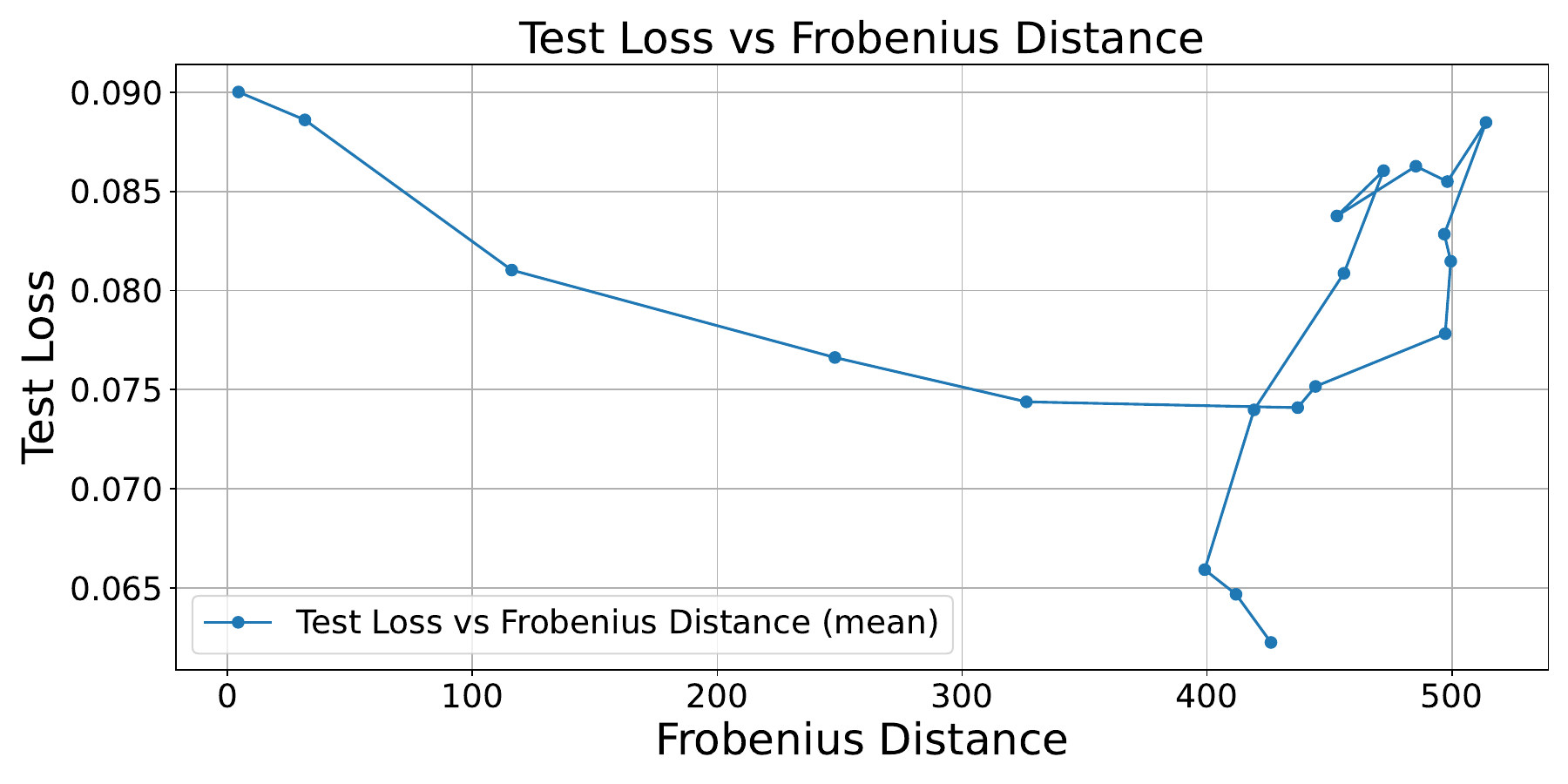}
    }
    \caption{Experiments on two-layer fully connected neural networks with noise level $\eta=0.2$. The \textbf{left} figure is the same as \cref{fig:fro_a}. The \textbf{middle} figure shows the relationship between the Frobenius distance \(\mu_{\text{fro-dis}}\) and \(p\). The \textbf{right} figure shows the relationship between the test loss and \(\mu_{\text{fro-dis}}\) .}
    \label{fig:two-layer_NNs_risk_vs_frobenius_distance_noise_0.2}
\end{figure*}

\cref{fig:two-layer_NNs_risk_vs_frobenius_distance_noise_0.2} illustrates the relationship between test loss, Frobenius distance \(\mu_{\text{fro}}\), and the number of parameters \(p\). Different from Frobenius norm, Frobenius distance monotonically increases in the under-parameterized regime, but shows a decrease in the over-parameterized regime. However, since the change of Frobenius distance in the over-parameterized regime is gentle and even eventually appears to rise, using Frobenius distance as the model capacity does not reflect the generalization capacity of the model.

{\bf Spectral complexity:}
The spectral complexity is defined as for such two-layer neural networks
\[
\mu_{\text{spec}}(f_{\bm w}) = \left( \prod_{i=1}^{2} \|{\bm W}_i\| \right) \left( \sum_{i=1}^{2} \frac{\|{\bm W}_i\|_{2,1}^{2/3}}{\|{\bm W}_i\|^{2/3}} \right)^{3/2}\,,
\]
where \(\|\cdot\|\) denote the spectral norm, and \(\|\cdot\|_{p,q}\) denotes the \((p,q)\)-norm of a matrix, defined as \(\|{\bm M}\|_{p,q} := \|\left(\|{\bm M}_{:,1}\|_p,\cdots,\|{\bm M}_{:,m}\|_p\right)\|_q\) for \({\bm M} \in \R^{d\times m}\).

\begin{figure*}[!ht]
    \centering
    \subfigure[Test (training) Loss vs. \(p\)]{
        \includegraphics[width=0.30\textwidth]{figures/two_layer_NNs/loss_0.2.pdf}
    }
    \subfigure[\(\mu_{\text{spec}}\) vs. \(p\)]{
        \includegraphics[width=0.30\textwidth]{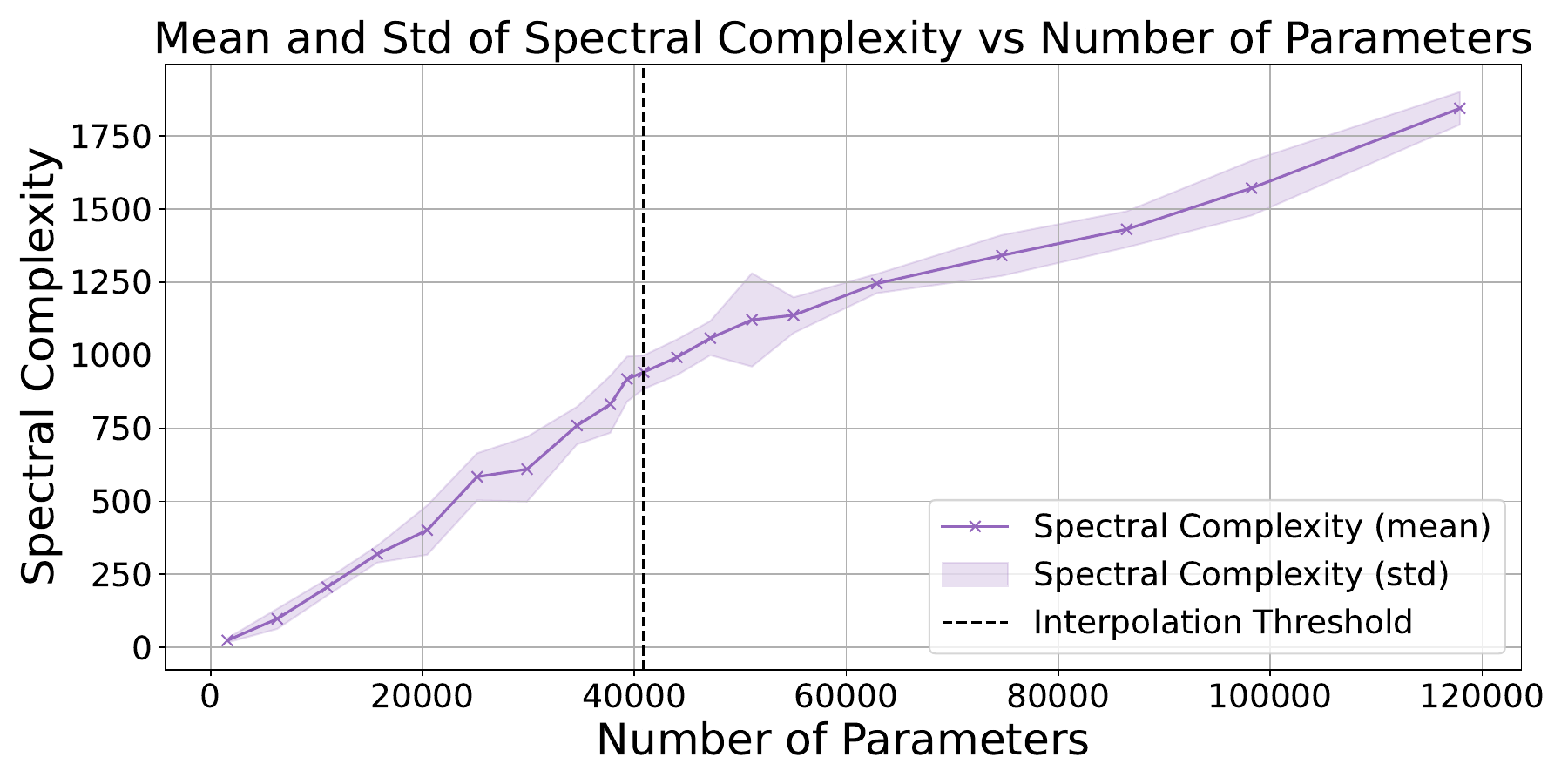}
    }
    \subfigure[Test Loss vs. \(\mu_{\text{spec}}\)]{
        \includegraphics[width=0.30\textwidth]{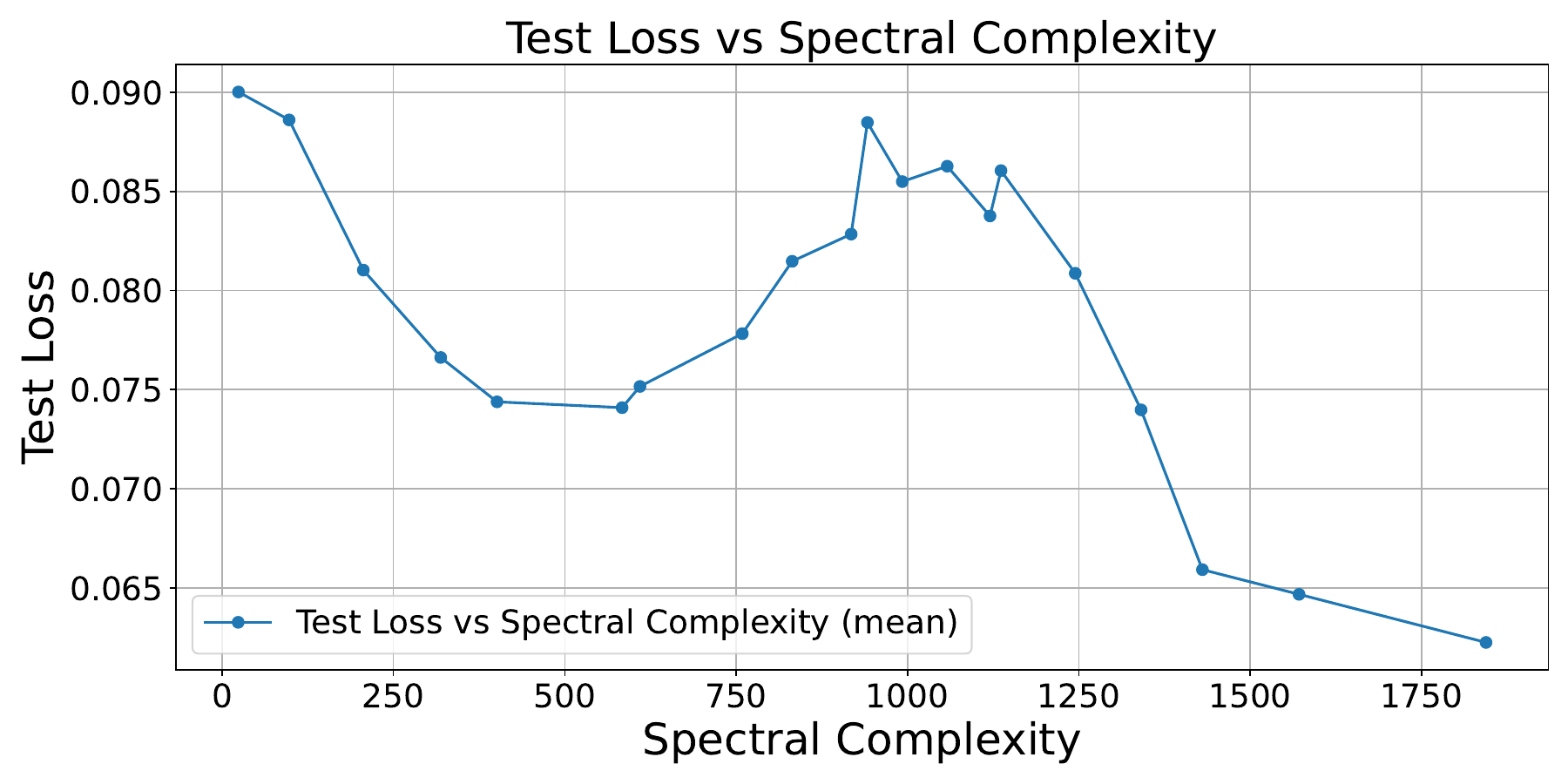}
    }
    \caption{Experiments on two-layer fully connected neural networks with noise level $\eta=0.2$. The \textbf{left} figure is the same as \cref{fig:fro_a}. The \textbf{middle} figure shows the relationship between the path norm \(\mu_{\text{spec}}\) and \(p\). The \textbf{right} figure shows the relationship between the test loss and \(\mu_{\text{spec}}\).}
    \label{fig:two-layer_NNs_risk_vs_spectral_complexity_noise_0.2}
\end{figure*}

\cref{fig:two-layer_NNs_risk_vs_spectral_complexity_noise_0.2} illustrates the relationship between test loss, Spectral complexity \(\mu_{\text{spec}}\), and the number of parameters \(p\). We can see that \(\mu_{\text{spec}}\) increases monotonically with \(p\), so the same double descent phenomenon occurs with spectral complexity as model capacity.

{\bf Path norm:} The path norm is defined as 
\[
\mu_{\text{path-norm}}(f_{\bm w}) = \sum_{i} f_{{\bm w}^2}(\bm{1})[i],
\]
where \({\bm w}^2 = {\bm w} \circ {\bm w}\) is the element-wise square of the parameters, and \(\bm{1}\) for all-one vector. The path norm represents the sum of the outputs of the neural network after squaring all the parameters and inputting the all-one vector.

\cref{fig:two-layer_NNs_risk_vs_path_norm_noise_0.2} illustrates the relationship between test loss, Path norm \(\mu_{\text{path}}\), and the number of parameters \(p\). Path norm increases monotonically in the under-parameterized regime and decreases monotonically in the over-parameterized regime. This behavior resembles that of the \(\ell_2\) norm of random feature estimators. Additionally, the relationship between test loss and path norm forms a U-shaped curve in the under-parameterized regime and increases monotonically in the over-parameterized regime. This pattern is strikingly similar to the relationship between test loss and the \(\ell_2\) norm in random feature models.

\begin{figure*}[!ht]
    \centering
    \subfigure[Test (training) Loss vs. \(p\)]{\label{fig:two-layer_NNs_risk_vs_path_norm_noise_0.2_1}
        \includegraphics[width=0.30\textwidth]{figures/two_layer_NNs/loss_0.2.pdf}
    }
    \subfigure[\(\mu_{\text{path-norm}}\) vs. \(p\)]{\label{fig:two-layer_NNs_risk_vs_path_norm_noise_0.2_2}
        \includegraphics[width=0.30\textwidth]{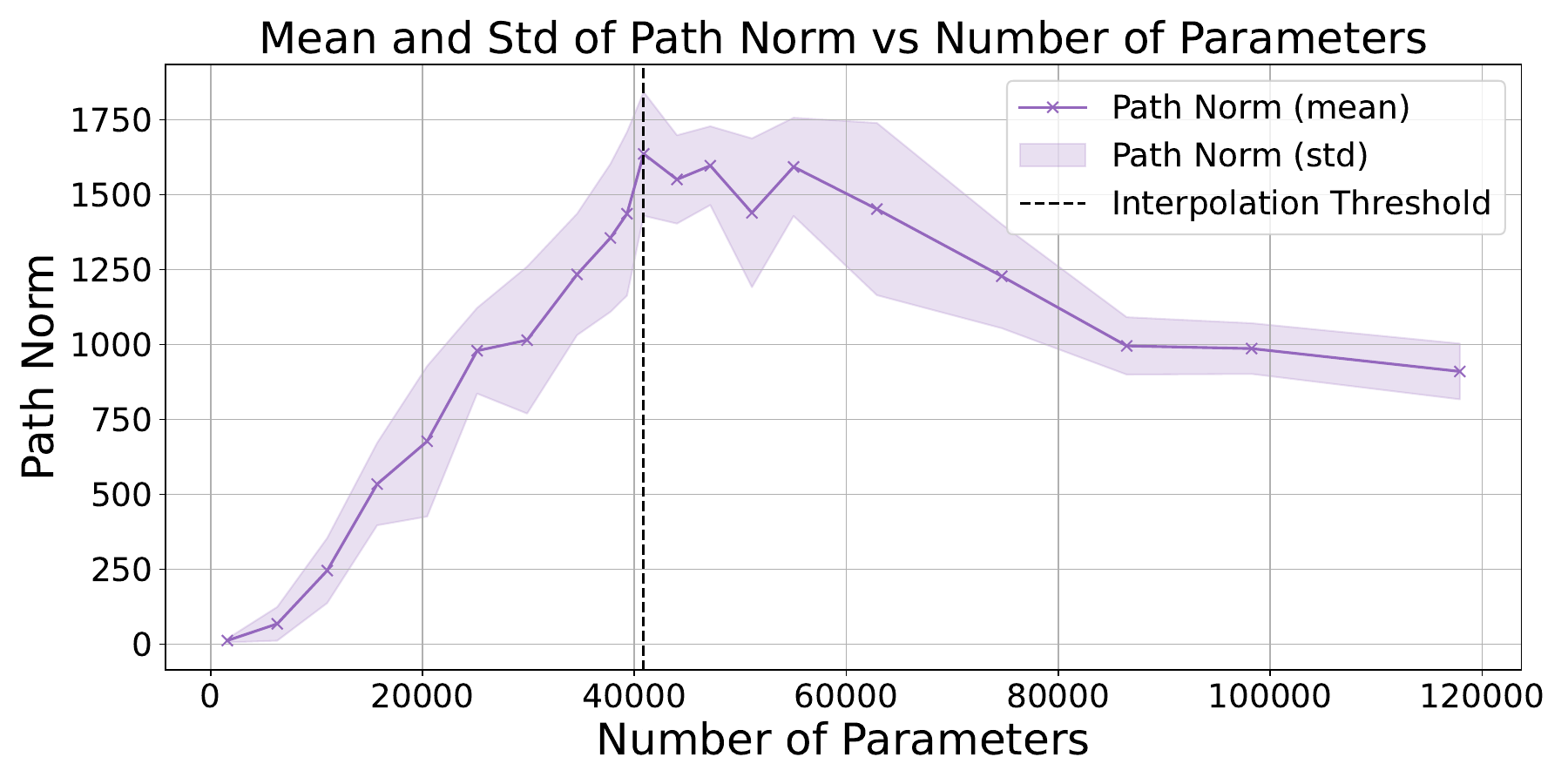}
    }
    \subfigure[Test Loss vs. \(\mu_{\text{path-norm}}\)]{\label{fig:two-layer_NNs_risk_vs_path_norm_noise_0.2_3}
        \includegraphics[width=0.30\textwidth]{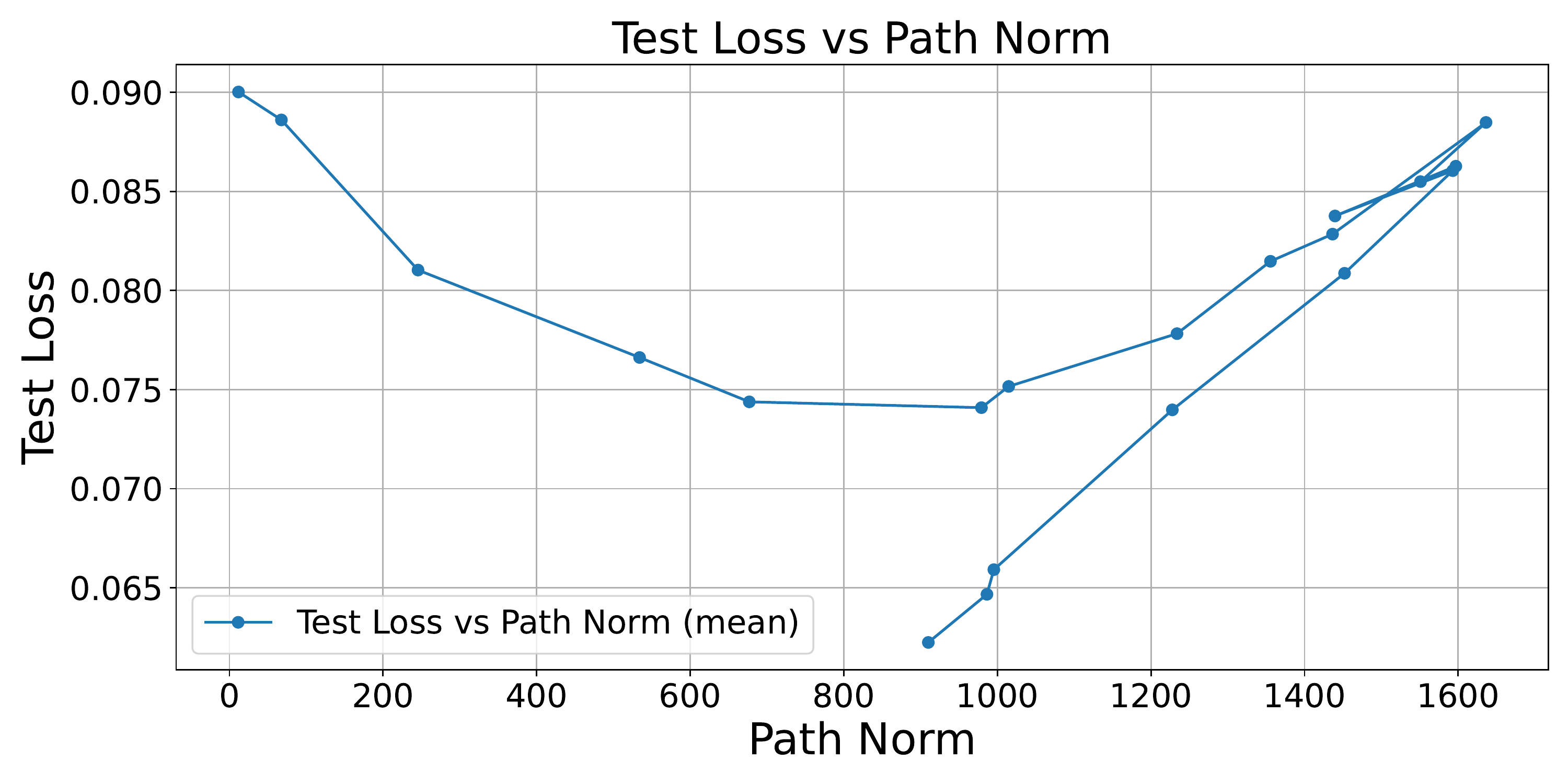}
    }
    \caption{Experiments on two-layer fully connected neural networks with noise level $\eta=0.2$. The \textbf{left} figure is the same as \cref{fig:fro_a}. The \textbf{middle} figure shows the relationship between the path norm \(\mu_{\text{path-norm}}\) and \(p\). The \textbf{right} figure shows the relationship between the test loss and the path norm.}
    \label{fig:two-layer_NNs_risk_vs_path_norm_noise_0.2}
\end{figure*}

\begin{figure*}[!ht]
    \centering
    \subfigure[Test (training) Loss vs. \(p\)]{\label{fig:two-layer_NNs_risk_vs_path_norm_noise_0.1_1}
        \includegraphics[width=0.30\textwidth]{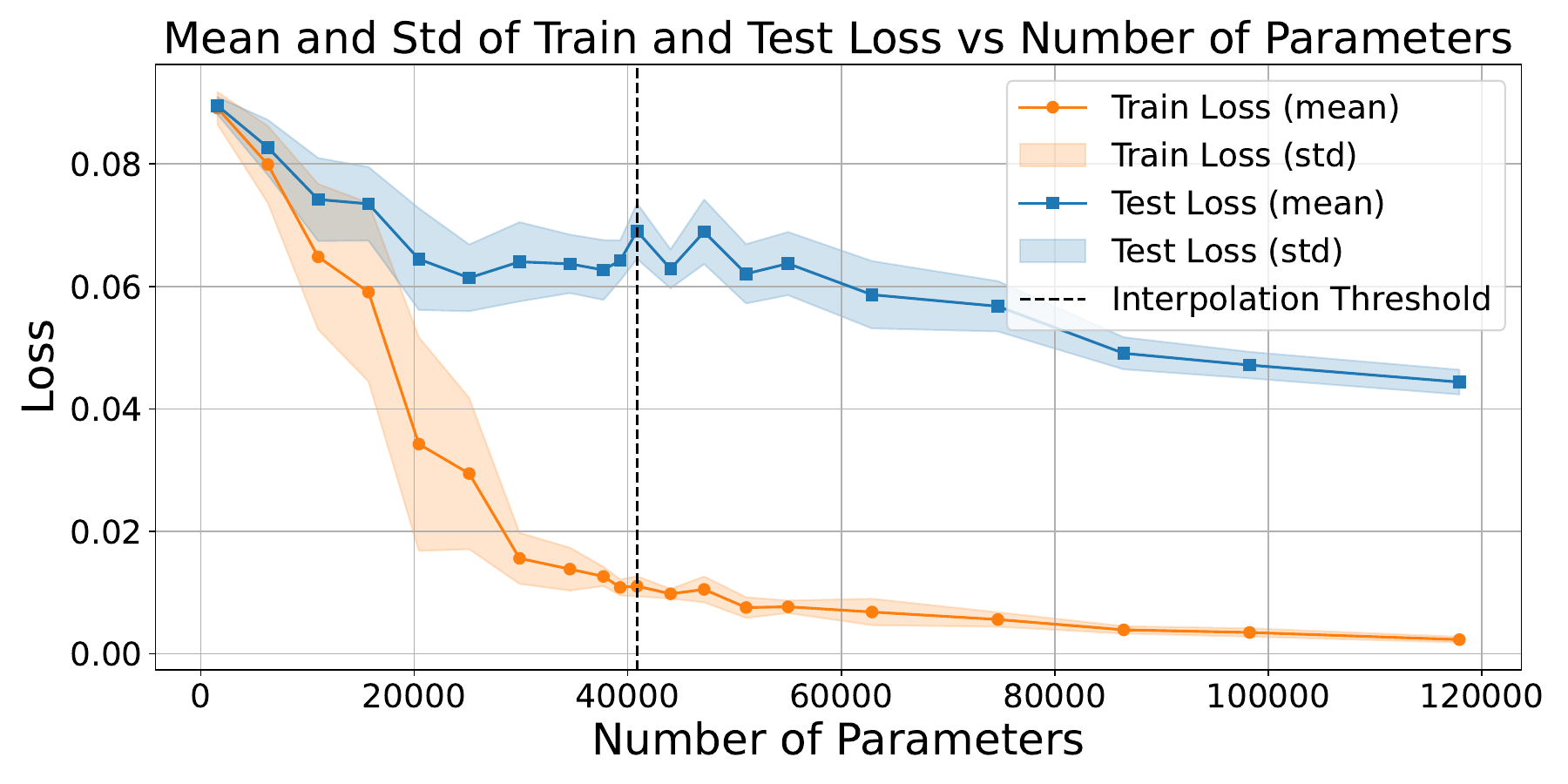}
    }
    \subfigure[\(\mu_{\text{path-norm}}\) vs. \(p\)]{\label{fig:two-layer_NNs_risk_vs_path_norm_noise_0.1_2}
        \includegraphics[width=0.30\textwidth]{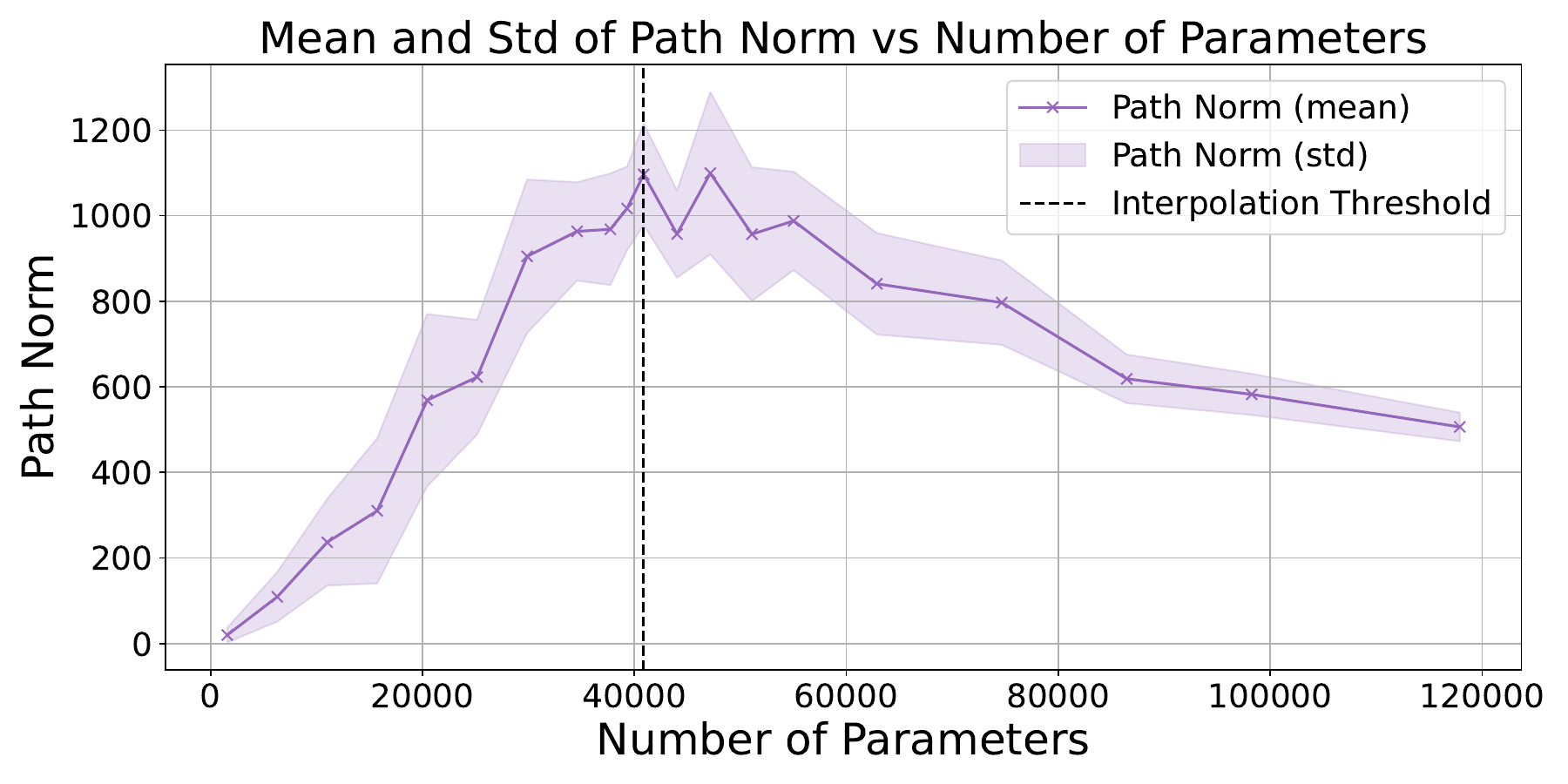}
    }
    \subfigure[Test Loss vs. \(\mu_{\text{path-norm}}\)]{\label{fig:two-layer_NNs_risk_vs_path_norm_noise_0.1_3}
        \includegraphics[width=0.30\textwidth]{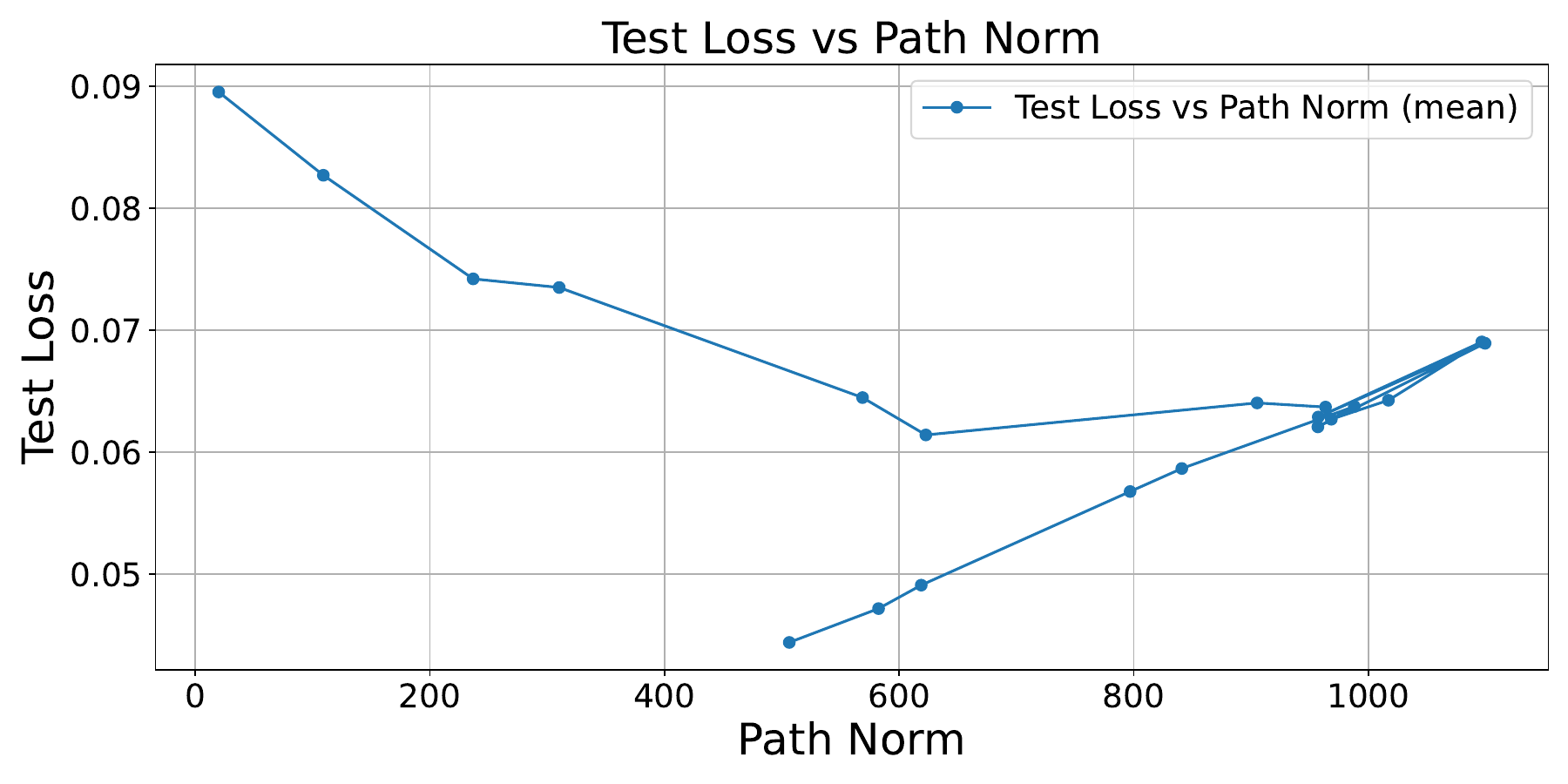}
    }
    \caption{Experiments on two-layer fully connected neural networks with noise level $\eta=0.1$.}
    \label{fig:two-layer_NNs_risk_vs_path_norm_noise_0.1}
\end{figure*}

\begin{figure*}[!ht]
    \centering
    \subfigure[Test (training) Loss vs. \(p\)]{\label{fig:two-layer_NNs_risk_vs_path_norm_noise_0.3_1}
        \includegraphics[width=0.30\textwidth]{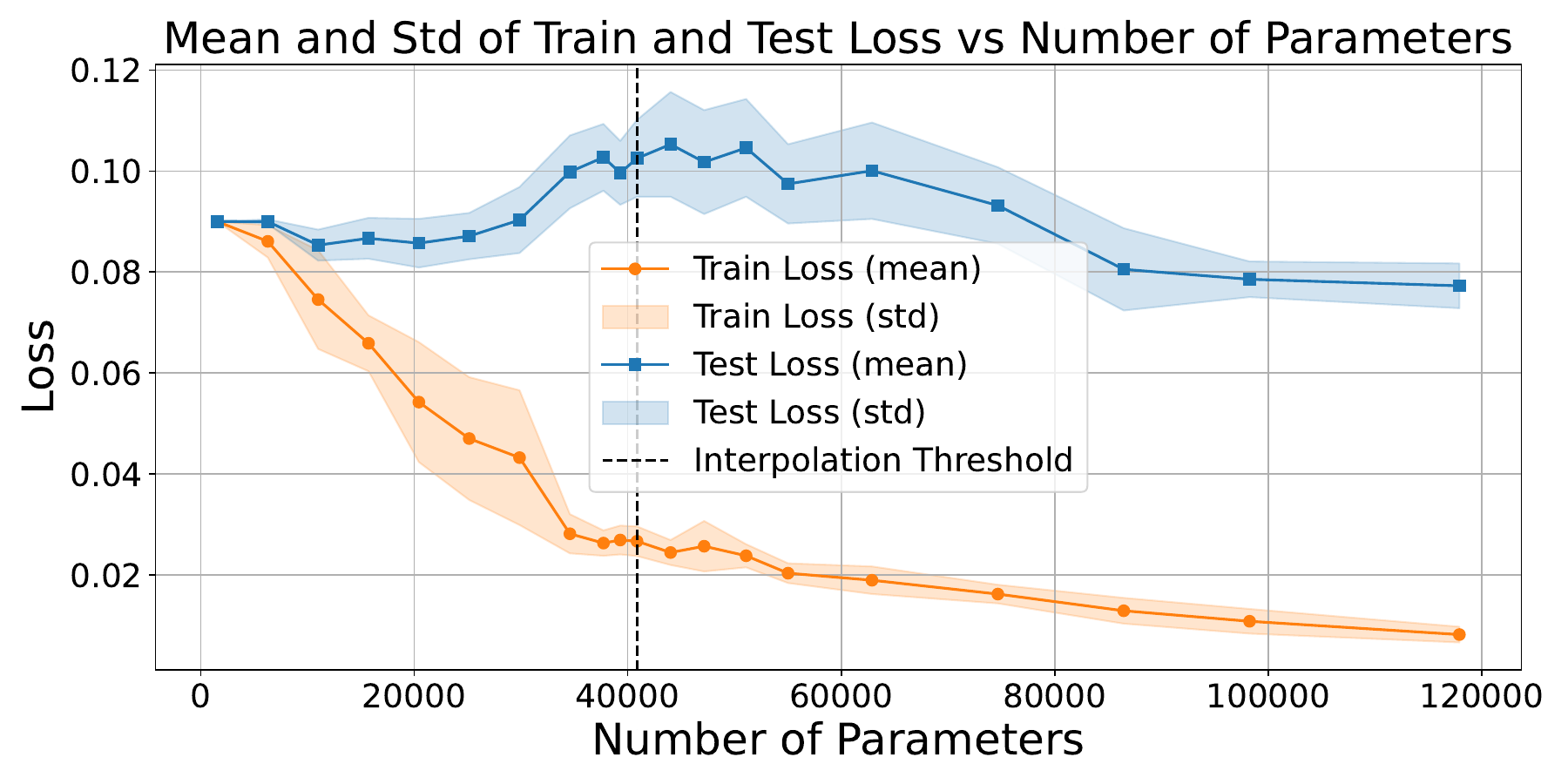}
    }
    \subfigure[\(\mu_{\text{path-norm}}\) vs. \(p\)]{\label{fig:two-layer_NNs_risk_vs_path_norm_noise_0.3_2}
        \includegraphics[width=0.30\textwidth]{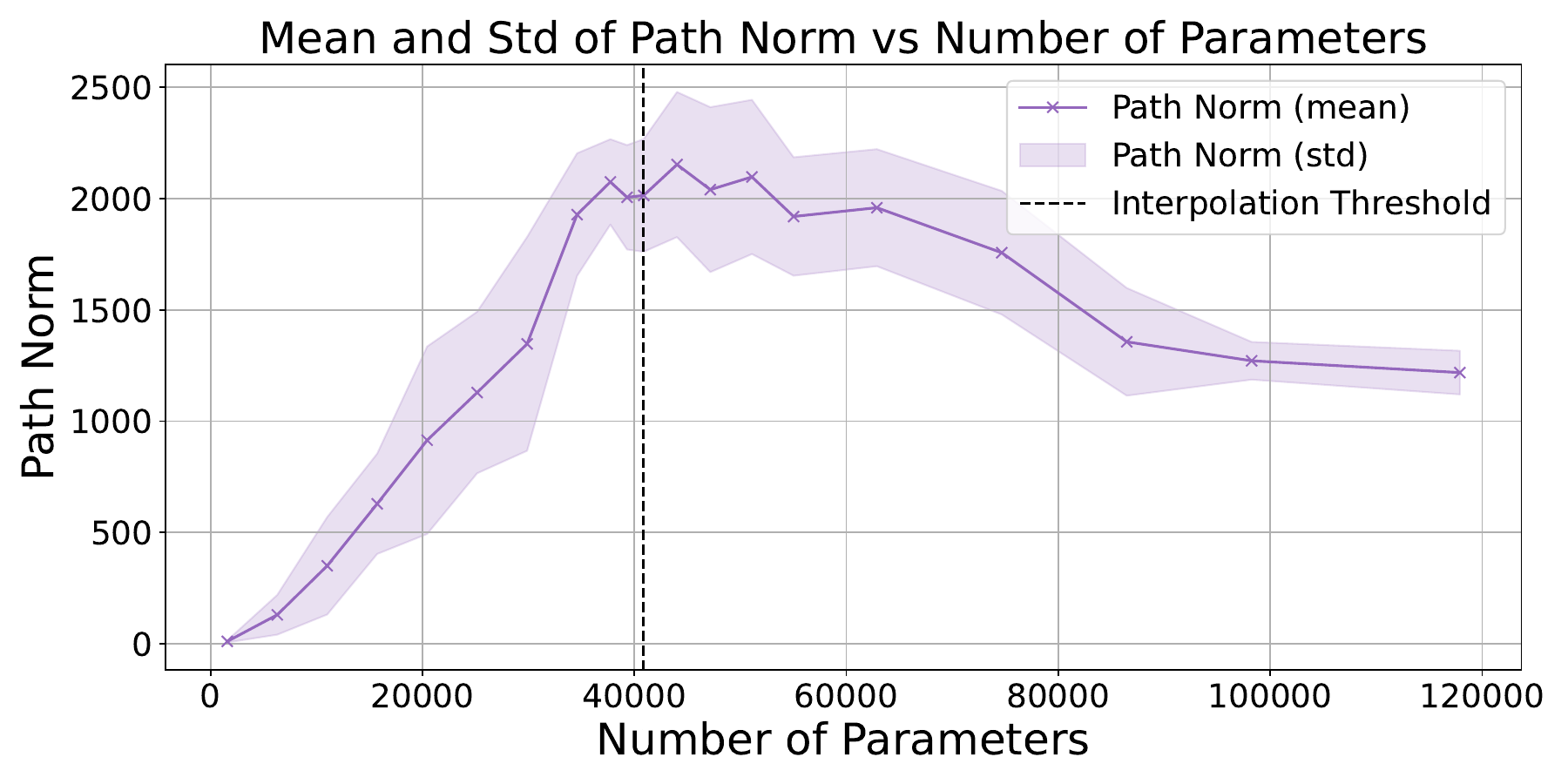}
    }
    \subfigure[Test Loss vs. \(\mu_{\text{path-norm}}\)]{\label{fig:two-layer_NNs_risk_vs_path_norm_noise_0.3_3}
        \includegraphics[width=0.30\textwidth]{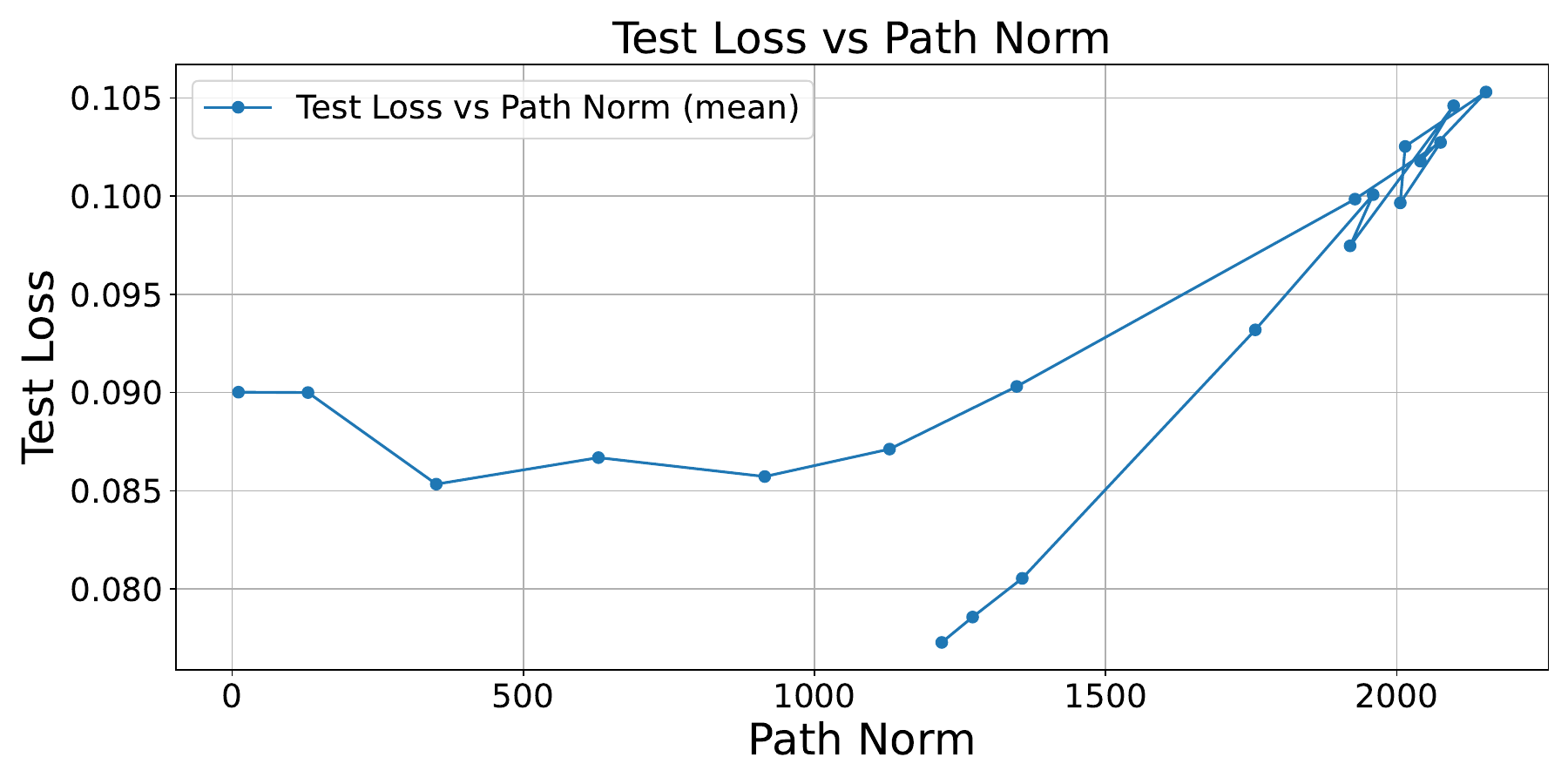}
    }
    \caption{Experiments on two-layer fully connected neural networks with noise level $\eta=0.3$.}
    \label{fig:two-layer_NNs_risk_vs_path_norm_noise_0.3}
\end{figure*}

Besides, we also conduct experiments with the noise level \(\eta=0.1\) and \(\eta=0.3\) in
\cref{fig:two-layer_NNs_risk_vs_path_norm_noise_0.1,fig:two-layer_NNs_risk_vs_path_norm_noise_0.3}, respectively. 
We can see that, when the noise level increases, we observe stronger peaks in the test loss for double descent. However, the trend of test loss is similar at different noise levels with Path norm \(\mu_{\text{path-norm}}\) as the model capacity, i.e., it shows a U-shape at the under-parameterized regime and an almost linear relationship at the over-parameterized regime.

These observations demonstrates the relationship between the test loss and norm, which is general, not limited to RFMs in the main text.

\subsection{Norm-based capacity in deep neural networks}\label{app:exp_deep_NNs}

To assess whether our norm-based capacity view extends beyond linear/RFM models and two-layer neural networks, we study the relationship between generalization and norm-based capacity on three deep families: \textit{(i)} a 3-layer MLP trained on MNIST with 15\% symmetric label noise (varying hidden width), \textit{(ii)} a 3-layer CNN trained on MNIST with 15\% symmetric label noise (varying channels), and \textit{(iii)} ResNet18~\citep{he2016deep} trained on CIFAR-10 with 15\% noise (uniform width scaling across blocks). We train to (near) zero training error when feasible, then compute the path norm of the trained network and report test error on the clean test set. All runs are reproducible on a standard laptop with 16\,GB memory. Code, scripts with pinned versions, and trained models are released at
\href{https://github.com/yichenblue/norm-capacity}{\texttt{github.com/yichenblue/norm-capacity}} to facilitate verification and reuse.

\paragraph{MLP.}
We use \textbf{MNIST} dataset~\citep{lecun1998gradient} with 16,000 samples and a 25\% training split ($n_{\text{train}}=4{,}000$, $n_{\text{test}}=12{,}000$). The test set remains clean, while the training labels are corrupted with 15\% symmetric noise: with probability 0.15, each label is replaced by a random class drawn uniformly from $\{0,\dots,9\}\setminus\{y\}$. The model is a three-layer MLP with ReLU activations, trained with SGD (momentum 0.9), learning rate 0.01, batch size 100, and CrossEntropyLoss for up to 500 epochs.

As shown in \cref{fig:mlp_test_error_vs_width}, plotting test error against width reproduces the familiar double-descent shape under label noise. When we instead index model capacity by the path norm of the trained network (also following~\cite{jiang2019fantastic} as in \cref{app:exp_two_layer_NNs}),
\[
\mu_{\text{path-norm}}(f_{\bm w}) \;=\; \sum_{i} f_{{\bm w}^{2}}(\bm 1)[i],
\]
and plot test error against path norm (\cref{fig:mlp_test_error_vs_path_norm}), the curve exhibits a clear phase transition: a U-shaped trend in the under-parameterized regime, followed by a joint decrease of risk and norm once sufficiently over-parameterized. These observations are consistent with our findings in random feature models.

\begin{figure*}[!ht]
    \centering
    \subfigure[Test error vs. Width]{\label{fig:mlp_test_error_vs_width}
        \includegraphics[width=0.30\textwidth]{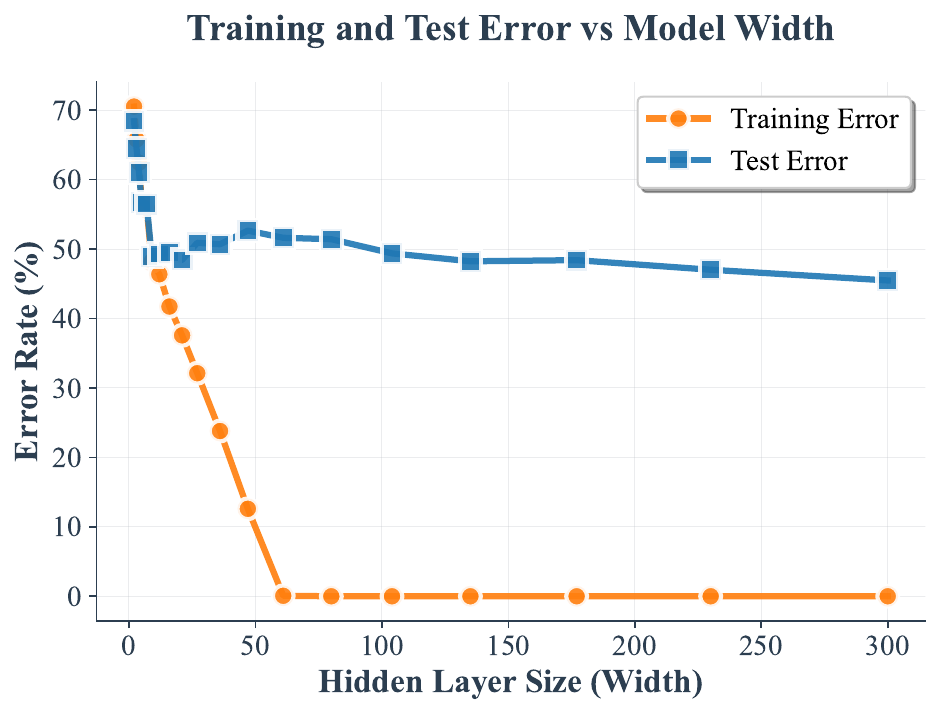}
    }
    \subfigure[Path norm vs. Width]{\label{fig:mlp_path_norm_vs_width}
        \includegraphics[width=0.30\textwidth]{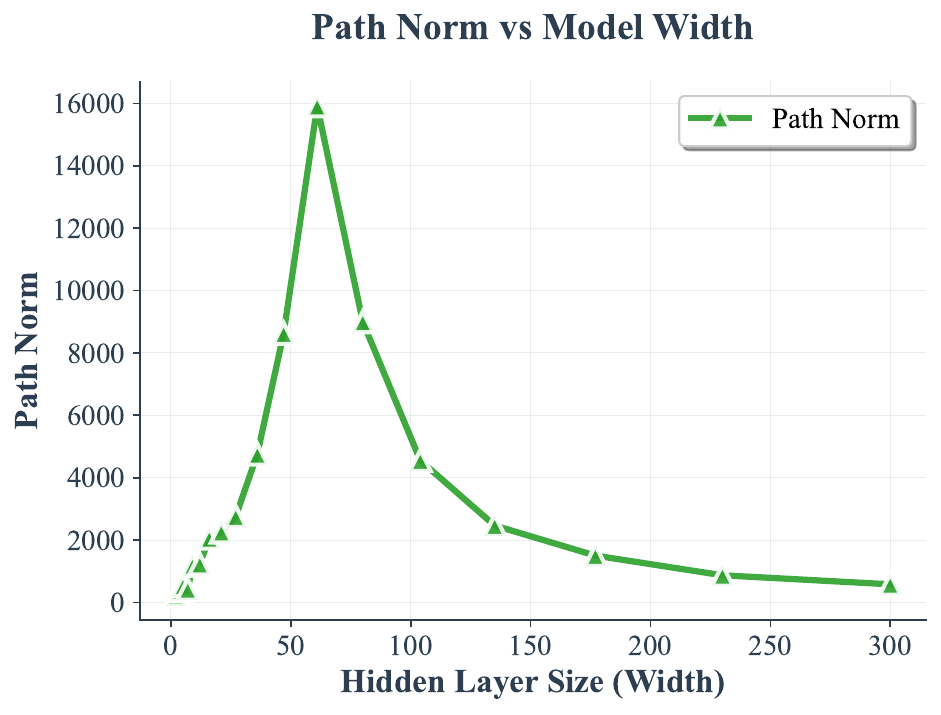}
    }
    \subfigure[Test error vs. Path norm]{\label{fig:mlp_test_error_vs_path_norm}
        \includegraphics[width=0.30\textwidth]{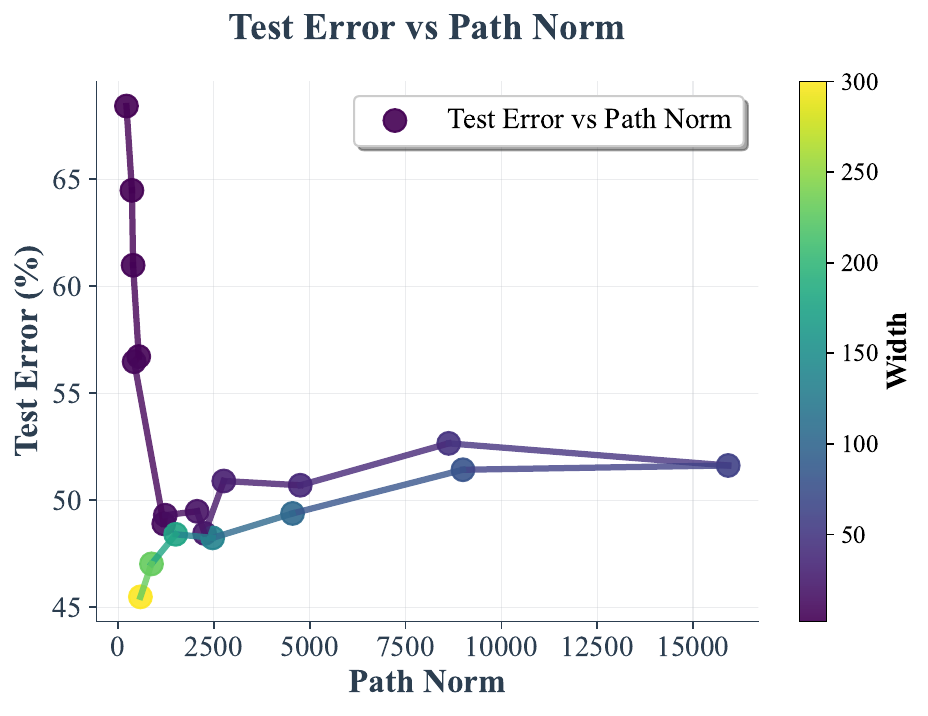}
    }
    \caption{Experiments on 3-layer MLP.}
    \label{fig:mlp_test_error_path_norm}
\end{figure*}

\paragraph{CNN.}
We next study a three-block CNN on \textbf{MNIST} with the same split and noise. Each block is Conv1d–ReLU with stride $2$ and kernel size $3$ (the first layer uses kernel size $5$), followed by a linear classifier; we vary the number of channels to control capacity. We flatten each $28\times 28$ image into a 1D signal before applying Conv1d. Results are qualitatively similar with Conv2d. Training uses the same optimizer and schedule as the MLP.

As shown in \cref{fig:cnn_test_error_vs_channels}, test error as a function of channel count again shows double descent. In contrast, plotting against the path norm (\cref{fig:cnn_test_error_vs_path_norm}) produces the same pattern observed in the MLP: a U-shaped curve in the under-parameterized regime and a co-decrease of risk and norm when sufficiently over-parameterized, reinforcing the consistency of norm-based capacity across architectures.

\begin{figure*}[!ht]
    \centering
    \subfigure[Test error vs. Channels]{\label{fig:cnn_test_error_vs_channels}
        \includegraphics[width=0.30\textwidth]{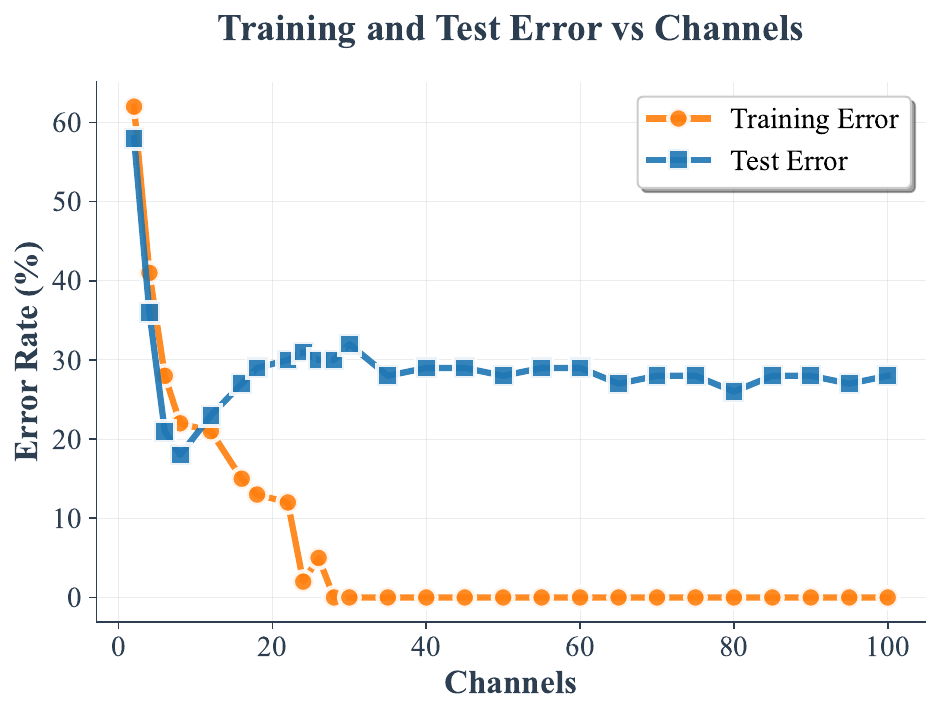}
    }
    \subfigure[Path norm vs. Channels]{\label{fig:cnn_path_norm_vs_channels}
        \includegraphics[width=0.30\textwidth]{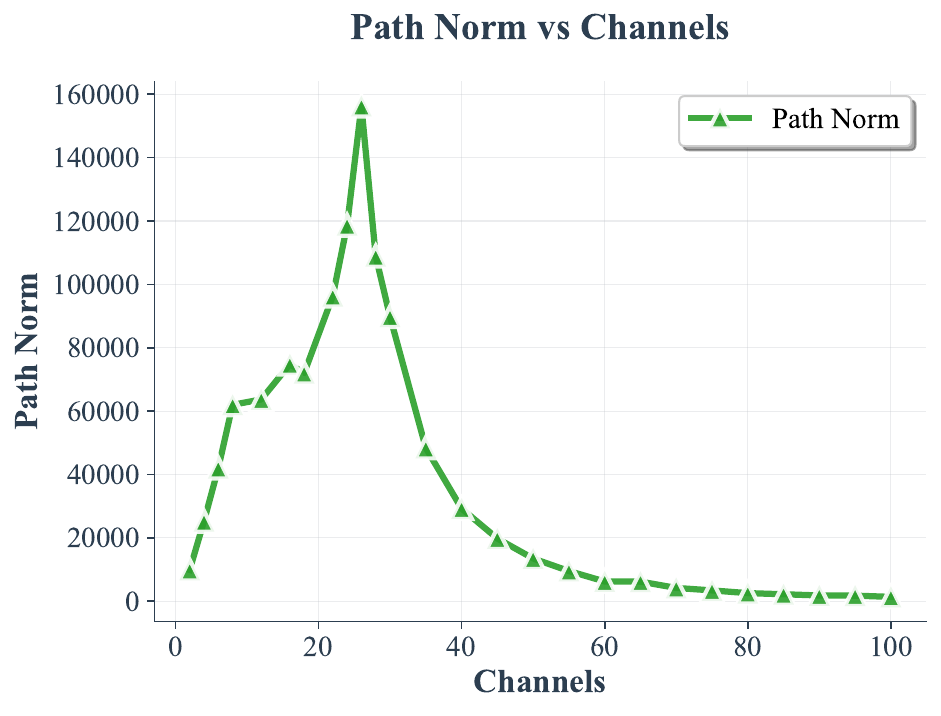}
    }
    \subfigure[Test error vs. Path norm]{\label{fig:cnn_test_error_vs_path_norm}
        \includegraphics[width=0.30\textwidth]{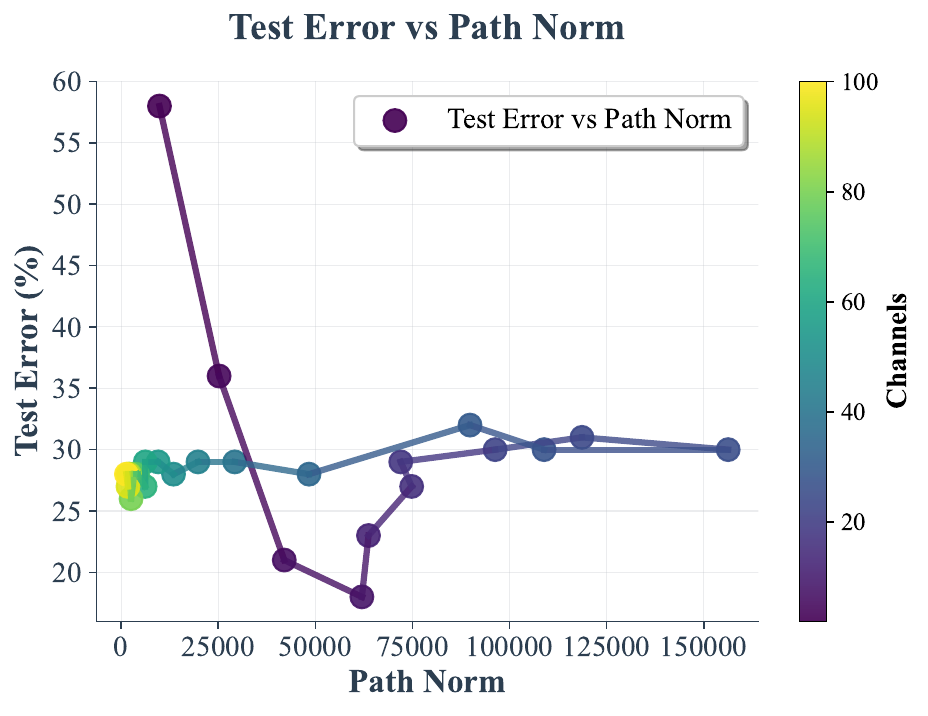}
    }
    \caption{Experiments on 3-layer CNN.}
    \label{fig:cnn_test_error_path_norm}
\end{figure*}

\begin{figure*}[!ht]
    \centering
    \subfigure[Test error vs. width]{\label{fig:ResNet_test_error_vs_width}
        \includegraphics[width=0.30\textwidth]{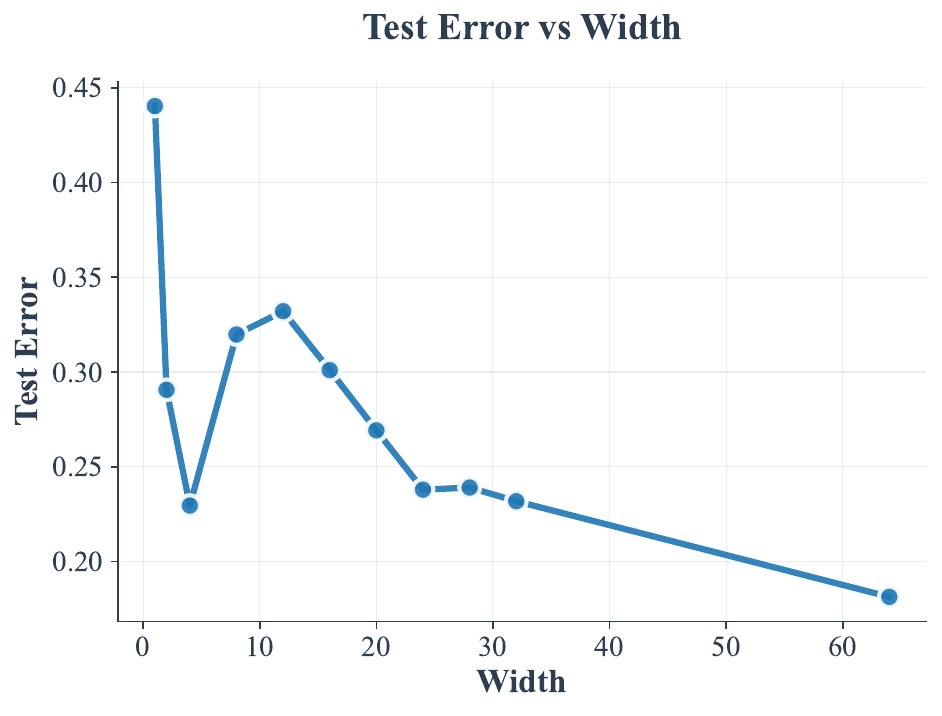}
    }
    \subfigure[Path norm vs. width]{\label{fig:ResNet_path_norm_vs_width}
        \includegraphics[width=0.30\textwidth]{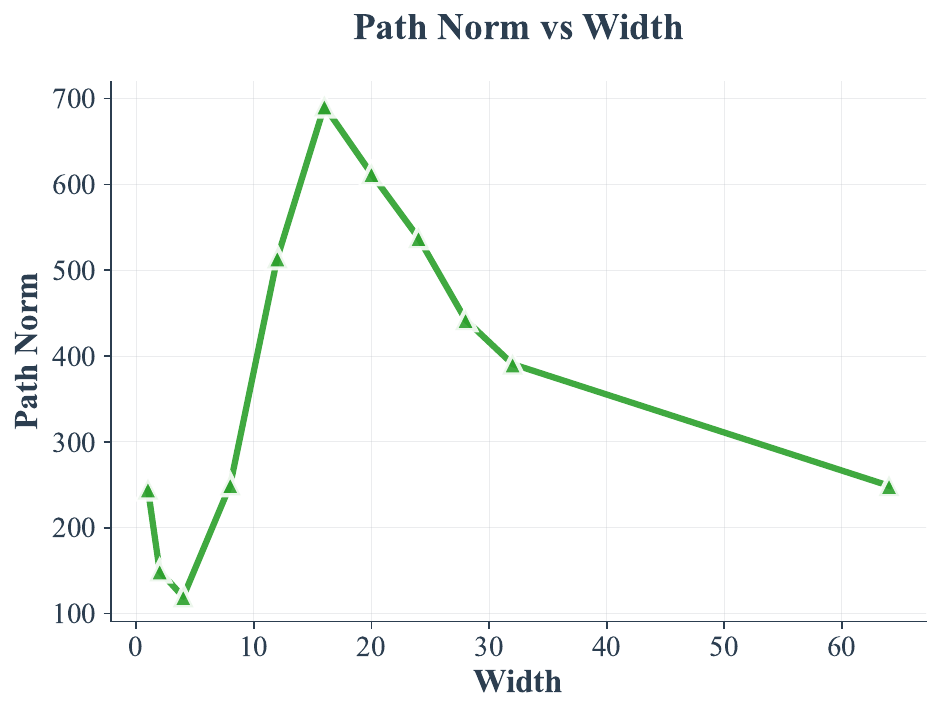}
    }
    \subfigure[Test error vs. path norm]{\label{fig:ResNet_test_error_vs_path_norm}
        \includegraphics[width=0.30\textwidth]{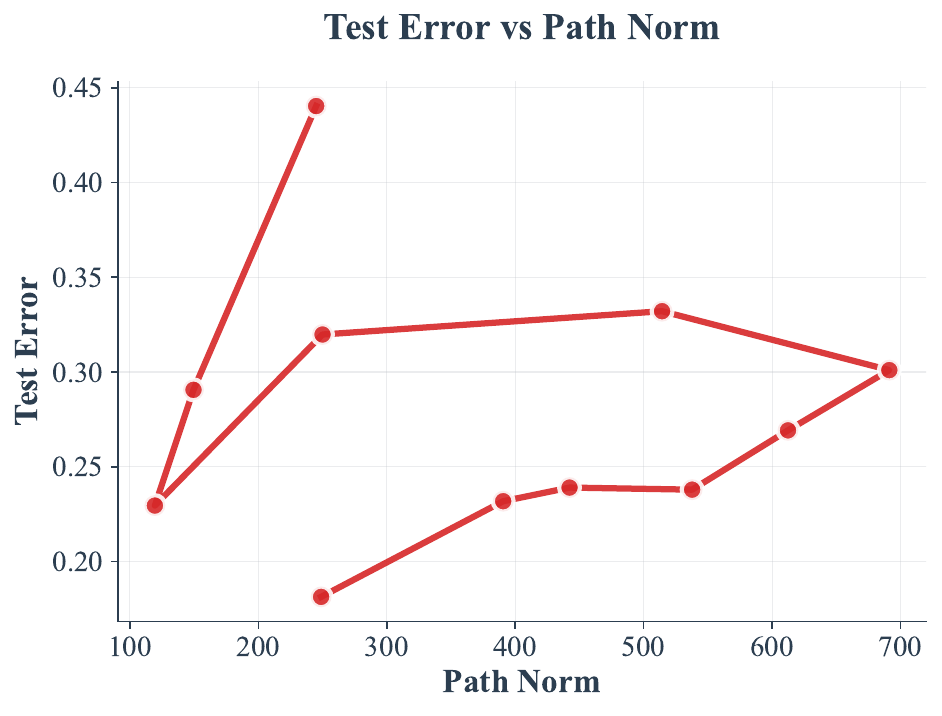}
    }
    \caption{Experiments on ResNet18.}
    \label{fig:ResNet_test_error_path_norm}
\end{figure*}

\textbf{ResNet18.} 
We further evaluate ResNet18 on \textbf{CIFAR-10} dataset~\cite{krizhevsky2009learning} with 15\% label noise, following the setup of OpenAI's deep double descent~\cite{nakkiran2021deep}. In addition to reproducing the reported deep double-descent behavior, we compute the path norm. \cref{fig:ResNet_test_error_path_norm} shows results across different widths. Based on \cref{fig:ResNet_test_error_path_norm} we can find that, in the sufficiently over-parameterized regime, the test risk and norm decrease together, ultimately aligning with the $\varphi$-curve. This suggests that double descent is a transient phenomenon, whereas the phase transition and the 
$\varphi$-shaped trend reflect more fundamental behavior if a suitable model capacity is used.

These results consistently demonstrate the existence of phase transitions, while double descent does not always occur—particularly under sufficient over-parameterization. Notably, the $\varphi$ curve exhibits a U-shaped trend, aligning with our theoretical predictions. All code and replication materials (including our reproduction of OpenAI’s deep double-descent results) are available at \href{https://github.com/yichenblue/norm-capacity}{\texttt{github.com/yichenblue/norm-capacity}}.

\end{document}